\documentclass[10pt,a4paper]{article}

\usepackage{times}
\usepackage{soul}
\usepackage{url}
\usepackage[hidelinks]{hyperref}
\usepackage[utf8]{inputenc}
\usepackage[small]{caption}
\usepackage{graphicx}
\usepackage{amsmath}
\usepackage{amsthm}
\usepackage{booktabs}
\usepackage{multirow}
\usepackage{todonotes}

\usepackage[ruled,linesnumbered,vlined,scleft]{algorithm2e}
\usepackage{amssymb}
\usepackage{bbm} 
\usepackage{comment}
\usepackage{mathrsfs} 
\usepackage{stmaryrd}
\usepackage{tikz} 
\usetikzlibrary{shapes.misc,automata,positioning,arrows}
\usepgflibrary{shapes.arrows} 

\theoremstyle{definition}

\newcommand{\Llang}{\ensuremath{\mathcal{L}}}

\newcommand{\N}{\ensuremath{\mathcal{N}}}
\newcommand{\Nanon}{\ensuremath{\N_\mathtt{anon}}}

\newcommand{\nd}{\noindent}

\newcommand{\BM}{\mathcal{M}}

\newcommand{\assigned}{\mathrel{\mathop:}=}
\newcommand{\grammar}{\mathrel{\mathop:}\assigned}
\newcommand{\defined}{\:\raisebox{1ex}{\scalebox{0.5}{\ensuremath{\mathrm{def}}}}\hskip-1.65ex\raisebox{-0.1ex}{\ensuremath{=}}\:}

\newcommand{\dllite}{\ensuremath{\text{\emph{DL-Lite}}}}
\newcommand{\dllitehorn}{\ensuremath{\dllite_{horn}}}
\newcommand{\dllitecore}{\ensuremath{\dllite_{core}}}
\newcommand{\dllitehornH}{\ensuremath{\dllite_{horn}^\mathcal{H}}}
\newcommand{\dllitecoreH}{\ensuremath{\dllite_{core}^\mathcal{H}}}

\newcommand{\dlAnd}{\sqcap}
\newcommand{\dlOr}{\sqcup}

\newcommand{\ELbot}{\ensuremath{\mathcal{E}\mathcal{L}_\bot}}
\newcommand{\ELObot}{\ensuremath{\mathcal{E}\mathcal{L}\mathcal{O}_\bot}}

\newcommand{\rationalent}{\entails_R}
\newcommand{\minent}{\entails_R^{\leq}}

\newcommand{\entabox}{\minent}

\renewcommand{\vec}[1]{\mathbf{#1}}

\newcommand{\ie}{\mbox{i.e.}}
\newcommand{\eg}{\mbox{e.g.}}
\newcommand{\cf}{\mbox{cf.}}
\newcommand{\viz}{\mbox{viz.}}
\newcommand{\wrt}{\mbox{w.r.t.}}
\newcommand{\sst}{\mbox{s.t.}}

\newcommand{\tuple}[1]{\langle #1 \rangle}

\newtheorem{definition}{Definition}
\newtheorem{lemma}{Lemma}
\newtheorem{theorem}{Theorem}
\newtheorem{proposition}{Proposition}
\newtheorem{corollary}{Corollary}
\newtheorem{example}{Example}
\newtheorem{remark}{Remark}

\newcommand{\T}{\ensuremath{\mathcal{T}}} 
\newcommand{\KB}{\ensuremath{\mathcal{K}}} 
\newcommand{\D}{\ensuremath{\mathcal{D}}}
\newcommand{\calS}{\ensuremath{\mathcal{S}}}

\newcommand{\sat}{\Vdash}

\newcommand{\entails}{\models}
\newcommand{\nentails}{\not\entails}

\newcommand{\M}{\mathscr{M}}

\newcommand{\dland}{\sqcap}
\newcommand{\dlor}{\sqcup}
\newcommand{\subs}{\sqsubseteq}

\newcommand{\I}{\ensuremath{\mathcal{I}}}
\newcommand{\J}{\ensuremath{\mathcal{J}}}
\newcommand{\Dom}{\Delta}
\newcommand{\ALC}{\ensuremath{\mathcal{ALC}}}
\newcommand{\EL}{\ensuremath{\mathcal{EL}}}

\newcommand{\Moddelta}[1]{{\ensuremath{\text{\it Mod}_{\Delta}(#1)}}}

\newcommand{\twiddle}{\hspace{-0.04cm}\mathrel|\joinrel\sim\hspace{-0.05cm}}

\newcommand{\usually}{\:\raisebox{0.45ex}{\ensuremath{\sqsubset}}\hskip-1.7ex\raisebox{-0.6ex}{\scalebox{0.9}{\ensuremath{\sim}}}\:}
\newcommand{\dsubs}{\:\raisebox{0.45ex}{\ensuremath{\sqsubset}}\hskip-1.7ex\raisebox{-0.6ex}{\scalebox{0.9}{\ensuremath{\sim}}}\:}

\newcommand{\pref}{\prec} 

\newcommand{\rk}{\ensuremath{\text{\it rk}}} 

\newcommand{\calG}{\mathcal{G}}

\newcommand{\KBi}[1]{\ensuremath{\KB^{<\infty}}}
\newcommand{\KBiAB}[1]{\ensuremath{\KB^{A\twiddle\lnot B\mid <\infty }}}

\newcommand{\A}{\ensuremath{\mathcal{A}}}

\newcommand{\E}{\ensuremath{\mathcal{E}}}
\newcommand{\R}{\ensuremath{\mathcal{R}}}
\newcommand{\RR}{\ensuremath{\mathcal{D}_\mathtt{ranked}}}

\newcommand{\rank}[2]{\textit{rank}_{#2}(#1)}

\newcommand{\cass}[2]{\mbox{$#1$:$#2$}}
\newcommand{\rass}[3]{\mbox{$(#1,#2)$:$#3$}}
\newcommand{\dass}[2]{\mbox{$#1$:$#2$}}

\newcommand{\andc}{\sqcap}

\newcommand{\some}{\exists}
\newcommand{\notc}{\neg}

\newcommand{\csome}{\exists}

\newcommand{\impc}{\sqsubseteq}

\newcommand{\spec}{{\mbox{$\tau$}}}
\newcommand{\defspec}{{\mbox{$\tau$}}}

\newcommand{\ii}[1]{\mbox{$(#1)$}}
\newcommand{\entail}{\models}

\newcommand{\RI}{\ensuremath{\mathcal{R}}} 
\newcommand{\OI}{\ensuremath{\mathcal{O}}} 

\newcommand{\df}[1]{\textit{#1}}
\renewcommand{\rank}{\ensuremath{\mathsf{rank}}} 

\newcommand{\norm}[1]{\mathsf{N}#1}

\newcommand{\clncH}{\ensuremath{{_{core}^\mathcal{H}} cln}}
\newcommand{\clnhH}{\ensuremath{{_{horn}^\mathcal{H}} cln}}
\newcommand{\clnd}{\ensuremath{{_{horn}^\mathcal{H}} cln}}
\newcommand{\clnn}{\ensuremath{\norm{cln}}}

\newcommand{\chasecH}{\ensuremath{{_{core}^\mathcal{H}}chase}}
\newcommand{\chasehH}{\ensuremath{{_{horn}^\mathcal{H}}chase}}

\newcommand{\dchasecH}{\ensuremath{{_{core}^\mathcal{H}}dchase}}
\newcommand{\dchasehH}{\ensuremath{{_{horn}^\mathcal{H}}dchase}}

\newcommand{\cancH}{\ensuremath{{_{core}^\mathcal{H}}can}}
\newcommand{\canhH}{\ensuremath{{_{horn}^\mathcal{H}}can}}

\newcommand{\CcH}{\ensuremath{{_{core}^\mathcal{H}} C}}
\newcommand{\ChH}{\ensuremath{{_{horn}^\mathcal{H}} C}}
\newcommand{\cH}[1]{\ensuremath{{_{core}^\mathcal{H}} {#1}}}
\newcommand{\hH}[1]{\ensuremath{{_{horn}^\mathcal{H}} {#1}}}

\newcommand{\cHRef}{\ensuremath{\cH{Ref}}}
\newcommand{\cHContr}{\ensuremath{\cH{Contr}}}
\newcommand{\cHTrans}{\ensuremath{\cH{Trans}}}
\newcommand{\RcH}{\ensuremath{{_{core}^\mathcal{H}} R}}
\newcommand{\nSup}{\ensuremath{\mathbf{\norm{Sup}}}}

\newcommand{\hHRef}{\ensuremath{\hH{Ref}}}
\newcommand{\hHContr}{\ensuremath{\hH{Contr}}}
\newcommand{\hHTrans}{\ensuremath{\hH{Trans}}}
\newcommand{\RhH}{\ensuremath{{_{horn}^\mathcal{H}} R}}

\usepackage{authblk}

\title{\Large\bfseries Tractable Reasoning and Conjunctive Query Answering for Defeasible \dllite~under Rational Closure}

\author[1,2]{Giovanni Casini}
\author[1]{Umberto Straccia}

\affil[1]{Istituto di Scienza e Tecnologie dell'Informazione, CNR - ISTI, Pisa, Italy}
\affil[2]{University of Cape Town and CAIR, Cape Town, South Africa}

\date{}

\begin{document}
\maketitle

\begin{abstract}
In Description Logics (DLs), reasoning under Rational Closure (RC) is a well-known 
and widely accepted non-monotonic formalism to handle defeasible knowledge. 
In this paper, we study the application of RC to the core and horn variants of 
the \dllite{} family of lightweight description logics. We analyze both entitlement 
(instance checking) and Conjunctive Query (CQ) answering under RC. Our main 
contribution is providing a plug-in architecture that builds upon existing standard 
classical reasoners, establishing that reasoning and CQ answering under RC for 
\dllite{} can be done efficiently with minimal computational overhead.
\end{abstract}

\section{Introduction}

\nd \emph{Description Logics}~(DLs) under \emph{Rational Closure}~(RC) \cite{BritzEtAl2015a,CasiniEtAl2014,CasiniEtAl2013b,CasiniEtAl2015,CasiniStraccia2010,CasiniStraccia13,CasiniStraccia14,Casini19,Giordano16,Giordano16a,Giordano16b,GiordanoEtAl2012a,Giordano14a,Giordano14,Giordano15,GiordanoEtAl2012b,GiordanoEtAl2013a} are a framework for non-monotonic reasoning that satisfies well-known and widely accepted reasoning patterns of conditional reasoning~\cite{KrausEtAl1990}. RC has some interesting properties: the conclusions are intuitive, and the decision procedure can be reduced to a series of classical decision problems, sometimes preserving the computational complexity of the underlying classical decision problem.


On the other hand, \dllite~languages (see, \eg~\cite{ArtaleEtAl2009,Calvanese05,Calvanese06a,Calvanese13}), of which $\dllite_\mathcal{R}$ (\viz~\dllitecoreH) is the basis of the OWL 2 QL profile~\cite{Corona09},\footnote{\url{https://www.w3.org/TR/owl2-profiles/\#OWL_2_QL}} have been specifically tailored to capture  main features necessary to express conceptual models such as UML class diagrams, while keeping low the complexity of reasoning tasks such as answering \emph{Conjunctive Queries} (CQs) over a large set of instances maintained in a relational database. 

In this paper, we enrich \dllitecoreH~and \dllitehornH, which are prominent language representatives of the \dllite~family, with the ability to express defeasible class inclusions \eg~of the form $B\dsubs A$, read as ``usually, an object in $B$ is also in $A$'' under the well-known RC semantics. 

A major reasoning task we focus on is that of answering CQs over a set of defeasible and non-defeasible inclusion axioms, and a set of instances maintained in a relational database. 
In fact, we will show that \dllitecoreH~and \dllitehornH, even in the presence of defeasible class inclusion axioms, inherit the low computational complexity results of their non-defeasible analogues. Specifically, we show that CQ answering remains in $\textsf{AC}^0$ \wrt~data complexity. To do so, we show that the reasoning (\viz~satisfiability testing) and CQ answering  algorithms are essentially the same as those for the classical case, with some appropriate adjustments.
In particular, the query answering procedure is based on a query rewriting strategy allowing one to rewrite a given CQ into a set of CQs, which  are then submitted via SQL to a database. Therefore, our results pave the way for a low-complexity CQ answering algorithm for defeasible OWL~QL under RC and its implementation in current \emph{Ontology Based Data Access} (OBDA) systems such as, \eg~\textsf{Ontop}\footnote{\url{https://ontop-vkg.org}} and  \textsf{Mastro}.\footnote{\url{https://obdm.obdasystems.com/mastro/}}

While various works, as cited above, address defeasible DLs under RC, none of them, to the best of our knowledge, address \dllitecoreH~and \dllitehornH~ and also provide a CQ answering procedure via a query reformulation (FOL rewritable) whose data complexity is $\textsf{AC}^0$.

In summary, our contribution is as follows:
\begin{enumerate}
    \item we present defeasible \dllitecoreH~and \dllitehornH~with defeasible class inclusions of the form $\{B_1, \ldots, B_n\}\dsubs A$ ($n=1$ for \dllitecoreH) read as ``usually, an object in the conjunction of the  $B_i$'s is also in $A$'' under RC semantics;

    \item we show that both defeasible \dllitecoreH~and \dllitehornH, unlike other defeasible DLs under RC, have the unique extension property;

    \item we present reasoning algorithms for defeasible \dllitecoreH~and \dllitehornH~under RC addressing the \emph{Knowledge Base} (KB) satisfiability,  the subsumption and the instance checking problems. We anticipate that defeasible \dllitehornH~will also help us in proving the correctness of our reasoning algorithms for \dllitecoreH;

    \item we present,  for defeasible \dllitecoreH~and \dllitehornH~under RC, query reformulation procedures that are similar as for their non-defeasible variants; and 

    \item we show that defeasible \dllitecoreH~and \dllitehornH~retain the low computational complexity results of their non-defeasible analogues.
\end{enumerate}


\nd In the following, we proceed as follows. In the next section, for completeness, we introduce the necessary definitions and algorithms related to \dllitecoreH~and \dllitehornH~used in this work. In Section~\ref{defdllite} we show how to extend these languages with defeasible inclusion axioms under the RC semantics and illustrate some basic reasoning algorithms. In Section~\ref{sect_QRW} we show how to compute the answers to a CQ under RC and in Section~\ref{compplex} we analyse the computational complexity of all reasoning procedures. Section~\ref{related} addresses related work, while in Section~\ref{concl} we succinctly summarise our contribution and point to some topics for future research. The appendixes provide proofs of main propositions and some procedures used within the paper.

\section{Background} \label{backgr}

\noindent\fbox{\begin{minipage}{0.975\textwidth}
\textsc{Overview:} This section recalls the basic notions of the classical formal framework to prepare the defeasible  \dllite~in Section~\ref{defdllite}.
 \begin{itemize}
     \item Sections \ref{sect_back_lang} and \ref{reassect}: it introduces the  \dllite~languages \dllitecoreH~and \dllitehornH~and classical reasoning problems in them. Proposition \ref{nominalABox} is a novel contribution.
     \item Section \ref{sect_chase_niclos}:  it introduces chases and NI-closures for \dllitecoreH~and \dllitehornH. The \dllitehornH~version is a novel contribution. 
 \end{itemize}
\end{minipage}}

\subsection{\dllite~languages}\label{sect_back_lang}

\nd The DL-Lite languages we consider here are defined as follows~\cite{ArtaleEtAl2009,Calvanese05,Calvanese06a,Calvanese07,Calvanese13}. 
We assume a finite set~
$\N$ of \emph{individuals}, a finite set~$\mathcal{C}$ of \emph{concept names}, and a finite set~$\mathcal{R}$ of \emph{role names}. Individuals are denoted by~$a$, concept names by~$A$, and role names by~$P$,  possibly with super- and sub-scripts.

The language $\dllitecore$ allows for \emph{complex roles}, denoted by~$R$, and \emph{complex concepts}, denoted by~$C$ (informally, a concept denotes a set of objects, while a role denotes a set of pairs of objects).
These are recursively defined according to the grammar:
\begin{eqnarray}
    R & \grammar & P \mid P^- \nonumber \\ 
    B & \grammar & A \mid \exists R \\ \label{dllite}
    C & \grammar & B \mid \lnot B \ ,\nonumber 
\end{eqnarray}
where $A\in\mathcal{C}$, $P\in\mathcal{R}$, and $P^-$ denotes the \emph{inverse} of~$P$. Concepts of the type~$B$ are called \emph{basic}. Notice that the existential restriction in the definition of~$B$ is \emph{unqualified}. By convention, $R^-\defined P$ if $R=P^-$, and $R^-\defined P^-$ if $R=P$.

A $\dllitecore$ \emph{TBox}, denoted~$\mathcal{T}$, is a finite set of \emph{concept inclusion axioms} of the form:
\begin{equation} \label{inclaxiom}
B \subs C \ .
\end{equation}

\nd Informally, $B \subs C$ states  that an instance of $B$ is also an instance of $C$.
An inclusion axiom of the form $B_1\subs B_2$ is called a \emph{positive inclusion}~(PI), whereas an inclusion axiom of the form $B_1\subs\lnot B_2$ is called a \emph{negative inclusion}~(NI).

A $\dllitecore$ \emph{ABox}, denoted~$\mathcal{A}$, is a finite set of \emph{assertion axioms} of the form:
\begin{equation} \label{assaxiom}
\cass{a}{A} \quad \text{ and } \quad \rass{a_1}{a_2}{P} \ .
\end{equation}

\nd The former indicates that `$a$ is an instance of $A$', while the latter indicates that `$a_1$ is related to $a_2$ via role $P$'. A $\dllitecore$ \emph{knowledge base} (KB) is a tuple $\KB\defined\tuple{\T,\A}$, where~$\T$ is a TBox and~$\A$ is an ABox. Each of the two components may be omitted, with the understanding that the missing component is empty. With $\T_{PI}$ (resp.~$\T_{NI}$) we denote the set of~PIs (resp.~NIs) in~$\T$ and with $\N_\KB$, or also with $\N_\A$, we denote the set of individuals that occur in $\KB$, \viz, occur in $\A$.

As for the semantics, an \emph{interpretation} is a pair $\I\defined\tuple{\Delta^\I,\cdot^\I}$, where~$\Delta^\I\neq\emptyset$ is the \emph{interpretation domain} and $\cdot^\I$ is an \emph{interpretation function} mapping (\emph{i})~every concept name~$A$ into a set~$A^\I \subseteq  \Delta^\I$; (\emph{ii})~every role name~$P$ into a set~$P^\I \subseteq  \Delta^\I \times \Delta^\I$, and (\emph{iii})~every individual $a$ into an element $a^\I \in \Delta^\I$, in such a way that $a_{1}^\I \neq a_{2}^\I$ whenever $a_{1}\neq a_{2}$, \ie, we adopt the \emph{unique name assumption} (UNA).

Given an interpretation $\I=\tuple{\Delta^\I,\cdot^\I}$, the interpretation function~$\cdot^\I$ is extended to interpret complex role and concept expressions in the following way:
\begin{eqnarray}
(P^-)^\I & \defined & \{(y,x) \mid (x,y)\in P^\I\} \nonumber\\
(\exists R)^\I & \defined & \{x \mid R^\I(x)\neq\emptyset\}\label{dllitsem}\\
(\neg B)^\I & \defined & \Delta^\I\setminus B^\I  \ . \nonumber
\end{eqnarray}

\nd We say an interpretation~$\I=\tuple{\Delta^\I,\cdot^\I}$ \emph{satisfies} a concept $C$ if $C^\I\neq \emptyset$. $\I$ \emph{satisfies} (\emph{is a model of}) the concept inclusion axiom $B\subs C$, denoted $\I\sat B\subs C$, if $B^\I \subseteq C^\I$. $\I$ \emph{satisfies} (\emph{is a model of}) the concept (resp.~role) assertion $\cass{a}{A}$ (resp.~$\rass{a_1}{a_2}{P}$), denoted $\I\sat\cass{a}{A}$ (resp.~$\I\sat\rass{a_1}{a_2}{P}$), if $a^\I\in A^\I$ (resp.~$(a_1^\I,a_2^\I)\in P^I$). $\I$ \emph{satisfies} (\emph{is a model of}) a set of axioms if it satisfies each axiom in the set. $\I$ \emph{satisfies} (\emph{is a model of}) a knowledge base $\KB=\tuple{\T,\A}$ if it is a model of both~$\T$ and~$\A$. We say~$\KB$ \emph{entails} an axiom~$\alpha$, denoted~$\KB\entails\alpha$, if every model of~$\KB$ is a model of~$\alpha$. We say two sets of axioms are \emph{equivalent} if they have exactly the same models.

It is widely known that $\dllitecore$ allows for the expression of several essential constructs of both~ER and~UML class diagrams. Among these are the \emph{is-a} relationship between classes ($A_1\subs A_2$), \emph{class disjointness} ($A_1\subs\lnot A_2$), the \emph{domain} and \emph{range} of a relation ($\exists P\subs A$ and $\exists P^-\subs A$, respectively), and also \emph{mandatory participation} ($A\subs\exists P$ and $A\subs\exists P^-$). Note that $\dllitecoreH$ is the basis of the OWL~2~QL profile~\cite{Corona09}.\footnote{\url{http://www.w3.org/TR/owl2-profiles}}


The language of $\dllitehorn$ extends that of~$\dllitecore$ by  allowing also concept inclusion axioms of the following form, where $n\geq 1$:
\begin{equation}\label{hornincl}
\{B_1, \ldots, B_n\} \subs C \ ,
\end{equation}

\nd where $\{B_1, \ldots, B_n\}$ is interpreted as the conjunction of the $B_i$. We may also write
$B_1 \andc \ldots \andc B_n \subs C$ in place of $\{B_1, \ldots, B_n\} \subs C$.
That is, $\dllitehorn$ allows for conjunctions of basic concepts of arbitrary (finite) length on the~LHS of inclusion axioms. Semantically, given an interpretation~$\I=\tuple{\Delta^\I,\cdot^\I}$, $\{B_1, \ldots, B_n\}^\I \defined (B_1 \andc \ldots \andc B_n)^\I\defined\bigcap_{1\leq k\leq n}B_{k}^\I$. Satisfaction of $\dllitehorn$ inclusion axioms carries over from~$\dllitecore$. 

For $\mathcal{L}\in\{\dllitecore,\dllitehorn\}$, with~$\mathcal{L}^\mathcal{H}$ we denote~$\mathcal{L}$ extended with \emph{role inclusion axioms} of the form
\begin{equation} \label{roleincl}
R_1\subs R_2 
\end{equation}
with obvious semantics: given~$\I=\tuple{\Delta^\I,\cdot^\I}$, we say~$\I$ \emph{satisfies} (\emph{is a model of}) $R_1\subs R_2$, denoted $\I\sat R_1\subs R_2$, if $R_1^\I\subseteq R_2^\I$. 

An \emph{inclusion axiom} (IN) is either a concept or role inclusion axiom.


\begin{remark}\label{roleinc}
    For convenience, we will assume that whenever a role inclusion 
    $R_1 \subs R_2$ occurs in a KB, it also contains the INs 
    $\exists R_1 \subs \exists R_2$ and  $\exists R_1^- \subs \exists R_2^-$.
\end{remark}

\begin{remark} \label{remexpdllitecore}
We recall that the languages \dllitecoreH, \dllitehorn, and \dllitehornH~are also known, respectively, as~$\dllite_\mathcal{R}$, $\dllite_\dlAnd$, and $\dllite_{\mathcal{R},\dlAnd}$. Please note that we may  allow (\emph{i})~concept disjunctions of the form $B_1 \dlOr B_2$, with semantics $(B_1 \dlOr B_2)^\I\defined B_1^\I \cup B_2^\I$, on the LHS of concept inclusion axioms, and (\emph{ii})~concept conjunctions of the form $C_1 \dlAnd C_2$ on the RHS of concept inclusion axioms. We note that this does not increase the expressiveness of the language as $\{B_1\dlOr B_2\subs C\}$ is equivalent to $\{B_1\subs C, B_2\subs C\}$, and $\{B\subs C_1\dlAnd C_2\}$ is equivalent to $\{B\subs C_1, B\subs C_2\}$. Analogously, we may allow for the concept~$\bot$, whose semantics is given by~$\bot^\I\defined\emptyset$, on the RHS of a concept inclusion axiom, as
\begin{itemize}
    \item an inclusion $B\subs\bot$ can be expressed as the \dllitecoreH~inclusion $B\subs\lnot B$;
    \item an inclusion  $\{B_1,\ldots, B_n\}\subs \bot$ can be expressed as the \dllitehornH~inclusion $\{B_1,\ldots, B_{n-1}\}\subs\lnot B_n$. 
\end{itemize}

\nd Additionally,  we can also allow qualified existential restrictions of the form $\exists R.C$, with semantics given by $(\exists R.C)^\I\defined\{o \mid R^\I(o)\cap C^{\I}\neq\emptyset\}$, on the RHS of concept inclusions as $\{B\subs\exists R.C\}$ is equivalent to $\{B\subs\exists P, P\subs R,\exists P^-\subs C\}$, where $P$ is a new role name. Eventually, a symmetric role can be enforced via the role inclusion axiom $R \subs R^-$.

However, for ease of presentation, we disallow the negation of roles in the RHS of role inclusions, as it has been done in~\cite{Calvanese06a} for $\dllite_{\mathcal{R},\dlAnd}$. Negative role inclusions have the form $R_1 \subs \neg R_2$, where, for an interpretation~$\I=\tuple{\Delta^\I,\cdot^\I}$, $(\neg R)^\I = (\Delta^\I \times \Delta^\I) \setminus R^\I$ and allow one to declare the disjointness of roles, an empty role via $R \subs \neg R$ and an asymmetric role via $R \subs \neg R^-$. 
How to deal with them and the additional constructs allowed by OWL QL, such as \emph{reflexive} and \emph{irreflexive} roles, not presented here, is illustrated in~\cite{CalvaneseEtAl2007,Calvanese13,Corona09}.

\end{remark} 


\begin{remark}[Finite Model Property] \label{finitedllitehornH}
Note that, according to~\cite{Calvanese07,Rosati06,Rosati08}, the languages \dllitecoreH~and, \dllitehornH~enjoy the finite model property.
\end{remark}

\subsection{Reasoning Problems} \label{reassect}

\nd We now summarise the standard reasoning tasks within the~$\dllite$ family of languages presented thus far. To start with, for a $\dllite$ language $\mathcal{L}$, we call $\mathcal{L}$-\emph{concept inclusion} any concept inclusion allowed by~$\mathcal{L}$. The notions of $\mathcal{L}$-\emph{TBox} and $\mathcal{L}$-\emph{KB} follow immediately. A concept occurring on the RHS of an $\mathcal{L}$-concept inclusion or a conjunction of such concepts will be called an~$\mathcal{L}$-\emph{concept}.
Note that \dllitecoreH~is a sublanguage of \dllitehornH, so properties proved for \dllitehornH~hold  for \dllitecoreH~as well.


\subsubsection{Subsumption} \label{subssect}

\nd Given an~$\Llang$-TBox~$\T$ and an~$\Llang$-inclusion~$C_1\subs C_2$, the \emph{subsumption problem} consists in checking whether $\T\entails C_1\subs C_2$. Subsumption can be reduced to~KB satisfiability as follows. Let~$A$ be a new concept name and let~$a$ be a new individual name. Furthermore, consider $\KB=\tuple{\T',\A}$ with
\begin{equation}\label{subsreduction}
\begin{array}{lcl}
\T' & = & \T \cup \{A \subs C_1, A \subs \neg C_2 \} \\
\A & = & \{\cass{a}{A} \} \ .
\end{array}
\end{equation}
It can easily be verified that $\T\entails C_1\subs C_2$ iff~$\KB$ is \emph{unsatisfiable}. For core and Horn~KBs, if $C_2=D_1\dlAnd D_2$, where each~$D_i$ is a (possibly negated) basic concept, checking the satisfiability of~$\KB$ amounts to checking the satisfiability of each of the KBs~$\KB_i=\tuple{\T_i,\A}$, where $\T_i=\T\cup\{A\subs C_1, A\subs\lnot D_i\}$. 



\subsubsection{Concept satisfiability} \label{csubssect}

\nd The \emph{concept satisfiability problem}, given a $\Llang$-TBox $\T$ and an $\Llang$-concept $C$, consists in checking  whether $C^\I \neq \emptyset$ in a model $\I$ of $\T$. This problem is also reducible to the KB satisfiability problem. Indeed, consider a new concept name $A$ and new individual $a$, and set $\KB= \tuple{\T',\A}$, where
\begin{equation} \label{csatreduction}
\begin{array}{lcl}
\T' & = & \T \cup \{A \subs C\} \\
\A & = & \{\cass{a}{A} \} \ .
\end{array}
\end{equation}

\nd Then $C$ is satisfiable \wrt~$\T$ iff $\KB$ is satisfiable.

\subsubsection{Instance checking} \label{icssect}

\nd The \emph{instance checking problem} is to decide, given an individual  $a$,
an $\Llang$-concept $C$ and an $\Llang$-KB $\KB = \tuple{\T,\A}$, whether $\KB \models \cass{a}{C}$. 
Instance checking is also reducible to KB (un)satisfiability: in fact, consider a new concept name $A$ and set 
$\KB' = \tuple{\T', \A'}$, where
\begin{equation} \label{instreduction}
\begin{array}{lcl}
\T' & := & \T \cup \{A \subs \notc C\} \\
\A' & := & \A \cup \{\cass{a}{A} \} \ .
\end{array}
\end{equation}

\nd Then, it is easy to see that $\KB \models \cass{a}{C}$ iff $\KB'$ is unsatisfiable.

\begin{remark}\label{qansinstcheck}
    Note that the instance checking problem can also be seen as a special case of \emph{query answering} (see also \eg~\cite{Calvanese07} or 
    Section~\ref{sect_QRW}). That is,  an individual $a$ is an instance of concept $B$ \wrt~a KB $\KB$ iﬀ $a$ is in the \emph{answer set} of the  \emph{Conjunctive Query} (CQ) $q(x) \leftarrow A(x)$ \wrt~ $\KB'$, 
    where $\KB' = \tuple{\T',\A}$ and $\T' = \T \cup \{B \subs A\}$, with $A$ a new concept name.
    On the other hand, an individual $a$ is an instance of concept $\neg B$ \wrt~a KB $\KB$ iﬀ $a$ is in the answer set
    of the union of CQs  $\vec{q} = \{q(x) \leftarrow B'(x) \mid B' \subs \neg B \text{ in the NI-closure of } \T \}$ \wrt~ $\KB$ (the notion of NI-closure is defined later on in Section~\ref{sect_chase_niclos}). In a similar way,  $\KB \models \rass{a}{b}{P}$ iff $\tuple{a,b}$ is in the answer set of the query $q(x,y) \leftarrow P(x,y)$ \wrt~ $\KB$.
\end{remark}

\subsubsection{KB satisfiability} \label{kbsatsect}

\nd Eventually, it remains to show how to check whether a KB is \emph{satisfiable} or not. Specifically,
given an $\Llang$-KB~$\KB$, the \emph{KB satisfiability problem} consists in checking whether there is a model of~$\KB$. 
Please note that a $\Llang$-TBox~$\T$ is always satisfiable.
\begin{proposition} \label{tsat}
    Any \dllitehornH~TBox~$\T$ is satisfiable.
\end{proposition}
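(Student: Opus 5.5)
The plan is to exhibit one fixed interpretation that satisfies every \dllitehornH~TBox at once. Take a single-element domain $\Delta^\I=\{d\}$ and interpret every concept name and every role name as empty: $A^\I=\emptyset$ for all $A\in\mathcal{C}$ and $P^\I=\emptyset$ for all $P\in\mathcal{R}$. Since a TBox contains no assertions, individuals play no role. Their interpretation only matters through the unique name assumption, and there are two ways to satisfy it. One is to enlarge the domain to $\{d\}\cup\N$ and map each individual to itself. The other is to note that TBox satisfiability does not involve individuals at all. In either case the concept and role extensions stay empty.

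Next I check by the semantics in (\ref{dllitsem}) that every basic concept is empty in $\I$. Concept names are empty by construction. For $\exists R$ with $R\in\{P,P^-\}$, we have $(P^-)^\I=\emptyset$ because $P^\I=\emptyset$, so $R^\I(x)=\emptyset$ for every $x$ and hence $(\exists R)^\I=\emptyset$.

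Then I verify each axiom type. For a concept inclusion $\{B_1,\ldots,B_n\}\subs C$ with $n\geq 1$ (which covers the core case $n=1$), the left-hand side is $\bigcap_k B_k^\I\subseteq B_1^\I=\emptyset$. The inclusion therefore holds trivially, whatever $C$ is, whether $C$ is a basic concept or a negated one $\lnot B$. For a role inclusion $R_1\subs R_2$, $R_1^\I=\emptyset\subseteq R_2^\I$. The auxiliary inclusions added by Remark~\ref{roleinc} are ordinary concept inclusions and are covered by the same argument.

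No step is a real obstacle. The one point needing care is that \dllitehornH~has no construct that forces nonemptiness. In particular there is no $\top$ on the left-hand side and there are no nominals; this holds because left-hand sides are nonempty sets of basic concepts ($n\geq 1$). Emptiness therefore propagates to every left-hand side, and the empty interpretation is a model.
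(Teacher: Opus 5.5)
Your proposal is correct and takes the same approach as the paper, whose proof simply takes the interpretation that maps every concept name and role name to the empty set. You additionally spell out the verification (every basic concept, hence every left-hand side, is empty) and the harmless handling of individuals under the unique name assumption.
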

\begin{proof}
    Just consider an interpretation mapping concept names and role names to the empty set.
\end{proof}


\nd In order to determine whether a \dllitecoreH~(resp.~\dllitehornH) KB $\KB$  is satisfiable or not, various methods have been proposed, which we illustrate next for completeness (see also~\cite{Calvanese05,Calvanese06a,Calvanese07}).
Note however that these methods refer to the notion of $NI$-closure, that will be introduced in the coming Section \ref{sect_chase_niclos}.

So, let $\KB=\tuple{\T,\A}$ be a KB. In~\cite{Calvanese05}, at first, $\KB$ is expanded in the following way: 
%
\begin{equation} \label{pabtrans}
\parbox{0.85\textwidth}{
If $ \rass{a}{b}{P} \in \A$ then for new concept names $A_a$ and $A_b$, we add both $\cass{a}{A_a}$ and $\cass{b}{A_b}$ to $\A$ and add both $A_a \subs \some P$ and $A_b \subs \some P^-$ to $\T$. 
}
\end{equation}
%
%

%
\nd Then
\begin{proposition}[\cite{Calvanese05}] \label{propCalvanese05}
    A \dllitecoreH~(resp.~\dllitehornH) KB $\KB$, expanded according to~(\ref{pabtrans}),  is unsatisfiable iff there is an inclusion axiom $A_1 \subs \neg A_2$ (resp., $\{A_1,\ldots, A_n\} \subs \neg A_{n+1}$) in the \emph{Negative Inclusions} closure ($NI$-closure) of $\KB$ 
    (see Section~\ref{nicor} and~\ref{nihorn}, resp.) and there is an individual $a$ such that all $\cass{a}{A_i}$ are in $\A$. Note that it can be the case that $A_1= A_2$ (resp., $A_{n+1}=A_i$, with $1\leq i\leq n$).
\end{proposition}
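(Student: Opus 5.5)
The plan is to prove both directions using the canonical model (chase) construction, treating the NI-closure as a sound and complete set of consequences for negative inclusions. I take the NI-closure of Sections~\ref{nicor} and~\ref{nihorn} as given: it contains $\T_{NI}$, is closed under the PIs and role inclusions of $\T$ (for instance, from $B_2\subs\neg B_3$ and $B_1\subs B_2$ derive $B_1\subs\neg B_3$), and includes the emptiness rules such as $\exists R\subs\neg\exists R$ whenever $\exists R^-\subs\neg\exists R^-$ is derived. I also use the expansion (\ref{pabtrans}). This expansion preserves satisfiability, since it only adds fresh names. Its role is to turn every role assertion into concept assertions, so that every relevant fact about an individual is recorded as a set of concept names $\{A\mid \cass{a}{A}\in\A\}$.

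\emph{Soundness (``if'').} Every axiom in the NI-closure is entailed by $\T$, which I check by induction on the closure rules; each rule is a sound inference. So if $\{A_1,\ldots,A_n\}\subs\neg A_{n+1}$ is in the closure and $\cass{a}{A_i}\in\A$ for all $i$, then every model $\I$ would need $a^\I\in A_1^\I\cap\cdots\cap A_n^\I\cap A_{n+1}^\I$. This contradicts the negative inclusion, so $\KB$ is unsatisfiable. The degenerate case $A_1=A_2$ is handled the same way.

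\emph{Completeness (``only if'').} I argue by contraposition: assume no NI in the closure is violated by the concept names asserted for any individual, and build a model.
\begin{itemize}
\item[(i)] Let $\I$ be the chase of $\KB$ using only the PIs and role inclusions of $\T$. This is the standard canonical interpretation. It starts from $\A$, saturates each individual under the PIs (for horn, a conjunction $\{B_1,\ldots,B_n\}\subs B$ fires when all $B_i$ hold), and introduces fresh anonymous successors for unsatisfied $\exists R$. By construction $\I$ satisfies $\A$ and all PIs and role inclusions.
\item[(ii)] Show that $\I$ satisfies every NI of $\T$. Suppose some element $o$ violates $\{B_1,\ldots,B_k\}\subs\neg B$. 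I trace back how each $B_i$ and $B$ were derived at $o$. If $o=a^\I$ is named, each basic concept at $a$ is derived by PI chains from the asserted names $A_j$ of $a$, using the expansion to cover $\exists P$ coming from role assertions. Regressing the NI through these chains gives an NI in the closure over asserted names, which contradicts the hypothesis. If $o$ is anonymous, it was generated from a parent by some $\exists R$. Its concepts are then derived from $\exists R^-$ alone, so regression yields $\exists R^-\subs\neg\exists R^-$ in the closure, hence $\exists R\subs\neg\exists R$. Propagating back to the named root gives a violated closure NI there.
\end{itemize}

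The main obstacle is step (ii) in the horn case. There, conjunctions on the LHS mean a concept at $o$ may depend on several premises at once, so the regression must combine several derivation chains into a single NI whose LHS is a set of asserted concept names. This requires the horn NI-closure to contain exactly the resolvent rule: from $\{C_1,\ldots,C_m,B\}\subs\neg D$ and $\{E_1,\ldots,E_l\}\subs B$, derive $\{C_1,\ldots,C_m,E_1,\ldots,E_l\}\subs\neg D$. I would prove a lemma by induction on chase depth. It states that if $o$ satisfies a set $S$ of basic concepts in $\I$, then either $S$ is a subset of the asserted names of a named individual up to PI-derivation, or $S$ is derived from a single $\exists R^-$. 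Completeness then follows by induction on the length of the derivation. Role inclusions are absorbed via Remark~\ref{roleinc}, which reduces them to concept PIs on $\exists R$.
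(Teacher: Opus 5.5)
Your argument is correct. The paper does not prove this proposition itself; it imports it from \cite{Calvanese05}. The closest argument in the paper is the proof of Proposition~\ref{Theo15calv07}, through Lemmas~\ref{lemma_lemma7calv07}, \ref{lemma_lemma12calv07} and~\ref{Lemma14calv07}. That proof rests on the same two ingredients as yours: the chase over $\T_{PI}$ as the candidate model, and the fact that the NI-closure is closed under contraposition, transitivity through PIs and the role rules, which is used to push a violated NI back to one violated earlier.

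The packaging differs. The paper aims at the intermediate criterion that $\mathtt{DB}(\A)$ is a model of $\clnhH(\T)$, and argues by forward induction on chase steps: if $\canhH_i(\KB)$ satisfies the closure and $\canhH_{i+1}(\KB)$ does not, the single assertion just added yields a closure NI already violated at step $i$. You instead fix a violating element and regress the NI along that element's PI derivation down to its base set, which is either the asserted names of a named individual or a single $\exists R^-$. You then climb the chase forest using ($\mathbf{\RhH_3}$).

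Your route has two advantages.
\begin{itemize}
\item With a well-founded measure on derivation stages, it treats uniformly the case where the regressed concept occurs several times in the NI (e.g.\ $\{A\}\subs\neg A$); a step-local argument has to iterate this case separately.
\item Because of the expansion~(\ref{pabtrans}), it lands directly on NIs over concept names. The paper's route gives the ``occurs in'' criterion of Proposition~\ref{propCalvanese0607}, and reaching the concept-name form from there needs one more regression through the axioms $A_a\subs\exists P$.
\end{itemize}

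One small point: the resolvent rule you single out only rewrites the LHS. To regress a derived concept on the RHS, you must first move it across with ($\mathbf{\hHContr}$) (resp.\ ($\mathbf{\cHContr}$)). The paper's closures contain this rule, so nothing breaks.
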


\nd Note that the above method adds to the TBox new concept names and inclusion axioms for all individuals occurring in role assertions. This may not be computationally attractive in the presence of many role assertions. The method proposed in~\cite{Calvanese06a,Calvanese07} avoids this problem. Specifically, let us define the function $ra(R,a,b)$ as
\[
ra(R,a,b) = 
\begin{cases}
\rass{a}{b}{P} & \text{if } R=P\\
\rass{b}{a}{P} & \text{if } R=P^- \ .
\end{cases}
\]

\nd As next, we define the notion of \emph{occurs in} (see~\cite{Calvanese07}).

\begin{definition}[occurs in]\label{occin}
For an arbitrary finite set $\calS$ of assertions, we say that 
\begin{enumerate}
    \item $\cass{a}{A}$ \emph{occurs} in $\calS$ 
        if $\cass{a}{A} \in \calS$;
    \item $\rass{a}{b}{R}$ \emph{occurs} in $\calS$ 
        if $ra(R,a,b) \in \calS$;        
    \item $\cass{a}{\csome R}$ \emph{occurs} in $\calS$ 
        if $\rass{a}{b}{R}$ occurs in $\calS$, for some $b$.
\end{enumerate}
\end{definition}

\nd Then
\begin{proposition}[\cite{Calvanese06a,Calvanese07}] \label{propCalvanese0607}
    A \dllitecoreH~(resp.~\dllitehornH) KB $\KB$  is unsatisfiable iff there is an inclusion axiom $B_1 \subs \neg B_2$ (resp. $\{B_1,\ldots, B_n\} \subs \neg B_{n+1}$) in the \emph{Negative Inclusions} closure ($NI$-closure) of $\KB$ 
    (see Section~\ref{nicor} and~\ref{nihorn}, resp.) and there is individual $a$ such that all $\cass{a}{B_i}$ occur in $\A$.
\end{proposition}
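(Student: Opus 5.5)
We prove the two directions separately, taking as given the standard properties of the NI-closure (Section~\ref{nicor} and Section~\ref{nihorn}): every axiom in it is entailed by $\T$, and it is closed under resolving negative inclusions backwards through positive inclusions and role inclusions. As a running convention, write $\calS_a$ for the set of basic concepts $B$ such that $\cass{a}{B}$ occurs in $\A$ in the sense of Definition~\ref{occin}.

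\emph{Soundness (``if'').} Suppose $\{B_1,\ldots,B_n\}\subs\neg B_{n+1}$ is in the NI-closure and $\cass{a}{B_i}$ occurs in $\A$ for all $i\le n+1$. Let $\I$ be any model of $\KB$. If $B_i=A$, then $\cass{a}{A}\in\A$, so $a^\I\in A^\I$. If $B_i=\exists R$, then $ra(R,a,b)\in\A$ for some $b$, so $(a^\I,b^\I)\in R^\I$ and hence $a^\I\in(\exists R)^\I$. Since $\T$ entails the NI, we get $a^\I\in\bigcap_{i\le n}B_i^\I\subseteq(\neg B_{n+1})^\I$. This contradicts $a^\I\in B_{n+1}^\I$, so $\KB$ has no model.

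\emph{Completeness (``only if'').} We argue contrapositively: assuming no such violation exists, we build a model, namely the canonical model obtained by chasing $\A$ with $\T_{PI}$ and the role inclusions.
\begin{enumerate}
\item Apply PIs $\{B_1,\ldots,B_n\}\subs A$ and $\{B_1,\ldots,B_n\}\subs\exists R$ to named and anonymous elements alike. For an existential, add a fresh successor only if $\exists R$ is not already satisfied. Role inclusions propagate edges; by Remark~\ref{roleinc} the induced $\exists R_1\subs\exists R_2$ inclusions are already in $\T$.
\item Take the limit of the chase, which may be infinite, and read off $\I$. By construction $\I$ satisfies $\A$, all PIs and all role inclusions.
\item It remains to show that $\I$ satisfies $\T_{NI}$. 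Suppose some element $x$ violates an NI $\{B_1,\ldots,B_n\}\subs\neg B_{n+1}$, i.e.\ $x\in B_i^\I$ for all $i\le n+1$. Trace each membership $x\in B_i^\I$ back through the chase derivation. Each backward step replaces a basic concept by the LHS of the PI (or role inclusion) that produced it. Resolving the NI through these PIs yields an NI in the closure whose LHS concepts all hold at the derivation's root.
\item Case split on the root. If $x$, or its ancestor at the end of the derivation chain, is a named individual $a$, the root concepts occur in $\A$ for $a$, which contradicts the assumption. If $x$ is anonymous, created as an $R$-successor of some $y$, then the concepts $x$ carries all stem from $\exists R^-$ together with the concepts propagated afterwards. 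The closure then contains an NI that pulls the violation back to $y$ via $\exists R$. Induction on the depth of $x$ in the chase forest reduces this to the named case.
\end{enumerate}

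\emph{Main obstacle.} The hard step is the Horn case of items~3--4. In \dllitehornH{} a conjunctive LHS may mix concepts derived from the element's own assertions with concepts derived from different PIs. For anonymous elements we must therefore show that every concept $x$ carries is entailed by $\exists R^-$ alone, which holds because $x$ has a single generating edge. Only then does the NI-closure rule combining several PIs into a single NI apply.

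For this I would first prove an invariant: for anonymous $x$ generated via $R$, $\{B : x\in B^\I\}$ is exactly the set of basic concepts $B$ with $\T\entails\exists R^-\subs B$, augmented by the $\exists R'^-$ coming from role hierarchies. With this invariant, a violation at $x$ means $\T\entails\exists R^-\subs\bot$, and we show this axiom lies in the closure (as $\exists R^-\subs\neg\exists R^-$ or similar), so it propagates to $y$. Correctness of the NI-closure construction with respect to exactly these derived bottom-concepts is where I expect the real effort to lie.
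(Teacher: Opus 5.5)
Your soundness direction is fine. The overall shape of the completeness argument also works: build the chase model, then pull any violation of the closure back through the chase forest to the ABox. But the step you flag as the main obstacle is set up in a way that fails.

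\textbf{Where the plan fails.} Your invariant, phrased with $\T\entails\exists R^-\subs B$, is false precisely when you need it. A violation at an anonymous $x$ makes $\exists R^-$ unsatisfiable w.r.t.\ $\T$, so every basic concept is $\T$-entailed by $\exists R^-$. The chase, however, attaches only $\T_{PI}$-consequences to $x$. For instance, with $\T=\{A\subs\exists R,\ \exists R^-\subs C,\ C\subs\neg\exists R^-\}$ and $\A=\{\cass{a}{A}\}$, the chase gives $x$ only $\exists R^-$ and $C$, yet $\T\entails\exists R^-\subs D$ for any concept name $D$.

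Moreover, passing from $\T\entails\exists R^-\subs\bot$ to $\exists R^-\subs\neg\exists R^-\in\clnhH(\T)$ is completeness of the closure w.r.t.\ entailment (Corollary~\ref{corollary_corollary13calv07}, Proposition~\ref{entailmenthornH}). The paper obtains that from the canonical-model lemma, i.e.\ from essentially the statement you are proving. As planned, this step is therefore either circular or still open.

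\textbf{How to repair it.} Stay syntactic and turn your item~3 into a lemma. Suppose all concepts of an NI in the closure, RHS included, hold at a chase element $x$. Suppose also that each basic concept at $x$ has a chase derivation at $x$ from a root set $S$, via PIs and role inclusions (the latter mirrored by the PIs of Remark~\ref{roleinc}). Then alternate two rules:
\begin{itemize}
\item use ($\mathbf{\hHContr}$) to bring a non-root concept to the LHS;
\item use ($\mathbf{\hHTrans}$) to replace it by the premises of the PI that added it.
\end{itemize}
This terminates, by induction on the multiset of chase stages at which the concepts were added, in an NI of the closure all of whose concepts lie in $S$.

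Apply the lemma as follows.
\begin{itemize}
\item For a named $a$, $S$ is the set of concepts occurring for $a$ in $\A$, which is exactly the condition in the statement.
\item For $x$ created by $(\mathbf{\ChH_2})$ as an $R$-successor of $y$, one checks that $S=\{\exists R^-\}$. Existentials at $x$ come only from its own children or from role inclusions on its edges. The lemma gives $\exists R^-\subs\neg\exists R^-$, hence $\exists R\subs\neg\exists R$ by ($\mathbf{\RhH_3}$), which is violated at $y$.
\end{itemize}
Induction on depth then finishes the argument. You also need a fair chase (Proposition~\ref{prop_prop6calv07}) for the limit to satisfy $\T_{PI}$.

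\textbf{Comparison with the paper.} The statement is a reformulation of Proposition~\ref{Theo15calv07}, which the paper proves via Lemmas~\ref{lemma_lemma12calv07} and~\ref{Lemma14calv07}. The paper argues forward: it shows by induction on the chase that every $\canhH_i(\KB)$ satisfies the whole closure. At each step it checks only the effect of the single new assertion, using the same rules. This avoids both the resolution lemma and any analysis of the concepts carried by anonymous elements.
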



\nd    Note that the occurrence check, as required in Proposition~\ref{propCalvanese0607} (and in fact also in  Proposition~\ref{propCalvanese05}) can easily be done by submitting an SQL query to the ABox stored as a relational database (see~\cite{Calvanese06a,Calvanese07}).  Specifically, let $\mathtt{DB}(\A)$ be the database storing the ABox $\A$ (see Definition~\ref{def_DB(A)}). Then, \wrt~\dllitehornH~and inclusion axiom $\tau$ of the form $\{B_1,\ldots, B_n\} \subs \neg B_{n+1}$,\footnote{The case \dllitecoreH~is obtained by imposing $n=1$.} we have that all $\cass{a}{B_i}$ occur in $\A$ iff $a$ is an \emph{answer} to the \emph{Conjunctive Query} (CQ) $q_\tau$ evaluated over $\mathtt{DB}(\A)$ (see also Section~\ref{sect_query_rewite}), where $q_\tau$ is
    \begin{eqnarray} \label{qtau}
    q_\tau(x) & \leftarrow & \gamma(B_1)(x), \ldots, \gamma(B_{n+1})(x)    
    \end{eqnarray}

    \nd with 
    \begin{itemize}
    \item $\gamma(B_i)(x) = A_i(x)$ if $B_i = A_i$;
    \item $\gamma(B_i)(x) = \exists y_i. P_i(x, y_i)$ if $B_i = \csome P_i$;
    \item $\gamma(B_i)(x) = \exists y_i. P_i(y_i, x)$ if $B_i = \csome P^-_i$.    
    \end{itemize}
    
\nd Therefore, 
    
\begin{proposition}[\cite{Calvanese06a,Calvanese07}] \label{propCalvanese0607B}    
A \dllitecoreH~(resp.~\dllitehornH) KB $\KB$ is unsatisfiable iff there is an inclusion axiom $B_1 \subs \neg B_2$ (resp. $\{B_1,\ldots, B_n\} \subs \neg B_{n+1}$) in the \emph{Negative Inclusions} closure ($NI$-closure) of $\KB$ (see Section~\ref{nicor} and~\ref{nihorn}, resp.) 
such that the \emph{answer set} $ans(\A, q_\tau)$ of $q_\tau$ is non-empty.
 \end{proposition}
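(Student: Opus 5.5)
This follows from Proposition~\ref{propCalvanese0607}: it suffices to show that, for a fixed NI $\tau$ of the form $\{B_1,\ldots,B_n\}\subs\neg B_{n+1}$ in the NI-closure of $\KB$, the statement ``there is an individual $a$ such that all $\cass{a}{B_i}$, $1\leq i\leq n+1$, occur in $\A$'' is equivalent to $ans(\A,q_\tau)\neq\emptyset$. Granting this, the two characterisations of unsatisfiability coincide axiom by axiom, and the existential quantification over $\tau$ carries over unchanged. The \dllitecoreH~case is the instance $n=1$.

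The core step is a per-conjunct correspondence. For a basic concept $B$ and an individual $a$, I would show that $\cass{a}{B}$ occurs in $\A$ (Definition~\ref{occin}) iff $a$ satisfies $\gamma(B)(x)$ over $\mathtt{DB}(\A)$. There are three cases. If $B=A$, occurrence means $\cass{a}{A}\in\A$, which is exactly $A(a)$ holding in $\mathtt{DB}(\A)$. If $B=\csome P$, occurrence means that for some $b$, $\rass{a}{b}{P}$ occurs, i.e.\ $ra(P,a,b)=\rass{a}{b}{P}\in\A$. That is precisely $\exists y.P(a,y)$ in the database. If $B=\csome P^-$, occurrence means $ra(P^-,a,b)=\rass{b}{a}{P}\in\A$ for some $b$, which is $\exists y.P(y,a)$. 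Here I am assuming that $\mathtt{DB}(\A)$ (Definition~\ref{def_DB(A)}) stores concept and role assertions as unary and binary relations in the obvious way.

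Next comes the conjunction step. The conjuncts of $q_\tau$ share only the answer variable $x$, and the existential variables $y_i$ are pairwise distinct. Hence $a\in ans(\A,q_\tau)$ iff each $\gamma(B_i)(a)$ holds independently, iff all $\cass{a}{B_i}$ occur in $\A$. Non-emptiness of the answer set is therefore the same as the existence of such an $a$. I also need that answers range only over individuals of $\A$; this holds under the standard CQ semantics over a database, whose active domain is $\N_\A$.

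There is no real obstacle. The only point needing care is the direction convention for $ra$ with inverse roles, together with the fact that $q_\tau$ is evaluated over $\mathtt{DB}(\A)$ and not over the models of $\KB$. This is a closed-world, database evaluation, which is exactly what ``occurs in $\A$'' captures syntactically.
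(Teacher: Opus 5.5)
Your proposal is correct and follows the paper's own route. The paper obtains this proposition directly from Proposition~\ref{propCalvanese0607} by observing that all $\cass{a}{B_i}$ occur in $\A$ iff $a$ is an answer to $q_\tau$ evaluated over $\mathtt{DB}(\A)$. Your case analysis on $\gamma(B)$, including the $ra$ convention for inverse roles, and your remark that the existential variables $y_i$ are independent, supply exactly the verification of that observation, which the paper leaves implicit.
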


\nd To conclude this section, we provide here yet an additional method to decide the instance checking and the satisfiability problems. 
It will be useful in Section \ref{sect_RC_ABox}.  
The method borrows the idea proposed in~\cite{Kazakov14} for $\mathcal{ELO^+_\bot}$, and used in~\cite{Casini19} as well,  that consists in encoding assertion axioms as standard inclusion axioms, and thus allows one to remove the ABox. Essentially, let us consider the \emph{nominal} concept expression operator, denoted by the letter $\mathcal{O}$ in the DL literature, which, given an individual $a$, allows one to build  the so-called  \emph{nominal concept} $\{a\}$, whose semantics is 
$\{a\}^\I = \{a^\I\}$. 
Clearly, an interpretation $\I$ is a model of a concept assertion $\cass{a}{A}$ (resp.~role assertion $\rass{a}{b}{P}$) iff $\I$ is a model of the concept inclusion axiom $\{a\} \impc A$ 
(resp.~$\{a\} \impc \csome P.\{b\}$, or, alternatively $\{b\} \impc \csome P^-.\{a\}$).
Now, we replace a nominal concept $\{a\}$ with a new concept name $A_a$. Specifically,  consider any KB $\KB=\tuple{\T,\A}$ and let a new TBox $\T_\A$ defined as 

\begin{equation} \label{ta}
  \begin{array}{lcl}
    \T_\A & = & \T \cup \{A_a \impc A' \mid \cass{a}{A'} \in \A \} \\
    & \cup & \{A_a \impc \csome P.A_b \mid \rass{a}{b}{P} \in \A \} \\
    & \cup & \{A_b \impc \csome P^-.A_a \mid \rass{a}{b}{P} \in \A \} \ . 
\end{array}  
\end{equation}


\nd We allow qualified existential restrictions on the RHS of the concept inclusions in $\T_\A$, since they are compatible with the \dllite~expressivity, as described in Remark~\ref{remexpdllitecore}. Please note that the newly introduced concept names $A_a$ are not nominals, but standard concept names, that can be interpreted into any subset of a domain.





The following proposition allows one to reduce part of the ABox reasoning to TBox reasoning, and it is similar to Theorem 4 in~\cite{Kazakov14}, that has been proved in the context of $\EL^+_{\bot}$.

\begin{proposition}\label{nominalABox}
Consider any \dllitehornH~KB $\KB=\tuple{\T,\A}$ and any concept assertion $\cass{a}{A}$. 
Then
\begin{enumerate}
    \item $\KB$ is unsatisfiable iff $\T_\A \models A_b \impc \neg A_b$ for some individual $b$ occurring in $\KB$;
    \item If $\KB$ is satisfiable, $\KB \models \cass{a}{A}$ iff $\T_\A \models A_a \impc A$.
    
\end{enumerate} 
\end{proposition}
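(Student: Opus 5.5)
Both items will be proved by translating models back and forth between $\KB$ and $\T_\A$. The key device is a pair of model transformations. \emph{Forward:} from a model $\I$ of $\KB$ we build a model $\I'$ of $\T_\A$ by keeping $\I$ and setting $A_c^{\I'}=\{c^\I\}$ for every individual $c$. Then $A_a\subs A'$ holds because $a^\I\in A'^\I$. Likewise $A_a\subs\exists P.A_b$ and $A_b\subs\exists P^-.A_a$ hold because $(a^\I,b^\I)\in P^\I$. The axioms of $\T$ hold trivially. \emph{Backward:} we want to turn a model $\J$ of $\T_\A$ in which every $A_c^\J\neq\emptyset$ into a model of $\KB$. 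Here $\J$ need not interpret $A_c$ as a singleton, so we cannot simply read off $c^\J$.

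For the backward direction I will use the disjoint union of the tree-unravelings of $\J$ rooted at chosen witnesses. First pick $d_c\in A_c^\J$ for each $c$. A single choice does not work, since role edges $\rass{a}{b}{P}$ force $d_a$ and $d_b$ to be $P$-related, while $\J$ only guarantees that $d_a$ has \emph{some} $P$-successor in $A_b^\J$.

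To fix this I will construct a new interpretation $\I$ with the following ingredients.
\begin{itemize}
\item The domain contains one fresh element $a$ per individual, plus a copy of $\Delta^\J$ (or of several copies) to serve anonymous witnesses.
\item Each individual $a$ is placed in exactly the basic concepts $B$ such that $\T_\A\models A_a\subs B$; this uses the Horn nature of the logic.
\item $(a,b)$ is placed in $R$ whenever $\rass{a}{b}{P}\in\A$ and $\T\models P\subs R$.
\item Existential requirements are satisfied by attaching, for each $\exists R$ holding at $a$, a copy of the canonical (chase) model of $\T$ started from the corresponding concept set.
\end{itemize}
This is essentially the standard canonical-model/chase argument for \dllitehornH, from Section~\ref{sect_chase_niclos}, applied to the type sets $\tau(a)=\{B\mid \T_\A\models A_a\subs B\}$.

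The central technical lemma to verify is that if no $A_c$ is unsatisfiable w.r.t.\ $\T_\A$, then this canonical structure satisfies all PIs and NIs. The PIs (including Horn conjunctions and role inclusions) will be checked by the closure of $\tau(a)$ under entailment. For a role assertion $\rass{a}{b}{P}$, $\exists P\in\tau(a)$ follows from $A_a\subs\exists P.A_b$, and $\exists P^-\in\tau(b)$ follows symmetrically. Conversely, concepts propagating along the edge, for example through $\exists P^-\subs C$, must land in $\tau(b)$. This holds because $A_a\subs\exists P.A_b$ together with $\exists P^-\subs C$ entails $A_b\sqcap\ldots$? Not directly. This is the main obstacle: the qualified restriction only says that \emph{some} $P$-successor is in $A_b$. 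What we can derive is $A_b\subs\exists P^-.A_a$, which yields $A_b\subs\exists P^-$ and hence $A_b\subs C$ when $\exists P^-\subs C\in\T$. For unqualified \dllite{} this suffices, since only $\exists R$ appears as a left-hand-side basic concept. So every concept that the assertion forces on $b$ is derivable from $A_b$ alone, and the same holds for $a$. I will make this observation the crux of the argument. The NIs are then satisfied because $\tau(a)$ contains no NI-violating combination, as otherwise $\T_\A\models A_a\subs\neg A_a$. The anonymous parts satisfy the NIs by standard chase properties, using the satisfiability of the relevant basic concepts, which again follows from $A_a$ being satisfiable.

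With the construction in hand, the two items follow. For item 1: if $\KB$ is satisfiable, the forward model makes each $A_b$ nonempty, so no $A_b\subs\neg A_b$ is entailed. Conversely, if no $A_b$ is unsatisfiable, the canonical $\I$ is a model of $\KB$; the UNA holds since the named elements are distinct. For item 2: if $\T_\A\models A_a\subs A$, every model of $\KB$ extends via the forward construction to a model of $\T_\A$ with $A_a^{\I'}=\{a^\I\}$, giving $a^\I\in A^\I$. If $\T_\A\not\models A_a\subs A$, then $A\notin\tau(a)$, so the canonical model, which is a model of $\KB$ because $\KB$ is satisfiable, witnesses $a\notin A^\I$.
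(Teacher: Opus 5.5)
Your proof is correct, but for the backward direction it takes a different route from the paper. The paper builds no canonical model. It takes a model $\I$ of $\T_{\A}$ in which every $A_c$ is non-empty (a disjoint union of models), keeps its domain and interpretation, and lets $a^{\I_{\KB}}$ be an arbitrary element of $A_a^{\I}$. It then adds $(b^{\I_{\KB}},c^{\I_{\KB}})$ to $P$ for each $\rass{b}{c}{P}\in\A$ and closes under the role inclusions of $\T$. Finally it checks that no basic concept changes its extension: $A_b\subs\exists P.A_c$ and $A_c\subs\exists P^-.A_b$ already place the chosen elements in $\exists P$, resp.\ $\exists P^-$, and hence in the corresponding super-role restrictions.

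So your remark that a single choice of witnesses ``does not work'' is too pessimistic. The observation you later isolate as the crux is exactly what makes that choice work once the missing edges are added, and it gives a much shorter proof.

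Your construction gives each named element exactly the entailed type $\tau(a)$. It costs more verification: the anonymous parts, and the propagation of satisfiability from $A_a$ down to each $\exists R^-$. In return it gives two things the paper leaves implicit:
\begin{itemize}
\item The UNA holds automatically. In the paper, two individuals may be mapped to the same element of $A_a^{\I}\cap A_b^{\I}$.
\item One model realises $\tau(a)$ for all individuals at once, so item~2 is immediate. The paper instead reruns its construction on a countermodel of $A_a\subs A$, in which the other $A_b$ need not be non-empty.
\end{itemize}

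You can also avoid the chase. Whenever $\exists R\in\tau(a)$ we have $\emptyset\neq A_a^{\J}\subseteq(\exists R)^{\J}$, so it suffices to add an $R$-edge (closed under the role inclusions) from $a$ to some element of $(\exists R^-)^{\J}$ in a single copy of $\J$. This changes no basic type.

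Finally, delete the unused tree-unraveling plan from your first paragraph and the sentence ending in ``Not directly.''
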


\nd By Propositions~\ref{entailmentcor} and \ref{entailmenthornH} (see later on), we get immediately that

\begin{proposition}\label{nominalABoxBis}
Consider any \dllitecoreH~or \dllitehornH~KB $\KB=\tuple{\T,\A}$. 
Then $\KB$ is unsatisfiable iff  for any individual $b$ occurring in $\KB$, $A_b \impc \neg A_b$ is in the NI-closure of $\T_\A$.
\end{proposition}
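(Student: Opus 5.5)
The plan is to derive the statement directly from Proposition~\ref{nominalABox}(1), using the characterisation of entailed negative inclusions via the NI-closure given by Propositions~\ref{entailmentcor} and~\ref{entailmenthornH}. I read the quantifier over $b$ existentially, as in Proposition~\ref{nominalABox}(1): $\KB$ is unsatisfiable iff there is an individual $b$ occurring in $\KB$ with $A_b \impc \neg A_b$ in the NI-closure of $\T_\A$. A universal reading cannot be intended. An ABox with two unrelated individuals, only one of which violates an NI, gives an unsatisfiable $\KB$ in which the concept $A_b$ of the other individual stays satisfiable.

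First, $\T_\A$ must be turned into a genuine \dllitecoreH{} (resp.~\dllitehornH) TBox, so that the NI-closure results apply. The only non-standard axioms in (\ref{ta}) are those with qualified existentials, $A_a \impc \csome P.A_b$ and $A_b \impc \csome P^-.A_a$. Following Remark~\ref{remexpdllitecore}, I replace each by $A_a \impc \csome P'$, $P' \subs P$ and $\csome P'^- \subs A_b$, with a fresh role $P'$, together with the induced axioms from Remark~\ref{roleinc}; the second kind is handled symmetrically. All new axioms are core axioms, so the result $\T'_\A$ stays in the language of $\T$. I then check that $\T'_\A$ is a conservative extension of $\T_\A$. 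Every model of $\T'_\A$ is a model of $\T_\A$. Conversely, every model of $\T_\A$ expands to a model of $\T'_\A$ by interpreting $P'$ as the set of $P$-edges whose target lies in $A_b$. Hence $\T_\A \models A_b \impc \neg A_b$ iff $\T'_\A \models A_b \impc \neg A_b$, because the inclusion does not mention the fresh symbols.

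Next, I apply Proposition~\ref{entailmentcor} (core) or Proposition~\ref{entailmenthornH} (horn) to $\T'_\A$. These state that a negative inclusion between basic concepts is entailed by a TBox iff it, or a trivially weaker form such as $A_b \impc \neg A_b$ itself, belongs to its NI-closure. This gives $\T'_\A \models A_b \impc \neg A_b$ iff $A_b \impc \neg A_b$ is in the NI-closure of $\T'_\A$, which I identify with the NI-closure of $\T_\A$ in the statement. Chaining this with Proposition~\ref{nominalABox}(1) yields the claim.

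The main obstacle is making the match with Propositions~\ref{entailmentcor} and~\ref{entailmenthornH} exact. Two points need checking. The first is that those propositions give membership of exactly $A_b \impc \neg A_b$, not just of some NI $B_1 \impc \neg B_2$ from which it follows. I would handle this by observing that the closure rules include a contraction-type rule, $\cHContr$ resp.~$\hHContr$, which turns an entailed self-contradiction of $A_b$ into the literal axiom. The second is the conservativity argument for the normalisation of the qualified existentials. I expect both to be routine once the exact form of the NI-closure is fixed.
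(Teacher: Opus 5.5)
Your proposal is correct and is essentially the paper's argument: the paper obtains the statement in one line by combining Proposition~\ref{nominalABox}(1) with Propositions~\ref{entailmentcor} and~\ref{entailmenthornH}, reading the quantifier existentially as you do (your counterexample to the universal reading is right). Your extra steps are sound: you make explicit the normalisation of the qualified existentials in $\T_\A$ via Remark~\ref{remexpdllitecore}, together with the conservativity check, which the paper leaves implicit. Your first ``obstacle'' needs no contraction argument. Instantiate those propositions with both sides equal to $A_b$, and every listed case collapses to $A_b \impc \neg A_b$ itself, since NIs in the closure never have an empty left-hand side.
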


\subsection{Chases and NI-closures}\label{sect_chase_niclos}
\nd This section still presents some known technical preliminaries for the main content of the paper, but some of the following results are  novel contributions.
Specifically, we are going to present two operations, the \emph{NI-closure} and the \emph{chase}, for \dllite~KBs. The operations for \dllitecoreH~were already presented in \cite{Calvanese07}. We additionally present and require later on the analogue reformulations for \dllitehornH. 


For the purpose of constructing the chase, we shall hereafter assume the existence of a new infinite and countable alphabet $\Nanon$ of \emph{anonymous individuals}, disjoint from $\N$. 

\subsubsection{Chase for \dllitecoreH} \label{chasecore}

\nd First we introduce the \emph{chase} procedure for \dllitecoreH. 

\begin{definition}[Interpretation $\mathtt{Int}(\calS)$, Database $\mathtt{DB}(\A)$~\cite{Calvanese07}]\label{def_DB(A)}
Consider an ABox $\A$ and a set of assertion axioms $\calS$, possibly involving anonymous individuals with $\A \subseteq \calS$. Let $\N_\calS$ be the set of individuals occurring in $\calS$. 

The \emph{interpretation} $\mathtt{Int}(\calS) = (\Delta^{\mathtt{Int}(\calS)},\cdot^{\mathtt{Int}(\calS)})$ is s.t.:
\begin{itemize}
    \item $\Delta^{\mathtt{Int}(\calS)} = \N_\calS$;
    \item $a^{\mathtt{Int}(\calS)}=a$, for each individual $a \in \N_\calS$;
    \item $A^{\mathtt{Int}(\calS)}=\{a\mid \cass{a}{A}\in \calS\}$, for each concept name $A$;
    \item $P^{\mathtt{Int}(\calS)}=\{(a,b)\mid \rass{a}{b}{P}\in \calS \}$, for each role name $P$.
\end{itemize}

\nd The \emph{database} $\mathtt{DB}(\A)$ is defined as follows:
    \begin{enumerate}
        \item     for each  concept name $A$ occurring in $\A$, we define a relational table $tab_{A}$ of arity 1, such that $\tuple{a} \in tab_{A}$ iff $\cass{a}{A} \in \A$;
    
        \item for each role $P$ occurring in $\A$, we define a relational table $tab_{P}$ of arity 2, such that $\tuple{a,b} \in tab_{P}$ iff $\rass{a}{b}{P} \in \A$.
    \end{enumerate}

\end{definition}

    



\nd It is easily verified that the interpretation $\mathtt{Int}(\calS)$ is in fact a model of $\calS$~\cite{CalvaneseEtAl2007}. {\color{black}In particular, $\mathtt{DB}(\A)$ can be seen as corresponding to the interpretation $\mathtt{Int}(\A)$, that is the smallest model of $\A$.}
\footnote{Please note that sometimes we will use $\mathtt{DB}(\A)$ as interpretation, while in others as database. The meaning should be clear depending from the context in which we will use it.}






The \emph{chase}  for \dllitecoreH~is a well-known procedure to build a model of a \dllitecoreH \ KB $\KB=\tuple{\T,\A}$, if it exists ~\cite{Calvanese05,Calvanese07}. 
Given 
$\KB =(\T,\A)$, the \emph{chase} of $\KB$ (denoted $\chasecH(\KB)$) is the (possibly infinite) ABox obtained starting from $\A$, that is, setting 
\[
\chasecH_0(\KB)=\A
\]
\nd and proceeding with the following \emph{chase rules}:

    
    

{
\begin{description}
    \item[($\mathbf{\CcH_1}$)] if $B \subs A \in \T$, $\cass{a}{B}$ occurs in $\chasecH_i(\KB)$,  and $\cass{a}{A}$ does not occur in $\chasecH_i(\KB)$ (condition $f$), then let $\chasecH_{i+1}(\KB) = \chasecH_i(\KB)\cup\{\cass{a}{A}\}$ (condition $f_{new}$); \\
    
    \item[($\mathbf{\CcH_2}$)] if $B \subs \exists R \in \T$, $\cass{a}{B}$ occurs in $\chasecH_i(\KB)$ (condition $f$),  
    and $\cass{a}{\csome R}$ does not occur in $\chasecH_i(\KB)$, 
    then let $\chasecH_{i+1}(\KB)=\chasecH_i(\KB)\cup\{ra(R,a,b)\}$, where $b$ is a new individual taken from $\Nanon$ (condition $f_{new}$);\\
    
    \item[($\mathbf{\CcH_3}$)] if $R_1\subs R_2\in\T$, $\rass{a}{b}{R_1}$ occurs in $\chasecH_i(\KB)$ (condition $f$),  and $\rass{a}{b}{R_2}$ does not occur in $\chasecH_i(\KB)$, then let $\chasecH_{i+1}(\KB) = $ \\ $ \chasecH_i(\KB) \cup 
    \{ra(R_2,a,b)\}$ (condition $f_{new}$). 
\end{description}
}

    
    

\nd At every step, one of the applicable rules is enforced. Eventually, 
\begin{equation} \label{chasec}
\chasecH(\KB)\defined\bigcup_{i} \chasecH_i(\KB) \ .
\end{equation}

\nd From every $\chasecH_i(\KB)$ (resp., the final $\chasecH(\KB)$) an interpretation 
$\cancH_i(\KB)$ (resp., $\cancH(\KB)$) can be defined as per Definition~\ref{def_DB(A)}, \ie
\begin{equation} \label{canmod}
    \begin{array}{lcl}
    \cancH_i(\KB) & = & \mathtt{Int}(\chasecH_i(\KB)) \\
    \cancH(\KB) & = & \mathtt{Int}(\chasecH(\KB)) \ .
\end{array}
\end{equation}

\nd In particular, note that $\cancH_0(\KB)=\mathtt{DB}(\A)$, and if $\A = \emptyset$ then  $\chasecH(\KB)=\emptyset$.


\subsubsection{NI-Closure for \dllitecoreH} \label{nicor}

\nd As next, for the sake of completeness, we present a procedure to compute the \emph{Negative Inclusions} closure ($NI$-closure) for 
\dllitecoreH~\cite[Sect. 3.1.2]{Calvanese07}. We recall that, given a TBox $\T$, we denote with $\T_{PI}$ (resp.~$\T_{NI}$) the set of positive inclusions $B_1\subs B_2$ and $R_1\subs R_2$ (resp.~negative inclusions $B_1\subs \neg B_2$) in $\T$. 

The \emph{NI-closure} of a \dllitecoreH~TBox $\T$, denoted $\clncH(\T)$, is the TBox that is defined inductively as follows: 


{
\begin{description}
    \item[($\mathbf{\cHRef}$)] all NI's in $\T$ are in $\clncH(\T)$, that is, $\T_{NI}\subseteq\clncH(\T)$;\\

    \item[($\mathbf{\cHContr}$)] {\bf IF}  $B_{1} \impc \notc B_2\in \clncH(\T)$ {\bf THEN}  $B_{2} \impc \notc B_{1} \in \clncH(\T)$; \\
    
    \item[($\mathbf{\cHTrans}$)] {\bf IF}  $B_{1} \impc  B_{2} \in \T$ {\bf AND}  $B_{2} \impc \notc B_{3} \in \clncH(\T)$ {\bf THEN}  $B_{1} \impc \notc B_{3} \in \clncH(\T)$; \\



    \item[($\mathbf{\RcH_1}$)] {\bf IF}  $R_1\subs R_2\in\T$ {\bf AND } $\exists R_2\subs\neg B\in \clncH(\T)$, {\bf THEN}  $\exists R_1\subs\neg B\in \clncH(\T)$; \\

    \item[($\mathbf{\RcH_2}$)] {\bf IF}  $R_1\subs R_2\in\T$ {\bf AND } $\exists R_2^-\subs\neg B\in \clncH(\T)$, {\bf THEN}  $\exists R_1^-\subs\neg B\in \clncH(\T)$; \\


    \item[($\mathbf{\RcH_3}$)] {\bf IF}   $\exists R \impc  \neg \exists R \in \clncH(\T)$ {\bf THEN} 
    $\exists R^- \impc  \neg \exists R^-$ is in $\clncH(\T)$.

\end{description}
}


\begin{remark}
    Please note that, \wrt~the version of NI-closure presented in \cite[Sect. 3.1.2]{Calvanese07}, we have added the rule of contraposition ($\mathbf{\cHContr}$). Such a rule is relevant for what follows, and its addition is not problematic since it is sound \wrt~classical \dllite~entailment. Also, we have modified the other rules accordingly, due to the presence of this new rule. Eventually, we have eliminated the rules denoted in \cite{Calvanese07} as (2), (6), and (7a), since they are associated to functional assertion and negated roles, that we do not consider here. Such changes do not affect the validity of the propositions that follow, that are recalled from \cite{Calvanese07}.
\end{remark}



\begin{proposition}[\cite{Calvanese07}, Lemma 12] \label{chasecorA}
    Let $\KB=(\T,\A)$ be a \dllitecoreH~KB. Then, $\cancH(\KB)$ is a model of $\KB$ if and only if the interpretation $\mathtt{DB}(\A)$ is a model of $(\clncH(\T),\A)$.
\end{proposition}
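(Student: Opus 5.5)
The plan is to argue both directions directly from the construction of the chase. We use the fact that every rule of $\clncH(\T)$ is sound, that is, each NI in $\clncH(\T)$ is entailed by $\T$. The right-to-left direction is the substantial one.

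\emph{Only if.} Suppose $\cancH(\KB)$ is a model of $\KB$. Then it is a model of $\T$, and hence of every NI in $\clncH(\T)$. We check soundness of each rule by induction on the derivation:
\begin{itemize}
\item ($\mathbf{\cHContr}$) is contraposition.
\item ($\mathbf{\cHTrans}$) composes a PI with an NI.
\item ($\mathbf{\RcH_1}$) and ($\mathbf{\RcH_2}$) follow from $\exists R_1\subs\exists R_2$, respectively $\exists R_1^-\subs\exists R_2^-$.
\item ($\mathbf{\RcH_3}$) holds because $\exists R$ is empty iff $\exists R^-$ is empty.
\end{itemize}
Now $\mathtt{DB}(\A)=\cancH_0(\KB)$ is a substructure of $\cancH(\KB)$, since the chase only adds assertions. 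Membership in $A$ and in $\exists R$ is preserved upward from $\mathtt{DB}(\A)$ to $\cancH(\KB)$. So if some $a$ violated an NI $B_1\subs\neg B_2$ in $\mathtt{DB}(\A)$, it would violate it in $\cancH(\KB)$ as well. Thus $\mathtt{DB}(\A)$ is a model of $(\clncH(\T),\A)$; it trivially satisfies $\A$.

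\emph{If.} Suppose $\mathtt{DB}(\A)$ satisfies $\clncH(\T)$ and $\A$. By construction of the chase, $\cancH(\KB)$ satisfies $\A$ and every PI and role inclusion of $\T$. Every applicable rule is eventually applied, which we take as a fairness assumption on the rule order. It remains to show that $\cancH(\KB)$ satisfies each NI $B_1\subs\neg B_2\in\T$. Suppose some $c$ is in $B_1\sqcap B_2$ in $\cancH(\KB)$.

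The key lemma traces each basic-concept membership back to its origin. If $\cass{c}{B}$ occurs in $\chasecH(\KB)$, then one of two cases holds:
\begin{itemize}
\item[(i)] $c\in\N_\A$ and there is a basic $B'$ with $\cass{c}{B'}$ occurring in $\A$ and $B'\subs B$ derivable by chains of PIs and role-induced inclusions $\exists R_1\subs\exists R_2$;
\item[(ii)] $c$ is anonymous, created by ($\mathbf{\CcH_2}$) from some $B''\subs\exists R$ applied to a parent $d$, and $\exists R^-\subs B$ is derivable in the same way.
\end{itemize}
Case (ii) uses the fact that an anonymous individual only ever acquires the $\exists R^-$ edge it was created with, plus what follows by ($\mathbf{\CcH_1}$)--($\mathbf{\CcH_3}$). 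Edges to children created from $c$ itself are themselves triggered by memberships already derived. The lemma is proved by induction on the chase step $i$.

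Given the lemma, we propagate the violated NI backwards through the derivation using ($\mathbf{\cHTrans}$), ($\mathbf{\RcH_1}$), ($\mathbf{\RcH_2}$) and ($\mathbf{\cHContr}$); contraposition lets us rewrite either side of the NI. This yields an NI in $\clncH(\T)$ between two memberships present at the origin.

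\begin{itemize}
\item If $c\in\N_\A$, the origin memberships occur in $\A$. So $\mathtt{DB}(\A)$ violates an NI of $\clncH(\T)$, a contradiction.
\item If $c$ is anonymous, both memberships derive from $\exists R^-$. We obtain $\exists R^-\subs\neg\exists R^-\in\clncH(\T)$, and by ($\mathbf{\RcH_3}$) and ($\mathbf{\cHContr}$) also $\exists R\subs\neg\exists R$. The parent $d$ is in $\exists R$, since $c$ was created as its $R$-successor. Hence $d\in B''$, and ($\mathbf{\cHTrans}$) gives $B''\subs\neg\exists R$. By the same argument applied recursively to $d$, whose creation precedes $c$'s, this yields a violation either at an individual of $\N_\A$ or at an ancestor, so we conclude by induction on creation order.
\end{itemize}

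The main obstacle is the careful formulation of the origin lemma for anonymous individuals. In particular, we must make sure that memberships $\exists R'$ acquired by $c$ through its own children do not break the argument. These are harmless: such a child is created only when $c$ already belongs to some $B'''\subs\exists R'$, so $\exists R'$ membership is again traceable to an earlier membership of $c$, and ($\mathbf{\cHTrans}$) applies. The lemma should therefore be stated as: every basic concept of $c$ is PI-derivable from the single ``root'' concept of $c$, which is either an ABox-occurring basic concept or $\exists R^-$.
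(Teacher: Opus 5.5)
Your proof is correct, and its right-to-left half takes a genuinely different route from the one the paper relies on.

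The paper takes this statement from \cite{Calvanese07}. Its own proof of the \dllitehornH{} analogue (Lemma~\ref{lemma_lemma12calv07}) shows the scheme it follows:
\begin{itemize}
\item The left-to-right direction is exactly yours: soundness of the closure, plus the fact that $\mathtt{DB}(\A)=\cancH_0(\KB)$ only has smaller extensions, which cannot break an NI.
\item The right-to-left direction is a \emph{forward} induction on the chase. The invariant is that every $\cancH_i(\KB)$ satisfies all of $\clncH(\T)$. A violation caused by the single assertion added at step $i+1$ is pulled back, through ($\mathbf{\cHContr}$), ($\mathbf{\cHTrans}$) and the role rules, to a violation already present at step $i$.
\end{itemize}

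You proceed differently in three steps. First, a global provenance lemma: every basic-concept membership of an individual is PI-reachable from a root, namely an $\A$-occurring concept for a named individual, or the single concept $\exists R^-$ for an anonymous one. Second, you propagate the clash backwards to the roots. Third, you induct on creation order, using ($\mathbf{\RcH_3}$) to move a clash at an anonymous individual to its parent.

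What each approach buys:
\begin{itemize}
\item The forward invariant is local and needs no description of the global shape of the chase. It is also what transfers to \dllitehornH. There a membership can come from a conjunction of memberships with different origins, so a single-root lemma fails for named individuals. You would then need set-valued provenance together with the $CL\cup CL'$ form of ($\mathbf{\hHTrans}$).
\item Your version gives more structural insight. It makes explicit that, in the core case, the whole type of an anonymous individual is generated by $\exists R^-$, which is exactly why ($\mathbf{\RcH_3}$) is needed. It also separates the fairness assumption, needed only for the PIs, from the NI argument.
\end{itemize}

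One presentational point: the claim you carry through the induction on creation order should be that no individual violates any NI of $\clncH(\T)$, not only of $\T$. The recursive step produces closure NIs such as $\exists R\subs\neg\exists R$. Your argument already works verbatim at that level of generality.
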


\begin{proposition}[\cite{Calvanese07}, Lemma 10 + Corollary 13]\label{prop_calvanese07_lemma10_coroll13}
    Let $\T$ be a \dllitecoreH~TBox, and let $\alpha$ be a NI. Then $\clncH(\T)\entails \alpha$ iff $\T\entails\alpha$.
\end{proposition}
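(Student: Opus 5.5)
The plan is to prove both directions of the equivalence, with soundness and completeness handled separately. The key tool is Proposition~\ref{chasecorA}, which relates models of $\KB$ to the NI-closure through the canonical model.

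\emph{Right-to-left.} We show that every rule of $\clncH$ is sound with respect to classical entailment, by induction on how an axiom enters $\clncH(\T)$.
\begin{itemize}
\item For $\mathbf{\cHRef}$ there is nothing to prove.
\item For $\mathbf{\cHContr}$: if $B_1^\I\cap B_2^\I=\emptyset$ in a model $\I$, then symmetrically $B_2^\I\subseteq(\neg B_1)^\I$.
\item For $\mathbf{\cHTrans}$: $B_1^\I\subseteq B_2^\I\subseteq(\neg B_3)^\I$.
\item For $\mathbf{\RcH_1}$ and $\mathbf{\RcH_2}$: $R_1^\I\subseteq R_2^\I$ gives $(\exists R_1)^\I\subseteq(\exists R_2)^\I$ and $(\exists R_1^-)^\I\subseteq(\exists R_2^-)^\I$.
\item For $\mathbf{\RcH_3}$: $(\exists R)^\I=\emptyset$ iff $R^\I=\emptyset$ iff $(\exists R^-)^\I=\emptyset$.
\end{itemize}
Hence $\T\entails\clncH(\T)$, so $\clncH(\T)\entails\alpha$ implies $\T\entails\alpha$.

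\emph{Left-to-right.} We argue by contraposition. Suppose $\alpha=B_1\subs\neg B_2$ and $\clncH(\T)\not\entails\alpha$. We must build a model of $\T$ that violates $\alpha$, i.e.\ one with an element in $B_1\sqcap B_2$. Following the reduction of Section~\ref{subssect}, we take a fresh individual $a$ and an ABox $\A$ that places $a$ in $B_1$ and $B_2$:
\begin{itemize}
\item $\cass{a}{A}$ when $B_i=A$;
\item $\rass{a}{c_i}{P}$ with a fresh $c_i$ when $B_i=\exists P$;
\item $\rass{c_i}{a}{P}$ when $B_i=\exists P^-$.
\end{itemize}
By Proposition~\ref{chasecorA}, $\cancH(\tuple{\T,\A})$ is a model of $\tuple{\T,\A}$, and in particular a model of $\T$ violating $\alpha$, provided $\mathtt{DB}(\A)$ satisfies every NI in $\clncH(\T)$. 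So it remains to show that if $\mathtt{DB}(\A)$ violated some $B\subs\neg B'\in\clncH(\T)$, then $\clncH(\T)\entails\alpha$ (in fact $\clncH(\T)\entails B_1\subs\neg B_2$ directly).

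\emph{Main obstacle: the case analysis on violated NIs.} In $\mathtt{DB}(\A)$, the only basic concepts with nonempty extension are the following.
\begin{itemize}
\item At $a$: $B_1$ and $B_2$.
\item At each fresh $c_i$: $\exists R^-$, where $B_i=\exists R$.
\end{itemize}
We treat the possible violations in turn.
\begin{itemize}
\item A violation at $a$ means some $B\subs\neg B'$ is in the closure with $B,B'\in\{B_1,B_2\}$.
 \begin{itemize}
 \item If $\{B,B'\}=\{B_1,B_2\}$, then by $\mathbf{\cHContr}$ we get $B_1\subs\neg B_2$ in the closure.
 \item If $B=B'=B_1$, then $B_1\subs\neg B_1$ entails $B_1\subs\neg B_2$ semantically. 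The case $B=B'=B_2$ is symmetric.
 \end{itemize}
\item A violation at $c_i$ means $\exists R^-\subs\neg\exists R^-$ is in the closure. Applying $\mathbf{\RcH_3}$ to $R^-$ (using $(R^-)^-=R$) yields $\exists R\subs\neg\exists R$, i.e.\ $B_i\subs\neg B_i$. This again entails $\alpha$.
\end{itemize}
This is exactly where $\mathbf{\RcH_3}$ and $\mathbf{\cHContr}$ are needed, and it is the delicate step.

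A further subtlety is whether the chase never adds facts that break $\alpha$'s counterexample. It does not: the chase only adds facts, so $a$ stays in $B_1\sqcap B_2$ in $\cancH$. One caveat is that Proposition~\ref{chasecorA} is stated for a KB with the original individuals, so the fresh individuals must be harmless; this holds because the TBox does not mention them.

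This gives $\T\not\entails\alpha$ whenever $\clncH(\T)\not\entails\alpha$, which completes the equivalence.
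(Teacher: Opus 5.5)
Your proof is correct and takes essentially the route the paper relies on, namely that of \cite{Calvanese07}, which the paper mirrors in its own \dllitehornH~analogues (Lemma~\ref{lemma_lemma10calv07} and Corollary~\ref{corollary_corollary13calv07}). One direction is soundness of each closure rule; the other builds an ABox placing $a$ in $B_1\sqcap B_2$, checks that $\mathtt{DB}(\A)$ satisfies $\clncH(\T)$, and appeals to Proposition~\ref{chasecorA}. Your explicit handling of the fresh role-fillers $c_i$ via $\mathbf{\RcH_3}$ is a welcome refinement, since the paper's analogous arguments only examine violations at $a$ and leave that case implicit.
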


\nd Also, combining the chasing and the NI-closure, we obtain the following.

%
\begin{proposition}[\cite{Calvanese07}] \label{chasecor}
  Given a \dllitecoreH~$\KB=(\T,\A)$.  Then
  \begin{enumerate}
    \item $\cancH(\KB)$ is a model of $(\T_{PI},\A)$;


    \item $\KB$ is satisfiable iff $\cancH(\KB)$ is a model of $\KB$.
      

      
  \end{enumerate}
\end{proposition}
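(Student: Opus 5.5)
We prove the two items of Proposition~\ref{chasecor} in order, relying on Proposition~\ref{chasecorA} and on the construction of the chase.

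\textbf{Item 1.} We show that $\cancH(\KB)=\mathtt{Int}(\chasecH(\KB))$ satisfies $\A$ and every axiom in $\T_{PI}$.

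\begin{itemize}
\item Satisfaction of $\A$ is immediate. We have $\A=\chasecH_0(\KB)\subseteq\chasecH(\KB)$, and $\mathtt{Int}(\calS)$ is a model of $\calS$ for every set of assertions $\calS$.
\item For a positive inclusion $B\subs A$ or $B\subs\exists R$, take any $a\in B^{\cancH(\KB)}$. By definition of $\mathtt{Int}$, membership of $a$ in a basic concept means that $\cass{a}{B}$ occurs in $\chasecH(\KB)$. Since the chase is the union of an increasing chain, this occurrence already holds in some $\chasecH_i(\KB)$.
\item For a role inclusion $R_1\subs R_2$, take any pair in $R_1^{\cancH(\KB)}$. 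Then $\rass{a}{b}{R_1}$ occurs in some $\chasecH_i(\KB)$.
\end{itemize}

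In each case the corresponding rule $\CcH_1$, $\CcH_2$ or $\CcH_3$ becomes applicable at stage $i$. Its application either inserts the required assertion or finds it already occurring.

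The main obstacle is fairness. We must ensure that every rule instance which becomes applicable is eventually applied. We therefore assume the standard fair strategy of~\cite{Calvanese07}: applicable rule instances are processed in order of the stage at which their trigger appeared, for instance breadth-first by stage. Under this strategy the conclusion occurs in some $\chasecH_j(\KB)$ with $j>i$, hence in $\chasecH(\KB)$. It follows that $B^{\cancH(\KB)}\subseteq A^{\cancH(\KB)}$ (respectively $\subseteq(\exists R)^{\cancH(\KB)}$), and that $R_1^{\cancH(\KB)}\subseteq R_2^{\cancH(\KB)}$.

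\textbf{Item 2.} The right-to-left direction is trivial: if $\cancH(\KB)$ is a model of $\KB$, then $\KB$ is satisfiable.

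For the left-to-right direction, suppose $\KB$ is satisfiable, with model $\I$.

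\begin{itemize}
\item Every NI in $\clncH(\T)$ is entailed by $\T$: the rules defining $\clncH$ are sound, and in any case Proposition~\ref{prop_calvanese07_lemma10_coroll13} gives this directly. Hence $\I$ is a model of $(\clncH(\T),\A)$.
\item We next show that $\mathtt{DB}(\A)$ satisfies every NI $B_1\subs\lnot B_2$ in $\clncH(\T)$. Suppose instead that some individual $a\in B_1^{\mathtt{DB}(\A)}\cap B_2^{\mathtt{DB}(\A)}$. Then $\cass{a}{B_1}$ and $\cass{a}{B_2}$ both occur in $\A$. Since $\I\sat\A$, this gives $a^\I\in B_1^\I\cap B_2^\I$, contradicting $\I\sat B_1\subs\lnot B_2$.
\item Trivially $\mathtt{DB}(\A)\sat\A$. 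Hence $\mathtt{DB}(\A)$ is a model of $(\clncH(\T),\A)$.
\end{itemize}

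Proposition~\ref{chasecorA} then yields that $\cancH(\KB)$ is a model of $\KB$.
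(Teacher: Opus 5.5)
Your proof is correct and follows the same route as the argument the paper relies on. That argument comes from \cite{Calvanese07} and is reproduced for \dllitehornH{} in Lemmas~\ref{lemma_lemma7calv07} and~\ref{Lemma14calv07}: item~1 uses fairness of the chase (the role of Proposition~\ref{prop_prop6calv07}), and item~2 uses Proposition~\ref{chasecorA}. The only difference is cosmetic: you show directly that $\mathtt{DB}(\A)$ satisfies $\clncH(\T)$, using a model $\I$ and the soundness of the NI-closure, whereas the paper argues the contrapositive by carrying a violated NI of $\mathtt{DB}(\A)$ into an arbitrary model, leaving implicit the soundness step you make explicit.
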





\nd Eventually, we can strengthen Proposition \ref{prop_calvanese07_lemma10_coroll13} as follows.

\begin{proposition} \label{entailmentcor}
  Consider a \dllitecoreH~TBox $\T$ and any pair of basic concepts $B_1,B_2$. Then $\T \models B_{1} \impc \notc B_{2}$ iff one of the following holds:
  \begin{enumerate}
        \item $B_{1} \impc \notc B_{2} \in \clncH(\T)$;
        \item $B_{1} \impc \notc B_{1} \in \clncH(\T)$;
        \item $B_{2} \impc \notc B_{2} \in \clncH(\T)$.
  \end{enumerate}
\end{proposition}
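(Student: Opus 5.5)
The ``if'' direction is soundness. Case 1 follows because every rule of $\clncH$ is sound \wrt~classical entailment; for \textbf{$\cHContr$} this is contraposition, and for \textbf{$\RcH_3$} it holds because $(\exists R)^\I=\emptyset$ iff $R^\I=\emptyset$ iff $(\exists R^-)^\I=\emptyset$. Hence $\T\models\alpha$ for every $\alpha\in\clncH(\T)$. In case 2, $\T\models B_1\subs\neg B_1$ means $B_1^\I=\emptyset$ in every model $\I$ of $\T$, so $B_1^\I\subseteq(\neg B_2)^\I$ trivially. Case 3 is symmetric: $B_2^\I=\emptyset$ gives $(\neg B_2)^\I=\Delta^\I$.

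For the ``only if'' direction I will argue by contraposition via the chase. Assume none of the three conditions holds. I construct the KB $\KB=\tuple{\T,\A}$ with $\A=\{\cass{a}{B_1},\cass{a}{B_2}\}$, where $a$ is a fresh individual. An assertion of the form $\cass{a}{\exists R}$ is encoded as $ra(R,a,c)$ with a fresh $c$; equivalently, I can use the reduction~(\ref{subsreduction}) with fresh concept $A$ and the axioms $A\subs B_1$, $A\subs B_2$ added to $\T$, which keeps us in \dllitecoreH. It then suffices to show $\KB$ is satisfiable, since a model of $\KB$ witnesses $\T\not\models B_1\subs\neg B_2$. By Propositions~\ref{chasecorA} and~\ref{chasecor}, satisfiability is equivalent to $\mathtt{DB}(\A)$ being a model of $\clncH(\T)$, the NI-closure of the TBox actually used. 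So I need to show that no NI $D_1\subs\neg D_2$ in that closure has both $\cass{x}{D_1}$ and $\cass{x}{D_2}$ occurring in $\A$, for any individual $x$.

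The key step is relating the NI-closure of the extended TBox to $\clncH(\T)$. The added axioms are PIs with a fresh left-hand side $A$. Inspecting the rules, the only new NIs generated have the form $A\subs\neg D$, obtained via \textbf{$\cHTrans$} from $B_i\subs\neg D\in\clncH(\T)$, together with their contrapositives. Their existence never creates new NIs among the original basic concepts. So I will check the occurrences in $\mathtt{DB}(\A)$ directly. At $a$, the occurring basic concepts are $A$, $B_1$, $B_2$ (or the corresponding $\exists R$'s). A violated NI would have to be one of $B_1\subs\neg B_2$ (or its contrapositive $B_2\subs\neg B_1$), $B_1\subs\neg B_1$, $B_2\subs\neg B_2$, or $A\subs\neg A$. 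The last is only derivable from $A\subs\neg B_i$, which needs $B_i\subs\neg B_j\in\clncH(\T)$ for some $i,j$. All of these are excluded by the hypothesis, using closure under \textbf{$\cHContr$}. At a fresh $c$ introduced for $\exists R$, the only occurring concept is $\exists R^-$. A violation there would need $\exists R^-\subs\neg\exists R^-\in\clncH(\T)$, and the appeal to \textbf{$\RcH_3$} is where care is needed: applying $\RcH_3$ to $\exists R^-$ gives $\exists R\subs\neg\exists R$ with $\exists R=B_i$, contradicting the failure of case 2 or 3.

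The main obstacle is this bookkeeping: verifying that the closure rules applied to the extended TBox generate no unexpected NIs among the $B_i$, and handling the situation where $B_1,B_2$ are existential concepts over the same or related roles. I would handle it by an induction on the derivation in the closure, showing that every derived NI not mentioning $A$ is already in $\clncH(\T)$. As a fallback, Proposition~\ref{prop_calvanese07_lemma10_coroll13} (completeness of the NI-closure for NIs) can be invoked to short-circuit parts of the argument, provided I can show that entailment reduces to membership up to the trivial-emptiness cases 2 and 3.
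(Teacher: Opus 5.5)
Your proof is correct and follows the same route as the paper. The ``if'' direction is soundness of the closure rules. For the converse, the paper takes the ABox $\{\cass{a}{B_1},\cass{a}{B_2}\}$, observes that $\mathtt{DB}(\A)$ can only violate the four excluded NIs, and uses Proposition~\ref{chasecorA} to get $\cancH(\KB)$ as a model of $\T$ that falsifies $B_1\subs\neg B_2$. The paper writes out only the concept-name case and calls the existential cases analogous, so your explicit treatment of the fresh role-successors via $\RcH_3$ supplies exactly what it leaves implicit. The detour through a fresh concept $A$ and the closure of the extended TBox is unnecessary.
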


\subsubsection{Chase for \dllitehornH} \label{chasehorn}

\nd We extend the previous results  to \dllitehornH. The \emph{chasing} procedure is almost identical to the \dllitecoreH~version (it corresponds to Definition 3.5 in \cite{Calvanese13}, constrained and reformulated for \dllitehornH). 

We start by setting 
\[
\chasehH_0(\KB)=\A \ ,
\]

\nd and we proceed with the following $(\mathbf{\ChH_i})$ rules that extend the $(\mathbf{\CcH_i})$ rules to \dllitehornH:

{
\begin{description}
    \item[$(\mathbf{\ChH_1})$] if $\{B_1,\ldots,B_n\} \subs A \in \T$, $\cass{a}{B_k}$ occurs in $\chasehH_i(\KB)$ for every $k$ ($1\leq k\leq n$) (condition $f$), and $\cass{a}{A}$ does not occur in $\chasehH_i(\KB)$, then let $\chasehH_{i+1}(\KB) = \chasehH_i(\KB)\cup\{\cass{a}{A}\}$ (condition $f_{new}$);\\
    
    \item[$(\mathbf{\ChH_2})$] if $\{B_1,\ldots,B_n\} \subs \exists R \in \T$, $\cass{a}{B_k}$ occurs in $\chasehH_i(\KB)$ for every $k$ ($1\leq k\leq n$) (condition $f$),  
    and  $\cass{a}{\csome R}$ does not occur in $\chasehH_i(\KB)$,
    then let $\ chasehH_{i+1}(\KB) = \chasehH_i(\KB)\cup\{ra(R,a,b)\}$, where $b$ is a new individual taken from $\Nanon$ (condition $f_{new}$);\\
    

     \item[$(\mathbf{\ChH_3})$] if $R_1\subs R_2\in\T$, $\rass{a}{b}{R_1}$ occurs in $\chasehH_i(\KB)$ (condition $f$),  and $\rass{a}{b}{R_2}$ does not occur in $\chasehH_i(\KB)$, then let $\chasehH_{i+1}(\KB) = $ \\ $\chasehH_i(\KB)\cup\{\rass{a}{b}{R_2}\}$ (condition $f_{new}$). 
\end{description}
}

\nd At every step, one of the applicable rules is enforced. 
Eventually, 
\[
\chasehH(\KB)\defined\bigcup_{i}\chasehH_i(\KB) \ .
\]

\nd As for \dllitecoreH, we can define the models corresponding to every step in the chasing, denoted as $\canhH_i(\KB)$ and $\canhH(\KB)$.
That is,  as per Definition~\ref{def_DB(A)},
\begin{equation} \label{canmodhorn}
\begin{array}{lcl}
    \canhH_i(\KB) & = & \mathtt{Int}(\chasehH_i(\KB)) \\
    \canhH(\KB) & = & \mathtt{Int}(\chasehH(\KB)) \ .
\end{array}
\end{equation}

\subsubsection{NI-Closure for \dllitehornH} \label{nihorn}
\nd We next move on to the definition of the \emph{NI-closure} for \dllitehornH. 
As far as we know, the NI-closure has not been previously defined for \dllitehornH~and, as we will need it later on, we add it here. For a \dllitehornH~TBox $\T$ we propose the following modified version of the \emph{NI}-closure, that we denote as $\clnhH(\T)$. In the rules below, $CL$ indicates a finite set of basic concepts.

\begin{description}

    \item[($\mathbf{\hHRef}$)] all NI's in $\T$ are in $\clnhH(\T)$;\\

     \item[($\mathbf{\hHContr}$)] {\bf IF} $CL \cup\{B_1\}  \subs \neg B_2 \in \clnhH(\T)$ 
     {\bf THEN}  $CL \cup\{B_2\}  \subs \neg B_1 \in \clnhH(\T)$;\\


    \item[($\mathbf{\hHTrans}$)] {\bf IF} $CL \subs B_1 \in \T$ {\bf AND} 
    $CL' \cup\{B_1\}  \subs \neg B_2 \in \clnhH(\T)$ 
    {\bf THEN}  $CL \cup CL' \subs \neg B_2 \in \clnhH(\T)$;\\



    \item[($\mathbf{\RhH_1}$)]  
    {\bf IF} $R_2 \subs R_1 \in \T$ {\bf AND} 
    $CL \cup \{\exists R_1\}  \subs \neg B \in \clnhH(\T)$ 
    {\bf THEN}  $CL \cup \{\exists R_2 \}  \subs \neg B  \in \clnhH(\T)$;\\


    \item[($\mathbf{\RhH_2}$)]  {\bf IF} $R_2 \subs R_1 \in \T$ 
    {\bf AND}  
    $CL \cup \{\exists R_1^-\}  \subs \neg B \in \clnhH(\T)$ 
    {\bf THEN}  $CL \cup \{\exists R_2^- \}  \subs \neg B \in \clnhH(\T)$;\\


    \item[($\mathbf{\RhH_3}$)]  {\bf IF} 
     $\exists R \subs \neg \exists R \in \clnhH(\T)$ 
    {\bf THEN}  $\exists R^- \subs \neg \exists R^-\in \clnhH(\T)$.





\end{description}

\nd Now, the following proposition can be shown.

\begin{proposition}[\dllitehornH~analogue of Theorem 15 in~\cite{Calvanese07}]\label{Theo15calv07}
     Let $\KB=(\T,\A)$ be a \dllitehornH~KB. Then $\KB$ is satisfiable iff the interpretation $\mathtt{DB}(\A)$ is a model of $(\clnhH(\T),\A)$.
\end{proposition}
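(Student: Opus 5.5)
The plan is to follow the architecture of the \dllitecoreH{} proof in~\cite{Calvanese07}, i.e.\ the Horn analogues of Lemma~12 and Proposition~\ref{chasecor}, routing everything through the chase $\canhH(\KB)$. I will establish three facts: (a) $\canhH(\KB)$ is always a model of $(\T_{PI},\A)$; (b) $\KB$ is satisfiable iff $\canhH(\KB)$ is a model of $\KB$; (c) $\canhH(\KB)$ satisfies all NIs of $\T$ iff $\mathtt{DB}(\A)$ is a model of $(\clnhH(\T),\A)$. Combining (b) and (c) gives the statement, since $\mathtt{DB}(\A)\sat\A$ trivially.

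For (a), I will use a fairness argument. Since some applicable rule is always enforced, every trigger of $\ChH_1$--$\ChH_3$ is eventually satisfied in the limit. Hence every PI, including Horn PIs $\{B_1,\ldots,B_n\}\subs C$ and role inclusions, holds in $\mathtt{Int}(\chasehH(\KB))$.

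For (b), the ``if'' direction is immediate. For ``only if'', I take any model $\I$ of $\KB$ and build by induction on $i$ a homomorphism $\psi_i:\chasehH_i(\KB)\to\I$ that extends the identity on $\N_\A$. For $\ChH_1$ and $\ChH_3$ the homomorphism is preserved because $\I$ satisfies the applied axiom. For $\ChH_2$, the new anonymous $b$ is mapped to some $R$-successor of $\psi_i(a)$ in $\I$, which exists because $\psi_i(a)\in\bigcap_k B_k^\I$ and $\I$ satisfies $\{B_1,\ldots,B_n\}\subs\exists R$. Now suppose some NI $CL\subs\neg B$ is violated in $\canhH(\KB)$ at some $x$. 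Since basic concepts, including $\exists R$, are preserved under homomorphisms, $\psi(x)$ violates the same NI in $\I$, which is a contradiction.

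Step (c) is the main obstacle. For ``if'', I argue by contraposition. Suppose an NI $CL\subs\neg B$ of $\T$ is violated in the chase at some element $x$. I trace back the derivation of each assertion $\cass{x}{B_j}$, for $B_j\in CL\cup\{B\}$, to facts of $\A$. By induction on the chase step at which the violation first appears, I show that some NI in $\clnhH(\T)$ is violated in $\mathtt{DB}(\A)$ at an ABox individual. The cases are as follows.
\begin{itemize}
\item If the last step producing $\cass{x}{B_j}$ was a $\ChH_1$ application via $CL'\subs B_j$, then replacing $B_j$ by $CL'$ is exactly $\hHTrans$, after first moving $B_j$ to the left using $\hHContr$ if $B_j$ was the negated concept.
\item If the last step was a role-inclusion step $\ChH_3$ via $R_2\subs R_1$, it is handled by $\RhH_1$ or $\RhH_2$.
\item If $x$ is anonymous, created by $\ChH_2$ from $a$ via $R$, then $x$ satisfies only concepts entailed from $\exists R^-$ through PIs and role inclusions. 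So the violated NI collapses, via $\hHTrans$, $\RhH_{1,2}$ and $\hHContr$, to $\exists R^-\subs\neg\exists R^-$ in $\clnhH(\T)$. Then $\RhH_3$ gives $\exists R\subs\neg\exists R$, and I pull back to $a$ and, by induction, to an ABox individual.
\end{itemize}
The delicate bookkeeping is the multi-premise case. When $CL$ has several members derived at different steps, I will induct on the total derivation length of the tuple of assertions, rewriting one member at a time; each single rewrite is licensed by $\hHContr$ followed by $\hHTrans$.

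The ``only if'' direction of (c) is soundness. Every rule of $\clnhH$ is classically valid, so $\T\models\clnhH(\T)$. If the chase satisfies $\T_{NI}$, then by (a) it is a model of $\KB$. Hence it satisfies $\clnhH(\T)$, and so does its restriction to $\N_\A$, which contains $\mathtt{DB}(\A)$. This restriction argument holds provided the NIs are evaluated at ABox individuals using occurrence in $\A$, matching the ``occurs in'' reading of Proposition~\ref{propCalvanese0607}. I need to check this alignment carefully, because occurrence in $\A$ versus in the chase can differ for $\exists R$. I expect to resolve it by noting that any violation at an ABox individual in the chase reduces, by the same backward induction, to one witnessed in $\A$ itself.
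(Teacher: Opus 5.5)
Your proposal is correct and is essentially the paper's proof. The statement comes from combining the Horn analogues of Lemmas~7, 12 and~14 of Calvanese et al.\ (Lemmas~\ref{lemma_lemma7calv07}, \ref{lemma_lemma12calv07} and~\ref{Lemma14calv07}, which are your (a), (c) and (b)), and there are only two differences. First, you prove (b) by mapping the whole chase homomorphically into an arbitrary model, whereas the paper uses an NI of $\clnhH(\T)$ already violated in $\mathtt{DB}(\A)$. Second, you run the hard direction of (c) as backward rewriting, whereas the paper uses a forward induction with the invariant that every $\canhH_i(\KB)$ satisfies $\clnhH(\T)$; that invariant is the tidier way to handle your bookkeeping, including the fresh anonymous individual, which you correctly dispatch with ($\mathbf{\RhH_3}$).

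Two small remarks. Fairness in (a) has to be assumed of the rule-application strategy (this is Proposition~\ref{prop_prop6calv07}); it does not follow merely from some applicable rule always being enforced. Your closing alignment worry is unfounded: since $\A\subseteq\chasehH(\KB)$ and $\exists R$ is monotone in role extensions, each basic concept's extension in $\mathtt{DB}(\A)$ is a subset of its extension in $\canhH(\KB)$, so NI-satisfaction transfers downward directly and no backward induction is needed.
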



\nd Eventually, we can prove the \dllitehornH~analogue of Proposition~\ref{entailmentcor}.

\begin{proposition}[\dllitehornH~analogue of Proposition~\ref{entailmentcor}] \label{entailmenthornH}
  Consider a \dllitehornH~TBox $\T$. Then $\T \models CL \impc \notc B$ iff one of the following holds:
  \begin{enumerate}
        \item $CL' \impc \notc B \in \clnhH(\T)$ with $CL' \subseteq CL$;

    \item $CL' \impc \neg B' \in \clnhH(\T)$ with $CL'\cup\{B'\} \subseteq CL$;
  
     \item $B \impc \notc B \in \clnhH(\T)$.

  \end{enumerate}
\end{proposition}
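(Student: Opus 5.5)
The plan is to prove the two directions separately. Soundness (right to left) comes from soundness of the closure rules. Completeness (left to right) comes from a reduction to KB satisfiability, followed by Proposition~\ref{Theo15calv07} applied to a one-individual ABox.

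For soundness, I first check that every rule of the NI-closure for \dllitehornH{} preserves classical entailment, so that $\T\models\alpha$ for every $\alpha\in\clnhH(\T)$. ($\mathbf{\hHContr}$) is sound because $CL\dland B_1\dland B_2\subs\bot$ is symmetric in $B_1,B_2$. ($\mathbf{\hHTrans}$) is sound by chaining $CL\subs B_1$ into the premise. ($\mathbf{\RhH_1}$) and ($\mathbf{\RhH_2}$) are sound because $\exists R_2\subs\exists R_1$ and $\exists R_2^-\subs\exists R_1^-$ hold whenever $R_2\subs R_1$. ($\mathbf{\RhH_3}$) is sound because $\exists R$ is empty iff $\exists R^-$ is empty. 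With this in hand, each of the three conditions implies $\T\models CL\subs\neg B$. In case 1 we just enlarge the left-hand side. In case 2, $CL$ contains $CL'\cup\{B'\}$, which is already unsatisfiable, so $CL\subs\neg B$ holds vacuously. In case 3, $B$ is empty in every model.

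For completeness, assume $\T\models CL\subs\neg B$ and take a fresh individual $a$. I build the ABox $\A_{CL,B}$ containing $a$ as an instance of every concept in $CL\cup\{B\}$. A concept name $A$ is realised as $\cass{a}{A}$. A concept $\exists R$ is realised as $ra(R,a,b_R)$ with a fresh individual $b_R$, and a distinct $b_R$ is used for each occurrence. The KB $\tuple{\T,\A_{CL,B}}$ is then unsatisfiable, since $a$ would have to lie in $CL\dland B$. By Proposition~\ref{Theo15calv07}, $\mathtt{DB}(\A_{CL,B})$ violates some NI $CL''\subs\neg B''$ in $\clnhH(\T)$. So some individual $c$ occurs in $\A_{CL,B}$ as an instance of every concept in $CL''\cup\{B''\}$.

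The next step is a case analysis on $c$. If $c=a$, then every concept asserted on $a$ lies in $CL\cup\{B\}$, so $CL''\cup\{B''\}\subseteq CL\cup\{B\}$. If $B\notin CL''\cup\{B''\}$, we are in case 2 with $B'=B''$. If $B''=B$, we are in case 1. If $B\in CL''$ with $B\neq B''$, one application of ($\mathbf{\hHContr}$) moves $B$ to the right-hand side and gives case 1. If $c=b_R$, then $c$ only satisfies $\exists R^-$, so the violated NI must be $\exists R^-\subs\neg\exists R^-$ (after collapsing repeated conjuncts). Applying ($\mathbf{\RhH_3}$) yields $\exists R\subs\neg\exists R$, which reduces to the earlier cases, since $\exists R\in CL\cup\{B\}$. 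If $\exists R=B$, this is case 3. Otherwise $\exists R\in CL$, and this is case 2 with $CL'=\emptyset$ and $B'=\exists R$. Case 2 with an empty $CL'$ also covers any NI whose left-hand side has a single element.

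I expect the main obstacle to be the bookkeeping over sets. The rules treat left-hand sides as sets, so a concept may appear twice and premises may coincide. The clean way is to argue through the ``occurs in'' notion and to check that the witness found via Proposition~\ref{Theo15calv07} really is only $a$ or some $b_R$. If that proposition is only available for ABoxes over concept names, I would first expand $\A_{CL,B}$ using (\ref{pabtrans}), which adds fresh names $A_a\subs\exists R$ to $\T$. In that case I must also argue that the new NI derived for the extended TBox projects back to one for $\T$. This projection holds because the fresh names appear only as left-hand sides of PIs, so every rule application using them can be replayed with $\exists R$ in their place.
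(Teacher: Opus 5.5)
Your route is the same as the paper's. Soundness follows from the closure rules being sound. For completeness you use a one-individual counterexample ABox, the characterisation of satisfiability through $\mathtt{DB}(\A)$ and $\clnhH(\T)$, and a case split on the violated NI. The paper applies Lemma~\ref{lemma_lemma12calv07} to $\canhH(\KB)$ where you apply Proposition~\ref{Theo15calv07}, which amounts to the same thing. You are more careful than the paper about NIs violated at the fresh fillers $b_R$, which its proof passes over, and you reduce them correctly via $(\mathbf{\RhH_3})$. Two small corrections there:
\begin{itemize}
\item When $\exists R\subs\neg\exists R$ holds with $\exists R\in CL$, this is condition 2 with $CL'=\{\exists R\}$ and $B'=\exists R$, not with $CL'=\emptyset$; NIs in the closure never have an empty left-hand side.
\item The fallback through (\ref{pabtrans}) is unnecessary, since Proposition~\ref{Theo15calv07} already covers role assertions.
\end{itemize}

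The real gap is in the case $c=a$. The claim ``if $B''=B$, we are in case 1'' is false when $B\in CL''$ and $B\notin CL$, because then $CL''\not\subseteq CL$. For example, take $\T=\{A\subs\neg A\}$, $CL=\{C\}$ and $B=A$. The only violated NI is $A\subs\neg A$, which is condition 3, not condition 1. So condition 3 is not produced only by the anonymous witnesses.

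To close the gap, write $CL''=CL^*\cup\{B\}$ with $B\notin CL^*$, so that $CL^*\subseteq CL$.
\begin{itemize}
\item If $CL^*=\emptyset$, you are in condition 3.
\item Otherwise pick $C\in CL^*$ and apply $(\mathbf{\hHContr})$ twice. From $(CL^*\setminus\{C\})\cup\{B\}\cup\{C\}\subs\neg B$ you get $(CL^*\setminus\{C\})\cup\{B\}\subs\neg C$; the two copies of $B$ collapse because left-hand sides are sets. From this you get $CL^*\subs\neg B$, which is condition 1.
\end{itemize}
These are exactly the paper's sub-cases a.2 and a.3. With them added, your argument is complete.
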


\section{Defeasible \dllite} \label{defdllite}

\noindent\fbox{\begin{minipage}{0.975\textwidth}
\textsc{Overview:} 
 This section introduces defeasible reasoning in \dllite. In particular, we present two versions of rational closure (RC) that are optimised for \dllitecoreH~and \dllitehornH, respectively.
  \begin{itemize}
      \item Section~\ref{sect_RCNOABox} considers the case without ABox.
        \begin{itemize}
            \item Section \ref{sect_RCsemantics} recalls from \cite{BritzEtAl2021} the semantic construction.
            \item Section \ref{sect_RC_horn} introduces the procedures for deciding RC in \dllitehornH. The procedures are novel, but inspired by the ones for defeasible $\ELbot$ from \cite{Casini19}.
            \item Section \ref{sect_RC_core} introduces the procedures for deciding RC in \dllitecoreH. The procedures are novel and ad-hoc for defeasible \dllitecoreH.
        \end{itemize}

        \item Section~\ref{sect_RC_ABox} addresses the case with ABox. It shows how to solve the satisfiability problem. A major results is that  \dllitehornH~and, thus, \dllitecoreH, is not affected by the multiple-extension problem, which is typical for defeasible DLs. This results is fundamental for addressing then the conjunctive query answering problem  via query rewriting, which will be addressed later on in Section~\ref{sect_QRW}.
%
  \end{itemize}
\end{minipage}}
\vspace*{5mm}

\nd In this section we introduce defeasible inclusion axioms in the context of~$\dllite$. Our extension follows the line of  
well-established approaches to defeasible DLs such as~\cite{BritzEtAl2021,CasiniStraccia2010,CasiniStraccia2013,Casini19}.
Specifically, a \emph{Defeasible Inclusion Axiom} (DI) in \dllitehornH~has the form
\begin{equation} \label{defeasibleinclH}
\{B_1,\ldots, B_n\}\dsubs A \ ,
\end{equation}
where each $B_i$ is a basic concept and $A$ is a concept name, and may be read as ``usually, an instance of the conjunction of the concepts $B_1,\ldots, B_n$ is an instance of $A$''. For defeasible \dllitecoreH, we restrict the form of the DIs to a single basic concept on the LHS, that is, 
\begin{equation} \label{defeasibleinclC}
B\dsubs A \ ,
\end{equation}

\nd in line with the expressivity of classical \dllitecoreH. 
A \emph{DBox}~$\D$ is a finite set of such DIs. Hence, from now on, a  \emph{defeasible KB} is a tuple $\KB\defined\tuple{\T,\D,\A}$, where one or more elements (but not all) may be omitted, in which case the omitted sets are considered empty.

\begin{remark} \label{negdef}
    Note that, in the following, a DI inclusion such as $\{B_1,\ldots,B_n\}\dsubs C$, with $C$ being a complex concept, has to be considered instantiated via the DI $\{B_1,\ldots,B_n\}\dsubs A$ in the DBox and the IN 
    $A\subs C$ in the TBox, where $A$ is a new concept name. In the following, whenever we write 
    $\{B_1,\ldots,B_n\}\dsubs C$ (or $B\dsubs C$, in case of \dllitecoreH), we assume that such a transformation has been made.
\end{remark}

\nd Concerning the reasoning patterns in the presence of a DBox, there are various entailment relations that we may apply, such as~\cite{BonattiEtAl2015,BonattiEtAl2011,GiordanoEtAl2013}. In this work, we consider \emph{Rational Closure} (RC), a well-known and popular entailment relation  in modelling conditional reasoning~\cite{LehmannMagidor1992,Pearl1990}. 
We chose RC since it is a representative non-monotonic entailment relation. In particular:
\begin{itemize}
    \item its application in various DLs has already been investigated 
    (see, \eg~\cite{Bonatti2019,BritzEtAl2021,CasiniStraccia2010,Casini19,GiordanoEtAl2015});  
    
    \item various interesting defeasible entailment relations have been built as extensions of RC \cite{CasiniHaldimannMeyer2025,CasiniEtAl2014,Casini19b,CasiniStraccia2013,CasiniStraccia2013b,Giordano21,HaldimannBierle2024,Lehmann1995}; and


    \item it can be implemented on top of classical reasoners for DL (see, \eg~\cite{CasiniStraccia2010,CasiniEtAl2015,Casini19}).

\end{itemize}




\nd As next, we are going to introduce the semantic construction, that is essentially the same for all RC-based DLs, and then we present decision procedures that are tailored to \dllitecoreH~and \dllitehornH, respectively.

\subsection{Rational Closure without ABox}\label{sect_RCNOABox}

\nd In the following, we are going to consider only defeasible KBs without the ABox (\ie~the ABox is empty). That is, all KBs will have the form $\KB=\tuple{\T,\D}$. We will address the ABox later on, in Section~\ref{sect_RC_ABox}.

\subsubsection{Rational Closure - Semantics}\label{sect_RCsemantics}

\nd There are various equivalent semantic characterisation of RC for 
DLs (see, \eg~\cite{Bonatti2019,BritzEtAl2021,GiordanoEtAl2015,Casini19}). Here we will refer to the one presented in \cite{BritzEtAl2015a,BritzEtAl2021} and adapt it to \dllitehornH~(for \dllitecoreH~is sufficient to constrain the expressivity accordingly).\footnote{The semantic construction we present in this section has been originally formulated for the DL $\mathcal{ALC}$, but it is appropriate for all the DLs that satisfy the \emph{disjoint union model property}. A logic satisfies the disjoint model union property if, given any KB $\KB$, the disjoint union of two  models of $\KB$ is still a model of $\KB$. About DLs, this property is usually broken by constructs such as nominals, ABoxes, and the universal role. Since here we consider only   \dllitecoreH~and \dllitehornH~KB~ with an empty ABox,  the disjoint model union property is satisfied. We will see in Section~\ref{sect_RC_ABox} how to start from such a construction  to draw conclusions also in the presence of an ABox. For a semantic characterisation of RC that covers also the DLs not satisfying the disjoint model union property, see~\cite{Bonatti2019}.
} 

This semantic characterisation of RC is based on 
\emph{ranked interpretations}, which are defined next.

\begin{definition}[Ranked Order]\label{Def:Ranked_order}
Given a set $X$, the binary relation $\pref\ \subseteq X\times X$ is a \df{ranked order} if there is a mapping $h_{\RI}:X\longrightarrow\mathbb{N}$ satisfying the following convexity property:
\begin{itemize}
\item for every $i\in \mathbb{N}$, if for some $x\in X$ $h_{\RI}(x)=i$, then, for every $j$ such that $0\leq j<i$, there is a $y\in X$ for which $h_{\RI}(y)=j$,
\end{itemize}
and \st\ for every $x,y\in X$, $x\pref y$ iff $h_{\RI}(x)<h_{\RI}(y)$.
\end{definition}

\begin{definition}[Ranked Interpretation]\label{Def:Ranked_interpretation}
A \df{ranked interpretation} is an interpretation $\RI=\tuple{\Dom^{\RI},\cdot^{\RI},\pref^{\RI}}$ s.t. $\Delta^\RI$ and $\cdot^{\RI}$ are defined as for a classical DL interpretation, and $\pref^{\RI}:\Dom^{\RI}\times \Dom^{\RI}$ is a ranked order.

Given an axiom $\alpha$, we say that $\RI$ \df{satisfies} (\emph{is a model of}) $\alpha$ ($\RI\sat\alpha$) iff 

\begin{itemize}
    \item for $\alpha=\{B_1,\ldots,B_n\}\subs C$, $(B_1^{\RI}\cap\ldots\cap B_n^{\RI})\subseteq C^{\RI}$;
    \item for $\alpha=R_1\subs R_2$, $R_1^{\RI}\subseteq R_2^{\RI}$;
    \item for $\alpha=\cass{a}{A}$, $a^{\RI}\in A^{\RI}$;
    \item for $\alpha=\rass{a}{b}{P}$, $(a^{\RI},b^{\RI})\in P^{\RI}$;
    \item for $\alpha=\{B_1,\ldots,B_n\}\dsubs A$, $\min_{\pref^{\RI}}(B_1^{\RI}\cap\ldots\cap B_n^{\RI})\subseteq A^{\RI}$, where $\min_{\pref^{\RI}}(B_1^{\RI}\cap\ldots\cap B_n^{\RI})=\{x\in (B_1^{\RI}\cap\ldots\cap B_n^{\RI})\mid\not\exists y\in (B_1^{\RI}\cap\ldots\cap B_n^{\RI})\text{ s.t. } y\pref^{\RI} x\}$.
\end{itemize}

\nd A ranked interpretation $\R$ \emph{satisfies} (\emph{is a model of}) a KB $\KB=\tuple{\T,\D}$ iff $\R\sat\alpha$ for every $\alpha\in\T\cup\D$. 
We indicate with $\mathfrak{R}_{\KB}$ the set of all the ranked models of a KB $\KB$.


\end{definition}
 
\nd Informally, a ranked interpretation is a classical DL interpretation on top of which a ranked order $\pref^{\RI}$ is defined. Such an order is interpreted as representing the level of typicality of every object in the domain: $x\pref^{\RI} y$ is read as `$x$ behaves in a more typical way than $y$'.  The condition  $\min_{\pref^{\RI}}(B_1^{\RI}\cap\ldots\cap B_n^{\RI})\subseteq A^{\RI}$ for a DI $\{B_1,\ldots,B_n\}\dsubs A$ indicates that all the most typical instances of all the $B_i$s are also instances of $A$.


\begin{example}[Adapted from~\cite{CasiniStraccia23}]\label{ex_guiding_1}
    Consider the following sentences:
    \begin{itemize}
        \item Drug users make use of some drug.
        \item Terminal patients have been diagnosed with a terminal disease.
        \item People with controlled drug use (\ie, controlled drug users) are drug users.
        \item Young people are usually happy.
        \item  Drug users are usually not happy.
        \item  Drug users are usually young.
        \item  Controlled drug users are usually happy.
        \item  Controlled drug users with a terminal illness are usually not happy.
    \end{itemize}
    
\nd They can be formalised as the following  KB $\KB=\tuple{\T,\D}$:

    \begin{itemize}
        \item $\T=\{Du\subs \exists uses.D, Tp\subs \exists diagn.Ti, CDu\subs Du\}$,
        \item $\D=\{Y\dsubs H, Du\dsubs\neg H,Du\dsubs Y, CDu\dsubs H, \{CDu,  Tp\}\dsubs\neg H\}$,
    \end{itemize}

   \nd where $Du$ is read as `drug user', $D$ as `drug', $Tp$ as `terminal patient', $Ti$ as `terminal illness', $CDu$ as `controlled drug user', $Y$ as `young', $H$ as `happy', $Ti$ as `terminal illness',   the role ${uses}$ as `makes use of', and  the role ${diagn}$ as `diagnosed with'.\footnote{This formalisation is actually a simplification, for the sake of exposition. For example, $Du\dsubs\neg H$ is not a legitimate DI, since we should have only atomic concepts on the right. See Remarks \ref{remexpdllitecore} and \ref{negdef}.} 

    Fig. \ref{figrkmod} shows a ranked model of  $\KB$.
\qed 
\end{example}

\nd Given a ranked order $\pref^{\RI}$, there is only one mapping $h_{\RI}$ that defines it as in Definition \ref{Def:Ranked_order} \cite[Proposition 1]{BritzEtAl2021}, and we call this function $h_{\RI}$ the \emph{characteristic ranking function} of an interpretation $\RI=\tuple{\Dom^{\RI},\cdot^{\RI},\pref^{\RI}}$.
Based on that we can define the notions of \emph{height} and \emph{layer}.

\begin{definition}[Height \& Layers]\label{Def:height}
Given a ranked interpretation $\RI=\tuple{\Dom^{\RI},\cdot^{\RI},\pref^{\RI}~}$, its characteristic ranking function $h_{\RI}(\cdot)$, and an object $x\in\Dom^{\RI}$, $h_{\RI}(x)$ is called the \df{height} of~$x$ in~$\RI$. The \emph{height}  of an individual $a$ \wrt~$\RI$ is the height of $a^{\RI}$, \ie~$h_{\RI}(a)=h_{\RI}(a^{\RI})$. 
For every ranked interpretation $\RI=\tuple{\Dom^{\RI},\cdot^{\RI},\pref^{\RI}}$, we can partition the domain~$\Dom^{\RI}$ into \emph{layers} $L_{0},\ldots,L_{n},\ldots$, where, for every object $x\in\Dom^{\RI}$, we have $x\in L_{i}$ iff $h_{\RI}(x)= i$.
The height of a concept $C$ corresponds to the height of its minimal elements. That is, $h_{\R}(C)=i$ iff $\min_{\prec^\R}(C^\R)\subseteq L_{i}$. If $C^\R=\emptyset$, then $h_{\R}(C)=\infty$.
\end{definition}

\begin{figure}
    \centering
    \includegraphics[width=1\linewidth]{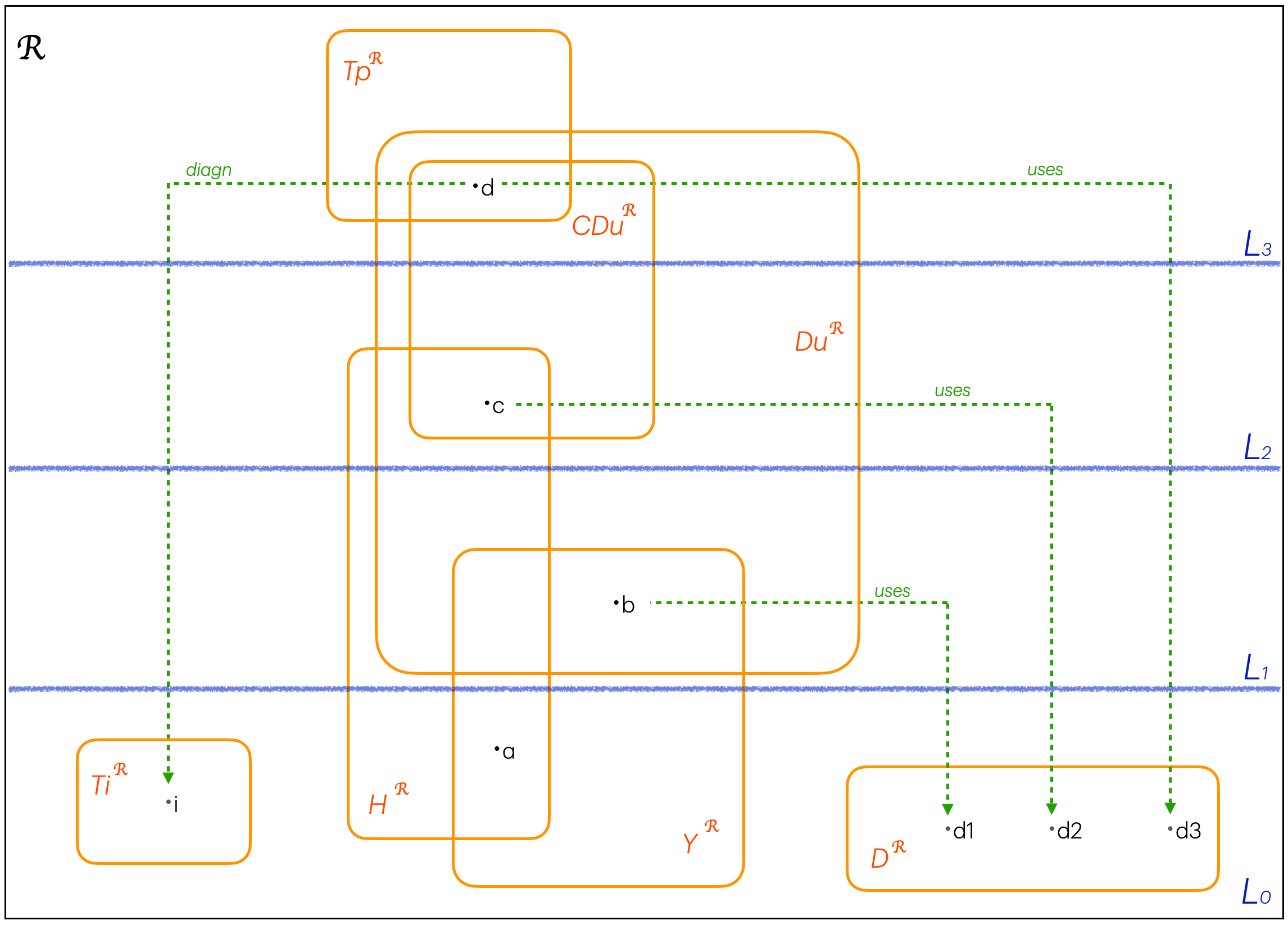}
    \caption{ a ranked model satisfying the KB in Example~\ref{ex_guiding_1}.}
    \label{figrkmod}
\end{figure}

\nd Please note  also that for a defeasible KB $\KB=\tuple{\T,\D}$,  the analogue of Proposition~\ref{tsat} holds.

\begin{proposition} \label{dsat}
    Any defeasible \dllitehornH~KB~$\KB=\tuple{\T,\D}$ is satisfiable.
\end{proposition}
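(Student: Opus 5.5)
The plan is to mirror the proof of Proposition~\ref{tsat}: exhibit one concrete ranked interpretation that satisfies every axiom of $\T\cup\D$. Since the ABox is empty, there are no assertions to satisfy, so the only requirements are the classical inclusions in $\T$ and the defeasible inclusions in $\D$. Roughly speaking, if every concept name and role name is empty, then every defeasible axiom is vacuously satisfied, because its left-hand side has no minimal elements.

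Concretely, I would take $\Dom^{\RI}=\{o\}$, a single object, and set $A^{\RI}=\emptyset$ for every concept name $A$ and $P^{\RI}=\emptyset$ for every role name $P$. Individuals play no role here, since there is no ABox. The order $\pref^{\RI}$ is the empty relation. It is a ranked order via $h_{\RI}(o)=0$, and convexity holds trivially because only height $0$ is used.

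Next I would check each kind of axiom:
\begin{itemize}
\item Every basic concept $B$ is interpreted as the empty set: $A^{\RI}=\emptyset$ by construction, and $(\exists R)^{\RI}=\emptyset$ because all roles are empty, inverses included.
\item A classical inclusion $\{B_1,\ldots,B_n\}\subs C$ therefore has an empty left-hand side, so it holds whatever $C^{\RI}$ is, even when $C$ is a negated concept.
\item A role inclusion $R_1\subs R_2$ holds because $\emptyset\subseteq\emptyset$.
\item For a defeasible inclusion $\{B_1,\ldots,B_n\}\dsubs A$, the intersection $B_1^{\RI}\cap\ldots\cap B_n^{\RI}$ is empty, so its set of $\pref^{\RI}$-minimal elements is empty and trivially contained in $A^{\RI}$.
\end{itemize}
If complex right-hand sides have been compiled into $\T$ via new concept names, as in Remark~\ref{negdef}, those extra axioms are still classical inclusions with an empty left-hand side, so they are also satisfied.

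I expect no real obstacle here. The only points needing care are that the domain must be nonempty, which is why I use a single object rather than an empty domain, and that the ranked-order condition of Definition~\ref{Def:Ranked_order} is actually met. Both are immediate from the construction. Alternatively, one could take any classical model of $\T$, which exists by Proposition~\ref{tsat}, and equip it with the flat ranking $h_{\RI}\equiv 0$. However, that route would require the $\dsubs$-axioms to hold classically, so the empty-interpretation construction is the right choice.
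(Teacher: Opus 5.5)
Your construction is the paper's own argument: the paper's proof also interprets every concept name and role name as the empty set. Then every left-hand side of a classical, role or defeasible inclusion is empty, so all of $\T\cup\D$ holds vacuously.

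One detail needs fixing: your remark that individuals play no role. In the paper, a (ranked) interpretation must map every individual of the fixed finite set $\N$ into the domain under the unique name assumption. A single-object domain therefore fails to be an interpretation as soon as $\N$ contains two individuals. Take instead, e.g., $\Dom^{\RI}=\N\cup\{o\}$ with $a^{\RI}=a$ and every element at height $0$; the paper leaves the domain unspecified, so any domain with at least $\max(1,|\N|)$ elements works. The rest of your verification, including the ranked-order check, goes through unchanged.
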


\nd Given a KB $\KB=\tuple{\T,\D}$, the basic form of reasoning that we can define is the classical Taskian one, that is, an entailment relation that is based on \emph{all} the ranked model of $\KB$.

\begin{definition}[Ranked Entailment (RE); \cite{Casini19}, Definition 3]\label{def_rankedentail}
    Given a KB $\KB = \tuple{\T, \D}$ and an inclusion $\{B_1,\ldots,B_n\} \ \star \ C$ (where $\star\in\{\subs,\dsubs\}$), $\KB$ \emph{rationally entails}  $\{B_1,\ldots,B_n\} \ \star \ C$ (denoted $\KB\rationalent \{B_1,\ldots,B_n\} \ \star \ C$) iff $\forall \R\in\mathfrak{R}_{\KB}$, $\R\sat \{B_1,\ldots,B_n\} \ \star \ C$.
\end{definition}

\nd It is well-known that RE  has some limitations. In particular it is too weak from the inferential point of view \cite[Section 4]{BritzEtAl2021}. RC is an entailment relation that extends RE, allowing us to overcome such limitations, and it is based on a notion of \emph{exceptionality}, which we define next. 

\begin{definition}[Exceptionality; \cite{Casini19}, Definition~4]\label{Def:Exceptionality}
Let $\KB$ be a defeasible KB and $C$ a complex concept. We say~$C$ is \df{exceptional} \wrt~$\KB$ if  for every ranked model $\R$ of $\KB$,
\[
C^\R\cap\min_{\R}(\Delta^\R)=\emptyset \ .
\]

\nd That is, $C^\R\cap L_0= \emptyset $. A DI $B\dsubs A$ is \df{exceptional} \wrt~\KB\ if  $B$ is exceptional in~$\KB$. 


Analogously, any finite set of  concepts~$\{C_1,\ldots,C_n\}$ is \df{exceptional} \wrt~$\KB$ if 
\[
C_1^\R\cap\ldots\cap C_n^\R\cap\min_{\R}(\Delta^\R)=\emptyset \ .
\]

\nd A DI $\{B_1,\ldots,B_n\}\dsubs A$ is \df{exceptional} \wrt~\KB\ if  $\{B_1,\ldots,B_n\}$ is exceptional in~$\KB$. 

\end{definition}

\nd That is, a concept $C$ is \emph{exceptional} \wrt~$\KB$ if there is no model of $\KB$ in which a typical object (that is, an object in $L_0$) is in $C^{\RI}$.  The same notion of exceptionality is extended to a set of  concepts $\{C_1,\ldots,C_n\}$: the set $\{C_1,\ldots,C_n\}$ is exceptional \wrt~$\KB$ if there is no model of $\KB$ in which a typical object (that is, an object in $L_0$) is in $(C_1^\R\cap\ldots\cap C_n^\R)$.

By applying the notion of exceptionality iteratively, we can associate to every set $\{C_1,\ldots,C_n\}$ a \emph{rank} w.r.t.~$\KB$, which we denote by $\rank_{\KB}(\{C_1,\ldots,C_n\})$. That is, 

\begin{enumerate}
\item $\rank_{\KB}(\{C_1,\ldots,C_n\})=0$ if $\{C_1,\ldots,C_n\}$ is not exceptional w.r.t.~$\KB$. 
Moreover, $\rank_{\KB}(\{B_1,\ldots,B_n\}\dsubs A)=0$ if $\rank_{\KB}(\{B_1,\ldots,B_n\})=0$. 
The set of DIs in~$\D$ with rank~$0$ is denoted as $\D^\rank_{0}$;

\item let $\KB^1=\tuple{\T,\D\setminus\D^\rank_{0}}$; that is, $\KB^1$ is composed of $\T$~and the exceptional part of~$\D$. Then
$\rank_{\KB}(\{C_1,\ldots,C_n\})=1$ if $\{C_1,\ldots,C_n\}$ does not have rank~$0$ and $\{C_1,\ldots,C_n\}$ is not exceptional in the knowledge base $\KB^1$. 
Moreover, $\rank_{\KB}(\{B_1,\ldots,B_n\}\dsubs A)=1$ if $\rank_{\KB}(\{B_1,\ldots,B_n\})=1$.
The set of DIs  in~$\D$ with rank $1$ is denoted $\D^\rank_{1}$;

\item in general, for $i>0$, let $\KB^{i}=\tuple{\T,\D\setminus\bigcup_{j=0}^{i-1}\D^{\rank}_{j}}$. Then
$\rank_{\KB}(\{C_1,\ldots,C_n\})=i$ if $\{C_1,\ldots,C_n\}$ does not have a rank strictly lower than $i$ and is not exceptional in~$\KB^{i}$. The set of DIs  in~$\D$ with rank $i$ is denoted $\D^\rank_{i}$;

\item by iterating the previous step, we eventually reach a (possibly empty) subset $\E\subseteq\D$ such that all the DIs in~$\E$ are exceptional (since~$\D$ is finite, we must reach such a point). 
We define the rank of the DIs in $\E$ (if there are any) as~$\infty$, and the set~$\E$ is denoted $\D^\rank_\infty$. 
Moreover, we set $\rank_{\KB}(\{B_1,\ldots,B_n\})=\infty$ for every~$\{B_1,\ldots,B_n\}$ in the LHS (Left-Hand-Side) of some DI  in~$\D^\rank_\infty$. The concepts ranked $\infty$ are the concepts that cannot be satisfied in any model of the KB.
\end{enumerate}

\begin{example}[Example \ref{ex_guiding_1} cont'd]\label{ex_guiding_2}
    We take under consideration the antecedents of the DIs in $\D$: $Y, Du, CDu, \{CDu, Tp\}$. In Figure \ref{figrkmod} we have an example of a model $\R$ satisfying the KB $\KB$, where $Y^\R \cap L_0\neq\emptyset$. Hence the concept $Y$ is not exceptional w.r.t. $\KB$ and must have rank $0$, since there is at least one model in which a typical object is in the interpretation of $Y$. 
    
    Consider the concept $Du$. In $\R$ $Du^\R \cap L_0=\emptyset$, but  can there be a model $\R'$ of $\KB$ s.t. $Du^{\R'} \cap L_0\neq\emptyset$? Let's assume such an $\R'$ exists, with some object $f\in Du^{\R'} \cap L_0$. $f\in Du^{\R'} \cap L_0$ implies that $f\in \min_{\prec^{\R'}}(Du^{\R'})$. Since $Du\dsubs \neg H\in\KB$ and $Du\dsubs Y\in\KB$, we are forced to conclude that $f\notin H^{\R'}$ and $f\in Y^{\R'}$. The latter, together with $f\in L_0$, implies $f\in \min_{\prec^{\R'}}(Y^{\R'})$, that, together with $Y\dsubs H\in \KB$, implies $f\in H^{\R'}$. Therefore, we must conclude that the object $f$ is and is not in $H^{\R'}$ at the same time, that is an absurdity. Hence it cannot be that there is a model of $\KB$ s.t. $Du^{\R'} \cap L_0\neq\emptyset$, and $Du$ is an exceptional concept.

    Applying the same reasoning, it can be verified that we end up with the following partition of $\D$: $\D^\rank_{0}=\{Y\dsubs H\}$; $\D^\rank_{1}=\{Du\dsubs \neg H, Du\dsubs Y\}$; $\D^\rank_{2}=\{CDu\dsubs  H\}$; $\D^\rank_{3}=\{\{CDu, Tp\}\dsubs \neg H\}$. We anticipate that in the following sections we will introduce appropriate procedures to decide the ranking of any concept.
\qed 
\end{example}

\nd The rank of a concept represents a  simple intuition: the higher the rank of the concept, the more exceptional/less expected an object satisfying it is considered. For instance, the ranking values of the elements of $\D$ in Example \ref{ex_guiding_2} tell us that, according to the information in $\KB$, encountering a young person that is happy is more expected/typical than encountering a young person that is a drug user and is not happy.

RC is an entailment relation that is based on this kind of `measure' of expectations: given a KB $\KB=\tuple{\T,\D}$, we derive $B\dsubs A$ if it is more expected/typical to meet any object satisfying $B$ and $A$ than any object satisfying $B$ and $\neg A$. This principle is encoded in the following definition.

\begin{definition}[Rational Closure; \cite{BritzEtAl2021}, Definition 24]\label{def_rankedentail}
    Given a KB $\KB = \tuple{\T, \D}$, a DI $\{B_1,\ldots,B_n\}\dsubs A$ \emph{is in the RC of} $\KB$ (denoted $\KB\minent \{B_1,\ldots,B_n\}\dsubs A$ ) iff \\
\nd either
    \[
    \begin{array}{lcl}
         \rank_{\KB}(\{B_1,\ldots,B_n,A\}) & < & 
         \rank_{\KB}(\{B_1,\ldots,B_n,\neg A\})  \\
    \end{array}
    \]
    \nd or
\[
    \begin{array}{lcl}
         \rank_{\KB}(\{B_1,\ldots,B_n\}) & = & \infty \ .
    \end{array}
\]
    
    \nd On the other hand, an inclusion $\{B_1,\ldots,B_n\}\subs C$ \emph{is in the RC of} $\KB$ 
    (denoted $\KB\minent \{B_1,\ldots,B_n\}\subs C$) iff 
\[
\begin{array}{lcl}
\rank_{\KB}(\{\{B_1,\ldots,B_n,\neg C\}) & = & \infty  \ ,
\end{array}
\]
\nd where, obviously, $\neg \neg B$ is $B$.
\end{definition}

\nd From the semantics point of view, the intuition behind RC corresponds to taking under consideration only the models of the KB in which all the individuals are interpreted at the lowest possible height, that is, we treat all the individuals as behaving in the most expected/typical way (assuming the KB is satisfied). In such a way we implement the \emph{Presumption of Typicality} \cite{Lehmann1995}: that is, we assume that, compatibly with the available information, everything behaves as typically as possible.


Let us note that we can define a single model that represents such a presumption. First, we need to introduce the notion of \emph{ranked union} and the set $\Moddelta{\KB}$. 


\begin{definition}[Ranked Union; \cite{BritzEtAl2021}, Definition 20] \label{rankedunion}
Given a countable set of ranked interpretations $\mathfrak{R} = \{\RI_1$, $\RI_2,\ldots\}$, a ranked interpretation $\RI^{\mathfrak{R}}\defined\tuple{\Dom^{\mathfrak{R}},\cdot^{\mathfrak{R}},\pref^{\mathfrak{R}}}$ is the \df{ranked union} of $\mathfrak{R}$ if the following holds:
\begin{itemize}
\item $\Dom^{\mathfrak{R}}\defined\coprod_{\RI\in\mathfrak{R}}\Delta^{\RI}$, \ie, the disjoint union of the domains from $\mathfrak{R}$, where each $\RI\in\mathfrak{R}$ has the elements $x,y,\ldots$ of its domain renamed as $x_{\RI}$, $y_{\RI}$, \ldots\ so that they are all distinct in $\Dom^{\mathfrak{R}}$;

\item $x_{\RI}\in A^{\mathfrak{R}}$ iff $x\in A^{\RI}$, for every concept name $A$;

\item $(x_{\RI},y_{\RI'})\in P^{\mathfrak{R}}$ iff $\RI=\RI'$ and $(x,y)\in P^{\RI}$, for every role name $P$;

\item $h_{\mathfrak{R}}(x_{\RI})=h_{\RI}(x)$, for every $x_{\RI}\in \Dom^{\mathfrak{R}}$.
\end{itemize}

\nd The latter condition corresponds to imposing that $x_{\RI}\pref^{\mathfrak{R}}y_{\RI'}$ iff $h_{\RI}(x)<h_{\RI'}(y)$.
\end{definition}

\nd Informally, the ranked union of a set of ranked interpretations is the result of merging all their layers of height $i$ into a single layer of height $i$, for all $i$.


\begin{lemma}\label{Lemma:ClosureDisjointUnionMod}(\cite{BritzEtAl2021}, Lemma 8)
Given a set of ranked models of a defeasible KB $\KB$, their ranked union is itself a ranked model of $\KB$.
\end{lemma}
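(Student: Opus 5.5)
We check directly that the ranked union $\RI^{\mathfrak{R}}$ of a countable set $\mathfrak{R}$ of ranked models of $\KB=\tuple{\T,\D}$ is a ranked interpretation satisfying every axiom of $\T\cup\D$. There are three steps: the ranking is well defined, classical axioms are satisfied, and defeasible inclusions are satisfied.

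\emph{Step 1: the ranking.} We take $h_{\mathfrak{R}}(x_{\RI})=h_{\RI}(x)$ as the candidate characteristic ranking function. Convexity is inherited. If some $x_{\RI}$ has height $i$, then in $\RI$ every height $j<i$ is realised by some $y\in\Delta^{\RI}$, and $y_{\RI}$ realises $j$ in the union. Hence $\pref^{\mathfrak{R}}$ is a ranked order.

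\emph{Step 2: classical axioms.} For every basic concept $B$ we show $x_{\RI}\in B^{\mathfrak{R}}$ iff $x\in B^{\RI}$. For concept names this is the definition. For $\exists R$ it holds because role edges in the union connect only copies from the same $\RI$. The equivalence then extends to conjunctions $\{B_1,\ldots,B_n\}^{\mathfrak{R}}$, to negations $\neg B$, and to role names and their inverses. Consequently each classical inclusion $\{B_1,\ldots,B_n\}\subs C$ or $R_1\subs R_2$ holds in the union, because it holds componentwise in each $\RI$. This is precisely the disjoint union model property, which holds here because there is no ABox.

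\emph{Step 3: defeasible inclusions.} This is the only step needing care. Take $\{B_1,\ldots,B_n\}\dsubs A\in\D$, write $X=B_1\sqcap\cdots\sqcap B_n$, and let $x_{\RI}\in\min_{\pref^{\mathfrak{R}}}(X^{\mathfrak{R}})$. By Step 2, $x\in X^{\RI}$. Suppose some $y\in X^{\RI}$ had $h_{\RI}(y)<h_{\RI}(x)$. Then $y_{\RI}\in X^{\mathfrak{R}}$ and $y_{\RI}\pref^{\mathfrak{R}}x_{\RI}$, contradicting minimality. So $x\in\min_{\pref^{\RI}}(X^{\RI})$, and since $\RI$ is a model of $\KB$, $x\in A^{\RI}$, that is, $x_{\RI}\in A^{\mathfrak{R}}$.

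The key observation in Step 3 is that minimality in the union implies minimality in the component. The converse can fail, but it is not needed. Countability of $\mathfrak{R}$ plays no real role beyond keeping the construction within the intended setting.
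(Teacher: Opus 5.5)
Your proof is correct. It is essentially the standard argument for this lemma, which the paper does not prove itself but imports from Britz et al.: concept and role extensions are preserved componentwise because role edges never cross components, so classical axioms transfer, and the only real point for a defeasible inclusion is that a $\pref^{\mathfrak{R}}$-minimal element of $X^{\mathfrak{R}}$ is already $\pref^{\RI}$-minimal in its own component. The same componentwise argument extends by structural induction to richer concepts (e.g.\ the qualified existentials in $\T_\A$, to which the paper later applies the big ranked model), so nothing beyond what you wrote is needed.
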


\nd Now, let $\KB$ be a defeasible KB and let $\Delta$ be any fixed countably infinite set. We define $\Moddelta{\KB}$ as follows:
\begin{equation}
\Moddelta{\KB} \defined \{\RI=\tuple{\Dom^{\RI},\cdot^{\RI},\pref^{\RI}} \mid \RI\sat\KB, \RI\text{ is ranked and } \Dom^{\RI}=\Delta\} \ .    
\end{equation}

\nd That is, $\Moddelta{\KB}$ contains all the ranked models of $\KB$ that have $\Delta$ as domain.





We  introduce now the definition of the \emph{Big Ranked Model} of a defeasible KB $\KB=\tuple{\T,\D}$.

\begin{definition}[Big Ranked Model; \cite{BritzEtAl2021}, Definition 21]\label{Def:Bigrankedmodel}
Let $\KB=\tuple{\T,\D}$ be a defeasible KB and $\Delta$ be any countable infinite set.  The \df{big ranked model} of \KB~is the ranked model $\OI\defined\tuple{\Dom^{\OI},\cdot^{\OI},\pref^{\OI}}$ that is the ranked union of the models in $\Moddelta{\KB}$.
\end{definition}

\nd $\OI$ is the model characterising the RC of $\KB$, that is, an inclusion $\alpha$ is in the RC of a defeasible KB $\KB=\tuple{\T,\D}$ iff 
\[
\OI\sat \alpha \ ,
\]

\nd where $\OI$ is the \emph{Big Ranked Model} of $\KB$. This correspondence between the big ranked model of a defeasible KB $\KB$ and the classical definition of RC has been proved in Section 5.1 in \cite{BritzEtAl2021}. 
That is,
\begin{proposition}[\cite{BritzEtAl2021}, Theorem 4]\label{prop_minent}
    Let $\KB=\tuple{\T,\D}$ be a defeasible KB and let $\OI$ be its \emph{big ranked model}. A DI $C\dsubs D$ is in the RC of $\KB$ (\ie~$\KB\minent C\dsubs D$) iff $\OI\sat C\dsubs D$.
\end{proposition}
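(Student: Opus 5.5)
The statement is cited from \cite{BritzEtAl2021}. The plan is to reduce it to two facts: a characterisation of $\rank_{\KB}$ in terms of heights in $\OI$, and an unfolding of the satisfaction condition for $\OI$. Throughout, write $C$ for the conjunction $B_1\dlAnd\ldots\dlAnd B_n$ and $D$ for the atomic concept $A$.

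\textbf{Key lemma.} For every finite set of concepts $X$,
\[
\rank_{\KB}(X) = h_{\OI}(X),
\]
where $h_{\OI}(X)=\infty$ if $X^{\OI}=\emptyset$. I would prove this by induction on $i$, with the claim that $h_{\OI}(X)=i$ iff $\rank_{\KB}(X)=i$.

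\begin{itemize}
\item \emph{Base case.} $X$ is non-exceptional iff some ranked model $\R$ of $\KB$ has an object of $X^{\R}$ in layer $L_0$. Using the Löwenheim–Skolem-style step below, we may take $\R\in\Moddelta{\KB}$. Definition~\ref{rankedunion} preserves heights, so by Lemma~\ref{Lemma:ClosureDisjointUnionMod} that object appears in layer $0$ of $\OI$. Conversely, $\OI$ is itself a ranked model of $\KB$, so if $X^{\OI}\cap L_0\neq\emptyset$ then $X$ is non-exceptional.
\item \emph{Inductive step.} I would show that the layers $L_{\geq i}$ of $\OI$, shifted down by $i$, form a ranked model of $\KB^{i}=\tuple{\T,\D\setminus\bigcup_{j<i}\D^\rank_j}$. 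Conversely, any model of $\KB^i$ can be stacked above suitable lower layers to give a model of $\KB$. The base-case argument then transfers to $\KB^i$.
\end{itemize}

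The main obstacle is this inductive step. The shifted structure must satisfy every DI of $\KB^i$: the DIs removed have antecedents already realised below layer $i$, while those kept have minimal instances at height $\geq i$, so their minimal elements are unchanged by the shift. The reverse direction, building a model of $\KB$ from one of $\KB^i$ by prefixing layers, is where I expect the work to lie. I would try prefixing the layers $L_0,\ldots,L_{i-1}$ of $\OI$ itself and then check that the DIs of rank $<i$ remain satisfied. One also needs that $\rank_{\KB}(X)=\infty$ iff $X^{\OI}=\emptyset$, which follows once the finitely many finite ranks are exhausted.

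A technical point is the restriction to the fixed countable domain $\Delta$. By the finite model property of \dllitehornH{} (Remark~\ref{finitedllitehornH}), lifted to ranked models by keeping the ranking, any witness model can be taken countable. It can then be padded to exactly $\Delta$ by adding copies of elements, which preserves the classical axioms, the ranking and all minimal sets.

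\textbf{Conclusion.} Given the lemma, $\OI\sat C\dsubs D$ means that $\min_{\pref^{\OI}}(C^{\OI})\subseteq D^{\OI}$. There are two cases.

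\begin{itemize}
\item If $C^{\OI}=\emptyset$, the inclusion holds trivially, and the lemma gives $\rank_{\KB}(C)=\infty$.
\item Otherwise, the inclusion holds iff no minimal $C$-element lies in $\neg D$. That is, $h_{\OI}(C\dlAnd D)=h_{\OI}(C)<h_{\OI}(C\dlAnd\neg D)$, where the equality $h_{\OI}(C\dlAnd D)=h_{\OI}(C)$ holds because $C^{\OI}$ is the union of $(C\dlAnd D)^{\OI}$ and $(C\dlAnd\neg D)^{\OI}$.
\end{itemize}

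By the lemma, these are exactly the two disjuncts of Definition~\ref{def_rankedentail}, which proves the proposition.
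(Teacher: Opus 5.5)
You should know that the paper gives no proof of this statement; it imports Theorem~4 of \cite{BritzEtAl2021}. So the proposal has to stand on its own. Your architecture is the standard one: identify $\rank_{\KB}(X)$ with $h_{\OI}(X)$, then unfold $\min_{\pref^{\OI}}(C^{\OI})\subseteq D^{\OI}$. The final unfolding into the two disjuncts is correct.

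\textbf{The gap is in the inductive step of your key lemma.} The claim that the layers $L_{\geq i}$ of $\OI$, shifted down by $i$, form a ranked model of $\KB^i$ is false, because truncating the domain deletes role edges that go from upper to lower layers. Take $\T=\{A\subs\exists P\}$. Then $\Moddelta{\KB}$ contains models in which an $A$-element sits at height $1$ and its only $P$-successor sits at height $0$. Hence $\OI$ contains such a pair, and once $L_0$ is removed the $A$-element has no $P$-successor, which violates $\T$.

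For the same reason, extensions of antecedents such as $\exists R$ shrink under truncation. So your check that the minimal instances of the retained DIs are unchanged also fails; it only holds if membership is unchanged. Your tentative converse, prefixing the layers $L_0,\ldots,L_{i-1}$ of $\OI$, breaks symmetrically: a low element may have its only successor at height $\geq i$.

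\textbf{The repair is never to cut the domain.}
\begin{itemize}
\item \emph{Forward direction.} Keep $\OI$ unchanged as a classical interpretation and collapse the ranking, setting $h'(x)=\max(h_{\OI}(x)-i,0)$. Then $\T$ still holds, since nothing classical changed. By the induction hypothesis every antecedent $C'$ of a DI in $\KB^i$ has $h_{\OI}(C')\geq i$. So $C'$ has no instances below height $i$, $h'$ merely shifts its instances, and its minimal instances are unchanged. An $X$-element at height $i$ now sits at height $0$.
\item \emph{Converse direction.} Take a model $\mathcal{M}$ of $\KB^i$ with an $X$-element in its bottom layer. Form the disjoint union of the whole of $\OI$ with $\mathcal{M}$ shifted up by $i$.
  \begin{itemize}
  \item $\T$ holds by the disjoint union model property, which is exactly the hypothesis the paper's remark singles out.
  \item A DI of rank $j<i$ has its minimal instances in $\OI$ at height $j<i$ by the induction hypothesis, strictly below everything coming from $\mathcal{M}$.
  \item For a DI of $\KB^i$, every minimal instance in the union is minimal within its own component, and the DI holds in each component.
  \end{itemize}
\end{itemize}

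A minor point concerns your countability step. The classical finite model property says nothing about DIs or heights, so ``lifting it by keeping the ranking'' is not an argument. Use downward L\"owenheim--Skolem instead: view the ranked model as a first-order structure with $\pref$ as a binary relation, so each DI becomes a first-order sentence. Include one element from each non-empty layer among the parameters, so that heights are preserved in the elementary substructure. Alternatively, use the closure construction from the proof of Proposition~\ref{prop_minheight}.
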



\nd We refer to \cite{BritzEtAl2017,BritzEtAl2021,BritzEtAl2019} for a more in depth  presentation, explaining in detail why the \emph{big ranked model} is a model characterising the RC of $\KB$.

\begin{remark}
    We briefly outline here the path that motivated the use of the big ranked model to characterise RC for DLs with low expressivity like $\mathcal{EL}_\bot$ \cite{Casini19} and the \dllite\ family, that we consider here. RC is an entailment relation that was developed inside the framework of the KLM preferential approach \cite{KrausEtAl1990,LehmannMagidor1992}, that is defined for propositional logic. The KLM foundational results are representation theorems linking \emph{preferential semantics} to a set of \emph{structural properties}. To express such structural properties we require all the classical propositional connectives, and consequently they can be reformulated at the level of DLs  using a language that is at least as expressive as $\mathcal{ALC}$ \cite{BaaderEtAl2007}. The reader can find in \cite{BritzEtAl2021} the reformulation of the KLM structural properties in $\mathcal{ALC}$ and the corresponding representation theorems characterising their satisfaction using  \emph{preferential}  and \emph{modular} interpretations, redefined for the DL framework \cite[Theorems 1 and 2]{BritzEtAl2021}. There, it is also proved that the \emph{big ranked model} is a semantic construction characterising RC for $\mathcal{ALC}$ \cite[Theorem 4]{BritzEtAl2021}, and such a proof applies also to all the languages that extend $\mathcal{ALC}$ and satisfy the \emph{disjoint union model property}, including $\mathcal{ALCHI}$, which extends $\mathcal{ALC}$ with inverted roles and role inclusions. 
    Consequently, we are  fully justified in using such a construction also for the languages of the \dllite~family that are sublanguages of $\mathcal{ALCHI}$, like \dllitecoreH and \dllitehornH, the ones used here.
\end{remark}


\subsubsection{Reasoning Procedures for \dllitehornH~under RC}\label{sect_RC_horn}

\nd We now present the decision procedures for RC tailored for \dllitehornH. The procedures have been obtained by slightly modifying the corresponding ones defined for $\mathcal{EL}_\bot$~\cite{Casini19}. 





We start by showing how to check whether $\KB\minent \alpha$, where $\alpha$ is a DI.
So, consider a $\dllitehornH$ defeasible KB $\KB=\tuple{\T,\D}$, where $\T$ is a $\dllitehornH$ TBox $\T$ and a DBox $\D$ containing DIs of form 
$\{B_1,\ldots,B_n\}\dsubs A$. Let $\delta$ be a new concept name, called \emph{$\delta$-atom}, not occurring in $\KB$. 
The \emph{$\delta$-transformation} of $\KB$ is defined as $\KB^\delta=\T\cup\D^\delta$, where 
\begin{equation}
\D^\delta\defined\{\{B_1,\ldots,B_n,\delta\}\subs A\mid \{B_1,\ldots,B_n\}\dsubs A\in \D\} \ .    
\end{equation}

\nd Whether a DI is in the RC of a defeasible $\dllitehornH$ KB $\KB=\tuple{\T,\D}$ is decided using the procedure $\mathtt{RationalClosure.Horn}$, that in turn is based on the procedures $\mathtt{ComputeRanking.Horn}$ and $\mathtt{Exceptional.Horn}$.

\begin{procedure}[h]
\caption{Exceptional.Horn($\T,\E$)}\label{algexH}
 \KwIn{\dllitehornH~TBox $\T\text{ and set of DIs } \E\subseteq{\D}$}
 \KwOut{$\E'\subseteq\E$ such that $\E'$ is a set  of exceptional axioms \wrt~$(\T,\E)$}
$\E':=\emptyset$\;
$\E^{\delta}=\{\{B_1,\ldots,B_n,\delta\}\subs A\mid \{B_1,\ldots,B_n\}\dsubs A\in \E\}$, where  $\delta$ is a new concept name\;
 \ForEach{$\{B_1,\ldots,B_n,\}\dsubs A\in \E$}{	
   	\If{$\T\cup\E^{\delta}\entails \{B_1,\ldots,B_n,\delta\}\subs \bot$}{$\mathcal{E'}$ := $\mathcal{E'}\cup\{\{B_1,\ldots,B_n\}\dsubs A\}$\;}
}
\Return{$\mathcal{E'}$}
\end{procedure}

 \begin{procedure}[h]
\caption{ComputeRanking.Horn($\KB$)\label{algrankH}}
\KwIn{Defeasible \dllitehornH~KB $\KB=\tuple{\T,\D}$}
\KwOut{Defeasible \dllitehornH~KB $\tuple{\T^*,\D^*}$, partitioning (ranking) $\RR =\{\D_0,\ldots,\D_n\}$ of $\D^*$}
$\T^*$:=$\T$\;
$\D^*$:=$\D$\;
$\RR$:=$\emptyset$\;
\Repeat {$\D_\infty=\emptyset$}
{$i$ := $0$\;
$\mathcal{E}_{0}$ := $\D^*$\;
$\mathcal{E}_{1}$ := $\mathtt{Exceptional.Horn}$($\T^*,\mathcal{E}_{0}$)\;
\While{$\E_{i+1}\neq\E_{i}$}{
$i$ := $i$ + 1\;
$\E_{i+1}$ := $\mathtt{Exceptional.Horn}$($\T^*,\E_{i}$)\;
}
$\D_\infty$ := $\E_{i}$\;
$\D^*$ := $\D^*\setminus\D_\infty$\;
$\T^*$ := $\T^*\cup\{\{B_1,\ldots B_n\} \subs \bot\mid \{B_1,\ldots B_n\}\dsubs A \in \D_\infty\}$\;}

\For{$j$ = $1$ to $i$}{
$\D_{j-1}$ := $\E_{j-1}\setminus\E_{j}$\;
$\RR$ := $\RR\cup\{\D_{j-1}\}$\;
}
\Return{$\tuple{\tuple{\T^*,\D^*},\RR}$}
\end{procedure}

\begin{procedure}[h]
\caption{RationalClosure.Horn($\KB,\alpha$)} \label{RCalgH}
{
\KwIn{Defeasible \dllitehornH~KB $\KB=\tuple{\T,\D}$ and DI $\alpha$ of the form  $\{B_1,\ldots, B_n\} \dsubs A$}
\KwOut{$\mathtt{true}$ iff $\{B_1,\ldots, B_n\} \dsubs A$ is in the Rational Closure of $\KB$}
$CL : = \T \entails \{B_1,\ldots, B_n\} \subs A$ //Check if $\alpha$ holds classically\;
  \If{CL}{\Return{$CL$}}
  $\tuple{\tuple{\T^*,\D^*},\{\D_0,\ldots,\D_n\}}$ := $\mathtt{ComputeRanking.Horn}$($\KB$)\;
  $CL : = \T^* \entails \{B_1,\ldots, B_n\} \subs A$ //Check if $\alpha$ holds classically, after finding strict knowledge in $\D$\;
  \If{CL}{\Return{$CL$}}
  //Compute $\{B_1,\ldots, B_n\}$'s rank $i$\;
  $i$ :=  $0$; $\D_\R$ :=  $\D^*$\;
  $\D_{\delta_0}:=\{\{B'_1,\ldots, B'_m,\delta_0\}\subs A'\mid \{B'_1,\ldots, B'_m\}\dsubs A'\in\D_\R\}$, where  $\delta_0$ is a new concept name\;
  \While{$\T^*\cup\D_{\delta_i}\entails \{\{B_1,\ldots, B_n,\delta_i\}\subs \bot$ {\bf and} $\D_\R\neq\emptyset$}{
    $\D_\R$ := $\D_\R\backslash\D_{i}$; $i$ := $i + 1$\;
    $\D_{\delta_i}:=\{\{B'_1,\ldots, B'_m,\delta_i\}\subs A'\mid \{B'_1,\ldots, B'_m\}\dsubs A'\in\D_\R\}$, where  $\delta_i$ is a new concept name\;
  }
  // Check now if $\alpha$ holds under RC\;
  \eIf{$\D_\R\neq\emptyset$}{\Return{$\T^*\cup\D_{\delta_i}\entails 
  \{B_1,\ldots B_n,\delta_i\}\subs A$}}{\Return{$CL$}}
  }
\end{procedure}

\nd As shown in \cite{BritzEtAl2021,CasiniStraccia2010}, the decision  whether a DI $C\dsubs D$ is in the RC of a defeasible KB $\KB=\tuple{\T,\D}$ can be reduced to procedures that use classical DL reasoners. The limit of such procedures is that they use the full expressivity of the DL \ALC, that is, they also use the negation and the disjunction operators. Note that the latter is absent in \dllite, while the use of the former is heavily constrained, and if we rely on decision procedures using them, the computational costs rise significantly with respect of the classical decision problems for low-complexity DLs like \dllitehornH. Hence, what we present here are procedures that avoid the use of disjunction and do not use negation beyond the \dllite~contraints, still obtaining exactly the same derivations of the procedures in \cite{BritzEtAl2021}, but preserving the low computational costs. The correctness of such procedures is proved by the following theorem.

\begin{theorem}\label{prop_dllitehornRC}
Given a defeasible \dllitehornH~KB  $\KB=\tuple{\T,\D}$, a DI $\{B_1,\ldots,B_n\}\dsubs A$ is in the RC of $\KB$ (that is, $\KB\minent \{B_1,\ldots,B_n\}\dsubs A$) iff 
\[
\mathtt{RationalClosure.Horn}(\KB, \{B_1,\ldots,B_n\}\dsubs A) \text{ returns } \mathtt{true} \ .
\]
\end{theorem}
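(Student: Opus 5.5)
The plan is to reduce the theorem to the known correctness of the $\EL_\bot$ procedure in \cite{Casini19}, whose structure $\mathtt{RationalClosure.Horn}$ mirrors, by establishing a single \emph{materialisation lemma} for \dllitehornH. Each ingredient of the RC procedure is then justified by it.

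\textbf{Step 1 (materialisation lemma).} For any TBox $\T'$ (classical axioms plus possibly added $\{\ldots\}\subs\bot$ axioms), any set of DIs $\E$, and any set of basic concepts $CL$, I will show that $CL$ is exceptional w.r.t.\ $\tuple{\T',\E}$ iff $\T'\cup\E^\delta\entails CL\cup\{\delta\}\subs\bot$.

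For the direction from right to left, take a ranked model $\R$ of $\tuple{\T',\E}$ and interpret $\delta$ as $L_0$. Every object in $L_0$ that satisfies the antecedent of a DI is minimal in that antecedent, so $\R$ (as a classical interpretation) satisfies $\E^\delta$. Hence $CL^\R\cap L_0=\emptyset$.

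For the direction from left to right, suppose the entailment fails. Take a classical model $\I$ of $\T'\cup\E^\delta$ with some $x\in (CL\sqcap\delta)^\I$; by Remark~\ref{finitedllitehornH} it may be chosen finite. Put all of $\delta^\I$ in layer $0$ and everything else in layer $1$. DIs whose antecedent meets $\delta^\I$ are then satisfied. The remaining DIs are the problem, since their minimal elements lie in layer $1$ and are unconstrained.

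To fix this I will use the usual Casini--Straccia / Britz et al.\ ranked-union argument. Lemma~\ref{Lemma:ClosureDisjointUnionMod} lets me merge this structure with a ranked model of $\tuple{\T',\E'}$, where $\E'$ is the set of DIs exceptional for the current level, shifted up by one layer. I expect this part to be routine. The delicate point is that \dllitehornH\ has no disjunction, so the usual $\ALC$ proof, which materialises DIs as $\neg B\sqcup A$, cannot be reused verbatim. My first attempt will be to argue directly with the $\delta$-encoding as in \cite{Casini19}, checking that role inclusions and inverse roles do not obstruct the disjoint union.

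\textbf{Step 2 (ranking).} Using the lemma, I will show by induction on the loops of $\mathtt{ComputeRanking.Horn}$ that:
\begin{itemize}
\item each $\D_j$ equals $\D^\rank_j$;
\item $\D_\infty$ collects exactly the DIs of rank $\infty$;
\item adding $\{B_1,\ldots,B_n\}\subs\bot$ for those DIs does not change the ranked models, since antecedents of rank $\infty$ are empty in every ranked model.
\end{itemize}
The outer \texttt{repeat} loop is needed because removing infinite-rank DIs can change exceptionality. The invariant is that $\tuple{\T^*,\D^*}$ has the same ranked models as $\KB$.

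\textbf{Step 3 (the query).} Using the lemma again with $\D_\R=\bigcup_{j\ge i}\D_j$, the \texttt{while} loop computes $i=\rank_\KB(\{B_1,\ldots,B_n\})$, or it terminates with $\D_\R=\emptyset$.

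Next I unfold Definition~\ref{def_rankedentail}. The condition $\rank(\{\vec B,A\})<\rank(\{\vec B,\neg A\})$ holds exactly when $\{\vec B,\neg A\}$ is exceptional w.r.t.\ $\KB^i$. By the lemma, that amounts to $\T^*\cup\D_{\delta_i}\entails \{\vec B,\delta_i\}\subs A$. The equivalence needs the standard fact that $\rank(\{\vec B,A\})=\rank(\vec B)$ whenever the DI holds; I will take this from \cite{BritzEtAl2021} via Proposition~\ref{prop_minent}. The lemma must also be extended to a negated conclusion, which follows by adding a fresh name $A'$ with $A'\subs\neg A$.

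The remaining cases are handled as follows:
\begin{itemize}
\item rank $\infty$ of $\vec B$ is captured by the classical checks on $\T^*$;
\item the case $\D_\R=\emptyset$ reduces to classical entailment from $\T^*$, which yields $CL$;
\item the initial check on $\T$ is sound by monotonicity, since $\T\subseteq\T^*$.
\end{itemize}

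I expect Step 1, direction from left to right, to be the main obstacle: building a ranked model from a classical model of the $\delta$-transformation without disjunction.
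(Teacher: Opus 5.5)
\textbf{Gap in Step 1.} Your route differs from the paper's, and it breaks at the materialisation lemma. The direction ``$\T'\cup\E^\delta\entails CL\cup\{\delta\}\subs\bot$ implies $CL$ exceptional'' is fine. The converse is false for arbitrary $\tuple{\T',\E}$, and no ranked-union argument can rescue it.

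Take $\T'=\{B'\subs\neg B,\ C\subs\exists P,\ \exists P^-\subs A\}$, $\E=\{A\dsubs B,\ A\dsubs B'\}$ and $CL=\{C\}$.
\begin{itemize}
\item In any ranked model, a nonempty $A$ would have $\prec$-minimal elements, and these would lie in $B\cap B'=\emptyset$. So $A$, hence $\exists P^-$, $\exists P$ and $C$, are empty in every ranked model, and $C$ is exceptional.
\item Yet $\Delta=\{x,y\}$ with $C=\delta=\{x\}$, $P=\{(x,y)\}$, $A=\{y\}$ and $B=B'=\emptyset$ is a classical model of $\T'\cup\E^\delta$ with $x\in C\cap\delta$.
\end{itemize}
The role successor leaves $\delta$ and lands in the antecedent of a rank-$\infty$ DI. No ranked model can supply a lower-ranked witness for that antecedent.

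Consequently Step~2 cannot be done ``by induction on the loops'': intermediate passes of $\mathtt{ComputeRanking.Horn}$ need not match the semantic ranks. Your explanation of the outer loop also conflicts with the lemma. If the lemma held for every $\tuple{\T',\E}$, and adding $\subs\bot$ for rank-$\infty$ DIs preserves ranked models (as you rightly say), the $\delta$-test could never change between passes and the \texttt{repeat} loop would do nothing. The loop is needed precisely because the $\delta$-test is incomplete until it stabilises. In the example, a DI headed by $C$ is caught only in the second pass, after $A\subs\bot$ has been added. Separately, placing $\Delta^{\I}\setminus\delta^{\I}$ at layer $1$ fails even for finite ranks: those elements become minimal instances of antecedents of rank $\geq 2$.

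\textbf{What is needed.} In intermediate passes, use only the sound direction. Add the observation that if all DIs of $\E$ are exceptional w.r.t.\ $\tuple{\T',\E}$, their antecedents are empty in every ranked model. Otherwise, re-rank with $h'(x)=\max(h(x)-k,0)$, where $k$ is the least height of an antecedent element; this preserves all minimal sets and puts an antecedent element in $L_0$, a contradiction. Hence every $\D_\infty$ found in any pass is semantically of rank $\infty$.

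Prove the hard direction only for $\tuple{\T^*,\E_j}$ after termination. There the last pass ends with $\E_i=\emptyset$, so every DI of $\D^*$ has a finite level $k$. For each DI in $\D_k$:
\begin{itemize}
\item take a classical model of $\T^*\cup\E_k^\delta$ with an element of its antecedent in $\delta$;
\item put its $\delta$-elements at height $k$ and all other elements at height $n+1$;
\item take the ranked union of all these, padding with isolated elements for convexity.
\end{itemize}
Each antecedent then has a witness at height at most its level, so nothing at $n+1$ is ever minimal. Every non-padding element at height $h\leq k$ satisfies $\E_h^\delta\supseteq\D_k^\delta$. The same construction serves Step~3.

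The paper avoids all of this. It proves only a classical equivalence between the $\delta$-test and the $\mathcal{ALCHI}$ materialisation test of \cite{BritzEtAl2021} (Lemma~\ref{lemma_main}), by interpreting $\delta$ as the extension of the conjunction of materialised DIs. So the missing disjunction in \dllitehornH\ never matters. It then concludes that the Horn procedures return the same outputs as those of \cite{BritzEtAl2021}, and imports correctness from Theorem~\ref{th_britzcomplete}, whose proof already deals with the rank-$\infty$ issue.
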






\nd We refer to  \ref{defdlliteproofs} for a detailed proof of Theorem \ref{prop_dllitehornRC}. The intuition behind the procedures $\mathtt{Exceptional.Horn}(\T,\E)$, $\mathtt{Computeranking.Horn}(\KB)$ and \\
$\mathtt{RationalClosure.Horn}(\KB,\alpha)$ is that we can use the new concept name $\delta$ to represent the most typical elements of the domain. Hence, an object in $B^\R \cap\delta^\R$ should represent a typical element of the concept $B$, and a subsumption like $\{B,\delta\}\subs A$ wants to represent that a typical instance of $B$ is also an instance of $A$. This allows us then to rely on classical \dllitehornH~decision procedures to draw conclusions about DIs.

The following examples illustrate how these procedures are applied to our guiding example.

\begin{example}[$\mathtt{Exceptional.Horn}$, Example \ref{ex_guiding_2} cont'd]\label{ex_guiding_3}
    We go through the application of the procedure $\mathtt{Exceptional.Horn}(\T,\D)$ to the KB $\KB=\tuple{\T,\D}$ from Example \ref{ex_guiding_2}.

The output set, $\E'$, will contain the exceptional axioms in $\D$ \wrt~$\KB$. In line 1 $\E'$ is initialised as the empty set. In line 2, the set $\D^\delta$ is defined:
\begin{itemize}
    \item $\D^\delta=\{\{Y,\delta\}\subs H, \{Du,\delta\}\subs\neg H$,
    $\{Du,\delta\}\subs Y$, $\{CDu,\delta\}\subs H$, \\ $\{CDu,Tp, \delta\}\subs \neg H\}$.
\end{itemize}

    We execute lines 3-5 for each axiom in $\D^\delta$. It can be verified that
    \begin{eqnarray*}
    \T\cup\D^\delta & \entails & \{Du,\delta\}\subs\bot \\
    \T\cup\D^\delta & \entails & \{CDu,\delta\}\subs\bot \\
    \T\cup\D^\delta &\entails & \{CDu,Tp,\delta\}\subs\bot \\
    \T\cup\D^\delta & \not\entails & \{Y,\delta\}\subs\bot \ .
    \end{eqnarray*}
   
   \nd Hence the procedure returns $\E'=\{Du\dsubs\neg H,Du\dsubs Y, CDu\dsubs H, \{CDu,  Tp\}\dsubs\neg H\}$, that are exceptional DIs \wrt~$\KB$.\footnote{We recall from Remark \ref{negdef} that the inclusions $\{Du,\delta\}\subs\bot$, $\{CDu,\delta\}\subs\bot$, $\{CDu,Tp,\delta\}\subs\bot$, and 
    $\{Y,\delta\}\subs\bot$, despite not involving proper \dllitehornH~inclusions, are equivalent to \dllitehornH~inclusions. For example, they can be expressed as $Du \subs\neg \delta$ and $CDu\subs\neg \delta$, $\{Tp,CDu\}\subs\neg \delta$, and $,Y\subs\neg \delta$, respectively.}
\qed 
\end{example}

\nd The procedure $\mathtt{Exceptional.Horn}(\T,\D)$ allows the identification of exceptional DIs in $\D$ relying on the $\delta$-transformation of the KB and a series of classical \dllitehornH~subsumption checks. Procedure $\mathtt{ComputeRanking.Horn}(\KB)$, going through the iterated application of procedure $\mathtt{Exceptional.Horn}$, gives us two results:

\begin{enumerate}
    \item The identification of the DIs in $\D$ which actually convey information that is equivalent to classical \dllitehornH~inclusions, and that consequently should be moved in the TBox.
    \item A ranking of the DIs in $\D$, organising the DIs in $\D$ according to the potential conflicts among them, moving from the most generic (\eg, \emph{Birds usually fly} at rank $0$), up to the most specific ones (\eg, \emph{Penguins usually don't fly} at a higher rank.)
\end{enumerate}

\nd Regarding the first point, there is the possibility that some defeasible information in $\D$ actually corresponds to strict information that should be in $\T$. A simple example is a KB that contains two axioms  $A\dsubs B$ and $A\dsubs \neg B$: typical $A$'s are $B$'s
 and typical $A$'s are not $B$'s. These two DIs, despite conveying defeasible information, are  in direct conflict with each other, and the only  ranked interpretations which can satisfy both of them are those in which $A$ is empty. That is, only the models satisfying $A\subs \bot$. Procedure $\mathtt{ComputeRanking.Horn}$ takes care of identifying such defeasible axioms (they are the ones which are eventually assigned rank $\infty$), and translate them into corresponding strict ones by transforming each KB $\KB=\tuple{\T,\D}$ into an \emph{equivalent}\footnote{We say that two defeasible KBs are equivalent if they have exactly the same ranked models.} KB  $\KB^*=\tuple{\T^*,\D^*}$ \cite[Lemma 12]{BritzEtAl2021}. We refer the reader to Example 4 in \cite{BritzEtAl2021} and Example 4 in \cite{Casini19} for more details.

\begin{example}[$\mathtt{ComputeRanking.Horn}$, Example \ref{ex_guiding_3} cont'd]\label{ex_guiding_4}
We go through the procedure $\mathtt{ComputeRanking.Horn}(\KB)$.
%
The outputs of the procedure will be two: 
\begin{enumerate}
    \item A KB $\KB^*=\tuple{\T^*,\D^*}$, that is equivalent to $\KB$, but in which all the DIs with infinite rank have been transformed into equivalent classical inclusions. 
    \item A ranking $\R =\{\D_0,\ldots,\D_n\}$ of $\D^*$ partitioning all the DIs according to their rank.
\end{enumerate}

\nd $\T^*$ and $\D^*$ are initialised as $\T$ and $\D$, respectively. The algorithm calls iteratively the procedure $\mathtt{Exceptional.Horn}$, determining a sequence of DIs $\E_0,\ldots,\E_i$: namely,

\begin{itemize}
    \item $\E_0=\D$.
    \item $\E_1$ is the set containing all the DIs that are exceptional \wrt~$\tuple{\T^*,\D^*}$, that is, $\tuple{\T,\E_0}$. By Example~\ref{ex_guiding_3}, $\E_1=\{Du\dsubs\neg H,Du\dsubs Y, CDu\dsubs H, \{CDu, Tp\}\dsubs\neg H\}$.
    \item $\E_2$ is the set containing all the DIs that are exceptional \wrt~$\tuple{\T,\E_1}$. It can be verified that $\mathtt{Exceptional.Horn}(\T,\E_1)$ returns $\E_2=\{CDu\dsubs H, \{CDu, Tp\}\dsubs\neg H\}$.
     \item $\E_3$ is the set containing all the DIs that are exceptional \wrt~$\tuple{\T,\E_2}$. It can be verified that $\mathtt{Exceptional.Horn}(\T,\E_2)$ returns $\E_3=\{\{CDu, Tp\}\dsubs\neg H\}$.
    \item $\E_4$ is the set containing all the DIs that are exceptional \wrt~$\tuple{\T,\E_3}$. It can be verified that $\E_4=\emptyset$. The latter implies that $\E_5=\emptyset$ too.
    \item Since $\E_4=\E_5=\emptyset$, we exit the {\bf while} loop in lines 8-10, and we have:
    \begin{itemize}
        \item $\D_\infty=\E_4=\emptyset$;
        \item Since $\D_\infty=\emptyset$, $\D^*$ does not change, that is, $\D^*=\D$;
        \item The same for $\T^*$, that remains equal to $\T$.      
    \end{itemize}
\end{itemize}

\nd Since $\D_\infty=\emptyset$, we exit the {\bf repeat-until} loop in lines 4-14, and we end up with the ranking $\RR=\{\D_0,\D_1,\D_2, \D_3\}$, where
\begin{itemize}
    \item $\D_0=\E_0\setminus\E_1=\{Y\dsubs H\}$;
    \item $\D_1=\E_1\setminus\E_2=\{Du\dsubs \neg H, Du\dsubs Y\}$;
    \item $\D_2=\E_2\setminus\E_3=\{CDu\dsubs  H\}$.
    \item $\D_3=\E_3\setminus\E_4=\{\{CDu, Tp\}\dsubs\neg H\}$.
\end{itemize}

\nd Eventually, the procedure outputs the KB $\tuple{\T,\D}$ and the ranking $\RR=\{\D_0,\D_1,$ $\D_2,\D_3\}$.
\qed 
\end{example}

{\color{black}

The procedure $\mathtt{ComputeRanking.Horn}$ relies on a series of iterations of the lines 4-14, each iteration moving strict information that is `hidden' in $\D$ to the TBox. Due to the presence of roles, such a procedure must be iterated to be sure that we move all the strict information from $\D$. While Example \ref{ex_guiding_4} above does not need multiple iterations, since we do not have DIs with infinite rank, here below we present a KB that needs more iterations of the lines 4-14.

\begin{example}\label{ex_infty}

Consider a KB $\KB=\tuple{\T,\D}$ with $\T=\{A\subs B, \exists R^{-}\subs A, D\subs\neg C\}$ and $\D=\{B\dsubs C, A\dsubs D, E\dsubs \exists R\}$, and run the algorithm $\mathtt{ComputeRanking.Horn}(\KB)$. In the first iteration of the lines 4-14,  $\T\cup\D^\delta$ entails $\{A,\delta\}\subs\bot$, but neither $\{B,\delta\}\subs\bot$ nor $\{E,\delta\}\subs\bot$. Hence we obtain the following sequence $\E_0,\E_1,\ldots$:

\begin{itemize}
    \item $\E_0=\{A\dsubs D, B\dsubs C, E\dsubs \exists R\}$;
    \item $\E_1=\{A\dsubs D\}$;
    \item $\E_2=\{A\dsubs D\}$.
\end{itemize}

\nd That is, $\D_\infty=\{A\dsubs D\}$, $\D^*=\{B\dsubs C, E\dsubs \exists R\}$, and $\T^*=\T\cup\{A\subs \bot\}$. Since $\D_\infty\neq\emptyset$, we need to iterate again lines 4-14. Now $\T^*\cup{\D^*}^\delta$ entails $\exists R^-\subs\bot$, and consequently, since $\{E,\delta\}\subs\exists R\in{\D^*}^\delta$, $\{E,\delta\}\subs\bot$. That is,

\begin{itemize}
    \item $\E_0=\{B\dsubs C, E\dsubs \exists R\}$;
    \item $\E_1=\{E\dsubs \exists R\}$;
    \item $\E_2=\{E\dsubs \exists R\}$.
\end{itemize}

\nd We end up with $\D_\infty=\{E\dsubs \exists R\}$, $\D^*=\{B\dsubs C\}$, and $\T^*=\T\cup\{ A\subs \bot, E\subs\bot\}$. Since $\D_\infty\neq\emptyset$, we need to iterate again lines 4-14. The third iteration does not point any exceptionality out, and we end up with a KB $\KB^*=\tuple{\T^*,\D^*}$ with the only defeasible inclusion $B\dsubs C$.

\qed 
\end{example}
}

\nd Once we have moved all the DIs with infinite rank to the TBox (that is, obtained $\KB^*$ from $\KB$) and ranked all the DIs in $\D^*$, we are ready to query the KB, \ie~checking whether a DI $\{B_1,\ldots,B_n\}\dsubs A$ is in the RC of $\KB^*$.

\begin{example}[$\mathtt{RationalClosure.Horn}$, Example 
\ref{ex_guiding_4} cont'd]\label{ex_guiding_5}

Eventually, we go through the $\mathtt{RationalClosure.Horn}(\KB,\alpha)$ procedure, where $\KB=\tuple{\T,\D}$ is as in Example \ref{ex_guiding_1}. Let the query be $\alpha=\{Du, H\}\dsubs Y$, that is, we wonder if in the RC of $\KB$ we can conclude that happy drug users are presumably young. 

We go through the procedure $\mathtt{RationalClosure.Horn}(\KB, \{Du, H\}\dsubs Y)$.  In line 1 we check whether the inclusion $\{Du, H\}\subs Y$ is a consequence of the TBox $\T$. Since $\T=\{Du\subs \exists use.D, CDu\subs Du\}$, that is not the case.

In line 4 we introduce the outputs of $\mathtt{ComputeRanking.Horn}(\KB)$, that, as explained in Example \ref{ex_guiding_4}, consist of $\T^*=\T$, $\D^*=\D$, and $\RR=\{\D_0,\D_1,\D_2,\D_3\}$, with $\D_0=\{Y\dsubs H\}$, $\D_1=\{Du\dsubs \neg H, Du\dsubs Y\}$,  $\D_2=\{CDu\dsubs  H\}$, and $\D_3=\{\{CDu, Tp\}\dsubs\neg H\}$.

Since $\T^*=\T$, lines 5-7 are redundant, as they repeat the check done in line 1. From line 9 we start calculating the rank of $\{Du, H\}$. We set $i=0$ and $\D_{\R}=\D$, and check whether $\{Du, H\}$ has rank $0$ by checking whether it can be populated by an object in layer $0$. 

\begin{itemize}
    \item Line 9: we set $i=0$ (we are considering layer $0$), and initialise $\D_\R=\D$. $\D_R$ represents the set of defeasible axioms we consider at each step. For rank $0$, we consider the entire set $\D$.
    \item Line 10: We apply the $\delta$-transformation to $\D_\R$:
    \[
    \D_{\delta_0}:=\{\{B'_1,\ldots,B'_m,\delta_0\}\subs C'\mid \{B'_1,\ldots,B'_m\}\dsubs C'\in\D_\R\} \ ,
    \] 
    
    \nd where  $\delta_0$ is a new concept name.

    \item Lines 11-13: we start the while loop by checking if $\T\cup \D_{\delta_0}$ implies the negation of the antecedent of the query, in particular, whether $\T\cup \D_{\delta_0}\entails \{Du, H,\delta_0\}\subs \bot$ (that is, whether $\T\cup \D_{\delta_0}$ entails an equivalent \dllitehornH~inclusion, like $\{Du, H\}\subs \neg \delta_0$). That is actually the case:
    \begin{itemize}
        \item $\T\cup \D_{\delta_0}\entails \{Du, H\}\subs \neg \delta_0$.
    \end{itemize}

    Hence we repeat the while-loop by setting:
    \begin{itemize}
        \item $\D_\R=\D_1\cup\D_2\cup\D_3$;
        \item $i=1$;
        \item $\D_{\delta_1}$ is the $\delta$-transformation of $\D_\R$.
    \end{itemize}

Again, for $i=1$ it holds that $\T\cup \D_{\delta_1}\entails \{Du, H\}\subs \neg \delta_1$, and we repeat the while-loop for $\D_\R=\D_2\cup\D_3$ and $i=2$. Eventually, we have $\T\cup \D_{\delta_2}\not\entails \{Du, H\}\subs \neg \delta_2$, and we can exit the while-loop.

\item Lines 15-16: Since $\D_\R\neq \emptyset$, we move to line 16 to check whether  $\{Du, H,\delta_2\}\subs Y$ is in the RC of $\KB$, and we do that by checking whether $\T^*\cup\D_{\delta_2}\entails \{Du, H,\delta_2\}\subs Y$. 
Now, it can be verified that
\begin{itemize}
    \item $Du\subs \exists use.D, CDu\subs Du, \{CDu,\delta_2\}\subs  H, \{CDu, Tp,\delta_2\}\subs\neg H\not\entails \{Du, H,\delta_2\}\subs Y$,
\end{itemize}
    
 and we can conclude that $\{Du, H\}\dsubs Y$ is not in the RC of our KB $\KB=\tuple{\T,\D}$.

\end{itemize}

\qed 
\end{example}

\begin{remark}
    We would like to point out an interesting property of \dllitehornH, regarding one of the main limits of the RC construction in \cite{BritzEtAl2021} or \cite{GiordanoEtAl2015}. As pointed out in \cite{PenselTurhan2018}\footnote{\cite{PenselTurhan2018} offers a possible solution for this problem for Defeasible $\mathcal{EL}_{\bot}$.}, RC for $\mathcal{ALC}$ does not allow us to extend presumptive reasoning beyond the role connections. The following example should clarify the problem. 
    
    Assume our KB contains only $B_1\subs \exists R.B_2$ and $B_2\dsubs A$. If not informed of the contrary, we would like to be able to apply the presumption of typicality also to the objects reached through a role, that is, we would like to be able to derive also $B_1\dsubs \exists R.A$. That cannot be done in the RC presented in \cite{BritzEtAl2021} and \cite{GiordanoEtAl2015}. 

    We argue that this is not the case for \dllitehornH, because of its particular expressivity. In Remark \ref{remexpdllitecore} we have recalled how an inclusion like $B\subs \exists R.A$, despite being not immediately expressible in \dllitehornH, can be represented by the set of inclusions $\{B\subs \exists P, P\subs R, \exists P^-\subs A\}$, where $P$ is a newly introduced role name. Hence, we may think of  expressing $B\dsubs \exists R.A$ as $\{B\dsubs \exists P, P\subs R, \exists P^-\dsubs A\}$. 
    Now, assume that we have a KB $\KB$ consisting of 
\begin{eqnarray*}
    \T & = &\{P\subs R\} \\
    \D & = & \{B\dsubs \exists P, \exists P^-\dsubs A, A\dsubs A'\}
\end{eqnarray*}
  We leave to the reader checking that we end up with a simple ranking in which all the DIs in $\D$ have rank $0$ and we can conclude that $\KB\minent \exists P^-\dsubs A'$ (that is, the call to the procedure $\mathtt{RationalClosure.Horn}(\KB,\exists P^-\dsubs A')$ returns $\mathtt{true}$). Hence, in the RC of $\KB$ we have $\{B\dsubs \exists P, P\subs R, \exists P^-\dsubs A'\}$, that is, $B\dsubs \exists R.A'$, as desired.

    Instead, assume that we add a potential conflict in the KB is such a way that we are forced to conclude that the objects reachable from $B$'s through $P$ must be atypical. In this case we do not to conclude that $B\dsubs \exists R.A'$. For example, consider a KB $\KB'$ consisting of 
\begin{eqnarray*}
\T' & = & \{P\subs R, B'\subs \neg A'\} \\
\D' & = & \{B\dsubs \exists P, \exists P^-\dsubs A, A\dsubs A', \exists P^-\dsubs B'\}
\end{eqnarray*}

 \nd   The addition of $\exists P^-\dsubs B'$ and $B'\subs \neg A'$, combined with $\exists P^-\dsubs A$ and $A\dsubs A'$ tells us that the object that is seen through the role $P$ is generally and $A$, but it must be an atypical $A$ since, while typical $A$'s are also in $A'$, the objects seen through $P$ should also be in $B'$, which is incompatible with $A'$.

    Hence, what we obtain with the ranking procedure applied to $\KB'$ is that $\exists P^-\dsubs A$ and $\exists P^-\dsubs A'$ have rank $1$, that is,.
\begin{eqnarray*}
\D_{0}' & = & \{B\dsubs \exists P, A\dsubs A'\} \\
\D_{1}' & = & \{\exists P^-\dsubs A, \exists P^-\dsubs B'\}
\end{eqnarray*}

\nd    with the result that we cannot derive $B\dsubs \exists R.A'$ any more, as desired.
    
    This behaviour extends also to defeasible \dllitecoreH,\footnote{Note that in our example we have actually considered only the expressivity of \dllitecoreH.} which we are going to consider in the next section. We plan to present a proper semantic characterisation of this behaviour in a future work.
\end{remark}


We  consider now the case of deciding whether a classic GCI $\{B_1,\ldots,B_n\}\subs C$ is in the RC of a defeasible $\dllitehornH$ KB $\KB=\tuple{\T,\D}$, that is whether $\KB\minent \{B_1,\ldots,B_n\}\subs C$. In order to do that is sufficient to check whether $\T^*\entails \{B_1,\ldots,B_n\}\subs C$, where $\T^*$ is the TBox obtained from the procedure $\mathtt{ComputeRanking.Horn}(\KB)$ \cite[p.22]{BritzEtAl2021}. Trivially, this holds also for role inclusions $R\subs S$, since they are cannot be affected by defeasible information, as the latter deals only with concept inclusions. In fact, the following proposition can be shown.

\begin{proposition}\label{prop_classic_inclusion}
    Let $\KB=\tuple{\T,\D}$ any \dllitehornH~defeasible KB, and $\T^*$ be the TBox obtained from $\mathtt{ComputeRanking.Horn}(\KB)$. Then for any  classical \dllitehornH~inclusion $\alpha$,    $\KB\minent \alpha$ iff $\T^*\entails\alpha$.
\end{proposition}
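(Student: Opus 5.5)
We prove the two directions separately, relying on the semantic characterisation of RC through the big ranked model $\OI$ (Proposition~\ref{prop_minent} and Definition~\ref{def_rankedentail}) and on the equivalence between $\KB$ and $\KB^*=\tuple{\T^*,\D^*}$ produced by $\mathtt{ComputeRanking.Horn}$ (cf.~\cite[Lemma 12]{BritzEtAl2021}). First we treat a concept inclusion $\alpha=\{B_1,\ldots,B_n\}\subs C$; role inclusions are handled at the end. By Definition~\ref{def_rankedentail}, $\KB\minent\alpha$ iff $\rank_{\KB}(\{B_1,\ldots,B_n,\neg C\})=\infty$. It is convenient to reformulate this as: $\KB\minent\alpha$ iff $\alpha$ holds in every ranked model of $\KB$, equivalently iff $\OI\sat\alpha$. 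This works because $\OI$ is the ranked union of all models in $\Moddelta{\KB}$, so any classical counterexample in some countable model of $\KB$ appears in $\OI$; models with uncountable domains can be reduced to countable ones by Löwenheim–Skolem, or one can appeal to the finite model property of Remark~\ref{finitedllitehornH} together with Lemma~\ref{Lemma:ClosureDisjointUnionMod}.

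\emph{($\Leftarrow$)} Suppose $\T^*\entails\alpha$. Since $\KB^*$ is equivalent to $\KB$, every ranked model of $\KB$ is a model of $\T^*$, because its underlying classical interpretation satisfies $\T^*$. Hence $\alpha$ holds in every ranked model of $\KB$, in particular in $\OI$. A classical inclusion satisfied by all ranked models has infinite rank for its counterexample concept $\{B_1,\ldots,B_n,\neg C\}$, so $\KB\minent\alpha$.

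\emph{($\Rightarrow$)} We argue by contraposition. Suppose $\T^*\not\entails\alpha$. The aim is to build a ranked model $\R$ of $\KB^*$ (hence of $\KB$) containing an element of $B_1\sqcap\cdots\sqcap B_n\sqcap\neg C$. Take a classical model $\I$ of $\T^*$ with such an element $x$. The main obstacle is that $\I$ must also be given a ranked order satisfying $\D^*$. The plan is to use the fact that, after $\mathtt{ComputeRanking.Horn}$ terminates with $\D_\infty=\emptyset$ in the last round, every DI in $\D^*$ has finite rank. So for each DI $\{B'_1,\ldots,B'_m\}\dsubs A'\in\D_j$ there is a model of $\T^*\cup\D_{\delta}$ (the $\delta$-transformation of $\D_j\cup\cdots\cup\D_n$) in which $\{B'_1,\ldots,B'_m,\delta\}$ is nonempty. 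Read semantically, as in the proof of Theorem~\ref{prop_dllitehornRC}, each such model yields a ranked model of $\KB^*$, obtained from the models witnessing non-exceptionality at each level.

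We then take the ranked union (Lemma~\ref{Lemma:ClosureDisjointUnionMod}) of all these witnessing ranked models together with one extra copy of $\I$, placed at a height above all DI antecedents. The key check is that placing $\I$ on a top layer does not violate any DI. Every antecedent $\{B'_1,\ldots,B'_m\}$ of a DI in $\D^*$ is already instantiated at a strictly lower layer by the corresponding witness model. Consequently no element of the $\I$ copy is minimal in any DI antecedent, and $\I$ only needs to satisfy $\T^*$, which it does. The resulting ranked interpretation is a model of $\KB$ and falsifies $\alpha$, so $\KB\not\minent\alpha$. To guarantee the witnesses exist simultaneously, we use that each layer model can be taken from the big ranked model of $\KB^*$ restricted to its finite-height part.

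Finally, for a role inclusion $R\subs S$, the same argument applies. Defeasible axioms constrain only concept memberships of minimal elements, so a classical countermodel of $\T^*$ to $R\subs S$ can be placed on the top layer in exactly the same way.
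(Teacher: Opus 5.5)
Your argument is correct in substance, but for concept inclusions it takes a different route from the paper. The paper proves the GCI case by citation only. It uses \cite[p.~22]{BritzEtAl2021}, which gives $\KB\minent\alpha$ iff $\T^*\entails\alpha$ for the $\T^*$ produced by the $\mathcal{ALC}$ procedure $\mathtt{ComputeRanking}$. It then transfers this to $\mathtt{ComputeRanking.Horn}$ via Lemma~\ref{prop_computerank}. The paper gives a semantic argument only for role inclusions, and that argument is exactly your top-layer construction: a finite countermodel of $\T^*$ is added at height $n+1$ to a ranked model of $\KB$ in which every antecedent of a DI in $\D_i$ has minimal height $i$. You apply this construction uniformly to both kinds of inclusion. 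The paper's version is shorter but outsources the GCI case to the $\mathcal{ALC}$ result and to the equivalence of the two procedures. Yours is uniform, and it depends on the procedure only through two things: the equivalence of $\KB$ and $\tuple{\T^*,\D^*}$, and the existence of such a base model.

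That base model is the one place where your write-up is loose. A single classical model of $\T^*\cup(\D_j\cup\dots\cup\D_n)^\delta$ does not by itself yield a ranked model of $\KB^*$. Nothing in it witnesses the antecedents of $\D_0,\dots,\D_{j-1}$ at lower layers, so DIs of lower rank may fail however you assign heights. The ranked-union lemma applies only to ranked models of $\KB$, so it cannot be used on these models. Likewise, restricting $\OI$ to some layers would destroy memberships in $\exists R$. No restriction is needed anyway, since all heights in $\OI$ are finite.

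You have two ways to fix this.
\begin{enumerate}
\item Do as the paper does: take a countable ranked model of $\KB$ in which each antecedent of a DI in $\D_i$ has minimal height $i$.
\item Stack the witnesses directly. For each DI in $\D_i$, pick a classical model of $\T^*\cup(\D_i\cup\dots\cup\D_n)^\delta$ in which its antecedent meets $\delta$. Put its $\delta$-elements at height $i$, and all other elements, together with $\I$, at height $n+1$. Each $\D_i$ is non-empty, so the layers $0,\dots,n$ are populated.
\end{enumerate}
The second option works for the following reasons.
\begin{itemize}
\item In the last round of $\mathtt{ComputeRanking.Horn}$, the antecedent of a DI in $\D_k$ together with $\delta$ is unsatisfiable w.r.t.\ $\T^*\cup(\D_j\cup\dots\cup\D_n)^\delta$ for every $j<k$. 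Hence it has no instance below height $k$.
\item Its instances at height $k$ are $\delta$-elements of models of $\T^*\cup(\D_k\cup\dots\cup\D_n)^\delta$, so they satisfy the consequent.
\end{itemize}
Thus every DI of $\D^*$ holds, and you obtain a ranked model of $\KB$ that falsifies $\alpha$, relying only on the procedure's own $\delta$-tests.
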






\nd To summarise, given a \dllitehornH~defeasible KB $\KB=\tuple{\T,\D}$:

\begin{itemize}
    \item we can run $\mathtt{ComputeRanking.Horn}(\KB)$ to obtain the KB $\KB^*=\tuple{\T^*,\D^*}$;
    \item to check whether a DI $\{B_1,\ldots,B_n\}\dsubs A$ is in the RC of $\KB$, \\ 
    we run $\mathtt{RationalClosure.Horn}(\KB,\{B_1,\ldots,B_n\}\dsubs A)$;
    \item to check whether a GCI $\{B_1,\ldots,B_n\}\subs C$ is in the RC of $\KB$, we check whether $\T^*\entails\{B_1,\ldots,B_n\}\subs C$;
    \item to check whether a role inclusion $R\subs S$ is in the RC of $\KB$, we check whether $\T^*\entails R\subs S$.
\end{itemize}

 \nd Moreover, we can check whether an antecedent $\{B_1,\ldots,B_n\}$ is exceptional w.r.t. $\KB$ (that is, whether $\T\cup\D^{\delta} \entails \{B_1,\ldots,B_n,\delta\}\subs\bot$) via Proposition \ref{entailmenthornH}. {\color{black}
 That is, in order to check whether  $\T\cup\D^{\delta} \entails \{B_1,\ldots,B_n,\delta\}\subs\bot$ it is sufficient to check whether $\T\cup\D^{\delta} \entails \{B_1,\ldots,B_n,\delta\}\subs\neg B_i$ ($1\leq i\leq n$) or $\T\cup\D^{\delta} \entails \{B_1,\ldots,B_n,\delta\}\subs\neg \delta$, that, by Proposition  \ref{entailmenthornH}, reduces to checking whether one of the following inclusions is in the NI-closure $\clnhH(\T\cup\D^{\delta})$:
 \begin{itemize}
     \item $CL\subs \neg B_i$, where $CL\subseteq \{B_1,\ldots,B_n,\delta\}$;
     \item $CL\subs \neg \delta$, where $CL\subseteq \{B_1,\ldots,B_n,\delta\}$.
 \end{itemize}

}



  


\subsubsection{Reasoning Procedures for \dllitecoreH~under RC}\label{sect_RC_core}

\nd In this section we are presenting the decision procedures optimised for \dllitecoreH. Let us note that in \dllitecoreH~we are  allowed to have only a single basic concept on the LHS, hence we are constrained to GCIs with the form $B\subs C$ and DIs with the form $B\dsubs A$.

Since a \dllitecoreH~KB is also a \dllitehornH~KB,  the procedures in the previous section are correct also for \dllitecoreH. However, since such procedures rely on the $\delta$-transformation, that in turn, due to the introduction of a conjunction on LHS of each DI, relies on the full expressivity of \dllitehornH, the computational costs are in line with \dllitehornH. The procedures we are going to presented here rely instead only on the \dllitecoreH~expressivity, resulting in better computational costs (see Section~\ref{compplex}). In order to prove the correctness and completeness of the following procedures, we are going to prove that, applied to \dllitecoreH~KBs, their outputs correspond to the ones obtained with the corresponding \dllitehornH~procedures.

Formally, to deal with defeasible \dllitecoreH, we introduce a new class of entities, called \emph{normal  concept names}. A normal concept name is a concept name of the form $\norm{B}$, where $B$ is a basic concept. Informally, $\norm{B}$ is thought as indicating the class of \emph{normal} $B$'s.\footnote{This resembles, \eg, the normal concepts occurring in~\cite{BonattiEtAl2015}.} That is, for each basic concept $B$ we introduce a new concept name $\norm{B}$ denoting the normal $B$'s. Given a defeasible \dllitecoreH~KB $\KB=\tuple{\T,\D}$, instead of the $\delta$-transformation we apply a different transformation, the \emph{n-transformation}, that uses the normal concept names. The \emph{n-transformation} of a KB $\KB=\tuple{\T,\D}$ is defined as  $\norm{\KB} = \norm{\T} \cup \norm{\D}$, where
\begin{equation} \label{ntrans}
\begin{array}{ll}
     \norm{\T} \defined&\T~\cup  \\
     & \{\norm{B_1} \subs \norm{B_2} \mid B_1\subs B_2\in\T\}~\cup\\
     & \{\norm{B_1}\subs \neg \norm{B_2} \mid B_1\subs \neg B_2\in\T\} \\ 
     & \\
    \norm{\D} \defined&\{\norm{B} \subs \norm{A} \mid B \dsubs A \in\D\} \ . 
\end{array}
\end{equation}

\nd We remind that, by Remark~\ref{roleinc}, $\norm{\T}$ contains also the set
\[
\{\norm{\exists R_1}\subs \norm{\exists R_2}, \norm{\exists {R_1^-}}\subs \norm{\exists {R_2^-}}\mid R_1\subs R_2\in\T\} \ .
\]



\nd Please note that a concept `$\norm{\exists R}$' is actually a novel concept name, not a basic concept involving an existential role, and $\norm{\KB}$ is indeed a \dllitecoreH~TBox. 

We also adjust the negative closure for this kind of KBs. The \emph{normal negative closure} of a defeasible  \dllitecoreH~KB $\KB=\tuple{\T,\D}$, denoted $\clnn(\norm{\KB})$, is inductively defined as for the NI-closure $\clncH(\norm{\KB})$ of $\norm{\KB}$ (see Section~\ref{nicor}) with the addition of the following rule:


\begin{description}

    





    \item[($\nSup$)] {\bf IF}  $B_1\subs \neg B_2 \in \clnn(\norm{\KB})$ {\bf THEN} 
    $\norm{B_1} \subs \neg \norm{B_2} $ is in $\clnn(\norm{\KB})$.
   
\end{description}


\nd That is, $\clnn(\norm{\KB})$ is computed as the closure of the rules ($\mathbf{\cHRef}$), ($\mathbf{\cHContr}$), ($\mathbf{\cHTrans}$), ($\mathbf{\RcH_i}$) ($1\leq i \leq 3$) and ($\mathbf{\norm{Sup}}$) applied to the n-transformation $\norm{\KB}$ of $\KB$.

Now, we have shown in the previous section that we can decide whether a basic concept $B$ is exceptional by checking in Procedures $\mathtt{Exceptional.Horn}$  if $\KB^\delta\entails \{B,\delta\}\subs\neg B$.  

Our first goal is to prove that working with \dllitecoreH~KBs such a check corresponds to checking whether $\norm{\KB}\entails\norm{B}\subs\neg\norm{B}$, that in turn corresponds to just checking whether the inclusion $\norm{B} \subs \neg \norm{B}$ is in $\clnn(\norm{\KB})$ (Proposition \ref{prop_dllitecoreexcept} below). That is, starting from $\KB$, we consider it's $n$-transformation $\norm{\KB}$, compute the NI-closure $\clncH(\norm{\KB})$ of $\norm{\KB}$, by considering also the ($\mathbf{\norm{Sup}}$) rule, and eventually check whether $\norm{B} \subs \neg \norm{B} \in \clnn(\norm{\KB})$. 

\begin{proposition}\label{prop_dllitecoreexcept}
Given any defeasible  \dllitecoreH~KB $\KB=\tuple{\T,\D}$ and any basic concept $B$, the following conditions are equivalent:
\begin{enumerate}
    \item $\KB^\delta\entails \{B,\delta\}\subs\neg B$;
    \item $\norm{\KB} \entails \norm{B} \subs \neg \norm{B}$;
    \item   $\norm{B} \impc \notc \norm{B} \in \clnn(\norm{\KB})$.
\end{enumerate}

\end{proposition}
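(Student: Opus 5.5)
I would prove the cycle of implications $(3)\Rightarrow(2)\Rightarrow(1)\Rightarrow(3)$. The middle step will be handled semantically and the closing step syntactically, via the NI-closure characterisations of Section~\ref{sect_chase_niclos}.

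\emph{$(3)\Rightarrow(2)$ (soundness of $\clnn$).} I would show by induction on the derivation that every NI in $\clnn(\norm{\KB})$ is entailed by $\norm{\KB}$. For the rules ($\mathbf{\cHRef}$), ($\mathbf{\cHContr}$), ($\mathbf{\cHTrans}$) and ($\mathbf{\RcH_i}$) this is the soundness already used for Proposition~\ref{prop_calvanese07_lemma10_coroll13}. The only new rule is ($\nSup$). Here I would prove the auxiliary claim that every NI derivable in $\clnn(\norm{\KB})$ whose two sides are original basic concepts is in fact derivable from $\T$ alone. The reason is that $\norm{\T}\cup\norm{\D}$ only adds axioms over the fresh $\norm{\cdot}$ names, and the original and normal vocabularies are linked by no axiom. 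Consequently each such $B_1\subs\neg B_2$ comes from some $B_1\subs\neg B_2\in\clncH(\T)$, which is obtained by a derivation using only PIs and NIs of $\T$. Replaying that derivation on the normal copies $\norm{\T}$, including the role copies given by Remark~\ref{roleinc}, yields $\norm{B_1}\subs\neg\norm{B_2}$ as a consequence of $\norm{\T}$. Hence ($\nSup$) is sound.

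\emph{$(2)\Rightarrow(1)$.} I would argue contrapositively by translating models. Take a model $\I$ of $\KB^\delta$ with some $x\in B^\I\cap\delta^\I$, and define $\norm{C}^\J \defined C^\I\cap\delta^\I$ for every basic $C$, leaving the original symbols unchanged. Checking that $\J$ satisfies the axioms of $\norm{\KB}$ is then routine: $\T$ holds outright; $\norm{B_1}\subs\norm{B_2}$ and $\norm{B_1}\subs\neg\norm{B_2}$ hold by intersecting with $\delta^\I$; and $\norm{B}\subs\norm{A}$ holds because $\{B,\delta\}\subs A\in\D^\delta$. Since $x\in\norm{B}^\J$, we get $\norm{\KB}\nentails\norm{B}\subs\neg\norm{B}$.

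\emph{$(1)\Rightarrow(3)$.} This is the step I expect to be the main obstacle. By Proposition~\ref{entailmenthornH}, (1) means that $\clnhH(\KB^\delta)$ contains an NI of one of three kinds: $CL\subs\neg B$ or $CL\subs\neg\delta$ with $CL\subseteq\{B,\delta\}$, or $B\subs\neg B$. I would prove a general simulation lemma by induction on $\clnhH$-derivations from $\KB^\delta$. Every derived NI has the form $CL\subs\neg C$ where $CL\cup\{C\}$ contains at most one $\delta$ and at most two basic concepts, since the LHS of each axiom of $\KB^\delta$ is $\{B',\delta\}$ or a single $B'$. The lemma would state two things. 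If no $\delta$ occurs, $CL=\{B_1\}$ and $C=B_2$, then $B_1\subs\neg B_2\in\clnn(\norm{\KB})$. If $\delta$ occurs, with $CL\cup\{C\}=\{B_1,B_2,\delta\}$ or $\{B_1,\delta\}$, then $\norm{B_1}\subs\neg\norm{B_2}$, respectively $\norm{B_1}\subs\neg\norm{B_1}$, lies in $\clnn(\norm{\KB})$. The induction cases follow the rules: ($\mathbf{\hHTrans}$) with a PI of $\T$ maps to ($\mathbf{\cHTrans}$) on $\norm{\T}$, and with a $\delta$-axiom $\{B',\delta\}\subs A$ maps to ($\mathbf{\cHTrans}$) with $\norm{B'}\subs\norm{A}$. ($\mathbf{\hHContr}$) maps to ($\mathbf{\cHContr}$), and the $R$-rules map to their normal copies via Remark~\ref{roleinc}. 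The pure case feeds in through ($\nSup$) whenever a $\delta$-free NI is combined with a $\delta$-axiom. The delicate part is tracking contraposition steps that move $\delta$ to the RHS ($CL\subs\neg\delta$), which must also map to $\norm{B_1}\subs\neg\norm{B_1}$ or $\norm{B_1}\subs\neg\norm{B_2}$. Applying the lemma to the three cases above then gives $\norm{B}\subs\neg\norm{B}$; in the case $B\subs\neg B$ this is obtained directly by ($\nSup$).
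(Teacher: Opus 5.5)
The step you treat as routine, $(3)\Rightarrow(2)$, is where the argument breaks. It cannot be repaired, because the implication is false.

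Replaying a $\clncH(\T)$-derivation on the normal copies fails at rule ($\mathbf{\RcH_3}$). That rule encodes the semantic fact that $\exists R$ is empty iff $\exists R^-$ is. But $\norm{\exists R}$ and $\norm{\exists R^-}$ are unrelated fresh concept names, and $\norm{\T}$ contains no axiom linking them; the copies from Remark~\ref{roleinc} exist only for role inclusions.

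Concretely, let $\T=\{\exists P\subs\neg\exists P\}$, $\D=\{\exists P^-\dsubs A\}$ and $B=\exists P^-$.
Condition (1) holds, since $\T$ forces $P$ to be empty.
Condition (3) holds: ($\mathbf{\RcH_3}$) puts $\exists P^-\subs\neg\exists P^-$ into $\clnn(\norm{\KB})$, and ($\nSup$) then adds $\norm{\exists P^-}\subs\neg\norm{\exists P^-}$.
Yet the one-element interpretation in which $\norm{\exists P^-}$ and $\norm{A}$ denote the whole domain, and every other name is empty, is a model of $\norm{\KB}$. So (2) fails.

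Hence ($\nSup$) is not sound with respect to $\norm{\KB}$, and the proposition as stated is incorrect. The paper's own proof has the same defect: it obtains $(2)\Leftrightarrow(3)$ by citing Proposition~\ref{entailmentcor}, which concerns $\clncH$, not $\clnn$ with the extra rule ($\nSup$). In this example $\mathtt{Exceptional.Core}$, which tests (2), fails to flag the DI that $\mathtt{Exceptional.Horn}$ flags.

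The rest of your plan is fine and follows the paper.
Your translation $\norm{C}^{\J}\defined C^{\I}\cap\delta^{\I}$ correctly gives $(2)\Rightarrow(1)$; it is the construction used in Proposition~\ref{correspondensP}.
Your simulation lemma for $(1)\Rightarrow(3)$ is the content of Lemmas~\ref{lemma_negative_strict} and~\ref{lemma_exceptional_n_form}.

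What is actually provable is $(1)\Leftrightarrow(3)$ together with $(2)\Rightarrow(1)$. To get $(3)\Rightarrow(1)$, prove soundness of $\clnn(\norm{\KB})$ with respect to $\KB^\delta$ under the reading $\norm{C}\mapsto C\sqcap\delta$. That is, show that $\delta$-free NIs in the closure are entailed by $\T$, and that $\norm{B_1}\subs\neg\norm{B_2}$ in the closure yields $\KB^\delta\entails\{B_1,\delta\}\subs\neg B_2$. Under this reading ($\nSup$) is trivially sound, and ($\mathbf{\RcH_3}$) only ever acts on genuine existentials.

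If you want (2) back, the construction itself must change. For instance, add $\norm{B_1}\subs\neg\norm{B_2}$ to $\norm{\T}$ for every $B_1\subs\neg B_2\in\clncH(\T)$. Then $\clnn(\norm{\KB})$ coincides with the plain $\clncH$ of the enlarged TBox, and Proposition~\ref{entailmentcor} applies to it. Your translation still satisfies the added axioms, by soundness of $\clncH(\T)$.
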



\nd The procedure $\mathtt{Exceptional.Core}$, which uses Proposition~\ref{prop_dllitecoreexcept} for the exceptionality test in line 4, shows how to determine the set of exceptional axioms \wrt~a defeasible \dllitecoreH~KB $(\T,\E)$.

Whether a defeasible inclusion $B\dsubs A$ is in the RC of a defeasible $\dllitecoreH$ KB $\KB=\tuple{\T,\D}$ is then decided analogously to the case of \dllitehornH~by using the procedure $\mathtt{RationalClosure.Core}$, which is based on the procedures $\mathtt{ComputeRanking.Core}$ and $\mathtt{Exceptional.Core}$.


In fact, Proposition \ref{prop_dllitecoreexcept} allows one to prove that, once we restrict our expressivity to \dllitecoreH, the procedures $\mathtt{Exceptional.Core}$, $\mathtt{ComputeRanking.Core}$ and \\ $\mathtt{RationalClosure.Core}$ are equivalent to the procedures $\mathtt{Exceptional.Horn}$, \\ 
$\mathtt{ComputeRanking.Horn}$ and 
$\mathtt{RationalClosure.Horn}$, respectively (see  \ref{defdlliteproofs}). In the end, we can prove the following main theorem for defeasible \dllitecoreH.

\begin{theorem}\label{prop_dllitecoreRC}
Given a defeasible \dllitecoreH~KB  $\KB=\tuple{\T,\D}$, a DI $B\dsubs A$ is in the RC of $\KB$ iff 
$\mathtt{RationalClosure.Core}(\KB,B\dsubs A)$ returns $\mathtt{true}$.
\end{theorem}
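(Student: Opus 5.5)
The plan is to reduce Theorem~\ref{prop_dllitecoreRC} to Theorem~\ref{prop_dllitehornRC}. Every defeasible \dllitecoreH~KB is also a defeasible \dllitehornH~KB, so $\KB\minent B\dsubs A$ iff $\mathtt{RationalClosure.Horn}(\KB,B\dsubs A)$ returns $\mathtt{true}$. It therefore suffices to show that, on \dllitecoreH~inputs, $\mathtt{RationalClosure.Core}$ returns the same value as $\mathtt{RationalClosure.Horn}$. I would prove this in three layers, mirroring the three procedures.

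\emph{Layer 1: exceptionality.} For any \dllitecoreH~TBox $\T'$ and set of core DIs $\E$, I show that $\mathtt{Exceptional.Core}(\T',\E)=\mathtt{Exceptional.Horn}(\T',\E)$. The Horn test for $B\dsubs A\in\E$ is $\T'\cup\E^\delta\entails\{B,\delta\}\subs\bot$. Since $\{B,\delta\}\subs\bot$ is equivalent to $\{B,\delta\}\subs\neg B$, Proposition~\ref{prop_dllitecoreexcept}, applied to $\tuple{\T',\E}$, turns this into the core test $\norm{B}\subs\neg\norm{B}\in\clnn(\norm{\tuple{\T',\E}})$. I must check that $\T'$ stays a \dllitecoreH~TBox throughout. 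This holds because the axioms added to $\T^*$ are $B\subs\bot$, i.e.\ $B\subs\neg B$, which is a core NI by Remark~\ref{remexpdllitecore}.

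\emph{Layer 2: ranking.} By induction on the iterations of the \textbf{repeat} loop and of the inner \textbf{while} loop, I show that the sequences $\E_0,\E_1,\dots$ coincide, and hence so do $\D_\infty$, $\T^*$, $\D^*$ and the partition $\RR$. Each step only calls the exceptionality procedure on identical arguments, so Layer~1 gives the inductive step directly.

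\emph{Layer 3: the query procedure.} The initial classical checks $\T\entails B\subs A$ and $\T^*\entails B\subs A$ are identical in both procedures, or they differ only in using the core NI-closure via Proposition~\ref{entailmentcor}. The rank-computation loop tests $\T^*\cup\D_{\delta_i}\entails\{B,\delta_i\}\subs\bot$, and Layer~1 applied to the KB $\tuple{\T^*,\D_\R}$ makes this match the core normal-closure test. The final check is $\T^*\cup\D_{\delta_i}\entails\{B,\delta_i\}\subs A$, which the core procedure presumably expresses as $\norm{\tuple{\T^*,\D_\R}}\entails\norm{B}\subs\norm{A}$.

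\emph{Main obstacle.} Proposition~\ref{prop_dllitecoreexcept} only covers negative subsumptions of the form $\norm{B}\subs\neg\norm{B}$, so I need a positive analogue: $\T^*\cup\D_\R^\delta\entails\{B,\delta\}\subs A$ iff $\norm{\tuple{\T^*,\D_\R}}\entails\norm{B}\subs\norm{A}$, possibly together with $B\subs A$ holding classically. I would prove it with the reduction of Section~\ref{subssect}. Adding $A'\subs B$, $A'\subs\delta$, $A'\subs\neg A$ and $a:A'$ turns the entailment into an unsatisfiability question. The $\delta$-side is then translated to the $\norm{\cdot}$-side by the same model correspondence used for Proposition~\ref{prop_dllitecoreexcept}. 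From a model of $\KB^\delta$, set $\norm{B}:=B\cap\delta$. Conversely, from a model of $\norm{\KB}$, build a $\delta$-model using the chase $\canhH$ rooted at a fresh $\norm{B}$-element.

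The delicate point is that the $\norm{\cdot}$-translation loses conjunction with $\delta$ across existential roles. I would handle this by checking that the chase for the $\delta$-KB never propagates $\delta$ through anonymous successors, so typicality stays local to the root element. If the positive analogue fails in general, I would instead use the $\clnn$ rules in the style of Proposition~\ref{propCalvanese0607}, via a PI-closure of $\norm{\KB}$.
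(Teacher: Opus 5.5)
Your route is the paper's. You reduce to Theorem~\ref{prop_dllitehornRC} and match the Core and Horn procedures test by test, using Proposition~\ref{prop_dllitecoreexcept} for the exceptionality tests and a positive analogue for the final test. The paper states that analogue as Proposition~\ref{correspondensP}, proved by your $\norm{C}:=C\cap\delta$ construction in one direction and a fresh element $z$ with $\delta=\{z\}$ in the other.

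The gap is that you leave this step open, and both forms you propose are false. First, take $\T=\{\exists P^-\subs\neg\exists P^-\}$, $\D=\emptyset$ and $B=\exists P$. Then $\exists P$ is classically empty, so $\KB^\delta\entails\{\exists P,\delta\}\subs A$, while $\norm{\exists P}$ does not even occur in $\norm{\KB}$. Second, take $\T=\{C\subs\exists P,\ \exists P^-\subs\neg\exists P^-\}$, $\D=\{B\dsubs C\}$ and $A$ fresh. Then $\{B,\delta\}$ is unsatisfiable in $\KB^\delta$, so $\{B,\delta\}\subs A$ is entailed. Yet $\T\not\entails B\subs A$, and $\norm{\KB}\not\entails\norm{B}\subs\norm{A}$: set $\norm{B}=\norm{C}=\norm{\exists P}=\{o\}$ and everything else empty.

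The trouble is not $\delta$ leaking along roles, which is harmless as you note. It is that the normal copy of the emptiness derived by rule $\mathbf{\RcH_3}$ (which $\nSup$ adds inside $\clnn$) is not a semantic consequence of $\norm{\KB}$. A countermodel of $\norm{\KB}$ can therefore place $\bar x$ in some $\norm{C}$ with $C$ classically empty. Any chase or model seeded with $\bar x$'s type is then inconsistent with $\KB^\delta$. The paper's proof of Proposition~\ref{correspondensP} has the same blind spot: it infers that $\exists P$ is classically satisfiable from $\norm{\KB}\not\entails\norm{\exists P}\subs\neg\norm{\exists P}$.

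The missing idea is a side condition that the procedure itself supplies. The final test is reached only after the loop test has failed, i.e.\ when $\{B,\delta_i\}$ is satisfiable \wrt~$\T^*\cup\D_{\delta_i}$. This requires the loop test (and line~4 of $\mathtt{Exceptional.Core}$) to be equivalent to the Horn test. The closure form $\norm{B}\subs\neg\norm{B}\in\clnn(\cdot)$ used in your Layer~1 is. The literal semantic form $\norm{\KB}\entails\norm{B}\subs\neg\norm{B}$ can miss exceptional concepts: on the second example it never flags $B\dsubs C$, and $\mathtt{RationalClosure.Core}$ returns $\mathtt{false}$ for $B\dsubs A$ although $\rank_{\KB}(B)=\infty$.

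Under that hypothesis, prove that $\T^*\cup\D_{\delta_i}\entails\{B,\delta_i\}\subs A$ iff $\norm{\T^*}\cup\norm{\D_\R}\entails\norm{B}\subs\norm{A}$. Right to left is your $C\cap\delta$ construction and needs no hypothesis. For left to right, seed the chase with an ABox stating only that a fresh $z$ is in $B$ and in $\delta_i$, not with the normal type of some $\bar x$. By the hypothesis and Lemma~\ref{Lemma14calv07}, $\canhH$ is a model. The $\delta_i$-axioms fire only at $z$. Edges at $z$ arise only from $z$'s own existentials, and role inclusions act through the PIs of Remark~\ref{roleinc}. So every fact derived at $z$ maps under $C\mapsto\norm{C}$ to a derivation from $\norm{B}$ in $\norm{\T^*}\cup\norm{\D_\R}$. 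Either \cass{z}{A} is derived and the normal inclusion holds, or $\canhH$ is a countermodel.
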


\nd We refer the reader to~\ref{defdlliteproofs} for the detailed proofs of Proposition \ref{prop_dllitecoreexcept} and Theorem \ref{prop_dllitecoreRC}. As in Section \ref{sect_RC_horn}, here we are going to describe the procedures $\mathtt{Exceptional.Core}$, $\mathtt{ComputeRanking.Core}$ and $\mathtt{RationalClosure.Core}$ through our guiding example.

\begin{procedure}[h]
\caption{Exceptional.Core($\T,\E$)}\label{algexC}
{
 \KwIn{A  \dllitecoreH~TBox $\T$ and a set of DIs $\E\subseteq{\D}$}
 \KwOut{$\E'\subseteq\E$ such that $\E'$ is a set  of exceptional axioms \wrt~$(\T,\E)$}
$\E':=\emptyset$\;
$\norm{\KB} =  \norm{\T} \cup \norm{\E}$\; 
 \ForEach{$B \dsubs A \in\E$}{	
   	\If{$\norm{\KB} \entails \norm{B}\subs \neg \norm{B}$}{$\mathcal{E'}$ := $\mathcal{E'}\cup\{B \dsubs A\}$\;}
}
\Return{$\mathcal{E'}$}
}
\end{procedure}

\begin{example}[$\mathtt{Exceptional.Core}$]\label{ex_guiding_7}
\nd We continue using our guiding example to go through the application of the procedure $\mathtt{Exceptional.Core}$, but we simplify the knowledge base into $\KB'=\tuple{\T',\D'}$, with 
        \begin{itemize}
        \item $\T'=\{CDu\subs Du, Du\subs \exists P, \exists P^-\subs D,P\subs uses\}$\footnote{We remind (see Remark \ref{remexpdllitecore}) that the axioms $Du\subs \exists P$, $\exists P^-\subs D$, $P\subs uses$ (where $P$ is a newly introduced role name) represent the correct \dllite~formulation of the axiom $Du\subs \exists uses.D$  in the TBox. While in the \dllitehornH~case we used the succinct formulation,  we use the extended form in the \dllitecoreH~case since it is relevant for the $n$-transformation.},
        \item $\D'=\{Y\dsubs H, Du\dsubs\neg H,Du\dsubs Y, CDu\dsubs H\}$,
    \end{itemize}

    in order to use only \dllitecoreH~expressivity.
    
    As in Example \ref{ex_guiding_3}, the output $\E'$ of $\mathtt{Exceptional.Core}(\T',\D')$ must contain the exceptional axioms in $\D'$ \wrt~$\KB'$.

In line 1 $\E'$ is initialised as the empty set. In line 2, following \ref{ntrans}, the set $\norm{\KB'}=\tuple{\norm{\T'},\norm{\D'}}$ is defined as:
\begin{itemize}
    \item $\norm{\T'}=\T'\cup\{\norm{Du}\subs \norm{\exists P}, \norm{\exists P^-}\subs \norm{D}, \norm{CDu}\subs \norm{Du} \}$
    \item $\norm{\D'}=\{\norm{Y}\subs \norm{H}, \norm{Du}\subs\neg \norm{H}$,
    $\norm{Du}\subs \norm{Y}$, $\norm{CDu}\subs \norm{H}\}$.
\end{itemize}

    We execute lines 3-5 for each axiom in $\norm{\D'}$. We can check that
    \begin{eqnarray*}
    \norm{\T'}\cup\norm{\D'} & \entails & \norm{Du}\subs\neg \norm{Du} \\
    \norm{\T'}\cup\norm{\D'} & \entails & \norm{CDu}\subs\neg \norm{CDu} \\
    \norm{\T'}\cup\norm{\D'} & \not\entails & \norm{Y}\subs\neg \norm{Y} \ .
    \end{eqnarray*}
   
   \nd Hence the procedure returns $\E'=\{Du\dsubs\neg H,Du\dsubs Y, CDu\dsubs H, \}$, that are exceptional DIs \wrt~$\KB'$.
\qed 
\end{example}

 \begin{procedure}[h]
\caption{ComputeRanking.Core($\KB$)\label{algrankC}}
{
\KwIn{Satisfiable defeasible \dllitecoreH~KB $\KB=\tuple{\T,\D}$}
\KwOut{Defeasible \dllitecoreH~KB $\tuple{\T^*,\D^*}$, partitioning (ranking) $\RR =\{\D_0,\ldots,\D_n\}$ of $\D^*$}
$\T^*$:=$\T$\;
$\D^*$:=$\D$\;
$\RR$:=$\emptyset$\;
\Repeat {$\D_\infty=\emptyset$}
{$i$ := $0$\;
$\mathcal{E}_{0}$ := $\D^*$\;
$\mathcal{E}_{1}$ := $\mathtt{Exceptional.Core}$($\T^*,\mathcal{E}_{0}$)\;
\While{$\E_{i+1}\neq\E_{i}$}{
$i$ := $i$ + 1\;
$\E_{i+1}$ := $\mathtt{Exceptional.Core}$($\T^*,\E_{i}$)\;
}
$\D_\infty$ := $\E_{i}$\;
$\D^*$ := $\D^*\setminus\D_\infty$\;
$\T^*$ := $\T^*\cup\{B \subs \neg B\mid B\dsubs A \in \D_\infty\}$\;}

\For{$j$ = $1$ to $i$}{
$\D_{j-1}$ := $\E_{j-1}\setminus\E_{j}$\;
$\RR$ := $\R\cup\{\D_{j-1}\}$\;
}
\Return{$\tuple{\tuple{\T^*,\D^*},\RR}$}
}
\end{procedure}


\begin{example}[$\mathtt{ComputeRanking.Core}$, Example \ref{ex_guiding_7} cont'd]\label{ex_guiding_8}
We now run the procedure $\mathtt{ComputeRanking.Core}(\KB')$.

As for $\mathtt{ComputeRanking.Horn}$, the outputs of the procedure will be two: 
\begin{enumerate}
    \item A KB $\KB^*=\tuple{\T^*,\D^*}$,  equivalent to $\KB'=\tuple{\T',\D'}$, but in which all the DIs with infinite rank have been transformed into equivalent classical inclusions.
    \item A ranking $\RR =\{\D_0,\ldots,\D_n\}$ of $\D^*$ that partitions all the DIs according to their rank.
\end{enumerate}

\nd $\T^*$ and $\D^*$ are initialised as $\T'$ and $\D'$, respectively. The algorithm calls iteratively the procedure $\mathtt{Exceptional.Core}$, determining a sequence of DIs $\E_0,\ldots,\E_i$: namely,

\begin{itemize}
    \item $\E_0=\D'$.
    \item $\E_1$ is the set containing all the DIs that are exceptional \wrt~$\tuple{\T^*,\D^*}$, that is, $\tuple{\T',\E_0}$. By Example~\ref{ex_guiding_7}, $\E_1=\{Du\dsubs\neg H,Du\dsubs Y, CDu\dsubs H\}$.
    \item $\E_2$ is the set containing all the DIs that are exceptional \wrt~$\tuple{\T',\E_1}$. 
    \\ $\mathtt{Exceptional.Core}(\T',\E_1)$ returns $\E_2=\{CDu\dsubs H\}$.
    \item $\E_3$ is the set containing all the DIs that are exceptional \wrt~$\tuple{\T',\E_2}$. It turns out that $\E_3=\emptyset$, and consequently  $\E_4=\emptyset$ too.
    \item Since $\E_3=\E_4=\emptyset$, we exit the {\bf while} loop in lines 8-10, and we have:
    \begin{itemize}
        \item $\D_\infty=\E_3=\emptyset$;
        \item Since $\D_\infty=\emptyset$, $\D^*$ does not change, that is, $\D^*=\D'$;
        \item The same for $\T^*$, that remains equal to $\T'$.      
    \end{itemize}
\end{itemize}

\nd Since $\D_\infty=\emptyset$, we exit the {\bf repeat-until} loop in lines 4-14, and we end up with the ranking $\RR=\{\D_0,\D_1,\D_2\}$, where
\begin{itemize}
    \item $\D_0=\E_0\setminus\E_1=\{Y\dsubs H\}$;
    \item $\D_1=\E_1\setminus\E_2=\{Du\dsubs \neg H, Du\dsubs Y\}$;
    \item $\D_2=\E_2\setminus\E_3=\{CDu\dsubs  H\}$.
\end{itemize}

\nd Eventually, the procedure outputs the KB $\tuple{\T',\D'}$ and the ranking $\RR=\{\D_0,\D_1,\D_2\}$.
\qed 
\end{example}

\begin{procedure}[h]
\caption{RationalClosure.Core($\KB,\alpha$)} \label{RCalgC}
{
\KwIn{Defeasible \dllitecoreH~KB $\KB$ and DI $\alpha$ of the form  $B \dsubs A$}
\KwOut{$\mathtt{true}$ iff $B \dsubs A$ is in the Rational Closure of $\KB$}
$CL : = \T \entails B\subs A$ //Check if $\alpha$ holds classically\;
  \If{CL}{\Return{$CL$}}
  $\tuple{\tuple{\T^*,\D^*},\{\D_0,\ldots,\D_n\}}$ := $\mathtt{ComputeRanking.Core}$($\KB$)\;
  $CL : = \T^* \entails B\subs A$ //Check if $\alpha$ holds classically, after finding strict knowledge in $\D$\;
  \If{CL}{\Return{$CL$}}
  //Compute $B$'s rank $i$\;
  $i$ :=  $0$; $\D_\R$ :=  $\D^*$\;
  Let $\norm{\T^*}$ be the $n$-transformation of $\T^*$\;
  Let $\norm{\D_\R}$ bet the $n$-transformation of $\D_0$\;
  \While{${\norm{\T^*}} \cup \norm{\D_\R} \entails \norm{B}\subs \neg \norm{B}$ {\bf and} $\D_\R\neq\emptyset$}{
    $\D_\R$ := $\D_\R\backslash\D_{i}$; $i$ := $i + 1$\;
    Let $\norm{\D_\R}$ be the $n$-transformation of $\D_\R$\;
  }
  // Check now if $\alpha$ holds under RC\;
  \eIf{$\D_\R \neq \emptyset$}{\Return{${\norm{\T^*}} \cup \norm{\D_\R} \entails \norm{B}\subs \norm{A}$}}{\Return{$CL$}}
  }
\end{procedure}

\begin{example}[$\mathtt{RationalClosure.Core}$, Example 
\ref{ex_guiding_8} cont'd]\label{ex_guiding_9}

Finally, we go through the $\mathtt{RationalClosure.Core}$ procedure. We run $\mathtt{RationalClosure.Core}(\KB',CDu\dsubs Y)$. That is, we wonder if in the rational closure of $\KB'$ we can conclude that controlled drug users are presumably young. 

We go through the procedure $\mathtt{RationalClosure.Core}(\KB', CDu\dsubs Y)$.  In line 1 we check whether the inclusion $CDu\subs Y$ is a consequence of the TBox $\T'$. Since $\T'=\{Du\subs \exists use.D, CDu\subs Du\}$, that is not the case.

In line 4 we enter the outputs of $\mathtt{ComputeRanking.Core}(\KB')$, that, as explained in Example \ref{ex_guiding_8}, consist of $\T^*=\T'$, $\D^*=\D'$, and $\RR=\{\D_0,\D_1,\D_2\}$, with $\D_0=\{Y\dsubs H\}$, $\D_1=\{Du\dsubs \neg H, Du\dsubs Y\}$, and $\D_2=\{CDu\dsubs  H\}$.

Since $\T^*=\T'$, lines 5-7 are redundant, as the same check has been done in lines 1-3. From line 9 we start calculating the rank of $CDu$. We set $i=0$ and $\D_{\R}=\D$, and check whether $CDu$ has rank $0$ by checking whether it can be populated by an object in layer $0$. The  procedure is the following:

\begin{itemize}
    \item Line 9: we set $i=0$ (we are considering layer $0$). $\D_R$ represents the set of defeasible axioms we consider at each step, hence we initialise $\D_\R=\D'$. 
    
    \item Line 10-11: We apply the n-transformation to $\tuple{\T^*\D_\R}$:

    \begin{itemize}
        \item $\norm{\T^*}=\T'\cup\{\norm{Du}\subs \norm{\exists P}, \norm{\exists P^-}\subs \norm{D}, \norm{CDu}\subs \norm{Du} \}$.
        \item $\norm{\D_\R}=\{\norm{Y}\subs \norm{H}, \norm{Du}\subs\neg \norm{H}$,
    $\norm{Du}\subs \norm{Y}$, $\norm{CDu}\subs \norm{H}\}$.
    \end{itemize}

    \item Lines 12-14: we start the while loop by checking if $\norm{\T^*}\cup \norm{\D_\R}$ implies the negation of the antecedent of the query, in particular, whether $\norm{\T^*}\cup \norm{\D_\R}\entails  \norm{CDu}\subs \neg \norm{CDu}$. That is actually the case.

    Hence we repeat the while-loop by setting:
    \begin{itemize}
        \item $\D_\R=\D_1\cup\D_2$;
        \item $i=1$;
        \item $\norm{\D_\R}$ is the $n$-transformation of $\D_\R$.
    \end{itemize}

Again, for $i=1$ it holds that $\norm{\T^*}\cup \norm{\D_\R}\entails  \norm{CDu}\subs \neg \norm{CDu}$, and we repeat the while-loop for $\D_\R=\D_2$ and $i=2$. Eventually, we have $\norm{\T^*}\cup \norm{\D_\R}\not \entails  \norm{CDu}\subs \neg \norm{CDu}$, and we can exit the while-loop.

\item Lines 16-19: Since $\D_\R\neq \emptyset$, we move to line 17 to check whether  $CDu\dsubs Y$ is in the RC of $\KB'$, and we do that by checking whether $\norm{\T^*}\cup \norm{\D_\R} \entails  \norm{CDu}\subs  \norm{Y}$. 
Now, it can be verified that
\begin{itemize}
    \item $\T'\cup\{ \norm{Du}\subs \norm{\exists P}, \norm{\exists P^-}\subs \norm{D}, \norm{CDu}\subs \norm{Du}, \norm{CDu}\subs \norm{H}\}\not\entails \norm{CDu}\subs  \norm{Y}$,
\end{itemize}
    
 and we can conclude that $CDu\dsubs Y$ is not in the RC of our KB $\KB'=\tuple{\T',\D'}$.

\end{itemize}

\qed 
\end{example}

\nd To conclude this section, please note that also for the \dllitecoreH~case we can decide whether a classical \dllitecoreH~inclusion $B\subs C$ or $R\subs S$ is in the RC of a KB by simply checking whether it is derivable from the TBox. Since \dllitecoreH~is just a sublanguage of \dllitehornH~and we know that $\mathtt{ComputeRanking.Core}$ corresponds to $\mathtt{ComputeRanking.Horn}$ (see  \ref{defdlliteproofs}),  Corollary \ref{prop_classic_inclusion_core} below follows immediately from Proposition \ref{prop_classic_inclusion}.

\begin{corollary}\label{prop_classic_inclusion_core}
    Let $\KB=\tuple{\T,\D}$ any \dllitecoreH~defeasible KB, and $\T^*$ be the TBox obtained from $\mathtt{ComputeRanking.Core}(\KB)$. Given any  be  classical \dllitecoreH~inclusion $\alpha$, 
    $\KB\minent \alpha$ iff $\T^*\entails\alpha$.
\end{corollary}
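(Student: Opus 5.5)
The plan is to reduce Corollary~\ref{prop_classic_inclusion_core} to Proposition~\ref{prop_classic_inclusion}. The only thing needing proof is that, on a \dllitecoreH~input, $\mathtt{ComputeRanking.Core}$ returns a TBox equivalent to the one returned by $\mathtt{ComputeRanking.Horn}$.

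First, since every defeasible \dllitecoreH~KB $\KB=\tuple{\T,\D}$ is also a defeasible \dllitehornH~KB, Proposition~\ref{prop_classic_inclusion} applies to it. Let $\T^*_H$ be the TBox output by $\mathtt{ComputeRanking.Horn}(\KB)$. Then, for every classical inclusion $\alpha$, and in particular every \dllitecoreH~inclusion, we have $\KB\minent\alpha$ iff $\T^*_H\entails\alpha$. It therefore suffices to show that $\T^*_H$ and the TBox $\T^*_C$ output by $\mathtt{ComputeRanking.Core}(\KB)$ have the same models.

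Second, we compare the two procedures run by run. We argue by induction on the iterations of the \textbf{repeat} loop, and within each iteration on the \textbf{while} loop. The invariant is that both runs hold the same $\D^*$, that their current TBoxes are equivalent, and that both compute the same sets $\E_j$.

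Base case: both runs start from $\T$ and $\D$.

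Inductive step: suppose the current TBoxes are equivalent and the current set of DIs $\E$ is the same in both runs. Then $\mathtt{Exceptional.Horn}(\T^*,\E)$ and $\mathtt{Exceptional.Core}(\T^*,\E)$ return the same set. To see this, fix a DI $B\dsubs A\in\E$. The Horn test $\T^*\cup\E^\delta\entails\{B,\delta\}\subs\bot$ is equivalent to $\T^*\cup\E^\delta\entails\{B,\delta\}\subs\neg B$, since $\{B,\delta\}\subs\neg B$ already forces $B\sqcap\delta$ to be empty. By Proposition~\ref{prop_dllitecoreexcept}, applied to the KB $\tuple{\T^*,\E}$, this holds iff $\norm{(\T^*\cup\E)}\entails\norm{B}\subs\neg\norm{B}$, which is exactly the Core test. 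Proposition~\ref{prop_dllitecoreexcept} needs $\T^*$ to be a \dllitecoreH~TBox. This is preserved, because the Core procedure only adds axioms $B\subs\neg B$. Replacing $\T^*$ by an equivalent TBox changes neither entailment, since both tests depend only on models.

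Hence both runs produce the same $\D_\infty$ and the same new $\D^*$. The Horn run adds $\{B\}\subs\bot$ and the Core run adds $B\subs\neg B$ for the same $B$'s, and these axioms are equivalent by Remark~\ref{remexpdllitecore}. So the TBoxes remain equivalent, and termination happens at the same iteration. We conclude that $\T^*_C$ and $\T^*_H$ are equivalent.

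Third, combining the two steps, for any classical \dllitecoreH~inclusion $\alpha$ we get $\KB\minent\alpha$ iff $\T^*_H\entails\alpha$ iff $\T^*_C\entails\alpha$.

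The main obstacle is the second step. The delicate points are to apply Proposition~\ref{prop_dllitecoreexcept} to the intermediate KBs $\tuple{\T^*,\E}$ rather than to the original KB, and to check that $\T^*$ stays within \dllitecoreH~throughout the Core run. Both should follow from the form of the axioms added on line 13.
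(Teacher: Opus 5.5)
Your proposal is correct and follows the paper's own route. The paper derives the corollary directly from Proposition~\ref{prop_classic_inclusion} together with the equivalence of $\mathtt{ComputeRanking.Core}$ and $\mathtt{ComputeRanking.Horn}$, which it justifies in the proof of Theorem~\ref{prop_dllitecoreRC} via Proposition~\ref{prop_dllitecoreexcept}; your run-by-run induction, applying that proposition to the intermediate KBs $\tuple{\T^*,\E}$, spells out what the paper leaves implicit.

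One small correction: you say that replacing $\T^*$ by an equivalent TBox changes neither test, but this holds only for the Horn test. The Core test is not invariant, because the $n$-transformation $\norm{\T^*}$ is defined axiom by axiom and so depends on the syntax of $\T^*$. This does not affect your argument, since you apply Proposition~\ref{prop_dllitecoreexcept} to the Core run's own TBox and only need invariance on the Horn side.
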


\nd To summarise, given a \dllitecoreH~defeasible KB $\KB=\tuple{\T,\D}$:

\begin{itemize}
    \item we can run $\mathtt{ComputeRanking.Core}(\KB)$ to obtain the KB $\KB^*=\tuple{\T^*,\D^*}$;
    \item to check whether a DI $B\dsubs A$ is in the RC of $\KB$, it suffices to run \\
    $\mathtt{RationalClosure.Core}(\KB,B\dsubs A)$;
    \item to check whether a GCI $B\subs C$ is in the RC of $\KB$, it suffices to check whether $\T^*\entails B\subs C$;
    \item to check whether a role inclusion $R\subs S$ is in the RC of $\KB$, check whether $\T^*\entails R\subs S$.
\end{itemize}

\nd Moreover, by Proposition \ref{prop_dllitecoreexcept}, we can check whether a concept $B$ is exceptional w.r.t. $\KB$ (that is, whether $\norm{\KB} \entails \norm{B} \subs \neg \norm{B}$) by checking the \emph{normal negative closure} of $\KB$ (that is, whether $\norm{B} \impc \notc \norm{B} \in \clnn(\norm{\KB})$).

\subsection{Rational Closure with ABox}\label{sect_RC_ABox}

\nd We now consider also the case of  RC in the presence of an ABox. Our guiding rationale is the application of the \emph{Presumption of Typicality} also to the individuals occurring in the ABox: that is, we assume that an individual $a$ behaves according to our expectations as much as possible, compatibly with the information we have about it. 

\begin{remark}\label{rem_ranking}
    In what follows we assume that we work with KBs $\KB=\tuple{\T,\D,\A}$ s.t. the pair $\tuple{\T,\D}$ has already been processed using  the procedure $\mathtt{ComputeRanking.Horn}$ (or the equivalent procedure $\mathtt{ComputeRanking.Core}$, depending on the expressivity of the KB). That is, we have the ranking 
    $\RR= \{\D_0,\ldots,\D_n\}$ associated to $\tuple{\T,\D}$, and each defeasible inclusion $\{B_1,\ldots,B_n\}\dsubs A$ of rank $\infty$ has been eliminated from $\D$ while the corresponding inclusion $\{B_1,\ldots,B_n\}\subs \bot$ has been added to $\T$.
\end{remark}

\nd As a first step, we show how to check whether a defeasible KB $\KB=\tuple{\T,\D,\A}$ is satisfiable. We have seen that every \dllitehornH~KB with an empty ABox is always satisfiable, both in the case of a classical TBox $\T$ (Proposition \ref{tsat}) and of a defeasible KB $\tuple{\T,\D}$ (Proposition \ref{dsat}). However, as pointed out in Section~\ref{reassect}, the presence of an ABox, without a DBox, may  lead to unsatisfiability and we have seen how to check it for a classical KB $\KB =\tuple{\T,\A}$. We next show how to check the satisfiability in case all three components are present, \ie~how to check the satisfiability of a defeasible KB $\KB = \tuple{\T,\D,\A}$. We do so by showing that it suffices to check the satisfiability of its classical part $\tuple{\T,\A}$, if $\tuple{\T,\D}$ has already been treated as pointed out in Remark \ref{rem_ranking}. The following propositions are proved for \dllitehornH~KBs, and consequently hold also for \dllitecoreH~KB's, since the latter is just a special case of the former.


\begin{proposition}\label{prop_defcons}
    Let $\KB=\tuple{\T,\D,\A}$ be any defeasible \dllitehornH~KB, processed as indicated in Remark \ref{rem_ranking}. Then $\KB$ is satisfiable iff $\tuple{\T,\A}$ is satisfiable. 
\end{proposition}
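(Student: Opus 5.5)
The "only if" direction is immediate. Any ranked model $\R$ of $\KB=\tuple{\T,\D,\A}$, once its ranking is forgotten, is a classical interpretation satisfying $\T$ and $\A$. Hence $\tuple{\T,\A}$ is satisfiable.

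For the "if" direction, the plan is to take a classical model of $\tuple{\T,\A}$ and equip it with a ranking that satisfies every DI in $\D$. The first attempt, putting all objects in layer $0$, does not work: it would force $\D$ to hold classically, which may contradict $\A$ (e.g.\ penguin individuals). Instead I will combine a classical model with a ranked model of $\tuple{\T,\D}$.

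\begin{enumerate}
\item Since $\tuple{\T,\A}$ is satisfiable, fix a classical model $\I$ of $\tuple{\T,\A}$.
\item Obtain a ranked model $\OI$ of $\tuple{\T,\D}$, for instance the big ranked model of Definition~\ref{Def:Bigrankedmodel}. By Proposition~\ref{dsat}, $\tuple{\T,\D}$ is satisfiable, so $\OI$ exists.
\item Build $\R$ as the disjoint union of $\OI$ and $\I$, interpreting the individuals as in $\I$. Keep the heights of $\OI$-elements, and place every $\I$-element at a height $N$ strictly above all the finite heights used by $\OI$ (or at the top of a suitable extension).
\item Check that $\R$ satisfies $\T$ and $\A$. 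Concept and role inclusions are preserved under disjoint union, since each component satisfies $\T$, and $\A$ holds in the $\I$-part.
\item Check the DIs. For $\{B_1,\ldots,B_n\}\dsubs A\in\D$, the conjunction $B_1\sqcap\cdots\sqcap B_n$ is nonempty in $\OI$, because the infinite-rank DIs were removed and turned into strict axioms as in Remark~\ref{rem_ranking}, so every remaining LHS has finite rank. The minimal elements of the conjunction in $\R$ therefore lie in the $\OI$-part, where they satisfy $A$ because $\OI\sat\D$. The objects of $\I$ are never minimal, so they impose no constraint.
\end{enumerate}

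The main obstacle is step 5, in two respects. First, I must justify that each remaining LHS is actually satisfied at some finite layer of $\OI$. For this I invoke the characterisation of rank from~\cite{BritzEtAl2021}: a finite rank $i$ means the LHS is non-exceptional in $\KB^i$, hence realised in some ranked model at a finite height. The big ranked model collects all such models, so it realises the LHS.

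Second, the heights of $\OI$ may be unbounded, so there may be no finite $N$ above them. To handle this, I truncate: only the finitely many DIs in $\D$ matter, and each one's LHS is realised at height at most the maximal rank $n$. I therefore replace $\OI$ by a ranked model of $\tuple{\T,\D}$ whose layers are bounded by $n$, using the ranking $\RR$, and put the $\I$-elements at height $n+1$. The convexity condition of Definition~\ref{Def:Ranked_order} then holds automatically.
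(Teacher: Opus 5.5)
Your ``only if'' direction matches the paper's. The ``if'' direction has a genuine gap in the final truncation step. You claim that a ranked model of $\tuple{\T,\D}$ exists whose heights are all $\le n$ and which still realises every DI antecedent; such a model need not exist. Take $\T=\{C\subs\exists P,\ \exists P^-\subs E,\ \exists P^-\subs G,\ \bar{F}\subs\neg F\}$ and $\D=\{E\dsubs F,\ G\dsubs\bar{F},\ C\dsubs H\}$. All three antecedents have rank $0$; for instance $\{C,\delta\}$ is satisfiable, because the $P$-successor need not lie in $\delta$. So nothing is moved to $\T$, and $n=0$. An object of $E\cap G$ can never sit in layer $0$, since there it would be minimal in both $E$ and $G$, hence in $F\cap\bar{F}$. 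Moreover $C\neq\emptyset$ forces $E\cap G\neq\emptyset$. Hence every ranked model with all heights $\le 0$ has $C^{\R}=\emptyset$. Now take $\A=\{\cass{a}{C}\}$ and a classical model $\I$ of $\tuple{\T,\A}$ with $a^{\I}\notin H^{\I}$. Your construction places $a^{\I}$ at height $1$ as a minimal element of $C$, violating $C\dsubs H$, even though $\KB$ is satisfiable.

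The missing observation is that no bound on the heights of $\OI$ is needed. What matters is only that, in the big ranked model of $\tuple{\T,\D}$, the height of each DI antecedent equals its rank, so $\min_{\pref^{\OI}}$ of every antecedent lies in some layer $L_i$ with $i\le n$. This is a result of \cite{BritzEtAl2021}. Your own justification via non-exceptionality in $\KB^i$ only produces a model of $\KB^i$, not of $\tuple{\T,\D}$, so it does not give this. With that fact in hand, add all elements of $\I$ at height $n+1$ directly to $\OI$, as the paper does. This leaves every antecedent's minimal set unchanged, no matter how many elements of $\OI$ already sit at heights $\ge n+1$. Convexity holds because $L_0,\ldots,L_n$ are populated in $\OI$. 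Your worry that no finite $N$ lies above all heights of $\OI$ came from requiring the $\I$-part to sit above everything, which is stronger than necessary.
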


\nd As a consequence, any of the methods illustrated in Section~\ref{kbsatsect} (see Propositions~\ref{propCalvanese05}, \ref{propCalvanese0607}, \ref{propCalvanese0607B} and~\ref{nominalABoxBis}) can be applied to check the satisfiability of a defeasible KB once the rank of the DIs has been computed.

If $\KB=\tuple{\T,\D,\A}$ is satisfiable, in order to model the \emph{Presumption of Typicality} we  consider the minimal models also for the ABox, that is, the models in which the individuals in the ABox are interpreted in the lowest possible layers: given the available information, we assume that each individual is interpreted in the lowest possible rank, so that we can apply to it as much defeasible information as possible. For example, if in the ABox the only information we have about 
\eg~Tweety is that it is a bird ($\cass{tweety}{Bird}$), and we know that birds usually fly ($Bird\dsubs Fly$), while atypical birds like penguins and ostriches do not fly, we would like to be able to conclude that, presumably, Tweety flies ($\dass{tweety}{Fly}$), since we have no information hinting that Tweety is an atypical bird.
%
%

From the semantics point of view, we are going to consider the \emph{minimal configurations} also for the content of the ABox. We have seen in Section \ref{sect_RCsemantics} that in case a KB is made of a TBox and a DBox only, we have a unique minimal configuration, represented by the \emph{Big Ranked Model}.  However, in general, for defeasible DL KBs that include an ABox, the typical problem we have to address is to deal with the presence of multiple minimal interpretations of the ABox (see, \eg, Example 7 in \cite{CasiniStraccia2010} and Example 11 in \cite{Casini19}). Here is an illustrative example. 

\begin{example}\label{example_multiple}
    Consider the following defeasible (non \dllitehornH) 
    KB, $\KB=\tuple{\T,\D,\A}$, with
    \begin{eqnarray*}
        \T & =& \{        C\subs \neg D\} \\
        \D & = & \{A\dsubs C, B\dsubs C, \{A,\exists P.C\}\dsubs D \} \\
        \A & =  & \{\cass{a}{A},\cass{b}{B},\rass{a}{b}{P}\} \ ,
    \end{eqnarray*}

    \nd where $A,B,C$ and $D$ are concept names and $P$ is a role name. 
    Consider the \emph{Big Ranked Model} $\OI$ of $\tuple{\T,\D}$. $\D$ will be partitioned by a ranking procedure such as     $\mathtt{ \ref{algrankH}}$  into $\D_0=\{A\dsubs C, B\dsubs C\}$ and $\D_1=\{\{A,\exists P.C\}\dsubs D\}$. We want to show that it is impossible to have a single interpretations in which both $a$ and $b$ are interpreted in the lowest possible rank.

In fact, assume we interpret $a$ into some object $x$ in the bottom layer $L_{0}$. Then $a$ must satisfy all the DIs in $\D$: with that we mean that for all $E\dsubs F \in \D$, if $x \in \min_{\pref^{\OI}}(E^{\OI})$ then $x \in F^\OI$. Since $\cass{a}{A}\in \A$, we have $x\in A^{\OI}$, and more precisely, being $a\in L_0$, $x\in \min_{\pref^{\OI}}(A^{\OI})$. Therefore, as $\OI$ satisfies $A\dsubs C$, we also have  $x\in C^\OI$.


Assume now that we also interpret  $b$ into some object $y$ in the bottom layer $L_{0}$. Then also  for $y$ we have that, for all $E\dsubs F \in \D$, if $y \in \min_{\pref^{\OI}}(E^{\OI})$ then  $y \in F^\OI$ and, thus, as $\OI$ satisfies $B\dsubs C$ and  $y\in \min_{\pref^{\OI}}(B^{\OI})$, it follows that $y\in C^\OI$ too.
 %
But $\OI$ has to satisfy also $\{A, \exists P.C\}\dsubs D$ and, thus,
$x\in D^\OI$. But, since $x\in C^\OI$ and $C \subs \neg D\in\T$,  $x\notin D^\OI$, which is impossible. Consequently, if $x$ and $y$ are both in the minimal layer, we have a contradiction.
%
%
Therefore, we have two possible solutions:
\begin{itemize}
    \item we interpret $a$ into some object in layer $L_{0}$ and $b$ into some object in layer $L_{1}$. In this way we are  forced to conclude only $\cass{a}{C}$ and $\cass{a}{\neg D}$, but neither $\cass{b}{C}$ nor $\cass{a}{D}$;
    \item we interpret $b$ into some object in layer $L_{0}$ and $a$ into some object in layer $L_{1}$. In this way we are  forced to conclude  $\cass{b}{C}$, but neither $\cass{a}{C}$ nor $\cass{a}{D}$.
\end{itemize}

\nd In summary, we have two possible minimal interpretations for  $\KB$'s ABox, which are depicted in Figure \ref{figalternative}.
\qed
\end{example}

\begin{figure}
    \centering
    \includegraphics[width=1\linewidth]{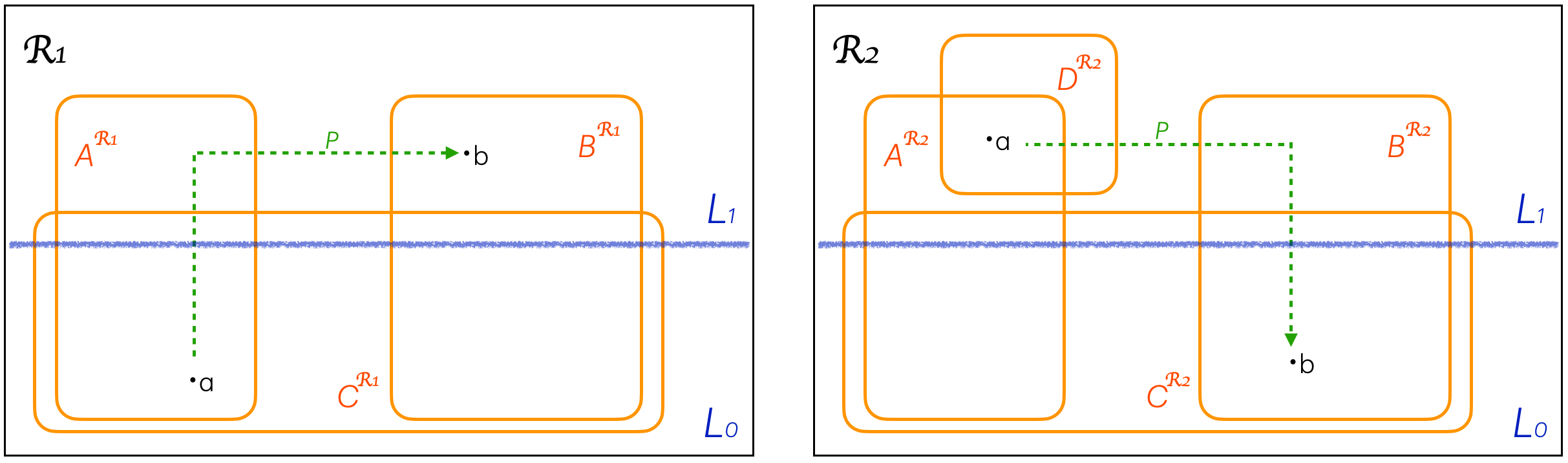}
    \caption{ two alternative minimal configurations for the ABox in Example~\ref{example_multiple}.}
    \label{figalternative}
\end{figure}


\nd The  example above shows a problem that is already present using some low-expressivity DL such as $\mathcal{EL}_\bot$~\cite{Casini19}, but please note that the KB above, in particular the DI $\{A,\exists P.C\}\dsubs D$,  is not expressible in $\dllitehornH$.  

What we are going to show next is that the occurrence of multiple minimal interpretations of an ABox, while it is a possibility in most of the DLs, is not possible in $\dllitehornH$ (and, consequently, in $\dllitecoreH$).

\begin{definition}[Minimal Height]\label{Def:minheight}
Let $\KB=\tuple{\T,\D,\A}$ be a satisfiable defeasible \\ \dllitehornH~KB and $\mathfrak{R}_{\KB}$ be the set of all the ranked models of $\KB$. We denote with $h_{\KB}(a)$  the \emph{minimal height} of $a$ \wrt~all the models in $\mathfrak{R}_{\KB}$, that is,
\[
h_{\KB}(a)=\min\{h_{\RI}(a^\RI)\mid \RI\in\mathfrak{R}_{\KB}\}.
\]
\end{definition}

\nd To decide the minimal height of an individual \wrt~$\KB$, it is actually sufficient to consider only the models in $\Moddelta{\KB}$ (see Section \ref{sect_RCsemantics}). This will be useful in the following sections.

\begin{proposition}\label{prop_minheight}
Consider any satisfiable defeasible \dllitehornH~KB $\KB=\tuple{\T,\D,\A}$ and any individual $a$. The minimal height of $a$ depends on the models in $\Moddelta{\KB}$ only, that is,
\[
h_{\KB}(a)=\min\{h_{\RI}(a^\RI)\mid \RI\in \Moddelta{\KB}\}.
\]
\end{proposition}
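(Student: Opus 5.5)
One inequality is immediate: $\Moddelta{\KB}\subseteq\mathfrak{R}_{\KB}$, so $h_{\KB}(a)\leq\min\{h_{\RI}(a^\RI)\mid \RI\in \Moddelta{\KB}\}$. For the converse we will show that every ranked model $\RI\in\mathfrak{R}_{\KB}$ can be turned into a model $\RI'\in\Moddelta{\KB}$ with $h_{\RI'}(a^{\RI'})=h_{\RI}(a^\RI)$. Applying this to a model $\RI$ realising the minimum $h_{\KB}(a)$ (which exists because heights are natural numbers) then yields the reverse inequality.

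The construction proceeds in two stages. First, we reduce to countable models via a downward L\"owenheim--Skolem style argument. Starting from the interpretations of the finitely many individuals of $\A$, we close $\Delta^\RI$ under a countable set of witnesses. For every element $x$ already selected and every basic concept $\exists R$ with $x\in(\exists R)^\RI$, we add one $R$-successor of $x$. In addition, for every layer index $j$ that is actually used, we add one element of $L_j$, so that the convexity condition of Definition~\ref{Def:Ranked_order} survives. We also add, for each of the finitely many conjunctions of basic concepts occurring as LHS of a DI, one element of $\min_{\pref^\RI}$ of that conjunction. We then restrict concepts, roles and the ranking to the countable substructure $\RI_c$, keeping $h$ unchanged. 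Positive, negative and role inclusions are preserved: membership in a concept name is inherited unchanged, and membership in $\exists R$ is preserved because witnesses were added. Assertions hold because the individuals are included. DIs hold because the minimal layer of each LHS conjunction in $\RI_c$ coincides with that in $\RI$: the chosen minimal witnesses remain, and no element of lower height exists. Finally, $h_{\RI_c}(a)=h_\RI(a)$.

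Second, we pad the countable model to one with domain exactly $\Delta$. If $\RI_c$ is finite, we take the ranked union, in the sense of Definition~\ref{rankedunion}, of $\RI_c$ with countably many copies of a ranked model of $\tuple{\T,\D}$; any model of $\tuple{\T,\D}$ would do, for instance the big ranked model or the copy of $\RI_c$ with individuals ignored. Lemma~\ref{Lemma:ClosureDisjointUnionMod} ensures the result is still a model of $\tuple{\T,\D}$, and since the individuals are interpreted only in the $\RI_c$ component, $\A$ still holds. Heights are preserved by definition of the ranked union. A caveat: merging layers could lower the minimal layer of some DI antecedent. To avoid this, the padding copies should be chosen from models of $\tuple{\T,\D}$ whose layers are contained in those of $\RI_c$; we will use copies of $\RI_c$ itself, viewed without the ABox, so that no new heights appear. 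The result is countably infinite, and a bijection with $\Delta$ transfers it into $\Moddelta{\KB}$.

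The main obstacle is checking that the DI semantics, which depend on minimality, survive both the restriction and the union. Restriction is handled by the explicit inclusion of minimal witnesses for the finitely many antecedents. The union is handled by using self-copies, which do not alter the set of heights realised by any concept. With these two points settled, the equality of minima follows.
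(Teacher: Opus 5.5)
Your argument is correct, but for uncountable models it takes a different route from the paper.

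\textbf{The paper's route.} The paper splits into two cases. For a finite model it takes the ranked union of countably many copies, which is exactly your padding step. For an uncountable model $\R'$ it keeps only the countable closure $n(a)$ of $a^{\R'}$ under existential successors and ABox role links. It then glues $n(a)$ disjointly to an auxiliary countable model $\R^*$ of $\KB$ in which every basic concept sits exactly at its rank, capping heights above the top DI rank $m$ at $m+1$. Satisfaction of the DIs there rests implicitly on two facts: ranks are lower bounds on heights in every model, and $\R^*$ populates the relevant layers. The existence of such an $\R^*$ is taken for granted.

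\textbf{Your route.} Your single L\"owenheim--Skolem restriction preserves basic-concept extensions exactly ($B^{\RI_c}=B^{\RI}\cap\Delta^{\RI_c}$). It keeps convexity by retaining one element per used layer, and keeps each DI antecedent's minimal layer by retaining one minimal element. This treats finite and infinite models uniformly. It needs no auxiliary model and no appeal to the theory of ranks. It also preserves the heights of all individuals simultaneously, which is more than the statement asks for.

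\textbf{Two small remarks.}

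First, seed the closure with $b^{\RI}$ for every $b\in\N$ (a finite set), not only the individuals of $\A$. The individual $a$ need not occur in $\A$, and the restricted interpretation must interpret every individual.

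Second, the caveat in your padding step is unnecessary. By Lemma~\ref{Lemma:ClosureDisjointUnionMod}, a ranked union of models of $\tuple{\T,\D}$ is again a model even if some antecedent's minimal layer drops. Any element at the new minimal height is minimal in its own component and so satisfies the DI there. Using self-copies is harmless, though, and coincides with the paper's finite case.
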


\nd Given a model $\RI$ of a satisfiable \dllitehornH~KB $\KB=\tuple{\T,\D,\A}$, we say that an individual $a$ is \emph{minimally interpreted} in $\RI$ \wrt~$\KB$ if $h_{\RI}(a^{\RI})=h_{\KB}(a)$. Let's go back to Example \ref{example_multiple}, where we consider one model in which the height of $a$ is $0$ and the height of $b$ is $1$, and one model in which it is the other way around. Hence we can conclude that $h_\KB(a)=h_{\KB}(b)=0$, but, as we have seen, we cannot have a single model of $\KB$ in which both $a$ and $b$ are minimally interpreted. Instead, the following Proposition \ref{prop_unique_abox} shows that, once we restrict our expressivity to \dllitehornH, the problem of the multiple minimal configurations of the ABox disappears.



\begin{proposition}\label{prop_unique_abox}
  Let $\KB=\tuple{\T,\D,\A}$ be a satisfiable defeasible \dllitehornH~KB. Then there is a model $\RI$ of $\KB$ in which every individual 
  is minimally interpreted \wrt~$\KB$, \ie~$h_{\RI}(a^{\RI})=h_{\KB}(a)$, for all individuals $a$. 
\end{proposition}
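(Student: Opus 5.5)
The plan is to build the required model as a combination of individual witness models, one for each individual. By Proposition~\ref{prop_minheight}, for every individual $a\in\N_\A$ there is a model $\RI_a\in\Moddelta{\KB}$ with $h_{\RI_a}(a^{\RI_a})=h_{\KB}(a)$. Since the ABox breaks the disjoint union model property, the ranked union of Lemma~\ref{Lemma:ClosureDisjointUnionMod} cannot be used directly: the individuals would be interpreted several times. The construction will therefore fix one element per individual and let the unnamed parts come from the big ranked model $\OI$ of $\tuple{\T,\D}$.

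\textbf{Step 1: a layer-wise characterisation of each individual.} For each $a$, let $\tau_a$ be the set of basic concepts $B$ with $a^{\RI_a}\in B^{\RI_a}$. In $\RI_a$, the element $a^{\RI_a}$ sits at height $h_\KB(a)=k$. It therefore satisfies the $\delta$-transformed DIs of $\D\setminus\bigcup_{j<k}\D_j$, in the sense used by $\mathtt{RationalClosure.Horn}$. I would aim to show the following intermediate lemma:
\begin{quote}
$h_\KB(a)$ is the least $k$ such that $\tuple{\T\cup\D_{\delta_k},\ \A\cup\{\cass{a}{\delta_k}\}}$ is classically satisfiable, where $\D_{\delta_k}$ is the $\delta$-transformation of the DIs of rank $\geq k$.
\end{quote}
This mirrors Proposition~\ref{nominalABox}.

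\textbf{Step 2: the key monotonicity fact, which is the main obstacle.} Define the classical KB
\[
\KB^{\ast}=\tuple{\T\cup\textstyle\bigcup_{a}\D_{\delta^a},\ \A\cup\{\cass{a}{\delta^a}\mid a\in\N_\A\}},
\]
with one fresh $\delta^a$ per individual, each attached to the DIs of rank $\geq h_\KB(a)$. I must show $\KB^{\ast}$ is satisfiable. Suppose it is not. By Proposition~\ref{propCalvanese0607B} (or Proposition~\ref{Theo15calv07}), some NI $\{B_1,\ldots,B_n\}\subs\neg B_{n+1}$ in the NI-closure of the TBox of $\KB^\ast$ is violated at a single individual $b$. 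The crucial point is the Horn structure of \dllitehornH: the left-hand sides of the DIs only use basic concepts and $\delta$, and all NI-closure rules ($\hHTrans$, $\RhH_i$, $\hHContr$) propagate along one object. Unqualified existentials never transmit information from a neighbour back to the object. Hence a derivation of the clash at $b$ can involve only the $\delta$-atoms asserted of $b$ itself, namely $\delta^b$. This gives a clash already in $\tuple{\T\cup\D_{\delta^b},\A\cup\{\cass{b}{\delta^b}\}}$, contradicting Step 1. Precisely this fails in Example~\ref{example_multiple}, where the qualified $\exists P.C$ on a LHS lets $b$'s typicality affect $a$. I would formalise this by induction on the NI-closure derivation, tracking which $\delta^a$ can occur in a derived NI. One subtlety to watch: role assertions make $\exists R$ true at $b$ from the data, not from any neighbour's $\delta$, so this is harmless.

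\textbf{Step 3: turning the classical model into a ranked one.} Take the chase model $\canhH(\KB^\ast)$, or any finite model by Remark~\ref{finitedllitehornH}. Put each named $a$ at height $h_\KB(a)$, and give anonymous elements height $0$ wherever they satisfy $\delta$-conditions compatibly. Then take the ranked union of this structure with $\OI$, identifying nothing. The $\OI$ part supplies minimal witnesses at every layer for each concept of rank $k$. So for any DI of rank $j$ and a named element $a$ at height $k$, $a$ is minimal in the antecedent only if $j\geq k$, in which case $\delta^a$ forces the consequent. The main care needed is to show that adding the named elements does not lower any concept's minimal height below its rank. This holds because $a$ at height $k$ satisfies only antecedents of rank $\geq k$, by Step 1 and the fact that $\delta^a$ is consistent with them. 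Anonymous chase elements can instead be placed at the height of the corresponding element in $\OI$, or replaced by copies from $\OI$. The result is a model of $\KB$ in which every individual is minimally interpreted.
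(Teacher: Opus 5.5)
Your route differs genuinely from the paper's, and its core is sound.

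\textbf{How the paper argues.} The paper's argument is purely semantic. It first shows (Lemma~\ref{lemma_unique_abox}) that from a model $\RI_1$ minimising $a$ and a model $\RI_2$ minimising $b$ one obtains a model minimising both. The construction takes the disjoint union, re-points only $b$ to $b^{\RI_2}$, and adds the edges to and from $b$ for every role assertion entailed by $\tuple{\T,\A}$. Because basic concepts are unqualified, every element keeps exactly the basic concepts it had in its source model. So $\T$, $\D$ (heights are unchanged) and $\A$ stay satisfied, and one iterates over the individuals.

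\textbf{How your route compares.} You instead reduce everything to classical satisfiability of a single Horn KB $\KB^{\ast}$ with a private typicality atom per individual. The paper's argument is more elementary: it uses neither the completeness of $\clnhH$ (Proposition~\ref{Theo15calv07}), nor the fact that ranks lower-bound heights in all ranked models, nor $\OI$. Yours gives an explicit classical object whose models yield the minimal configuration directly, essentially what the defeasible chase and $\mathtt{ComputeRankIndividuals}$ exploit later. It also locates the reason for uniqueness in the locality of the NI-closure.

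\textbf{On Steps 1 and 2.} Only the easy half of Step~1 is used: interpret $\delta_k$ as $\{a^{\RI}\}$ in a model where $a$ has height $k$, which works because ranks lower-bound heights. In Step~2, the invariant to prove is as follows. No PI has a $\delta$ on its right-hand side, so the atom eliminated by ($\mathbf{\hHTrans}$) is never a $\delta$. Rules ($\mathbf{\hHContr}$) and ($\mathbf{\RhH_1}$)--($\mathbf{\RhH_3}$) preserve the $\delta$-atoms of an NI. Hence $\delta$-atoms only accumulate along a derivation. An NI mentioning $\delta^a$ and $\delta^{a'}$ with $a\neq a'$ is therefore never violated by the ABox, and a violated NI at $b$ already lies in $\clnhH(\T\cup\D_{\delta^b})$. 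Your phrase ``rules propagate along one object'' is not the real reason, since ($\mathbf{\RhH_3}$) crosses objects; it is harmless only because that rule fires on $\delta$-free NIs.

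\textbf{Step 3 needs repair.} You never fix a safe height for the unnamed elements. ``Height $0$ wherever they satisfy $\delta$-conditions'' is vacuous for the chase of $\KB^{\ast}$ and unsafe in general. ``The height of the corresponding element of $\OI$'' is undefined. An unnamed element placed at height $\leq n$ can become minimal in a DI antecedent without satisfying the consequent, since nothing in $\KB^{\ast}$ applies DIs to it.

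Place all unnamed elements at height $n+1$, as the paper does in Proposition~\ref{prop_defcons} and Definition~\ref{def_DB(A,h)}. The check is then immediate:
\begin{itemize}
\item For each antecedent of rank $j\leq n$, $\OI$ supplies minimal elements at height $j$ that satisfy the consequent, so elements at $n+1$ are never minimal.
\item A named $a$ at height $k$ can be minimal in an antecedent only if that antecedent has rank $\geq k$, and there $\delta^a$ forces the consequent.
\end{itemize}

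The ``non-lowering'' property you flag as the main care is therefore unnecessary, and its stated justification is off. An individual may well satisfy antecedents of rank $<k$: in Example~\ref{exrankindcore}, $b$ has height $1$ and satisfies the rank-$0$ antecedent $B$. What would matter is that it satisfies none of rank $>k$, and that follows afterwards.

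Finally, individuals not occurring in $\A$ should be added as fresh elements at height $0$ (Lemma~\ref{lemma_abox_noind}).
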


\nd Now that we know that in \dllitehornH, given a satisfiable defeasible KB $\KB=\tuple{\T,\D,\A}$, we have only one minimal configuration, we can define an entailment relation in the presence of an ABox. That is, given $\KB=\tuple{\T,\D,\A}$, let $\OI$ be the big ranked model for $\tuple{\T,\D}$. We then consider the extensions of $\OI$ that are also models for $\A$, that is, the models of $\KB$ obtained by interpreting in the domain of $\OI$ also the individuals. We indicate with $\mathfrak{O}^{\KB}=\{\OI^{\KB}_1, \OI^{\KB}_2, \ldots\}$ the set containing all the models of $\KB=\tuple{\T,\D,\A}$ obtained extending $\OI$ with the interpretation of the individuals in $\N$.

We define a preference preorder $\leq_{\A}$ over the elements of $\mathfrak{O}^{\KB}$ \wrt~the height of the individual names in each of them. That is, for any pair of models $\OI^{\KB}_i, \OI^{\KB}_j\in\mathfrak{O}^{\KB}$,

\begin{equation} \label{prefo}
\OI^{\KB}_i\leq_{\A} \OI^{\KB}_j\text{ iff, for every }a\in \N,\ h_{\OI^{\KB}_i}(a)\leq h_{\OI^{\KB}_j}(a) \ ,   
\end{equation}

\nd where $h_{\OI^{\KB}_i}(a)$ is the height of an individual $a$ in the model $\OI^{\KB}_i$ (Def. \ref{Def:height}). $\OI^{\KB}_i<_{\A} \OI^{\KB}_j$ is defined as usual as an abbreviation for $\OI^{\KB}_i\leq_{\A} \OI^{\KB}_j$ and $\OI^{\KB}_j\not\leq_{\A} \OI^{\KB}_i$, and $\OI^{\KB}_i\equiv_{\A} \OI^{\KB}_j$ as $\OI^{\KB}_i\leq_{\A} \OI^{\KB}_j$ and $\OI^{\KB}_j\leq_{\A} \OI^{\KB}_i$. Now, we take under consideration the models in $\mathfrak{O}^{\KB}$ that are minimal \wrt~$<_{\A}$, namely
\[
\min_{<_{\A}}(\mathfrak{O}^{\KB})=\{\OI^{\KB}_i\in \mathfrak{O}^{\KB}\mid \not\exists~ \OI^{\KB}_j\in \mathfrak{O}^{\KB}\text{ s.t. }\OI^{\KB}_j<_{\A}\OI^{\KB}_i\} \ .
\]

\nd That is, we consider all the models in $\mathfrak{O}^{\KB}$ that interpret each individual in the lowest possible rank, that is, as typical as possible. Proposition \ref{prop_unique_abox} guarantees that in all the models $\OI^{\KB}_i$ in $\min_{<_{\A}}(\mathfrak{O}^{\KB})$ and each individual $a\in\N$, $h_{\OI^{\KB}_i}(a)$ is the same. That is, for every pair  $\OI^{\KB}_i,\OI^{\KB}_j \in\min_{<_{\A}}(\mathfrak{O}^{\KB})$, $\OI^{\KB}_i\equiv_{\A}\OI^{\KB}_j$. We can now extend the notion of \emph{minimal entailment relation} $\minent$ in the presence of an  ABox by referring to the models in $\min_{<_{\A}}(\mathfrak{O}^{\KB})$.
\begin{definition}[$\minent$ in the presence of an ABox]\label{def_RC_abox}
    Let $\alpha$ be any \dllitehornH~assertion axiom of form $\dass{a}{A}$ or $\rass{a}{b}{P}$. A \dllitehornH~KB $\KB=\tuple{\T,\D,\A}$ \emph{minimally entails} $\alpha$ (written $\KB\minent\alpha$) iff 
    \[
    \OI^{\KB}_i\sat \alpha\text{ for every }\OI^{\KB}_i\in \min_{<_{\A}}(\mathfrak{O}^{\KB}) \ .
    \]
\end{definition}

\nd Note that we use the same symbol $\minent$ to indicate two non-monotonic entailment relations: the former for DIs and based on the big ranked model $\OI$ (Proposition \ref{prop_minent}), the latter for ABox assertions and based on the set of models $\min_{<_{\A}}(\mathfrak{O}^{\KB})$ (Definition \ref{def_RC_abox}). We use the same symbol since, when considering the DIs, referring either to $\OI$ or to the models in $\min_{<_{\A}}(\mathfrak{O}^{\KB})$ is equivalent.
\begin{proposition}\label{prop_entailment_equiv}
    Let $\KB=\tuple{\T,\D,\A}$ be a satisfiable defeasible \dllitehornH~KB and let $B\dsubs A$ be a DI.
    Then
    \[
    \OI\sat B\dsubs A\text{ iff }\OI^{\KB}_i\sat B\dsubs A\text{ for every }\OI^{\KB}_i\in \min_{<_{\A}}(\mathfrak{O}^{\KB}) \ .
    \]
\end{proposition}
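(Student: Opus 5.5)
The key observation is that every model $\OI^{\KB}_i\in\mathfrak{O}^{\KB}$ is, by construction, an extension of the big ranked model $\OI$ of $\tuple{\T,\D}$ that only fixes how the individuals in $\N$ are interpreted. I will therefore show that such an extension shares with $\OI$ the domain $\Delta^{\OI}$, the interpretation of every concept and role name, and the ranked order $\pref^{\OI}$.

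A DI $B\dsubs A$ is satisfied in a ranked interpretation iff $\min_{\pref}(B^{\cdot})\subseteq A^{\cdot}$, as in Definition~\ref{Def:Ranked_interpretation}. This condition mentions only the extensions of $B$ and $A$ and the order. None of these depend on the interpretation of individual names, because $B$ is a basic concept or a conjunction of basic concepts, and $A$ is a concept name. Hence, for every $\OI^{\KB}_i\in\mathfrak{O}^{\KB}$, we get $\OI\sat B\dsubs A$ iff $\OI^{\KB}_i\sat B\dsubs A$.

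The steps, in order, are as follows.
\begin{enumerate}
\item Make explicit from the definition of $\mathfrak{O}^{\KB}$ that $\Delta^{\OI^{\KB}_i}=\Delta^{\OI}$, $X^{\OI^{\KB}_i}=X^{\OI}$ for each concept or role name $X$, and $\pref^{\OI^{\KB}_i}=\pref^{\OI}$. Only the assignment $a\mapsto a^{\OI^{\KB}_i}$ is new.
\item By induction on the grammar of basic concepts, including $\exists R$ with inverses and conjunctions $\{B_1,\ldots,B_n\}$, show $C^{\OI^{\KB}_i}=C^{\OI}$.
\item Conclude that $\min_{\pref}$ of these sets coincide, so satisfaction of the DI coincides in each model.
\end{enumerate}

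It remains to pass from "each" to "every element of $\min_{<_{\A}}(\mathfrak{O}^{\KB})$". This requires that the set $\min_{<_{\A}}(\mathfrak{O}^{\KB})$ is nonempty; otherwise the right-hand side of the statement holds vacuously and the left-to-right direction is trivial but the converse could fail.

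Nonemptiness is the main obstacle, and I would argue it as follows. Since $\KB$ is satisfiable, Proposition~\ref{prop_unique_abox} gives a model in which every individual is minimally interpreted. Then I would argue, using Proposition~\ref{prop_minheight} and the fact that $\OI$ is the ranked union of all models in $\Moddelta{\KB}$ (Lemma~\ref{Lemma:ClosureDisjointUnionMod}), that this configuration can be realised inside $\OI$.

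The realisation goes like this. Take the witnessing model over domain $\Delta$, adjusted to lie in $\Moddelta{\KB}$ by Proposition~\ref{prop_minheight}. Its copy inside the ranked union $\OI$ keeps the heights of its elements. Interpreting each $a$ as the copy of its image there gives an element of $\mathfrak{O}^{\KB}$; this uses that role assertions are preserved within a single component of the union. Since all heights are integers bounded below by $h_{\KB}(a)$, this element is $<_{\A}$-minimal.

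With nonemptiness established, both directions follow immediately from step 3.
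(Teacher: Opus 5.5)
Your proof is correct and follows the paper's argument: the paper's proof is exactly your steps 1--3, namely that every model in $\mathfrak{O}^{\KB}$ has the same domain, the same extensions of concept and role names, and the same ranking as $\OI$, so satisfaction of $B\dsubs A$ cannot differ. Your nonemptiness argument for $\min_{<_{\A}}(\mathfrak{O}^{\KB})$ is genuinely needed for the right-to-left direction, and the paper leaves it implicit (it only asserts it from satisfiability in the proof of Proposition~\ref{calvthm29}); one fix is needed there, because Proposition~\ref{prop_minheight} is stated one individual at a time, so to get a single model in $\Moddelta{\KB}$ with all individuals minimally interpreted you should apply it to each individual and then merge the results as in the proof of Proposition~\ref{prop_unique_abox}, whose disjoint-union construction keeps the domain countable.
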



\nd The entailment relation $\minent$ is of course non-monotonic \wrt~assertions of the form $\dass{a}{B}$ as the following example shows.

\begin{example}
    Consider a KB $\KB=\tuple{\T,\D,\A}$ with
    \begin{itemize}
        \item $\T=\emptyset$;
        \item $\D=\{A\dsubs C,A'\dsubs \neg C\}$;
        \item $\A=\{\cass{a}{A}\}$
    \end{itemize}

\nd   In $\KB$ there is no exceptionality in $\D$; that is, $\D_0=\D$. $a$ can be interpreted in $L_0$ and consequently, because of the DI $A\dsubs C$, it turns out that in all the models $\OI^{\KB}_i\in \min_{<_{\A}}(\mathfrak{O}^{\KB})$, $\OI^{\KB}_i\sat\cass{a}{C}$. However, if we add to the ABox the assertion $\cass{a}{A'}$, then we are forced to interpret $a$ in  $L_1$ or higher layers, since now, if we interpret $a$ in $L_0$, we would be forced to conclude both $\cass{a}{C}$ and $\cass{a}{\neg C}$. But once in every minimal model $a$ is in  $L_1$, we are not forced anymore to conclude neither $\cass{a}{C}$ nor $\cass{a}{\neg C}$.

Similarly, we could be forced to drop a conclusion even if we change the content of $\tuple{\T,\D}$. Consider a KB $\KB'=\tuple{\T',\D',\A'}$ with
    \begin{itemize}
        \item $\T'=\emptyset$;
        \item $\D'=\{A\dsubs C\}$;
        \item $\A'=\{\cass{a}{A}, \cass{a}{A'}\}$
    \end{itemize}

\nd    Again, we can conclude that $\KB\minent \cass{a}{C}$. But if we add $A'\dsubs \neg C$ to $\D'$ we have to drop such a conclusion, since, as in the previous case, $a$ must be interpreted into an exceptional object.
\qed
\end{example}

\nd Since here we do not consider defeasibility involving role inclusions, non-monotonicity does not hold for assertions of the kind $\rass{a}{b}{P}$. In such a case the relation $\minent$ corresponds to the classical \dllitehornH~entailment $\entails$. For the latter see Section~\ref{icssect}. 

\begin{proposition}\label{prop_entailment_equiv_roles}
    Consider any satisfiable defeasible \dllitehornH~KB $\KB=\tuple{\T,\D,\A}$,  and any assertion $\rass{a}{b}{P}$.
    Then
    \[
    \KB\minent \rass{a}{b}{P}\text{ iff }\tuple{\T,\A}\entails \rass{a}{b}{P} \ .
    \]
\end{proposition}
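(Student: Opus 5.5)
We prove the two directions separately. The right-to-left direction is monotonicity; the left-to-right direction needs a model of $\KB$ that contains, up to renaming, a canonical model of $\tuple{\T,\A}$. Throughout we use Remark~\ref{rem_ranking} and Proposition~\ref{prop_defcons}: $\tuple{\T,\A}$ is satisfiable, and $\T$ already contains the strict counterparts of the DIs of rank $\infty$.

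\emph{Right to left.} Assume $\tuple{\T,\A}\entails\rass{a}{b}{P}$. Every $\OI^{\KB}_i\in\min_{<_{\A}}(\mathfrak{O}^{\KB})$ is a ranked model of $\KB$. Forgetting its order $\pref$, it is a classical model of $\tuple{\T,\A}$, hence it satisfies $\rass{a}{b}{P}$. It remains to check that $\min_{<_{\A}}(\mathfrak{O}^{\KB})$ is non-empty; otherwise the entailment holds trivially and the claim would fail in the other direction. Non-emptiness follows from Proposition~\ref{prop_unique_abox}, since heights are natural numbers and so minimal elements exist. Some care is needed to place the model given there inside the big ranked model $\OI$; this uses Proposition~\ref{prop_minheight} and the ranked-union construction.

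\emph{Left to right.} We argue by contraposition: assume $\tuple{\T,\A}\not\entails\rass{a}{b}{P}$, and build a minimal model in which $(a^{\OI},b^{\OI})\notin P^{\OI}$. Since $\rass{a}{b}{P}$ is a positive assertion and $\tuple{\T,\A}$ is satisfiable, the canonical model $\canhH(\KB)$ works as a witness. By the \dllitehornH{} analogue of Proposition~\ref{chasecor} (cf.\ Proposition~\ref{Theo15calv07}), it is a model of $\tuple{\T,\A}$. Its role atoms between named individuals are exactly those obtained by closing $\A$ under role inclusions, because rule $(\ChH_2)$ only creates edges to fresh anonymous individuals. Hence $(a,b)\notin P^{\canhH(\KB)}$.

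\emph{Main obstacle.} The edges among named individuals must be the chase edges in some minimal model $\OI^{\KB}_i$, while the defeasible information is still respected. The plan has three steps.
\begin{enumerate}
\item Take any minimal model $\OI^{\KB}_i$, given by Proposition~\ref{prop_unique_abox}.
\item Show that the role edges between named elements can be replaced by the chase edges. Alternatively, construct a ranked model of $\KB$ with domain $\Delta$ that keeps the heights $h_{\KB}(\cdot)$ of the individuals and whose named-individual role edges coincide with those of $\canhH(\KB)$.
\item Include this model in $\Moddelta{\KB}$. In the ranked union $\OI$, relations only connect elements originating from the same component. Therefore interpreting the individuals in that component yields an element of $\mathfrak{O}^{\KB}$ with minimal heights that falsifies $\rass{a}{b}{P}$.
\end{enumerate}

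For step 2, the key observation is that DIs only constrain concept memberships of minimal elements, and the RHS of a DI is a concept name. The defeasible part therefore never forces new role edges; only the classical inclusions $\{B_1,\ldots,B_n\}\subs\exists R$ do, and those generate edges to anonymous objects. Concretely, we add to the minimal model a fresh copy of the required existential witnesses in place of edges between named individuals. One must check that deleting a named edge $(a,b)$ not entailed by $\tuple{\T,\A}$ does not break any axiom. The rank and concept memberships are preserved by giving $a$ a fresh $R$-successor that copies $b$ together with $b$'s height and concepts. Role inclusions are respected by closing the new edges upward under the role inclusions of $\T$.
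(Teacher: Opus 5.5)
Your route is the paper's. Right-to-left is the same observation: every member of $\min_{<_{\A}}(\mathfrak{O}^{\KB})$ is a classical model of $\tuple{\T,\A}$. For left-to-right, the paper likewise takes a minimal model, removes the pair $(a,b)$ from $P$, compensates with fresh copies of $a$ and $b$ placed at the same heights, and transports the result to $\Delta$. Your step~3 spells out what the paper's bijection remark leaves implicit: it locates the modified countable model as a component of the ranked union $\OI$, where role edges never cross components. The detour through $\canhH(\KB)$, by contrast, only re-derives the hypothesis $\tuple{\T,\A}\not\entails\rass{a}{b}{P}$.

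Step~2 as written fails in two concrete places.

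\emph{First, you repair only $a$'s side.} After the deletion, $b$ may drop out of $\exists P^-$. Take $\cass{b}{D}\in\A$, $D\subs\exists P^-\in\T$, and a minimal model in which $a$ is $b$'s only $P$-predecessor: the deletion then violates $\T$. More generally, any change in $b$'s basic-concept memberships can alter $\min_{\pref}$ of a DI's antecedent. You also need a fresh $P$-predecessor of $b$ that copies $a$ (the paper's $o_a$). In addition, ``copying $b$'s concepts'' must mean copying $b$'s edges, since membership in $\exists S$ is determined by edges.

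\emph{Second, removing $(a,b)$ from $P$ alone violates any $R\subs P\in\T$ whose extension also contains that pair.} Closing only the new edges upward does not repair this; a full upward closure would even reinsert $(a,b)$ into $P$. You must remove the pair from every $R$ with $\T\entails R\subs P$, none of which is entailed by $\tuple{\T,\A}$ either; this is exactly what the paper does.

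Your ``alternative'' of making the edges between named individuals coincide with the chase edges handles the second point automatically, since chase edges are closed under role inclusions. Combine it with one fresh copy per named individual, inheriting that individual's height, concept names and all original edges. This restores every basic-concept membership, including $b$'s, and gives a correct proof.
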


\nd We next focus on the decision whether $\KB\minent\dass{a}{A}$. 
Let us recap that from Proposition \ref{nominalABox} we know that in classical \dllitehornH, given a KB $\KB=\tuple{\T,\A}$, we can decide whether $\KB\entails \cass{a}{A}$ by checking \eg~whether $\T_\A\entails A_a\subs A$, where 
for each individual $a\in\N$, $A_a$ is a  new concept name and $\T_\A$ is defined by Eq.~\ref{ta}.

We want to apply the result of Proposition \ref{nominalABox} also to the defeasible reasoning. That is, we want to prove that, given a satisfiable defeasible KB $\KB=\tuple{\T,\D,\A}$, $\KB\minent \dass{a}{A}$ iff $\tuple{\T_\A,\D}\minent A_a\dsubs A$.  To do that, we first need to prove Proposition \ref{lemma_defeasible_nominals_2} below, proving that the rank of any individual $a$ \wrt~$\KB$ is the same as the corresponding concept $A_a$ \wrt~the KB $\tuple{\T_\A,\D}$. 

Specifically, we indicate by $\KB_\A$ the KB $\tuple{\T_\A,\D}$ obtained from the transformation of $\KB=\tuple{\T,\D,\A}$ into $\tuple{\T_\A,\D}$.
As a first step, we extend to individual name the notion of exceptionality that has been defined for concepts in Definition \ref{Def:Exceptionality}.

\begin{definition}\label{def_exceptional_indiv}
    Let $\KB=\tuple{\T,\D,\A}$ be a defeasible \dllitehornH~KB and $a$ any individual in $\N$. $a$ is \emph{exceptional} \wrt~$\KB$ iff there is no ranked model $\R$ of $\KB$ s.t. $h_\R(a^\R)=0$ (that is, $h_{\KB}(a)>0$). 
\end{definition}

\nd From the notion of exceptionality, we can define procedures for determining the ranking of any individual $a$ \wrt~a defeasible KB $\KB$, that corresponds to the minimal height $h_{\KB}(a)$. The first relevant result is Proposition \ref{lemma_defeasible_nominals_2}, showing that calculating the ranking of an individual $a$ \wrt~$\KB$ is equivalent to calculating the ranking of the corresponding concept $A_a$ \wrt~$\KB_\A$.


\begin{proposition}\label{lemma_defeasible_nominals_2}
    Let $\KB=\tuple{\T,\D,\A}$ be a satisfiable defeasible \dllitehornH~KB, $a$ any individual,  $\KB_\A$ the KB $\tuple{\T_\A,\D}$, and $A_a$ the concept corresponding to $a$ in $\KB_\A$. Then $h_{\KB}(a)=\rank_{\KB_\A}(A_a)$.
\end{proposition}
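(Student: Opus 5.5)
We prove the two inequalities $\rank_{\KB_\A}(A_a)\leq h_{\KB}(a)$ and $h_{\KB}(a)\leq \rank_{\KB_\A}(A_a)$ separately. Both rest on a two-way translation between ranked models of $\KB=\tuple{\T,\D,\A}$ and ranked models of $\KB_\A=\tuple{\T_\A,\D}$ that does not increase the relevant heights. Throughout we use the layer characterisation of rank. By the definition of rank via iterated exceptionality and Lemma~\ref{Lemma:ClosureDisjointUnionMod}, $\rank_{\KB_\A}(A_a)=i$ iff $A_a^{\OI}\cap L_i\neq\emptyset$ and $A_a^{\OI}\cap L_j=\emptyset$ for $j<i$, where $\OI$ is the big ranked model of $\KB_\A$. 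Equivalently, $\rank_{\KB_\A}(A_a)$ is the minimum of $h_\R(A_a)$ over all ranked models $\R$ of $\KB_\A$. This is the standard result of \cite{BritzEtAl2021}, which we invoke rather than reprove.

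\emph{From $\KB$ to $\KB_\A$.} Let $\R$ be a ranked model of $\KB$ in which $h_\R(a^\R)=h_{\KB}(a)$. Such a model exists by Definition~\ref{Def:minheight}, and Proposition~\ref{prop_minheight} lets us take it in $\Moddelta{\KB}$. Expand $\R$ by setting $A_b^\R=\{b^\R\}$ for every individual $b$. All axioms of $\T_\A$ then hold:
\begin{itemize}
\item $A_b\subs A'$ holds because $\cass{b}{A'}\in\A$;
\item $A_b\subs\exists P.A_c$ holds because $(b^\R,c^\R)\in P^\R$;
\item the inverse axioms $A_c\subs\exists P^-.A_b$ hold for the same reason.
\end{itemize}
After rewriting the qualified existentials as in Remark~\ref{remexpdllitecore}, the fresh roles are interpreted as the corresponding singleton pairs. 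The ranking and $\D$ are untouched, so the expanded model is a ranked model of $\KB_\A$ with $h(A_a)=h_\R(a^\R)$. Hence $\rank_{\KB_\A}(A_a)\leq h_{\KB}(a)$.

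\emph{From $\KB_\A$ to $\KB$.} This is the main obstacle. Let $\OI$ be the big ranked model of $\KB_\A$, and let $r_b=\rank_{\KB_\A}(A_b)$ for each $b$. We must pick, simultaneously for all individuals $b$, an element $x_b\in\min(A_b^\OI)$ such that $(x_b,x_c)\in P^\OI$ whenever $\rass{b}{c}{P}\in\A$. A single element $x_a$ of minimal height is not enough, since role assertions can force $a$'s neighbours to lie in the same domain object.

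The plan is to exploit the fact that the unqualified existentials of \dllitehornH~yield no interaction between distinct ABox individuals except through the axioms of $\T_\A$ themselves. First, satisfiability of $\tuple{\T,\A}$, via Proposition~\ref{prop_defcons} and Proposition~\ref{nominalABox}, ensures $r_b<\infty$ for every $b$. Next, for each $b$ we take a minimal $A_b$-witness $x_b$ in $\OI$, and we build a new ranked interpretation. Its domain is the disjoint union of copies of the $P$-generated substructures below the witnesses $x_b$, with the individual $b$ interpreted as $x_b$. The role edges $(x_b,x_c)$ required by $\A$ are then added explicitly.

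The key claim is that these added edges violate no axiom. Every consequence they could trigger is of the form $x_b\in\exists P$ or $x_c\in\exists P^-$ (closed under role inclusions). In \dllitehornH~these are only basic concepts, and they already hold at $x_b$ and $x_c$, because $\T_\A$ contains $A_b\subs\exists P.A_c$ and its inverse. Because \dllitehornH~has no qualified existentials on the left, no axiom can detect which particular successor is used. So the modified structure still satisfies $\T$, satisfies $\D$ (the heights are unchanged, and the minimal sets of basic conjunctions are unchanged since membership in basic concepts is preserved), and satisfies $\A$.

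Proving this preservation of basic-concept memberships, including the $\min$ conditions for DIs, is the step requiring the most care. I would handle it by showing that every element keeps exactly the same set of basic concepts and the same height. It then follows that $h_{\KB}(a)\leq r_a$. This construction also gives Proposition~\ref{prop_unique_abox} as a by-product, which suggests the two proofs share this core lemma.
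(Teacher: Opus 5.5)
Your first direction ($\rank_{\KB_\A}(A_a)\leq h_{\KB}(a)$, obtained by expanding a height-minimal model of $\KB$ with $A_b^{\R}=\{b^{\R}\}$) is correct. Your overall route is also different from the paper's. You use a direct two-way model transfer, together with the fact that $\rank_{\KB_\A}(A_a)$ is the least height of $A_a$ in any ranked model of $\KB_\A$. The paper instead proves only the level-$0$ correspondence (Lemma~\ref{lemma_defeasible_nominals}), via the $\delta$-encoding $\tuple{\T\cup\D_{\delta_0},\A\cup\{\cass{a}{\delta_0}\}}$ and the nominal reduction of Proposition~\ref{nominalABox}.

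The gap is in your second direction. There you conclude that the glued structure satisfies $\D$ because every element keeps its basic concepts and its height. Satisfaction of a DI is not a local property: $\min_{\pref}(C)$ depends on which other elements are present. Your domain contains only copies of the substructures generated by the witnesses $x_b$, so an element that is not minimal for a left-hand side $C$ in $\OI$ can become minimal and then fail the consequent.

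For instance, take $\T=\{B'\subs B, A'\subs\neg C\}$, $\D=\{B\dsubs C, B'\dsubs A'\}$, $\A=\{\cass{a}{B'}\}$, and no role names. Then $\rank_{\KB_\A}(A_a)=1$, and every minimal witness $x_a$ lies in $B$ but not in $C$. Your structure has domain $\{x_a\}$, so layer $0$ is empty, which violates the convexity required by Definition~\ref{Def:Ranked_order}. Once the heights are renormalised, $x_a$ is a minimal $B$ that is not a $C$, violating $B\dsubs C$.

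The repair is to keep $\OI$ itself in the domain, i.e.\ to take the ranked union of $\OI$ with the copies. The simplest version uses one full copy of $\OI$ per individual $b$, interpreting $b$ in its own copy. This also removes the need to say which substructure is copied; otherwise, because of inverse roles, it would have to be closed under predecessors as well as successors to preserve memberships in $\exists R^-$.

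With $\OI$ included, the argument goes through:
\begin{itemize}
\item For every left-hand side $C$ of a DI, the lowest layer meeting $C$ is still $h_{\OI}(C)$.
\item Every minimal element is an element of $\min_{\pref^{\OI}}(C^{\OI})$ or a copy of one, with the same concept-name memberships, so every DI holds.
\item Convexity is inherited from $\OI$.
\item Your edge-adding argument, through $A_b\subs\exists P.A_c$ and $A_c\subs\exists P^-.A_b$ in $\T_\A$ closed under role inclusions, is unaffected.
\end{itemize}
Incidentally, $r_b<\infty$ follows at once from your first direction, since $r_b\leq h_{\KB}(b)$.

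With this fix your proof is complete and, as you note, it gives Proposition~\ref{prop_unique_abox} at the same time. Compared with the paper, it avoids the passage from level-$0$ exceptionality to equality of ranks, which the paper leaves implicit. The price is an explicit gluing construction, analogous to the pairwise one in Lemma~\ref{lemma_unique_abox}.
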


\nd From Proposition \ref{lemma_defeasible_nominals_2} we can prove now that $\KB\minent \dass{a}{A}$ can be decided by simply checking whether $A_a\dsubs A$ is in the rational closure of the KB $\KB_\A$. That is, we can reduce defeasible instance checking to the subsumption problem in RC, with the procedures defined in Sections \ref{sect_RC_horn} and \ref{sect_RC_core}.

\begin{proposition}\label{prop_defeasible_nominals_2}
    Let $\KB=\tuple{\T,\D,\A}$ be a satisfiable defeasible \dllitehornH~KB, $a$ any individual, and $A$ any concept name. Then,
    \[
    \KB\minent \dass{a}{A}\text{ iff }\KB_\A\minent A_a\dsubs A \ .
    \]
\end{proposition}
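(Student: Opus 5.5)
The plan is to characterise both sides through the same height: by Proposition~\ref{lemma_defeasible_nominals_2}, $h_{\KB}(a)=\rank_{\KB_\A}(A_a)=:k$. The left-hand side is evaluated in the models of $\min_{<_{\A}}(\mathfrak{O}^{\KB})$. By Proposition~\ref{prop_unique_abox}, in all of them $a$ sits exactly at layer $k$. The right-hand side, by Definition of RC and Proposition~\ref{prop_minent}, amounts to $\rank_{\KB_\A}(\{A_a,A\})<\rank_{\KB_\A}(\{A_a,\neg A\})$, or $\rank_{\KB_\A}(A_a)=\infty$. The second disjunct is excluded when $\KB$ is satisfiable, since then $k$ is finite. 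Since $\rank_{\KB_\A}(A_a)=k$, the RC condition is equivalent to: $\{A_a,\neg A\}$ has rank strictly greater than $k$. Equivalently, $\{A_a,\neg A\}$ is exceptional in $\KB_\A^{k}=\tuple{\T_\A,\D\setminus\bigcup_{j<k}\D_j}$, i.e.\ there is no ranked model of $\tuple{\T_\A,\D}$ with an element of $A_a^{\R}\setminus A^{\R}$ at height $k$.

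For the ($\Leftarrow$) direction, assume $\KB\not\minent\dass{a}{A}$. Then there is $\OI^{\KB}_i\in\min_{<_{\A}}(\mathfrak{O}^{\KB})$ with $a^{\OI^{\KB}_i}\notin A$ and $h(a)=k$. I would reinterpret each new concept $A_b$ in $\OI$ as the singleton $\{b^{\OI^{\KB}_i}\}$. This gives a ranked model of $\KB_\A$: the axioms of $\T_\A$ coming from $\A$ are satisfied because $\OI^{\KB}_i$ satisfies $\A$, and $\T$ and $\D$ do not mention the $A_b$'s. In this model an element of $A_a\sqcap\neg A$ lies at height $k$, so $\rank_{\KB_\A}(\{A_a,\neg A\})\le k$, contradicting the RC condition.

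For ($\Rightarrow$), assume $\KB\minent \dass{a}{A}$ but $\rank_{\KB_\A}(\{A_a,\neg A\})=k$. Then some ranked model $\R$ of $\KB_\A$ has $x\in A_a^\R\setminus A^\R$ at height $k$. The goal is to turn this into a model in $\min_{<_{\A}}(\mathfrak{O}^{\KB})$ where $a$ is not in $A$. Following the same construction used in the proof of Proposition~\ref{lemma_defeasible_nominals_2}, I would take a ranked union of $\R$ with models witnessing the minimal heights of the other individuals (Lemma~\ref{Lemma:ClosureDisjointUnionMod}). I would then interpret each $b$ as an element of $A_b$ at height $h_{\KB}(b)$, with $a$ mapped to $x$. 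Role assertions $\rass{b}{c}{P}$ must hold between the chosen elements. This is the delicate point: $A_b\subs\exists P.A_c$ only guarantees some $P$-successor in $A_c$, not the chosen one.

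I expect this to be the main obstacle. I would address it as in the unique-extension argument (Proposition~\ref{prop_unique_abox}), exploiting that \dllitehornH\ DIs have only basic concepts on the left. The truth of $\D$ at an element depends only on its basic-concept type and its height, not on the identity of its role successors. Hence one can add the needed $P$-edges between the chosen elements and close under role inclusions. Doing so only adds memberships in $\exists R$-concepts that were already entailed by the $A_b\subs\exists P.A_c$ axioms for those elements, so no DI or NI becomes violated. Finally, embedding into the big ranked model $\OI$ (which contains all countable models of $\tuple{\T,\D}$ layerwise), we obtain an element of $\mathfrak{O}^{\KB}$ with all individuals at minimal height, hence $<_{\A}$-minimal. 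In this model $a\notin A$, which is the desired contradiction.
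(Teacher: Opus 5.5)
Your proposal is correct and follows essentially the same route as the paper. One direction reinterprets each $A_b$ as the singleton $\{b^{\OI^{\KB}_i}\}$, which yields a model of $\KB_\A$ containing an element of $A_a$ outside $A$ at height $h_{\KB}(a)$. The other takes a ranked union of witnessing models of $\KB_\A$, maps each individual into $A_b$ at its minimal height, adds the asserted role edges and closes under role inclusions. The paper phrases the RC side through the minimal elements of the big ranked model of $\KB_\A$ (Proposition~\ref{prop_minent}) instead of your rank inequality, which is an equivalent formulation. Your argument that the added edges change no basic-concept membership is more explicit than the paper's ``clearly''. You should, however, take the witnessing models to be countable, e.g.\ from \Moddelta{\KB_\A}, so that the final model can be placed inside the big ranked model.
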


\nd Propositions \ref{lemma_defeasible_nominals_2} and \ref{prop_defeasible_nominals_2} are proved for \dllitehornH. Since any \dllitecoreH~KB is just a special case of a \dllitehornH~KB, Proposition~\ref{prop_defeasible_nominals_2} holds for \dllitecoreH~as well. Hence the instance checking problem for defeasible \dllitehornH~(resp.~\dllitecoreH) can be decided via procedure $\mathtt{RationalClosure.Horn}$ (resp.~$\mathtt{RationalClosure.Core}$).

We conclude by anticipating that, like for the classical case (see Remark~\ref{qansinstcheck}), 
the instance checking problem under RC can also be seen as a special case of query answering (see Remark~\ref{qansinstcheckDIs} later on), which we will adress next.

\section{Query Answering under RC}\label{sect_QRW}

\noindent\fbox{\begin{minipage}{0.975\textwidth}
\textsc{Overview:} 
 This section addresses the problem of answering conjunctive queries over defeasible KBs. The main finding is that one may define query reformulation procedures for \dllitecoreH~and \dllitehornH which are similar to the ones for their classical counterpart, and without compromising the computational cost (see Section~\ref{compplex}).

\end{minipage}}
\vspace*{5mm}

\nd In this section, we address the problem of answering \emph{Conjunctive Queries} (CQs) posed to defeasible KBs. Let us recap that  for classical DLs of  \dllite \ family we discussed here, this is done by relying on the so-called  \emph{query rewriting} technique~\cite{Calvanese05,Calvanese06a,Calvanese07}. 
We recall that a \emph{Conjunctive Query} (CQ) is a rule-like expression of the form 
\begin{equation} \label{eqdlcq}
q(\vec{x}) \leftarrow \exists \vec{y}.\varphi(\vec{x},\vec{y})
\end{equation}

\nd where  the rule body $\varphi(\vec{x},\vec{y})$  is a  conjunction $P_{1}(\vec{z}_{1})\land\ldots\land P_{n}(\vec{z}_{n})$\footnote{We may use the symbols `$\land$' or `$,$' to denote a conjunction in the rule body.} of unary or binary predicates  $P_{i}(\vec{z}_{i})$ ($1 \leq i \leq n$), where $P_{i}$ is either a concept name $A$ or a role name $P$. The variables in $\vec{x}$ are called the \emph{distinguished} variables of $q$, while the variables in $\vec{y}$ are called the \emph{non-distinguished} variables of $q$. $\vec{z}_{i}$ is a vector of \emph{terms}. A term is either an individual appearing in a KB $\KB$, a distinguished or a non-distinguished variable.  
If $P_{i}$ is a concept name (resp., a role name)  then $\vec{z}_{i}$ is a unary (resp., binary) tuple. For convenience, we may omit writing $\exists \vec{y}$ if clear from context.  We say that a variable is \emph{bound} if it corresponds to either a distinguished variable,  a shared variable (\ie~a variable occurring at least twice in the query body), or an individual, while we say that a variable is \emph{unbound} if it corresponds to a non-distinguished non-shared variable.
For instance,
\[
q(x) \leftarrow  Person(x), hasFriend(x,y), hasCitizenship(y,italy)
\]

\nd is a conjunctive query with only bound variables (either distinguished, or shared, or individuals), whose intended meaning is to retrieve all people having an Italian friend. 

A \emph{Boolean query} is a query without distinguished variables, while an \emph{Union of CQs} (UCQs) is a set $q$ of CQs having the same  \emph{head} $q(\vec{x})$. 

Given an interpretation~$\I$, with $q^\I$ we denote the set of tuples $\vec{t}$ of individuals such that 
the First-Order Formula  $\exists \vec{y}.\varphi(\vec{t},\vec{y})$ is satisfied by $\I$, denoted
\begin{eqnarray} 
    q^\I & = & \{\vec{t} \mid \I \sat q(\vec{t}) \} \ .   \label{qI}
\end{eqnarray}

\nd The \emph{answer} to $q$ over $\KB$  is the set $ans(q,\KB)$ of  tuples $\vec{t}$ of individuals appearing in $\KB$ such that $\vec{t} \in q^\I$ for every model $\I$ of $\KB$, denoted

\begin{equation} \label{anskbmain}
ans(\KB,q) = \{\vec{t} \mid \KB \models q(\vec{t}) \} \ .    
\end{equation}



\nd For UCQs $q$ we define:
\begin{equation} \label{unionanskb}
ans(\KB,q) = \{\vec{t} \mid \text{ there is } q_i \in q \text{ s.t. }\KB \models q_i(\vec{t}) \} \ .    
\end{equation}

\nd Now, conceptually, in order to find all answers to a CQ $q$ over a  KB 

\begin{enumerate}
\item first, we check if the KB is satisfiable, as querying a non-satisfiable KB does not make sense in our case;  

\item by considering the TBox $\T$ only, the  query $q$ is then \emph{reformulated} into a set of CQs, \ie~a UCQ, $r(q,\T)$. Informally, the basic idea is that the reformulation procedure closely resembles a top-down resolution procedure for logic programming, where each inclusion axiom $A_{1} \impc A_{2}$ is seen as a logic programming rule of the form $A_{2}(x) \leftarrow A_{1}(x)$. For instance, given the query $q(x) \leftarrow A(x)$ and suppose that $\T$ contains the inclusion axioms $A_{1} \impc A$ and $A_{2} \impc A$, then we can reformulate the query into two queries $q(x) \leftarrow A_{1}(x)$ and $q(x) \leftarrow A_{2}(x)$, exactly as it happens for top-down resolution methods in logic programming;

\item the reformulated queries in $r(q,\KB)$ are then \emph{evaluated} over the database $\mathtt{DB}(\A)$  only (which is $\A$ stored as a database, see Definition~\ref{def_DB(A)}), producing the requested answer set $ans(\KB,q)$. 
\end{enumerate}

\nd We recap that the main property of the query reformulation procedure is that (see~\cite{Calvanese05,Calvanese06a,Calvanese07,Calvanese13}) 
\begin{equation}\label{ansrel}
ans(\KB,q) =  \{\vec{t}\mid \text{ there is } q_{i} \in r(q,\KB) \text{ s.t. } \ \vec{t} \in q_{i}^{\mathtt{DB}(\A)} \} \ .    
\end{equation}


\nd The benefit of this approach is that the queries in $r(q,\KB)$ can  directly be evaluated by an SQL engine over the ABox stored as the database $\mathtt{DB}(\A)$, thus taking advantage of well-established query optimization strategies supported by current industrial-strength relational DB technology.



 In the following, for the sake of completeness, we succinctly recap  the query answering procedure for \dllitehornH~KBs without DIs of which the one for \dllitecoreH~is just a special case~\cite{Calvanese05,Calvanese06a,Calvanese07,Calvanese13}. Then, we show how to modify it to deal with DIs.

\subsection{Preliminaries: Query Rewriting for \dllitehornH}\label{sect_query_rewite}

\nd We recap here the query rewriting procedure used to answer CQs for \dllitehornH~KBs \cite{Calvanese05,Calvanese06a,Calvanese07,Calvanese13}. 
 So, given a CQ $q$  and a \dllitehornH~KB $\KB=\tuple{\T,\A}$, we recall that the answer set to $q$ over $\KB$  is the set $ans(q,\KB)$ of  tuples $\vec{t}$ of individuals appearing in $\KB$ as defined by Eq.~\ref{anskbmain}, while for the union of CQs the answer set is defined by Eq.~\ref{unionanskb}.

Notice that by definition $ans(\KB,q)$ is finite because $\KB$ is finite and the number of individuals appearing in $\KB$  is finite.
Notice also that the tuple $\vec{t}$ can be the empty tuple in the case in which $q$ is a Boolean conjunctive query. 
More precisely, in this case the set $ans(\KB,q)$ consists of the only  empty tuple if and only if the formula $q$ is satisfied in every model of $\KB$. Eventually, observe that, if $\KB$ is unsatisfiable, then $ans(\KB,q)$ is trivially the set of all possible tuples of individuals in $\KB$ whose arity is the one of the query, which is the reason while we omit this case here.

As anticipated, the query answering procedure over a  satisfiable \dllitehornH~KB is composed of two steps~\cite{Calvanese05,Calvanese06a,Calvanese07,Calvanese13}:
\begin{enumerate}
    \item by considering the TBox $\T$ only, a  query $q$ is reformulated into a set of CQs $r(q,\T)$; and then
    \item by leveraging Eq.~\ref{ansrel}, the answer set is computed by submitting all $q_i \in r(q,\KB)$ as SQL queries over 
    $\mathtt{DB}(\A)$.
\end{enumerate}

\nd As the second step is straightforward, below, we recap the query reformulation step only.
%
%
Specifically, we use the symbol ``$\_$'' to represent unbound variables. Note that an atom of the form $(\csome P)(z)$ (resp. $(\csome P^{-})(z)$) has the same meaning as $P(x,\_)$ (resp. $P(\_,z)$). For ease of exposition, in the following we will use the latter form only.  

So, given a satisfiable \dllitehornH~KB $\KB = \tuple{\T,\A}$, let $\spec \in \T_{PI}$ be a PI in $\T$. We say that $\spec$ is \emph{applicable} to a predicate $g$ occurring in the body of a query $q$ if:
\begin{enumerate}
\item $g$ is  of the form $A(z)$ and $\spec$ has $A$ in its right-hand side;
\item $g$ is of the form $P(z_{1}, z_{2})$ and one of the following conditions holds:
\begin{enumerate}
\item $z_{2} = \_$ and the right-hand side of $\spec$ is $\csome P$;
\item $z_{1} = \_$ and the right-hand side of $\spec$ is $\csome P^{-}$; 
\item $\tau$ is a role inclusion and its right-hand side is either $P$ or $P^-$.
\end{enumerate}  
\end{enumerate}

\nd We indicate with $gr(g, \spec)$ the predicate or the conjunction of predicates  obtained from the predicate
$g$ by applying the inclusion axiom $\spec$. Specifically,
\begin{enumerate}

    \item {\bf IF} $g=A(z)$ (resp.~$g=P(z,\_)$ or $g=P(\_,z)$) \textbf{AND} $\spec = \{B_1, \ldots, B_{n} \} \subs A$
    (resp. $\spec = \{B_1, \ldots, B_{n}\} \subs \csome P$ or $\spec = \{B_1, \ldots, B_{n} \} \subs \csome P^-$) 
    \textbf{THEN} $gr(g, \spec) = C_1(z) \land \ldots \land C_n(z)$, where for each $i \in \{1, \ldots, n\}$,
    \begin{itemize}
        \item $C_i(z) =A_i(z)$ if $B_i = A_i$, or
        \item $C_i(z) = P_i(z,\_)$ if $B_i = \csome P_i$, or
        \item $C_i(z) = P_i(\_,z)$ if $B_i = \csome P_i^-$; 
    \end{itemize}

    \item {\bf IF} $g = P(z_{1}, z_{2})$ {\bf AND} either
        $\tau =P_1 \subs P$ {\bf OR}
        $\tau = P_1^- \subs P^-$
        {\bf THEN} $gr(g, \spec) = P_1(z_{1}, z_{2})$;

   \item {\bf IF} $g = P(z_{1}, z_{2})$ {\bf AND} either
        $\tau =P_1 \subs P^-$ {\bf OR}
        $\tau = P_1^- \subs P$
        {\bf THEN} $gr(g, \spec) = P_1(z_{2}, z_{1})$.
\end{enumerate}

\nd With the above definitions at hand, the query reformulation algorithm is given by the $\mathtt{QueryRef}$ procedure.
In the procedure, $q_i[g/g']$ denotes the query obtained from $q_i$ by replacing the predicate $g$ with a new predicate or conjunction of predicates $g'$. At step 8, for each pair of predicates $g_{k}, g_{l}$ that unify, the procedure computes the query $q' = reduce(q_i,g_{k}, g_{l})$, by applying to $q_i$ the most general unifier between $g_{k}$ and $g_{l}$.\footnote{We say that two predicates $g_{k}= P(x_{1}, \ldots, x_{n})$ and $g_{l}= P(y_{1}, \ldots, y_{n})$ \emph{unify}, if for all $i$, either $x_{i} = y_{i}$ or  $x_{i} = \_$ or $y_{i} = \_ $. If $g_{k}$ and $g_{l}$ unify, then the unification of $g_{k}$ and $g_{l}$ is the predicate $P(z_{1}, \ldots, z_{n})$, where $z_{i}= x_{i}$ if $x_{i} = y_{i}$ or $y_{i} = \_$, otherwise $z_{i} = y_{i}$~\cite{Cali03}.} Due to the unification, variables that were bound in $q$ may become unbound in $q'$. Hence, inclusion axioms that were not applicable to predicates in $q_i$, may become applicable to predicates in $q'$ (in the next executions of step (5)). Function $\kappa$ applied to $q'$ replaces with $\_$ each unbound variable in $q'$.

It can be shown that $\mathtt{QueryRef}$ terminates, Eq.~\ref{ansrel} holds, but the size of $r(q,\KB)$ may be exponential \wrt~$\T$~\cite{Calvanese05,Calvanese06a,Calvanese07,Calvanese13}.

\begin{procedure}[t]
\caption{QueryRef($q,\KB$)\label{QueryRef}}
{
\KwIn{Union of CQs  $q = \{q_1, \ldots, q_m\}$ and satisfiable \dllitehornH~KB 
$\KB = \tuple{\T,\A}$.}
\KwOut{Set of reformulated conjunctive queries $r(q,\KB)$}
$r(q,\KB):= q$\;

\Repeat{$S=r(q,\KB) $}{
$S:= r(q,\KB)$\;
\ForEach{$q_i \in S$}{
\ForEach{$g \in q_i$}{
\If{$\spec \in \T_{PI}$ is applicable to $g$}{
$r(q,\KB) := r(q,\KB) \cup \{ q_i[g/gr(g,\spec)]\}$\;
}
}
\ForEach{$g_{k}, g_{l} \in q_i$}{
\If{$g_{k}$ and $g_{l}$ unify}{
$r(q,\KB) := r(q,\KB) \cup \{ \kappa(reduce(q_i,g_{k}, g_{l}))\}$\;
}
}
}
}
\Return{$r(q,\KB)$}\;
}
\end{procedure}

\subsection{Query Answering with DIs}\label{sect_query_rewiteDIs}

\nd In this section, we  show how to address the case of KBs with DIs.  To start with, consider a CQ $q$.  We define now the analogue notion of answer in the presence of DIs, \viz~the analogues of Eqs.~\ref{qI} - \ref{unionanskb} in the presence of DIs.
To do so, we first reformulate  Definition~\ref{def_RC_abox} for CQs.


\begin{definition}[$\minent$ for CQ answering]\label{def_RC_CQ}
    Consider a \dllitehornH~KB $\KB=\tuple{\T,\D,\A}$ and a CQ $q$. 
    We say that for a tuple $\vec{t}$ of individuals appearing in $\KB$, $\KB$ \emph{minimally entails} $q(\vec{t})$, denoted $\KB\entabox q(\vec{t})$, if $\exists \vec{y}.\varphi(\vec{t},\vec{y})$ is \emph{satisfied} by all  the models in $\mathfrak{O}^{\KB}$ in which all individuals are minimally interpreted. That is, $\KB\entabox q(\vec{t})$ iff 
    
    \[
    \OI^{\KB}_i\sat \exists \vec{y}.\varphi(\vec{t},\vec{y})\text{ for every }\OI^{\KB}_i\in \min_{<_{\A}}(\mathfrak{O}^{\KB}) \ .\footnote{The notion of satisfaction is defined similarly as for the FOL case in the obvious way.}
    \]


\end{definition}


\begin{definition}[RC-Answer] \label{rcanswer}
    Consider a satisfiable defeasible (\dllitehornH~or \dllitecoreH) KB $\KB=\tuple{\T,\D,\A}$. The \emph{RC-answer} to $q$ over $\KB$ is the set $ans_{RC}(q,\KB)$ of  tuples $\vec{t}$ of individuals appearing in $\KB$ such that $q(\vec{t})$ is minimally entailed by 
    $\KB$, \ie
        \begin{equation} \label{anskbmainDIs}
            ans_{RC}(\KB,q) = \{\vec{t} \mid \KB \entabox q(\vec{t}) \} \ .    
        \end{equation}


\nd Similarly to the non-defeasible case (\cf~with Eq.~\ref{qI}),  given a ranked interpretation~$\R$, with $q^\R$ we denote the set of tuples $\vec{t}$ of individuals such that the First-Order Formula  $\exists \vec{y}.\varphi(\vec{t},\vec{y})$ is satisfied by $\R$, denoted 
\begin{eqnarray}
    q^\R & = & \{\vec{t} \mid \R \sat q(\vec{t}) \} \ .   \label{qR}
\end{eqnarray}

\nd Then, obviously 
\begin{eqnarray}
\vec{t} \in ans_{RC}(\KB,q) & \text{iff} & 
\vec{t} \in q^{\OI^{\KB}_i} \text{ for every } \OI^{\KB}_i\in \min_{<_{\A}}(\mathfrak{O}^{\KB}) \ . \label{qReq}
\end{eqnarray}

\nd Finally, for UCQs $q$ we define:
\begin{equation} \label{unionanskbDIs}
ans_{RC}(\KB,q) = \{\vec{t} \mid \text{ there is } q_i \in q \text{ s.t. }\KB \entabox q_i(\vec{t}) \} \ .    
\end{equation}
   
\end{definition}

\nd We now want to define a query reformulation procedure similar to the classical reformulation procedure such that it has a DIs analogue of  Eq.~\ref{ansrel}. 

Here we present the  defeasible \dllitecoreH~case, while in~\ref{qahornh} we show the \dllitehornH~analogue.
%
%

Consider a satisfiable defeasible \dllitecoreH~KB $\KB=\tuple{\T,\D,\A}$. 
%
%
We  assume to have computed the rank of the LHS of PIs and DIs occurring in $\KB$ via procedure~$\mathtt{\ref{RankalgC}}$ below. Please note that from the ranking $r$ of a DI $B \dsubs A \in \D$, we also know that $\rank_{\KB}(B) = r$. For a PI $B \subs A \in \T$, if not already known from the DIs' rank, the rank of $B$ can be computed by executing lines 8 - 14 of the $\mathtt{RationalClosure.Core}$ procedure, which is the backbone of the  $\mathtt{Ranking.Core}$ procedure in this section.
\begin{procedure}[h]
\caption{Ranking.Core($\KB,B$)} \label{RankalgC}
{
\KwIn{Satisfiable defeasible \dllitecoreH~KB $\KB$ and a basic concept  $B$}
\KwOut{Rank value of $B$}
  $\tuple{\tuple{\T^*,\D^*},\{\D_0,\ldots,\D_n\}}$ := $\mathtt{ComputeRanking.Core}$($\KB$)\;
  //Compute $B$'s rank $i$\;
  $i$ :=  $0$; $\D_\R$ :=  $\D^*$\;
  Let $\norm{\T^*}$ be the $n$-transformation of $\T^*$\;
  Let $\norm{\D_0}$ bet the $n$-transformation of $\D_0$\;
  \While{${\norm{\T^*}} \cup \norm{\D_i} \entails \norm{B}\subs \neg \norm{B}$ {\bf and} $\D_\R\neq\emptyset$}{
    $\D_\R$ := $\D_\R\backslash\D_{i}$; $i$ := $i + 1$\;
    Let $\norm{\D_i}$ be the $n$-transformation of $\D_i$\;
  }
  \If{$\D_\R \neq \emptyset$}{\Return{$i$}}
  \eIf{$\D_\R = \emptyset$ {\bf and } ${\norm{\T^*}} \not \entails \norm{B} \subs \neg \norm{B}$}{\Return{$n+1$}}{\Return{$\infty$}}
  }
\end{procedure}
%
%

We also assume that the minimal height of each individual $a$ occurring in $\KB$, \ie~$a\in \N_\A$, has been computed.
This can be done either by using Proposition~\ref{lemma_defeasible_nominals_2} and, for each $a\in \N_\A$, apply the $\mathtt{Ranking.Core}$ procedure to $A_a$, or, alternatively, using the $\mathtt{ComputeRankIndividuals}$ procedure described in ~\ref{ranindqref} (we will also refer to the minimal height of an individual  as its \emph{rank}).
\footnote{We conjecture that the $\mathtt{ComputeRankIndividuals}$ procedure is a more efficient procedure in practice than applying the $\mathtt{Ranking.Core}$ procedure to $A_a$, for each $a\in \N_\A$.}

We also extend the database $\mathtt{DB}(\A)$ (see Definition \ref{def_DB(A)}) with a relational table  $tab_{rankInd}$ of arity 2, such that $\tuple{a,r} \in tab_{rankInd}$ if the rank of individual $a \in \N_\A$ is $r$.  
To Summarise, we  rank all the information contained in the KB: the antecedents of all the inclusions, both the defeasible and the strict ones, and of all individuals occurring in the ABox.

Consider now a CQ $q$ of the form ($\ref{eqdlcq}$), where, unlike Section~\ref{sect_query_rewite},
an unbound variable is not represented as $``\_"$, but with a variable name.
%
Now, each term $z$ occurring in $\varphi(\vec{x},\vec{y})$ may optionally have associated a rank $r \geq 0$, denoted $z^r$. We call such terms \emph{rank annotated} terms. If a variable is  rank annotated then it is also a  \emph{bound} variable.
The intuition about a rank annotated variable $z^r$ is that variable $z$ has to be  bound to some individual $a \in \N_\A$ of rank $r$. 
%
%
At the beginning, constants in $q$ are annotated with their rank and variables in $q$ are not rank annotated.
We will use the rank $r$, if present, to constrain the rewriting process as illustrated conceptually in the following. 
Assume we have a query $q$ of the form 
\[
q(x) \leftarrow \ldots, A(z), \ldots \ ,
\]

\nd where the term $z$ is either rank annotated or not.
Then 

\begin{description}
    \item[A) case $z$ is not rank annotated:] if
\begin{enumerate}
    \item    there is a PI $A_1 \subs A \in \T$ then rewrite $q$ as 
    \[
        q(x) \leftarrow \ldots, A_1(z), \ldots \ ;
    \] 

    \item  there is a DI $A_1 \dsubs A \in \D$ with $\rank_{\KB}(A_1) = r$ 
    then   rewrite $q$ as 
    \[
        q(x) \leftarrow \ldots, A_1(z^{r}), \ldots \ ,
    \]
    \nd and label all occurrences of $z$ in $q$ with rank $r$;
\end{enumerate}

    \item[B) case $z$ is annotated with rank $r$:] if
    there is  a  PI $A_1 \subs A \in \D$ with $\rank_{\KB}(A_1) \leq r$ or there is a DI $A_1 \dsubs A \in \T$ with $\rank_{\KB}(A_1) = r$ then rewrite $q$ as 
    \[
        q(x) \leftarrow \ldots, A_1(z^r), \ldots \ .
    \]


\end{description}


    


\nd So, for instance, given the query 
\[
q(x) \leftarrow A(x)
\]
\nd and DI $B \dsubs A \in \D$ with $\rank_{\KB}(B) = 1$,  we fall in the case {\bf A.2}, and we reformulate $q$ as 
\[
q(x) \leftarrow B(x^1) \ .
\]
\nd For the latter, as we will see later on, we may find answers via the query
\[
q(x) \leftarrow tab_B(x), tab_{rankInd}(x,1) \ 
\]

\nd submitted to the database $\mathtt{DB}(\A)$ that includes the rank tables, \ie~we are retrieving all known instances of $B$ of rank $1$.

Note that if there are no DIs in $\KB$, then all terms $z$ in a query do not have any rank superscript. This is covered by point 1 above: if there are no DIs in $\KB$, then the rewriting is as for the classical case 
(see Section~\ref{sect_query_rewite}).
If a rank annotated term $z$ occurs in an atom of the form $A(z^r)$, then it is obtained by rewriting it via a DI (point {\bf A.2} above) and the rank $r$ remains constant in any successive rewriting of $A(z^r)$ (case {\bf B} above). Essentially, point {\bf A.2} encodes the fact that if we are looking \eg~for all instances of $A$ and want to use  $A_1 \dsubs A$,  then we may look at those instances of $A_1$ that are typical, \ie~the instances of $A_1$ at rank $r = \rank_{\KB}(A_1)$. Once the rank has been fixed for a term $z$ , as the rank for individuals is unique, this rank can not change via rewriting. This is covered by case {\bf B} above. Eventually, if two concept name $A_1, A_2$ occur in $q$ and they share the same term $z$, then either they occur in the form $A_1(z), A_2(z)$ (no rank label for $z$), or both occurrences of $z$ are labelled with the same rank, \ie~they occur in $q$ in the form $A_1(z^r), A_2(z^r)$ (see point {\bf A.2} above). This is motivated by the fact that the rank of an individual is unique and, thus, \eg~there can not be any answer to a query $q(x) \leftarrow A_1(x^{r_1}), A_2(x^{r_2})$ if $r_1 \neq r_2$.

We now proceed to the formalisation of the above idea.
To do so, it suffices to adapt the notion of applicable axiom $\spec$  to a predicate $g$ described in Section~\ref{sect_query_rewite} to the case of DIs and/or rank annotated terms. Specifically, let $\spec \in \T_{PI} \cup \D$ be a PI or DI in $\KB$.

\begin{definition}[Applicability] \label{defapplicable}
We say that $\spec$ is \emph{applicable} to a predicate $g$ occurring in the body of a query $q$ if:
\begin{enumerate}
\item $g = A(z)$ and $\spec$ has $A$ in its RHS; 

\item $g = A(z^r)$ and $\spec$ has $A$ in its RHS and its LHS either \ii{i} has rank $r$ if $\spec \in \D$; or \ii{ii} has rank at most $r$ if $\spec \in \T_{PI}$;

\item $g$ is of the form $P(z_{1}, z_{2})$, $z_i$ may be rank annotated or not, $\spec \in \T_{PI}$  and one of the following conditions hold:
\begin{enumerate}


\item $z_{2}$ is an unbound variable,\footnote{Thus, it is a not rank annotated variable.} and the RHS of $\spec$ is $\csome P$. If $z_1 = z^r$ then the LHS of $\spec$ has rank at most $r$;


\item $z_{1}$ is an unbound variable, and the RHS of $\spec$ is $\csome P^{-}$. 
 If $z_2 = z^r$ then the LHS of $\spec$ has rank at most $r$;





\item $\tau$ is a role inclusion and its RHS is either $P$ or $P^-$.

\end{enumerate}  
\end{enumerate}

\nd If $\spec$ is applicable to a predicate $g$, we indicate with $gr(g, \spec)$ the predicate obtained from the predicate $g$ by applying the inclusion axiom $\spec$. Specifically,  

\begin{enumerate}

    \item {\bf IF} $g = P(z_{1}, z_{2})$,  where $z_i$ may be rank annotated or not  {\bf AND} either
        $\tau =P_1 \subs P$ {\bf OR}
        $\tau = P_1^- \subs P^-$
        {\bf THEN} $gr(g, \spec) = P_1(z_{1}, z_{2})$;

   \item {\bf IF} $g = P(z_{1}, z_{2})$,  where $z_i$ may be rank annotated or not {\bf AND} either
        $\tau =P_1 \subs P^-$ {\bf OR}
        $\tau = P_1^- \subs P$
        {\bf THEN} $gr(g, \spec) = P_1(z_{2}, z_{1})$;

   \item {\bf IF} $g=A(z)$ (resp.~$g=P(z, y)$ or $g=P(y,z)$,  where $y$ is unbound), $z$ is not rank annotated \textbf{AND} $\spec = B \subs A$
    (resp. $\spec = B \subs \csome P$ or $\spec = B \subs \csome P^-$)
    \textbf{THEN} $gr(g, \spec) = C(z)$, where
    \begin{itemize}
        \item $C(z) =A_1(z)$ if $B = A_1$, or
        \item $C(z) = P_1(z,y')$ if $B = \csome P_1$, or
        \item $C(z) = P_1(y',z)$ if $B = \csome P_1^-$
    \end{itemize}
    \nd and  $y'$ is a new unbound variable.


    \item {\bf IF} $g=A(z)$,  where $z$ is not rank annotated   \textbf{AND} $\spec = B \dsubs A$
    \textbf{AND}  $\rank_{\KB}(B) = r$ \textbf{THEN} $gr(g, \spec) = C(z^r)$,  where
    \begin{itemize}
        \item $C(z^r) =A_1(z^r)$ if $B = A_1$, or
        \item $C(z^r) = P(z^r,y')$ if $B = \csome P$, or
        \item $C(z^r) = P(y',z^r)$ if $B = \csome P^-$
    \end{itemize}
    \nd and $y'$ is a new unbound variable.

     \item {\bf IF} $g=A(z^r)$ (resp.~$g=P(z^r, y)$ or $g=P(y,z^r)$,  where $y$ is unbound) \textbf{AND} $\spec = B \subs A$ (resp. $\spec = B \subs \csome P$ or $\spec = B \subs \csome P^-$) with $\rank_{\KB}(B) \leq r$  
    \textbf{THEN} $gr(g, \spec) = C(z^r)$, where 
    \begin{itemize}
        \item $C(z^r) =A_1(z^r)$ if $B = A_1$, or
        \item $C(z^r) = P_1(z^r,y')$ if $B = \csome P_1$, or
        \item $C(z^r) = P_1(y',z^r)$ if $B = \csome P_1^-$
    \end{itemize}
    \nd and $y'$ is a new unbound variable.

     \item {\bf IF} $g=A(z^r)$  \textbf{AND}  $\spec = B \dsubs A$ with $\rank_{\KB}(B) = r$
    \textbf{THEN} $gr(g, \spec) = C(z^r)$, where 
    \begin{itemize}
        \item $C(z^r) =A_1(z^r)$ if $B = A_1$, or
        \item $C(z^r) = P(z^r,y')$ if $B = \csome P$, or
        \item $C(z^r) = P(y',z^r)$ if $B = \csome P^-$
    \end{itemize}
    \nd and $y'$ is a new unbound variable.

    
\end{enumerate}
\end{definition}


\nd Given the above definition, given a query $q$ and a satisfiable \dllitecoreH~KB $\KB=\tuple{\T,\D,\A}$, we now  apply to  $q$ and $\KB$ the procedure $\mathtt{DefQueryRef}$, a modified version of the $\mathtt{QueryRef}$ procedure described in Section~\ref{sect_query_rewite}, to compute the reformulated queries. That is,  given a query $q$ and a \dllitecoreH~KB $\KB=\tuple{\T,\D,\A}$, $\mathtt{DefQueryRef}$ reformulates $q$ in terms of a set of CQs, \ie~UCQs, $r(q,\KB)$. 

The few changes we have to make to the  $\mathtt{QueryRef}$ procedure to obtain $\mathtt{DefQueryRef}$ procedure is that the input is now a defeasible \dllitecoreH~KB and that at
\begin{description}
    \item[Line 7:] we replace $q_i[g/g']$ with $\rho(q_i[g/g'])$, where $q[g/g']$ denotes the query obtained from $q_i$ by replacing the predicate $g$ with the new predicate  $g'$ obtained by the definition above. Additionally, the function $\rho$ implements the following rule: if variable $z$ has been replaced with $z^r$, then we replace all occurrences of $z$ in the query with $z^r$;

    \item[Line 9:] the notion of unification needs to be adapted: we say that two predicates 
    $g_{1}= P(x_{1}, \ldots, x_{n})$ and $g_{2}= P(y_{1}, \ldots, y_{n})$ \emph{unify}, if for all $i$, either $x_{i} = y_{i}$ or  at least one among $x_{i}$ and $y_i$ is an unbound variables.\footnote{Note that the identity relation has to consider also the rank annotation. For example, considering a term $x$, $x^2= x^2$ but $x^2\neq x^3$.} If $g_{1}$ and $g_{2}$ unify, then the unification of $g_{1}$ and $g_{2}$ is the predicate $P(z_{1}, \ldots, z_{n})$, where $z_{i}= x_{i}$ if $x_{i} = y_{i}$ or $y_{i}$ unbound, otherwise $z_{i} = y_{i}$~\cite{Cali03};

    \item[Line 10:]  we remove the call to $\kappa$ as we do not represent unbound variables with ` $\_$ '. 
    
\end{description}

\begin{procedure}[h]
\caption{DefQueryRef($q,\KB$)\label{DefQueryRef}}
{
\KwIn{Union of CQs  $q = \{q_1, \ldots, q_m\}$ and a satisfiable \dllitehornH~KB 
$\KB = \tuple{\T,\D,\A}$.}
\KwOut{Set of reformulated conjunctive queries $r(q,\KB)$}
$r(q,\KB):= q$\;

\Repeat{$S=r(q,\KB) $}{
$S:= r(q,\KB)$\;
\ForEach{$q_i \in S$}{
\ForEach{$g \in q_i$}{
\If{$\spec \in (\T_{PI}\cup\D)$ is applicable to $g$}{
$r(q,\KB) := r(q,\KB) \cup \{ \rho(q_i[g/gr(g,\spec))]\}$\;
}
}
\ForEach{$g_{k}, g_{l} \in q_i$}{
\If{$g_{k}$ and $g_{l}$ unify}{
$r(q,\KB) := r(q,\KB) \cup \{ reduce(q_i,g_{k}, g_{l})\}$\;
}
}
}
}
\Return{$r(q,\KB)$}\;
}
\end{procedure}

\begin{example} \label{exref}
Consider a KB $\KB$ consisting of the following axioms (read $F$ as `flies',  $\bar{F}$ as `does not fly', $B$ as `bird', $P$ as `penguin', $PE$ as `possibly exceptional penguin', and $PJ$ as `penguin with a jetpack'):
\begin{eqnarray}
    \bar{F} & \subs & \neg F \label{bex0} \\
    B & \dsubs & F \label{bex1} \\
    P & \subs & B  \label{bex2} \\
    P & \dsubs & \bar{F}   \label{bex3a} \\     
    PE & \subs & P   \label{bex3} \\
    PJ & \subs & PE   \label{bex4} \\
    PJ & \dsubs & F \ .  \label{bex5} 
\end{eqnarray}

\nd Note that we use axioms  $P \dsubs  \bar{F}$ and $\bar{F}  \subs \neg F$ because the RHS of a DI has to be atomic and, thus, we may not use directly $P \dsubs \neg F$. 


It can be verified that the ranks of the antecedents of (\ref{bex0}) - (\ref{bex5}) are as follows:
\begin{eqnarray*}
    \rank_{\KB}(\bar{F}) & = & 0  \\
    \rank_{\KB}(B) & = & 0  \\
    \rank_{\KB}(P) & = & 1  \\
    \rank_{\KB}(PE) & = & 1  \\
    \rank_{\KB}(PJ) & = & 2  \ . 
\end{eqnarray*}

\nd Consider now the query
\begin{eqnarray}
    q(x) & \leftarrow &  F(x), P(x) \ . \label{qex1} 
\end{eqnarray}
\nd That is, we are looking for flying penguins. It can be verified that the set of reformulated queries $r(q,\KB)$ is
\begin{eqnarray}
    q(x) & \leftarrow &  F(x), P(x) \ (\text{same as $(\ref{qex1})$}) \label{refqex1} \\
    q(x) & \leftarrow &  B(x^0), P(x^0) \ (\text{use $(\ref{refqex1})$ and $(\ref{bex1})$}) \label{refqex2} \\
    q(x) & \leftarrow &  PJ(x^2), P(x^2)  \ (\text{use $(\ref{refqex1})$ and $(\ref{bex5})$})  \label{refqex3} \\
    q(x) & \leftarrow &  F(x), PE(x) \ (\text{use $(\ref{refqex1})$ and $(\ref{bex3})$}) \label{refqex4} \\
    q(x) & \leftarrow &  PJ(x^2), PE(x^2) \ (\text{use $(\ref{refqex3})$ and $(\ref{bex5})$}) \label{refqex5} \\ 
    q(x) & \leftarrow &  B(x^0), PE(x^0) \ (\text{use $(\ref{refqex4})$ and $(\ref{bex1})$}) \label{refqex6} \\
    q(x) & \leftarrow &  F(x), PJ(x) \ (\text{use $(\ref{refqex4})$ and $(\ref{bex4})$}) \label{refqex7} \\
    q(x) & \leftarrow &  PJ(x^2) \ (\text{use $(\ref{refqex5})$ and $(\ref{bex4})$}) \label{refqex8} \\
    q(x) & \leftarrow &  B(x^0), PJ(x^0) \ (\text{use $(\ref{refqex7})$ and $(\ref{bex1})$}) \ . \label{refqex9}
\end{eqnarray}

\nd Note that queries 
$(\ref{refqex2})$, 
$(\ref{refqex6})$ and 
$(\ref{refqex9})$
need not to be executed as we already know from the rankings that their answer set is empty. For instance, concerning $(\ref{refqex2})$, there can not be an individual at rank $0$ that is at the same time a bird and a penguin, since the latter is a concept with rank $1$. Similarly for the other cases. Therefore, to find the answer set of $q$, it suffice to compute the union of the answers to the queries 
\begin{eqnarray}
    q(x) & \leftarrow &  tab_F(x), tab_P(x) \label{Frefqex1} \\
    q(x) & \leftarrow &  tab_{PJ}(x), tab_{P}(x), tab_{rankInd}(x,2)  \label{Frefqex3} \\
    q(x) & \leftarrow &  tab_F(x), tab_{PE}(x)\label{Frefqex4} \\
    q(x) & \leftarrow &  tab_{PJ}(x), tab_{PE}(x), tab_{rankInd}(x,2)  \label{Frefqex5} \\
    q(x) & \leftarrow &  tab_F(x), tab_{PJ}(x) \label{Frefqex7} \\
    q(x) & \leftarrow &  tab_{PJ}(x), tab_{rankInd}(x,2) \label{Frefqex8} \ .
\end{eqnarray}

\qed
\end{example}

\nd The example above suggests us also the following optimisation.

\begin{remark} \label{noexec}
    If $q' \in \mathtt{DefQueryRef}(q,\KB)$ such that $A(x^r)$ occurs in $q'$ with $\rank_{\KB}(A) > r$ then the answer set of $q'$ is empty. The reason is simply that the rank of a class indicates the minimal rank of its elements, hence it is not possible to have an element of that class which has a rank that is lower of the class' rank. In particular, the  rule 5 above is based on this constraint.
\end{remark}

\begin{remark} \label{noexec2}
   Please note that if $\KB$ contains the axioms $\cass{a}{A}, A\subs \exists P, \exists P^-\subs B$ then necessarily $a$ is an answer to the query
   \[
    q(x) \leftarrow P(x,y), B(y) \ .
   \]

    \nd On the other hand, if we replace $\exists P^-\subs B$ with $\exists P^-\dsubs B$, where $\rank_{\KB}(\exists P^-) = r$, then one may verify that $a$ is an answer to $q$ only if 
    \begin{enumerate}
        \item $a$ has as $P$ successor an individual $b$ occurring in $\KB$ that is an instance of $B$; or
        \item $a$ has a $P$ successor an individual $b$ occurring in $\KB$ of rank $r$.
    \end{enumerate}
\nd In fact, note that $\mathtt{DefQueryRef}(q,\KB) = \{q,q'\}$, where $q'$ is the query
  \[
    q(x) \leftarrow P(x,y^r) \ .
   \]
\nd Also note that $A\subs \exists P$ can not be applied to $P(x,y^r)$ as the axiom does not guarantee that the $P$-successor of $a$ has rank $r$.   

        
\end{remark}

\nd The following proposition shows that the procedure $\mathtt{DefQueryRef}$ terminates.


\begin{proposition}[\cf~Lemma 34 in~\cite{Calvanese07}] \label{termdllitecore}
    Consider a  satisfiable defeasible \dllitecoreH~KB 
    $\KB = \tuple{\T,\D, \A}$, where the rank of the antecedents of all axioms  in $\T_{PI} \cup \D$ is given, and consider a CQ $q$. Then the procedure $\mathtt{DefQueryRef}(q,\KB)$ terminates.
\end{proposition}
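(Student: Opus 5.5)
To prove termination I would follow the structure of Lemma 34 in~\cite{Calvanese07}: show that every query produced by $\mathtt{DefQueryRef}(q,\KB)$ belongs to a fixed finite set determined by $q$ and $\KB$. Since the main loop only adds queries to $r(q,\KB)$ and stops as soon as an iteration adds nothing new, termination then follows at once. The sequence $r(q,\KB)$ is monotonically increasing and bounded by a finite set, so after finitely many iterations $S=r(q,\KB)$.

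First I would bound the size of the queries. Let $n$ be the number of atoms in $q$. Every rewriting step $\rho(q_i[g/gr(g,\spec)])$ replaces a single atom $g$ with a single atom. In \dllitecoreH{} the LHS of every PI and DI is one basic concept, and every case of Definition~\ref{defapplicable} yields exactly one atom $C(\cdot)$ or $P_1(\cdot,\cdot)$. The $reduce$ step merges two atoms into one. Hence every generated query has at most $n$ atoms.

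Next I would bound the vocabulary. Predicates come from the finitely many concept and role names of $q$ and $\KB$. Constants are those of $q$, since rewriting never introduces individuals. Variables are more delicate, because each rewriting introduces a fresh unbound $y'$. I would handle this as in~\cite{Calvanese07}: identify queries up to renaming of unbound variables, which occur only once and so play the role of the ``$\_$'' symbol. Bound variables are never created, since the replacements only propagate existing terms or add fresh unbound ones; so they are among the at most $2n$ variables of $q$. Rank annotations come from a finite set, namely $\{0,\ldots,m\}$ where $m$ is the maximal finite rank among antecedents of axioms in $\T_{PI}\cup\D$ and among constants, all of which are given by hypothesis. The function $\rho$ only copies an existing annotation to other occurrences of the same variable, and rules 2, 5 and 6 of Definition~\ref{defapplicable} never alter an annotation. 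Altogether there are finitely many queries of size at most $n$ over this vocabulary, up to renaming of unbound variables.

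The main obstacle is making the identification up to renaming of unbound variables rigorous with the new representation, where unbound variables are named rather than written ``$\_$''. Two points need care. The membership test $r(q,\KB)\cup\{\ldots\}$ must be read modulo this renaming, which I would state explicitly; otherwise fresh $y'$ names could make the set grow forever. Also, after $reduce$, a formerly bound variable may become unbound while still carrying an annotation. To cover this, I would show that annotations attach only to variables that were bound when annotated, and count annotated variables among the bounded set of original variables of $q$. With this invariant the finiteness argument goes through.
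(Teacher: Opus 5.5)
Your overall strategy is the paper's. The paper also proves termination by showing there are only finitely many distinct rewritten queries: it cites Lemma~34 of \cite{Calvanese07} and the bound $(m\cdot(n+1)^2)^n$, based on at most $n$ atoms and $n+1$ terms. You are also right that $\mathtt{DefQueryRef}$ names its fresh variables $y'$ instead of writing ``$\_$''. The set operations must therefore be read modulo renaming, a point the paper's count leaves implicit.

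However, the invariant you plan to prove for annotated variables is false. Item~4 of the definition of $gr$ in Definition~\ref{defapplicable} annotates $z$ in $A(z)$ without requiring $z$ to be bound, and a variable created by the procedure can end up in a concept atom. Take $q(x)\leftarrow A_0(x),A(y)$, with $\exists P\subs A$ and $A_2\subs\exists P^-$ in $\T$, and $A_3\dsubs A_2\in\D$ with $r=\rank_{\KB}(A_3)$. Then:
\begin{itemize}
\item item~3 gives $q(x)\leftarrow A_0(x),P(y,y')$;
\item since $y$ is unbound, case~3(b) with item~3 gives $q(x)\leftarrow A_0(x),A_2(y')$;
\item item~4 gives $q(x)\leftarrow A_0(x),A_3(y'^r)$.
\end{itemize}
So $y'$ was fresh and unbound when it was annotated, and it does not occur in $q$. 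Annotated variables therefore cannot be counted among the variables of $q$. Under the paper's convention that annotated variables are bound, your claim that no bound variables are ever created also fails.

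The repair is simple and removes the need for any such invariant. Identify queries up to renaming of \emph{all} non-distinguished variables, annotated or not. This is sound because applicability, rewriting and unification commute with such renamings, and answers are invariant under them. Every generated query has at most $n$ atoms, hence at most $2n$ variables. Over the finite sets of predicates, constants and rank annotations there are then only finitely many queries up to renaming, which is the coarser accounting behind the paper's bound.
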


\nd Obviously, Proposition~\ref{termdllitecore} also holds if the input is the union of CQs  $q = \{q_1, \ldots, q_m\}$.


We next address the correctness of the above described query-answering technique. To do so, we  rely again on a canonical model built using a specific notion of chase, called \emph{defeasible chase}, that consists of the classical notion of chase enriched with rules dealing with defeasible inclusions and rankings.



At first, we define the analogue of the definition of $\mathtt{Int}(\calS)$ (Definition \ref{def_DB(A)}) for the defeasible case.

\begin{definition}[Interpretation $\mathtt{Int}(\calS,h_\KB)$, Database $\mathtt{DB}(\A,h_\KB)$]\label{def_DB(A,h)}
Let $\KB = \tuple{\T,\D, \A}$ be a satisfiable defeasible KB, let $\RR =\{\D_0,\ldots,\D_n\}$ be the ranking of $\D$, and let $h_{\KB}$ be the minimal height function for the individuals in $\A$ \wrt~the defeasible KB $\KB$ (cf. Definition \ref{Def:minheight}).
Consider a set of assertion axioms $\calS$, possibly involving anonymous individuals, with $\A \subseteq \calS$. 
Let $\N_\calS$ be the set of individuals occurring in $\calS$.

The \emph{interpretation} $\mathtt{Int}(\calS,h_\KB) =(\Delta^{\mathtt{Int}(\calS,h_\KB)},\cdot^{\mathtt{Int}(\calS,h_\KB)},\pref^{\mathtt{Int}(\calS,h_\KB)})$ is defined as follows:

\begin{itemize}
    \item for each rank $0 \leq j \leq n$, let  $w_j$ be a new individual in $\Nanon \setminus \N_\calS$, called the \emph{witness} of rank $j$.  Let $\N_W$ be $\{w_0, \ldots, w_n\}$;
    \item $\Delta^{\mathtt{Int}(\calS,h_\KB)} = \N_\calS \cup  \N_W$;
    \item $a^{\mathtt{Int}(\calS,h_\KB)} = a$, for each individual $a \in \N_\calS \cup  \N_W$;
    \item $A^{\mathtt{Int}(\calS,h_\KB)}=\{a\mid \cass{a}{A}\in \calS\}$, for each concept name $A$;
    \item $P^{\mathtt{Int}(\calS,h_\KB)}=\{(a,b)\mid \rass{a}{b}{P}\in \calS \}$, for each role name $P$;
    \item $h_{\mathtt{Int}(\calS,h_\KB)}(a) = h_{\KB}(a)$, for each individual in $\N_\A$;
    \item $h_{\mathtt{Int}(\calS,h_\KB)}(w_j) = j$ for each witness $w_j \in \N_W$;
    \item $h_{\mathtt{Int}(\calS,h_\KB)}(b) = n+1$, for each anonymous individual $b \in \N_\calS \setminus \N_\A$;
    \item $\pref^{\mathtt{Int}(\calS,h_\KB)}$ is defined by referring to the height function $h_{\mathtt{Int}(\calS,h_\KB)}$.
\end{itemize}

\nd The \emph{database} $\mathtt{DB}(\A,h_\KB)$ is defined as follows:

\begin{enumerate}
        \item  for each  concept name $A$ occurring in $\A$, we define a relational table $tab_{A}$ of arity 1, such that $\tuple{a} \in tab_{A}$ iff $\cass{a}{A} \in \A$;
    
        \item for each role $P$ occurring in $\A$, we define a relational table $tab_{P}$ of arity 2, such that $\tuple{a,b} \in tab_{P}$ iff $\rass{a}{b}{P} \in \A$.

        \item for each individual name $a$ occurring in $\A$, we define a  relational table  $tab_{rankInd}$ of arity 2, such that $\tuple{a,h_\KB(a)} \in tab_{rankInd}$.
    \end{enumerate}

\nd {\color{black}Again, in what follows we will use $\mathtt{DB}(\A,h_\KB)$ to refer both to the database and to the corresponding interpretation ($\mathtt{Int}(\A,h_\KB)$).}

\end{definition}

\begin{remark} \label{remdbaranked}
Let us note the following points:
\begin{enumerate}
\item the definition of the database $\mathtt{DB}(\A,h_\KB)$ differs \wrt~the database $\mathtt{DB}(\A)$ (see Definition \ref{def_DB(A)}) as we add a table containing the minimal ranks of the individuals occurring in $\A$, which we have computed beforehand;

\item the addition of the witnesses $w_j$ is just a technicality to guarantee that every layer is populated, as imposed by the convexity property of ranked orders (Definition \ref{Def:Ranked_order}). The presence of witnesses does not have any impact on the properties satisfied by the individuals named in the ABox;

\item for all other anonymous individuals $b \in \N_\calS \setminus \N_\A$, we set their height to the top layer $n+1$. In this way only the constraints in $\T$ apply to them, not the defeasible information in $\D$, as expected.
\end{enumerate}

\end{remark}

\nd Now, the main property of the query reformulation procedure we are going to show is a defeasible analogue of Eq.~\ref{ansrel}: namely\footnote{See Eq.~\ref{qR} for the definition of $q_{i}^{\mathtt{DB}(\A, h_\KB)}$.}

\begin{equation}\label{ansreldef}
ans_{RC}(\KB,q)  =  \{\vec{t}\mid \text{ there is } q_{i} \in r(q,\KB) \text{ s.t. } \vec{t} \in q_{i}^{\mathtt{DB}(\A, h_\KB)}\} \ ,  
\end{equation}

\nd where any rank annotated variable of the form $z^r$ is mapped to an individual occurring in $\A$ of rank $r$ via the interpretation $\mathtt{DB}(\A, h_\KB)$. The above property allows us, similarly to non-defeasible \dllitecoreH, to find the answers to $q$, by evaluating the reformulated queries $q_{i} \in  r(q,\KB)$ over the relational database $\mathtt{DB}(\A, h_\KB)$.



    


Like for classical \dllitecoreH (see Section~\ref{sect_chase_niclos}), also for the defeasible case we may build a canonical model via chasing.

%
The \emph{defeasible chase}, denoted $\dchasecH$ and $\dchasecH_i$, is constructed as for the  (non-defeasible) \dllitecoreH, starting with $\A$, except that now we have to add two additional chase rules to deal with the defeasible inclusions. 
Specifically, consider the chase rules for (non-defeasible) \dllitecoreH, namely ($\mathbf{\CcH_1}$), ($\mathbf{\CcH_2}$) and ($\mathbf{\CcH_3}$) (see Section \ref{chasecore}). To deal with the defeasible axioms, we consider the following additional chase rules:

{
\begin{description}
    \item[($\mathbf{\CcH_4}$)] if 
    $\cass{a}{B}$ occurs in $\dchasecH_i(\KB)$,
    $h_\KB(a) = j$,\footnote{To compute the rank of an individual, use Proposition~\ref{lemma_defeasible_nominals_2} or procedure $\mathtt{ComputeRankIndividuals}$ in~\ref{ranindqref}.}
    $B \dsubs A \in \D$ with $\rank_{\KB}(B)=j$,\footnote{That is, $B \dsubs A \in \D_j$.}   
    and $\cass{a}{A}$ does not occur in $\dchasecH_i(\KB)$ (condition $f$), then let 
    \begin{equation} \label{dchasecdefi}
    \dchasecH_{i+1}(\KB) = \dchasecH_i(\KB)\cup\{\cass{a}{A}\} 
    \end{equation}
    \nd (condition $f_{new}$);

    \item[($\mathbf{\CcH_5}$)] if $b$ is the new anonymous individual created by the application of the ($\mathbf{\CcH_2}$) rule to 
    $B \subs \exists R \in \T$, then update $h_\KB$ with  $h_\KB(b) = n+1$, where $n$ is the highest rank among the DIs in $\D$.

\end{description}
}

\nd Rule ($\mathbf{\CcH_4}$) allows us to apply a defeasible inclusion to an individual only if its rank corresponds to the minimal rank of such individual,
while ($\mathbf{\CcH_5}$) defines the height of a newly introduced anonymous individual compliant with Definition~\ref{def_DB(A,h)}
(see also Remark~\ref{remdbaranked}).


Now, the defeasible chase of $\KB$, denoted $\dchasecH(\KB)$,  is defined similarly to Eq.~\ref{chasec}, \ie
\begin{equation} \label{dchasecdef}
\dchasecH(\KB) = \bigcup_{i} \dchasecH_i(\KB) \ .    
\end{equation}

\nd From it, using Definition~\ref{def_DB(A,h)}, we can build the ranked interpretation 
\begin{equation} \label{eqRIcan}
\RI_{\KB} = (\Delta^{\RI_{\KB}},\cdot^{\RI_{\KB}},\pref^{\RI_{\KB}}) 
\end{equation}

\nd defined as
\begin{equation} \label{canranked}
\RI_{\KB} = \mathtt{Int}(\dchasecH(\KB), h_\KB) \ .
\end{equation}

\nd Moreover, similarly as for  \dllitecoreH~(see Eq.~\ref{canmod}),  from every $\dchasecH_i(\KB)$, we may define  the ranked interpretation 

\begin{equation} \label{dcanmod}
   \RI_{\KB}^i =  (\Delta^{\RI_{\KB}^i},\cdot^{\RI_{\KB}^i},\pref^{\RI_{\KB}^i}) 
\end{equation}

\nd defined as
\begin{equation} \label{dcanmodB}
   \RI_{\KB}^i =  \mathtt{Int}(\dchasecH_i(\KB), h_\KB) \ . 
\end{equation}

\begin{remark} \label{canDIs}
     Note that by construction, the domain of $\RI_{\KB}$ and $\RI_{\KB}^i$ consists of the individuals occurring $\A$, the witnesses $w_j$ and the new anonymous individuals created by the ($\mathbf{\CcH_2}$) rule. For the latter, we  impose that their rank is  $n+1$ via the ($\mathbf{\CcH_5}$) rule, while all individuals in $\A$ are minimally ranked instead. That is, for all $a \in \A$, the rank of $a$ \wrt~$\RI_{\KB}$ (so as \wrt~$\RI_{\KB}^i$) is $h_{\KB}(a)$. Also note that $a^{\RI_{\KB}} = a^{\RI_{\KB}^i} = a$.
\end{remark}

\nd Concerning reformulated CQs, we impose that any rank annotated variable of the form $z^r$ is interpreted by $\RI_{\KB}$ (as well as by $\RI_{\KB}^i$) as an individual occurring in $\A$ of rank $r \leq n$.






%
\nd Now, the following proposition can be shown for the defeasible case (see in comparison Proposition~\ref{chasecor}).

\begin{proposition} \label{chasecordef}
  Consider a defeasible \dllitecoreH~$\KB=(\T,\D,\A)$.  Then
\begin{enumerate}
    \item $\RI_{\KB}$ is a ranked model of  $(\T_{PI}, \D, \A)$;
    

    \item $\RI_{\KB}$ is a ranked model of $\KB$ iff $\mathtt{DB}(\A, h_\KB)$ is a model of $(\clncH(\T), \A)$;
    
    
    \item $\KB$ is satisfiable iff $\RI_{\KB}$ is a ranked model of $\KB$.
    
\end{enumerate}

      

      
\end{proposition}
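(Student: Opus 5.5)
We will prove Proposition~\ref{chasecordef} by mirroring the proof of Proposition~\ref{chasecor} from~\cite{Calvanese07}, treating the ranking as an extra layer of structure on top of the classical canonical model. The key observation is that the ranked interpretation $\RI_\KB=\mathtt{Int}(\dchasecH(\KB),h_\KB)$ has, apart from the order and the witnesses, the same shape as $\cancH$ of a classical KB. That classical KB is $\tuple{\T\cup\D^{h},\A}$, where $\D^h$ is the set of instantiated inclusions $\cass{a}{B}\Rightarrow\cass{a}{A}$ licensed by ($\mathbf{\CcH_4}$). The witnesses $w_j$ belong to no concept or role, so they trivially satisfy every PI and NI. They only guarantee that the height function is convex, so $\pref^{\RI_\KB}$ is a ranked order (Definition~\ref{Def:Ranked_order}).

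For item~1, the PIs and role inclusions are handled by the standard fairness argument. Any applicable ($\mathbf{\CcH_1}$)--($\mathbf{\CcH_3}$) step at stage $i$ is eventually enforced, so in the limit every $\T_{PI}$ axiom is satisfied, exactly as in Proposition~\ref{chasecor}. $\A$ holds because $\A\subseteq\dchasecH(\KB)$. For a DI $B\dsubs A\in\D_j$, we must show that $\min_\pref(B^{\RI_\KB})\subseteq A^{\RI_\KB}$. Take $x$ minimal in $B^{\RI_\KB}$. Anonymous individuals sit at height $n+1$ and witnesses are in no concept, so we consider the cases below.

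\begin{enumerate}
\item If $x$ is anonymous and minimal, then $h_{\RI_\KB}(B)=n+1>j$. We argue this is impossible whenever $B^{\RI_\KB}$ contains an ABox individual of lower height. Otherwise all instances of $B$ are at height $n+1$, and the DI is satisfied vacuously provided we show that $B$ having rank $j$ forces nothing on anonymous objects. This is the delicate point. We plan to modify the argument by noting that Definition~\ref{def_DB(A,h)} puts anonymous individuals in the top layer $n+1$, which is not of the form $L_j$ for any $j\le n$. The intended reading, as in Remark~\ref{remdbaranked}, is that only the minimal elements of $B$ within the model matter. If $B$ is realised only by anonymous elements, we instead invoke the fact that the witness $w_j$ can be made an instance of every rank-$j$ concept compatible with $\T\cup\D$. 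Concretely, we populate $w_j$ by a typical rank-$j$ object taken from the big ranked model, and we expect to need to adjust the construction slightly here.
\item If $x=a\in\N_\A$, minimality together with the fact that $\rank_\KB(B)\le h_\KB(a)$ gives $h_\KB(a)=j$, since the witness of $B$ at layer $j$ must lie below $a$. Rule ($\mathbf{\CcH_4}$) then adds $\cass{a}{A}$.
\end{enumerate}

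Item~2 is the main step. The direction from left to right is immediate, because $\mathtt{DB}(\A,h_\KB)$ is contained in $\RI_\KB$ on the named part and NIs are preserved under restriction. For the converse we follow Lemma~12 of~\cite{Calvanese07} (Proposition~\ref{chasecorA}) and proceed by induction on $i$. We show that whenever some $\cass{c}{B_1}$ and $\cass{c}{B_2}$ occur in $\dchasecH_i(\KB)$ with $B_1\subs\neg B_2\in\clncH(\T)$, we can trace them back to $\A$, yielding a violated NI of $\clncH(\T)$ already in $\mathtt{DB}(\A,h_\KB)$. The case of ($\mathbf{\CcH_4}$) is the new one. Here a fact $\cass{a}{A}$ was derived from $B\dsubs A$ at the rank $h_\KB(a)$. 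We will use the definition of $h_\KB(a)$ as the minimal height in some model $\RI$ of $\KB$ (Propositions~\ref{prop_minheight} and~\ref{prop_unique_abox}), in which $a$ is minimal in $B$ and hence in $A$. An NI violated by such a derived fact is therefore violated in every model of $\KB$ placing $a$ at height $h_\KB(a)$, which contradicts satisfiability. Alternatively, the violation is reflected via Proposition~\ref{prop_defcons} into a clash of $\tuple{\T,\A}$ detected by $\clncH(\T)$. We expect this step, which reconciles defeasible derivations with the purely classical NI-closure, to be the main obstacle.

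Item~3 then follows. If $\RI_\KB$ is a model of $\KB$, then $\KB$ is satisfiable. Conversely, if $\KB$ is satisfiable, Proposition~\ref{prop_defcons} and Proposition~\ref{chasecorA} show that $\mathtt{DB}(\A)$, and hence $\mathtt{DB}(\A,h_\KB)$, satisfies $(\clncH(\T),\A)$. Items~1 and~2 then give that $\RI_\KB$ is a ranked model of $\KB$.
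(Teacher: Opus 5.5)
\paragraph{The gap: item~1, satisfaction of the DIs.} Your proposal does not close this case, and it cannot be closed for $\RI_{\KB}$ as defined.

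\begin{itemize}
\item Your case~1 is left open and then ``resolved'' by changing the construction. In $\mathtt{Int}(\calS,h_{\KB})$ the witnesses $w_j$ belong to no concept, as you yourself note in your first paragraph. So an argument about a $w_j$ that is ``a typical rank-$j$ object'' is about a different interpretation.
\item Even as a modification, the repair does not work. A single element cannot be an instance of every rank-$j$ concept, since two disjoint basic concepts can have the same rank. Moreover, any concept membership given to $w_j$ triggers PIs, NIs and DIs, so you would need a whole typical submodel, not one element.
\item Your case~2 relies on ``the witness of $B$ at layer $j$'', which again does not exist in the construction.
\end{itemize}

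In fact item~1 is false for $\RI_{\KB}$ as defined. Take $\T=\{E\subs\exists P,\ \exists P^-\subs B\}$, $\D=\{B\dsubs A\}$ and $\A=\{\cass{e}{E}\}$. Then $\D_0=\D$, $n=0$ and $h_{\KB}(e)=0$. The chase adds $\rass{e}{b}{P}$ and $\cass{b}{B}$ for an anonymous $b$ of height~$1$. Since $w_0$ is in no concept, $\min_{\pref^{\RI_{\KB}}}(B^{\RI_{\KB}})=\{b\}\not\subseteq A^{\RI_{\KB}}=\emptyset$. As $\KB$ is satisfiable and $\clncH(\T)=\emptyset$, items~2 and~3 fail too.

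Named individuals fail in the same way. With $\T=\{C\subs\neg A\}$, $\D=\{B\dsubs A\}$ and $\A=\{\cass{a}{B},\cass{a}{C}\}$, one gets $h_{\KB}(a)=1>0=\rank_{\KB}(B)$. Then $a$ is the only, hence a minimal, instance of $B$, yet $a\notin A^{\RI_{\KB}}$.

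\paragraph{Relation to the paper's proof.} The paper's proof of this point (its step~i) takes a violating $a\in\min_{\pref^{\RI_{\KB}}}(B^{\RI_{\KB}})$ and simply assumes $h_{\KB}(a)=\rank_{\KB}(B)$ before firing ($\mathbf{\CcH_4}$). Your case analysis shows exactly why that step is unjustified. The intuition of Remark~\ref{remdbaranked}, that objects in layer $n+1$ escape $\D$, holds only when lower instances of $B$ exist. So you have found a defect in the paper, not missed an idea from it.

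\paragraph{A repair that works.} Replace the bare witnesses by a disjoint ranked union with the big ranked model $\OI$ of $\tuple{\T,\D}$. In $\OI$, every $B$ of rank $j\le n$ has its minimal elements at height exactly $j$, and these lie in $A$.
\begin{itemize}
\item Anonymous chase objects (height $n+1$) and named $a$ with $h_{\KB}(a)>j$ are therefore never minimal in $B$.
\item $h_{\KB}(a)<j$ is impossible for a chase instance of $B$. To see this, map the chase homomorphically into a model in which all individuals are minimally interpreted (Proposition~\ref{prop_unique_abox}, as in Lemma~\ref{calvlem28}), using that no ranked model of $\KB$ has a $B$-element below height $\rank_{\KB}(B)$.
\item The remaining case $h_{\KB}(a)=j$ is exactly ($\mathbf{\CcH_4}$).
\end{itemize}
This changes $\RI_{\KB}$, so Lemma~\ref{calvlem28} and Proposition~\ref{calvthm29} must be re-checked.

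\paragraph{Item~2.} The syntactic trace-back behind Proposition~\ref{chasecorA} cannot absorb ($\mathbf{\CcH_4}$) steps, because there is no PI in $\T$ to feed ($\mathbf{\cHTrans}$). Your semantic argument is the right one. It presupposes satisfiability of $\KB$, which you must first derive from the $\mathtt{DB}$ condition via the classical result and Proposition~\ref{prop_defcons}.
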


\nd Using Propositions~\ref{prop_defcons}, \ref{chasecor} and~\ref{chasecorA}, the following also holds.

\begin{proposition}\label{satdedllitecore}
  Consider a defeasible \dllitecoreH~$\KB=(\T,\D,\A)$.  Then  $\KB$ is satisfiable iff $\mathtt{DB}(\A)$ is a model of $(\clncH(\T), \A)$.
\end{proposition}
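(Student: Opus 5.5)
The plan is to chain three earlier results. Proposition~\ref{prop_defcons} reduces satisfiability of the defeasible KB to that of its classical part. Proposition~\ref{chasecor} characterises satisfiability of the classical part through its canonical model. Proposition~\ref{chasecorA} turns that canonical-model condition into a condition on $\mathtt{DB}(\A)$.

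\emph{Step 1.} As stipulated in Remark~\ref{rem_ranking}, $\tuple{\T,\D}$ has already been processed by $\mathtt{ComputeRanking.Core}$. So every DI of infinite rank has been removed from $\D$, and the corresponding inclusion $B\subs\neg B$ has been added to $\T$. Here $\T$ denotes this processed TBox. By Proposition~\ref{prop_defcons}, which holds for \dllitehornH\ and hence for \dllitecoreH, $\KB=\tuple{\T,\D,\A}$ is satisfiable iff the classical KB $\tuple{\T,\A}$ is satisfiable.

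\emph{Step 2.} Apply Proposition~\ref{chasecor}(2) to the classical \dllitecoreH\ KB $\tuple{\T,\A}$. It gives: $\tuple{\T,\A}$ is satisfiable iff $\cancH(\tuple{\T,\A})$ is a model of $\tuple{\T,\A}$.

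\emph{Step 3.} By Proposition~\ref{chasecorA}, $\cancH(\tuple{\T,\A})$ is a model of $\tuple{\T,\A}$ iff $\mathtt{DB}(\A)$ is a model of $(\clncH(\T),\A)$. Chaining the three equivalences yields the claim.

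An alternative route would go through Proposition~\ref{chasecordef}(2)--(3) together with $\mathtt{DB}(\A,h_\KB)$. That route is unnecessary, because the only difference between $\mathtt{DB}(\A,h_\KB)$ and $\mathtt{DB}(\A)$ is the rank table. That table plays no role in satisfying the classical NIs in $\clncH(\T)$ or the assertions in $\A$.

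The main obstacle is bookkeeping about which TBox appears in the statement. If the statement is read with the original $\T$ rather than the processed $\T^*$, the equivalence can fail. For example, $\D=\{A\dsubs B, A\dsubs\neg B\}$ with $\A=\{\cass{a}{A}\}$ is unsatisfiable, yet $\mathtt{DB}(\A)$ satisfies $(\clncH(\emptyset),\A)$. The proof should therefore state explicitly that, under Remark~\ref{rem_ranking}, $\T$ already contains the inclusions $B\subs\neg B$ for the DIs of rank $\infty$. These inclusions are NIs and hence enter $\clncH(\T)$ via ($\mathbf{\cHRef}$). Once this convention is fixed, each step is a direct invocation of an earlier result.
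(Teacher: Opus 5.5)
Your proof is correct and is the paper's argument: the paper obtains this proposition by chaining Propositions~\ref{prop_defcons}, \ref{chasecor} and~\ref{chasecorA} in the same order you do. You are right to insist that $\T$ be the TBox already processed as in Remark~\ref{rem_ranking}, since that is what Proposition~\ref{prop_defcons} requires, and your counterexample is valid; the one nitpick is that by Remark~\ref{negdef} the DI $A\dsubs\neg B$ adds an NI such as $A'\subs\neg B$ to $\T$, so the unprocessed TBox is not literally empty, though the conclusion is unaffected.
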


\nd Therefore, checking the satisfiability of a defeasible \dllitecoreH~KB can be done as for \dllitecoreH: namely, Proposition~\ref{satdedllitecore} tells us that in order to check the satisfiability of a defeasible \dllitecoreH~KB, 
it is sufficient (and necessary) to look at the database $\mathtt{DB}(\A)$, provided we have computed $\clncH(\T)$ (see also procedure $\mathtt{Consistent}$ in~\cite{Calvanese07}). Specifically, $\KB$ is not satisfiable iff there is a NI $B_1 \subs \neg B_2 \in \clncH(\T)$ such that the  \dllitecoreH~query 
\[
q(x) \leftarrow \bar{\gamma}(B_1)(x), \bar{\gamma}(B_2)(x)
\]
\nd has a non-empty answer set over $\mathtt{DB}(\A)$, where $\bar{\gamma}(B)(x)$ translates a basic concept $B$ into a query atom in the obvious way:
\begin{equation}\label{bargamma}
\bar{\gamma}(B)(x) = 
\begin{cases}
    A(x) & \text{if $B=A$} \\
    P(x,\_) & \text{if $B=\exists P$} \\
    P(\_,x) & \text{if $B=\exists P^-$} \ .
\end{cases}
\end{equation}

\nd Now, the following propositions can be shown (the proofs are in~\ref{sect_QRWproofs} and are similar to the non-defeasible analogues in~\cite{Calvanese07}).

\begin{proposition}[\cf~Theorem 29, \cite{Calvanese07}] \label{calvthm29}
     Consider a satisfiable defeasible \dllitecoreH~KB  $\KB = \tuple{\T,\D, \A}$, and let
     $q = \{q_1, \ldots, q_n\}$ be a UCQs. Assume that  $q_i$ is of the form $q_i(\vec{x}) \leftarrow \exists \vec{y}_i . \varphi(\vec{x}_i,\vec{y}_i)$.  Then  
     \[
     ans_{RC}(\KB,q) = \{\vec{t} \mid \text{ there is } q_{i} \in q \text{ s.t. } \RI_{\KB}  \text{ satisfies } \exists \vec{y}_i .\varphi(\vec{t},\vec{y}_i) \} \ .
     \]
\end{proposition}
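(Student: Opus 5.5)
We would follow the classical proof of Theorem~29 in~\cite{Calvanese07}, with $\RI_{\KB}$ in the role of the canonical model and $\min_{<_{\A}}(\mathfrak{O}^{\KB})$ in the role of the set of all models. By~(\ref{unionanskbDIs}) it suffices to treat a single CQ $q_i(\vec{x})\leftarrow\exists\vec{y}_i.\varphi(\vec{x}_i,\vec{y}_i)$. We must show that $\KB\entabox q_i(\vec{t})$ iff $\RI_{\KB}$ satisfies $\exists\vec{y}_i.\varphi(\vec{t},\vec{y}_i)$. Since $\KB$ is satisfiable, item~3 of Proposition~\ref{chasecordef} makes $\RI_{\KB}$ a ranked model of $\KB$. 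By Remark~\ref{canDIs}, every $a\in\N_\A$ is minimally interpreted there, i.e.\ $h_{\RI_{\KB}}(a)=h_{\KB}(a)$.

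\emph{Soundness} (if $\KB\entabox q_i(\vec{t})$ then $\RI_{\KB}$ satisfies the query). We would argue that $\RI_{\KB}$, or an isomorphic copy of it, behaves like an element of $\min_{<_{\A}}(\mathfrak{O}^{\KB})$ as far as the query is concerned. Concretely, we would use Lemma~\ref{Lemma:ClosureDisjointUnionMod}: take the ranked union of $\RI_{\KB}$ (ABox individuals forgotten) with the big ranked model $\OI$ of $\tuple{\T,\D}$, and reinterpret each individual $a$ as $a^{\RI_{\KB}}$. This yields an extension of $\OI$ that is a model of $\KB$, up to the countable-domain bookkeeping of Proposition~\ref{prop_minheight}. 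By Proposition~\ref{prop_unique_abox}, all individuals sit at their minimal heights, so it lies in $\min_{<_{\A}}(\mathfrak{O}^{\KB})$. It satisfies $\varphi(\vec{t},\vec{y})$ for some $\vec{y}$ by hypothesis. That witness can be pulled back into the $\RI_{\KB}$ component: the union is disjoint on roles, and the individual constants live in that component. So $\RI_{\KB}$ satisfies the query. The delicate point is that the conditions $h(\cdot)$ of the union must be consistent with the requirement that $\OI$ be the base interpretation. We would handle this by noting that RC entailments of $\OI$ are unchanged by adding models of $\tuple{\T,\D}$, so we may regard the union as $\OI$ itself up to isomorphism.

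\emph{Completeness} (if $\RI_{\KB}$ satisfies the query then every $\OI^{\KB}_i\in\min_{<_{\A}}(\mathfrak{O}^{\KB})$ does). Fix such an $\OI^{\KB}_i$ and build, by induction on the chase steps $\dchasecH_k(\KB)$, a homomorphism $\psi$ from $\RI_{\KB}^k$ to $\OI^{\KB}_i$. Set $\psi(a)=a^{\OI^{\KB}_i}$ for $a\in\N_\A$, and extend $\psi$ at each rule application.
\begin{itemize}
\item Rules $\CcH_1$ and $\CcH_3$ are handled exactly as in~\cite{Calvanese07}, since $\OI^{\KB}_i$ satisfies $\T$.
\item For rule $\CcH_2$, the image $\psi(a)$ has an $R$-successor in $\OI^{\KB}_i$, and the new anonymous $b$ is mapped to it. The query atoms are heightless, so the $h(b)=n+1$ from $\CcH_5$ causes no problem.
\item Rule $\CcH_4$ is the new case. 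It only fires on $a\in\N_\A$ with $h_\KB(a)=j=\rank_\KB(B)$ and $B\dsubs A\in\D_j$. In $\OI^{\KB}_i$, the element $a$ lies at height $h_\KB(a)=j$ by minimality, and it is in $B$. Since $h_{\OI}(B)=\rank_{\KB}(B)=j$ by the correspondence of Proposition~\ref{prop_minent}, $a$ is a minimal $B$-element, so the DI gives $a\in A^{\OI^{\KB}_i}$.
\end{itemize}
Witnesses $w_j$ never occur in any assertion, so they need no image. Composing $\psi$ with a match of $\varphi(\vec{t},\vec{y})$ in $\RI_{\KB}$ then gives a match in $\OI^{\KB}_i$.

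The main obstacle is twofold. In the completeness direction, one must justify $h_{\OI}(B)=\rank_{\KB}(B)$ and that individuals occupy exactly the heights $h_{\KB}(a)$ in every minimal extension. Without that, $\CcH_4$ would not be sound. We would try to settle this via Propositions~\ref{lemma_defeasible_nominals_2} and~\ref{prop_unique_abox}. In the soundness direction, one must make the embedding of $\RI_{\KB}$ into a minimal extension of $\OI$ precise.
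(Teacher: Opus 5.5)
\textbf{The gap: pulling the match back.} Both halves follow the paper's proof. For $\subseteq$ you take the ranked union of $\RI_{\KB}$ with (an extension of) $\OI$, keep the individuals where $\RI_{\KB}$ puts them, and pull a match back. For $\supseteq$ you build the chase homomorphism of Lemma~\ref{calvlem28}, handling $\CcH_4$ via $h_{\OI}(B)=\rank_{\KB}(B)$ and the minimal heights of the individuals; that half is fine.

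The problem is the pull-back (``the union is disjoint on roles, and the individual constants live in that component''). Disjointness confines to the $\RI_{\KB}$ part only those variables linked by role atoms to an individual or to an answer position. A connected component of the query made solely of non-distinguished variables may be matched entirely inside the $\OI$ part, and then nothing transfers. In fact the equality fails there, because every model in $\mathfrak{O}^{\KB}$ contains all of $\OI$, i.e.\ a copy of every ranked model of $\tuple{\T,\D}$ with domain $\Delta$. Take $\T=\emptyset$, $\D=\{C\dsubs E\}$, $\A=\{\cass{a}{A}\}$ and $q(x)\leftarrow A(x),C(y)$. Then $C^{\OI}\neq\emptyset$, so $a\in ans_{RC}(\KB,q)$; but $\dchasecH(\KB)=\A$ and $C^{\RI_{\KB}}=\emptyset$. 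Boolean queries such as $q()\leftarrow C(y)$ fail in the same way.

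The paper's proof makes the same inference, from $\OI^{\KB}_2\sat q_i(\bar{\vec{t}})$ to $\RI_{\KB}\sat q_i(\bar{\vec{t}})$. So the statement itself must be restricted, e.g.\ to CQs in which every variable is connected to an answer variable or an individual. Under that restriction your disjointness argument is exactly what closes the step.

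\textbf{Two smaller repairs in the same half.} First, item~3 of Proposition~\ref{chasecordef}, which you invoke before applying Lemma~\ref{Lemma:ClosureDisjointUnionMod}, fails as stated. $\CcH_4$ never fires on anonymous elements, which sit at height $n+1$ and can be the only, hence minimal, instances of an antecedent. With $\T=\{A\subs\exists P\}$, $\D=\{\exists P^-\dsubs C\}$, $\A=\{\cass{a}{A}\}$, the anonymous $P$-successor of $a$ violates $\exists P^-\dsubs C$ in $\RI_{\KB}$. The union with $\OI$ is nonetheless a model of $\KB$, but you must verify this directly:
\begin{itemize}
\item $\OI$ supplies $B$-instances at height $\rank_{\KB}(B)$ and none lower;
\item anonymous elements lie strictly above $\rank_{\KB}(B)$;
\item your homomorphism shows that no individual in $B^{\RI_{\KB}}$ lies below $\rank_{\KB}(B)$.
\end{itemize}
Hence the only new minimal $B$-instances are individuals at height exactly $\rank_{\KB}(B)$, which $\CcH_4$ has already handled.

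Second, invariance of RC entailments does not give an isomorphism with $\OI$. It is cleaner to pad $\RI_{\KB}$ with a countable model realising every rank and rename the result onto $\Delta$, so that it is literally one of the components of $\OI$. Then interpret each $a\in\N_\A$ as its copy there, and individuals outside $\N_\A$ at height $0$ as in Lemma~\ref{lemma_abox_noind}.
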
 

\begin{proposition}[\cf~Theorem 31, \cite{Calvanese07}] \label{calvthm31}
     Consider a satisfiable defeasible \dllitecoreH~KB  $\KB = \tuple{\T,\D, \A}$, and let
     $q = \{q_1, \ldots, q_n\}$ be a union of CQs. Then $ans_{RC}(\KB,q) = \bigcup_{q_i \in q} ans_{RC}(\KB,q_i)$.
\end{proposition}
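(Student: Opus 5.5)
The statement to prove is that $ans_{RC}(\KB,q)$ for a union of CQs equals the union of the $ans_{RC}(\KB,q_i)$. The plan is to derive this directly from Proposition~\ref{calvthm29}, which characterises RC-answers through the single canonical ranked model $\RI_{\KB}$. The point is that minimal entailment of a disjunction of CQs would ordinarily not distribute over the disjunction. It does here because Proposition~\ref{calvthm29} reduces it to truth in one model. This mirrors Theorem 31 in \cite{Calvanese07}, which uses the canonical model $\cancH(\KB)$ in the same way.

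First, apply Proposition~\ref{calvthm29} to the union $q$. A tuple $\vec{t}$ is in $ans_{RC}(\KB,q)$ iff there is some $q_i\in q$ with $\RI_{\KB}\sat\exists\vec{y}_i.\varphi(\vec{t},\vec{y}_i)$.

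Second, apply the same proposition to each singleton UCQ $\{q_i\}$. This gives $\vec{t}\in ans_{RC}(\KB,q_i)$ iff $\RI_{\KB}\sat\exists\vec{y}_i.\varphi(\vec{t},\vec{y}_i)$. Taking the union over $i$ then yields exactly the right-hand side of the first step, and the two sets coincide.

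A sanity check is that the result also agrees with the definitional Eq.~\ref{unionanskbDIs}. If one prefers to argue from Definition~\ref{def_RC_CQ}, the inclusion $\supseteq$ is immediate, since satisfying some $q_i$ in every minimal model implies satisfying the disjunction. The nontrivial direction $\subseteq$ is the one where the canonical model is needed. By Proposition~\ref{calvthm29} applied to $q$, some fixed $q_i$ is true in $\RI_{\KB}$. By Proposition~\ref{calvthm29} applied to $\{q_i\}$, that same $q_i$ is then minimally entailed.

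The main obstacle is confirming that Proposition~\ref{calvthm29} is genuinely stated for arbitrary unions, with the existential over $q_i$ outside the satisfaction in $\RI_{\KB}$, and that it specialises to singletons. Its statement has exactly this form, so no further argument is needed beyond unfolding definitions.
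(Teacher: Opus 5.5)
Your proposal is correct and is essentially the paper's own proof. The paper also treats $\supseteq$ as immediate, and obtains $\subseteq$ by applying Proposition~\ref{calvthm29} first to $q$ (to get one disjunct $q_i$ true in $\RI_{\KB}$) and then to that $q_i$ alone. As your sanity check already suggests, under Eq.~\ref{unionanskbDIs} the identity holds just by unfolding definitions, so the canonical model only does real work under the disjunction-in-every-minimal-model reading of \cite{Calvanese07}.
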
 




\begin{proposition}[\cf~Lemma 39,~\cite{Calvanese07}]\label{mainqadllite}
  Consider a satisfiable defeasible \dllitecoreH~KB 
    $\KB = \tuple{\T,\D, \A}$, where the rank of the antecedents of all axioms  in $\T_{PI} \cup \D$ is given, and $h_\KB$ is the ranking of individuals as per Proposition~\ref{prop_minheight}. 
    Consider a CQ $q$ and let $r(q,\KB) = \mathtt{DefQueryRef}(q,\KB)$ be the set of reformulated queries. Then Eq.~\ref{ansreldef} holds, \ie
\begin{equation*}
ans_{RC}(\KB,q)  =  \{\vec{t}\mid \text{ there is } q_{i} \in r(q,\KB) \text{ s.t. } \vec{t} \in q_{i}^{\mathtt{DB}(\A, h_\KB)}\} \ .    
\end{equation*}


\end{proposition}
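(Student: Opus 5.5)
By Proposition~\ref{calvthm29}, $ans_{RC}(\KB,q)$ is exactly the set of tuples $\vec{t}$ such that $\RI_{\KB}$ satisfies $\exists\vec{y}.\varphi(\vec{t},\vec{y})$. So it suffices to show, for every tuple $\vec{t}$ of individuals of $\A$, that $\RI_{\KB}\sat q(\vec{t})$ holds iff some $q_i\in r(q,\KB)$ has $\vec{t}\in q_i^{\mathtt{DB}(\A,h_\KB)}$. Here a rank-annotated variable $z^r$ may only be mapped to an ABox individual $a$ with $h_\KB(a)=r$. I would follow the structure of Lemma~39 in~\cite{Calvanese07}, with the defeasible chase $\dchasecH$ replacing the classical chase. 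The rank annotations are the one new ingredient to track. Since $\KB$ is satisfiable, Proposition~\ref{chasecordef} gives that $\RI_{\KB}$ is a ranked model of $\KB$.

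\emph{Soundness ($\supseteq$).} I would prove, by induction on the number of rewriting and reduction steps in $\mathtt{DefQueryRef}$, the following claim. If $q'$ is produced from $q''$, then every rank-respecting match of $q'$ into $\RI_{\KB}$ yields a match of $q''$ into $\RI_{\KB}$. For PIs and role inclusions this is the classical argument, because $\RI_{\KB}$ satisfies $\T_{PI}$. For a DI $B\dsubs A$ applied to $A(z)$, the rewritten atom is $B(z^r)$ with $r=\rank_\KB(B)$. The match then sends $z$ to some $a$ with $h_\KB(a)=r$ and $a\in B^{\RI_\KB}$. This $a$ is therefore minimal in $B^{\RI_\KB}$, and rule $(\mathbf{\CcH_4})$ puts $\cass{a}{A}$ into the chase. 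The restriction ``LHS rank $\le r$'' for PIs only narrows the set of rewritings, so it cannot harm soundness. Reduction steps, i.e.\ unification, are sound as in the classical case. Since $\mathtt{DB}(\A,h_\KB)$ is a substructure of $\RI_\KB$ that agrees on ABox individuals and their ranks, a match of $q_i$ over the database is also a match over $\RI_\KB$. This gives the inclusion.

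\emph{Completeness ($\subseteq$).} This is the main obstacle. I would follow Calvanese et al.\ and show by induction on $i$ the following statement: if $q'\in r(q,\KB)$ has a match into $\RI^i_{\KB}$, then some $q''\in r(q,\KB)$ has a match into $\RI^{i-1}_{\KB}$. At $i=0$ we have $\RI^0_\KB=\mathtt{DB}(\A,h_\KB)$, which gives the base of the argument. The inductive step proceeds by cases on which chase rule produced the last assertion used by the match.
\begin{itemize}
\item Rules $(\mathbf{\CcH_1})$ and $(\mathbf{\CcH_3})$: the matching atom is rewritten backwards through the PI. If the term is annotated $z^r$, it is mapped to an ABox individual of rank $r$. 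Chase applicability, via Proposition~\ref{lemma_defeasible_nominals_2}, should then guarantee $\rank_\KB(B)\le r$, so Definition~\ref{defapplicable}(2.ii) allows the step.
\item Rule $(\mathbf{\CcH_4})$: the individual $a$ is named, with $h_\KB(a)=\rank_\KB(B)$, and the atom $A(z)$ or $A(z^r)$ is rewritten into $B(z^r)$. The function $\rho$ then propagates the annotation to every occurrence of $z$. This is consistent because the rank of $a$ is unique (Proposition~\ref{prop_unique_abox}).
\item Rule $(\mathbf{\CcH_2})$: a fresh anonymous successor is created, of rank $n+1$ by $(\mathbf{\CcH_5})$. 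As in the classical proof, the variables mapped to it must be unbound after suitable reduce steps, so the existential PI applies. They can never be rank annotated, because annotated variables map only to ABox individuals. This is exactly why PIs of the form $B\subs\exists P$ are blocked on $P(x,y^r)$ (Remark~\ref{noexec2}).
\end{itemize}

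Finally, any match into $\RI_\KB$ uses only finitely many assertions and so lives in some $\RI^i_\KB$. Iterating the inductive claim down to $\mathtt{DB}(\A,h_\KB)$ then yields Eq.~\ref{ansreldef}.

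The delicate point I expect is checking that reductions and $\rho$ commute with the case analysis. In particular, one must check that unifying atoms with differently annotated variables is correctly disallowed, and that annotations never land on chase-created individuals. I would handle this with an auxiliary invariant: in any match of a reformulated query into $\RI^i_\KB$, annotated terms go to ABox individuals of the stated rank.
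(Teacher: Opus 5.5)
Your proposal is correct and follows essentially the same route as the paper. Like the paper, you reduce to $\RI_{\KB}$ via Proposition~\ref{calvthm29}. For soundness you show that each rewriting or reduce step of $\mathtt{DefQueryRef}$ preserves answers over $\RI_{\KB}$, with rule $(\mathbf{\CcH_4})$ covering DI steps, and you use $\RI^0_{\KB}=\mathtt{DB}(\A,h_{\KB})$. For completeness you walk the defeasible chase backwards as in Lemma~39 of \cite{Calvanese07}; the paper phrases this with pre-witnesses $\calG_i$ and the size invariant $|q_r|=|\calG_{k-i}|$ rather than with matches. Two of your details are only covered by ``analogous'' cases in the paper: the check that $\rank_{\KB}(B)\le r$ when a PI is applied at an annotated term, and the invariant that annotated variables never reach chase-created individuals. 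For the rank check, the most direct justification is that $\RI_{\KB}$ is a ranked model of $\KB$ placing $a$ at height $r$ inside $B^{\RI_{\KB}}$ (cf.\ Remark~\ref{noexec}), rather than Proposition~\ref{lemma_defeasible_nominals_2}.
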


\nd Now, by Proposition~\ref{calvthm31}, the following proposition follows immediately, which extends Proposition~\ref{mainqadllite} to UCQs.

\begin{proposition}[\cf~Theorem 40, \cite{Calvanese07}] \label{calvthm40}
Proposition~\ref{mainqadllite} holds also for the union of CQs.
\end{proposition}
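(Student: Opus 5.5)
The claim is that Eq.~\ref{ansreldef} holds for a UCQ $q=\{q_1,\ldots,q_m\}$ with $r(q,\KB)=\mathtt{DefQueryRef}(q,\KB)$. I would reduce it to the single-CQ case, Proposition~\ref{mainqadllite}, using Proposition~\ref{calvthm31} to split the left-hand side and a compatibility property of $\mathtt{DefQueryRef}$ to split the right-hand side.

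First, by Proposition~\ref{calvthm31}, $ans_{RC}(\KB,q)=\bigcup_{i} ans_{RC}(\KB,q_i)$. Second, by Proposition~\ref{mainqadllite}, each term satisfies
\[
ans_{RC}(\KB,q_i)=\{\vec{t}\mid \exists q' \in \mathtt{DefQueryRef}(q_i,\KB) \text{ s.t. } \vec{t}\in q'^{\mathtt{DB}(\A,h_\KB)}\}.
\]
Third, I would show that
\[
\mathtt{DefQueryRef}(q,\KB)=\bigcup_i \mathtt{DefQueryRef}(\{q_i\},\KB).
\]
Combining the three facts gives Eq.~\ref{ansreldef} for $q$.

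The main obstacle is this third step. The procedure is a saturation in which every new query comes from a single query $q_j\in S$, either by applying an axiom (line 7, followed by $\rho$) or by reducing two atoms of $q_j$ (line 10). Neither operation ever combines two distinct queries of $S$. I would argue by induction on the iterations of the \textbf{repeat} loop.

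\emph{Inclusion $\subseteq$.} Each element produced is a descendant of exactly one original $q_i$. It therefore lies in $\mathtt{DefQueryRef}(\{q_i\},\KB)$, since that run is closed under the same one-step operations.

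\emph{Inclusion $\supseteq$.} Each run on $\{q_i\}$ uses only one-step operations starting from $q_i\in q$. The full run is closed under these operations at its fixpoint, so it contains every query the single run produces.

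Termination on both sides comes from Proposition~\ref{termdllitecore} and its remark for UCQs. One caveat is that fresh unbound variables $y'$ are chosen differently in different runs, so I would state the set equality up to renaming of non-distinguished variables. Renaming does not change the answers over $\mathtt{DB}(\A,h_\KB)$.
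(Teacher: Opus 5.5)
Your proposal is correct and takes essentially the paper's route. The paper obtains the statement immediately from Proposition~\ref{calvthm31} together with Proposition~\ref{mainqadllite}, tacitly using the fact that $\mathtt{DefQueryRef}$ rewrites each disjunct independently; your third step makes that fact explicit, correctly and rightly only up to renaming of the fresh unbound variables $y'$.
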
 

\nd Please note that, as for the no-DI case (see Remark~\ref{qansinstcheck}), we may use the query answering procedure also to determine instance checking under RC.

\begin{remark}\label{qansinstcheckDIs}
    The instance checking problem under RC can be seen as a special case of query answering: that is,
    an individual $a$ is an instance of concept $B$ \wrt~a defeasible KB $\KB = \tuple{\T,\D,\A}$ iﬀ 
    $a$ is in the answer set of the CQ $q(x) \leftarrow A(x)$ \wrt~ $\KB'$,
    where $\KB' = \tuple{\T',\D, \A}$ and $\T' = \T \cup \{B \subs A\}$, with $A$ a new concept name.
    On the other hand, an individual $a$ is an instance of concept $\neg B$ \wrt~a defeasible KB $\KB = \tuple{\T,\D,\A}$ iﬀ 
    $a$ is in the answer set of the UCQs  $\vec{q} = \{q(x) \leftarrow B'(x) \mid B' \subs \neg B \text{ in the NI-closure of } \T \}$ \wrt~ $\KB$. 
\end{remark}

\section{Computational Complexity}\label{compplex}
\noindent\fbox{\begin{minipage}{0.975\textwidth}
\textsc{Overview:} 
 The main contribution of this section is that  the computational complexity of the main reasoning and CQ answering procedures in the presence of DIs is essentially the same as for classical \dllitecoreH~and \dllitehornH. In particular, 
 answering unions of conjunctive queries in defeasible \dllitecoreH~and \dllitehornH~is in $\textsf{P}$ \wrt~the size of the union of the TBox and the DBox,  in $\textsf{AC}^0$ in the size of the ABox (data complexity), and is  $\textsf{NP-complete}$ in combined complexity (see Proposition~\ref{complexityqa}).


\end{minipage}}
\vspace*{5mm}

\nd We provide here also the computational complexity analysis of our reasoning procedures. To start with, we recap, for ease of presentation, those inherited from classical \dllite~\cite{ArtaleEtAl2009,Calvanese05,Calvanese06a,Calvanese07}. 
We recall that, in analysing the computational complexity of a reasoning problem
in the context of \dllite~languages, it is usual to distinguish among two types of computational complexities: data complexity and combined complexity. \emph{Data complexity} is the complexity with respect to the size of the ABox only; \emph{combined complexity} is the complexity with respect to the size of all inputs to the problem.

In Table~\ref{tabcomplexdllite} we recap the computational complexity results as from
~\cite{ArtaleEtAl2009} (recall that subsumption can be reduced to the satisfiability problem).\footnote{$\textsf{AC}^0 \subset \textsf{LOGSPACE} \subseteq \textsf{NLOGSPACE} \subseteq \textsf{P}$~\cite{Papadimitriou94}.}

\begin{table}
\centering
\begin{tabular}{|c|c|c|c|} \hline
\multirow{3}{*}{\textbf{Languages}} &   
\multicolumn{3}{c|}{\textbf{Complexity}} \\
\cline{2-4}
 &  \textbf{Combined complexity} & 
 \multicolumn{2}{c|}{\textbf{Data complexity}} \\
\cline{2-4}
 &  Satisfiability & Instance checking & Query answering \\
\hline
\dllitecoreH &   \textsf{NLOGSPACE} & in $\textsf{AC}^0$ & in $\textsf{AC}^0$ \\
\dllitehornH &  \textsf{P} & in $\textsf{AC}^0$ & in  $\textsf{AC}^0$ \\ \hline
\end{tabular}
\caption{Complexity of reasoning in classical \dllitecoreH~and $\dllitehornH$ as from
~\cite{ArtaleEtAl2009}.} \label{tabcomplexdllite}
\end{table}


\subsection{The Case of RC without ABox}

\nd We address here the computational complexity of reasoning for defeasible KBs without ABox as per Sections~\ref{sect_RC_horn} and \ref{sect_RC_core}.
The analysis parallels the one presented in~\cite{Casini19}.

Let us focus on defeasible \dllitecoreH~as the analysis for \dllitehornH~parallels the one of the former. 
Specifically, it is easily verified that   $|\norm{\KB}|$ is in $O(|\KB|)$ and that  $\mathtt{Exceptional.Core}(\T,\E)$ performs at most  $|\E| \in O(|\D|)$ subsumption tests. 

Now, let us analyse $\mathtt{ComputeRanking.Core}(\KB)$. 
Line 7 requires $O(|\D|)$ subsumption test. Lines 8 - 10 require at most $O(|\D|^2)$ subsumption tests as at each round $|\E_{i+1}|$ is $|\E_{i}| - 1$ in the worst case. At each \texttt{repeat} round $|\D^*|$ decreases in size (at line 12), and thus the \texttt{repeat} loop is iterated at most $O(|\D|)$ times. Therefore, $\mathtt{ComputeRanking.Core}$ requires at most $O(|\D|^3)$ subsumption tests. Now, it is easily verified that, by exactly the same analysis, we may obtain the same complexity bounds for the procedures  
$\mathtt{Exceptional.Horn}$ and $\mathtt{ComputeRanking.Horn}$.
Therefore, by the results in Table~\ref{tabcomplexdllite} the following holds.

\begin{proposition}[\cf~Proposition 14 in \cite{Casini19}]\label{complexity1a}
The $\mathtt{ComputeRanking.\{Core, Horn\}}$ procedures run in polynomial time \wrt~the input size.
\end{proposition}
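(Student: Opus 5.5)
The claim follows by counting classical subsumption tests and bounding the cost of each one. First I will bound the sizes of the auxiliary KBs the procedures build. For the Horn case, $\E^\delta$ is obtained from $\E\subseteq\D$ by adding one fresh atom $\delta$ to each left-hand side. Hence $|\T^*\cup\E^\delta|\in O(|\T|+|\D|)$. This needs $|\T^*|\in O(|\T|+|\D|)$, which holds because $\T^*$ grows only by adding, for each DI moved to $\D_\infty$, one NI $\{B_1,\ldots,B_n\}\subs\bot$, equivalently $\{B_1,\ldots,B_{n-1}\}\subs\neg B_n$ by Remark~\ref{remexpdllitecore}. The same argument applies to the Core case using $|\norm{\KB}|\in O(|\KB|)$: the n-transformation at most doubles each axiom, and role inclusions contribute a constant number of extra axioms by Remark~\ref{roleinc}. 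Every KB on which a classical test is run is therefore a \dllitehornH~(resp.~\dllitecoreH) TBox of size linear in the input.

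Next comes the test count, which repeats the analysis just given. $\mathtt{Exceptional.\{Core,Horn\}}(\T,\E)$ runs one entailment check per element of $\E$, so $O(|\D|)$ checks. Inside one \texttt{repeat} round, the \texttt{while} loop strictly shrinks $\E_i$ until a fixpoint is reached, since $\E_{i+1}\subseteq\E_i$ by the output specification. It therefore runs at most $|\D|+1$ times, giving $O(|\D|^2)$ checks per round. A \texttt{repeat} round either ends with $\D_\infty=\emptyset$ or strictly shrinks $\D^*$, so there are at most $|\D|+1$ rounds. The final \texttt{for} loop only does set differences. In total there are $O(|\D|^3)$ classical checks, together with polynomial bookkeeping to build $\E^\delta$ or $\norm{\E}$ and to update $\T^*$ and $\D^*$.

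Finally I will bound the cost of each check, which is the only real point to verify. A check $\T^*\cup\E^\delta\entails\{B_1,\ldots,B_n,\delta\}\subs\bot$ (resp.~$\norm{\KB}\entails\norm{B}\subs\neg\norm{B}$) is a subsumption test in classical \dllitehornH~(resp.~\dllitecoreH). By the reduction of Section~\ref{subssect} it becomes a KB-satisfiability test on a KB of linear size. By Table~\ref{tabcomplexdllite} that test is in \textsf{P} for \dllitehornH~and in $\textsf{NLOGSPACE}\subseteq\textsf{P}$ for \dllitecoreH. Alternatively, one can use Proposition~\ref{entailmenthornH} (resp.~Proposition~\ref{prop_dllitecoreexcept}), which reduces the test to membership in an NI-closure. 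Here one caveat needs care: the Horn NI-closure, as a set of axioms with arbitrary left-hand sides $CL$, may be exponential. So I will rely on the satisfiability bound of Table~\ref{tabcomplexdllite} rather than on explicitly materialising $\clnhH$. Multiplying polynomially many polynomial-time checks by their polynomial cost gives the claimed polynomial running time.
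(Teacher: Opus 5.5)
Your proof is correct and follows the paper's own argument: you count $O(|\D|^3)$ classical subsumption tests over the \texttt{repeat} and \texttt{while} loops, and you appeal to Table~\ref{tabcomplexdllite} for the polynomial cost of each test. Your explicit linear bounds on $|\T^*\cup\E^\delta|$ and $|\norm{\KB}|$, and your decision not to materialise the possibly exponential $\clnhH$, make precise details the paper leaves implicit.
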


\nd Now consider  $\mathtt{RationalClosure.Core}(\KB,\alpha)$.
Lines 1 - 3 require one subsumption test.
In line 4, the value of $n$ is bounded by $|\D|$ and line 4 requires at most 
$O(|\D|^3)$ subsumption tests.
Lines 5 - 7 require one subsumption test.
The loop in lines 12 - 14 is executed at most $|\D|$ times (as at each loop $|\D_{\R}|$ decreases), at each iteration we execute one subsumption test only, and there is at most one subsumption test between lines 16 - 19. 
Hence, $\mathtt{RationalClosure.Core}(\KB,\alpha)$ requires  at most $O(|\D|^3 + |\D|)$ subsumption tests. Now,  by exactly the same analysis, we may obtain the same complexity bound for the procedure $\mathtt{RationalClosure.Horn}$.
Therefore, by the results in Table~\ref{tabcomplexdllite} the following follows.

\begin{proposition}[\cf~Proposition 15 in \cite{Casini19}]\label{complexity1}
The  $\mathtt{RationalClosure.\{Core, Horn\}}$ procedures run in polynomial time \wrt~the input size.
\end{proposition}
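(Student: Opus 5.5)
The plan is to bound the number of classical subsumption tests performed by $\mathtt{RationalClosure.\{Core, Horn\}}(\KB,\alpha)$ by a polynomial in $|\KB|$. We then need that each test is polynomial in the size of a KB that is itself polynomially bounded in the input. We follow the count already sketched above and make the size bounds explicit.

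\emph{Counting tests.} Lines 1--3 and lines 5--7 each perform one subsumption test. Line 4 calls $\mathtt{ComputeRanking.\{Core,Horn\}}(\KB)$, which by Proposition~\ref{complexity1a} runs in polynomial time; more concretely, it performs $O(|\D|^3)$ subsumption tests. The \texttt{while} loop removes a nonempty $\D_i$ from $\D_\R$ at every iteration, so it runs at most $n+1\leq|\D|$ times, with one test per iteration. The final branch adds at most one further test. The total is $O(|\D|^3)$ tests, plus the polynomial bookkeeping needed to build the sets $\D_{\delta_i}$ or $\norm{\D_\R}$ and to compute the set differences.

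\emph{Size of the KBs queried.} Every TBox passed to a classical test has one of the forms $\T$, $\T^*$, $\T^*\cup\D_{\delta_i}$ (Horn), or $\norm{\T^*}\cup\norm{\D_\R}$ (Core). Here $\T^*$ is $\T$ extended with at most $|\D|$ inclusions $\{B_1,\ldots,B_n\}\subs\bot$ (respectively $B\subs\neg B$). By Remark~\ref{remexpdllitecore}, each of these is a legitimate \dllitehornH{} (respectively \dllitecoreH) inclusion. The $\delta$-transformation adds a single concept name to each DI. The $n$-transformation at most doubles the axioms and introduces one normal name per basic concept. All such TBoxes are therefore of size $O(|\KB|)$ and stay within the original language: a Horn TBox in the Horn case, and crucially a \dllitecoreH{} TBox in the Core case, since each $\norm{B}$ is a concept name.

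\emph{Cost of a single test.} Each query has the form $\T'\entails CL\subs C$ where $C$ is $A$, $\neg B$, or $\bot$. Via the reduction of Section~\ref{subssect} to KB (un)satisfiability, this is decidable in \textsf{NLOGSPACE} for \dllitecoreH{} and in \textsf{P} for \dllitehornH{} (Table~\ref{tabcomplexdllite}). In both cases the test is polynomial in $|\T'|$, hence polynomial in $|\KB|$. The same holds for tests via membership in the NI-closure (Propositions~\ref{entailmentcor}, \ref{entailmenthornH}, \ref{prop_dllitecoreexcept}), since the closure contains polynomially many axioms in the core case. Multiplying polynomially many tests by polynomial cost per test gives the claim.

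The main obstacle is the size argument for the Horn NI-closure: $\clnhH$ may contain exponentially many sets $CL$. We therefore avoid it and rely directly on the \textsf{P} satisfiability bound for \dllitehornH{} rather than on closure enumeration.
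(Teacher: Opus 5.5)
Your proposal is correct and follows the paper's own argument. Both count $O(|\D|^3+|\D|)$ classical subsumption tests, inheriting the $O(|\D|^3)$ bound from the ranking procedure, and then invoke the polynomial combined complexity of classical satisfiability from Table~\ref{tabcomplexdllite}. Your explicit check that every queried TBox has size $O(|\KB|)$ and stays within the language, and your decision not to enumerate the possibly exponential Horn NI-closure, only make explicit what the paper leaves implicit.
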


\subsection{The Case of RC with ABox}

\nd Let us now consider the case in which there is an ABox. At  first, by Proposition~\ref{prop_defcons}, if the ranking of the DIs has been computed, the satisfiability of a  KB $\tuple{\T,\D,\A}$ is reduced to the satisfiability of $\tuple{\T,\A}$ and, thus, by Proposition~\ref{complexity1a} and Table~\ref{tabcomplexdllite} the following proposition holds.

\begin{proposition}\label{complexsatDIs}
The satisfiability problem of a defeasible KB can be solved in polynomial time.
\end{proposition}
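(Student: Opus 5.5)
The proof is a cost accounting in two phases: a preprocessing phase on $\tuple{\T,\D}$, followed by the actual check on the classical part $\tuple{\T,\A}$. Correctness of this two-phase reduction is already given by Proposition~\ref{prop_defcons}. That result shows that, once $\tuple{\T,\D}$ has been processed as in Remark~\ref{rem_ranking}, the defeasible KB $\tuple{\T,\D,\A}$ is satisfiable iff the classical KB $\tuple{\T^*,\A}$ is satisfiable. Here $\T^*$ is the TBox returned by $\mathtt{ComputeRanking.Horn}$, or by $\mathtt{ComputeRanking.Core}$ in the core case. So only the running time of the two phases remains to be bounded.

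\textbf{Phase 1.} Compute $\tuple{\T^*,\D^*}$ and the ranking $\RR$ from $\tuple{\T,\D}$. By Proposition~\ref{complexity1a} this takes polynomial time in $|\T|+|\D|$. I also need a bound on the output size, since it is the input to Phase 2.
\begin{itemize}
\item $\T^*$ is $\T$ extended with at most $|\D|$ inclusions of the form $\{B_1,\ldots,B_n\}\subs\bot$, one for each DI of rank $\infty$.
\item Each added inclusion has size linear in the DI it comes from.
\item By Remark~\ref{remexpdllitecore}, each added inclusion can be rewritten as a legal \dllitehornH{} NI $\{B_1,\ldots,B_{n-1}\}\subs\neg B_n$. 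In the core case it is $B\subs\neg B$.
\end{itemize}
Hence $|\T^*|\in O(|\T|+|\D|)$, and $\T^*$ stays in the same language as $\T$.

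\textbf{Phase 2.} Decide the satisfiability of the classical KB $\tuple{\T^*,\A}$. By Table~\ref{tabcomplexdllite}, this is in \textsf{NLOGSPACE} for \dllitecoreH{} and in \textsf{P} for \dllitehornH{}, in combined complexity. Both classes are contained in \textsf{P}, so this phase is polynomial in $|\T^*|+|\A|$, and therefore polynomial in $|\T|+|\D|+|\A|$.

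Alternatively, Phase 2 can be carried out through Proposition~\ref{propCalvanese0607B}. First compute the NI-closure $\clnhH(\T^*)$. Then evaluate polynomially many queries $q_\tau$ of bounded shape over $\mathtt{DB}(\A)$. This is polynomial provided the closure is polynomially bounded. In the core case this is immediate, since NIs range over pairs of basic concepts. In the horn case, bounding the closure needs more care, so I would rely on the \textsf{P} bound from the table rather than on the closure.

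Composing a polynomial-time preprocessing step of polynomial output size with a polynomial-time check gives a polynomial-time procedure overall. The step that needs the most attention is the size bound in Phase 1: one must check that adding the rank-$\infty$ axioms keeps $\T^*$ polynomial in size and within \dllitehornH{} or \dllitecoreH{}. Without this, the complexity results in Table~\ref{tabcomplexdllite} would not apply to $\tuple{\T^*,\A}$.
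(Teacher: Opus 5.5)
Your proposal is correct and follows the paper's argument. The paper computes the ranking in polynomial time (Proposition~\ref{complexity1a}), uses Proposition~\ref{prop_defcons} to reduce to classical satisfiability of $\tuple{\T^*,\A}$, and invokes the combined-complexity bounds of Table~\ref{tabcomplexdllite}. Your explicit check that $\T^*$ has size $O(|\T|+|\D|)$ and stays within \dllitehornH{} (resp.\ \dllitecoreH{}), and your caution that the Horn NI-closure need not be polynomially bounded, spell out details the paper leaves implicit.
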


\nd Consider now  the  $\mathtt{ABoxRationalClosure.\{Core, Horn\}}$ procedures.
Let us recall that these procedures are analogous to the 
$\mathtt{RationalClosure.\{Core, Horn\}}$  procedures.
Therefore, it is easily verified that the computational complexity analysis is the same as that for the case without ABox. Consequently, 

\begin{proposition}\label{complexity2}
The $\mathtt{ABoxRationalClosure.\{Core, Horn\}}$ procedures run in polynomial time \wrt~the input size.
\end{proposition}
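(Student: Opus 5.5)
The plan is to reduce the $\mathtt{ABoxRationalClosure.\{Core, Horn\}}$ procedures to a polynomial number of calls to the no-ABox procedures, so that Proposition~\ref{complexity1} and Proposition~\ref{complexity1a} give the bound. By Proposition~\ref{prop_defeasible_nominals_2}, deciding $\KB\minent\dass{a}{A}$ amounts to deciding $\KB_\A\minent A_a\dsubs A$, where $\KB_\A=\tuple{\T_\A,\D}$. So the first step is to bound the cost of building $\KB_\A$. By Eq.~\ref{ta}, $\T_\A$ adds one inclusion per concept assertion and two per role assertion, together with one fresh concept name $A_a$ per individual. Hence $|\T_\A|\in O(|\T|+|\A|)$, and the construction is linear time.

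Second, the qualified existentials $\csome P.A_b$ occurring in $\T_\A$ must be eliminated. Following Remark~\ref{remexpdllitecore}, each one is replaced by $A_a\subs\csome P'$, $P'\subs P$, $\csome P'^-\subs A_b$ with a fresh role $P'$, plus the inclusions required by Remark~\ref{roleinc}. This is again a linear blow-up. The result is a genuine \dllitecoreH{} (resp.~\dllitehornH) TBox, so $\KB_\A$ is a legitimate input for $\mathtt{RationalClosure.\{Core, Horn\}}$.

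Third, I run the analysis of Proposition~\ref{complexity1} on $\KB_\A$. The procedure computes the ranking with $O(|\D|^3)$ classical subsumption tests, since the DBox is unchanged. It then determines the rank of $A_a$ with at most $|\D|$ further tests and performs one final test. Each test is a classical \dllitecoreH{} or \dllitehornH{} subsumption over a TBox of size $O(|\T|+|\D|+|\A|)$: either the $n$-transformation or the $\delta$-transformation, both linear in size. By Table~\ref{tabcomplexdllite}, each such test is polynomial (NLOGSPACE resp.~P), so the total is polynomial in $|\KB|$.

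If the procedure also performs the preliminary satisfiability check, that check is polynomial by Proposition~\ref{complexsatDIs}. Any role assertion queries are handled classically by Proposition~\ref{prop_entailment_equiv_roles}.

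The main obstacle I expect is making sure the ABox-encoding really stays within the respective fragment. In the Core case, $\T_\A$ must contain only single basic concepts on the left, so that the $n$-transformation and the normal negative closure $\clnn$ apply unchanged. I would check that every encoded axiom has a single concept name $A_a$ or $\csome P'^-$ on its LHS, which it does. Then the polynomial bound transfers verbatim from the no-ABox case.
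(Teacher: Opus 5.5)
Your proposal is correct and follows essentially the paper's argument. The paper simply observes that the $\mathtt{ABoxRationalClosure}$ procedures are analogous to $\mathtt{RationalClosure.\{Core, Horn\}}$, so they inherit the same polynomial number of classical subsumption tests. You additionally spell out what the paper leaves implicit: the reduction via Proposition~\ref{prop_defeasible_nominals_2} yields $\tuple{\T_\A,\D}$ of size linear in $|\KB|$, and it stays within the respective fragment, so each test remains polynomial.
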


\nd Please note that, however, by Remark~\ref{qansinstcheckDIs}, like for the non-DI case, the computational complexity of instance retrieval is in fact the same as for query answering, which  we address next (\cf~\cite[Theorem 8.3]{ArtaleEtAl2009}).

\subsection{The Case of Query Answering under RC}

\nd To start with, we address the computational complexity of  computing the rank of the LHS of PIs and DIs occurring in $\KB$ via procedure~$\mathtt{Ranking.Core}$. In fact, these ranks can be computed in polynomial time \wrt~the size of $\KB$, which is an immediate consequence of Corollary~\ref{rankcomplexity1} that follows from Proposition~\ref{complexity1}.   

\begin{corollary}\label{rankcomplexity1}
The  $\mathtt{Ranking.Core}$ procedure runs in polynomial time \wrt~the input size.
\end{corollary}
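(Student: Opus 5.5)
The plan is to bound the cost of $\mathtt{Ranking.Core}(\KB,B)$ by counting its classical subsumption tests and invoking the complexity of classical entailment in \dllitecoreH. The procedure has two phases. Line 1 is a single call to $\mathtt{ComputeRanking.Core}(\KB)$. Lines 3--14 form the rank loop, a copy of lines 8--14 of $\mathtt{RationalClosure.Core}$ followed by one extra check. By Proposition~\ref{complexity1a}, line 1 runs in polynomial time. The same analysis preceding that proposition gives more: it makes $O(|\D|^3)$ subsumption tests, each on a TBox of size $O(|\KB|)$. It also outputs $\T^*$ with $|\T^*| \le |\T|+|\D|$, and a partition $\{\D_0,\ldots,\D_n\}$ with $n \le |\D|$.

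For the second phase, I would first note that the $n$-transformations $\norm{\T^*}$ and $\norm{\D_i}$ are computed in linear time, and that $|\norm{\T^*}\cup\norm{\D_i}| \in O(|\KB|)$, as already observed for $\norm{\KB}$. The \texttt{while} loop removes a nonempty stratum $\D_i$ from $\D_\R$ in each iteration, so it runs at most $n+1 \le |\D|+1$ times. Each iteration performs one entailment test of the form $\norm{\T^*}\cup\norm{\D_i} \entails \norm{B}\subs\neg\norm{B}$. After the loop there is at most one further test, namely $\norm{\T^*}\not\entails\norm{B}\subs\neg\norm{B}$. So the procedure makes $O(|\D|^3+|\D|)$ classical \dllitecoreH{} subsumption tests in total, each on input of size polynomial in $|\KB|$.

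Finally, I would invoke the combined-complexity result of Table~\ref{tabcomplexdllite}. Subsumption in \dllitecoreH{} reduces to KB satisfiability via (\ref{subsreduction}) with only linear blow-up, and satisfiability is in \textsf{NLOGSPACE}, hence in \textsf{P}. Alternatively, Proposition~\ref{prop_dllitecoreexcept} lets each test be read off the polynomially computable closure $\clnn$. A polynomial number of polynomial-time tests, plus linear-time bookkeeping, gives a polynomial-time procedure. The only step needing care is to make sure that the TBoxes passed to the tests do not grow across iterations. They do not: $\T^*$ is fixed after line 1, and $\norm{\D_i}\subseteq\norm{\D}$, so every test input stays within $O(|\KB|)$.
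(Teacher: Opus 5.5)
Your proof is correct and follows essentially the paper's route. The paper obtains the corollary directly from Proposition~\ref{complexity1}, since $\mathtt{Ranking.Core}$ is just $\mathtt{ComputeRanking.Core}$ followed by the rank-computing loop of $\mathtt{RationalClosure.Core}$; your explicit count of $O(|\D|^3+|\D|)$ classical \dllitecoreH{} subsumption tests, each on an input of size $O(|\KB|)$, is exactly the analysis underlying that proposition.
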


\nd Concerning the computational complexity of ranking all individuals occurring in the ABox,
one option consists of using Proposition~\ref{lemma_defeasible_nominals_2} and, for each $a\in \N_\A$ apply procedure $\mathtt{Ranking.Core}$ to $A_a$ and, thus, also the ranking all individuals occurring in the ABox can be computed in
polynomial time \wrt~the size of $\KB$.

An alternative consists of using $\mathtt{ComputeRankIndividuals}$ procedure (see~\ref{ranindqref}), whose computational complexity we address next. Please note that this procedure is computed once for all, unless the KB changes.
Now, it is easily verified that the number of loops of steps 4 - 11 is linearly bounded by the size of the DBox, the NI-closure can be computed in polynomial time \wrt~the size of $\T \cup \D$,
while the loop 7-9 may be executed at most $O(|\T \cup \D|^2)$ times. The query $q$ is an SQL query and, thus, its complexity is in $\textsf{AC}^0$. Therefore, 

\begin{proposition} \label{complexityrankinds}
The $\mathtt{ComputeRankIndividuals}$ procedure runs in polynomial time \wrt~its input size and
in $\textsf{AC}^0$ in the size of the ABox (data complexity).
\end{proposition}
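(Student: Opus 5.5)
The statement to prove is Proposition~\ref{complexityrankinds}, about $\mathtt{ComputeRankIndividuals}$. I plan to argue it by bounding separately the two kinds of work the procedure does. The first kind consists of TBox/DBox computations, which do not depend on $\A$: NI-closures, $n$-transformations, and rank tests. The second kind consists of evaluations of SQL queries over the database.

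\textbf{Step 1: the outer loop.} The loop of steps 4--11 is iterated at most once per rank level $0,\ldots,n$, plus possibly one extra level for $n+1$. Since $n\leq|\D|$, the number of iterations is $O(|\D|)$.

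\textbf{Step 2: data-independent work inside each iteration.} The procedure computes an NI-closure, $\clncH$ or $\clnn$, of a TBox of size $O(|\T\cup\D|)$, namely the $n$-transformation of $\T^*$ together with the current layer. Each closure rule only produces inclusions between basic concepts and normal names over a fixed finite signature. For the Horn case the LHS sets are likewise drawn from a polynomially bounded pool for each fixed query. Hence the closure has polynomially many elements and saturates in polynomial time. The inner loop of steps 7--9 ranges over pairs of basic concepts or closure elements, so it runs $O(|\T\cup\D|^2)$ times, and each pass does constant or polynomial work. Combining this with Corollary~\ref{rankcomplexity1} and Proposition~\ref{complexity1a}, the part that ignores $\A$ is polynomial in $|\T\cup\D|$.

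\textbf{Step 3: the queries $q$.} For each NI produced, the procedure builds a query $q_\tau$ as in Eq.~(\ref{qtau}) via $\bar\gamma$. This query retrieves the individuals whose ABox assertions violate the NI at the current rank, and those individuals are then pushed to the next rank. Each such $q$ is a fixed first-order (SQL) query, and its size depends only on $\T\cup\D$. Evaluating a fixed FO query over $\mathtt{DB}(\A)$ is in $\textsf{AC}^0$ in data complexity. The total number of queries is polynomial in $|\T\cup\D|$, which is a constant when only $\A$ varies. The rank assigned to an individual is therefore definable by a fixed Boolean combination of these queries, for example ``rank $\geq j$ iff some query at level $j-1$ returns it''. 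A fixed combination of $\textsf{AC}^0$ queries stays in $\textsf{AC}^0$. In combined terms, each evaluation takes polynomial time, so the whole procedure is polynomial in the input size.

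\textbf{Main obstacle.} The hard part is the data-complexity claim. The loop structure looks sequential: the rank of one individual may seem to depend on ranks previously assigned to others, and such iteration could leave $\textsf{AC}^0$. My plan is to argue that, in \dllitecoreH/\dllitehornH, the rank test for an individual $a$ at level $j$ depends only on the $\A$-facts local to $a$. This locality follows from the reduction to $A_a$ in Proposition~\ref{lemma_defeasible_nominals_2} and the unique-extension property of Proposition~\ref{prop_unique_abox}. Given locality, the whole computation unrolls into a constant number, in data complexity, of nonrecursive FO queries. I would first check locality directly from how $\T_\A$ connects $A_a$ only to its role neighbours through NIs in the closure.
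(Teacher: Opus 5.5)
Your Steps 1--3 reproduce the paper's argument: the outer loop is bounded by $|\D|$, the NI-closure is polynomial in $|\T\cup\D|$, loop 7--9 makes at most $O(|\T\cup\D|^2)$ passes, and each $q_\tau$ is a fixed SQL query and hence in $\textsf{AC}^0$. Your Step 3 already completes the data-complexity part, so the ``main obstacle'' is not real: each $except_i$ is a union of $ans(\A,q_\tau)$ over all of $\mathtt{DB}(\A)$, computed independently of earlier rank assignments, and every loop bound depends only on $\T\cup\D$. No locality argument is therefore needed; Propositions~\ref{lemma_defeasible_nominals_2} and~\ref{prop_unique_abox} concern correctness of the procedure (Proposition~\ref{rankindsproc}), not its complexity. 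Also drop the Horn aside in Step 2: the procedure takes a \dllitecoreH~KB, and for $\clnhH$ your polynomial bound is unjustified, since ($\mathbf{\hHTrans}$) can generate exponentially many distinct left-hand sides.
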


\nd We now establish the computational complexity of the  $\mathtt{QueryRef}$ procedure, which is inherited directly from \cite[Lemma 42]{Calvanese07} and the proof of Proposition~\ref{termdllitecore} showing that the number of distinct conjunctive queries generated by the procedure is less than or equal to $(m \cdot (n+1)^2)^n$, where $n$ is the query size, \ie~$n$ is proportional to the number of atoms and the number of terms occurring in the query,  and $m$ is the number of antecedents of axioms occurring in $\T_{PI} \cup \D$. Since $m$ is linearly bound by the number of inclusion axioms, while $n$ does not depend on the size of $\T\cup \D$, it follows that $\mathtt{QueryRef}$ runs in time polynomial in the size of the union of the TBox and the DBox. 

\begin{proposition} \label{complexityqueryref}
Consider a  satisfiable defeasible \dllitecoreH~KB 
$\KB = \tuple{\T,\D, \A}$, where the rank of the antecedents of all axioms  in $\T_{PI} \cup \D$  is given and consider a union of CQs $\vec{q}$. Then the  $\mathtt{QueryRef}(\vec{q},\KB)$ procedure runs in time polynomial in the size of $\T\cup \D$.    
\end{proposition}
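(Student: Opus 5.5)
The plan is to bound the number of distinct CQs that $\mathtt{QueryRef}$ (here meaning its defeasible variant $\mathtt{DefQueryRef}$) can generate from $\vec{q}$. The bound must be polynomial in $|\T\cup\D|$ when the query is fixed. The cost of each round of the \texttt{repeat} loop is then polynomial in that number, so the whole procedure runs in polynomial time. I would follow the counting argument of \cite[Lemma 42]{Calvanese07}, as already reused in the termination proof of Proposition~\ref{termdllitecore}.

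\textbf{Step 1 (shape of generated queries).} Rewriting steps replace one atom by one atom. This holds because in \dllitecoreH{} every LHS is a single basic concept, and the items of Definition~\ref{defapplicable} each produce a single atom $C(z)$, $C(z^r)$ or a role atom. Unification steps only decrease the number of atoms. Hence every generated query has at most as many atoms as the original one, say at most $n$, where $n$ is the query size.

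Next I count the terms. Terms are the original terms of $q$, fresh unbound variables, and rank annotations of existing terms. Unbound variables can be identified up to renaming, since an unbound variable occurs only once. Each term carries an optional annotation $r\in\{0,\ldots,k,\infty\}$, where $k\le|\D|$ is the number of ranks. The function $\rho$ forces all occurrences of a variable to share the same annotation.

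\textbf{Step 2 (counting).} Each atom is built from a predicate among the at most $m$ antecedents or consequents of $\T_{PI}\cup\D$, or the predicates of $q$. Its arguments are chosen among $O(n)$ terms, plus the unbound placeholder. This gives at most $(m\cdot(n+1)^2)^n$ queries up to variable renaming, multiplied by the number of rank-annotation patterns. That pattern factor is at most $(k+2)^n$, which is polynomial in $|\D|$ for fixed $n$. Since $m$ is linear in $|\T\cup\D|$ and $n$ does not depend on $\T\cup\D$, the total $|r(q,\KB)|$ is polynomial in $|\T\cup\D|$.

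\textbf{Step 3 (cost per round).} In each round, for each query and each atom, checking applicability of each $\spec\in\T_{PI}\cup\D$ takes constant work. This uses the assumption that the ranks of the antecedents are given, so comparing $\rank_\KB(B)$ with $r$ is a lookup. Checking unification of pairs of atoms takes $O(n^2)$ work. Each round either adds a new query or terminates, so the number of rounds is bounded by $|r(q,\KB)|$, and the total time is polynomial. For a union $\vec{q}$ the bounds add over the members.

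The main obstacle is Step 2's handling of rank annotations. I must make sure annotations do not multiply the variable space beyond a polynomial factor. This needs two observations: annotations are drawn from the bounded set of ranks, at most $|\D|+2$ values, and $\rho$ keeps annotation consistent per variable. I would argue this first, reusing the bound already invoked in the proof of Proposition~\ref{termdllitecore}.
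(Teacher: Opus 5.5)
Your proposal is correct and follows the paper's argument. The paper likewise inherits the bound $(m\cdot(n+1)^2)^n$ on the number of generated CQs from \cite[Lemma 42]{Calvanese07} and the proof of Proposition~\ref{termdllitecore}, and concludes polynomiality in $|\T\cup\D|$ because $m$ is linear in the number of axioms while $n$ depends only on the query; your explicit $(k+2)^n$ factor for rank annotations and your per-round cost analysis are details the paper leaves implicit, but they do not change the route.
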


\nd It is not difficult to see that the analogue of Proposition~\ref{complexityqueryref} above holds for \dllitehornH~as well (the procedure is described in~\ref{qahornh}).

\begin{proposition} \label{complexityqueryrefHorn}
Proposition~\ref{complexityqueryref} holds also for \dllitehornH.
\end{proposition}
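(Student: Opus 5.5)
The plan is to mirror the argument for Proposition~\ref{complexityqueryref}, replacing the core rewriting steps by the Horn ones (described in~\ref{qahornh}). I will bound the number of distinct CQs that $\mathtt{DefQueryRef}$ can generate, and show that each round of the \texttt{repeat} loop costs time polynomial in $|\T\cup\D|$ once the query size is fixed. The rank information (ranks of antecedents of axioms in $\T_{PI}\cup\D$, and $h_\KB$) is assumed given, exactly as in the statement. In any case it is computable in polynomial time by Proposition~\ref{complexity1a} and the Horn analogue of Corollary~\ref{rankcomplexity1}, so it adds no cost.

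The first and main step is termination with a size bound. The new feature of \dllitehornH{} is that an applicable PI or DI $\{B_1,\ldots,B_k\}\subs A$ (or $\dsubs A$) replaces one atom $g$ by $k$ atoms. Hence, unlike the core case, the length of a rewritten query is not bounded by the length of $q$. I would follow the standard treatment for $\dllite_{\mathcal{R},\dlAnd}$ in~\cite{Calvanese06a,Calvanese13}, noting three points:
\begin{itemize}
\item every atom introduced by $gr(g,\spec)$ is unary in the shared term $z$, or binary with a fresh unbound variable;
\item atoms of the form $P(z,y')$ with $y'$ unbound can be identified with the basic concept $\exists P$ applied to $z$;
\item the same basic concept on the same (possibly rank-annotated) term is redundant, and the unification/\emph{reduce} step merges duplicates.
\end{itemize}
Modulo this normalisation, each rewritten query is determined by three things. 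The first is a subset of basic concepts (those occurring in $\T\cup\D$) attached to each term. The second is the residual role atoms between the at most $O(n)$ bound terms of $q$. The third is a rank annotation per term, taken from at most $|\D|+2$ values. The bound terms never increase, since rewriting introduces only unbound variables.

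This crude count is exponential in the number $m$ of basic concepts, so I need a sharper argument to get polynomiality in $|\T\cup\D|$. I would argue as in~\cite[Lemma 42]{Calvanese07}: fixing $n$, the number of generated queries is bounded by $(m\cdot(n+1)^2)^{c\cdot n}$ for some constant $c$, as long as query length stays linear in $n$. I expect this step to be the main obstacle, because Horn conjunctions can blow up length. I would first try to show that the procedure of~\ref{qahornh} only ever needs queries whose atoms per term form the body of a single axiom or of a query atom. If that fails, I would fall back on the known result that Horn rewriting is polynomial in the TBox for fixed query size in~\cite{Calvanese13,ArtaleEtAl2009}, and lift it to the defeasible case by observing that rank annotations only restrict applicability (Definition~\ref{defapplicable}). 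At most they multiply the count by a factor polynomial in $|\D|$.

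Finally, for each generated query I would check applicability of every $\spec\in\T_{PI}\cup\D$ to every atom, and unification of every pair of atoms. This is polynomial in $|\T\cup\D|$ and $n$, with rank comparisons done by table lookup. Multiplying this per-query cost by the number of generated queries gives a total time polynomial in $|\T\cup\D|$. The extension from one CQ to a union $\vec{q}$ then follows by running the procedure on each $q_i$.
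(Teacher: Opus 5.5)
The step you flag as the main obstacle is where the statement actually breaks, and neither of your two routes can get past it. For \dllitehornH{} no bound polynomial in $|\T\cup\D|$ holds, even with $\D=\emptyset$ and even granting your normalisation of queries.

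Take $\T=\{\{B_1,\ldots,B_k\}\subs A\}\cup\{C_i\subs B_i\mid 1\leq i\leq k\}$, $\D=\emptyset$ and $q(x)\leftarrow A(x)$. The query has constant size and $|\T\cup\D|\in O(k)$. The procedure rewrites $A(x)$ into $B_1(x),\ldots,B_k(x)$. Applying the axioms $C_i\subs B_i$ independently, it then generates every query $q(x)\leftarrow X_1(x),\ldots,X_k(x)$ with $X_i\in\{B_i,C_i\}$.

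These $2^k$ queries are pairwise distinct, and no two atoms in any of them unify, since all predicates differ. Nor is any of them redundant. The ABox $\{\cass{a}{X_1},\ldots,\cass{a}{X_k}\}$ entails $\cass{a}{A}$, and any sound CQ that maps into it must use exactly the predicates $X_1,\ldots,X_k$. So every UCQ rewriting, not just this procedure's output, has at least $2^k$ disjuncts, and the fixpoint loop takes time $\Omega(2^k)$.

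Both of your routes fail on this example:
\begin{itemize}
\item Your first route fails on $q(x)\leftarrow C_1(x),B_2(x),\ldots,B_k(x)$. Its atoms on $x$ are not the body of any single axiom or query atom, yet the query is needed.
\item The fallback result you attribute to \cite{Calvanese13,ArtaleEtAl2009} cannot hold, for the same reason.
\end{itemize}
Rank annotations play no role, since with $\D=\emptyset$ none arise.

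The paper's own justification is only that the analogue is ``not difficult to see''. It silently reuses the core bound $(m\cdot(n+1)^2)^n$ from Proposition~\ref{termdllitecore}, which presupposes that rewritten queries never exceed the length $n$ of the input query. You correctly noticed that this presupposition fails for Horn conjunctions. The right conclusion, however, is that the proposition is false as stated, not that a sharper count is needed.

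What your normalisation argument does give, for the procedure run up to renaming and removal of duplicate atoms, is termination with a bound exponential in $|\T\cup\D|$: essentially one set of annotated basic concepts per bound term of $q$. That still suffices for the $\textsf{AC}^0$ data-complexity claim, because the rewriting does not depend on the ABox. A dependence on $|\T\cup\D|$ that is polynomial cannot be obtained through any UCQ rewriting for \dllitehornH.
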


\nd Now, following \eg~the proofs of~\cite[Theorem 43]{Calvanese07}, \cite[Theorem 44]{Calvanese07} (see also~\cite[Theorem 6]{Calvanese05}), we get immediately the following main feature of defeasible \dllite: the complexity of query answering in the presence of DIs is the same as for \dllitecoreH~and $\dllitehornH$ (see~\cite{ArtaleEtAl2009,Calvanese05,Calvanese06a,Calvanese07}), which follows from the fact that the computational complexity of computing SQL relational queries is in $\textsf{AC}^0$~\cite{Papadimitriou99}, and that a reformulated query can be computed in non-deterministic polynomial time by the $\mathtt{QueryRef}$ procedure (\wrt~combined complexity).

\begin{proposition}\label{complexityqa}
The problem of answering unions of CQs in defeasible \dllitecoreH~and \dllitehornH~is in $\textsf{P}$ \wrt~the size of the union of the TBox and the DBox,  in $\textsf{AC}^0$ in the size of the ABox (data complexity), and is 
$\textsf{NP-complete}$ in combined complexity.
\end{proposition}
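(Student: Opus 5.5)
The plan is to split the statement into its three claims and reduce each to the classical analysis in \cite{Calvanese07} and \cite{ArtaleEtAl2009}, treating the defeasible machinery as a preprocessing phase that does not depend on the ABox.

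\textbf{Preprocessing.} The first step is a pipeline that is polynomial in $|\T\cup\D|$ and independent of $\A$. It consists of:
\begin{itemize}
\item $\mathtt{ComputeRanking.\{Core,Horn\}}$, which is polynomial by Proposition~\ref{complexity1a};
\item the ranks of all antecedents of axioms in $\T_{PI}\cup\D$, which are polynomial by Corollary~\ref{rankcomplexity1};
\item the satisfiability test, which reduces to the classical one via Proposition~\ref{prop_defcons} and Proposition~\ref{satdedllitecore}, so its data complexity is $\textsf{AC}^0$ by Table~\ref{tabcomplexdllite}.
\end{itemize}

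\textbf{Ranking the individuals.} The table $tab_{rankInd}$ is computed by $\mathtt{ComputeRankIndividuals}$. By Proposition~\ref{complexityrankinds} this is polynomial overall and $\textsf{AC}^0$ in $|\A|$, since it amounts to a fixed, TBox-determined set of SQL queries over $\mathtt{DB}(\A)$. For the data complexity bound, I would argue that the whole rank table is definable by a first-order query whose size depends only on $\T\cup\D$. Composing it with the rewritten queries therefore stays within $\textsf{AC}^0$.

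\textbf{Query rewriting.} Run $\mathtt{DefQueryRef}$ (or its Horn analogue). Correctness comes from Propositions~\ref{mainqadllite} and~\ref{calvthm40}, and the running time is polynomial in $|\T\cup\D|$ for fixed query size by Propositions~\ref{complexityqueryref} and~\ref{complexityqueryrefHorn}. The rewriting is a finite UCQ that does not depend on $\A$. Evaluating it over $\mathtt{DB}(\A,h_\KB)$ is first-order evaluation, hence in $\textsf{AC}^0$ in data complexity~\cite{Papadimitriou99}. Together with the preprocessing, this gives membership in $\textsf{P}$ with respect to $|\T\cup\D|$ and in $\textsf{AC}^0$ with respect to $|\A|$.

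\textbf{Combined complexity.} Hardness is inherited. Classical CQ answering over a plain relational database, i.e.\ with $\T=\D=\emptyset$, is already $\textsf{NP}$-hard. Membership in $\textsf{NP}$ follows as in \cite[Theorem 44]{Calvanese07}:
\begin{enumerate}
\item Compute the ranks and the individual ranks deterministically in polynomial time.
\item Nondeterministically guess one rewriting sequence of $\mathtt{DefQueryRef}$ leading to some $q_i\in r(q,\KB)$. Each step replaces or unifies atoms and never increases the query size, so the sequence can be bounded polynomially, as in the termination argument of Proposition~\ref{termdllitecore}.
\item Guess an assignment of the variables of $q_i$ to individuals of $\A$ that respects the rank annotations via $tab_{rankInd}$, and check it in polynomial time.
\end{enumerate}

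The main obstacle is step 2: arguing that a derivation reaching any given $q_i$ needs only polynomially many steps, even though rank annotations add the extra rules of Definition~\ref{defapplicable}. My approach is to observe that annotations only restrict applicability and do not enlarge the atom set. A shortest derivation then revisits no query, and each query has size at most $|q|$ over a polynomial vocabulary, so I would adapt the counting in \cite[Lemma 42]{Calvanese07} to bound the derivation length polynomially.
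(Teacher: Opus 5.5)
Your decomposition is the same as the paper's proof. The paper gets the $\textsf{P}$ bound from Proposition~\ref{complexityqueryref} and the $\textsf{AC}^0$ bound from the fact that the rewriting does not depend on the ABox and SQL evaluation is in $\textsf{AC}^0$. For $\textsf{NP}$-completeness it simply defers to \cite[Theorem 44]{Calvanese07}. Your remark that $tab_{rankInd}$ is itself first-order definable over $\mathtt{DB}(\A)$ by a query depending only on $\T\cup\D$ is a useful addition: each $rinds_i$ is a Boolean combination of the UCQs computed by $\mathtt{ComputeRankIndividuals}$, and the paper leaves this implicit.

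The gap is in the step you yourself flag as the crux: that every $q_i\in r(q,\KB)$ is reachable by a derivation of polynomial length.
\begin{itemize}
\item ``A shortest derivation revisits no query'' only bounds the length by the number of distinct queries. The counting of \cite[Lemma 42]{Calvanese07} and of Proposition~\ref{termdllitecore} gives $(m\cdot(n+1)^2)^n$, which is exponential in the query size $n$. Since $n$ is part of the input in combined complexity, adapting that counting cannot give a polynomial bound.
\item ``Each step never increases the query size'' is false for \dllitehornH{}. Rewriting with $\{B_1,\ldots,B_k\}\subs A$, or its defeasible version, replaces one atom by $k$ atoms.
\end{itemize}

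What is missing is a locality argument; below, $|q|$ is the number of atoms of $q$.
\begin{itemize}
\item A rewriting step that does not annotate a new variable acts on a single atom. Bound and annotated terms stay in the rewritten atom, and the variables it removes or creates are unbound and local to it. So it changes neither the bound/unbound status nor the annotation of any variable elsewhere, and cannot affect which steps apply to other atoms.
\item Only two kinds of step have global effect. Unifications number fewer than $|q|$ in the core case, since rewriting preserves the number of atoms. Annotations performed by $\rho$ number at most $2|q|$, since an annotated variable is never removed or re-annotated.
\item Between two global steps the atoms evolve independently. Cycles can therefore be cut out atom by atom, so in a shortest derivation each atom runs through a repetition-free chain of atom forms.
\item There are only polynomially many atom forms in $|q|$, $|\T\cup\D|$ and the number of ranks. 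This gives the polynomial bound for \dllitecoreH.
\end{itemize}

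For \dllitehornH{} you must also control the number of atoms. One way is to note that the atoms introduced by a conjunctive left-hand side mention only an existing term and fresh unbound variables, so after merging duplicates via unification the query stays polynomial in size. Alternatively, extract a short derivation from the backward-chase construction in the proof of Proposition~\ref{mainqadllite}, applied to a witness reached after polynomially many chase steps.
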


\section{Related Work}\label{related}


\nd While significant research has been devoted to non-monotonic extensions of Description Logics (DLs) — see, for instance, \cite{Alviano24a,Alviano24b,Alviano24,Baader93,BaaderHollunder1995,Baader95e,Bonatti15,Bonatti2019,BonattiEtAl2015,Bonatti09a,Bonatti22,BritzEtAl2008,BritzEtAl2011c,BritzEtAl2021,Britz16,Britz18,Britz19,Britz19a,CasiniStraccia2010,CasiniEtAl2014,CasiniStraccia2013,Casini21,Donini97,DoniniEtAl2002,Giordano10,Giordano13,Giordano15,Giordano21a,Giordano20a,Giordano22,Giordano19,Giordano20,GiordanoEtAl2013,GiordanoEtAl2015,Giordano17,Grimm09,Hahn24,Lambrix98,Pozzato19,Quantz1992,Straccia93} — in this section we focus exclusively with non-monotonic DL extensions that retain low computational complexity. For a more in-depth treatment of non-monotonic DLs and their general characteristics, the reader is referred to, \eg, \cite{Bonatti15,Giordano13}.
Moreover,  there exists a substantial body of work exploring the integration of non-monotonic logic programming with DLs, such as \eg, \cite{Bozzato14,Costa15a,deBruijn10,Eiter11,Eiter08,Ivanov13,Ivanov15,Kaminski15,Knorr11a,Knorr08,Knorr11,Knorr12,Lukasiewicz08,Lukasiewicz08d,Lukasiewicz07c,Lukasiewicz07i,Lukasiewicz08c,Lukasiewicz10a,Lukasiewicz07,Lukasiewicz08b,Lukasiewicz09,Lukasiewicz10,Motik10}. However, these approaches are somewhat less related, so we will not address them here, except for those cases involving tractable computational complexity.


Despite the extensive body of work in the field, we believe that our contributions are of notable significance. Specifically, it is worth emphasizing that, among the existing non-monotonic extensions of low-complexity DLs, see, \eg,~\cite{Bonatti15b,Bonatti11b,Bonatti11a,Bonatti21,Bonatti10,BonattiEtAl2011,Bonatti23,BritzEtAl2015a,Cadoli90,CasiniEtAl2014,CasiniStraccia2010,CasiniStraccia14,Casini19,DoniniEtAl2002,Giordano09,Giordano11,Giordano11a,Giordano12a,Giordano16a,Giordano09c,Giordano09d,PenselEtAl2017a,PenselEtAl2017}, the preservation of tractability of the underlying DL is achieved only in a very limited number of cases. Exceptions to these cases are some results reported in~\cite{Bonatti15,Bonatti11b,Bonatti11a,Bonatti10,Bonatti23,Casini19,Giordano16,Giordano16b}.

Specifically, \cite{Bonatti11a,Bonatti23} considers \emph{circumscription}~\cite{Lifschitz94} in low-complexity DLs such as DL-Lite~\cite{Artale09} and $\EL$ and identifies some non-monotone DL fragments whose computational complexity of the decisions problem at hand (deciding subsumption, instance checking and concept satisfiability) are in {\sc PTime}. 
%
%

The work presented in~\cite{Bonatti10} aligns with the approach of~\cite{Bonatti11a}, focusing on the logic $\mathcal{EL}$ enriched with \emph{default attributes}, that is, DIs of the form $A \usually \exists R.B$, interpreted under circumscription with  \emph{fixed} atomic concepts. However, the analysis in~\cite{Bonatti10} is restricted to \emph{conflict-safe} knowledge bases (KBs) that do not include assertions. Within this setting, it is shown that the set of so-called `normalised' concepts subsuming a given concept can be computed in polynomial time. These results were subsequently extended in~\cite{Bonatti11b}, where $\mathcal{EL}^{++}$~\cite{Baader05a} is considered in place of $\mathcal{EL}_\bot$, necessitating a refined notion of conflict-safety to accommodate the richer expressiveness of the logic.
%

%

In~\cite{Bonatti15}, a non-monotonic extension of DLs  is introduced, grounded in a mechanism of overriding and the use of \emph{normality concepts} of the form $\mathsf{N}C$, which denote prototypical (\ie, normal) instances of a concept $C$. Defeasible GCIs are expressed as $C \usually D$, governed by a priority relation $\prec$ among such GCIs. The intended interpretation is that ``instances of $C$ are normally instances of $D$, unless overridden by higher-priority defeasible inclusions''. While the resulting set of defeasible GCIs does not generally satisfy the rationality postulates, it has been shown in~\cite{Bonatti17} that the entailment relation exhibits some desirable properties, such as \emph{cumulativity} (\cf~\cite{Makinson1994a}). A notable aspect of this approach is that reasoning (deciding subsumption and instance checking) remains tractable in the case where the underlying DL is \dllitehornH~or $\mathcal{EL}^{++}$, as the algorithm relies on a sequence of standard DL subsumption tests.

Eventually and closest to our work are \cite{Casini19,Giordano16,Giordano16b}.
In \cite{Casini19} illustrates a polynomial time subsumption procedure for nominal safe \ELObot~under RC that relies entirely on a series of classical, monotonic \ELbot~subsumption tests.
%
%
In~\cite{Giordano16,Giordano16b}, the logic $\mathcal{SROEL}(\sqcap, \times)^\mathbf{R}\mathbf{T}$ is introduced, extending the DL $\mathcal{SROEL}(\sqcap, \times)$ \cite{Kroetzsch10} with \emph{typicality concepts} of the form $\mathbf{T}(C)$, intended to represent the typical instances of the concept $C$. Within this framework, defeasible GCIs of the form $C \usually D$ are expressed as $\mathbf{T}(C) \sqsubseteq D$, conveying that ``typical $C$-elements are $D$-elements''. The logic also supports more expressive constructs such as $D \sqsubseteq \mathbf{T}(C)$, interpreted as ``all $D$-elements are typical instances of $C$''. A key result is that instance checking under \emph{rational entailment}  is in $\mathsf{PTime}$ for $\mathcal{SROEL}(\sqcap, \times)^\mathbf{R}\mathbf{T}$. To achieve this, the authors extend the Datalog rule set originally defined for $\mathcal{SROEL}(\sqcap, \times)$~\cite{Kroetzsch10} with additional rules to handle the semantics of the typicality operator. Furthermore, for so-called \emph{simple} knowledge bases, where the typicality operator $\mathbf{T}$ appears only on the left-hand side of GCIs, it is shown that both concept \emph{exceptionality} and \emph{rank determination} can be computed in polynomial time, by means of appropriate Datalog queries over the extended rule set defined for rational entailment in $\mathcal{SROEL}(\sqcap, \times)^\mathbf{R}\mathbf{T}$.
However, as already pointed out in~\cite{Casini19},  while we agree on the fact that one may determine exceptional concepts and their rank in $\mathcal{SROEL}(\dlAnd, \times)^\mathbf{R}\mathbf{T}$, how to encode in Datalog the final step to decide entailment under RC in polynomial time has been left unspecified.

With regard to the class of \emph{hybrid} approaches—namely, those that integrate non-monotonic logic programming with low-complexity DLs, some results manage to retain tractability, such as~\cite{Costa15a,Eiter11,Ivanov13,Ivanov15,Kaminski15,Knorr11a,Knorr08,Knorr11,Knorr12,Lukasiewicz07i,Lukasiewicz08c,Lukasiewicz10a,Lukasiewicz07,Lukasiewicz08b,Lukasiewicz09,Lukasiewicz10}. Broadly speaking, these results show that data complexity remains polynomial for certain classes of hybrid normal logic programs under well-founded semantics, assuming that the underlying DL supports tractable instance checking.

\section{Conclusion and Future Work}\label{concl}


\paragraph{Contribution} A concise summary of our contributions is as follows:
\begin{itemize}
\item we have introduced a defeasible extension of \dllitecoreH~and \dllitehornH, notable representatives of the \dllite~family of DLs, 
with defeasible class inclusions of the form $\{B_1, \ldots, B_n\}\dsubs A$ ($n=1$ for \dllitecoreH) read as ``usually, an object in the conjunction of the  $B_i$'s is also in $A$'' under RC semantics;

\item we have developed reasoning algorithms for KB satisfiability, subsumption and instance checking under RC;

\item we have proved these logics possess the \emph{unique extension property}, unlike other defeasible DLs under RC, which is a fundamental property for efficient CQ answering;

\item we have developed a CQ answering procedure via query reformulation aligned with the non-defeasible versions;

\item we have demonstrated that these defeasible logics retain the low computational complexity of their classical counterparts: namely,
\begin{itemize}
    \item the satisfiability problem is in $\mathsf{P}$ \wrt~combined complexity;
    \item instance checking and query answering are in $\mathsf{AC}^0$ \wrt~ data complexity.
\end{itemize}

\end{itemize}

\paragraph{Future Work} 
%
As future work, we aim to investigate whether the decision problems explored in this paper can be extended to the Horn-DL family~\cite{Kroetzsch13}, which underpins the OWL 2 profile OWL RL~\cite{OWL2RL}, while preserving the computational complexity of their classical counterparts.
Another promising direction is the implementation and extension of our approach to the OWL 2 profile OWL QL, with an empirical evaluation in an OBDA setting, following the methodology of~\cite{BritzEtAl2015a}, using systems such as \textsf{Ontop} or \textsf{Mastro}.




 \bibliographystyle{plain} 
 \bibliography{bib,References}


\newpage

\begin{appendix}

\newtheorem{appdefinition}{Definition}[section]
\newtheorem{appproposition}{Proposition}[section]
\newtheorem{applemma}{Lemma}[section]
\newtheorem{appcorollary}{Corollary}[section]
\newtheorem{appremark}{Remark}[section]
\newtheorem{apptheorem}{Theorem}[section]

\renewcommand{\theappdefinition}{\Alph{section}.\arabic{appdefinition}}
\renewcommand{\theappproposition}{\Alph{section}.\arabic{appproposition}}
\renewcommand{\theapplemma}{\Alph{section}.\arabic{applemma}}
\renewcommand{\theappcorollary}{\Alph{section}.\arabic{appcorollary}}
\renewcommand{\theappremark}{\Alph{section}.\arabic{appremark}}
\renewcommand{\theapptheorem}{\Alph{section}.\arabic{apptheorem}}

\section*{Appendixes} \label{app_all}
\nd{\bf Note:} in what follows, we assume that the reader is familiar with  DLs such as $\mathcal{ALC}$ and $\mathcal{ALCHI}$~\cite{BaaderEtAl2007}. 
\section{Proofs of Section~\ref{backgr}: Background} \label{backgrproofs}


\setcounter{proposition}{\getrefnumber{nominalABox}-1}
\begin{proposition}
Consider any \dllitehornH~KB $\KB=\tuple{\T,\A}$ and any concept assertion $\cass{a}{A}$. 
Then
\begin{enumerate}
    \item $\KB$ is unsatisfiable iff $\T_\A \models A_a \impc \neg A_a$ for some individual $a$ occurring in $\KB$;
    \item If $\KB$ is satisfiable, $\KB \models \cass{a}{A}$ iff $\T_\A \models A_a \impc A$.
    
\end{enumerate} 
\end{proposition}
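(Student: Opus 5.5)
The plan is to relate the models of $\KB$ and the models of $\T_\A$ in both directions, using the canonical model $\canhH(\KB)$ from Section~\ref{chasehorn} for the harder direction.

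For the easy direction, start from any model $\I$ of $\KB$ and define $\I'$ as $\I$ with $A_b^{\I'}=\{b^\I\}$ for every individual $b$. Then $\I'\sat\T$ trivially. It satisfies $A_b\subs A'$ whenever $\cass{b}{A'}\in\A$. It also satisfies $A_b\subs\csome P.A_c$ and $A_c\subs\csome P^-.A_b$ whenever $\rass{b}{c}{P}\in\A$. Here we read these qualified inclusions via the equivalent \dllitehornH~encoding of Remark~\ref{remexpdllitecore}, interpreting each fresh role as the single pair $(b^\I,c^\I)$, which satisfies the fresh role inclusion into $P$ or $P^-$. Hence $\I'\sat\T_\A$ with $A_b^{\I'}\neq\emptyset$, so $\T_\A\not\models A_b\subs\neg A_b$ for every $b$. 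This proves the ``if'' part of (1) contrapositively. For (2), if $\T_\A\models A_a\subs A$, then every model $\I$ of $\KB$ yields $a^\I\in A^{\I'}=A^\I$, so $\KB\models\cass{a}{A}$.

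For the converse directions, first assume $\T_\A\not\models A_b\subs\neg A_b$ for all $b$, and show $\KB$ is satisfiable. The natural route is a syntactic characterisation: by Proposition~\ref{entailmenthornH}, $A_b\subs\neg A_b$ is not entailed iff $A_b\subs\neg A_b\notin\clnhH(\T_\A)$. Now suppose $\KB$ were unsatisfiable. By Proposition~\ref{Theo15calv07}, $\mathtt{DB}(\A)$ then violates some NI $\{B_1,\ldots,B_n\}\subs\neg B_{n+1}\in\clnhH(\T)$ at an individual $b$, that is, every $\cass{b}{B_i}$ occurs in $\A$. Each such occurrence gives $\T_\A\models A_b\subs B_i$: directly if $B_i$ is a concept name, and via $A_b\subs\csome P.A_c$ if $B_i=\csome R$. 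Since $\clnhH(\T)$ is sound, we get $\T_\A\models A_b\subs\neg A_b$, a contradiction. This gives (1).

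For (2), assume $\KB$ satisfiable and $\KB\models\cass{a}{A}$, and show $\T_\A\models A_a\subs A$. I expect this to be the main obstacle, because models of $\T_\A$ may interpret $A_a$ by many elements or none, with no link to a specific individual. The plan is to take an arbitrary model $\J$ of $\T_\A$ and an element $x\in A_a^\J$, and to build a homomorphism from $\canhH(\KB)$ into $\J$ sending $a$ to $x$. Then use that concept names in the chase are preserved under such maps, and that $\cass{a}{A}$ holds in $\canhH(\KB)$ because $\KB\models\cass{a}{A}$. The delicate point is that other individuals $c$ must be sent to elements of $A_c^\J$, which might be empty or not connected to $x$. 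To handle this, restrict to the connected component of $a$ in the ABox graph. Individuals reachable from $a$ through role assertions get witnesses in $A_c^\J$ via the axioms $A_b\subs\csome P.A_c$ and $A_c\subs\csome P^-.A_b$, chosen step by step along paths; functionality is not needed, since homomorphisms need not be injective. For anonymous elements, follow the chase rules, using that $\J$ satisfies $\T$. Unconnected individuals cannot affect atomic facts about $a$ in the chase, because \dllitehornH~chase rules propagate only along role edges. Therefore the chase restricted to the component of $a$ already derives $\cass{a}{A}$, and it maps homomorphically into $\J$, giving $x\in A^\J$. This last locality argument is the step needing the most care.
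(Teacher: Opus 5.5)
Your easy directions are sound, and so is your converse of (1). That converse, via Proposition~\ref{Theo15calv07} and the soundness of $\clnhH(\T)$, is a syntactic alternative to the paper's model construction; the appeal to Proposition~\ref{entailmenthornH} is not needed.

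The gap is in the hard direction of (2): the homomorphism from the component of $a$ in $\chasehH(\KB)$ into $\J$ need not exist. Choosing images ``step by step along paths'' only realises the role assertions of a spanning tree. Any further assertion must also be preserved, for instance a cycle, a self-loop $\rass{c}{c}{P}$, or two assertions $\rass{b}{c}{P}$ and $\rass{b}{c}{Q}$ between the same individuals. But $\T_\A$ only says that $\mu(b)$ has \emph{some} $P$-successor in $A_c^\J$, not that the already chosen $\mu(c)$ is one.

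Concretely, take $\A=\{\rass{a}{b}{P},\rass{b}{a}{P}\}$ and $\T=\{\{\csome P,\csome P^-\}\subs A\}$, so $\KB\models\cass{a}{A}$ and your argument must cover this case. Let $\J$ have domain $\mathbb{Z}$, $P^\J=\{(k,k+1)\mid k\in\mathbb{Z}\}$, $A^\J=\mathbb{Z}$, $A_a^\J$ the even integers and $A_b^\J$ the odd integers. Then $\J$ is a model of $\T_\A$, but it has no $k,l$ with $(k,l),(l,k)\in P^\J$. Hence no homomorphism from $\chasehH(\KB)$, which contains both assertions, can send $a$ into $A_a^\J$. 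Restricting to the connected component changes nothing here, and the locality step you flag is not where the difficulty lies.

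The missing observation is that role assertions between named individuals never need to be realised in $\J$. \dllitehornH~has only unqualified $\csome R$ on the left, so what the chase derives about a named individual depends only on the basic concepts it belongs to. $\T_\A$ guarantees exactly those at every element of $A_c^\J$, together with the role inclusions of $\T$.

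The paper exploits this in the opposite direction. Start from $\J$, disjointly united with copies of a model of $\T_\A$ in which every $A_c$ is non-empty; such a model exists by (1) since $\KB$ is satisfiable, and the copies also let you pick pairwise distinct elements as the UNA requires. Build a model of $\KB$ on the same domain by interpreting $a$ as $x$ and each other $c$ as an element of $A_c$. Then \emph{add} the pairs demanded by the role assertions of $\A$ and close under the role inclusions of $\T$.

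Since $A_b\subs\csome P.A_c$ and $A_c\subs\csome P^-.A_b$ already place the endpoints in $\csome P$ and $\csome P^-$, no basic concept changes its extension. So the new interpretation still satisfies $\T$, it satisfies $\A$, and $\KB\models\cass{a}{A}$ forces $x\in A^\J$.
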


\nd We need to introduce some intermediate results in order to prove Proposition \ref{nominalABox}. We recall from Equation \ref{ta} that $\T_\A = \T \cup \{A_a \impc A' \mid \cass{a}{A'} \in \A \}  \cup  \{A_a \impc \csome P.A_b \mid \rass{a}{b}{P} \in \A \}  \cup  \{A_b \impc \csome P^-.A_a \mid \rass{a}{b}{P} \in \A \} $.

We start by proving the following lemma.

\begin{applemma}\label{lemma_nominals_2}
    Let $\KB=\tuple{\T,\A}$ be a \dllitehornH~KB that does not contain any concept of the form $A_a$. Then, for any complex concept $C$ and any individual $a$,
    \begin{itemize}
        \item If $\T_\A\entails A_a\subs C$ then in every model $\I$ of $\KB$, $a^\I\in C^\I$.
    \end{itemize}
\end{applemma}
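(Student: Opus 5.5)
The natural approach is to expand any model of $\KB$ into a model of $\T_\A$ by interpreting each new concept name $A_b$ as the singleton $\{b^\I\}$. Fix a model $\I$ of $\KB$ and define $\J$ to coincide with $\I$ on domain, individuals, concept names and role names of $\KB$, and to set $A_b^\J=\{b^\J\}$ for every individual $b$ occurring in $\A$. This is legitimate because, by hypothesis, no concept $A_b$ occurs in $\KB$, so the new names are fresh and $\I$ says nothing about them. Since $\J$ agrees with $\I$ on the signature of $\T$, and the semantics of basic concepts, conjunctions, qualified existentials and role inclusions only depends on the symbols occurring in them, $\J$ satisfies every axiom of $\T$.

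Next I check the three families of added axioms in Eq.~(\ref{ta}) one by one.
\begin{enumerate}
\item For $A_b\subs A'$ with $\cass{b}{A'}\in\A$: we have $A_b^\J=\{b^\I\}$ and $b^\I\in A'^\I=A'^\J$, because $\I\sat\cass{b}{A'}$.
\item For $A_b\subs\exists P.A_c$ with $\rass{b}{c}{P}\in\A$: $(b^\I,c^\I)\in P^\J$ and $c^\I\in A_c^\J$, so $b^\I\in(\exists P.A_c)^\J$.
\item For $A_c\subs\exists P^-.A_b$: the same pair $(b^\I,c^\I)$ witnesses $c^\I\in(\exists P^-.A_b)^\J$.
\end{enumerate}
Hence $\J\sat\T_\A$. (If one prefers to stay strictly inside \dllitehornH, with the fresh-role encoding of Remark~\ref{remexpdllitecore}, one interprets each fresh role $P'$ as the single pair $\{(b^\I,c^\I)\}$ and checks the three resulting axioms similarly.)

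Now suppose $\T_\A\entails A_a\subs C$. Then $\J\sat A_a\subs C$, so $a^\I\in A_a^\J\subseteq C^\J$. The last step is to transfer back from $\J$ to $\I$, which is the only delicate point. If $C$ does not mention any $A_b$, or any auxiliary role, then $C^\J=C^\I$ by the same signature-agreement argument, and we conclude $a^\I\in C^\I$. I will read the lemma's ``complex concept $C$'' as ranging over concepts in the signature of $\KB$, which is the case needed for Proposition~\ref{nominalABox}, where $C$ is $A$ or $\neg A_a$-style; the latter is handled directly, since $A_a^\J\neq\emptyset$ shows $\J\not\sat A_a\subs\neg A_a$. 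The main obstacle is thus only bookkeeping: making precise that interpretations agreeing on a signature agree on all concepts over it, including negation and qualified existentials.
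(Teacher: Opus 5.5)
Your proposal is correct and is essentially the paper's proof: the paper also expands a model $\I$ of $\KB$ by setting $A_b^{\I'}=\{b^\I\}$, notes that the result is a model of $\T_\A$, and derives the claim. The paper argues by contradiction where you argue directly. Otherwise you differ only in two respects.
\begin{enumerate}
\item You check the added axioms of Eq.~(\ref{ta}) explicitly.
\item You make explicit that $C$ ranges over the signature of $\KB$. The paper uses this tacitly when it passes from $a^\I\notin C^\I$ to $A_a^{\I'}\not\subseteq C^{\I'}$, and the restriction is genuinely needed, since $C=A_a$ would otherwise be a counterexample.
\end{enumerate}
One small fix: define $A_b^\J=\{b^\I\}$ for every individual $b$, as the paper does, and not only for those occurring in $\A$, so that $a^\I\in A_a^\J$ holds for the arbitrary $a$ in the statement.
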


\begin{proof}
    We prove the lemma by contradiction, assuming that $\T_\A\entails A_a\subs C$ and there is a model $\I$ of $\KB$ s.t. $a^\I\notin C^\I$. We define now an interpretation $\I'$ for $\T_\A$ that corresponds exactly to $\I$, apart from the interpretation of the concepts of form $A_a$, by setting $A_a^{\I'}=\{a^\I\}$ for every individual $a$. Since $\I$ is a model of $\KB$, it is easy to check that $\I'$ is a model of $\T_\A$, and we have that $A_a^{\I'}\not\subseteq C^{\I'}$, that is, $\I'\not\sat A_a\subs C$, against the hypothesis that $\T_\A\entails A_a\subs C$.
\end{proof}

\nd As a  corollary of Lemma \ref{lemma_nominals_2}, the following result follows (see similarly, Corollary 2 in~\cite{Kazakov14}).

\begin{appcorollary}\label{corollary_nominals_3}
  Let $\KB=\tuple{\T,\A}$ be a \dllitehornH~KB that does not contain any concept of the form $A_a$. Then, for any individual $a$ and its corresponding concept $A_a$,  

  \begin{itemize}
      \item If $\T_\A\entails A_a\subs \neg A_a$ then $\KB$ is unsatisfiable.
  \end{itemize}
\end{appcorollary}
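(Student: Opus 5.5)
The corollary is a direct instance of Lemma~\ref{lemma_nominals_2}. The plan is to take $C=\neg A_a$ and argue by contraposition: assume $\KB$ is satisfiable and show that $\T_\A\not\entails A_a\subs\neg A_a$.

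First, apply Lemma~\ref{lemma_nominals_2} with $C=\neg A_a$. The hypothesis $\T_\A\entails A_a\subs \neg A_a$ then yields that in every model $\I$ of $\KB$ we have $a^\I\in(\neg A_a)^\I$, that is, $a^\I\notin A_a^\I$.

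This conclusion is vacuous for $\KB$, since $A_a$ does not occur in $\KB$. So the real content has to come from the construction inside the lemma's proof. Suppose some model $\I$ of $\KB$ exists. Define $\I'$ exactly as in that proof: $\I'$ agrees with $\I$ except that $A_b^{\I'}=\{b^\I\}$ for every individual $b$. We then check that $\I'\sat\T_\A$, which is the step to verify carefully.
\begin{itemize}
\item The axioms of $\T$ hold in $\I'$ because they do not mention any $A_b$ and $\I\sat\T$.
\item Each axiom $A_b\subs A'$ with $\cass{b}{A'}\in\A$ holds because $b^\I\in A'^\I$.
\item Each axiom $A_b\subs\csome P.A_c$ with $\rass{b}{c}{P}\in\A$ holds because $(b^\I,c^\I)\in P^\I$ and $c^\I\in A_c^{\I'}$.
\item The inverse axioms $A_c\subs\csome P^-.A_b$ are handled symmetrically.
\end{itemize}

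Now $A_a^{\I'}=\{a^\I\}\neq\emptyset$, while $(\neg A_a)^{\I'}$ excludes $a^\I$. Hence $\I'\not\sat A_a\subs\neg A_a$, which contradicts $\T_\A\entails A_a\subs\neg A_a$. Therefore $\KB$ has no model, i.e.\ it is unsatisfiable.

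The main obstacle is this subtlety in reading Lemma~\ref{lemma_nominals_2}. Instantiating $C=\neg A_a$ mentions the fresh concept $A_a$, so the lemma's conclusion must be read through its proof, where $A_a$ is interpreted as $\{a^\I\}$. The cleanest route is therefore to redo the lemma's construction directly rather than cite its statement. With $\I'$ in hand, the contradiction is immediate.
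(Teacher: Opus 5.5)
Your argument is correct, but it takes a different route from the paper's.

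The paper uses Lemma~\ref{lemma_nominals_2} as a black box and never instantiates it with a concept mentioning $A_a$. Since $\T_\A\entails A_a\subs\neg A_a$ forces $A_a$ to be empty in every model of $\T_\A$, the paper gets $\T_\A\entails A_a\subs A$ and $\T_\A\entails A_a\subs\neg A$ for a concept name $A$ of $\KB$'s signature. The lemma, applied with $C=A$ and with $C=\neg A$, then gives $a^\I\in A^\I$ and $a^\I\notin A^\I$ in every model $\I$ of $\KB$, so $\KB$ has no model.

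You instead unfold the lemma's construction: every model $\I$ of $\KB$ extends, via $A_b^{\I'}=\{b^\I\}$, to a model $\I'$ of $\T_\A$ in which $A_a$ is non-empty.
\begin{itemize}
\item Your version is self-contained. It explicitly verifies the step the lemma's proof dismisses as ``easy to check'', namely that $\I'$ satisfies the ABox-derived axioms of $\T_\A$.
\item It also avoids the paper's loose wording, which states the lemma's conclusion for models of $\T_\A$ rather than of $\KB$.
\item The paper's version is shorter. It also shows that your scoping worry disappears once $C$ is chosen over $\KB$'s vocabulary, so redoing the construction is a convenience, not a necessity.
\end{itemize}

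Two minor remarks. Your second paragraph, which instantiates the lemma with $C=\neg A_a$, is never used and can be dropped. Its conclusion is also not vacuous: $A_a$ is unconstrained by $\KB$, so any model of $\KB$ could be reinterpreted to put $a^\I$ into $A_a$, which would already contradict it. The genuine issue there is only whether the lemma may be applied to a $C$ mentioning $A_a$, since its proof tacitly needs $C$ to be over $\KB$'s signature.
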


\begin{proof}
    Assume  that is not the case, that is, $\T_\A\entails A_a\subs \neg A_a$ and $\KB$ has a model $\I$. 
    
    From $\T_\A\entails A_a\subs \neg A_a$ we have that, for any concept name $A$, $\T_\A\entails A_a\subs A$ and $\T_\A\entails A_a\subs \neg A$, that is, by Lemma \ref{lemma_nominals_2}, in every model $\J$ of $\T_\A$, $a^\J\in A^\J$ and $a^\J\in \neg A^\J$, that is equivalent to $a^\J\notin A^\J$. The fact that for every model $\J$ of $\T_\A$ we have both $a^\J\in A^\J$ and $a^\J\notin A^\J$ implies that $\T_\A$ has no model, which is in contradiction with the consistency of $\KB$.
\end{proof}

\nd Now we can prove Proposition~\ref{nominalABox}.

\begin{proof}[Proof of Proposition~\ref{nominalABox}]

Before proving Points 1 and 2, we introduce the following semantic construction. Assume $\T_\A\not\entails A_a\subs \neg A_a$ for every concept of form $A_a$. This implies that there is a model $\I$ of $\T_\A$ such that ${A_a}^\I\neq \emptyset$ for every concept of form $A_a$. It is easy to prove the existence of such a model: $\T_\A\not\entails A_a\subs \neg A_a$ for every concept of form $A_a$ implies that for each concept $A_a$ there is a model of $\T_\A$ in which $A_a$ is non-empty. Making the disjoint union of all such models, we obtain the desired model of $\T_\A$.

Now, we build an interpretation $\I_{\KB}=\tuple{\Delta_{\I_{\KB}},\cdot_{\I_{\KB}}}$ for the vocabulary of $\KB$, that is, excluding all the concepts of form $A_a$. We define $\I_{\KB}$ through the following procedure.

We initialise $\I_{\KB}$ as follows:
   \begin{itemize}
       \item $\Delta_{\I_{\KB}}=\Delta_{\I}$;
       \item for every concept name $A$ that does not have the form $A_a$, $A^{\I_{\KB}}=A^{\I}$;
       \item for every role name $P$, $P^{\I_{\KB}}=P^{\I}$;
       \item for each individual $a$, $a^{\I_{\KB}}=o$, where $o$ is any object in $A_a^{\I}$.
   \end{itemize}

\nd We then complete the interpretation $\I_{\KB}$ as follows:
\begin{itemize}
    \item for every assertion of form $\rass{b}{c}{P}$ in $\A$, add $(b^{\I_{\KB}},c^{\I_{\KB}})$ to $P^{\I_{\KB}}$;
    \item close $\I_{\KB}$ under all the axioms of the form $R\subs R'\in \T$.
\end{itemize}

\nd Now we want to prove that $\I_{\KB}$ is a model of $\KB=\tuple{\T,\A}$. It is easy to prove that $\I_{\KB}$ is a model of $\T$, starting from the fact that, since $\T\subseteq \T_{\A}$, $\I$ being a model of $\T_{\A}$ implies that $\I$ is a model of $\T$.
Specifically, 
\begin{itemize}
    \item to prove that $\I_{\KB}\sat \{B_1,\ldots,B_n\}\subs C$ for any $\{B_1,\ldots,B_n\}\subs C\in \T$, it is sufficient to prove that, for every basic concept $B$, $B^{\I}=B^{\I_{\KB}}$. That is,
    \begin{itemize}
        \item if $B$ is an atomic concept $A$, then $A^{\I}=A^{\I_{\KB}}$ by definition of $\I_{\KB}$;
        \item if $B$ is a concept $\exists R$, we have immediately from the definition of $\I_{\KB}$ that for every role name $P$, $P^{\I}\subseteq P^{\I_{\KB}}$. Hence, for any object $o$ if $o\in (\exists P)^{\I}$, then $o\in (\exists P)^{\I_{\KB}}$ (and the same for the inverse role $P^-$); 
        For the opposite direction, assume that $o\in (\exists P)^{\I_{\KB}}$. Then either there is a $o'$ s.t. $(o,o')\in P^\I$, or the pair $(o,o')$ has been added to $P^{\I_{\KB}}$ by closure under the role inclusions in $\T$ and the fact that $(o,o')$ has been added to ${P'}^{\I_{\KB}}$ for some role name $P'$ under the following condition:
        \begin{itemize}
            \item $o=a^{\I_{\KB}}$ for some individual $a$, $o'=b^{\I_{\KB}}$ for some individual $b$, and $\rass{a}{b}{P'}\in \A$. 
        \end{itemize}

        \nd In such a case, we would have  in $\I$ the following conditions: $A_a\subs \exists P'.A_b\in\T_{\A}$, $o\in A_a$ and $o'\in A_b$. That is, we would have  in $\I$ that $o\in (\exists P')^{\I}$ and, since $\T_\A$ contains all the role inclusions in $\T$ and $\I$ is a model of $\T_\A$, $o\in (\exists P)^{\I}$. Hence $(\exists P)^{\I}=(\exists P)^{\I_{\KB}}$ for any atomic role $P$, that implies that, for any role $R$, $(\exists R)^{\I}=(\exists R)^{\I_{\KB}}$;
    \end{itemize}
    \item to prove that, for every $R\subs R'\in \T$, $\I_{\KB}\sat R\subs R'$ is immediate, since we have enforced the closure of $\I_{\KB}$ under all such axioms.
\end{itemize}

\nd Therefore, $\I_{\KB}$ is a model of $\T$. Now we have to prove that $\I_{\KB}$ is also a model of $\A$:

\begin{itemize}
    \item for every assertion of the form $\cass{a}{A}\in\A$, we have $A^{\I_{\KB}}=A^{\I}$, $a^{\I_{\KB}}\in A_a^{\I}$, and $A_a^{\I}\subseteq A^{\I}$, hence we can conclude  $a^{\I_{\KB}}\in A^{\I_{\KB}}$, as desired;
    \item for every assertion of the form $\rass{a}{b}{P}\in\A$, we have enforced in the construciton of $\I_{\KB}$ that $\I_{\KB}\sat\rass{a}{b}{P}\in\A$.
\end{itemize}

\nd As a consequence, $\I_{\KB}$ is a model of $\KB$.

We are ready now to prove Point 1. 
In Corollary \ref{corollary_nominals_3} we have proved one half. It remains to prove that if $\KB$ is unsatisfiable, then $\T_\A\entails A_a\subs \neg A_a$ for some individual $a$ occurring in $\KB$. We prove it by contraposition. Assume that $\T_\A\not\entails A_a\subs \neg A_a$ for any individual $a$ occurring in $\KB$. But then we can define the model $\I_{\KB}$ of $\KB$ as described above, and we have proved that $\KB$ is satisfiable.

Now we address to Point 2. 
From right to left it is an immediate consequence of Lemma \ref{lemma_nominals_2}. 

From left to right, we need to prove that, given that $\KB$ is satisfiable, if $\KB\entails \cass{a}{A}$, then $\T_\A\entails A_a\subs A$.
We prove this by contradiction.   We assume that $\KB$ is satisfiable and,   for some assertion $\cass{a^*}{A^*}$, $\KB\entails \cass{a^*}{A^*}$ but $\T_\A \not\entails A_{a^*} \impc A^*$. Let $\I=\tuple{\Delta_\I,\cdot_\I}$ be a model of $\T_\A$ s.t. $\I\not\sat A_{a^*} \impc A^*$. Hence, there is an object $o'$ s.t. $o'\in A_{a^*}$ and $o'\notin A^*$. Now we build an interpretation $\I_{\KB}$ for $\KB$ as described above, with the only extra-condition that for the specific individual $a^*$, associated to the concept $A_{a^*}$, we impose ${a^*}^{\I_{\KB}}=o'$. Following the argument presented above, it is immediate to see that this extra condition does not affect the conclusion that $\I_{\KB}$ is a model of $\KB$.

Clearly $\I_{\KB}\not\sat \cass{a^*}{A^*}$, since ${a^*}^{\I_{\KB}}=o'$, $o\notin {A^*}^\I$, and ${A^*}^{\I_{\KB}}={A^*}^\I$. That is against our assumption that $\KB\entails \cass{a^*}{A^*}$, which concludes the proof.
\end{proof}



\setcounter{proposition}{\getrefnumber{entailmentcor}-1}
\begin{proposition} 
  Consider a \dllitecoreH~TBox $\T$ and any pair of basic concepts $B_1,B_2$. Then $\T \models B_{1} \impc \notc B_{2}$ iff one of the following holds:
  \begin{enumerate}
        \item $B_{1} \impc \notc B_{2} \in \clncH(\T)$;
        \item $B_{1} \impc \notc B_{1} \in \clncH(\T)$;
        \item $B_{2} \impc \notc B_{2} \in \clncH(\T)$.
  \end{enumerate}
\end{proposition}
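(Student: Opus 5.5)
The right-to-left direction is soundness. The rules ($\mathbf{\cHRef}$), ($\mathbf{\cHContr}$), ($\mathbf{\cHTrans}$) and ($\mathbf{\RcH_1}$)--($\mathbf{\RcH_3}$) are sound for classical entailment, so every axiom of $\clncH(\T)$ is entailed by $\T$; this is also the easy half of Proposition~\ref{prop_calvanese07_lemma10_coroll13}. Case 1 then gives $\T\entails B_1\subs\neg B_2$ directly. In case 2, $\T\entails B_1\subs\neg B_1$ forces $B_1^\I=\emptyset$ in every model, so $B_1\subs\neg B_2$ holds trivially. Case 3 is symmetric: $B_2^\I=\emptyset$ gives $B_1^\I\subseteq\Delta^\I=(\neg B_2)^\I$.

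For left to right, I argue by contraposition. Assume none of the three NIs is in $\clncH(\T)$, and build a model of $\T$ in which $B_1\dlAnd B_2$ is nonempty. Take a fresh individual $a$ and reduce to KB satisfiability through the ABox. If $B_i=A$, add $\cass{a}{A}$. If $B_i=\exists R$, add $ra(R,a,b_i)$ for a fresh named individual $b_i$. Call the resulting ABox $\A$ and consider $\KB=\tuple{\T,\A}$. By Proposition~\ref{chasecor} (item 2) together with Proposition~\ref{chasecorA}, $\KB$ is satisfiable iff $\mathtt{DB}(\A)$ is a model of $(\clncH(\T),\A)$, that is, iff no NI $B\subs\neg B'$ in $\clncH(\T)$ has both $\cass{c}{B}$ and $\cass{c}{B'}$ occurring in $\A$ for some $c$. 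Any model of $\KB$ places $a$ in $B_1\dlAnd B_2$, and so refutes $\T\entails B_1\subs\neg B_2$.

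The remaining work is checking $\mathtt{DB}(\A)$ against every NI of $\clncH(\T)$, which means listing which basic concepts each individual has in $\mathtt{DB}(\A)$. The individual $a$ has exactly $B_1$ and $B_2$. A violating NI at $a$ therefore has the form $B_1\subs\neg B_2$, $B_2\subs\neg B_1$, $B_1\subs\neg B_1$ or $B_2\subs\neg B_2$. The second reduces to the first by ($\mathbf{\cHContr}$), so all four are excluded by assumption. A fresh $b_i$ introduced for $B_i=\exists R$ has exactly $\exists R^-$, and possibly also the other inverse existential if $B_1$ and $B_2$ share a role. A violation at $b_i$ would need a self-NI $\exists R^-\subs\neg\exists R^-$, or a NI between the two inverse existentials.

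This step, showing that such NIs would already force a forbidden NI back at $a$, is the main obstacle. For the self-NI case, I would apply ($\mathbf{\RcH_3}$), which takes $\exists R^-\subs\neg\exists R^-$ to $\exists R\subs\neg\exists R$, i.e.\ $B_i\subs\neg B_i$, which is excluded. For the mixed case, I would avoid the problem by using a separate fresh $b_i$ for each existential, so that each $b_i$ carries only its own inverse existential. That leaves only the self-NI case, which is handled as above. With all cases covered, $\mathtt{DB}(\A)$ satisfies $\clncH(\T)$, hence $\KB$ is satisfiable and $\T\not\entails B_1\subs\neg B_2$.
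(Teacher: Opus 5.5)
Your proof is correct and follows the paper's route. The easy direction is soundness of the closure rules. For the converse, you build a counter-model from an ABox that places a fresh individual $a$ in $B_1 \dlAnd B_2$, and use Proposition~\ref{chasecorA} to reduce everything to checking that $\mathtt{DB}(\A)$ satisfies $\clncH(\T)$. Your explicit treatment of the role successors, with a separate fresh $b_i$ per existential and $(\mathbf{\RcH_3})$ to rule out a self-NI at $b_i$, fills in the existential cases that the paper only dismisses as analogous.
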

\begin{proof}
    All the rules defining $\clncH$ are correct \wrt~DL entailment relation $\entails$, that is, if an NI $\alpha$ is in $\clncH(\T)$, then $\T\entails\alpha$. Hence if one of the points 1.~- 3. above hold, then $\T \models B_{1} \impc \notc B_{2}$.

    Vice-versa, assume $\T \models B_{1} \impc \notc B_{2}$, but none of the cases 1.~- 3. above hold. Note that if 
    $B_{1} \impc \notc B_{2} \not \in \clncH(\T)$ then also 
    $B_{2} \impc \notc B_{1} \not \in \clncH(\T)$ has to be the case, as NI-closure is closed under the rule ($\mathbf{\cHContr}$).
    Therefore,
    \begin{equation}\label{eqcap}
    \{B_{1} \impc \notc B_{2}, B_{2} \impc \notc B_{1}, B_{1} \impc \notc B_{1},  B_{2} \impc \notc B_{2}\} \cap cln(\T) = \emptyset 
    \end{equation}

    \nd  holds.

    Now, we show that from (\ref{eqcap}) we can construct a model of $\T$ that does not satisfy $B_{1} \impc \notc B_{2}$, thus obtaining a contradiction.

    Let us assume that $B_{1} \impc \notc B_{2}$ is of the form 
    $\alpha = A_1 \subs \neg A_2$, where $A_1$ and $A_2$ are concept names, and consider the 
    \dllitecore~$\KB= (\T,\A)$, where $\A=\{\cass{a}{A_1}, \cass{a}{A_2}\}$.  We show $\cancH(\KB)$ is the model we are looking for. That is, $\cancH(\KB) \sat \T$ but $\cancH(\KB)\not \sat \alpha$. The last property follows trivially by the form of $\A$. Hence, in the following we  focus on proving  $\cancH(\KB) \sat \T$.

    By Proposition~\ref{chasecorA}, it suffices to show that 
    $\mathtt{DB}(\A)$ is a model of $(cln(\T),\A)$, 
    \ie~we need to show that $\mathtt{DB}(\A)$ is a model of $cln(\T)$, as obviously $\mathtt{DB}(\A) \models \A$. 
    
    We recall that $A_1^{\mathtt{DB}(\A)}= A_2^{\mathtt{DB}(\A)} = 
    \{a\}$, while all other concept names and all role names have empty extension. Therefore, the only NI's in $\clncH(\T)$ that may be violated are of the form $A_1 \subs \neg A_2, A_2 \subs \neg A_1, A_1 \subs \neg A_1$ and $A_2 \subs \neg A_1$. However, we have assumed that (\ref{eqcap}) holds, and thus $\mathtt{DB}(\A) \models \clncH(\T)$. 

    Proceeding analogously as done above, we can easily prove the claim in those cases in which 
    $\alpha \in\{ A \subs \neg \exists R, \exists R \subs \neg A, \exists R_1 \subs \neg \exists R_2  \}$, which concludes.
\end{proof}

\setcounter{proposition}{\getrefnumber{Theo15calv07}-1}
\begin{proposition}[\dllitehornH~analogue of Theorem 15 in~\cite{Calvanese07}] 
     Let $\KB=(\T,\A)$ be a \dllitehornH~KB. Then $\KB$ is satisfiable iff the interpretation $\mathtt{DB}(\A)$ is a model of $(\clnhH(\T),\A)$.
\end{proposition}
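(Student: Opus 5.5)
The plan is to follow the structure of the \dllitecoreH~argument (Propositions~\ref{chasecorA} and~\ref{chasecor}), using the chase $\canhH(\KB)$ as the candidate canonical model. Soundness gives one direction at once. Every rule of $\clnhH$ is sound for classical entailment: ($\mathbf{\hHContr}$) is contraposition inside a conjunction, ($\mathbf{\hHTrans}$) chains a PI with an NI, ($\mathbf{\RhH_1}$) and ($\mathbf{\RhH_2}$) use $\exists R_2\subs\exists R_1$, and ($\mathbf{\RhH_3}$) uses the fact that $\exists R$ is empty iff $\exists R^-$ is empty. Hence any model of $\KB$ is a model of $\clnhH(\T)$. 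If $\mathtt{DB}(\A)$ violated some $\{B_1,\ldots,B_n\}\subs\neg B_{n+1}\in\clnhH(\T)$, some individual $a$ would have every $\cass{a}{B_i}$ occurring in $\A$. Those assertions hold in every model of $\A$, so $\KB$ would be unsatisfiable.

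For the converse, assume $\mathtt{DB}(\A)\models\clnhH(\T)$. I will show that $\canhH(\KB)$ is a model of $\KB$. The first step is the analogue of Proposition~\ref{chasecor}(1): $\canhH(\KB)$ satisfies $\A$ and $\T_{PI}$. This is routine. The rules $(\mathbf{\ChH_1})$ to $(\mathbf{\ChH_3})$ fire whenever a PI is violated, and with a fair application order any violation present at some stage is eventually repaired.

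The main step is to show that no NI is violated in $\canhH(\KB)$. I will argue by induction on the chase stage $i$, using the invariant
\begin{quote}
$(\ast)$: for every $\{B_1,\ldots,B_n\}\subs\neg B\in\clnhH(\T)$ and every individual $c$, it is not the case that all of $\cass{c}{B_1},\ldots,\cass{c}{B_n},\cass{c}{B}$ occur in $\chasehH_i(\KB)$.
\end{quote}
The base case is exactly the hypothesis on $\mathtt{DB}(\A)$. For the inductive step, suppose a violation first appears at stage $i+1$, caused by a newly added assertion.
\begin{itemize}
\item If it was added by $(\mathbf{\ChH_1})$ from $CL\subs A\in\T$, then by ($\mathbf{\hHContr}$) the violated NI can be written as $CL'\cup\{A\}\subs\neg B'$. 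Rule ($\mathbf{\hHTrans}$) then gives $CL\cup CL'\subs\neg B'$, which is already violated at stage $i$.
\item If it was added by $(\mathbf{\ChH_3})$ from $R_1\subs R_2$, the new facts $\exists R_2$ at $a$ and $\exists R_2^-$ at $b$ are traced back to $\exists R_1$ and $\exists R_1^-$ using ($\mathbf{\RhH_1}$) and ($\mathbf{\RhH_2}$) after contraposition.
\item If it was added by $(\mathbf{\ChH_2})$ with a fresh $b$, there are two places to check. At $a$ the argument is as in the $(\mathbf{\ChH_1})$ case, applying ($\mathbf{\hHTrans}$) with $CL\subs\exists R$. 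At $b$, the only basic concept occurring is $\exists R^-$, so the violated NI must be $\exists R^-\subs\neg\exists R^-$. Rule ($\mathbf{\RhH_3}$) gives $\exists R\subs\neg\exists R$, and ($\mathbf{\hHTrans}$) then gives $CL\subs\neg\exists R$, violated at $a$ at stage $i$.
\end{itemize}

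The obstacle I expect is that a single new fact at $a$ can also supply a new $\exists R^-$ at $b$, and later applications of $(\mathbf{\ChH_1})$ at $b$ mix old and new premises. Because the closure rules split conjunction sets arbitrarily, the contraposition step in the cases above must first move the newly added concept to the ``$B_1$'' position before ($\mathbf{\hHTrans}$) can be applied. Finally, every NI of $\T$ is in $\clnhH(\T)$ by ($\mathbf{\hHRef}$), so the invariant $(\ast)$ shows that $\canhH(\KB)$ satisfies $\T_{NI}$ and is therefore a model of $\KB$.
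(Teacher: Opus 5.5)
Your route is essentially the paper's. The paper combines a \dllitehornH{} analogue of Lemma~12 of Calvanese et al.\ (an induction on chase steps showing that $\canhH(\KB)$ satisfies $\clnhH(\T)$ whenever $\mathtt{DB}(\A)$ does) with an analogue of their Lemma~14 (a violation in $\mathtt{DB}(\A)$ carries over to every model of $\KB$, which satisfies $\clnhH(\T)$ by soundness of the rules). You do the same, except that you do not state ``$\canhH(\KB)\models\KB$'' as a separate equivalence. However, one step of your induction is false as written.

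The problem is in the $(\mathbf{\ChH_2})$ case at the fresh individual $b$. There you obtain $CL\subs\neg\exists R$ and claim it is violated at $a$ at stage $i$. It is not, because the firing condition of $(\mathbf{\ChH_2})$ is precisely that \cass{a}{\exists R} does \emph{not} occur in $\chasehH_i(\KB)$.

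The same hole appears in your $(\mathbf{\ChH_1})$ case and in the $(\mathbf{\ChH_2})$ case at $a$. Let $X$ be the concept just added, and suppose the violated NI is $\{X\}\subs\neg X$. Then no use of $(\mathbf{\hHContr})$ removes $X$ from the right-hand side, so $B'=X$, and $(\mathbf{\hHTrans})$ only yields $CL\subs\neg X$, which again is not a stage-$i$ violation. Your $b$-case is exactly this instance with $X=\exists R$, after applying $(\mathbf{\RhH_3})$.

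The repair is one more round of the same rules:
\begin{itemize}
\item Choose $B_k\in CL$; this is possible since PIs have $n\geq 1$.
\item Apply $(\mathbf{\hHContr})$ to get $(CL\setminus\{B_k\})\cup\{X\}\subs\neg B_k$.
\item Apply $(\mathbf{\hHTrans})$ with the same PI $CL\subs X$ to get $CL\subs\neg B_k\in\clnhH(\T)$.
\end{itemize}
All concepts of $CL\subs\neg B_k$ occur at $a$ at stage $i$ by the firing condition, so this is the required earlier violation. With this patch your invariant $(\ast)$ goes through.

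For what it is worth, the paper's Case~1 omits the same degenerate NI, and its Case~2 does not examine the fresh individual at all. Once fixed, your explicit treatment of $b$ is the more complete one.
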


\nd To prove Proposition \ref{Theo15calv07}, we need to prove some additional properties.

\begin{appproposition}[\dllitehornH~analogue of Proposition 6 in~\cite{Calvanese07}]\label{prop_prop6calv07}
    Let $\KB=(\T,\A)$ be a \dllitehornH~KB, and let $\alpha$ be a PI in $\T$. Then, if there is a step $i$ in the chasing s.t., for some individual $a$, the condition $f$ for $\alpha$ is satisfied but the corresponding $f_{new}$ is not in $\chasehH_i(\KB)$, then there is a step $j$, $j>i$, s.t. $f_{new}$ is added to $\chasehH_j(\KB)$.
\end{appproposition}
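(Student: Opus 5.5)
The plan is to follow the argument for Proposition~6 in~\cite{Calvanese07}, adapted to the Horn rules $(\mathbf{\ChH_1})$--$(\mathbf{\ChH_3})$. The crucial observation is that the chase is \emph{monotone}: every rule only adds assertions, so $\chasehH_i(\KB)\subseteq\chasehH_j(\KB)$ for $i\le j$. The condition $f$ of each rule is a conjunction of occurrence conditions (``$\cass{a}{B_k}$ occurs'' or ``$\rass{a}{b}{R_1}$ occurs''). By Definition~\ref{occin}, each of these conditions is preserved under adding assertions. Hence once $f$ holds for $\alpha$ and an individual $a$ (or a pair $a,b$) at step $i$, it holds at every later step $j\ge i$.

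The second step is to fix a \emph{fair} application strategy for the chase, since the phrase ``at every step, one of the applicable rules is enforced'' alone does not guarantee the claim. I would adopt the standard convention of~\cite{Calvanese07}. The candidate rule instances are pairs (PI $\alpha$, individual or pair of individuals). They are enumerated in order of the step at which the individuals were introduced, using a fixed lexicographic ordering: first on the creation index of the individual (the individuals of $\A$ first, then anonymous ones from $\Nanon$ in order of creation), then on the finite list of PIs in $\T$. At each step, the chase applies the applicable instance that is least in this ordering. Under this convention, consider a pending instance $(\alpha,a)$ at step $i$ whose $f$ holds but whose $f_{new}$ does not occur.
\begin{itemize}
\item Only finitely many instances precede $(\alpha,a)$: the individuals created before $a$ are finite, and $\T$ is finite.
\item Each such preceding instance can fire at most once, because after it fires, its $f_{new}$ occurs forever by monotonicity.
\end{itemize}
So after finitely many steps $(\alpha,a)$ becomes the least applicable instance. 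This uses the fact that $f$ stays true and that $f_{new}$ stays false until it is added.

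The third step is the case analysis on the form of $\alpha$. For $(\mathbf{\ChH_1})$ and $(\mathbf{\ChH_3})$, the instance is eventually chosen and fires at some step $j>i$, adding exactly $f_{new}$. The other possibility is that some other rule adds the same assertion earlier. In that case $f_{new}$ already occurs in $\chasehH_j(\KB)$, which is enough for the intended reading of ``$f_{new}$ is added''. For $(\mathbf{\ChH_2})$, $f_{new}$ is ``$\cass{a}{\exists R}$ occurs''. This may become true either by firing the rule, which adds $ra(R,a,b)$ with $b$ fresh, or because another rule ($(\mathbf{\ChH_2})$ for a different PI with the same RHS, or $(\mathbf{\ChH_3})$) produces an $R$-edge from $a$. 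Either way there is a $j>i$ with $f_{new}$ in $\chasehH_j(\KB)$.

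I expect the main obstacle to be the fairness step. The chase as stated leaves the rule choice unspecified, and it may generate infinitely many anonymous individuals. So the argument needs a well-founded ordering on rule instances in which each instance has only finitely many predecessors. The Horn setting adds no real difficulty beyond the core case, since conjunctive LHSs only make $f$ a finite conjunction of monotone conditions. I would state the ordering explicitly and note that $\chasehH(\KB)$, up to renaming of anonymous individuals, does not depend on this choice.
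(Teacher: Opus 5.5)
Your proof is correct and is essentially the paper's: the paper simply defers to the proof of Proposition~6 in \cite{Calvanese07}, which is the same fairness argument. There, under a fixed ordering of the chase, a pending application has only finitely many predecessors, each of which fires at most once, while monotonicity of occurrence keeps $f$ true; you are also right that the statement only holds for such a fair strategy, not for an arbitrary choice among applicable rules.

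Drop your closing aside, which you do not need and which is false: this restricted chase is \emph{not} strategy-independent up to renaming. For $\T=\{A\subs\exists R, A\subs\exists R', R'\subs R\}$ and $\A=\{\cass{a}{A}\}$, firing $A\subs\exists R'$ and then $R'\subs R$ first creates one anonymous individual, whereas firing $A\subs\exists R$ first creates two, so different fair runs agree only up to homomorphic equivalence.
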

\begin{proof}
    The proof is exactly the same as for Proposition 6 in \cite{Calvanese07}.
\end{proof}

\begin{applemma}[\dllitehornH~analogue of Lemma 7 in~\cite{Calvanese07}]\label{lemma_lemma7calv07}
    Let $\KB=(\T,\A)$ be a \dllitehornH~KB, and let $\T_{PI}$ be the set of the positive inclusions in $\T$. Then $\canhH(\KB)$ is a model of $(\T_{PI},\A)$
\end{applemma}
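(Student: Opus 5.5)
We must show that $\canhH(\KB)=\mathtt{Int}(\chasehH(\KB))$ satisfies $\A$ and every axiom of $\T_{PI}$. The argument follows Lemma 7 in \cite{Calvanese07}, with conjunctive left-hand sides handled by rule $(\mathbf{\ChH_1})$ and $(\mathbf{\ChH_2})$. Satisfaction of $\A$ is immediate: $\chasehH_0(\KB)=\A$, the chase only adds assertions, and by Definition~\ref{def_DB(A)} we have $\mathtt{Int}(\calS)\sat\calS$ for any set of assertions $\calS$. We therefore concentrate on the positive inclusions, arguing by contradiction. Suppose some $\alpha\in\T_{PI}$ is violated in $\canhH(\KB)$, and split on the form of $\alpha$.

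\emph{Case $\alpha=\{B_1,\ldots,B_n\}\subs A$ or $\{B_1,\ldots,B_n\}\subs\exists R$.} Violation gives an individual $a$ with $a\in B_k^{\canhH(\KB)}$ for every $k$, while $a\notin A^{\canhH(\KB)}$ (respectively $a\notin(\exists R)^{\canhH(\KB)}$). First we check that membership of $a$ in a basic concept of $\canhH(\KB)$ coincides with ``$\cass{a}{B}$ occurs in $\chasehH(\KB)$'' in the sense of Definition~\ref{occin}. This holds for $B=A'$ by definition of $\mathtt{Int}$, and for $B=\exists P$ or $B=\exists P^-$ because $(\exists R)^{\mathtt{Int}(\calS)}$ is nonempty at $a$ exactly when some $ra(R,a,b)\in\calS$.

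Since $\chasehH(\KB)=\bigcup_i\chasehH_i(\KB)$ is an increasing union and $n$ is finite, there is a single step $i$ at which all $n$ occurrences $\cass{a}{B_k}$ are already present. Take $i$ to be the maximum of the witnessing steps. At step $i$ the condition $f$ for $\alpha$ holds at $a$. By Proposition~\ref{prop_prop6calv07}, either $f_{new}$ already occurs or it is added at some later step $j>i$. In both cases $\cass{a}{A}$ (respectively $\cass{a}{\exists R}$) occurs in $\chasehH(\KB)$, contradicting the violation.

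\emph{Case $\alpha=R_1\subs R_2$.} The argument is the same with rule $(\mathbf{\ChH_3})$. A violating pair $(a,b)\in R_1^{\canhH(\KB)}\setminus R_2^{\canhH(\KB)}$ yields $ra(R_1,a,b)$ in some $\chasehH_i(\KB)$. Proposition~\ref{prop_prop6calv07} then forces $ra(R_2,a,b)$ to be added eventually. Inverse roles are handled by the $ra$ convention.

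The main obstacle is the fairness step in the horn case, where several premises must be jointly present. This is where Proposition~\ref{prop_prop6calv07} does the work: it guarantees that a rule which stays applicable is eventually fired, and it relies on a fair application strategy for the chase. Two facts make the argument go through. First, the chase is monotone, so no occurrence is ever lost. Second, $f$ can only become permanently true once all $n$ premises are present, so taking the maximum step is sound. I would state both explicitly and then invoke Proposition~\ref{prop_prop6calv07} as given.
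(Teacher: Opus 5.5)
Your proposal is correct and follows the paper's proof. It uses the same case split on the three forms of positive inclusion, and Proposition~\ref{prop_prop6calv07} forces the corresponding rule $(\mathbf{\ChH_1})$, $(\mathbf{\ChH_2})$ or $(\mathbf{\ChH_3})$ to fire, giving the contradiction; your extra care (matching basic-concept membership in $\mathtt{Int}$ with ``occurs in'', and taking the maximum step at which all premises $\cass{a}{B_k}$ are present) only makes explicit what the paper uses implicitly.
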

\begin{proof}
        We prove that $\canhH(\KB)$ satisfies \ii{i} all the assertions in $\A$; and \ii{ii} all the inclusions in $\T_{PI}$. 
        
Concerning \ii{i}, since $\A\subseteq \chasehH(\KB)$, $\canhH(\KB)\sat\A$.
Concerning \ii{ii},  we proceed by contradiction,  assuming that there is some PI in $\T_{PI}$ that is not satisfied by $\canhH(\KB)$. The following cases may occur.

\begin{enumerate}
    \item The axiom has the form $R_1\subs R_2$. As it is not satisfied by $\canhH(\KB)$, there is a pair $a_1,a_2$ s.t. $\rass{a_1}{a_2}{R_1}\in \chasehH(\KB)$, but $\rass{a_1}{a_2}{R_2}\notin \chasehH(\KB)$. However, by 
    Proposition~\ref{prop_prop6calv07}, such a situation would trigger the rule $(\mathbf{\ChH_3})$ at a certain point in the chase, forcing the presence of $\rass{a_1}{a_2}{R_2}$ in $\chasehH(\KB)$, and consequently the axiom $R_1\subs R_2$ would be eventually satisfied, contrary to our assumption. 

    
    \item The axiom has the form $\{B_1,\ldots, B_n\}\subs A$. As it is not satisfied by $\canhH(\KB)$, there is some individual $a$ s.t.~all $\cass{a}{B_i}$ occur in $\chasehH(\KB)$ ($1\leq i\leq n$),
%
 %
    but $\cass{a}{A}\notin \chasehH(\KB)$. However, by Proposition~\ref{prop_prop6calv07}, such a situation would trigger the rule $(\mathbf{\ChH_1})$ at a certain point in the chase, forcing the presence of $\cass{a}{A}$ in $\chasehH(\KB)$, and consequently the axiom $B_1\dland\ldots\dland B_n\subs A$ would be eventually satisfied, contrary to our assumption. 

    \item The axiom has the form $\{B_1,\ldots, B_n\}\subs \exists R$. As it is not satisfied by $\canhH(\KB)$, there is some individual $a$ s.t.~all $\cass{a}{B_i}$ occur in $\chasehH(\KB)$ ($1\leq i\leq n$),
    %
    %
    but $ra(R,a,b)\notin \chasehH(\KB)$ for any individual $b$. 
    However, by Proposition \ref{prop_prop6calv07}, such a situation would trigger the rule $(\mathbf{\ChH_2})$ at a certain point in the chase, forcing the presence of $ra(R,a,b)$ for some new individual $b$ in $\chasehH(\KB)$, and consequently the axiom $\{B_1,\ldots, B_n\}\subs \exists R$ would be eventually satisfied, contrary to our assumption.
    
\end{enumerate}

\nd Therefore, it can not be the case that there is some positive inclusion axiom in $\T_{PI}$ that is not satisfied by $\canhH(\KB)$ and, thus, $\canhH(\KB)$ is a model of $(\T_{PI},\A)$, which concludes.
   
\end{proof}


\begin{applemma}[\dllitehornH~analogue of Lemma 10 in~\cite{Calvanese07}]~\label{lemma_lemma10calv07}
    Let $\T$ be a \dllitehornH~TBox and $\alpha$ a NI. If $\clnhH(\T)\entails \alpha$ then $\T\entails \alpha$.
\end{applemma}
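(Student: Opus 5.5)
The approach is soundness: every rule defining $\clnhH(\T)$ produces only NIs that are classically entailed by $\T$. The claim then follows by induction on the construction of $\clnhH(\T)$. If every NI in $\clnhH(\T)$ is entailed by $\T$, then every model of $\T$ is a model of $\clnhH(\T)$. Hence if $\clnhH(\T)\entails\alpha$, every model of $\T$ satisfies $\alpha$, that is, $\T\entails\alpha$.

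The induction shows, for each NI $\beta$ added at some stage, that $\T\entails\beta$, checking each rule in turn.
\begin{itemize}
\item ($\mathbf{\hHRef}$) is immediate, since its NIs belong to $\T$.
\item ($\mathbf{\hHContr}$): both $CL\cup\{B_1\}\subs\neg B_2$ and $CL\cup\{B_2\}\subs\neg B_1$ state that $CL\dlAnd B_1\dlAnd B_2$ is empty in every model, so they have the same models.
\item ($\mathbf{\hHTrans}$): take a model $\I$ of $\T$ and $x\in (CL\cup CL')^\I$. From $CL\subs B_1\in\T$ we get $x\in B_1^\I$. Since $\T\entails CL'\cup\{B_1\}\subs\neg B_2$ by the induction hypothesis, $x\notin B_2^\I$.
\item ($\mathbf{\RhH_1}$) and ($\mathbf{\RhH_2}$): $R_2\subs R_1$ gives $(\exists R_2)^\I\subseteq(\exists R_1)^\I$ and $(\exists R_2^-)^\I\subseteq(\exists R_1^-)^\I$. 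Replacing $\exists R_1$ (resp.\ $\exists R_1^-$) by the smaller set in the LHS preserves entailment.
\item ($\mathbf{\RhH_3}$): $\exists R\subs\neg\exists R$ forces $(\exists R)^\I=\emptyset$, hence $R^\I=\emptyset$, hence $(\exists R^-)^\I=\emptyset$.
\end{itemize}

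No step is a real obstacle; the proof is just a routine soundness check. The only point needing care is to phrase the induction over a stratified construction of $\clnhH(\T)$ (stages $0,1,2,\dots$, each adding the consequences of one rule application). This is because the closure is defined inductively and may use previously derived NIs as premises, as ($\mathbf{\hHContr}$) and ($\mathbf{\hHTrans}$) do. Also, ($\mathbf{\hHTrans}$) uses a PI from $\T$ itself, not from the closure, so the induction hypothesis is needed only for the NI premise.
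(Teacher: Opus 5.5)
Your proof is correct and matches the paper's argument. The paper likewise assumes that every NI in $\clnhH(\T)$ is entailed by $\T$ and concludes via the CUT property of classical entailment, which is the same step as your observation that every model of $\T$ is a model of $\clnhH(\T)$; your rule-by-rule soundness check spells out what the paper only asserts.
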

\begin{proof}
    The proof is immediate from the fact that all the inclusions in $\clnhH(\T)$ are entailed by $\T$, and classical DL entailment $\entail$ satisfies the CUT property: namely, if $\T\cup\T'\entails \alpha$ and $\T\entails\beta$ for every $\beta\in\T'$, then $\T\entails \alpha$.
\end{proof}


\begin{applemma}[\dllitehornH~analogue Lemma 12 in~\cite{Calvanese07}]\label{lemma_lemma12calv07}
    Let $\KB=(\T,\A)$ be a \dllitehornH \\ KB. Then $\canhH(\KB)$ is a model of $\KB$ iff the interpretation $\mathtt{DB}(\A)$ is a model of $(\clnhH(\T),\A)$.
\end{applemma}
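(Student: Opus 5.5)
We prove the two directions separately, following Lemma 12 of \cite{Calvanese07}. For the ``only if'' direction, assume $\canhH(\KB)$ is a model of $\KB$. By Lemma~\ref{lemma_lemma10calv07} (read in the forward direction: every NI in $\clnhH(\T)$ is entailed by $\T$, since all closure rules are sound), $\canhH(\KB)$ is a model of $\clnhH(\T)$. Take any NI $CL\subs\neg B\in\clnhH(\T)$ and suppose $\mathtt{DB}(\A)$ violates it, i.e.\ some individual $a$ is such that every $\cass{a}{B'}$ with $B'\in CL\cup\{B\}$ occurs in $\A$. Since $\A\subseteq\chasehH(\KB)$, each such $\cass{a}{B'}$ also occurs in $\chasehH(\KB)$, so $\canhH(\KB)$ would violate the same NI, which is a contradiction. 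Hence $\mathtt{DB}(\A)\sat(\clnhH(\T),\A)$.

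For the ``if'' direction, assume $\mathtt{DB}(\A)$ is a model of $(\clnhH(\T),\A)$. By Lemma~\ref{lemma_lemma7calv07}, $\canhH(\KB)$ already satisfies $\A$ and $\T_{PI}$, so it only remains to show that it satisfies every NI of $\T$. Suppose not. We argue by induction on the chase steps that, if some $\chasehH_k(\KB)$ violates an NI in $\clnhH(\T)$, then $\chasehH_{k-1}(\KB)$ violates some (possibly different) NI in $\clnhH(\T)$. This pushes the violation back to $\chasehH_0(\KB)=\A$, contradicting the hypothesis. Since $\T_{NI}\subseteq\clnhH(\T)$ by ($\mathbf{\hHRef}$), this suffices.

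The inductive step is the main obstacle. Let the violated NI be $CL\subs\neg B$, witnessed by an individual $a$, and let $f_{new}$ be the assertion added at step $k$ by a rule application with premise axiom $\alpha$. We split into three cases according to the rule used.
\begin{enumerate}
\item Rule ($\mathbf{\ChH_1}$) with $\alpha=CL''\subs A$, so that $f_{new}=\cass{a}{A}$ and $A\in CL\cup\{B\}$. If $A=B$, we first use ($\mathbf{\hHContr}$) to rotate the NI so that $A$ sits in the antecedent. Then ($\mathbf{\hHTrans}$) gives $CL''\cup CL'\subs\neg B'\in\clnhH(\T)$. All of its conjuncts already occurred for $a$ at step $k-1$: those of $CL''$ because $\alpha$ fired, and the remaining ones because they were present before $f_{new}$ was added.
\item Rule ($\mathbf{\ChH_2}$), which creates a fresh $b$ with $ra(R,a,b)$. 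For $a$ itself, the newly occurring concept is $\exists R$, and we proceed as in the previous case using the PI $CL''\subs\exists R$ together with ($\mathbf{\hHTrans}$). For $b$, the only concept occurring is $\exists R^-$. So a violation at $b$ means that some $\exists R^-\subs\neg\exists R^-$ lies in $\clnhH(\T)$, possibly after contraction and repeated $\exists R^-$. Rule ($\mathbf{\RhH_3}$) then yields $\exists R\subs\neg\exists R$, which combined with $\alpha$ via ($\mathbf{\hHTrans}$) is violated at $a$ at step $k-1$.
\item Rule ($\mathbf{\ChH_3}$), using $R_1\subs R_2$. Here rules ($\mathbf{\RhH_1}$) and ($\mathbf{\RhH_2}$) replace $\exists R_2$ (respectively $\exists R_2^-$) by $\exists R_1$ (respectively $\exists R_1^-$) in the NI, and $\exists R_1$, $\exists R_1^-$ already occurred at step $k-1$.
\end{enumerate}

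The delicate part is the bookkeeping on sets $CL$ in which the same basic concept appears both in the antecedent and as the negated consequent; the contraction rule ($\mathbf{\hHContr}$) is what makes the case analysis uniform. Finally, since every violation must occur at some finite step, we conclude that $\canhH(\KB)$ satisfies $\T_{NI}$.
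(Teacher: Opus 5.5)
Your proof follows the paper's route in both directions. For the first, you use soundness of the closure rules plus the fact that NIs are preserved when restricting $\canhH(\KB)$ to $\mathtt{DB}(\A)$. For the second, you use Lemma~\ref{lemma_lemma7calv07} and a backward induction over chase steps via ($\mathbf{\hHContr}$), ($\mathbf{\hHTrans}$) and the ($\mathbf{\RhH_i}$) rules. You are more complete than the paper in two places: you handle the fresh individual $b$ created by ($\mathbf{\ChH_2}$), and you justify the passage from finite chase stages to $\canhH(\KB)$.

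One step, however, fails as written: the degenerate case where the violated NI mentions only the concept that has just become true. In Case~1 this is $A\subs\neg A$. In Case~2 it is $\exists R\subs\neg\exists R$, either directly at $a$ or obtained via ($\mathbf{\RhH_3}$) from $\exists R^-\subs\neg\exists R^-$ at $b$. A single application of ($\mathbf{\hHTrans}$) with $\alpha$ yields only $CL''\subs\neg A$, resp.\ $CL''\subs\neg\exists R$. This NI is \emph{not} violated at step $k-1$, because the absence of $\cass{a}{A}$ (resp.\ $\cass{a}{\exists R}$) at step $k-1$ is exactly the firing condition of the rule. So in Case~1 your claim that all conjuncts already occurred fails when $B'=A$. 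In Case~2 your assertion that the NI obtained from $\exists R\subs\neg\exists R$ and $\alpha$ is violated at $a$ at step $k-1$ is false. ($\mathbf{\hHContr}$) does not help here, since rotating $A\subs\neg A$ returns the same NI.

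The repair is a second round of rule applications. Write $CL''=CL_0\cup\{C\}$. Apply ($\mathbf{\hHContr}$) to $CL''\subs\neg A$ to get $CL_0\cup\{A\}\subs\neg C$. Then apply ($\mathbf{\hHTrans}$) with $\alpha$ again to get $CL''\subs\neg C\in\clnhH(\T)$. This NI is violated at $a$ at step $k-1$, since every concept in $CL''$ held there; the same works with $\exists R$ in place of $A$.

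This is the double-application pattern the paper uses in its Case~3.3, where ($\mathbf{\RhH_1}$), ($\mathbf{\hHContr}$), ($\mathbf{\RhH_1}$) turn $\exists R_2\subs\neg\exists R_2$ into $\exists R_1\subs\neg\exists R_1$. Your Case~3 also needs it when $\exists R_2$ occurs on both sides of the NI. The paper's own Cases~1 and~2 pass over the degenerate case just as you do. With this addition your argument is complete.
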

\begin{proof} 
\ \vspace*{1em}
    \begin{description}
        
        \item[(Left to right)] 
        Let us show that if $\canhH(\KB)$ is a model of $\KB$, then $\mathtt{DB}(\A)$ is a model of $(\clnhH(\T),\A)$.

       The proof is  exactly as the proof of Lemma 12 in~\cite{Calvanese07}. The only differences are that, when they refer to Lemma 10 in~\cite{Calvanese07}, we have to refer here to Lemma~\ref{lemma_lemma10calv07}, and we have to consider that the NIs are now of the form
       \[
       \{B_1,\ldots, B_n\}\subs \neg B_{n+1} \ ,
       \]
        \nd Notice that $A^{\mathtt{DB}(\A)} = A^{\canhH_0(\KB)} \subseteq \canhH(\KB)$ for every concept name  $A$ occurring in $\KB$, and $P^{\mathtt{DB}(\A)} = P^{\canhH_0(\KB)} \subseteq \canhH(\KB)$ for every  role name $P$ occurring in $\KB$. Now, by considering that the structure of a NI is such it cannot be contradicted by restricting the extension of concept names and role names, we can conclude that $\mathtt{DB}(\A)$ is a model of $(\clnhH(\T),\A)$.
        

        \item[(Right to left)] Let us show that if $\mathtt{DB}(\A)$ is a model of $(\clnhH(\T),\A)$, then $\canhH(\KB)$ is a model of $\KB$.

        From Lemma \ref{lemma_lemma7calv07} we know that $\canhH(\KB)$ is a model of $(\T_{PI},\A)$, and consequently it remains to prove that $\canhH(\KB)$ is also a model of $\T\setminus\T_{PI}=\T_{NI}$. We prove this by proving that 
        $\canhH(\KB)$ is a model of  $(\clnhH(\T),\A)$ (notice that $\T_{NI} \subseteq \clnhH(\T)$).
        
        We prove by induction on the construction of $\chasehH(\KB)$ that $\canhH(\KB)$ is a model of $(\clnhH(\T),\A)$.

        \begin{description}
            \item[Base step] $\chasehH_{0}(\KB)=\A$. By assumption $\mathtt{DB}(\A)\models (\clnhH(\T),\A)$, therefore  $\canhH_{0}(\KB)$ is a model of $(\clnhH(\T),\A)$.

            \item[Inductive step] we proceed by contradiction. Let us assume that $\canhH_{i}(\KB)$ is a model of $(\clnhH(\T),\A)$, while $\canhH_{i+1}(\KB)$ is not. 
            
            We consider all the possible ways in which $\chasehH_{i+1}(\KB)$ can be obtained from $\chasehH_{i}(\KB)$ by applying rules $(\mathbf{\ChH_i})$.



\begin{description}
    \item[Case 1] $\chasehH_{i+1}(\KB)$ has been obtained from $\chasehH_{i}(\KB)$ using rule $(\mathbf{\ChH_1})$ applied to $\{B_1, \ldots, B_{n}\} \subs A \in \T$ and individual $a$.
                In this case, we have that all $\cass{a}{B_i}$ are satisfied by $\canhH_{i}(\KB)$ ($1 \leq i \leq n$) and 
                $\cass{a}{A}$ does not occur in $\chasehH_{i}(\KB)$. Therefore, $\cass{a}{A}$ does occur in $\chasehH_{i+1}(\KB)$, and for $\canhH_{i+1}(\KB)$ not to be a model of $(\clnhH(\T),\A)$, while $\canhH_{i}(\KB)$ is, it must be the case that there is a NI $\{B'_1,\ldots,B'_m\}\subs \neg A \in \clnhH(\T)$ s.t.~all $\cass{a}{B'_k}$ are satisfied by $\canhH_{i}(\KB)$ ($1\leq k\leq m$). 
                Let $CL =  \{B'_1,\ldots,B'_m\} = CL'\cup\{B'_j\}$ for some $j$, $1\leq j\leq m$. By applying rule  ($\mathbf{\hHContr}$), we have also $CL'\cup\{A\}\subs \neg B'_j\in \clnhH(\T)$. Now, using rule ($\mathbf{\hHTrans}$) and $\{B_1, \ldots, B_{n}\} \subs A$ we get $CL'\cup\{B_1, \ldots, B_{n}\}\subs \neg B'_j\in \clnhH(\T)$. But this is incompatible with the assumption that  all $\cass{a}{B_i}$ and all $\cass{a}{B'_j}$ are satisfied by  $\canhH_i(\KB)$ and that at the same time $\canhH_i(\KB)$ is a model of $(\clnhH(\T),\A)$.

\item[Case 2] $\chasehH_{i+1}(\KB)$ has been obtained from $\chasehH_{i}(\KB)$ using rule $(\mathbf{\ChH_2})$, applied to some inclusion $\{B_1, \ldots, B_{n}\} \subs \csome R\in \T$, where all $\cass{a}{B_k}$ are satisfied by $\canhH_i(\KB)$ ($1\leq k\leq n$). By definition, $\chasehH_{i+1}(\KB)$ is obtained from $\chasehH_{i}(\KB)$ by adding $ra(R,a,b)$ to $\chasehH_i(\KB)$.
For $\canhH_{i+1}(\KB)$ not to be a model of $(\clnhH(\T),\A)$, it must be the case that there is $\{B'_1,\ldots,B'_m\}\subs \neg \csome R\in\clnhH(\T)$, where all $\cass{a}{B'_k}$ are satisfied by $\canhH_{i}(\KB)$ ($1\leq k\leq m$). From this point we proceed analogously as the previous case.

\item[Case 3] $\chasehH_{i+1}(\KB)$ has been obtained from $\chasehH_{i}(\KB)$ using rule $(\mathbf{\ChH_3})$, applied to some inclusion $R_1\subs R_2$, where $\rass{a}{b}{R_1}$ is satisfied by $\canhH_i(\KB)$, and by definition $\chasehH_{i+1}(\KB)$ is obtained from $\chasehH_{i}(\KB)$ by adding $ra(R_2,a,b)$ to $\canhH_i(\KB)$. W.l.o.g.~let's assume $R_2$ is a role name and $ra(R_2,a,b)=\rass{a}{b}{R_2}$. There are four possible ways in which adding $(a,b)$ to the interpretations of a relation $R_2$ would affect the satisfaction of  $\clnhH(\T)$:

\begin{description}
    \item[Case 3.1] there is a NI $\{B'_1,\ldots,B'_m\}\subs  \neg \exists R_2 \in\clnhH(\T)$ where all $\cass{a}{B_k}$ are satisfied by $\canhH_i(\KB)$ ($1\leq j\leq m$).
    \item[Case 3.2] there is a NI $\{B'_1,\ldots,B'_m\}\subs  \neg \exists R_2^- \in\clnhH(\T)$ where all $\cass{b}{B_k}$ are satisfied by $\canhH_i(\KB)$ ($1\leq j\leq m$).
    \item[Case 3.3] there is  NI $\exists R_2\subs  \neg \exists R_2 \in\clnhH(\T)$.
    \item[Case 3.4] there is  NI $\exists R_2^-\subs  \neg \exists R_2^- \in\clnhH(\T)$.
\end{description}

For Case 3.1, by rule ($\mathbf{\mathbf{\hHContr}}$) we have $\{B'_1,\ldots,B'_{m-1},\exists R_2\}\subs  \neg B'_m \in\clnhH(\T)$, and, by rule 
 ($\mathbf{\mathbf{\RhH_1}}$), $\clnhH(\T)$ contains also $\{B'_1,\ldots,B'_{m-1},\exists R_1\}\subs  \neg B'_m $. But the NI $\{B'_1,\ldots,B'_{m-1},\exists R_1\}\subs  \neg B'_m$ would be falsified by $\canhH_i(\KB)$, against our hypothesis that $\canhH_i(\KB)$ is a model of $\clnhH(\T)$.

We proceed analogously for Case 3.2, referring to rule 
 ($\mathbf{\mathbf{\RhH_2}}$) instead of rule 
 ($\mathbf{\mathbf{\RhH_1}}$).

 For Case 3.3, $\exists R_2\subs  \neg \exists R_2 \in\clnhH(\T)$ implies by the application of the rules ($\mathbf{\mathbf{\RhH_1}}$), ($\mathbf{\mathbf{\hHContr}}$), and ($\mathbf{\mathbf{\RhH_1}}$) that $\exists R_1\subs  \neg \exists R_1 \in\clnhH(\T)$, against the fact that $\rass{a}{b}{R_1}$ is satisfied by $\canhH_i(\KB)$.

We proceed analogously for Case 3.4, just using ($\mathbf{\mathbf{\RhH_2}}$) instead of ($\mathbf{\mathbf{\RhH_1}}$).

 \end{description}   
  \end{description}  
    \end{description}
\nd This concludes the proof.
\end{proof}


\begin{appcorollary}[\dllitehornH~analogue of Corollary 13 in~\cite{Calvanese07}]\label{corollary_corollary13calv07}
    Let $\T$ be a \dllitehornH \\ TBox, and $\alpha$ be a negative inclusion. If $\T\entails \alpha$, then $\clnhH(\T)\entails \alpha$.
\end{appcorollary}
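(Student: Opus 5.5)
We argue by contraposition, following the proof of Corollary 13 in~\cite{Calvanese07} and using Lemma~\ref{lemma_lemma12calv07}. Let $\alpha = \{B_1,\ldots,B_n\}\subs\neg B_{n+1}$ and suppose $\clnhH(\T)\not\entails\alpha$. We will build a model of $\T$ violating $\alpha$, which gives $\T\not\entails\alpha$.

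Introduce a fresh individual $a$ and an ABox $\A_\alpha$ that makes every $B_k$ ($1\leq k\leq n+1$) hold at $a$ as economically as possible. For $B_k=A$ we put $\cass{a}{A}$ in $\A_\alpha$. For $B_k=\exists P$ we put $\rass{a}{b_k}{P}$, and for $B_k=\exists P^-$ we put $\rass{b_k}{a}{P}$, where each $b_k$ is a fresh individual. Set $\KB=\tuple{\T,\A_\alpha}$. By construction $\mathtt{DB}(\A_\alpha)$ falsifies $\alpha$, since every $\cass{a}{B_k}$ occurs in $\A_\alpha$.

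If $\mathtt{DB}(\A_\alpha)$ is a model of $\clnhH(\T)$, then by Lemma~\ref{lemma_lemma12calv07} $\canhH(\KB)$ is a model of $\KB$. In particular it is a model of $\T$. It contains $\A_\alpha$, so all $B_k$ hold at $a$ and $\canhH(\KB)\not\sat\alpha$. Hence $\T\not\entails\alpha$, as required.

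The main obstacle is therefore to show that $\mathtt{DB}(\A_\alpha)$ satisfies $\clnhH(\T)$ whenever $\clnhH(\T)\not\entails\alpha$. A NI $CL\subs\neg B$ in $\clnhH(\T)$ can be violated in $\mathtt{DB}(\A_\alpha)$ only at $a$ or at some $b_k$.

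At $a$, the true basic concepts are exactly $\{B_1,\ldots,B_{n+1}\}$. A violation therefore means there is some $CL\cup\{B\}\subseteq\{B_1,\ldots,B_{n+1}\}$ with $CL\subs\neg B\in\clnhH(\T)$. By rule ($\mathbf{\hHContr}$) we may rearrange this NI so that, after weakening its LHS, it entails $\alpha$ itself. This contradicts $\clnhH(\T)\not\entails\alpha$.

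At a fresh $b_k$, the only true basic concept is $\exists P^-$ (respectively $\exists P$). A violation there forces $\exists P^-\subs\neg\exists P^-\in\clnhH(\T)$. Rule ($\mathbf{\RhH_3}$), which the closure applies in both directions via the inverse, then gives $\exists P\subs\neg\exists P$, i.e.\ $B_k\subs\neg B_k\in\clnhH(\T)$. That NI entails $\alpha$, again a contradiction.

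The delicate point is the bookkeeping for these entailments, which mirrors the case analysis of Proposition~\ref{entailmenthornH}. We must also check that the closure rules are symmetric enough to move any clashing concept to the RHS.
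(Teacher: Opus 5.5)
Your proof is correct and follows the paper's argument. You build an ABox that witnesses all $B_k$ at a fresh $a$, show that $\mathtt{DB}(\A_\alpha)$ satisfies $\clnhH(\T)$ because any violated NI would have all its concepts among $B_1,\ldots,B_{n+1}$ and so would entail $\alpha$, and conclude via Lemma~\ref{lemma_lemma12calv07}. Your explicit handling of the fresh $b_k$ via ($\mathbf{\RhH_3}$) covers a case the paper passes over silently. The appeal to ($\mathbf{\hHContr}$) at $a$ is unnecessary, since $CL\cup\{B\}\subseteq\{B_1,\ldots,B_{n+1}\}$ already makes $CL\subs\neg B$ semantically entail $\alpha$.
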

\begin{proof}
    We prove the claim by contradiction: assume that $\T\entails \alpha$ but 
    $\clnhH(\T)\not\entails\alpha$. Now we show that from $\clnhH(\T)\not\entails\alpha$ one can construct a model of $\T$ that does not satisfy $\alpha$, thus obtaining a contradiction.

    So, assume $\clnhH(\T)\not\entails\alpha$. 
    Let us assume that $\alpha$ has the form  
    \[
    \{B_1,\ldots,B_n\}\subs \neg B \ ,
    \]
\nd where $B$ is a concept name $A$ (the other cases of $B$ being of the form $\exists P$  or $\exists P^-$ can be proven similarly). Now, consider $\KB=(\T,\A)$, where $\A$ contains $\cass{a}{A}$ and, for all $B_j$ ($1\leq j\leq n$):

    \begin{itemize}
        \item $\cass{a}{A_j}$, if $B_j=A_j$ for some concept name $A_j$;
        \item $\rass{a}{b}{P_j}$ for some new individual $b$, if $B_j=\exists P_j$ for some role name $P_j$;
        \item $\rass{b}{a}{P_j}$ for some new individual $b$, if $B_j=\exists P_j^-$ for some role name $P_j$.
    \end{itemize}

\nd We now show that $\canhH(\KB)$ is the model we are looking for, that is, $\canhH(\KB)  \models \T$, but 
$\canhH(\KB) \not\models \alpha$. Obviously, by construction of $\A$, $\canhH(\KB) \not\models \alpha$. Hence, in the following we concentrate on proving that $\canhH(\KB)  \models \T$.

By construction of $\mathtt{DB}(\A)$, the only NIs that can be violated by $\mathtt{DB}(\A)$ have the form
$CL \subs \neg B'$, with 
$CL\cup\{B'\} \subseteq  \{B_1,\ldots,B_n,A\}$.
But, by assumption, we have that $\clnhH(\T)\not\entails \{B_1,\ldots,B_n\}\subs \neg A$ and, thus, 
$\clnhH(\T)\not\entails CL \subs \neg B'$, \ie~ 
$CL \subs \neg B' \notin \clnhH(\T)$.
Therefore,
we can conclude that $\mathtt{DB}(\A)$ is a model of $\clnhH(\T)$ and, thus, a model of $(\clnhH(\T),\A)$.
Then, from Lemma~\ref{lemma_lemma12calv07} it follows that $\canhH(\KB)$ is a model of $\KB$, which concludes.
\end{proof}

\nd In order to prove Proposition \ref{Theo15calv07}, we need also the analogous of Lemma~14 in~\cite{Calvanese07}.

\begin{applemma}[\dllitehornH~analogue of Lemma 14, \cite{Calvanese07}] \label{Lemma14calv07}
     Let $\KB=(\T,\A)$ be a \dllitehornH \\ KB. Then $\KB$ is satisfiable iff $\canhH(\KB)$ is a model of $\KB$. 
\end{applemma}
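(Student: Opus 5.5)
The right-to-left direction is immediate: if $\canhH(\KB)$ is a model of $\KB$, then $\KB$ has a model and is satisfiable. All the work is in the left-to-right direction. I would prove its contrapositive: if $\canhH(\KB)$ is not a model of $\KB$, then $\KB$ is unsatisfiable. This follows the structure of Lemma~14 in~\cite{Calvanese07}, adapted to Horn NIs with conjunctive left-hand sides.

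\emph{Step 1: locate the violated axiom.} By Lemma~\ref{lemma_lemma7calv07}, $\canhH(\KB)$ satisfies $(\T_{PI},\A)$. So if it fails to be a model of $\KB$, some NI $\{B_1,\ldots,B_n\}\subs\neg B_{n+1}\in\T_{NI}$ is violated. Hence there is an individual $c$ (possibly anonymous) such that every $\cass{c}{B_k}$, $1\le k\le n+1$, occurs in $\chasehH(\KB)$. Because the chase is a union of finite stages, all these facts already occur in some finite $\chasehH_i(\KB)$. The step-$i$ chase can then be read as a finite derivation from $\A$ that uses only the positive inclusions.

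\emph{Step 2: show $\mathtt{DB}(\A)$ is not a model of $(\clnhH(\T),\A)$.} Equivalently, I show the contrapositive of the right-to-left direction of Lemma~\ref{lemma_lemma12calv07}: if $\mathtt{DB}(\A)\sat\clnhH(\T)$, then every $\canhH_i(\KB)$ satisfies $\clnhH(\T)$. This is the induction already carried out in the proof of that lemma, using the rules ($\mathbf{\hHContr}$), ($\mathbf{\hHTrans}$) and ($\mathbf{\RhH_i}$). Applying it, the violation found in Step 1 means some NI $CL\subs\neg B'\in\clnhH(\T)$ is violated already in $\mathtt{DB}(\A)$. That is, there is a named individual $a$ with all $\cass{a}{B}$, for $B\in CL\cup\{B'\}$, occurring in $\A$.

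\emph{Step 3: conclude unsatisfiability.} By Lemma~\ref{lemma_lemma10calv07}, $\T\entails CL\subs\neg B'$. Now take any model $\I$ of $\KB$. It satisfies $\A$, so $a^\I\in B^\I$ for every $B\in CL\cup\{B'\}$, and this contradicts $\I\sat CL\subs\neg B'$. Hence $\KB$ has no model.

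The main obstacle is Step 2, in the case where the violating individual $c$ is anonymous. The induction in Lemma~\ref{lemma_lemma12calv07} must be shown to propagate the violation back to a named individual. Concretely, when a fresh $b$ is created by $(\mathbf{\ChH_2})$, its only facts come from $\exists R^-$ and its consequences. A violation at $b$ therefore yields an NI on $\exists R^-$, and via ($\mathbf{\RhH_3}$) and ($\mathbf{\hHTrans}$) this transfers to an NI on the creating individual. I would double-check this case carefully, since rule ($\mathbf{\RhH_3}$) only handles self-disjointness. If a gap appears there, I would instead argue directly that $\canhH(\KB)$ is homomorphically embeddable into any model of $(\T_{PI},\A)$. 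Since NIs are preserved backwards under homomorphisms, any model of $\KB$ would then force $\canhH(\KB)$ to satisfy $\T_{NI}$.
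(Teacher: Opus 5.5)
Your proof is correct and is essentially the paper's argument. The contrapositive of the right-to-left direction of Lemma~\ref{lemma_lemma12calv07} gives an NI of $\clnhH(\T)$ violated by $\mathtt{DB}(\A)$ at a named individual, and since $\T\entails$ that NI (Lemma~\ref{lemma_lemma10calv07}, which the paper invokes only implicitly), no model of $\KB$ can exist; your concern about anonymous individuals bears on the proof of Lemma~\ref{lemma_lemma12calv07} rather than this lemma, and your repair closes that sub-case once ($\mathbf{\hHContr}$) is added to ($\mathbf{\RhH_3}$) and ($\mathbf{\hHTrans}$), as does your homomorphism fallback.
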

\begin{proof}
    The proof follows the one of~\cite[Lemma 14]{Calvanese07}. 
    \begin{description}
        \item[(Right to left)]  If $\canhH(\KB)$ is a model of $\KB$ then obviously $\KB$ is satisfiable.

        \item[(Left to right)]  It suffices to prove that if $\canhH(\KB)$ is a not a model of $\KB$ then  $\KB$ is not satisfiable. By Lemma~\ref{lemma_lemma12calv07}, it follows that $\mathtt{DB}(\A)$ is not a model of $(\clnhH(\T),\A)$. That is, $\mathtt{DB}(\A)$ is not a model of $\clnhH(\T)$ (as $\mathtt{DB}(\A)$ is a model of $\A$). Therefore, there exists a NI $\alpha \in \clnhH(\T)$ such that $\mathtt{DB}(\A) \not \sat \alpha$.
        Let us assume that $\alpha$ has the form  
        \[
        \{B_1,\ldots,B_n\}\subs \neg B \ ,
        \]
        \nd where $B$ is a concept name $A$ (the other cases of $B$ being of the form $\exists P$  or $\exists P^-$ can be proven similarly). Then there is $a \in \Delta^{\mathtt{DB}(\A)}$ such that $a \in B_i^{\mathtt{DB}(\A)}$ ($1\leq i \leq n$) and $a \in A^{\mathtt{DB}(\A)}$. That is, all $\cass{a}{B_i}$ and $\cass{a}{A}$ occur in $\A$.    
        Assume now to the contrary that a model $\I$ of $\KB$ exists. Then, we may construct a homomorphism $\mu$ from $\Delta^{\mathtt{DB}(\A)}$ to $\Delta^\I$ such that $\mu(b) = b^\I$ for each individual $b$ occurring in $\A$. note that $\I$ assigns a distinct object to each such individual $b$ as $\I$ is a model of $\A$. Moreover, as  $\I$ is a model of $\A$, $\mu(a) \in B_i^\I$ ($1\leq i \leq n$) and $\mu(a) \in A^{\mathtt{DB}(\A)}$ holds. But then, 
        $\I$ does not satisfy $\alpha$, contrary to the assumption that $\I$ is a model of $\KB$. Therefore, $\KB$ is not satisfiable.
    \end{description}
\end{proof}

\nd Given the propositions above, Proposition \ref{Theo15calv07} follows immediately by combining Lemma~\ref{lemma_lemma12calv07} and Lemma~\ref{Lemma14calv07}.



\setcounter{proposition}{\getrefnumber{entailmenthornH}-1}
\begin{proposition}[\dllitehornH~analogue of Proposition~\ref{entailmentcor}] 
  Consider a \dllitehornH~TBox $\T$. Then $\T \models CL \impc \notc B$ iff one of the following holds:
  \begin{enumerate}
        \item $CL' \impc \notc B \in \clnhH(\T)$ with $CL' \subseteq CL$;

    \item $CL' \impc \neg B' \in \clnhH(\T)$ with $CL'\cup\{B'\} \subseteq CL$;
  
     \item $B \impc \notc B \in \clnhH(\T)$.

  \end{enumerate}
\end{proposition}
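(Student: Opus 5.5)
The plan mirrors the proof of Proposition~\ref{entailmentcor}, replacing the core chase and closure by their Horn analogues, i.e.\ Lemma~\ref{lemma_lemma12calv07} and Corollary~\ref{corollary_corollary13calv07}.

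\emph{Soundness.} Every rule defining $\clnhH(\T)$ is sound, so each NI in $\clnhH(\T)$ is entailed by $\T$ (Lemma~\ref{lemma_lemma10calv07}). Each of the three conditions then yields $\T\entails CL\subs\neg B$:
\begin{itemize}
\item in case 1, by monotonicity of conjunction on the LHS, since $CL'\subseteq CL$;
\item in case 2, because any object of $CL^\I$ lies in $CL'^\I\cap B'^\I$, which is empty, so $CL^\I=\emptyset$;
\item in case 3, because $B^\I=\emptyset$.
\end{itemize}

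\emph{Completeness, by contraposition.} Assume none of 1--3 holds; we build a model of $\T$ falsifying $CL\subs\neg B$. Let $CL=\{B_1,\ldots,B_n\}$. Build the ABox $\A$ exactly as in the proof of Corollary~\ref{corollary_corollary13calv07}:
\begin{itemize}
\item take one individual $a$ with $\cass{a}{B}$ and $\cass{a}{B_j}$ for each $j$;
\item realise each existential $\exists P$ (resp.\ $\exists P^-$) via a role assertion to a fresh individual.
\end{itemize}
Then $a\in(CL\dland B)^{\mathtt{DB}(\A)}$, and this persists in $\canhH(\KB)$ since the chase only adds facts. So $\canhH(\KB)\not\sat CL\subs\neg B$, and it remains to show $\canhH(\KB)\sat\T$. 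By Lemma~\ref{lemma_lemma12calv07}, it suffices that $\mathtt{DB}(\A)$ satisfies every NI of $\clnhH(\T)$.

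\emph{Checking the NIs on $\mathtt{DB}(\A)$.} An NI $CL''\subs\neg B''$ can only be violated at an element that has all concepts in $CL''\cup\{B''\}$. At $a$, these are the concepts in $CL\cup\{B\}$, so a violation at $a$ forces $CL''\cup\{B''\}\subseteq CL\cup\{B\}$. We split on where $B$ sits:
\begin{itemize}
\item $B''=B$ and $CL''\subseteq CL$: excluded by condition 1.
\item $B\notin CL''\cup\{B''\}$: then $CL''\cup\{B''\}\subseteq CL$, excluded by condition 2.
\item $B\in CL''$ with $B''\in CL$: apply ($\mathbf{\hHContr}$) to swap $B$ and $B''$, obtaining an NI of the first kind, excluded by condition 1.
\item $B''=B$ and $B\in CL''$: the NI has form $CL_0\cup\{B\}\subs\neg B$. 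If $CL_0\subseteq CL$, applying ($\mathbf{\hHContr}$) with $B_1=B_2=B$ gives nothing new. Such an NI is semantically equivalent to $CL_0\subs\neg B$, and I expect one can show, by induction on derivations in $\clnhH$, that whenever $CL_0\cup\{B\}\subs\neg B$ is derivable, either $CL_0\subs\neg B$ or some NI covered by conditions 2 or 3 is also derivable. The induction step goes through ($\mathbf{\hHTrans}$) and ($\mathbf{\RhH_i}$) with $B$ duplicated.
\end{itemize}

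Remaining, and simpler, are the fresh role-filler individuals. Such an individual $b$ satisfies only $\exists P^-$ (or $\exists P$), so a violated NI there has the form $\exists P^-\subs\neg\exists P^-$. Applying ($\mathbf{\RhH_3}$) turns it into $\exists P\subs\neg\exists P$, which is then violated at $a$ and is therefore covered by conditions 2 or 3.

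The main obstacle is the duplicated-$B$ case, where $B$ appears on both sides of an NI. This is where the derivation-level induction on $\clnhH$ must be carried out carefully, in particular ensuring that condition 3, $B\subs\neg B$, is reached when $CL_0=\emptyset$. If that induction is too delicate, the fallback is semantic. Since $\clnhH(\T)$ is complete for NIs by Corollary~\ref{corollary_corollary13calv07}, one first proves that $\T\entails CL\subs\neg B$ implies $\clnhH(\T)$ contains some NI $CL''\subs\neg B''$ with $CL''\cup\{B''\}\subseteq CL\cup\{B\}$. One then reads off conditions 1--3 directly, using ($\mathbf{\hHContr}$) to bring $B$ to the right-hand side whenever it occurs on the left.
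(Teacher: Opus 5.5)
Your soundness argument, the counter-model built from $\canhH(\KB)$ via Lemma~\ref{lemma_lemma12calv07}, and the first three cases of your split are correct and follow the paper's route. Your treatment of the fresh role-fillers via ($\mathbf{\RhH_3}$) is actually more careful than the paper's.

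The gap is the fourth case, which you leave as an ``expected'' induction on derivations. This is an NI $CL_0\cup\{B\}\subs\neg B\in\clnhH(\T)$ with $CL_0\subseteq CL$ and $B\notin CL$; when $B\in CL$ your first case already covers it. Neither the induction nor your semantic fallback is carried out. The fallback does not help anyway: it only yields some NI with $CL''\cup\{B''\}\subseteq CL\cup\{B\}$, which puts you back in this same case. Moreover, Corollary~\ref{corollary_corollary13calv07} gives only $\clnhH(\T)\entails\alpha$, not membership in $\clnhH(\T)$.

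The missing observation is that the LHS of an NI in $\clnhH$ is a set, so ($\mathbf{\hHContr}$) collapses duplicates. You rightly note that pivoting on $B$ itself does nothing. Pivot instead on some $C\in CL_0$, which is necessarily different from $B$, and write $CL_0=CL_1\cup\{C\}$.
\begin{itemize}
\item Applying ($\mathbf{\hHContr}$) to $(CL_1\cup\{B\})\cup\{C\}\subs\neg B$ gives $CL_1\cup\{B\}\cup\{B\}\subs\neg C$, that is, $CL_1\cup\{B\}\subs\neg C$.
\item Applying ($\mathbf{\hHContr}$) again, now swapping $B$ and $C$, gives $CL_1\cup\{C\}\subs\neg B$, that is, $CL_0\subs\neg B\in\clnhH(\T)$. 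This is condition 1.
\item If $CL_0=\emptyset$, the NI is literally $B\subs\neg B$, which is condition 3. There is nothing further to ``reach''.
\end{itemize}
This double application of contraposition is exactly how the paper disposes of the case (its case a.2). With it, your case analysis is complete.
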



\begin{proof}
It is not difficult to see that all NI-closure rules are \emph{correct}, that is, if 
$CL \subs \neg B \in \clnhH(\T)$ then $\T \models CL \subs \neg B$. As a consequence, if one of the points 1.~- 3. above holds then we have, respectively, $\T\entails CL' \impc \notc B$, $\T\entails CL'\cup\{B'\} \impc \neg B'$, or $\T\entails B \impc \notc B$, and  we can conclude 
$\T \models CL \impc \notc B$, since the latter is a consequence of any of the three options.

    For the other direction, we proceed by contradiction. Assume that $\T \models CL \impc \notc B$ and none of the cases 1.~- 3. above hold.     We show that in such a case  we can construct a model of $\T$ that does not satisfy $CL \impc \notc B$, thus obtaining a contradiction.

Let $CL=\{B_1,\ldots,B_n\}$ and $B=B_{n+1}$, that it, the NI $CL \impc \notc B$ will have the form $\{B_1,\ldots,B_n\}\subs \neg B_{n+1}$, where each basic concept $B_j$ will be either a concept name or an existential role.

Let $\KB=(\T,\A)$ be the KB defined by $\T$ and an ABox $\A$ representing a counterexample of $\{B_1,\ldots,B_n\}\subs \neg B_{n+1}$. That is, for an individual $a$, $\A$ contains, for every $B_j$ ($1\leq j\leq (n+1)$):

    \begin{itemize}
        \item $\cass{a}{A_j}$, if $B_j=A_j$ for some  concept name $A_j$;
        \item $\rass{a}{b}{P_j}$ for some new individual $b$, if $B_j=\exists P_j$ for some  role name $P_j$;
        \item $\rass{b}{a}{P_j}$ for some new individual $b$, if $B_j=\exists P_j^-$ for some role name $P_j$.
    \end{itemize}

\nd We show now that $\canhH(\KB)$ 
    is the model we are looking for. That is, $\canhH(\KB) \sat \T$ but $\canhH(\KB)\not \sat \{B_1,\ldots,B_n\}\subs \neg B_{n+1}$. The latter property follows immediately by the form of $\A$. Hence, it remains to prove that $\canhH(\KB) \sat \T$.

    By Lemma \ref{lemma_lemma12calv07}, it suffices to show that 
    $\mathtt{DB}(\A)$ is a model of $(\clnhH(\T),\A)$, 
    \ie~$\mathtt{DB}(\A)$ is a model of $\clnhH(\T)$, since clearly $\mathtt{DB}(\A) \sat \A$. 

   By the construction of the interpretation $\mathtt{DB}(\A)$, the only NIs in $\clnhH(\T)$ that may be violated 
    by $\mathtt{DB}(\A)$ are of the form $CL'\subs \neg B'$, where $CL'\cup\{B'\}\subseteq \{B_1,\ldots,B_{n+1}\}$. We have the following options:

    \begin{description}
        \item[a] $B'=B$. Then we would have three options:
        \begin{description}
            \item[a.1] $B\notin CL'$. But then, we  have case 1., against the hypothesis that none of the cases 1.- 3. hold.
            
            \item[a.2] $\{B\}\subset CL'$. That is, the violated NI in $\clnhH(\T)$ has the form $CL^*\cup\{B\}\subs \neg B$, where $CL^*\neq\emptyset$. In such a case, by a double application of ($\mathbf{\hHContr}$), we  have  $CL^*\subs \neg B \in \clnhH(\T)$. That is,  again we  have case 1., against the hypothesis that none of the cases 1.- 3. holds.
            
            \item[a.3] $CL'=\{B\}$. But then, we are in case 3., again against the hypothesis that none of the cases 1.- 3. hold.
        \end{description}
        
        \item[b] $B'\neq B$. Then we would have two options:
        \begin{description}
            \item[b.1] $B\notin CL'$. But then, we  have case 2., against the hypothesis that none of the cases 1.- 3. hold.
            
            \item[b.2] $B \in CL'$. That is, the violated NI in $\clnhH(\T)$ has the form $CL^*\cup\{B\}\subs \neg B'$, where $CL^*$ may be empty. In such a case, by an application of ($\mathbf{\hHContr}$), we  have  $CL^*\cup\{B'\}\subs \neg B \in \clnhH(\T)$. That is,  again we  have case 1., against the hypothesis that none of the cases 1.- 3. holds.
        \end{description}
    \end{description}
\nd Therefore, none of the above cases can hold and, thus, $\mathtt{DB}(\A) \models \clnhH(\T)$, which concludes.
\end{proof}

\section{Proofs of Section~\ref{defdllite}: Defeasible \dllite} \label{defdlliteproofs}

\subsubsection*{Section~\ref{sect_RCsemantics}: Rational Closure - Semantics} \label{sect_RCsemanticsproofs}
 \vspace*{1em}


\setcounter{proposition}{\getrefnumber{dsat}-1}
\begin{proposition} 
    Any defeasible \dllitehornH~KB~$\KB=\tuple{\T,\D}$ is satisfiable.
\end{proposition}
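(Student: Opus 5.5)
The plan is to exhibit a concrete ranked model, in the same spirit as the proof of Proposition~\ref{tsat}. I would take a single-element domain $\Delta^{\R}=\{o\}$ and interpret every concept name and every role name as the empty set. For the ranking I set $h_{\R}(o)=0$. With only one element placed in layer $0$, the convexity condition of Definition~\ref{Def:Ranked_order} holds trivially, so $\R=\tuple{\Dom^{\R},\cdot^{\R},\pref^{\R}}$ with $\pref^{\R}=\emptyset$ is a ranked interpretation in the sense of Definition~\ref{Def:Ranked_interpretation}. The domain is non-empty, and since $\KB=\tuple{\T,\D}$ has no ABox, no individuals need to be interpreted.

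Next I would check each kind of axiom in $\T\cup\D$ in turn.

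\begin{itemize}
\item \emph{Basic concepts.} Every basic concept $B$ has empty extension. For $B=A$ this holds by construction. For $B=\exists R$ it holds because $R^{\R}=\emptyset$, and hence $(\exists R)^{\R}=\emptyset$ by the semantics in (\ref{dllitsem}).
\item \emph{Concept inclusions.} The left-hand side $B_1^{\R}\cap\ldots\cap B_n^{\R}$ of any concept inclusion $\{B_1,\ldots,B_n\}\subs C$ in $\T$ is empty. The inclusion is therefore satisfied whatever $C$ is, whether $C$ is a basic concept or a negated one.
\item \emph{Role inclusions.} Every role inclusion $R_1\subs R_2$ is satisfied, since $R_1^{\R}=\emptyset$.
\item \emph{Defeasible inclusions.} For every DI $\{B_1,\ldots,B_n\}\dsubs A$ in $\D$ we have $\min_{\pref^{\R}}(B_1^{\R}\cap\ldots\cap B_n^{\R})=\min_{\pref^{\R}}(\emptyset)=\emptyset\subseteq A^{\R}$, so the DI is satisfied as well.
\end{itemize}

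There is one detail to keep in mind. By Remark~\ref{negdef}, any DI whose right-hand side is complex has already been replaced by an atomic-consequent DI together with a TBox inclusion. Both of these are covered by the checks above. The same applies to the auxiliary inclusions added by Remark~\ref{roleinc}.

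I do not expect a real obstacle. The only point needing care is that the ranked order is well defined, that is, that the convexity requirement is met. This is immediate because the only layer used is $L_0$.
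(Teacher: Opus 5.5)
Your proof is correct and is the same as the paper's, which also interprets every concept and role name as empty, so that every inclusion and every DI holds vacuously. One small correction: under the paper's definitions an interpretation must map every individual of the finite set $\N$ injectively (UNA), even when there is no ABox, so take a domain of $\max(1,|\N|)$ elements, all at height $0$, instead of a singleton; nothing else in your argument changes.
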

\begin{proof}
    As in the proof of Proposition \ref{tsat}, just consider an interpretation mapping concept names and role names to the empty set.
\end{proof}








\subsubsection*{Section~\ref{sect_RC_horn}: Reasoning Procedures for \dllitehornH~under RC} \label{sect_RC_hornproofs}
\vspace*{1em}

\setcounter{theorem}{\getrefnumber{prop_dllitehornRC}-1}
\begin{theorem}
Given a defeasible \dllitehornH~KB  $\KB=\tuple{\T,\D}$, a DI $\{B_1,\ldots,B_n\}\dsubs A$ is in the RC of $\KB$ (that is, $\KB\minent \{B_1,\ldots,B_n\}\dsubs A$) iff 
\[
\mathtt{RationalClosure.Horn}(\KB, \{B_1,\ldots,B_n\}\dsubs A) \text{ returns } \mathtt{true} \ .
\]
\end{theorem}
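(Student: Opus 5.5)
The plan is to reduce the theorem to the known correctness of the $\mathcal{ALC}$-based RC procedures of \cite{BritzEtAl2021,CasiniStraccia2010}, in the style of the $\ELbot$ proof in \cite{Casini19}. The bridge is the $\delta$-transformation. We show that $\delta$ behaves as the ``bottom layer'' of a ranked model, so that exceptionality checks performed with $\delta$ coincide with the semantic notion of Definition~\ref{Def:Exceptionality}. Everything then follows from the rank-based characterisation of RC in Definition~\ref{def_rankedentail} and Proposition~\ref{prop_minent}.

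The first and central step is a lemma. For any \dllitehornH~TBox $\T$ and finite set of DIs $\E$, a set $\{B_1,\ldots,B_n\}$ is exceptional w.r.t.\ $\tuple{\T,\E}$ iff $\T\cup\E^\delta\entails\{B_1,\ldots,B_n,\delta\}\subs\bot$.

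For $(\Leftarrow)$, suppose a ranked model $\R$ of $\tuple{\T,\E}$ puts some $x\in B_1^\R\cap\ldots\cap B_n^\R\cap L_0$. Set $\delta^\R:=L_0$. Every element of $L_0$ lying in the LHS of a DI is minimal for that LHS, so $\R$ satisfies $\E^\delta$. Then $x$ witnesses $\T\cup\E^\delta\not\entails\{B_1,\ldots,B_n,\delta\}\subs\bot$.

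For $(\Rightarrow)$, suppose a classical model $\I$ of $\T\cup\E^\delta$ has $x\in B_1^\I\cap\ldots\cap B_n^\I\cap\delta^\I$. We must build a ranked model of $\tuple{\T,\E}$ with a typical $B$-object. In $\mathcal{ALC}$ one would use $\neg\delta$ and disjunction; here I would use only Horn reasoning. Take $\I$ itself and set layer $0$ to $\delta^\I$ and layer $1$ to $\Delta^\I\setminus\delta^\I$. This fails for DIs whose LHS has no element in $\delta^\I$ but is nonempty in layer $1$, because layer-$1$ elements need not satisfy the consequents. To fix this, I would take the ranked union (Lemma~\ref{Lemma:ClosureDisjointUnionMod}) of $\I$, ranked so that layer $0$ is $\delta^\I$, with copies of the big ranked model $\OI$ of $\tuple{\T,\E}$ placed at layers $\geq1$. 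Each element of $\I$ outside $\delta^\I$ is then either pushed above the minimal elements of its DI-antecedents, or replaced by an object of $\OI$. The union is a model because the disjoint union model property holds for \dllitehornH~without ABox.

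\textbf{This $(\Rightarrow)$ direction is the main obstacle.} The first thing I would try is the simpler form of the construction: use the ranked union of $\I$ (rank $0$ on $\delta^\I$, rank $\infty$-like top layer elsewhere) with $\OI$ shifted up by one layer. Then I would check that every DI is satisfied, by observing that each minimal LHS element lies either in $\delta^\I$, where it satisfies the DI by $\E^\delta$, or in a copy of $\OI$, where it satisfies the DI by $\OI$.

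With the lemma in hand, the iteration in $\mathtt{ComputeRanking.Horn}$ computes exactly $\E_{i+1}=\{\alpha\in\E_i\mid\alpha$ exceptional w.r.t.\ $\tuple{\T^*,\E_i}\}$. This matches the definition of $\D^\rank_i$, so $\D_j=\D^\rank_j$. Adding $\{B_1,\ldots,B_n\}\subs\bot$ for the $\infty$-ranked DIs yields an equivalent KB, by the argument of \cite[Lemma 12]{BritzEtAl2021}. The outer \textbf{repeat} loop is needed because this addition may create new exceptionalities through roles, as in Example~\ref{ex_infty}; since $\D$ is finite, the loop reaches a fixpoint. By the same lemma, the \textbf{while} loop in $\mathtt{RationalClosure.Horn}$ computes $i=\rank_\KB(\{B_1,\ldots,B_n\})$. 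If $\D_\R=\emptyset$ the rank is $\infty$, or the antecedent is unsatisfiable and the answer reduces to the classical check $CL$ by Proposition~\ref{prop_classic_inclusion}.

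It remains to show that $\rank(\{B_1,\ldots,B_n,A\})<\rank(\{B_1,\ldots,B_n,\neg A\})$ iff $\T^*\cup\D_{\delta_i}\entails\{B_1,\ldots,B_n,\delta_i\}\subs A$. Equivalently, the rank-$i$ condition says that $\{B_1,\ldots,B_n,\neg A\}$ is exceptional w.r.t.\ $\tuple{\T^*,\D_\R}$. Since $\neg A$ cannot occur on the LHS of a \dllitehornH~inclusion, I would obtain this by rerunning the lemma's argument with the classical query $\{B_1,\ldots,B_n,\delta_i\}\subs A$ in place of $\subs\bot$. The same model constructions go through unchanged, as they never used the particular consequent.
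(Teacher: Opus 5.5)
Your central lemma is false as stated, so the gap is not only that the $(\Rightarrow)$ construction is hard. Take $\T=\{C'\subs\neg C,\ \exists P^-\subs A,\ N\subs\exists P\}$ and $\E=\{A\dsubs C,\ A\dsubs C',\ G\dsubs N\}$. In any ranked model of $\tuple{\T,\E}$, a nonempty $A$ would have minimal elements in $C\sqcap C'$, which $C'\subs\neg C$ forbids. So $A$ is empty, hence so are $P$, $N$ and finally $G$, and $G$ is exceptional (indeed of rank $\infty$). Yet $\T\cup\E^\delta\not\entails\{G,\delta\}\subs\bot$: take $x\in G\sqcap\delta\sqcap N$ with a $P$-successor $y$ in $A$ but in none of $\delta$, $C$, $C'$. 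Your ranked union fails exactly here. The element $y$ lies neither in $\delta^\I$ nor in a copy of $\OI$ (which has no $A$-elements), so it is a minimal $A$-element outside $C$. No choice of heights, and no replacement by $\OI$-objects, can repair this, since $G$ is empty in every ranked model. (There is also no ``$\infty$-like'' layer: heights are natural numbers and $\OI$ populates every height.) Consequently your claim that the inner loop of $\mathtt{ComputeRanking.Horn}$ computes the semantic sets $\D^\rank_i$ fails in the first round of the \textbf{repeat} loop. Here that round yields $\D_\infty=\{A\dsubs C, A\dsubs C'\}$ and leaves $G\dsubs N$ among the non-exceptional DIs; only the second round, with $A\subs\bot$ in $\T^*$, catches it. Your own remark about the outer loop is a symptom of this. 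If the lemma held for every $\E$, the first round would already isolate all of $\D^\rank_\infty$. Since moving it into $\T^*$ preserves the ranked models, a second round could then find nothing new.

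What is missing is an argument that treats the rounds separately.

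(a) Your $(\Rightarrow)$ construction is correct when every DI antecedent in $\E$ is realised in some ranked model of $\tuple{\T,\E}$. In that case, place the rest of $\I$ at a finite height above every height at which the shifted $\OI$ realises an antecedent.

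(b) In the non-final rounds only your $(\Leftarrow)$ direction (soundness) is needed, plus one fact: at a fixpoint $\D_\infty$, every antecedent is empty in every ranked model of the current $\tuple{\T^*,\D^*}$. Otherwise, let $h$ be the least height at which some $\D_\infty$-antecedent is populated and set $\delta:=L_h$; this contradicts the fixpoint entailments. So moving these DIs into $\T^*$ is an equivalence.

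(c) For the last round, where the inner loop ends in $\emptyset$, you must prove that each $\tuple{\T^*,\E_j}$ realises all its antecedents, e.g.\ by downward induction on $j$. For each $\alpha\in\E_j\setminus\E_{j+1}$, take a classical model $\I_\alpha$ of $\T^*\cup\E_j^\delta$ with an element in $B_\alpha\sqcap\delta$. Put all the $\delta^{\I_\alpha}$ at layer $0$, the model for $\E_{j+1}$ shifted up by one above them, and the rest of the $\I_\alpha$ on top. The antecedents $B_\beta$ of $\E_{j+1}$ have no elements in any $\delta^{\I_\alpha}$ precisely because $\T^*\cup\E_j^\delta\entails\{B_\beta,\delta\}\subs\bot$.

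Using these models instead of $\OI$, your rank computation and the final test with $\subs A$ in place of $\subs\bot$ go through. The paper sidesteps all of this. It proves only the purely classical equivalence between the $\delta$-test and the material-implication test of \cite{BritzEtAl2021}, which does hold for arbitrary $\E$. It then inherits the handling of infinite ranks (the \textbf{repeat} loop) from the correctness of their $\mathcal{ALC}$ procedure.
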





\nd We prove Theorem \ref{prop_dllitehornRC} by referring to the results proved for $\mathcal{ALC}$ in \cite{BritzEtAl2021}. In \cite{BritzEtAl2021}, it is proved that the procedure $\mathtt{RationalClosure}(\KB,\alpha)$ (see below) is correct and complete to decide the subsumption problem for DIs \wrt~RC,  characterised semantically using the \emph{Big Ranked Model} as in Section \ref{sect_RCsemantics}. This completeness proof applies to all the languages that extend $\mathcal{ALC}$ and satisfy the \emph{disjoint union model property} (see Section \ref{sect_RCsemantics}), including $\mathcal{ALCHI}$, which extends $\mathcal{ALC}$ with inverted roles and role inclusions. 

Since \dllitehornH~(and, consequently, \dllitecoreH) is a sublanguage of $\mathcal{ALCHI}$, a \dllitehornH~KB $\KB=\tuple{\T,\D}$ is also an $\mathcal{ALCHI}$~KB. Thus the  $\mathtt{RationalClosure}(\KB,\alpha)$ procedure remains a correct and complete procedure for defeasible \dllitehornH~and \dllitecoreH. However, since the procedure $\mathtt{RationalClosure}(\KB,\alpha)$ was designed for $\mathcal{ALC}$ extensions, its computational cost is not optimised for \dllitehornH, as it employs logical operators, like negation and disjunction, which exceed \dllitehornH's expressivity. For this reason, we defined in Sections \ref{sect_RC_horn} and \ref{sect_RC_core} procedures that are tailored to the expressivity of \dllitehornH~and \dllitecoreH, respectively. Here we prove that 
$\mathtt{RationalClosure}(\KB,\alpha) = \mathtt{RationalClosure.Horn}(\KB,\alpha)$, when restricted to defeasible \dllitehornH.



A key step in the proof of Theorem \ref{prop_dllitehornRC} is the following lemma, for which we use the expressivity of the language $\mathcal{ALCHI}$ (see \cite{BaaderEtAl2007}). As can be seen in the procedure $\mathtt{Exceptional}$, line 3, for $\ALC$ the exceptionality check is done using a concept $\neg C\dlor D$ as a representative of a DI $C\dsubs D$, that is, the equivalent of the material implication corresponding to the subsumption relation.

\begin{applemma}\label{lemma_main}
Given a \dllitehornH~KB $\KB=\tuple{\T,\D}$  and a set of basic concepts  $\{B_1,\ldots,B_n\}$, the following two subsumption tests are equivalent:
{\scriptsize
\begin{eqnarray}
&& \T  \entails  \bigsqcap \{ \neg (B'_1\dland\ldots\dland B'_m) \dlOr A' \mid \{B'_1,\ldots,B'_m\}\} \usually A' \in \D\} \subs \neg (B_1\dland\ldots\dland B_n) \label{b1} \\
&& \T \cup \{\{B'_1,\ldots,B'_m, \delta_\D\} \subs A' \mid \{B'_1,\ldots,B'_m\} \usually A' \in\D\}  \entails   \{B_1,\ldots,B_n, \delta_\D\} \subs\bot 
 \label{b2}\end{eqnarray}
}
where $\delta_\D$ is a new atomic concept, the entailment (\ref{b1}) uses the full expressivity of $\mathcal{ALCHI}$, and the entailment (\ref{b2}) uses the expressivity of \dllitehornH.
\qed
\end{applemma}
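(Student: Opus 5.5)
The plan is to prove both directions by contraposition, transforming a countermodel of one entailment into a countermodel of the other. Write $\overline{\D}$ for the $\mathcal{ALCHI}$ concept $\bigsqcap\{\neg(B'_1\dland\ldots\dland B'_m)\dlOr A' \mid \{B'_1,\ldots,B'_m\}\dsubs A'\in\D\}$. Write $\D^{\delta_\D}$ for the set of \dllitehornH~inclusions $\{B'_1,\ldots,B'_m,\delta_\D\}\subs A'$. Both (\ref{b1}) and (\ref{b2}) are classical entailments, so only classical interpretations are involved. Since $\delta_\D$ does not occur in $\T$ or $\D$, interpreting it never interferes with $\T$.

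\emph{From a countermodel of (\ref{b2}) to a countermodel of (\ref{b1}).} Suppose $\I\sat\T\cup\D^{\delta_\D}$ and $x\in B_1^\I\cap\ldots\cap B_n^\I\cap\delta_\D^\I$. Every element $y\in\delta_\D^\I$ satisfies each material implication $\neg(B'_1\dland\ldots\dland B'_m)\dlOr A'$, because $\I$ satisfies $\{B'_1,\ldots,B'_m,\delta_\D\}\subs A'$. Hence $\delta_\D^\I\subseteq\overline{\D}^\I$. So $x\in\overline{\D}^\I\cap(B_1\dland\ldots\dland B_n)^\I$, and $\I$, being a model of $\T$, falsifies the inclusion in (\ref{b1}).

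\emph{From a countermodel of (\ref{b1}) to a countermodel of (\ref{b2}).} Suppose $\I\sat\T$ and $x\in\overline{\D}^\I\cap B_1^\I\cap\ldots\cap B_n^\I$. Define $\I'$ as $\I$ on the whole vocabulary of $\KB$, and set $\delta_\D^{\I'}:=\overline{\D}^\I$. Then $\I'\sat\T$ trivially. For each axiom $\{B'_1,\ldots,B'_m,\delta_\D\}\subs A'$, any $y$ in the left-hand side lies in $\overline{\D}^\I$ and in all the $B'_i$. Hence $y\in A'^\I$, so $\I'\sat\D^{\delta_\D}$. Finally $x\in B_1^{\I'}\cap\ldots\cap B_n^{\I'}\cap\delta_\D^{\I'}$, so $\I'$ falsifies $\{B_1,\ldots,B_n,\delta_\D\}\subs\bot$.

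The two constructions together give the equivalence. One point needs checking: whether the $A'$ on the right-hand side of DIs may be complex, as in Remark~\ref{negdef}. It is not, since the transformation there makes each $A'$ a concept name. The argument also works verbatim for any concept on the right-hand side.

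The main obstacle is only bookkeeping: making sure $\delta_\D$ is fresh, and that (\ref{b1}) is read as an entailment over arbitrary $\mathcal{ALCHI}$ models of $\T$ rather than ranked models. No \dllite-specific machinery, such as the chase or the NI-closure, is needed.
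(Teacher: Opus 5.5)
Your proof is correct and is essentially the paper's own argument. In one direction you note that every element of $\delta_\D^\I$ satisfies each material implication $\neg(B'_1\dland\ldots\dland B'_m)\dlOr A'$; in the other you extend a countermodel of the $\mathcal{ALCHI}$ entailment by interpreting $\delta_\D$ as the extension of the conjunction of these material implications, which is exactly what the paper does (it phrases both directions as proofs by contradiction rather than contraposition, which makes no difference).
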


\begin{proof}
We need to show that (\ref{b1}) and (\ref{b2}) are equivalent.

Assume (\ref{b1}) holds and assume to the contrary that (\ref{b2}) does not hold. Then there is a classical interpretation $\I_{\A}=\tuple{\Delta^{\I},\cdot^{\I}}$ such that
$\I \models \T$, $\I \models \{B'_1,\ldots,B'_m, \delta_\D\} \subs A'$ for all $\{B'_1,\ldots,B'_m\} \usually A' \in \D$, and $\I \not\models \{B_1,\ldots,B_n, \delta_\D\} \subs\bot$. 
Hence there must be some $o \in \Delta^\I$ s.t.  $o \in B_1^\I\cap\ldots\cap B_n^\I \cap \delta_\D^\I$. 

Since $\I \models \T$ and (\ref{b1}) holds, then 
{\small
\[
\I \models \bigsqcap \{ \neg (B'_1\dland\ldots\dland B'_m) \dlOr A' \mid \{B'_1,\ldots,B'_m\}\} \usually A' \in \D\} \subs \neg (B_1\dland\ldots\dland B_n) \ .
\]
}
\nd Now, since $\I \models \T$ and $\I \models \{B'_1,\ldots,B'_m, \delta_\D\} \subs A'$ for all $\{B'_1,\ldots,B'_m\} \usually A' \in \D$, it follows that $o\in (\neg (B'_1\dland\ldots\dland B'_m) \dlOr A')^\I$. That, by (\ref{b1}), implies $o\in (\neg (B_1\dland\ldots\dland B_n))^\I$, against our conclusion that $o \in B_1^\I\cap\ldots\cap B_n^\I \cap \delta_\D^\I$.
Hence (\ref{b1}) implies (\ref{b2}).

Conversely, assume that (\ref{b2}) holds, while (\ref{b1}) does not. Therefore, there is an interpretation $\I$ such that
$\I \models \T$ and, for some $o \in \Delta^\I$, 
\[
o\in (\bigsqcap\{ \neg (B'_1\dland\ldots\dland B'_m) \dlOr A' \mid \{B'_1,\ldots,B'_m\}\} \usually A' \in \D\})^\I
\]

\nd and $o\in   (B_1\dland\ldots\dland B_n)^\I$.
Now, we extend $\I$ to $\delta_\D$ by setting
\[
\delta^\I_\D =  (\bigsqcap_{\{B'_1,\ldots,B'_m\} \scriptsize{\usually} A' \in \D} ( \neg (B'_1\dland\ldots\dland B'_m) \dlOr A'  ))^\I \ .
\]
\nd Note that $o \in \delta^\I_\D$. Then, by construction, $\I \models \{B'_1,\ldots,B'_m,\delta_\D\} \subs A'$, for all $\{B'_1,\ldots,B'_m\}\} \usually A' \in \D$ holds. 
Therefore, $\I$ is a model of the antecedent in (\ref{b2}) with $o \in (\bigsqcap\{ \neg (B'_1\dland\ldots\dland B'_m) \dlOr A' \mid \{B'_1,\ldots,B'_m\}\} \usually A' \in \D\})^\I$ and $o \in   (B_1\dland\ldots\dland B_n\dland \delta_\D)^\I$, which cannot be the case as (\ref{b2}) holds by assumption.

Hence we conclude that (\ref{b2}) implies (\ref{b1}), and we have proved the equivalence of the two conditions.
\end{proof}

\nd  Using the results in \cite{BritzEtAl2021} and Lemma \ref{lemma_main}, it is quite straightforward to prove all we need. We do so by comparing the corresponding procedures.

 We start by comparing the procedure $\mathtt{Exceptional}(\T,\E)$ from \cite[p.17]{BritzEtAl2021} (that we report below for completeness) and $\mathtt{Exceptional.Horn}(\T,\E)$.

 \begin{procedure}[h]
\caption{Exceptional($\T,\E$)}\label{algexALC}
{
 \KwIn{A TBox $\T\text{ and a set of DIs } \E\subseteq{\D}$}
 \KwOut{$\E'\subseteq\E$ such that $\E'$ is a set  of exceptional axioms \wrt~$(\T,\E)$}
$\E':=\emptyset$\;
 \ForEach{$C \dsubs D \in\E$}{	
   	\If{$\T\entails\bigsqcap\{\neg C'\dlor D'\mid C' \dsubs D' \in\E\}\subs \neg C$}{$\mathcal{E'}$ := $\mathcal{E'}\cup\{C \dsubs D\}$\;}
}
\Return{$\mathcal{E'}$}
}
\end{procedure}

\nd In the light of Lemma~\ref{lemma_main}, the equivalence of the two procedures $\mathtt{Exceptional}$ and $\mathtt{Exceptional.Horn}$ 
\wrt~defeasible \dllitehornH~KBs is straightforward.

\begin{applemma}\label{prop_except}
    Let $\KB=\tuple{\T,\D}$ be any \dllitehornH~defeasible KB. Then 
    \[
    \mathtt{Exceptional}(\T,\D)=\mathtt{Exceptional.Horn}(\T,\D) \ .
    \]
\end{applemma}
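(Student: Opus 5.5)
The plan is to compare the two procedures line by line. They differ only in the test that decides whether a DI joins $\E'$: $\mathtt{Exceptional}$ checks line~3, $\T\entails\bigsqcap\{\neg C'\dlor D'\mid C'\dsubs D'\in\E\}\subs\neg C$, while $\mathtt{Exceptional.Horn}$ checks line~4, $\T\cup\E^\delta\entails\{B_1,\ldots,B_n,\delta\}\subs\bot$. Both start from $\E'=\emptyset$ and loop over the same set $\E=\D$. So it is enough to show that, for each DI $\{B_1,\ldots,B_n\}\dsubs A\in\D$, the two tests give the same answer. Equality of the returned sets then follows by induction over the iterations of the \texttt{foreach} loop, since the order of iteration plays no role.

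To match the tests, I would instantiate the $\mathcal{ALC}$ procedure on a \dllitehornH\ DI. We take $C=B_1\dland\ldots\dland B_n$ and $D=A$, and likewise $C'=B'_1\dland\ldots\dland B'_m$ and $D'=A'$ for every $\{B'_1,\ldots,B'_m\}\dsubs A'\in\D$. Line~3 then becomes exactly entailment (\ref{b1}) of Lemma~\ref{lemma_main}. The Horn check is exactly entailment (\ref{b2}), once we identify the fresh atom $\delta$ of line~2 with $\delta_\D$ and note that $\E^\delta$ is by definition $\{\{B'_1,\ldots,B'_m,\delta\}\subs A'\mid\{B'_1,\ldots,B'_m\}\dsubs A'\in\D\}$. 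Lemma~\ref{lemma_main} states that (\ref{b1}) and (\ref{b2}) are equivalent, so a DI is added to $\E'$ by one procedure iff it is added by the other.

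Two points need brief care, though neither is a real obstacle. First, under Remark~\ref{negdef} a DI with a complex right-hand side has already been replaced by one with an atomic right-hand side together with an IN in $\T$, so the RHS really is a concept name, as Lemma~\ref{lemma_main} requires. Second, $\delta$ must be the same single fresh name for all tests in one call, which is how line~2 defines it, and it must not occur in $\KB$, which matches the freshness of $\delta_\D$ in the lemma. With these checked, the statement is an immediate corollary of Lemma~\ref{lemma_main}.
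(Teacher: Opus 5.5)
Your proposal is correct and is essentially the paper's proof. The paper also notes that the two procedures differ only in the membership test (line~3 versus line~4), observes that for \dllitehornH\ DIs line~3 is exactly entailment~(\ref{b1}) and line~4 is~(\ref{b2}), and concludes by Lemma~\ref{lemma_main}; your remarks on atomic right-hand sides and on a single fresh $\delta$ are sound bookkeeping that the paper leaves implicit.
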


\begin{proof}
    The only difference between the two procedures is between the line 3 of \\ 
    $\mathtt{Exceptional}(\T,\E)$ and line 4 of $\mathtt{Exceptional.Horn}(\T,\E)$. Once we restrict to defeasible \dllitehornH KBs, line 3 in $\mathtt{Exceptional}(\T,\E)$ becomes
    \[
    \T  \entails  \bigsqcap \{ \neg (B'_1\dland\ldots\dland B'_m) \dlOr A' \mid \{B'_1,\ldots,B'_m\}\} \usually A' \in \D\} \subs \neg (B_1\dland\ldots\dland B_n) \ ,
    \]
\nd  and the equivalence with line 4 of $\mathtt{Exceptional.Horn}(\T,\E)$ is immediately guaranteed by Lemma \ref{lemma_main}.
\end{proof}

\nd In the same way, we prove the equivalence between procedure $\mathtt{ComputeRanking.Horn}$ and $\mathtt{ComputeRanking}$ from \cite[p.18]{BritzEtAl2021} (that we replicate for completeness below).

 \begin{procedure}[h]
\caption{ComputeRanking($\KB$)\label{algrankALC}}
{
\KwIn{Defeasible KB $\KB=\tuple{\T,\D}$}
\KwOut{Defeasible KB $\tuple{\T^*,\D^*}$, partitioning (ranking) $\RR =\{\D_0,\ldots,\D_n\}$ of $\D^*$}
$\T^*$:=$\T$\;
$\D^*$:=$\D$\;
$\RR$:=$\emptyset$\;
\Repeat {$\D_\infty=\emptyset$}
{$i$ := $0$\;
$\mathcal{E}_{0}$ := $\D^*$\;
$\mathcal{E}_{1}$ := $\mathtt{Exceptional}$($\T^*,\mathcal{E}_{0}$)\;
\While{$\E_{i+1}\neq\E_{i}$}{
$i$ := $i$ + 1\;
$\E_{i+1}$ := $\mathtt{Exceptional}$($\T^*,\E_{i}$)\;
}
$\D_\infty$ := $\E_{i}$\;
$\D^*$ := $\D^*\setminus\D_\infty$\;
$\T^*$ := $\T^*\cup\{C \subs D\mid C\dsubs D \in \D_\infty\}$\;}

\For{$j$ = $1$ to $i$}{
$\D_{j-1}$ := $\E_{j-1}\setminus\E_{j}$\;
$\RR$ := $\RR\cup\{\D_{j-1}\}$\;
}
\Return{$\tuple{\tuple{\T^*,\D^*},\RR}$}
}
\end{procedure}
\nd Again, to show the equivalence between these two procedures, given Lemmas~\ref{lemma_main} and~\ref{prop_except},  is straightforward.

\begin{applemma}\label{prop_computerank}
    Let $\KB=\tuple{\T,\D}$ be any \dllitehornH~defeasible KB. Then 
    \[
    \mathtt{ComputeRanking}(\KB)=\mathtt{ComputeRanking.Horn}(\KB) \ .
    \]
\end{applemma}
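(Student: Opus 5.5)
The plan is to compare the two procedures line by line and show that, run on the same input, they pass through identical states. Both have the same control structure. Each initialises $\T^*:=\T$, $\D^*:=\D$ and $\RR:=\emptyset$. Each runs a \texttt{repeat} loop whose body builds a sequence $\E_0,\E_1,\ldots$ by iterated calls to an exceptionality procedure, sets $\D_\infty:=\E_i$, removes $\D_\infty$ from $\D^*$ and adds strict axioms to $\T^*$. Finally, each builds the ranking as the differences $\E_{j-1}\setminus\E_j$. The only textual differences are two. First, $\mathtt{ComputeRanking.Horn}$ calls $\mathtt{Exceptional.Horn}$ where $\mathtt{ComputeRanking}$ calls $\mathtt{Exceptional}$. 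Second, $\mathtt{ComputeRanking.Horn}$ adds $\{B_1,\ldots,B_n\}\subs\bot$ for each $\{B_1,\ldots,B_n\}\dsubs A\in\D_\infty$, whereas $\mathtt{ComputeRanking}$ adds $\{B_1,\ldots,B_n\}\subs A$.

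I will argue by induction on the number of executed steps, with the pair of procedures run in lockstep. The invariant is that the current values of $\T^*$, $\D^*$, $\E_0,\ldots,\E_{i}$ and $\D_\infty$ agree in both runs, where $\T^*$ is compared up to classical equivalence (same models). For the inner \texttt{while} loop, the step from $\E_i$ to $\E_{i+1}$ is exactly one call $\mathtt{Exceptional}(\T^*,\E_i)$ versus $\mathtt{Exceptional.Horn}(\T^*,\E_i)$. By Lemma~\ref{prop_except}, applied to the \dllitehornH~KB $\tuple{\T^*,\E_i}$, the two calls return the same set. This application is legitimate because $\T^*$ stays within \dllitehornH, since $\{B_1,\ldots,B_n\}\subs\bot$ is expressible as in Remark~\ref{remexpdllitecore}. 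Lemma~\ref{prop_except} only depends on the models of $\T^*$, so equivalence of the TBoxes suffices. It follows that the loop guards $\E_{i+1}\neq\E_i$ coincide, the loops exit at the same $i$, and the two runs produce the same $\D_\infty$ and the same new $\D^*$.

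The main obstacle is the TBox update. I must show that, given the current $\T^*$ and $\D_\infty$, the TBox $\T^*\cup\{\{B_1,\ldots,B_n\}\subs A\mid\ldots\in\D_\infty\}$ is equivalent to $\T^*\cup\{\{B_1,\ldots,B_n\}\subs\bot\mid\ldots\in\D_\infty\}$. One direction is trivial, since $\subs\bot$ implies $\subs A$. For the other, I will use that every element of $\D_\infty=\E_i$ is exceptional with respect to $\tuple{\T^*,\E_i}$, because $\E_i$ is a fixpoint of the exceptionality procedure. By Lemma~\ref{lemma_main}, this means that for each $\{B_1,\ldots,B_n\}\dsubs A\in\E_i$,
\[
\T^*\entails\bigsqcap\{\neg(B'_1\dland\ldots\dland B'_m)\dlOr A'\mid \{B'_1,\ldots,B'_m\}\dsubs A'\in\E_i\}\subs\neg(B_1\dland\ldots\dland B_n).
\]
Now take any model $\I$ of $\T^*$ together with all material implications $\{B'_1,\ldots,B'_m\}\subs A'$ from $\E_i$. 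In $\I$ the conjunction on the left-hand side is the whole domain, so every conjunction $B_1\dland\ldots\dland B_n$ is empty in $\I$. Hence the two updated TBoxes have the same models, which is exactly the $\T^*:=\T^*\cup\{C\subs\bot\}$ form used in \cite{BritzEtAl2021} (their Lemma 12).

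With the invariant preserved, the \texttt{repeat} guards $\D_\infty=\emptyset$ coincide, so both runs terminate after the same number of rounds. The final $\D^*$ and $\RR$ are identical, and the final $\T^*$ are equivalent, which gives the claimed equality of outputs up to equivalence of the strict part.
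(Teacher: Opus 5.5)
Your proposal is correct and follows essentially the same route as the paper: replace each $\mathtt{Exceptional}$ call by $\mathtt{Exceptional.Horn}$ via Lemma~\ref{prop_except}, then show that the two line-13 updates of $\T^*$ are logically equivalent because $\D_\infty$ is a fixpoint of the exceptionality check. Your lockstep invariant, with $\T^*$ compared only up to equivalence, makes explicit something the paper's proof leaves implicit: the two procedures return the same $\D^*$ and ranking, but only equivalent (not syntactically identical) $\T^*$.
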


\begin{proof}
    By Lemma \ref{prop_except}, we can replace every call to $\mathtt{Exceptional}$ with a call to $\mathtt{Exceptional.Horn}$. 
    Therefore, it remains to prove the equivalence between the lines 13 in the two procedures. 
    
    That is, we need to prove that 
    \[
    \T^*\cup \{(B_1\dland\ldots\dland B_n)\subs A\mid (B_1\dland\ldots\dland B_n)\dsubs A \in \D_\infty\}
    \]
    \nd that is, in our \dllitehornH~notation, 
    \begin{equation} \label{subs1}
    \T^*\cup \{\{B_1,\ldots, B_n\}\subs A\mid \{B_1,\ldots, B_n\}  \dsubs A \in \D_\infty\}
    \end{equation}
     and 
     \begin{equation} \label{subs2}
     \T^*\cup \{\{B_1,\ldots, B_n\}\subs \bot\mid \{B_1,\ldots, B_n\}\dsubs A \in \D_\infty\}
     \end{equation}
     \nd are logically equivalent, \ie, they have the same set of models.


    Obviously, if $\I$ is a model of (\ref{subs2}),  then it must be also a model of (\ref{subs1}) as $\{B_1,\ldots, B_n\}^\I = \emptyset$.
    
    On the other hand, assume now that $\I=\tuple{\Delta^\I,\cdot^\I}$ is a model of (\ref{subs1}).
    As $\D_\infty$ is a fixed point of the procedure $\mathtt{Exceptional}$, for every $\{B_1,\ldots, B_n\}\dsubs A\in \D_\infty$, 
    \begin{equation} \label{subs3}
    \T^*\entails \bigsqcap\{\neg (B_1\dland\ldots\dland B_n)\dlor A\mid B\dsubs A \in \D_\infty\}\subs \neg (B_1\dland\ldots\dland B_n) \ .    
    \end{equation}
    
    \nd As is a model of (\ref{subs1}), 
    \begin{equation} \label{subs4}
    (\bigsqcap\{\neg  (B_1\dland\ldots\dland B_n)\dlor A\mid \{B_1,\ldots, B_n\}\dsubs A \in \D_\infty\})^\I=\Delta^\I
    \end{equation}
    \nd follows. From (\ref{subs3}) and  (\ref{subs4}) it follows that for all $\{B_1,\ldots, B_n\}\subs A\in\D^\infty$,
    \begin{equation} \label{subs5}
        (B_1\dland\ldots\dland B_n)^\I = \emptyset.
    \end{equation}

    \nd Therefore, $\I$ is a model of (\ref{subs2})


          In summary, also lines 13  of the  $\mathtt{ComputeRanking}(\KB)$ and $\mathtt{ComputeRanking.Horn}(\KB)$ procedures are equivalent and, thus, we can now conclude that, for any \dllitehornH~defeasible KB $\KB$, 
          $\mathtt{ComputeRanking}(\KB)=\mathtt{ComputeRanking.Horn}(\KB)$, which concludes the proof.
\end{proof}

\nd Eventually, we  prove the equivalence of the procedures $\mathtt{RationalClosure.Horn}$ and  $\mathtt{RationalClosure}$ from \cite[p.21]{BritzEtAl2021} (that we replicate for completeness below).

\begin{procedure}[h]
\caption{RationalClosure($\KB,\alpha$)} \label{RCalg}
{
\KwIn{Defeasible KB $\KB=\tuple{\T,\D}$ and DI $\alpha$ of the form  $B \dsubs A$}
\KwOut{$\mathtt{true}$ iff $C \dsubs D$ is in the RC of $\KB$}
  $\tuple{\tuple{\T^*,\D^*},\{\D_0,\ldots,\D_n\}}$ := $\mathtt{ComputeRanking}$($\KB$)\;
  $i$ :=  $0$; $\D_\R$ :=  $\D^*$\;
  \While{$\T^*\entails \bigsqcap\{\neg C'\dlor D'\mid C'\dsubs D'\in\D_\R\}\dland C\subs \bot$ {\bf and} $\D_\R\neq\emptyset$}{
    $\D_\R$ := $\D_\R\backslash\D_{i}$; $i$ := $i + 1$\;
  }
  \eIf{$\D_\R\neq\emptyset$}{\Return{$\T^*\entails 
  \bigsqcap\{\neg C'\dlor D'\mid C'\dsubs D'\in\D_\R\}\dland C\subs D$}}{\Return{$CL$}}
  }
\end{procedure}

\begin{applemma}\label{prop_RC}
    Let $\KB=\tuple{\T,\D}$ be any \dllitehornH~defeasible KB and $\alpha$ any \dllitehornH~DI. Then
    \[
    \mathtt{RationalClosure}(\KB, \alpha)=\mathtt{RationalClosure.Horn}(\KB, \alpha) \ .
    \]
\end{applemma}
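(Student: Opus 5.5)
The plan is to compare the two procedures line by line, exactly as in the proofs of Lemmas~\ref{prop_except} and~\ref{prop_computerank}, using Lemma~\ref{lemma_main} as the bridge between the $\mathcal{ALCHI}$ tests and the \dllitehornH\ tests. Line 3 of $\mathtt{RationalClosure}$ and line 4 of $\mathtt{RationalClosure.Horn}$ call $\mathtt{ComputeRanking}$ and $\mathtt{ComputeRanking.Horn}$, respectively. By Lemma~\ref{prop_computerank} these return the same $\tuple{\T^*,\D^*}$ and the same ranking $\{\D_0,\ldots,\D_n\}$, so both procedures work with identical data from that point on.

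First I handle the preliminary lines 1--7 of $\mathtt{RationalClosure.Horn}$, which have no counterpart in $\mathtt{RationalClosure}$. Suppose $\T\entails\{B_1,\ldots,B_n\}\subs A$ or $\T^*\entails\{B_1,\ldots,B_n\}\subs A$; since $\T\subseteq\T^*$, it suffices that $\T^*$ entails it. Then the final test of $\mathtt{RationalClosure}$ also succeeds, because conjoining extra concepts on the left preserves a classical subsumption. If the while loop instead empties $\D_\R$, the procedure returns the $CL$ branch. So these lines only return $\mathtt{true}$ early in cases where $\mathtt{RationalClosure}$ also answers $\mathtt{true}$. I would also make explicit that $CL$ in $\mathtt{RationalClosure}$ is read as the value of $\T^*\entails C\subs D$, i.e. the same value computed at line 5 of the Horn version.

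Next I show the two while loops iterate in lockstep. I argue by induction on $i$ that, at each iteration, both procedures have the same $\D_\R=\D^*\setminus(\D_0\cup\cdots\cup\D_{i-1})$. The loop conditions then coincide: $\T^*\entails\bigsqcap\{\neg C'\dlor D'\mid C'\dsubs D'\in\D_\R\}\dland(B_1\dland\cdots\dland B_n)\subs\bot$ holds iff $\T^*\cup\D_{\delta_i}\entails\{B_1,\ldots,B_n,\delta_i\}\subs\bot$. This is precisely Lemma~\ref{lemma_main} applied to the KB $\tuple{\T^*,\D_\R}$ with $\delta_\D=\delta_i$, after rewriting $X\dland C\subs\bot$ as $X\subs\neg C$. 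Consequently both loops exit with the same $i$ and the same $\D_\R$, so they agree on whether $\D_\R=\emptyset$. In the empty case both return $CL$, which I have already identified.

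The main obstacle is the final step. There I must show that $\T^*\entails X\dland(B_1\dland\cdots\dland B_n)\subs A$ iff $\T^*\cup\D_{\delta_i}\entails\{B_1,\ldots,B_n,\delta_i\}\subs A$, where $X=\bigsqcap\{\neg(B'_1\dland\cdots\dland B'_m)\dlor A'\mid\{B'_1,\ldots,B'_m\}\dsubs A'\in\D_\R\}$. Lemma~\ref{lemma_main} covers only the consequent $\bot$, so I would reprove it for an arbitrary atomic consequent $A$, reusing its two-directional argument.
\begin{itemize}
\item For the direction from the $\mathcal{ALCHI}$ test to the Horn test, take any model of $\T^*\cup\D_{\delta_i}$. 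Every element of $\delta_i$ lies in $X$, so an element of $B_1\cap\cdots\cap B_n\cap\delta_i$ lies in $X\dland B_1\dland\cdots\dland B_n$ and hence in $A$.
\item For the converse, take a model of $\T^*$ with a witness $o\in X\dland B_1\dland\cdots\dland B_n$ outside $A$, and set $\delta_i^\I:=X^\I$. This is a model of $\T^*\cup\D_{\delta_i}$ in which $o$ violates $\{B_1,\ldots,B_n,\delta_i\}\subs A$.
\end{itemize}
With this adaptation the return values coincide, which completes the equality of the two procedures.
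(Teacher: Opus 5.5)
Your proof is correct and follows the paper's plan. Both compare the two procedures line by line, use Lemma~\ref{prop_computerank} to obtain the same $\tuple{\T^*,\D^*}$ and ranking, and use Lemma~\ref{lemma_main} to keep the two while loops in lockstep. There are two differences, both in your favour.

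\textbf{Lines 1--3 and 5--7 of $\mathtt{RationalClosure.Horn}$.} The paper handles these extra lines by appealing to Proposition~\ref{prop_classic_inclusion}, which is a semantic route. From $\T^*\entails\{B_1,\ldots,B_n\}\subs A$ it gets $\KB\minent\{B_1,\ldots,B_n\}\subs A$, so the DI holds in the big ranked model. It then implicitly needs Theorem~\ref{th_britzcomplete} to conclude that $\mathtt{RationalClosure}$ returns $\mathtt{true}$. You argue directly instead:
\begin{itemize}
\item by monotonicity of classical entailment, since $\T\subseteq\T^*$ and adding conjuncts on the left preserves subsumption;
\item by reading the otherwise undefined $CL$ in $\mathtt{RationalClosure}$ explicitly as the value of $\T^*\entails C\subs D$.
\end{itemize}
This keeps the lemma purely procedural and avoids the detour through the semantics.

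\textbf{The final return test.} The paper cites Lemma~\ref{lemma_main} here as well, but that lemma only covers the consequent $\bot$. Your re-proof for an arbitrary consequent $A$ is what is actually needed. In the converse direction you set $\delta_i^\I:=X^\I$ on a countermodel of $\T^*$, which fills a step the paper leaves implicit.
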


\begin{proof}
    Given Lemmas~\ref{prop_computerank} and~\ref{lemma_main}, the proof is quite straightforward by comparing the two procedures. The only point that needs to be addressed is the status of lines 1-3 and 5-7 in $\mathtt{RationalClosure.Horn}$, which do not appear in $\mathtt{RationalClosure}$. Actually, such lines have been added  only for the economy of the computation, since they cover  limit cases in which the result depends only on the classical entailment relation. 
    
    In such a case it is sufficient to check $\T$ (line 1) or $\T^*$ (line 5) to take a decision. The correctness of this is guaranteed by Proposition \ref{prop_classic_inclusion}, which concludes the proof.
%
%
\end{proof}





\nd Now, we recall the following theorem  (rephrased for the present notation) proved in \cite{BritzEtAl2021}, and valid for all the DLs enjoying the Finite Model and the Disjunctive Union properties, including $\mathcal{ALCHI}$ and its sublanguages, like \dllitecoreH and \dllitehornH.

\begin{apptheorem}[\cite{BritzEtAl2021}, Theorem 5]\label{th_britzcomplete}
    Given a defeasible KB  $\KB=\tuple{\T,\D}$, a DI $\alpha$ is in the RC of $\KB$ (that is, $\KB\minent \alpha$) iff 
$\mathtt{RationalClosure(\KB,\alpha)}$ returns $\mathtt{true}$.
\end{apptheorem}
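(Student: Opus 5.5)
Since this theorem is recalled from \cite{BritzEtAl2021}, the plan is to reconstruct its argument for a logic with the disjoint union model property and the finite model property (here $\mathcal{ALCHI}$ and its sublanguages). The strategy is to connect three objects: the syntactic ranking computed by $\mathtt{ComputeRanking}$; the semantic rank $\rank_{\KB}$ of Definition~\ref{Def:Exceptionality}; and the layers of the big ranked model $\OI$ of Definition~\ref{Def:Bigrankedmodel}. By Proposition~\ref{prop_minent}, $\KB\minent\alpha$ iff $\OI\sat\alpha$, so it suffices to prove that $\mathtt{RationalClosure}(\KB,\alpha)$ returns $\mathtt{true}$ exactly when $\OI\sat\alpha$.

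First step: the exceptionality test is semantically correct. For a set $\E$ of DIs, I would prove that
\[
\T\entails\bigsqcap\{\neg C'\dlor D'\mid C'\dsubs D'\in\E\}\subs\neg C
\]
holds iff $C$ is exceptional w.r.t.\ $\tuple{\T,\E}$, i.e.\ $C$ meets layer $L_0$ in no ranked model.
\begin{itemize}
\item For one direction, take a ranked model $\R$ with $x\in C^\R\cap L_0$. Every element of $L_0$ is minimal in every concept containing it, so $x$ satisfies all materialisations $\neg C'\dlor D'$. Forgetting the order then gives a classical countermodel to the subsumption.
\item Conversely, a classical model $\I$ of $\T$ containing an element $x$ in $C$ and in all the materialisations can be turned into a ranked model with two layers. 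Put the domain elements satisfying all materialisations in $L_0$, and, if needed, take the disjoint union with a model of $\T$ to populate $L_1$ with the remaining elements. Then check that every DI holds on its minimal elements.
\end{itemize}

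Second step: by induction on $i$, the sets $\D_i$ returned by $\mathtt{ComputeRanking}$ coincide with the semantic sets $\D^{\rank}_i$, and $\D_\infty$ collects exactly the DIs with unsatisfiable antecedent. I would also show that moving $\D_\infty$ into $\T^*$ yields an equivalent KB, as in the argument of Lemma~\ref{prop_computerank}. The loop in $\mathtt{RationalClosure}$ then computes $\rank_{\KB}(C)$: the first $i$ at which $C$ is not exceptional w.r.t.\ $\tuple{\T^*,\D_{\R}}$, or $\infty$.

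Third, and the main obstacle: show that in $\OI$ the height of every concept equals its rank, i.e.\ $h_\OI(C)=\rank_{\KB}(C)$. Here the disjoint union property (Lemma~\ref{Lemma:ClosureDisjointUnionMod}) is essential.
\begin{itemize}
\item For the lower bound, prove by induction that no model of $\KB$ places a $C$-element below layer $\rank_{\KB}(C)$. The elements of a layer $L_j$ satisfy the materialisations of $\D\setminus\bigcup_{k<j}\D^{\rank}_k$.
\item For the upper bound, for each $i$ and each concept $C$ of rank $i$, build a ranked model with a $C$-element in $L_i$. Stack classical models layer by layer, with layer $j$ taken from models of $\T^*$ together with the materialisation of the DIs of rank $\geq j$, using the finite and countable model property to fit domain $\Delta$. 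Then take the ranked union of $\Moddelta{\KB}$. The delicate point is verifying that this stacked interpretation satisfies every DI, in particular those whose antecedent first appears at a higher layer.
\end{itemize}

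Finally, with $i=\rank_{\KB}(C)$, the minimal $C$-elements of $\OI$ lie in $L_i$, and they are exactly the $C$-elements satisfying the materialisations of $\D_{\R}=\bigcup_{j\geq i}\D_j$. Hence $\OI\sat C\dsubs D$ iff $\T^*\entails\bigsqcap\{\neg C'\dlor D'\mid C'\dsubs D'\in\D_{\R}\}\dland C\subs D$, which is the value returned. When $\D_{\R}=\emptyset$, $C$ has rank $\infty$; then $C^\OI=\emptyset$ and the DI holds trivially, matching the final branch of the procedure.
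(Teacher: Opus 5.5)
The paper does not reprove this theorem; it imports it from \cite{BritzEtAl2021}. Your reconstruction fails at its first step, because the biconditional you want to prove is false once roles are present. Take $\T=\{A\subs\exists R.B\}$ and $\E=\{B\dsubs D,\ B\dsubs\neg D\}$. Classically $\T\not\entails(\neg B\dlor D)\dland(\neg B\dlor\neg D)\subs\neg A$: an $A$-element outside $B$ with an $R$-successor in $B\dland D$ is a countermodel. Yet in every ranked model $\R$ of $\tuple{\T,\E}$, a non-empty $B^{\R}$ would have a non-empty set of minimal elements contained in $D^{\R}\cap(\neg D)^{\R}=\emptyset$. Hence $B^{\R}=\emptyset$, so $A^{\R}=\emptyset$, and $A$ is exceptional. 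Your two-layer construction breaks exactly here. A DI whose antecedent has no element in $L_0$ (here both DIs) gets its minimal elements in $L_1$, i.e.\ among elements that violate the materialisation, so the final check that every DI holds on its minimal elements cannot succeed. If your biconditional held for every $\tuple{\T,\E}$, the first pass of $\mathtt{Exceptional}$ would already isolate all rank-$\infty$ DIs, and the outer \texttt{repeat} loop of $\mathtt{ComputeRanking}$ would be superfluous; the paper itself says the loop is needed because of roles. Your second step derives the agreement of the computed $\D_i$ and $\D_\infty$ with the semantic ranking from the first step, so it is unsupported as well.

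What is true, and what you actually need, is completeness of the test only for the sets $\E_0\supsetneq\E_1\supsetneq\cdots\supsetneq\E_k=\emptyset$ computed in the \emph{last} round of the \texttt{repeat} loop.

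For earlier rounds, use soundness of the test plus a height-shifting argument: replace each height $h$ by $\max(0,h-m)$, where $m$ is the least height of an element of an antecedent of the fixpoint $\D_\infty$. This shows those antecedents are empty in every ranked model, so $\tuple{\T^*,\D^*}$ has the same ranked models as $\KB$.

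For the last round, write $\delta_{\E}$ for the conjunction of the materialisations of $\E$. Given a classical model $\I$ of $\T^*$ with $x\in(C\dland\delta_{\E_j})^{\I}$, give each element $y$ the height $\min\{m\geq j\mid y\in\delta_{\E_m}^{\I}\}-j$, which is defined because $\E_k=\emptyset$ and puts $x$ at height $0$. Any element of height at most $r-j$ lying in the antecedent of a DI of $\E_r\setminus\E_{r+1}$ then satisfies its consequent. Now take the ranked union with one such model for each DI of $\E_j$ (it exists because the test failed for that DI at stage $r$ of the inner loop), closing any empty layers. This realises every antecedent at height at most $r-j$. So the higher, materialisation-violating elements, which is where role successors may be forced to live, are never minimal.

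This multi-layer construction is also what your ``stacking'' in the third step has to be. A layer that is itself a classical model of $\T^*$ in which every element satisfies the materialisation may contain no $C$-element at all.

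Finally, your last sentence is wrong: when the loop exits with $\D_{\R}=\emptyset$, $C$ may have rank $n+1$ rather than $\infty$. With $\T=\{B_1\subs A,\ B_1\subs\neg B\}$ and $\D=\{A\dsubs B\}$, the loop empties $\D_{\R}$ for $B_1$. Yet $B_1^{\OI}\neq\emptyset$, and $B_1\dsubs B_2$ is not in the RC. The correct output in this branch is $\T^*\entails C\subs D$, i.e.\ your general formula with the empty conjunction.
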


\nd Now Theorem \ref{prop_dllitehornRC} follows immediately.

\begin{proof}[Proof of Theorem~\ref{prop_dllitehornRC}]
Theorem \ref{prop_dllitehornRC} follows from Lemma \ref{prop_RC} and Theorem \ref{th_britzcomplete}. 
\end{proof}

\setcounter{proposition}{\getrefnumber{prop_classic_inclusion}-1}
\begin{proposition}
    Let $\KB=\tuple{\T,\D}$ any \dllitehornH~defeasible KB, and $\T^*$ be the TBox obtained from $\mathtt{ComputeRanking.Horn}(\KB)$. Then for any  classical \dllitehornH~inclusion $\alpha$,    $\KB\minent \alpha$ iff $\T^*\entails\alpha$.
\end{proposition}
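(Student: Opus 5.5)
The argument runs through the big ranked model $\OI$ of $\KB$ (Proposition~\ref{prop_minent} and the surrounding discussion). For a classical inclusion $\alpha$, $\KB\minent\alpha$ holds iff $\OI\sat\alpha$. By Definition~\ref{def_rankedentail}, this is the rank-$\infty$ condition on $\{B_1,\ldots,B_n,\neg C\}$: no ranked model of $\KB$ populates that set. Since $\OI$ is the ranked union of all models in $\Moddelta{\KB}$, both readings agree with $\OI\sat\alpha$. We therefore need to show that $\OI\sat\alpha$ iff $\T^*\entails\alpha$.

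The first step uses Lemma~\ref{prop_computerank}. It tells us that $\mathtt{ComputeRanking.Horn}(\KB)$ returns the same $\T^*$, up to logical equivalence (it adds $\{B_1,\ldots,B_n\}\subs\bot$ rather than $\{B_1,\ldots,B_n\}\subs A$), as the $\mathcal{ALC}$ procedure $\mathtt{ComputeRanking}$ of \cite{BritzEtAl2021}. Lemma 12 of \cite{BritzEtAl2021} says that $\tuple{\T^*,\D^*}$ has exactly the same ranked models as $\KB$, so we may replace $\KB$ by $\KB^*=\tuple{\T^*,\D^*}$.

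Soundness ($\Leftarrow$) is then immediate. Every ranked model of $\KB^*$ classically satisfies $\T^*$, so if $\T^*\entails\alpha$, every ranked model, and in particular $\OI$, satisfies $\alpha$.

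For completeness ($\Rightarrow$), suppose $\T^*\not\entails\alpha$. Take a classical model $\I$ of $\T^*$ with an element $o$ violating $\alpha$; for a role inclusion, take a violating pair. By Remark~\ref{finitedllitehornH} we may assume $\I$ is finite. We want to turn $\I$ into a ranked model of $\KB^*$ that keeps the violation. The plan is to use the fact that, after $\mathtt{ComputeRanking}$ stabilises, every remaining DI in $\D^*$ has finite rank. For each rank $i$, the LHS of each DI in $\D_i$ is satisfiable in layer $0$ of some model of $\tuple{\T^*,\bigcup_{j\geq i}\D_j}$. Using these witnesses and the ranked-union construction of Lemma~\ref{Lemma:ClosureDisjointUnionMod}, one builds ranked models of $\KB^*$ in which every layer is populated.

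Now place a renamed copy of $\I$ at a layer strictly above all DI ranks, so that no element of $\I$ is minimal for any DI antecedent that is already populated in a lower layer. Then the DIs impose no constraint on the copy of $\I$. The resulting ranked union is a model of $\KB^*$, hence of $\KB$, in which $\alpha$ fails. It therefore lies in $\mathfrak{R}_\KB$ and refutes $\alpha$, so $\OI\not\sat\alpha$. Role inclusions are handled the same way, since DIs never constrain roles.

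The main obstacle is this last construction. We must check that, for every DI antecedent $\{B_1',\ldots,B_m'\}$ that is nonempty in the copy of $\I$, some element of that antecedent already exists at a lower layer. Otherwise the copy's elements would be minimal and the DI would bind them. This is where the fixpoint property of $\T^*$ is essential. If $\I\sat\T^*$ and $\{B'_1,\ldots,B'_m\}^\I\neq\emptyset$, then that antecedent is not forced empty by $\T^*$. Any DI whose antecedent had been forced empty would have been moved into $\T^*$ with rank $\infty$. Hence the antecedent has finite rank and a low-layer witness, via the exceptionality characterisation of Lemma~\ref{lemma_main}. The argument must be applied to all antecedents of DIs in $\D^*$, not just to those of the DIs themselves. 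I would first try to prove the general fact, essentially as in \cite[p.22]{BritzEtAl2021}, that the concepts with rank $\infty$ are exactly those unsatisfiable with respect to $\T^*$, and derive everything above from it.
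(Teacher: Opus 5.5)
Your argument is correct. Its completeness half is the construction the paper uses, but the paper applies it only to role inclusions.

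For $R\subs S$, the paper takes a finite model $\I$ of $\T^*$ refuting the inclusion, and a model $\RI\in\Moddelta{\KB}$ in which the antecedent of every DI in $\D_i$ has height exactly $i$. It stacks $\I$ at layer $n+1$ and pulls the result back into $\Moddelta{\KB}$ by a bijection. For a GCI it builds no countermodel at all. Instead it imports from Britz et al.\ (p.~22) that $\KB\minent\alpha$ iff $\T^*\entails\alpha$ for the $\T^*$ of the $\mathcal{ALC}$ procedure $\mathtt{ComputeRanking}$. It then transfers this via $\mathtt{ComputeRanking}(\KB)=\mathtt{ComputeRanking.Horn}(\KB)$. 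Running the stacking construction uniformly, as you do, makes the GCI case self-contained, since a disjoint union preserves a violation of $\{B_1,\ldots,B_n\}\subs C$ just as it preserves one of $R\subs S$. The paper's split instead buys brevity by outsourcing.

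Your ``main obstacle'' is not actually an obstacle. $\mathtt{ComputeRanking.Horn}$ only exits when $\D_\infty=\emptyset$, so every DI in $\D^*$ lies in some $\D_i$ with $i\le n$. Its antecedent is therefore already populated at height $i$ in the base model. Hence no element placed at layer $n+1$ can be minimal for any antecedent, whatever happens to be nonempty in $\I$.

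Your proposed remedy would also undo the advantage of your route. The general fact you plan to prove first, that rank $\infty$ coincides with unsatisfiability w.r.t.\ $\T^*$, is, applied to $\{B_1,\ldots,B_n,\neg C\}$, exactly the GCI case of the statement. That is the very result the paper cites, so proving it first would leave your construction needed only for role inclusions.

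Finally, the stage-$i$ witnesses you mention are models of $\tuple{\T^*,\bigcup_{j\geq i}\D_j}$, not of $\KB^*$. To obtain the base model, they must be shifted up $i$ layers and placed over models realising the lower ranks before taking a ranked union. The paper simply assumes that such a base model exists.
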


\begin{proof}
    In case $\alpha$ is a GCI of the form $\{B_1,\ldots,B_n\}\subs C$, the proof is quite straightforward:  in \cite[p.22]{BritzEtAl2021} it has been proved that, for any GCI $B_1\dland\ldots\dland B_n\subs C$, $\KB\minent B_1\dland\ldots\dland B_n\subs C$ iff $\T^*\entails\alpha$, where $\T^*$ is determined by $\mathtt{ComputeRanking}(\KB)$. Lemma \ref{prop_computerank} states that $\mathtt{ComputeRanking.Horn}(\KB)$ and $\mathtt{ComputeRanking}(\KB)$ determine the same $\T^*$, hence we conclude that $\KB\minent \{B_1,\ldots, B_n\}\subs C$ iff $\T^*\entails\alpha$, where $\T^*$ is determined by $\mathtt{ComputeRanking.Horn}(\KB)$.
    
    It remain to extend the proof to the role inclusions of the form $R\subs S$. That is, we need to prove that
    \[
    \tuple{\T,\D}\minent R\subs S\text{ iff }\T^*\entails R\subs S \ .
    \]

\nd From right to left it is straightforward: since $\OI$ is a model of $\T^*$ and $\T^*\entails R\subs S$, then $\OI\sat R\subs S$, that is, $\tuple{\T,\D}\minent R\subs S$.

From left to right, we prove it by contraposition. Assume $\T^*\not\entails R\subs S$. Since \dllitehornH~has the FMP \cite{Rosati08}, there is a finite model $\I=\tuple{\Delta^\I,\cdot^\I}$ of $\T^*$ s.t. $\I\not\sat R\subs S$. Let $\D^*$ be ranked into the sets $\{\D_0,\ldots,\D_n\}$ by $\mathtt{ComputeRanking.Horn}(\KB)$, and consider a model $\RI=\tuple{\Delta,\cdot^{\RI},\prec^{\RI}}$ in $\Moddelta{\KB}$ s.t. for every $C\dsubs D\in\D^*$, $h_{\RI}(C)=i$ iff $C\dsubs D\in\D_i$. We extend $\RI$ into a new model $\RI'=\tuple{\Delta^{\RI'},\cdot^{\RI'},\prec^{\RI'}}$ by adding $\I$ in the $n+1$ layer. That is:

\begin{itemize}
    \item $\Delta^{\RI'}=\Delta\uplus\Delta^{\I}$;
    \item $A^{\RI'}=A^{\RI}\cup A^{\I}$ for every  concept name $A$;
    \item $P^{\RI'}=P^{\RI}\cup P^{\I}$ for every role name $P$;
    \item $h_{\RI'}(x)=h_{\RI}(x)$ for every $x\in \Delta$; $h_{\RI'}(x)=n+1$ for every $x\in \Delta^{\I}$.
\end{itemize}

\nd It is easy to prove that $\RI'$ is a model of $\KB$ that does not satisfy $R\subs S$. Also, since $\Delta^{\RI'}$ has been obtained combining a countable domain with a finite one, it is a countable set too. That is, through a bijection we can define an equivalent model $\RI'_{\Delta}$ with $\Delta$ as domain. Consequently, $\RI'_{\Delta}\in\Moddelta{\KB}$ and takes part to the definition of the big ranked model $\OI$ for $\KB$. Since $\RI'_{\Delta}$ does not satisfy $R\subs S$, by the construction of $\OI$ we can conclude that $\OI\not\sat R\subs S$, that is, $\tuple{\T,\D}\not\minent R\subs S$, as desired.
\end{proof}

\subsubsection*{Section~\ref{sect_RC_core}: Rational Closure for \dllitecoreH}\label{sect_RC_coreproofs}
\vspace{1ex}



\setcounter{proposition}{\getrefnumber{prop_dllitecoreexcept}-1}
\begin{proposition} 
Given any defeasible  \dllitecoreH~KB $\KB=\tuple{\T,\D}$ and any basic concept $B$, the following conditions are equivalent:
\begin{enumerate}
    \item $\KB^\delta\entails \{B,\delta\}\subs\neg B$;
    \item $\norm{\KB} \entails \norm{B} \subs \neg \norm{B}$;
    \item   $\norm{B} \impc \notc \norm{B} \in \clnn(\norm{\KB})$.
\end{enumerate}

\end{proposition}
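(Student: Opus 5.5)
We prove the cycle $(1)\Rightarrow(2)\Rightarrow(3)\Rightarrow(1)$, though in practice it is cleaner to establish $(2)\Leftrightarrow(3)$ and $(1)\Leftrightarrow(2)$ separately. Throughout, $\KB^\delta=\T\cup\D^\delta$ with $\D^\delta=\{\{B',\delta\}\subs A'\mid B'\dsubs A'\in\D\}$, and $\norm{\KB}=\norm{\T}\cup\norm{\D}$ is a \dllitecoreH~TBox.

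\textbf{Equivalence $(2)\Leftrightarrow(3)$.} First, every rule generating $\clnn(\norm{\KB})$ is sound w.r.t.\ $\norm{\KB}$. The core rules are sound by Proposition~\ref{prop_calvanese07_lemma10_coroll13}. The extra rule ($\nSup$) is sound because of how $\norm{\T}$ is built: one shows by induction on the derivation that whenever $B_1\subs\neg B_2$ is derived from $\T$ by the core rules, the ``normal copy'' of the same derivation, using the copies $\norm{B_1}\subs\norm{B_2}$ and $\norm{B_1}\subs\neg\norm{B_2}$ placed in $\norm{\T}$, derives $\norm{B_1}\subs\neg\norm{B_2}$. This gives $(3)\Rightarrow(2)$.

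For $(2)\Rightarrow(3)$, apply Proposition~\ref{entailmentcor} to the \dllitecoreH~TBox $\norm{\KB}$ with $B_1=B_2=\norm{B}$. It yields $\norm{B}\subs\neg\norm{B}\in\clncH(\norm{\KB})\subseteq\clnn(\norm{\KB})$.

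\textbf{Equivalence $(1)\Leftrightarrow(2)$.} This is the semantic core. The idea is to read $\norm{X}$ as $X\dland\delta$.

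For $(2)\Rightarrow(1)$, argue by contraposition. Take a model $\I$ of $\KB^\delta$ containing some $o\in B^\I\cap\delta^\I$, and set $\norm{X}^\I := X^\I\cap\delta^\I$ for every basic $X$. Each axiom of $\norm{\KB}$ then holds:
\begin{itemize}
\item a PI $X\subs Y$ gives $X^\I\cap\delta^\I\subseteq Y^\I\cap\delta^\I$;
\item an NI $X\subs\neg Y$ gives $\norm{X}\subs\neg\norm{Y}$;
\item $\{X,\delta\}\subs A$ gives $\norm{X}\subs\norm{A}$;
\item role-induced PIs $\norm{\exists R_1}\subs\norm{\exists R_2}$ hold because $\exists R_1\subs\exists R_2\in\T$ by Remark~\ref{roleinc}.
\end{itemize}
Since $o\in\norm{B}^\I$, we have $\norm{\KB}\not\entails\norm{B}\subs\neg\norm{B}$.

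For $(1)\Rightarrow(2)$, again argue by contraposition, and expect this to be the main obstacle. We start from a model $\J$ of $\norm{\KB}$ with $o\in\norm{B}^\J$ and must build a model of $\KB^\delta$ in which $B\dland\delta$ is nonempty. In $\J$ the normal names are unrelated to the underlying roles: $\norm{\exists R}$ is just a concept name, so $o$ need not have an actual $R$-successor.

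The plan is to use the syntactic route instead. By Proposition~\ref{entailmenthornH}, condition (1) means one of the following lies in $\clnhH(\KB^\delta)$:
\begin{itemize}
\item $\{B,\delta\}\subs\neg B$, or a subset variant such as $\delta\subs\neg B$ or $B\subs\neg\delta$;
\item $B\subs\neg B$;
\item $\delta\subs\neg\delta$.
\end{itemize}
We then prove, by induction on derivations in $\clnhH(\KB^\delta)$, a translation invariant. Every derived NI has a premise set containing at most one ordinary basic concept together with possibly $\delta$; this holds because all LHSs in $\KB^\delta$ have the form $\{X\}$ or $\{X,\delta\}$, and the rules ($\mathbf{\hHTrans}$), ($\mathbf{\hHContr}$) and ($\mathbf{\RhH_i}$) preserve this shape. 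Define the translation:
\begin{itemize}
\item an NI $\{X,\delta\}\subs\neg Y$ maps to $\norm{X}\subs\neg\norm{Y}$;
\item a $\delta$-free NI $X\subs\neg Y$ maps to itself, and also, via ($\nSup$), to its normal copy;
\item $\{X\}\subs\neg\delta$ (equivalently $\{X,\delta\}\subs\bot$) maps to $\norm{X}\subs\neg\norm{X}$.
\end{itemize}
Each rule application is then checked to correspond to core-rule steps on $\norm{\KB}$. For ($\mathbf{\hHTrans}$) with premise $\{X,\delta\}\subs A$, use $\norm{X}\subs\norm{A}$ in $\norm{\D}$. For ($\mathbf{\RhH_1}$)/($\mathbf{\RhH_2}$), use the normal copies of the role-induced PIs, as $\norm{\exists R_1}\subs\norm{\exists R_2}$ is in $\norm{\T}$.

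The induction shows that each of the listed cases for (1) lands in $\norm{B}\subs\neg\norm{B}\in\clnn(\norm{\KB})$. This gives $(1)\Rightarrow(3)$, which together with $(3)\Rightarrow(2)$ closes the cycle.

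The delicate points in this induction are:
\begin{itemize}
\item ($\mathbf{\RhH_3}$) and the case $\delta\subs\neg\delta$, which must be treated separately;
\item ensuring contraposition steps that move $\delta$ to the right-hand side are mirrored correctly.
\end{itemize}
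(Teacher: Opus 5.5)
The step that fails is $(3)\Rightarrow(2)$, i.e.\ your claim that ($\nSup$) is sound with respect to $\norm{\KB}$.

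Your ``normal copy'' of a core derivation correctly mirrors ($\mathbf{\cHRef}$), ($\mathbf{\cHContr}$) and ($\mathbf{\cHTrans}$). It also mirrors ($\mathbf{\RcH_1}$) and ($\mathbf{\RcH_2}$), using the role-induced PIs together with ($\mathbf{\cHTrans}$). It cannot mirror ($\mathbf{\RcH_3}$). In $\norm{\KB}$ the names $\norm{\exists R}$ and $\norm{\exists R^-}$ are unrelated concept names, so nothing takes you from $\norm{\exists R}\subs\neg\norm{\exists R}$ to $\norm{\exists R^-}\subs\neg\norm{\exists R^-}$.

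This is not merely a gap in your argument; the implication is false. Take $\T=\{\exists P\subs\neg\exists P\}$, $\D=\{\exists P^-\dsubs A\}$ and $B=\exists P^-$.
\begin{itemize}
\item Condition (1) holds: $\exists P\subs\neg\exists P$ forces $P$, and hence $\exists P^-$, to be empty in every model of $\KB^\delta$.
\item Condition (3) holds: apply ($\mathbf{\cHRef}$), then ($\mathbf{\RcH_3}$), then ($\nSup$).
\item Condition (2) fails. Here $\norm{\KB}=\{\exists P\subs\neg\exists P,\ \norm{\exists P}\subs\neg\norm{\exists P},\ \norm{\exists P^-}\subs\norm{A}\}$. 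Its one-element model with $\norm{\exists P^-}$ and $\norm{A}$ both interpreted as $\{x\}$, and everything else empty, violates $\norm{\exists P^-}\subs\neg\norm{\exists P^-}$.
\end{itemize}
So no argument can close your cycle through (2): both $(1)\Rightarrow(2)$ and $(3)\Rightarrow(2)$ are false as stated.

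The paper's proof has the same hole. It derives $(2)\Leftrightarrow(3)$ from Proposition~\ref{entailmentcor}, but that proposition is about $\clncH(\norm{\KB})$, not $\clnn(\norm{\KB})$. In practice, the test in line~4 of $\mathtt{Exceptional.Core}$, which is condition (2), misses the exceptional DI $\exists P^-\dsubs A$ in this example.

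The rest of what you wrote is correct.
\begin{itemize}
\item Your $(2)\Rightarrow(3)$ argument is correct.
\item Your semantic $(2)\Rightarrow(1)$ argument is correct.
\item $(1)\Leftrightarrow(3)$ does hold, via the syntactic translation you sketch. That is essentially the paper's Lemma~\ref{lemma_exceptional_n_form}.
\end{itemize}

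To recover the full equivalence, add to $\norm{\T}$ the normal copy $\norm{X}\subs\neg\norm{Y}$ of every NI $X\subs\neg Y\in\clncH(\T)$. Then ($\nSup$) becomes sound by Lemma~\ref{lemma_separation}. Your interpretation $\norm{X}^\I:=X^\I\cap\delta^\I$ still satisfies the added axioms, because $\I\sat\T$. With this change your cycle goes through.

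Separately, your shape invariant for $\clnhH(\KB^\delta)$ is wrong as stated.
\begin{itemize}
\item ($\mathbf{\hHContr}$) turns $\{X,\delta\}\subs\neg Y$ into $\{X,Y\}\subs\neg\delta$, which has two ordinary concepts on the left.
\item Applying ($\mathbf{\hHTrans}$) to such an NI with an axiom $\{W,\delta\}\subs X$ of $\D^\delta$ gives $\{W,Y,\delta\}\subs\neg\delta$.
\end{itemize}
The correct invariant is ``at most two ordinary basic concepts in the whole NI, plus possibly $\delta$''. Your translation also needs a clause sending $\{X,Y\}\subs\neg\delta$ and $\{X,Y,\delta\}\subs\neg\delta$ to $\norm{X}\subs\neg\norm{Y}$. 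The paper treats the first of these forms in Part~2 of the proof of Lemma~\ref{lemma_exceptional_n_form}.
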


\nd In order to prove Proposition \ref{prop_dllitecoreexcept}, we need to prove some related facts.

\begin{applemma}\label{lemma_separation}
    Let $\KB=\tuple{\T,\D}$ be a defeasible \dllitecoreH~KB and $B_1,B_2$ be basic concepts. Then $B_1\subs \neg B_2\in \clnn(\norm{\KB})$ iff $B_1\subs \neg B_2\in \clncH(\T)$.
\end{applemma}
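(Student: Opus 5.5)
The plan is to prove Lemma~\ref{lemma_separation} by a syntactic separation argument on the derivations in $\clnn(\norm{\KB})$, written from the perspective of the inductive definition of the closure. The right-to-left direction is immediate. The rules of $\clncH$ applied to $\norm{\KB}$ include all rules applied to $\T$. Moreover, $\T\subseteq\norm{\T}$, so every NI of $\T$ is in $\norm{\KB}$ by ($\mathbf{\cHRef}$), and every PI and role inclusion of $\T$ used in ($\mathbf{\cHTrans}$) and ($\mathbf{\RcH_i}$) is also available in $\norm{\KB}$. Hence a straightforward induction on the construction of $\clncH(\T)$ gives $\clncH(\T)\subseteq\clnn(\norm{\KB})$.

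For left to right, I will call a concept \emph{ordinary} if it is a basic concept of the original vocabulary and \emph{normal} if it is a normal concept name $\norm{B}$. The key observation concerns the inclusions of $\norm{\KB}$ that are not in $\T$. These are exactly $\norm{B_1}\subs\norm{B_2}$, $\norm{B_1}\subs\neg\norm{B_2}$ (including those coming from Remark~\ref{roleinc}), and $\norm{B}\subs\norm{A}$ from $\norm{\D}$, and each of them mentions normal names only. The role inclusions of $\norm{\KB}$ are just those of $\T$.

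I will then prove by induction on the derivation the following invariant: every NI in $\clnn(\norm{\KB})$ either is \emph{pure ordinary}, meaning both sides are ordinary, and then belongs to $\clncH(\T)$, or is \emph{pure normal}, meaning both sides are normal names. The rule cases are as follows.
\begin{itemize}
\item ($\mathbf{\cHRef}$): NIs of $\norm{\KB}$ are either in $\T$, hence pure ordinary and in $\clncH(\T)$, or of the form $\norm{B_1}\subs\neg\norm{B_2}$, hence pure normal.
\item ($\mathbf{\cHContr}$): this rule preserves both kinds, and preserves membership in $\clncH(\T)$.
\item ($\mathbf{\cHTrans}$): from $B_1\subs B_2$ in $\norm{\KB}$ and $B_2\subs\neg B_3$ in the closure. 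If $B_2$ is ordinary, the PI must lie in $\T$ with $B_1$ ordinary, and the NI is pure ordinary and in $\clncH(\T)$ by the induction hypothesis. The result is therefore obtained by the same rule in $\clncH(\T)$. If $B_2$ is normal, the PI has $B_1$ normal, the NI is pure normal, and so the result is pure normal.
\item ($\mathbf{\RcH_1}$), ($\mathbf{\RcH_2}$), ($\mathbf{\RcH_3}$): their premises and conclusions involve $\exists R$, which is ordinary, since a name $\norm{\exists R}$ is atomic and is never of the form $\exists R$. They therefore act only on pure ordinary NIs, using role inclusions of $\T$, and mirror the corresponding rules in $\clncH(\T)$.
\item ($\nSup$): the conclusion is pure normal.
\end{itemize}

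With the invariant in hand, any $B_1\subs\neg B_2\in\clnn(\norm{\KB})$ with $B_1,B_2$ basic concepts is pure ordinary, and so it lies in $\clncH(\T)$.

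I expect the main obstacle to be bookkeeping rather than a conceptual difficulty. Two points need care. First, I must check that no rule can mix the two vocabularies, in particular that ($\mathbf{\RcH_i}$) never fires on an atomic name $\norm{\exists R}$. Second, the role-derived PIs $\norm{\exists R_1}\subs\norm{\exists R_2}$ added by Remark~\ref{roleinc} must be classified as purely normal.
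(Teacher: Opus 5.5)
Your proof is correct and takes the same route as the paper: a syntactic separation argument, by induction on the construction of $\clnn(\norm{\KB})$, showing that inclusions over the original vocabulary and over normal names never mix, so that ($\mathbf{\norm{Sup}}$) and the normal PIs cannot help derive an NI between ordinary basic concepts. Your explicit invariant (every derived NI is either pure ordinary and already in $\clncH(\T)$, or pure normal) and your separate monotonicity argument for the right-to-left direction make precise what the paper's short proof leaves implicit.
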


\begin{proof}
    The lemma is easily proved by induction, by checking that 
    \begin{itemize}
        \item in $\norm{\KB}$ every inclusion is composed either only by basic concepts or only by normal  concept names. Hence, for every inclusion $B_1\subs \neg B_2$ where $B_1$ and $B_2$ are not normal concept names, $B_1\subs \neg B_2\in\norm{\KB}$ iff $B_1\subs \neg B_2\in\T$.
        \item for every rule ($\mathbf{\cHRef}$), ($\mathbf{\cHContr}$), ($\mathbf{\cHTrans}$), ($\mathbf{\RcH_i}$) ($1\leq i \leq 3$) and ($\nSup$) that may be used to build   
        $\clnn(\norm{\KB})$, we may derive the NI $B_1\subs \neg B_2$ only from inclusions containing basic concepts. Hence, since $B_1$ and $B_2$ are not normal concept names, the rule ($\nSup$) cannot be used, and the only rules that can be used to add $B_1\subs \neg B_2$ to $\clnn(\norm{\KB})$ are the same rules that can be used to add  $B_1\subs \neg B_2$ to $\clncH(\T)$.
    \end{itemize}
\end{proof}

\begin{applemma}\label{lemma_negative_strict}
    Let $\KB=\tuple{\T,\D}$ be a defeasible \dllitecoreH~KB and  $B_1,B_2$ be basic concepts.
    Then $B_1\subs \neg B_2\in \clnn(\norm{\KB})$ iff $B_1\subs \neg B_2\in\clnd(\KB^\delta)$.
\end{applemma}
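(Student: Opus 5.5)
By Lemma~\ref{lemma_separation}, $B_1\subs \neg B_2\in \clnn(\norm{\KB})$ iff $B_1\subs \neg B_2\in \clncH(\T)$. It therefore suffices to show that, for basic concepts $B_1,B_2$ (neither being $\delta$ nor a normal concept name),
\[
B_1\subs \neg B_2\in \clnd(\KB^\delta) \text{ iff } B_1\subs \neg B_2\in \clncH(\T).
\]
Here $\KB^\delta=\T\cup\D^\delta$ is treated as a \dllitehornH~TBox.

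\emph{Right to left.} I would argue by induction on the derivation in $\clncH(\T)$. Each core rule is a special instance of the corresponding horn rule with $CL=\emptyset$ or a singleton:
\begin{itemize}
\item ($\mathbf{\cHRef}$) is an instance of ($\mathbf{\hHRef}$), since $\T\subseteq\KB^\delta$;
\item ($\mathbf{\cHContr}$) is ($\mathbf{\hHContr}$) with $CL=\emptyset$;
\item ($\mathbf{\cHTrans}$) is ($\mathbf{\hHTrans}$) with $CL=\{B_1\}$ and $CL'=\emptyset$;
\item the role rules ($\mathbf{\RcH_1}$)--($\mathbf{\RcH_3}$) correspond to ($\mathbf{\RhH_1}$)--($\mathbf{\RhH_3}$).
\end{itemize}
Since every premise used lies in $\T\subseteq\KB^\delta$, each derivation step transfers directly.

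\emph{Left to right.} This is the main obstacle. The plan is to prove a stronger invariant on every NI $CL\subs\neg B$ in $\clnd(\KB^\delta)$, by induction on the derivation:
\begin{enumerate}
\item if $\delta\notin CL\cup\{B\}$, then $CL$ has at most one element and $CL\subs\neg B\in\clncH(\T)$ (when $CL=\emptyset$ the NI is trivial; this case does not arise, since NIs in $\KB^\delta$ have nonempty LHS);
\item if $\delta$ occurs, then the NI has the shape $\{B',\delta\}\subs\neg B''$, $\{B'\}\subs\neg\delta$, or $\{B',B''\}\subs\neg\delta$, with ordinary basic concepts $B',B''$.
\end{enumerate}
Note that in \dllitecoreH~the members of $\D^\delta$ are $\{B,\delta\}\subs A$ with $A$ a concept name; that is, they are PIs, not NIs. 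So ($\mathbf{\hHRef}$) contributes only NIs from $\T$, all of which are of core form.

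I then check each rule in turn:
\begin{itemize}
\item ($\mathbf{\hHContr}$) preserves the invariant, since it merely permutes the literals, keeping $\delta$-containing NIs $\delta$-containing and $\delta$-free ones $\delta$-free.
\item ($\mathbf{\hHTrans}$) with a PI $CL_0\subs B_1\in\T$ has $|CL_0|=1$ and no $\delta$, so it corresponds to ($\mathbf{\cHTrans}$), possibly combined with a contraposition.
\item ($\mathbf{\hHTrans}$) with a PI $\{B,\delta\}\subs A$ from $\D^\delta$ introduces $\delta$ into the LHS, so its conclusion falls under case 2 and never yields a $\delta$-free NI.
\item The role rules only rename an existential literal and do not touch $\delta$.
\end{itemize}

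The delicate point is whether a $\delta$-containing NI can ever lose $\delta$ through ($\mathbf{\hHContr}$). This would require $\delta$ on the RHS and a singleton LHS, as in $\{B'\}\subs\neg\delta$. Contraposing gives $\{\delta\}\subs\neg B'$, which still contains $\delta$. A $\delta$ literal is eliminated only by ($\mathbf{\hHTrans}$) with a PI whose RHS is $\delta$, and no such PI exists because $\delta$ is fresh. Hence $\delta$-free NIs are derived only from $\delta$-free premises, and by the case analysis above these premises all lie in $\clncH(\T)$. This closes the induction and, with Lemma~\ref{lemma_separation}, proves the statement.
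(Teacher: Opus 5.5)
Your proof is correct and follows the paper's route. The inclusion of $\clncH(\T)$ (equivalently, by Lemma~\ref{lemma_separation}, of the basic NIs of $\clnn(\norm{\KB})$) in $\clnd(\KB^\delta)$ is the rule-by-rule embedding that the paper summarises as ``$\T\subseteq\KB^\delta$''. The converse is an induction on the Horn derivation, showing that a $\delta$-free NI with one-element LHS arises only from a premise of the same kind, via an instance of a core rule. Your explicit argument that $\delta$ is never eliminated works because ($\mathbf{\hHTrans}$) only removes the RHS of a PI, which is never $\delta$; ($\mathbf{\hHContr}$) only permutes; and the role rules only rename an existential. Combined with your case~1, this is what actually justifies the paper's unargued remark, in its ($\mathbf{\hHTrans}$) case, that a premise $\{B_1,B_3\}\subs\neg B_2$ would have to contain $\delta$.

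Two corrections, neither of which affects the conclusion.

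First, case~2 of your invariant is false. From $A\subs\neg C\in\T$ and $E\dsubs A$, $F\dsubs E\in\D$:
\begin{itemize}
\item ($\mathbf{\hHTrans}$) gives $\{E,\delta\}\subs\neg C$;
\item ($\mathbf{\hHContr}$) then gives $\{E,C\}\subs\neg\delta$;
\item ($\mathbf{\hHTrans}$) with $\{F,\delta\}\subs E$ and $CL'=\{C\}$ then gives $\{F,C,\delta\}\subs\neg\delta$.
\end{itemize}
This last NI has none of your three shapes. Your closing argument never uses the shapes, so simply drop case~2.

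Second, the claim ``$CL$ has at most one element'' in case~1 relies on reading $CL\cup\{B_1\}$ (resp.\ $CL'\cup\{B_1\}$) in the Horn rules with the distinguished concept not already in $CL$ (resp.\ $CL'$). The paper's own proof assumes the same, with its ``$CL=\emptyset$ must be the case''. Under a literal set reading, ($\mathbf{\hHTrans}$) with $CL'=\{H\}$ turns $G\subs H,\ H\subs\neg C\in\T$ into the $\delta$-free $\{G,H\}\subs\neg C$. Case~1 must then be weakened, for example to: for some $X\subs\neg Y\in\clncH(\T)$, every concept of $CL\cup\{B\}$ reaches $X$ or $Y$ through PIs of $\T$ (cf.\ Remark~\ref{roleinc}), and both $X$ and $Y$ are reached. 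This invariant is preserved by every rule; for ($\mathbf{\RhH_3}$), use ($\mathbf{\RcH_3}$). For $CL=\{B_1\}$ and $B=B_2$, it still yields $B_1\subs\neg B_2\in\clncH(\T)$ via ($\mathbf{\cHTrans}$), ($\mathbf{\RcH_1}$), ($\mathbf{\RcH_2}$) and ($\mathbf{\cHContr}$).
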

\begin{proof} \ \\
\begin{description}
    \item[(Right to left):]

    We prove that if $B_1\subs \neg B_2\in\clnd(\KB^\delta)$, then $B_1\subs \neg B_2\in\clnn(\norm{\KB})$. 
    We prove this by  induction on the application of the rules to build $\clnd(\KB^\delta)$. 
    So, let $\clnhH_i(\KB^\delta)$ be the set of NIs obtained from $\KB^\delta$ after the application of $i$ rules. Our induction hypothesis is that, for any basic concepts $B_1,B_2$, if $B_1\subs \neg B_2\in\clnhH_i(\KB^\delta)$, then $B_1\subs \neg B_2\in\clnn(\norm{\KB})$.

\begin{description}
        \item[Step 0:] $\clnhH_0(\KB^\delta)$ corresponds to all the NI's in $\KB^\delta$. Hence, if $B_1\subs \neg B_2\in\clnhH_0(\KB^\delta)$, then $B_1\subs \neg B_2\in\KB^\delta$. As $B_1,B_2$ are basic concepts, we have $B_1\subs \neg B_2\in\T$, and, thus, $B_1\subs \neg B_2\in \norm{\KB}$, and consequently $B_1\subs \neg B_2\in \clnn(\norm{\KB})$.

        \item[Step $i+1$:]  We need to prove that, whatever rule we use to obtain $\clnhH_{i+1}(\KB^\delta)$ from $\clnhH_{i}(\KB^\delta)$, if $B_1\subs \neg B_2\in\clnhH_{i+1}(\KB^\delta)$, then $B_1\subs \neg B_2\in\clnn(\norm{\KB})$, for any basic concepts $B_1,B_2$. We have to consider all the possible rules used to build  $\clnhH_{i+1}(\KB^\delta)$:

    \begin{description}
    \item[Case ($\mathbf{\hHContr}$):] {\bf IF} $CL \cup\{B_2\}  \subs \neg B_1 \in \clnhH_i(\KB^\delta)$ 
     {\bf THEN}  $CL \cup\{B_1\}  \subs \neg B_2 \in \clnhH_{i+1}(\KB^\delta)$. To obtain $B_1\subs \neg B_2$, $CL=\emptyset$ must be the case.
     As $B_2\subs \neg B_1 \in \clncH_i(\KB^\delta)$, by induction hypothesis $B_2\subs \neg B_1 \in \clnn(\norm{\KB})$ and, thus, 
     $B_1\subs \neg B_2\in\clnn(\norm{\KB})$ because of the closure of $\clnn(\norm{\KB})$ under the ($\mathbf{\cHContr}$) rule.

    \item[Case ($\mathbf{\hHTrans}$):] {\bf IF} $CL \subs B_3 \in \T$ {\bf AND} 
    $CL' \cup\{B_3\}  \subs \neg B_2 \in \clnhH_i(\KB^\delta)$ 
    {\bf THEN}  $CL \cup CL' \subs \neg B_2 \in \clnhH_{i+1}(\KB^\delta)$. To obtain $B_1\subs \neg B_2$, it must be the case that $CL \cup CL' = \{B_1\}$. So, $B_1 \subs B_3 \in \T$ and $CL' \subseteq  \{B_1\}$. As $B_1 \subs B_3 \in \T$, we have $B_1\neq\delta \neq B_3$. 
    If $CL' =  \{B_1\}$ then we would have $\{B_1, B_3\}  \subs \neg B_2 \in \clnhH_i(\KB^\delta)$, which is only possible if $\delta \in \{B_1, B_3\}$, as $\KB^\delta$ is the $\delta$-transformation of a \dllitecoreH~KB and, thus, exactly one of $B_1$ or $B_2$ has to be $\delta$.
    But, this is not possible as $B_1\neq\delta \neq B_3$ 
    and, thus, $CL'=\emptyset$ has to be the case. Therefore,
    $B_3  \subs \neg B_2 \in \clnhH_i(\KB^\delta)$. As a consequence, by induction hypothesis $B_3  \subs \neg B_2 \in \clnn(\norm{\KB})$
    and, thus, $B_1  \subs \neg B_2 \in \clnn(\norm{\KB})$  because of the closure of $\clnn(\norm{\KB})$ under the ($\mathbf{\cHTrans}$) rule.

    \item[Case ($\mathbf{\RhH_1}$):]  
    {\bf IF} $R_2 \subs R_1 \in \T$ {\bf AND} 
    $CL \cup \{\exists R_1\}  \subs \neg B_2 \in \clnhH_i(\norm{\KB})$ 
    {\bf THEN}  $CL \cup \{\exists R_2 \}  \subs \neg B_2  \in \clnhH_{i+1}(\norm{\KB})$. 
    To obtain $B_1\subs \neg B_2$, $CL=\emptyset$ must be the case and $B_1 = \exists R_2$. By induction hypothesis, 
    $\exists R_1  \subs \neg B_2  \in  \clnn(\norm{\KB})$. As $R_2 \subs R_1 \in \T$, $R_2 \subs R_1 \in \clnn(\norm{\KB})$ holds and, thus, 
    $\exists R_2\subs \neg B_2\in \clnn(\norm{\KB})$, \ie~$B_1\subs \neg B_2\in\clnn(\norm{\KB})$, because of the closure of $\clnn(\norm{\KB})$ under ($\mathbf{\RcH_1}$).
    

    \item[Case ($\mathbf{\RhH_i}$) with $i > 1$:].  The proof is similar to the case ($\mathbf{\RhH_1}$).  
    
\end{description}
\end{description}
\item[(Left to right):]
    We need to prove that if $B_1\subs \neg B_2\in\clnn(\norm{\KB})$, then $B_1\subs \neg B_2\in\clnd(\KB^\delta)$.  
By Lemma~\ref{lemma_separation}, if $B_1\subs \neg B_2\in\clnn(\norm{\KB})$, then $B_1\subs \neg B_2\in\clncH(\T)$. Since $\T\subseteq\KB^\delta$, $\clncH(\T)\subseteq \clnd(\KB^\delta)$ follows and, thus, $B_1\subs \neg B_2\in \clnd(\KB^\delta)$.
\end{description} 

\nd This completes the proof.
\end{proof}

\nd From Lemmas~\ref{lemma_separation} and \ref{lemma_negative_strict} we  immediately derive the Proposition \ref{corr_negative_strict}.

\begin{appproposition}\label{corr_negative_strict}
    Let $\KB=\tuple{\T,\D}$ be a defeasible \dllitecoreH~KB and  $B_1,B_2$ be basic concepts. Then the following conditions are equivalent:
    \begin{enumerate}
        \item $B_1\subs \neg B_2\in \clnn(\norm{\KB})$;
        \item $B_1\subs \neg B_2\in \clncH(\T)$;
        \item $B_1\subs \neg B_2\in\clnd(\KB^\delta)$.        
    \end{enumerate}
\end{appproposition}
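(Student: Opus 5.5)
The statement is Proposition~\ref{corr_negative_strict}, the three-way equivalence for NIs $B_1\subs\neg B_2$ between basic concepts. I would obtain it by chaining the two lemmas just proved, with no new construction.

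First, Lemma~\ref{lemma_separation} gives directly that condition 1 ($B_1\subs \neg B_2\in \clnn(\norm{\KB})$) is equivalent to condition 2 ($B_1\subs \neg B_2\in \clncH(\T)$). Second, Lemma~\ref{lemma_negative_strict} gives that condition 1 is equivalent to condition 3 ($B_1\subs \neg B_2\in\clnd(\KB^\delta)$). Since both conditions 2 and 3 are equivalent to condition 1, transitivity of equivalence yields the full three-way equivalence. Concretely, I would present the cycle of implications $2\Rightarrow 1\Rightarrow 3\Rightarrow 1\Rightarrow 2$, citing the relevant direction of the appropriate lemma at each arrow.

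The only point to check is that both lemmas apply under the same hypotheses. Each requires $\KB$ to be a defeasible \dllitecoreH~KB and $B_1,B_2$ to be basic concepts, that is, not normal concept names and not $\delta$; these are exactly the hypotheses of the proposition.

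I do not expect any real obstacle. The substantive work, namely the separation argument showing that ($\nSup$) cannot generate NIs over plain basic concepts, and the induction over the $\clnd$ rules, is already done in the two lemmas.
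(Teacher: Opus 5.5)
Your proposal is correct and is exactly the paper's argument: the proposition follows immediately by combining Lemma~\ref{lemma_separation} (conditions 1 and 2 are equivalent) with Lemma~\ref{lemma_negative_strict} (conditions 1 and 3 are equivalent). Your check that $B_1,B_2$ must be genuine basic concepts, that is, neither normal concept names nor $\delta$, is the right one, since both lemmas implicitly rely on it.
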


\nd To prove Proposition~\ref{prop_dllitecoreexcept} we  need also Lemma \ref{lemma_exceptional_n_form}.

\begin{applemma}\label{lemma_exceptional_n_form}
    Let $\KB=\tuple{\T,\D}$ be a defeasible \dllitecoreH~KB and $B_1,B_2$ basic concepts. Then $\norm{B_1} \subs \neg \norm{B_2} \in \clnn(\norm{\KB})$ iff either $\{B_1, \delta\}\subs \neg B_2$ or $B_1\subs \neg B_2$ is in $\clnd(\KB^\delta)$.
\end{applemma}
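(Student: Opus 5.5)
The plan is to prove both directions by induction on the construction of the two closures, in the style of Lemma~\ref{lemma_negative_strict}. Throughout we exploit the structure of the $\delta$-transformation of a \dllitecoreH~KB. In $\KB^\delta$ the only inclusions with a two-element LHS are those of the form $\{B,\delta\}\subs A$ coming from $\D$. Every other inclusion is in $\T$. Hence every NI in $\clnd(\KB^\delta)$ whose concepts are basic concepts of $\KB$ or $\delta$ has one of the shapes $B_1\subs\neg B_2$, $\{B_1,\delta\}\subs\neg B_2$, $\{B_1,B_2\}\subs\neg\delta$, or $\delta\subs\neg\delta$-like degenerate variants. The last two shapes are contrapositions (via $\mathbf{\hHContr}$) of the first two.

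The translation we have in mind maps $\norm{B_1}\subs\neg\norm{B_2}$ to the pair ``$\{B_1,\delta\}\subs\neg B_2$ or $B_1\subs\neg B_2$''. We read $\norm{B}$ as $B\dland\delta$.

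\textbf{Left to right.} We induct on the rules generating $\clnn(\norm{\KB})$, restricted to NIs between normal concept names. Such NIs arise in three ways. First, from $\norm{\T}$ via $\mathbf{\cHRef}$: these come from $B_1\subs\neg B_2\in\T\subseteq\clnd(\KB^\delta)$. Second, from ($\nSup$) applied to $B_1\subs\neg B_2\in\clnn(\norm{\KB})$: by Proposition~\ref{corr_negative_strict} this is in $\clnd(\KB^\delta)$. Third, from $\mathbf{\cHContr}$, $\mathbf{\cHTrans}$ or $\mathbf{\RcH_i}$ applied to normal NIs. For $\mathbf{\cHTrans}$ with $\norm{B_1}\subs\norm{B_3}\in\norm{\T}\cup\norm{\D}$ and $\norm{B_3}\subs\neg\norm{B_2}$, the induction hypothesis gives $\{B_3,\delta\}\subs\neg B_2$ or $B_3\subs\neg B_2$ in $\clnd(\KB^\delta)$. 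If the positive inclusion comes from $B_1\subs B_3\in\T$, a single $\mathbf{\hHTrans}$ step yields the corresponding NI for $B_1$. If it comes from $B_1\dsubs B_3\in\D$, we use $\{B_1,\delta\}\subs B_3\in\KB^\delta$ and $\mathbf{\hHTrans}$; this gives $\{B_1,\delta,\delta\}=\{B_1,\delta\}\subs\neg B_2$ in either subcase. The contraposition case needs care. From $\{B_2,\delta\}\subs\neg B_1$ we apply $\mathbf{\hHContr}$ on the pair $(B_2,B_1)$ and obtain $\{B_1,\delta\}\subs\neg B_2$. The role rules $\mathbf{\RcH_1},\mathbf{\RcH_2}$ apply to $\norm{\exists R}$ only through the explicit inclusions $\norm{\exists R_1}\subs\norm{\exists R_2}$ added by Remark~\ref{roleinc}. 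These are handled as $\mathbf{\cHTrans}$ from $\T$, and the role rules themselves act only on genuine existentials. Rule $\mathbf{\RcH_3}$ never fires on normal names, since $\norm{\exists R}$ is atomic.

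\textbf{Right to left.} We strengthen the claim into an invariant over all NIs of $\clnd(\KB^\delta)$, proved by induction on $\clnhH_i(\KB^\delta)$. The invariant states the following. If $B_1\subs\neg B_2$ is derived, then it lies in $\clnn(\norm{\KB})$, by Lemma~\ref{lemma_negative_strict}, and hence $\norm{B_1}\subs\neg\norm{B_2}$ follows by ($\nSup$). If $\{B_1,\delta\}\subs\neg B_2$ or its contrapositive $\{B_1,B_2\}\subs\neg\delta$ is derived, then $\norm{B_1}\subs\neg\norm{B_2}\in\clnn(\norm{\KB})$. The latter is symmetric in $B_1,B_2$ by $\mathbf{\cHContr}$, which matches the symmetry of $\{B_1,B_2\}\subs\neg\delta$.

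We then check each rule. For $\mathbf{\hHTrans}$, with $CL\subs B_3$ coming either from $\T$ or from $\D^\delta$ and with the various placements of $\delta$, we mirror the step with $\mathbf{\cHTrans}$ on $\norm{\cdot}$ names. Here $\norm{\T}$ supplies the $\T$-copies and $\norm{\D}$ supplies the $\D$-copies. For $\mathbf{\hHContr}$ we use $\mathbf{\cHContr}$, plus the symmetry just noted. For $\mathbf{\RhH_i}$ we use $\mathbf{\cHTrans}$ with $\norm{\exists R_2}\subs\norm{\exists R_1}$.

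\textbf{Main obstacle.} We expect the delicate part to be the case analysis in the right-to-left direction. There we must show that no NI with three genuinely distinct basic concepts, such as $\{B_1,B_2\}\subs\neg B_3$ with $\delta\notin\{B_1,B_2,B_3\}$, ever appears. We also need that degenerate forms like $\{B,\delta\}\subs\neg\delta$ translate to $\norm{B}\subs\neg\norm{B}$. We would first verify the shape invariant: every derived NI mentions at most two basic concepts of $\KB$ besides $\delta$, and contains at most one occurrence of $\delta$. After that, the rule-by-rule translation is routine.
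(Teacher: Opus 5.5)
Your route is the paper's. You induct over both closures, and the $\delta$-free NIs are disposed of by Lemma~\ref{lemma_negative_strict} together with $(\nSup)$. Your left-to-right case analysis is complete. Folding $\{B_1,\delta\}\subs\neg B_2$ and $\{B_1,B_2\}\subs\neg\delta$ into one right-to-left invariant is a cleaner version of the paper's nested ``Part~2''.

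The gap is that your list of shapes is not closed under the rules. So the invariant as stated is not inductive, and the shape invariant you plan to verify (``at most one occurrence of $\delta$'') is false. Apply $(\hHTrans)$ with a $\D^\delta$-axiom $\{B_1,\delta\}\subs B_3$ to $\{B_3,B_2\}\subs\neg\delta$, taking $CL'=\{B_2\}$. This produces $\{B_1,B_2,\delta\}\subs\neg\delta$, in general with $B_1\neq B_2$. For instance, with $\T=\{A\subs\neg C\}$ and $\D=\{E\dsubs F,\ F\dsubs A\}$ one derives $\{F,\delta\}\subs\neg C$, then $\{F,C\}\subs\neg\delta$, then $\{E,C,\delta\}\subs\neg\delta$. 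This NI is none of your listed shapes, and it feeds back into the target shape: $(\hHContr)$ with $CL=\{B_1,\delta\}$ turns it into $\{B_1,\delta\}\subs\neg B_2$. At that step your induction hypothesis says nothing about the premise. The paper's Case~2 of $(\hHContr)$, which only considers $CL=\{B_1\}$, has the same omission.

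The repair is local.
\begin{itemize}
\item Add the clause: if $\{B_1,B_2,\delta\}\subs\neg\delta\in\clnd(\KB^\delta)$ then $\norm{B_1}\subs\neg\norm{B_2}\in\clnn(\norm{\KB})$, allowing $B_1=B_2$. This covers your $\{B,\delta\}\subs\neg\delta$. Likewise $\{\delta\}\subs\neg B$ and $\{B\}\subs\neg\delta$ both map to $\norm{B}\subs\neg\norm{B}$.
\item The new cases close like the others. $(\hHTrans)$ into this shape is mirrored by $\norm{B_1}\subs\norm{B_3}\in\norm{\D}$ (or $\norm{\T}$) and $(\cHTrans)$. $(\RhH_1)$ and $(\RhH_2)$ are mirrored via the copies from Remark~\ref{roleinc} and $(\cHTrans)$. $(\hHContr)$ out of this shape is mirrored by $(\cHContr)$.
\item Restate the shape invariant: for every derived $CL\subs\neg B$, the set $CL\cup\{B\}$ contains at most two basic concepts of $\KB$, with $\delta$ allowed on both sides. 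Here $CL$ and $CL'$ in $(\hHContr)$ and $(\hHTrans)$ are read as the remainder of the left-hand side.
\item This invariant is not inductive on its own. The number of basic concepts can grow only when $(\hHTrans)$ or $(\RhH_{1,2})$ rewrites a left-hand concept that is also the RHS. So prove it jointly with the auxiliary fact that a basic RHS $B$ occurs on the left only in $B\subs\neg B$ and $\{B,\delta\}\subs\neg B$, where that growth is harmless.
\end{itemize}
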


\begin{proof} \ \\
\begin{description}

\item[(Left to right):]

    We need to prove that if $\norm{B_1}\subs \neg \norm{B_2}\in\clnn(\norm{\KB})$, then either  $\{B_1,\delta\}\subs \neg B_2$ or $B_1\subs \neg B_2$ is in $\clnd(\KB^\delta)$. We prove this by  induction on the application of the rules of the NI-closure  $\clnn$. Let $\clnn_i(\norm{\KB})$ be the set of negative inclusions obtained from $\norm{\KB}$ after the application of $i$ rules of $\clnn$.  
 Our induction hypothesis is that, for any basic concepts $B_1,B_2$, if $\norm{B_1}\subs \neg \norm{B_2}\in\clnn_i(\norm{\KB})$, then either $\{B_1,\delta\}\subs \neg B_2$ or $B_1\subs \neg B_2$ is in $\clnd(\KB^\delta)$. We need to prove that, whatever rule we use to obtain $\clnn_{i+1}(\norm{\KB})$, if $\norm{B_1}\subs \neg \norm{B_2}\in\clnn_{i+1}(\norm{\KB})$, then 
 either $\{B_1,\delta\}\subs \neg B_2$ or $B_1\subs \neg B_2$ is in $\clnd(\KB^\delta)$, for any basic concepts $B_1,B_2$.

    \begin{description}
        \item[Step 0:] If $\norm{B_1}\subs \neg \norm{B_2}\in\clnn_0(\norm{\KB})$ then $\norm{B_1}\subs \neg \norm{B_2}\in\norm{\KB}$. By the definition of the $n$-transformation, this means that either $B_1\dsubs \neg B_2$ or $B_1\subs \neg B_2$ is in $\KB$. In the former case, by $\delta$-transformation we have $\{B_1,\delta\}\subs \neg B_2\in\KB^\delta$, while in the latter case, we have $B_1\subs \neg B_2\in\KB^\delta$.

         \item[Step $i+1$:] We have to consider all the possible rules used in $\clnn$ to derive an inclusion of the form $\norm{B_1}\subs \norm{B_2}$. Note that ($\mathbf{\RcH_1}$)-($\mathbf{\RcH_3}$) do not need to be considered, as we cannot use them to derive a NI of the form $\norm{B_1}\subs \norm{B_2}$.

        \begin{description}
            \item[($\mathbf{\cHContr}$):] {\bf IF}  $B_{1} \impc \notc B_2\in \clnn_i(\norm{\KB})$ {\bf THEN} we have 
            $B_{2} \impc \notc B_{1} \in \clnn_{i+1}(\norm{\KB})$. For  $\norm{B_1}\subs \neg \norm{B_2}\in \clnn_{i+1}(\norm{\KB})$ via ($\mathbf{\cHContr}$), $\norm{B_2}\subs \neg \norm{B_1}$,  must be  in $\clnn_i(\norm{\KB})$. Hence, by induction hypothesis,  either $\{B_2,\delta\}\subs \neg B_1$ or $B_2\subs \neg B_1$ is in $\clnd(\KB^\delta)$, and consequently 
            either $\{B_1,\delta\}\subs \neg B_2$ or $B_1\subs \neg B_2$ is in $\clnd(\KB^\delta)$ because of the closure under ($\mathbf{\hHContr}$).
 
            \item[($\mathbf{\cHTrans}$):] {\bf IF}  $B_{1} \impc  B_{2} \in \norm{\KB}$ {\bf AND}  $B_{2} \impc \notc B_{3} \in \clnn_i(\norm{\KB})$ {\bf THEN}  $B_{1} \impc \notc B_{3} \in \clnn_{i+1}(\norm{\KB})$. For obtaining $\norm{B_1}\subs \neg \norm{B_2}\in \clnn_{i+1}(\norm{\KB})$ by the application of ($\mathbf{\cHTrans}$), we need $\norm{B_1}\subs  \norm{B_3}\in \norm{\KB}$ and $\norm{B_3}\subs \neg \norm{B_2}\in \clnn_{i}(\norm{\KB})$, for some $B_3$. By induction hypothesis,  $\norm{B_3}\subs \neg \norm{B_2}\in \clnn_{i}(\norm{\KB})$ implies either
            $\{B_3,\delta\}\subs \neg B_2$ or $B_3\subs \neg B_2$ in $\clnd(\KB^\delta)$. By $n$-transformation, $\norm{B_1}\subs  \norm{B_3}\in \norm{\KB}$ implies  $B_1\subs  B_3$ in the original $\T$ or $B_1\dsubs  B_3$ in the original $\D$. By $\delta$-transformation, the former implies $B_1\subs B_3\in \KB^\delta$, and the latter implies $\{B_1,\delta\}\subs  B_3\in \KB^\delta$. In any case, by ($\mathbf{\hHTrans}$) we have either $\{B_1,\delta\}\subs \neg B_2$ or  $B_1\subs \neg B_2$ in $\clnd(\KB^\delta)$.

            \item[($\nSup$):] {\bf IF}   $B_1\subs\neg B_2 \in \clnn_i(\norm{\KB})$ {\bf THEN}  $\norm{B_1}\subs\neg \norm{B_2}\in\clnn_{i+1}(\norm{\KB})$. If $\norm{B_1}\subs\neg \norm{B_2}\in\clnn_{i+1}(\norm{\KB})$ has been obtained by ($\nSup$), then $B_1\subs\neg B_2 \in \clnn_i(\norm{\KB})$ and, thus, $B_1\subs\neg B_2 \in \clnn(\norm{\KB})$. By Lemma~\ref{lemma_negative_strict} this implies $B_1\subs\neg B_2 \in \clnd(\KB^\delta)$, as desired.
        \end{description}
    \end{description}

\item[(Right to left):]


    We need to prove that if either $\{B_1,\delta\}\subs \neg B_2$ or $B_1\subs \neg B_2$ is in $\clnd(\KB^\delta)$, then $\norm{B_1}\subs \neg \norm{B_2}\in\clnn(\norm{\KB})$. Note that Lemma~\ref{lemma_negative_strict} and the rule ($\mathbf{\norm{Sup}}$) already guarantee  that if $B_1\subs \neg B_2\in\clnd(\KB^\delta)$, then $\norm{B_1}\subs \neg \norm{B_2}\in\clnn(\norm{\KB})$. Therefore, it remains to prove that if $\{B_1,\delta\}\subs \neg B_2\in\clnd(\KB^\delta)$, then $\norm{B_1}\subs \neg \norm{B_2}\in\clnn(\norm{\KB})$.

    We prove it by  induction on the application of the rules of the NI-closure $\clnd$. So, let $\clnd_i(\KB^\delta)$ be the set of negative inclusions obtained from $\KB^\delta$ after the application of $i$ rules of $\clnd$. Hence $\clnd_0(\KB^\delta)$ contains all the NI's in $\KB^\delta$. 
Our induction hypothesis is that,  for any basic concepts $B_1,B_2$,  if $\{B_1,\delta\}\subs \neg B_2\in\clnd_i(\KB^\delta)$, then $\norm{B_1}\subs \neg \norm{B_2}\in\clnn(\norm{\KB})$. 



    \begin{description}
        \item[Step 0:] $\{B_1,\delta\}\subs \neg B_2\in\clnd_0(\KB^\delta)$  means $\{B_1,\delta\}\subs \neg B_2\in\KB^\delta$. That is, 
        $B_1\dsubs \neg B_2 \in \D$. By $n$-transformation we have that $\norm{B_1}\subs \neg \norm{B_2}\in\norm{\KB}$, that implies $\norm{B_1}\subs \neg \norm{B_2}\in\clnn(\norm{\KB})$.

        \item[Step $i+1$:] 
        
        We need to prove that, whatever rule we use to obtain $\clnd_{i+1}(\KB^\delta)$, then, for any basic concept $B_1,B_2$, if $\{B_1,\delta\}\subs \neg B_2\in\clnd_{i+1}(\KB^\delta)$ then $\norm{B_1}\subs \neg \norm{B_2}\in\clnn(\norm{\KB})$. We have to consider all the possible rules used in $\clnd$:

        \begin{description}
            \item[($\mathbf{\hHContr}$):] {\bf IF} $CL \cup\{B_1\}  \subs \neg B_2 \in \clnd_i(\KB^\delta)$ 
     {\bf THEN}  $CL \cup\{B_2\}  \subs \neg B_1 \in \clnd_{i+1}(\KB^\delta)$.  If $\{B_1,\delta\}\subs \neg B_2$ has been obtained by ($\mathbf{\hHContr}$) then we have two possible cases.

     \begin{description}
         \item[Case 1:] $CL=\{\delta\}$, with $\{B_1,\delta\}\subs \neg B_2$ obtained from $\{B_2,\delta\}\subs \neg B_1$, that must be  in $\clnd_i(\KB^\delta)$ and then, by induction hypothesis,  $\norm{B_2}\subs \neg \norm{B_1}$ is in $\clnn(\norm{\KB})$. Hence $\norm{B_1}\subs \neg \norm{B_2}\in\clnn(\norm{\KB})$ because of the closure under ($\mathbf{\cHContr}$).

         \item[Case 2:] $CL=\{B_1\}$, with $\{B_1,\delta\}\subs \neg B_2$ obtained from $\{B_1,B_2\}\subs \neg \delta$, that must be  in $\clnd_i(\KB^\delta)$. Due to the $\delta$-transformation, $\{B_1,B_2\}\subs \neg \delta \not\in \KB^\delta$ and it must have been added to $\clnd_i(\KB^\delta)$ by the application of some rule. That is, $\{B_1,B_2\}\subs \neg \delta \in \clnd_j(\KB^\delta)$ for some $0< j\leq i$.
         Now, it can be proven, see {\bf Part 2} below, that if $\{B_1,B_2\}\subs \neg \delta$ is in $\clnd_j(\KB^\delta)$ ($0<j\leq i$) then $\norm{B_1}\subs \neg \norm{B_2}\in\clnn(\norm{\KB})$ has to be the case as well.
\end{description}

\item[($\mathbf{\hHTrans}$):] {\bf IF} $CL \subs B_1 \in \KB^\delta$ {\bf AND} 
    $CL' \cup\{B_1\}  \subs \neg B_2 \in \clnd_i(\KB^\delta)$ 
    {\bf THEN}  $CL \cup CL' \subs \neg B_2 \in \clnd_{i+1}(\KB^\delta)$. For obtaining $\{B_1,\delta\}\subs \neg B_2$ by the application of ($\mathbf{\hHTrans}$), we have the following possibilities: $\{B_1,\delta\}\subs B_3 \in \KB^\delta$ and $\{B_3,\delta\}\subs \neg B_2 \in \clnd_{i}(\KB^\delta)$; or $\{B_1,\delta\}\subs B_3 \in \KB^\delta$ and 
    $B_3\subs \neg B_2 \in \clnd_{i}(\KB^\delta)$; or $B_1\subs B_3 \in \KB^\delta$ and 
    $\{B_3,\delta\}\subs \neg B_2 \in \clnd_{i}(\KB^\delta)$. We can focus on the first case, as the other cases follow in a similar manner.

 So, assume $\{B_1,\delta\}\subs B_3 \in \KB^\delta$ and $\{B_3,\delta\}\subs \neg B_2 \in \clnd_{i}(\KB^\delta)$. By $\delta$-transformation and $n$-transformation we have that $\{B_1,\delta\}\subs B_3\in\KB^\delta$ implies $B_1\dsubs B_3\in \D$, that in turn implies $\norm{B_1}\subs \norm{B_3}\in \norm{\KB}$. By induction hypothesis, $\{B_3,\delta\}\subs \neg B_2\in\clnd_{i}(\KB^\delta)$ implies $\norm{B_3}\subs \neg \norm{B_2}\in\clnn(\norm{\KB})$ and, thus, by ($\mathbf{\cHTrans}$), 
  $\norm{B_1}\subs \norm{B_3} \in \norm{\KB}$ and $\norm{B_3} \subs \neg \norm{B_2}\in\clnn(\norm{\KB})$ imply $\norm{B_1}\subs \neg \norm{B_2}\in\clnn(\norm{\KB})$.

    \item[($\mathbf{\RhH_1}$):]  
    {\bf IF} $R_1 \subs R_2 \in \KB^\delta$ {\bf AND} 
    $CL \cup \{\exists R_2\}  \subs \neg B \in \clnd_i(\KB^\delta)$ 
    {\bf THEN}  $CL \cup \{\exists R_1 \}  \subs \neg B  \in \clnd_{i+1}(\KB^\delta)$. Assume that $\{B_1,\delta\}\subs \neg B_2$ has been obtained by ($\mathbf{\RhH_1}$): hence $B_1=\exists R_1$, $CL=\{\delta\}$ and $\{\exists R_1,\delta\}\subs \neg B_2$ has been obtained from $R_1\subs R_2$ and $\{\exists R_2,\delta\}\subs \neg B_2$, the former being in the original $\T$ and the latter in $\clnd_i(\KB^\delta)$. Then, $R_1\subs R_2\in\T$ implies $\norm{\exists R_1}\subs \norm{\exists R_2}\in\norm{\KB}$ by $n$-transformation, while, by induction hypothesis,  $\norm{\exists R_2}\subs \neg \norm{B_2}\in\clnn(\norm{\KB})$. Hence, by ($\mathbf{\cHTrans}$), $\norm{\exists R_1} \subs \neg \norm{B_2}\in\clnn(\norm{\KB})$.


\item[($\mathbf{\RhH_2}$):]   The reader can easily check that the proof for ($\mathbf{\RhH_2}$) is similar to the case $\mathbf{\RhH_1}$).



  \item[($\mathbf{\RhH_3}$):]   Eventually, ($\mathbf{\RhH_3}$) does not apply, since we cannot obtain a NI of form $\{B_1,\delta\}\subs \neg B_2$ from them.
            
\end{description}

\end{description}
\end{description}


\nd It remains to complete Case 2 of ($\mathbf{\hHContr}$) above, that we address next.

\vspace*{1em}
\nd {\bf Part 2.} Let us recall the induction hypothesis on $i$: if $\{B_1,\delta\}\subs \neg B_2\in\clnd_i(\KB^\delta)$, then $\norm{B_1}\subs \neg \norm{B_2}\in\clnn(\norm{\KB})$. We need to prove that if $\{B_1,B_2\}\subs \neg \delta \in \clnd_{j}(\KB^\delta)$ ($0< j\leq i$) then $\norm{B_1}\subs \neg \norm{B_2}\in\clnn(\norm{\KB})$ has to be the case as well. We prove the latter by induction on $0 < j \leq i$. 


 \begin{description}
        \item[Step $j=1$:] Assume $\{B_1,B_2\}\subs \neg \delta \in\clnd_1(\KB^\delta)$. The only possibility is that it has been obtained via ($\mathbf{\hHContr}$) from either $\{B_1,\delta\}\subs \neg B_2\in \clnhH_{0}(\KB^\delta)$ $\subseteq \clnhH_{i}(\KB^\delta)$ or $\{B_2,\delta\}\subs \neg B_1\in \clnhH_{0}(\KB^\delta) \subseteq \clnhH_{i}(\KB^\delta)$. 
        %
        %
        By induction hypothesis on $i$, either $\norm{B_1}\subs \neg \norm{B_2}$ or 
        $\norm{B_2}\subs \neg \norm{B_1}$ is in $\clnn(\norm{\KB})$. In the latter case, $\norm{B_1}\subs \neg \norm{B_2}\in\clnn(\norm{\KB})$ by ($\mathbf{\cHContr}$).

        \item[Step $j+1 \leq i$:]  Assume $\{B_1,B_2\}\subs \neg \delta \in\clnd_{j+1}(\KB^\delta)$.  We have to consider all the possible rules used to derive it (note that rule ($\mathbf{\RhH_3}$)  does not apply):

    \begin{description}
    \item[($\mathbf{\hHContr}$):] This case is like step $j=1$: if $\{B_1,B_2\}\subs \neg \delta$ has been obtained using ($\mathbf{\hHContr}$), the only two possibilities are that is has been obtained  from either $\{B_1,\delta\}\subs \neg B_2\in \clnhH_{j}(\KB^\delta) \subseteq \clnhH_{i}(\KB^\delta)$ or $\{B_2,\delta\}\subs \neg B_1\in \clnhH_{j}(\KB^\delta) \subseteq \clnhH_{i}(\KB^\delta)$. By induction hypothesis on $i$, $\norm{B_1}\subs \neg \norm{B_2}\in\clnn(\norm{\KB})$ or $\norm{B_2}\subs \neg \norm{B_1}\in\clnn(\norm{\KB})$. In the latter case, $\norm{B_1}\subs \neg \norm{B_2}\in\clnn(\norm{\KB})$ by ($\mathbf{\cHContr}$).

    \item[($\mathbf{\hHTrans}$):] Assume $\{B_1,B_2\}\subs \neg \delta \in \clnd_{j+1}(\KB^\delta)$ has been obtained from $B_1\subs B_3\in \KB^\delta$ and $\{B_3,B_2\}\subs \neg \delta \in \clnd_{j}(\KB^\delta) \subseteq \clnhH_{i}(\KB^\delta)$  (for some basic concept $B_3$). By $n$-transformation  $\norm{B_1}\subs  \norm{B_3}\in \norm{\KB}$. 
    By induction hypothesis on $j$, $\norm{B_3}\subs \neg \norm{B_2}\in\clnn(\norm{\KB})$  and, thus, by  ($\mathbf{\cHTrans}$), $\norm{B_1}\subs \neg \norm{B_2}\in\clnn(\norm{\KB})$.

    \item[($\mathbf{\RhH_1}$):] Assume that $\{\exists R_2,B_2\}\subs \neg \delta \in \clnd_{j+1}(\KB^\delta)$ has been obtained from $R_2\subs R_1\in \KB^\delta$ and $\{\exists R_1,B_2\}\subs \neg \delta \in \clnd_j(\KB^\delta)$  (for some role $R_1$). 
    By $n$-transformation  $\norm{\exists R_2} \subs  \norm{\exists R_1}\in \norm{\KB}$. By induction hypothesis on $j$, $\norm{\exists R_1}\subs \neg \norm{B_2}\in\clnn(\norm{\KB})$ and, thus, by  ($\mathbf{\cHTrans}$), $\norm{\exists R_2}\subs \neg \norm{B_2}\in\clnn(\norm{\KB})$. 

    \item[($\mathbf{\RhH_2}$):] Assume $\{\exists R_2^-,B_2\}\subs \neg \delta \in \clnd_{j+1}(\KB^\delta)$ has been obtained from $R_2\subs R_1\in \KB^\delta$ and $\{\exists R_1^-,B_2\}\subs \neg \delta \in \clnd_j(\KB^\delta)$  (for some role $R_1$). In this case the prove is the same as for ($\mathbf{\RhH_1}$).
\end{description}

\end{description}

\end{proof}

\nd Now the poof of Proposition \ref{prop_dllitecoreexcept} is straightforward. 

\begin{proof}[Proof of Proposition \ref{prop_dllitecoreexcept}]
It is an immediate consequence of Lemma \ref{lemma_exceptional_n_form} and   Proposition~\ref{entailmentcor}. 
Specifically, 
by Proposition~\ref{entailmentcor}, 
$\norm{B} \impc \notc \norm{B} \in \clnn(\norm{\KB})$ iff
$\norm{\KB} \entails \norm{B} \subs \neg \norm{B}$.
Now, assume $\KB^\delta\entails \{B,\delta\}\subs\neg B$, so by Lemma \ref{lemma_exceptional_n_form}, 
$\norm{B} \impc \notc \norm{B} \in \clnn(\norm{\KB})$. Vice-versa, assume 
$\norm{B} \impc \notc \norm{B} \in \clnn(\norm{\KB})$. Then, by Lemma~\ref{lemma_exceptional_n_form}
either  $\{B, \delta\}\subs \neg B$ or $B \subs \neg B$ is in $\clnd(\KB^\delta)$.
In the former case, $\KB^\delta\entails \{B,\delta\}\subs\neg B$ follows. In the latter case, 
$\KB^\delta\entails B\subs\neg B$, which implies $\KB^\delta\entails \{B,\delta\}\subs\neg B$.
This concludes the proof.
\end{proof}

\begin{appproposition}\label{correspondensP}
Given a defeasible \dllitecoreH~KB  $\KB=\tuple{\T,\D}$ and a DI  $B \dsubs A$, then  $\norm{\KB} \models \norm{B} \subs \norm{A}$ 
iff $\KB^\delta \models \{B,\delta\}\subs A$.
\end{appproposition}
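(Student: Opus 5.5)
Both directions go through the same device as Proposition~\ref{prop_dllitecoreexcept}: we turn positive entailments into negative inclusions by adding fresh material, and then apply the closure correspondences already proved. Let $A^*$ be a fresh concept name, and extend $\T$ with the NI $A\subs\neg A^*$. On the normal side this produces $\norm{A}\subs\neg\norm{A^*}$; on the $\delta$ side it produces $A\subs\neg A^*$. Call the extended KB $\KB'$.

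Then, classically,
\[
\norm{\KB}\entails\norm{B}\subs\norm{A} \quad\text{iff}\quad \norm{\KB'}\entails \norm{B}\subs\neg\norm{A^*}.
\]
The step needing care is the right-to-left direction. Take a model of $\norm{\KB}$ with an element in $\norm{B}\setminus\norm{A}$, and put $\norm{A^*}$ equal to that single element and $A^*$ empty. This is a model of $\norm{\KB'}$ violating $\norm{B}\subs\neg\norm{A^*}$, because $A^*$ and $\norm{A^*}$ occur nowhere else. In the same way,
\[
\KB^\delta\entails\{B,\delta\}\subs A \quad\text{iff}\quad \KB'^\delta\entails\{B,\delta\}\subs\neg A^*.
\]
Here we interpret $A^*$ as a singleton in $B\cap\delta\setminus A$.

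Next we pass to closures. By Proposition~\ref{entailmentcor}, applied to the \dllitecoreH\ TBox $\norm{\KB'}$, the entailment $\norm{\KB'}\entails\norm{B}\subs\neg\norm{A^*}$ holds iff one of three inclusions lies in $\clncH(\norm{\KB'})$: $\norm{B}\subs\neg\norm{A^*}$, $\norm{B}\subs\neg\norm{B}$, or $\norm{A^*}\subs\neg\norm{A^*}$. We must check that the same holds for $\clnn(\norm{\KB'})$. This is fine, since the extra rule $\nSup$ is sound and Proposition~\ref{prop_dllitecoreexcept}'s route already uses exactly this. By Proposition~\ref{entailmenthornH}, the entailment $\KB'^\delta\entails\{B,\delta\}\subs\neg A^*$ holds iff $\clnd(\KB'^\delta)$ contains one of the following: an NI $CL'\subs\neg A^*$ with $CL'\subseteq\{B,\delta\}$, an NI $CL'\subs\neg B'$ with $CL'\cup\{B'\}\subseteq\{B,\delta\}$, or $A^*\subs\neg A^*$.

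Finally we match the cases using Lemma~\ref{lemma_exceptional_n_form}, which gives $\norm{B_1}\subs\neg\norm{B_2}\in\clnn$ iff $\{B_1,\delta\}\subs\neg B_2$ or $B_1\subs\neg B_2$ lies in $\clnd$. Note that the lemma is stated for basic concepts, but $A^*$ is a fresh concept name and hence basic.
\begin{itemize}
\item The NI $\norm{B}\subs\neg\norm{A^*}$ corresponds to $\{B,\delta\}\subs\neg A^*$ or $B\subs\neg A^*$.
\item The NI $\norm{B}\subs\neg\norm{B}$ corresponds to $\{B,\delta\}\subs\neg B$ or $B\subs\neg B$.
\item The NI $\norm{A^*}\subs\neg\norm{A^*}$ corresponds to $A^*\subs\neg A^*$ or $\{A^*,\delta\}\subs\neg A^*$. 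By $\hHContr$, the latter gives $\{A^*,A^*\}\subs\neg\delta$, i.e.\ $A^*\subs\neg\delta$, which is harmless in either direction; more simply, each of these cases yields the entailment directly.
\end{itemize}
The remaining horn cases are $\{\delta\}\subs\neg A^*$, $\delta\subs\neg B$, $B\subs\neg\delta$, and $\delta\subs\neg\delta$. For these, we show that from the left-to-right direction they still give $\norm{\KB'}\entails\norm{B}\subs\neg\norm{A^*}$. The reason is that any NI with $\delta$ alone on one side can only arise through DIs, via $\{\cdot,\delta\}$ forms; the $\delta$-transformation of a core KB never has $\delta$ alone as an antecedent. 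A short induction of the kind in Part~2 of Lemma~\ref{lemma_exceptional_n_form} shows that such NIs in fact never appear unless $\delta\subs\neg\delta$ is derivable, and that one is impossible. I expect this bookkeeping of the degenerate $\delta$-only cases to be the main obstacle. If it resists, the fallback is a direct semantic argument. Starting from a countermodel of $\KB^\delta$, set $\norm{C}^{\I}:=C^\I\cap\delta^\I$ for every basic $C$; this satisfies $\norm{\KB}$, since the $\norm{\T}$ inclusions are preserved by intersecting with $\delta$, as is each $\norm{B}\subs\norm{A}$ from $\{B,\delta\}\subs A$. Conversely, starting from a countermodel of $\norm{\KB}$, the disjoint-union and shifting argument of Lemma~\ref{lemma_main} yields a countermodel of $\KB^\delta$.
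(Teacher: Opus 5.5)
Your argument for $\norm{\KB}\entails\norm{B}\subs\norm{A}\Rightarrow\KB^\delta\entails\{B,\delta\}\subs A$ is fine. The closure route only needs $\clncH(\norm{\KB'})\subseteq\clnn(\norm{\KB'})$, the left-to-right half of Lemma~\ref{lemma_exceptional_n_form} and soundness of the horn rules, and your fallback $\norm{C}:=C\cap\delta$ is exactly the paper's argument.

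The converse direction has two problems. The first is your claim that NIs with $\delta$ alone on one side never arise; they do. Take $\T=\{E\subs\neg C\}$ and $\D=\{E\dsubs C\}$. Then ($\mathbf{\hHContr}$) gives $C\subs\neg E$, ($\mathbf{\hHTrans}$) with $\{E,\delta\}\subs C$ gives $\{E,\delta\}\subs\neg E$, and ($\mathbf{\hHContr}$) then yields $E\subs\neg\delta$ and $\delta\subs\neg E$. This is repairable. ($\mathbf{\hHContr}$), with $CL$ allowed to contain the swapped concept, turns $\delta\subs\neg X$ into $X\subs\neg\delta$ and then into $\{X,\delta\}\subs\neg X$. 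Lemma~\ref{lemma_exceptional_n_form} maps the latter to $\norm{X}\subs\neg\norm{X}$. Taking $X\in\{B,A^*\}$ covers your leftover cases. Only $\delta\subs\neg\delta$ is genuinely impossible, by soundness, since $\delta$ can be non-empty in a model of $\KB^\delta$.

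The second problem cannot be repaired. You still must pass from membership in $\clnn(\norm{\KB'})$ to entailment by $\norm{\KB'}$, but Proposition~\ref{entailmentcor} only concerns $\clncH$. Your justification, that ($\nSup$) is sound, is false. ($\mathbf{\RcH_3}$) turns $\exists R\subs\neg\exists R$ into $\exists R^-\subs\neg\exists R^-$, and ($\nSup$) then adds $\norm{\exists R^-}\subs\neg\norm{\exists R^-}$. Yet $\norm{\T}$ never relates the fresh names $\norm{\exists R}$ and $\norm{\exists R^-}$.

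No argument can close this step, because the statement itself fails here. Let $\T=\{\exists P\subs\neg\exists P\}$ and $\D=\emptyset$, and take the DI $\exists P^-\dsubs A$ with $A$ fresh.
\begin{itemize}
\item Since $P$ is empty in every model of $\T$, we have $\KB^\delta=\T\entails\{\exists P^-,\delta\}\subs A$.
\item The one-element interpretation whose element belongs only to $\norm{\exists P^-}$ is a model of $\norm{\KB}=\{\exists P\subs\neg\exists P,\ \norm{\exists P}\subs\neg\norm{\exists P}\}$, and it violates $\norm{\exists P^-}\subs\norm{A}$.
\end{itemize}

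The paper's semantic proof breaks at the same spot. To give the new element $z$ a $P$-predecessor whenever $\bar x\in\norm{\exists P^-}$, it needs $\exists P^-$ satisfiable w.r.t.\ $\T$. It obtains this from Proposition~\ref{prop_dllitecoreexcept}, whose proof identifies $\clnn$ with $\clncH$ in the same way you do.

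Your fallback for this direction does not address the issue either. Lemma~\ref{lemma_main} contains no disjoint-union or shifting construction. The real work is realising the normal type of $\bar x$ as an actual element with genuine role successors, and that is exactly where satisfiability of the relevant existentials is required. An extra hypothesis or an amended $n$-transformation is needed before either proof can go through. One option is to put into $\norm{\T}$ the normal copies of all NIs of $\clncH(\T)$, which makes ($\nSup$) sound.
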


\begin{proof} We proceed by proving both directions semantically, by contraposition.

\begin{description}

\item[(Left to right):]

We prove  that  if $\KB^\delta \not\models \{B,\delta\}\subs A$ then $\norm{\KB} \not\models \norm{B} \subs 
\norm{A}$.
So, 
$\KB^\delta \not\models \{B,\delta\}\subs A$ implies that there is some \dllitecoreH~interpretation $\I_{\delta}=\tuple{\Delta^{\I_{\delta}},\cdot^{\I_{\delta}}}$ that satisfies $\KB^\delta$ \sst~there is $x\in (B^{\I_{\delta}}\cap \delta^{\I_{\delta}})$ with $x\notin A^{\I_{\delta}}$. Now we define a \dllitecoreH~interpretation $\I_{\norm{}}=\tuple{\Delta^{\I_{\norm{}}},\cdot^{\I_{\norm{}}}}$ with $\Delta^{\I_{\norm{}}}=\Delta^{\I_{\delta}}$  and $\cdot^{\I_{\norm{}}}$ defined as follows:

\begin{itemize}
    \item $A^{\I_{\norm{}}}=A^{\I_{\delta}}$ for every  concept name $A$;
    \item $P^{\I_{\norm{}}}=P^{\I_{\delta}}$ for every  role name $P$;
    \item $\norm{B}^{\I_{\norm{}}}=B^{\I_{\delta}}\cap \delta^{\I_{\delta}}$ for every basic concept $B$.
\end{itemize}

\nd Now, it can easily be verified that by $\delta$-transformation and $n$-transformation we have that $\I_{\norm{}}$ is a model of $\norm{\KB}$ 
\sst~$\I_{\norm{}}\not\sat \norm{B} \subs \norm{A}$, as $x\in \norm{B}^{\I_{\norm{}}}$ but $x\notin \norm{A}^{\I_{\norm{}}}$.

\item[(Right to left):] We prove that if $\norm{\KB} \not\models \norm{B} \subs \norm{A}$ then $\KB^\delta \not\models \{B,\delta\}\subs A$.
So, $\norm{\KB} \not\models \norm{B} \subs \norm{A}$ implies that there is a 
model $\I_{\norm{}}=\tuple{\Delta^{\I_{\norm{}}},\cdot^{\I_{\norm{}}}}$ of the \ \dllitecoreH~KB $\norm{\KB}$ \sst~there is $\bar{x}\in \norm{B}^{\I_{\norm{}}}$ with $\bar{x}\notin \norm{A}^{\I_{\norm{}}}$.

Now we are going to define a  model for the \dllitehornH~KB $\KB^\delta$ that does not satisfy $\{B,\delta\}\subs A$. 
To start with, we define a model of $\T$.

Let $\Delta$ be any countable domain, and let $\M^\T$ be the set of all the models of $\T$ with domain $\Delta$. Let $\BM^\cup_\T=\tuple{\Delta^{\BM^\cup_\T},\cdot^{\BM^\cup_\T}}$ be the disjoint union of the $\dllitehornH$ interpretations in $\M^\T$. Specifically,  given $\Delta=\{x,y\ldots\}$, we indicate as $\Delta_\I=\{x_\I,y_\I,\ldots\}$ a copy of the domain $\Delta$ associated with an interpretation $\I\in\M^\T$, and we define 

\begin{itemize}
    \item $\Delta^{\BM^\cup_\T}=\bigcup_{\I\in\M^\T} \Delta_\I$;
    \item $\cdot^{\BM^\cup_\T}$ is defined as follows: for every $x_\I, y_{\I'}\in\Delta^{\BM^\cup_\T}$, every  concept name $A$ and every role name $P$,
\begin{itemize}
\item $x_\I\in A^{\BM^\cup_\T}$ iff $x\in A^{\I}$;
\item $(x_\I, y_{\I'})\in P^{\BM^\cup_\T}$ iff $\I=\I'$ and $(x,y)\in P^{\I}$.
\end{itemize}
\end{itemize}

Since $\dllitecoreH$ satisfies the \emph{Disjoint Union Property}, 
$\BM^\cup_\T$ is a model of $\T$. Moreover, since $\dllitecoreH$ satisfies the \emph{Finite Model Property} (Remark \ref{finitedllitehornH}), given  any concept $C$, if it is satisfied by some model of $\T$, then it is also satisfied by $\BM^\cup_\T$, as we are going to show in the next paragraph.

If $C$ is satisfied by some model of $\T$, then, by the \emph{Finite Model Property}, it is satisfied by some finite model $\I$ of $\T$. From $\I$ it is possible to define a model $\I'$ with a countable domain (e.g., by doing the disjoint union of a countable set of copies of $\I$). Given $\I'$, we can define a model $\I^\Delta\in\M^\T$ by defining a bijection $b$ from the domain of $\I'$ to $\Delta$ and defining over $\Delta$ a corresponding interpretation for concept names and role names. Specifically, for any $x\in\Delta$ and any atomic concept $A$, $x\in A^{\I^\Delta}$ iff $b^-(x)\in A^{\I'}$; for any pair $(x,y)\in\Delta\times\Delta$ and any role name $P$, $(x,y)\in P^{\I^\Delta}$ iff $(b^-(x),b^-(y))\in P^{\I'}$. $\I^\Delta$ satisfies $C$ and takes part in the definition of $\BM^\cup_\T$, and consequently the interpretation of $C$ will be non-empty also in $\BM^\cup_\T$. So, a concept $C$ is satisfiable \wrt~a TBox $\T$ iff $C$ is satisfied by $\BM^\cup_\T$, which concludes this part.

Now, from $\BM^\cup_\T$ we create a new interpretation $\BM'=\tuple{\Delta^{\BM'},\cdot^{\BM'}}$ that interprets also the concept $\delta$. We set $\Delta^{\BM'}=\Delta^{\BM^\cup_\T}\cup\{z\}$, where $z$ is some object not appearing in $\Delta^{\BM^\cup_\T}$, and $\cdot^{\BM'}$ is such that:

\begin{itemize}
    \item For every concept name $A$, apart from $\delta$, 
    \begin{itemize}
        \item  $A^{\BM'}$ := $A^{\BM^\cup_\T}\cup\{z\}$, if $\bar{x}\in \norm{A}^{\I_{\norm{}}}$;
        \item $A^{\BM'}$ := $A^{\BM^\cup_\T}$, otherwise.
    \end{itemize}

    \item For the new concept name $\delta$, 
    \begin{itemize}
        \item $\delta^{\BM'}$ := $\{z\}$.
    \end{itemize}

    \item For every role name $P$, we determine its interpretation following these three steps:
    \begin{enumerate}
        \item $P^{\BM'}$ := $P^{\BM^\cup_\T}$;
        \item if $\bar{x}\in \norm{\exists P}^{\I_{\norm{}}}$, $P^{\BM'}$ := $P^{\BM'}\cup \{(z,y)\mid y\in ({\exists P^-})^{\BM^\cup_\T}\}$;
        \item if  $\bar{x}\in {\norm{\exists P^-}}^{\I_{\norm{}}}$, $P^{\BM'}$ := $P^{\BM'}\cup \{(y,z)\mid y\in ({\exists P})^{\BM^\cup_\T}\}$;
    \end{enumerate}

\end{itemize}

where, we recall, $\bar{x}$ is an object in the model $\I_{\norm{}}$ s.t. 
$\bar{x}\in \norm{B}^{\I_{\norm{}}}$ and $\bar{x}\notin \norm{A}^{\I_{\norm{}}}$.

The idea is the following: $\BM^\cup_\T$ is a model of $\T$. From $\BM^\cup_\T$ we create a model $\BM'$ of $\KB^\delta$ by adding an extra object $z$ that plays in  $\BM'$ the same role that the object $\bar{x}$ plays in the interpretation $\I_{\norm{}}$, that is, $z\in (B\dland\delta)^{\BM'}$ with $z \notin A^{\BM'}$.

To this end, first of all, we have to prove that $\BM'$ is a model of $\KB^\delta$. We start by proving that it is a model of $\T$. We have proved that $\BM^\cup_\T$ is a model of $\T$, and we need to check whether the addition of $z$ to the domain may change this.

Please note the  conditions specified above for the definition of $P^{\BM'}$: given $z$ and some role $P$, if $\bar{x} \in \norm{\exists P}^{\I_{\norm{}}}$ (resp., $\bar{x} \in {\norm{\exists P^-}}^{\I_{\norm{}}}$), in $\BM'$ we create a connection through $P$ (resp., $P^-$) from $z$ to all the other objects $y$ \sst~$y$ was already in the extension of  $\exists P^-$ (resp., $\exists P$) \wrt~$\BM^\cup_\T$. 

We need to check whether, once we have $\bar{x}\in \norm{\exists P}^{\I_{\norm{}}}$ or $\bar{x}\in {\norm{\exists P^-}}^{\I_{\norm{}}}$, we are guaranteed that in $\BM^\cup_\T$ there are already pairs of objects that are connected through $P$. Indeed, it is easy to prove that this is actually the case: $\bar{x}\in \norm{\exists P}^{\I_{\norm{}}}$ implies that $\norm{\KB}\not\models \norm{\exists P}\subs\neg \norm{\exists P}$; from the latter, by Proposition \ref{prop_dllitecoreexcept}, we can derive that the concept $\exists P$ is not exceptional \wrt~$\KB=\tuple{\T,\D}$;  if $\exists P$ is not exceptional, then it cannot be the case that $\T\models \exists P\subs \neg \exists P$, since it would immediately enforce the exceptionality; but $\T\not\models \exists P\subs \neg \exists P$ implies that the concept $\exists P$ is satisfied by the model $\BM^\cup_\T$. Hence, if $\bar{x} \in \norm{\exists P}^{\I_{\norm{}}}$, then in $\BM^\cup_\T$ there are some pairs of objects connected through the role $P$. Analogously, this is also the case if $\bar{x} \in {\norm{\exists P^-}}^{\I_{\norm{}}}$.
Hence, for any concept $C$ and any $x$ in the domain of $\BM^\cup_\T$, we have that $x\in C^{\BM^\cup_\T}$ iff $x\in C^{\BM'}$.

It remains to check the status of the newly introduced $z$.
To do so, recall that $\I_{\norm{}}$ is a model of $\norm{\KB}$, hence for every $B_1\subs B_2\in \T$ (resp., $B_1\subs \neg B_2\in \T$) we have also $\norm{B_1}\subs \norm{B_2}\in \norm{\KB}$ (resp., $\norm{B_1}\subs \neg \norm{B_2}\in \norm{\KB}$). Consequently for every $B_1\subs B_2\in \T$ (resp., $B_1\subs \neg B_2\in \T$) we have that either  $\bar{x}\notin \norm{B_1}^{\I_{\norm{}}}$ or $\bar{x}\in \norm{B_1}^{\I_{\norm{}}}\cap \norm{B_2}^{\I_{\norm{}}}$ (resp., $\bar{x}\notin \norm{B_1}^{\I_{\norm{}}}$ or $\bar{x}\in \norm{B_1}^{\I_{\norm{}}}\cap {\neg \norm{B_2}}^{\I_{\norm{}}}$). 
Hence, by definition of $\BM'$, we have that for every $B\subs C\in \T$,  $z\notin B^{\BM'}$ or $z\in B^{\BM'}\cap C^{\BM'}$. 
The only inclusion axioms in $\T$ that still need to  be checked are the role inclusions $R_1\subs R_2$. 
By construction of $\norm{\KB}$, for every $R_1\subs R_2$ in $\T$, $\I_{\norm{}}$ satisfies also $\norm{\exists R_1}\subs \norm{\exists R_2}$ and $ \norm{\exists {R_1^-}}\subs \norm{\exists {R_2^-}}$, hence if $\bar{x}\in {\norm{\exists R_1}}^{\I_{\norm{}}}$, we have also $\bar{x}\in {\norm{\exists R_2}}^{\I_{\norm{}}}$. By construction of $\BM'$, then we have that $z$ is connected through $R_1$ to all the objects in   $({\exists R_1^-})^{\BM^\cup_\T}$  and is connected through $R_2$ to all the objects in  $({\exists R_2^-})^{\BM^\cup_\T}$ (analogously if $\bar{x}\in {\norm{\exists R_1^-}}^{\I_{\norm{}}}$). 
Since $\BM^\cup_\T$ is a model of $R_1\subs R_2\in \T$, every object in $\BM^\cup_\T$ that is in $\exists R_1^{\BM^\cup_\T}$ is also in 
$\exists R_2^{\BM^\cup_\T}$ (analogously, if it is in $({\exists R_1^-})^{\BM^\cup_\T}$ then it is also in $({\exists R_2^-})^{\BM^\cup_\T}$). 
Hence, in $\BM'$, if $z$ is connected to an object $y$ through $R_1$ (resp.~through $R_1^-$), it is also connected through $R_2$ (resp., through $R_2^-$). Therefore, $\BM'$ satisfies $R_1\subs R_2$.
So, we have proved that $\BM'$ is  a model of $\T$.

We have now to prove that $\BM'$ is also a model of the inclusions $\{B',\delta\}\subs A'\in \KB^\delta$. 
Let us recall that $\delta^{\BM'}$ := $\{z\}$ and, thus,  $(B' \andc\delta)^{\BM'} \subseteq \{z\}$. That is, we may restrict our attention to $z$.
If $\{B',\delta\}\subs A'\in \KB^\delta$, then $\norm{B'}\subs \norm{A'}\in \norm{\KB}$. 
Therefore, either  $\bar{x}\notin \norm{B'}^{\I_{\norm{}}}$ or $\bar{x}\in \norm{B'}^{\I_{\norm{}}}\cap \norm{A'}^{\I_{\norm{}}}$, and as a consequence either 
$z\notin {B'}^{\BM'}$ or $z\in {B'}^{\BM'}\cap\delta^{\BM'}\cap A^{\BM'}$. That is, $\BM'$ satisfies  $\{B',\delta\}\subs A'\in \KB^\delta$ and, thus,
$\BM'$ is  a model of $\KB^{\delta}$.

Finally, it remains to show that $\BM'$ does not satisfy $\{B,\delta\}\subs A$. We know that  $\bar{x}\in \norm{B}^{\I_{\norm{}}}$ and $\bar{x}\notin \norm{A}^{\I_{\norm{}}}$. As a consequence, by construction of $\BM'$, we have $z\in B^{\BM'}\cap\delta^{\BM'}$ but $z\notin A^{\BM'}$ and, thus, 
$\BM'$ is a model of $\KB^\delta$ not satisfying $\{B,\delta\}\subs A$, \ie~$\KB^\delta\not\models \{B,\delta\}\subs A$ holds, which concludes the proof.
\end{description} 
\end{proof}

\nd Now, we can eventually prove Theorem~\ref{prop_dllitecoreRC} from the main body.


\setcounter{theorem}{\getrefnumber{prop_dllitecoreRC}-1}
\begin{theorem}
Given a defeasible \dllitecoreH~KB  $\KB=\tuple{\T,\D}$, a DI $B\dsubs A$ is in the RC of $\KB$ iff 
$\mathtt{RationalClosure.Core}(\KB,B\dsubs A)$ returns $\mathtt{true}$.
\end{theorem}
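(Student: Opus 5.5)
The plan is to reduce Theorem~\ref{prop_dllitecoreRC} to Theorem~\ref{prop_dllitehornRC}. Every defeasible \dllitecoreH~KB is also a defeasible \dllitehornH~KB, so it suffices to show that $\mathtt{RationalClosure.Core}(\KB,B\dsubs A)$ and $\mathtt{RationalClosure.Horn}(\KB,B\dsubs A)$ return the same value. I would establish this in three layers, mirroring the chain of Lemmas~\ref{prop_except}, \ref{prop_computerank}, \ref{prop_RC}.

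\emph{First layer: the exceptionality procedures.} I would show that $\mathtt{Exceptional.Core}(\T,\E)=\mathtt{Exceptional.Horn}(\T,\E)$ for every \dllitecoreH~TBox $\T$ and set of core DIs $\E$. The two procedures differ only in the test of line 4. The Horn test is $\T\cup\E^\delta\entails\{B,\delta\}\subs\bot$, which is equivalent to $\T\cup\E^\delta\entails\{B,\delta\}\subs\neg B$. The Core test is $\norm{\T}\cup\norm{\E}\entails\norm{B}\subs\neg\norm{B}$. These coincide by Proposition~\ref{prop_dllitecoreexcept}, applied to the KB $\tuple{\T,\E}$.

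\emph{Second layer: the ranking procedures.} I would show that $\mathtt{ComputeRanking.Core}(\KB)=\mathtt{ComputeRanking.Horn}(\KB)$. Using the first layer, the sequences $\E_0,\E_1,\ldots$ produced by the two procedures agree at every iteration of the repeat loop, and hence so do $\D_\infty$ and $\D^*$. The only remaining difference is at line 13: Core adds $B\subs\neg B$ where Horn adds $\{B\}\subs\bot$. These two axioms have the same models. Consequently $\T^*$ agrees up to logical equivalence, which is all the subsequent entailment tests see. The same argument gives the same partition $\{\D_0,\ldots,\D_n\}$.

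\emph{Third layer: the rational closure procedures.} I would compare $\mathtt{RationalClosure.Core}$ with $\mathtt{RationalClosure.Horn}$ line by line.
\begin{itemize}
\item Lines 1--7 agree by the second layer.
\item The while-loop guard of Core is $\norm{\T^*}\cup\norm{\D_\R}\entails\norm{B}\subs\neg\norm{B}$. By Proposition~\ref{prop_dllitecoreexcept}, applied to $\tuple{\T^*,\D_\R}$, it is equivalent to $\T^*\cup\D_{\delta_i}\entails\{B,\delta_i\}\subs\bot$. Renaming the fresh atom $\delta$ to $\delta_i$ is harmless. So both loops stop at the same $i$ with the same $\D_\R$.
\item The final test of Core is $\norm{\T^*}\cup\norm{\D_\R}\entails\norm{B}\subs\norm{A}$. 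By Proposition~\ref{correspondensP}, applied to $\tuple{\T^*,\D_\R}$, it is equivalent to $\T^*\cup\D_{\delta_i}\entails\{B,\delta_i\}\subs A$.
\end{itemize}

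I see two small points to address. First, Procedure~\ref{RCalgC} initialises $\norm{\D_\R}$ from $\D_0$, which I would read as the $n$-transformation of $\D_\R=\D^*$. Second, the DIs in $\D$ may have non-atomic consequents (e.g. $\neg B_2$), handled via Remark~\ref{negdef}; Propositions~\ref{prop_dllitecoreexcept} and~\ref{correspondensP} were stated in a way that covers these cases.

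The main obstacle I expect is making sure Propositions~\ref{prop_dllitecoreexcept} and~\ref{correspondensP} may legitimately be applied to the intermediate KBs $\tuple{\T^*,\D_\R}$. These are still \dllitecoreH: the added axioms $B\subs\neg B$ are core NIs. Moreover, Proposition~\ref{correspondensP} requires the $n$-transformation of $\T^*$, including the axioms from Remark~\ref{roleinc}, to be built exactly as in~(\ref{ntrans}). Once this is checked, Theorem~\ref{prop_dllitehornRC} yields the claim.
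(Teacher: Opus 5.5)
Your three layers are the paper's own proof of this theorem: reduce to Theorem~\ref{prop_dllitehornRC}, use Proposition~\ref{prop_dllitecoreexcept} for the exceptionality tests, and use Proposition~\ref{correspondensP} for the final test. Your remarks on line~13 and on the $\D_0$ in line~11 are fine. However, the step ``these coincide by Proposition~\ref{prop_dllitecoreexcept}'' fails, in your write-up and in the paper's. The Core test $\norm{\T}\cup\norm{\E}\entails\norm{B}\subs\neg\norm{B}$ is item~2 of that proposition, and item~2 is \emph{not} equivalent to items~1 and~3. The reason is that the $n$-transformation never relates the fresh names $\norm{\exists R}$ and $\norm{\exists R^-}$. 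An NI produced by ($\mathbf{\RcH_3}$) and copied by ($\nSup$) therefore has no classical counterpart in $\norm{\KB}$, so ($\nSup$) is not sound for $\entails$ over $\norm{\KB}$ (Proposition~\ref{entailmentcor} covers $\clncH$, not $\clnn$).

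Concretely, take $\T=\{\exists R\subs\neg\exists R,\ A'\subs\exists R^-\}$, $\D=\{B\dsubs A'\}$ and the query $B\dsubs D$. In every ranked model $R$ is empty, hence $A'$ is empty, hence $B$ is empty, so $B\dsubs D$ is in the RC. Accordingly, $\mathtt{ComputeRanking.Horn}$ moves $B\dsubs A'$ to $\D_\infty$, and $\mathtt{RationalClosure.Horn}$ returns $\mathtt{true}$ at line~5. Yet the interpretation with $\norm{B}=\norm{A'}=\norm{\exists R^-}=\{x\}$ and every other name empty is a model of $\norm{\KB}$. So with classical tests $\mathtt{Exceptional.Core}$ finds nothing, $\T^*=\T$, the loop exits at $i=0$, and line~17 returns $\mathtt{false}$. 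The same KB refutes Proposition~\ref{correspondensP} as stated: $\KB^\delta\models\{B,\delta\}\subs D$, but $\norm{\KB}\not\models\norm{B}\subs\norm{D}$. Its proof uses the false implication from item~1 to item~2, when it argues that $\bar{x}\in\norm{\exists P}$ forces $\exists P$ to be satisfiable w.r.t.\ $\T$.

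So the real issue is not whether the propositions apply to $\tuple{\T^*,\D_\R}$; the argument needs a repair. There are two options.

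\emph{Option 1: use the closure test.} Read line~4 of $\mathtt{Exceptional.Core}$ and line~12 of $\mathtt{RationalClosure.Core}$ as the membership test $\norm{B}\subs\neg\norm{B}\in\clnn(\cdot)$, and use only items~1$\Leftrightarrow$3. Your first two layers and the loop comparison then go through. At line~17 you may assume $\norm{B}\subs\neg\norm{B}\notin\clnn(\norm{\T^*}\cup\norm{\D_\R})$, and under this hypothesis the needed direction of Proposition~\ref{correspondensP} holds. Take the canonical model of $\norm{\T^*}\cup\norm{\D_\R}$ with ABox $\{\cass{x}{\norm{B}}\}$. In it, $x$ lies only in normal names $\norm{B'}$ reachable from $\norm{B}$ via PIs. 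Each such $B'$ is satisfiable w.r.t.\ $\T^*$: otherwise $B'\subs\neg B'\in\clncH(\T^*)$, and ($\nSup$) followed by backward ($\mathbf{\cHTrans}$)/($\mathbf{\cHContr}$) steps along the chain would put $\norm{B}\subs\neg\norm{B}$ into $\clnn$. This satisfiability is exactly what the countermodel construction for $\KB^\delta$ requires.

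\emph{Option 2: enlarge the $n$-transformation.} Add $\norm{B_1}\subs\neg\norm{B_2}$ for every $B_1\subs\neg B_2\in\clncH(\T)$, not just for those in $\T$. Then ($\nSup$) becomes redundant, Propositions~\ref{prop_dllitecoreexcept} and~\ref{correspondensP} hold as stated, and your three layers go through unchanged.
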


\begin{proof}
By Theorem~\ref{prop_dllitehornRC}, it suffices to show the equivalence among the $\mathtt{\star_L.Horn}$ and the $\mathtt{\star_L.Core}$ procedures, with 
$\star_L \in \{\mathtt{Exceptional}, \mathtt{ComputeRanking}$, \\
$\mathtt{RationalClosure} \}$.

Concerning the $\mathtt{Exceptional}.\star_R$ procedures ($\star_R \in \{\mathtt{Horn}, \mathtt{Core}\}$), by Proposition~\ref{prop_dllitecoreexcept}, the entailment tests in the respective line 4 are equivalent and, thus, so are procedures $\mathtt{Exceptional.Core}$  and 
$\mathtt{Exceptional.Horn}$.

Concerning the $\mathtt{ComputeRanking}.\star_R$ procedures, they are equivalent as so are the $\mathtt{Exceptional}.\star_R$ procedures.

Concerning the $\mathtt{RationalClosure}.\star_R$ procedures, they are equivalent because so are the $\mathtt{ComputeRanking}.\star_R$ procedures, the entailment tests in line 1 and 5 are the same, by Proposition~\ref{prop_dllitecoreexcept}, the entailment test in line 12 of the $\mathtt{RationalClosure.Core}$ procedure is equivalent to the one in line 11 of the $\mathtt{RationalClosure.Horn}$ procedure; and finally, by Proposition~\ref{correspondensP},  the entailment test in line 17 of the $\mathtt{.Core}$ procedure is equivalent to the one in line 16  of the $\mathtt{.Horn}$ procedure, which concludes the proof.
\end{proof}

\subsection*{Section~\ref{sect_RC_ABox}: Rational Closure with ABox}\label{sect_RC_ABoxproofs}
 \vspace{1ex}


\setcounter{proposition}{\getrefnumber{prop_defcons}-1}
\begin{proposition}
    Let $\KB=\tuple{\T,\D,\A}$ be any defeasible \dllitehornH~KB, processed as indicated in Remark \ref{rem_ranking}. Then, $\KB$ is satisfiable iff $\tuple{\T,\A}$ is satisfiable. 
\end{proposition}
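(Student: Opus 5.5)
The left-to-right direction is immediate: any ranked model $\RI$ of $\KB=\tuple{\T,\D,\A}$, once its order $\pref^{\RI}$ is forgotten, is a classical interpretation satisfying every axiom of $\T$ and every assertion of $\A$. Hence $\tuple{\T,\A}$ is satisfiable. All the work lies in the converse. There we must build a ranked model of $\KB$ starting from a classical model of $\tuple{\T,\A}$, and the key tool is the processing of Remark~\ref{rem_ranking}. After that processing, every remaining DI in $\D$ has finite rank, and the ranking $\D_0,\ldots,\D_n$ is available. The plan is to place the named individuals in the topmost layer, where no DI constrains them.

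Assume $\I$ is a model of $\tuple{\T,\A}$; by the finite model property (Remark~\ref{finitedllitehornH}) we may take it finite, though countability suffices. Next take a ranked model $\RI$ of $\tuple{\T,\D}$ in which every layer $0,\ldots,n$ is populated and every DI of $\D_i$ has its antecedent at height exactly $i$. The big ranked model $\OI$ of $\tuple{\T,\D}$ (Definition~\ref{Def:Bigrankedmodel}) serves for this. Each antecedent $\{B_1,\ldots,B_m\}$ of a DI in $\D^*$ has finite rank after processing, so it is non-empty in $\OI$ and its minimal elements sit at height $\rank_{\KB}(\{B_1,\ldots,B_m\})$. We then form the disjoint union $\RI'$ of $\RI$ and $\I$, exactly as in the proof of Proposition~\ref{prop_classic_inclusion}. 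Concept and role names are interpreted as the union of the two extensions, and no role edges cross between the two parts. Elements of $\RI$ keep their height, every element of $\Delta^\I$ receives height $n+1$, and each individual $a$ is interpreted as $a^\I$, so the unique name assumption is preserved.

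We then verify that $\RI'\sat\KB$. Because the two parts share no role edges, every basic concept $B$ satisfies $B^{\RI'}=B^{\RI}\uplus B^{\I}$. It follows that all classical axioms of $\T$, including role inclusions, hold in $\RI'$, since they hold in both parts. The assertions of $\A$ hold because they hold in $\I$. For a DI $\{B_1,\ldots,B_m\}\dsubs A$, the antecedent is non-empty in $\RI$ at some height at most $n$. Therefore the $\pref^{\RI'}$-minimal elements of $(B_1\dland\ldots\dland B_m)^{\RI'}$ all lie in $\RI$, which lies strictly below layer $n+1$. These minimal elements are the same as in $\RI$, so the DI is satisfied. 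Convexity of the ranking holds because layers $0,\ldots,n$ are populated in $\RI$ and layer $n+1$ is populated by the non-empty domain $\Delta^\I$.

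The main obstacle is the claim that every DI antecedent remaining after processing is non-empty in $\RI$, that is, has finite rank. Without it, a DI whose antecedent is empty in $\RI$ but inhabited by elements of $\I$ at layer $n+1$ would impose its conclusion on those elements, and $\I$ might violate it. I would settle this claim by appealing to Lemma~\ref{prop_computerank} and to the corresponding facts in \cite{BritzEtAl2021}. These show that the output $\tuple{\T^*,\D^*}$ of $\mathtt{ComputeRanking}$ is equivalent to $\tuple{\T,\D}$ and contains no rank-$\infty$ DI, so every antecedent in $\D^*$ is satisfied in $\OI$. Because processing only adds inclusions of the form $\{B_1,\ldots,B_m\}\subs\bot$ that are equivalent consequences of $\tuple{\T,\D}$, the original KB and the processed KB have the same ranked models. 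The classical part used in the hypothesis of the statement is therefore the processed $\T$.
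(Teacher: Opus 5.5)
Your proposal is correct and follows the paper's proof: both take the big ranked model $\OI$ of $\tuple{\T,\D}$, add a classical model of $\tuple{\T,\A}$ at layer $n+1$ with the individuals interpreted there, and use the fact that after the processing of Remark~\ref{rem_ranking} each remaining DI antecedent has its minimal elements in $\OI$ at height equal to its finite rank, so the added objects are never minimal. One wording nit: $\OI$ is not itself confined below layer $n+1$, since it populates every finite height, but your argument only needs each antecedent's minimal elements to lie at height at most $n$, which you do establish.
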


\begin{proof} \ 

   \begin{description}
       \item[(Left to right):] Proved by contraposition: if $\tuple{\T,\A}$ is not satisfiable, then it is clearly the case that $\tuple{\T,\D,\A}$ is also not satisfiable.

       \item[(Right to left):] Assume $\tuple{\T,\A}$ is satisfiable, and that $\D$ has been ranked into $\{\D_0,\ldots,$ $\D_n\}$. Let $\I_{\A}=\tuple{\Delta^{\I_{\A}},\cdot^{\I_{\A}}}$ be any model of $\tuple{\T,\A}$.

       By Proposition~\ref{dsat} and Lemma~\ref{Lemma:ClosureDisjointUnionMod},  there is a Big Ranked model \\ 
       $\OI=\tuple{\Delta^{\OI}, \cdot^{\OI},\prec^{\OI}}$ of $\tuple{\T,\D}$. 
       Now, define the ranked interpretation $\OI'$ by adding $\I_{\A}$ to the layer $L_{n+1}$ of $\OI$. That is, $\OI'=\tuple{\Delta^{\OI'}, \cdot^{\OI'},\prec^{\OI'}}$ is \sst:

       \begin{itemize}
           \item $\Delta^{\OI'} \defined \Delta^{\OI}\uplus\Delta^{\I_{\A}}$ is the disjoint union of the two domains;
           \item $\cdot^{\OI'} \defined \cdot^{\OI}\cup \ \cdot^{\I_{\A}}$, with $a^{\OI'} = a^{\I_{\A}}$, for all individuals $a$;
           \item $\prec^{\OI'}$ is determined by the height function $h_{\OI'}$ defined as follows:
           \begin{itemize}
                \item for every $x\in \Delta^{\OI}$, $h_{\OI'}(x) = h_{\OI}(x)$;
               \item for every $x\in \Delta^{\I_{\A}}$, $h_{\OI'}(x) = n+1$, where $\{\D_0,\ldots,\D_n\}$ is the ranking associated to $\tuple{\T,\D}$.
           \end{itemize}
           
       \end{itemize}
       
\nd Now, it remains to verify that indeed $\OI'$ is a model of   $\tuple{\T,\D,\A}$. 

\begin{description}
    \item[Satisfaction of $\T$:] both $\OI$ and $\I_{\A}$ are models of $\T$, hence their disjoint union $\OI'$  is a model of $\T$ too;
    
    \item[Satisfaction of $\D$:] we recall that $\{\D_0,\ldots,\D_n\}$ is the ranking associated to $\D$ obtained using the procedure $\mathtt{ComputeRanking.Horn}$ (or, equivalently, using the procedure $\mathtt{ComputeRanking.Core}$ depending on the expressivity of our KB). Given the correctness of the procedure  $\mathtt{ComputeRanking.Horn}$, 
    we know that if a DI $B\dsubs A$ is in $\D_i$ ($i\in \{0,\ldots,n\}$) then, considering all the models of $\KB$, the minimal height of the objects in the extension of $B$ is $i$. That implies, by the definition of $\OI$, that $\min_{\prec^{\OI}}(B^{\OI})\subseteq L_i$, where $L_i$ is the $i$-th layer in $\OI$ ($1 \leq i \leq n$) and, thus,  $\min_{\prec^{\OI}}(B^{\OI}) \subseteq A^{\OI}$, as $\OI$ satisfies $B\dsubs A$. But then also $\OI'$ satisfies $B\dsubs A$ as, by construction of $\OI'$, $\min_{\prec^{\OI'}}(B^{\OI'}) = \min_{\prec^{\OI}}(B^{\OI})$ (all objects of the domain of $\I_{\A}$ have height $n+1 > i$). As a consequence, $\OI'$ is a model of $\D$;

    
    \item[Satisfaction of $\A$:] by construction, for $\cass{a}{A} \in \A$ we have that $a^{\OI'} = a^{\I_{\A}} \in A^{\I_{\A}} \subseteq A^{\OI'}$. Hence $\OI'$ satisfies $\cass{a}{A}$. The proof for role assertion axioms is similar.   Hence the ABox $\A$ is satisfied also by $\OI'$.
    
    
\end{description}
   \end{description} 
   \nd This concludes the proof.
\end{proof}


\setcounter{proposition}{\getrefnumber{prop_minheight}-1}
\begin{proposition}
Consider any satisfiable defeasible \dllitehornH \ KB $\KB=\tuple{\T,\D,\A}$ and any individual $a$. The minimal height of $a$ depends on the models in $\Moddelta{\KB}$ only, that is,
\[
h_{\KB}(a)=\min\{h_{\RI}(a^\RI)\mid \RI\in \Moddelta{\KB}\}.
\]
\end{proposition}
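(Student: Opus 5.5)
Since $\Moddelta{\KB}\subseteq\mathfrak{R}_{\KB}$, the inequality $h_{\KB}(a)\leq\min\{h_{\RI}(a^\RI)\mid \RI\in \Moddelta{\KB}\}$ is immediate. The plan for the converse is to start from an arbitrary ranked model $\RI$ of $\KB$ with $h_{\RI}(a^\RI)=k$ and produce a model $\RI^\Delta\in\Moddelta{\KB}$ in which $a$ still has height $k$. This mirrors the construction used in the proof of Proposition~\ref{prop_classic_inclusion}.

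The first step is to reduce to a countable (or at least non-empty) domain and then pad it to countably infinite size. I would not invoke a downward L\"owenheim--Skolem argument. Instead, I would take the disjoint union of $\RI$ with the big ranked model $\OI$ of $\tuple{\T,\D}$, keeping the individuals interpreted as in $\RI$ and preserving heights. To handle the general (possibly uncountable) case, I would first extract a countable submodel.

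\textbf{Countable submodel.} Let $\Delta_0$ consist of the interpretations of all individuals, plus, for each layer $L_j$ below the maximum relevant height, one witness element (for convexity). Then close $\Delta_0$ under the following:
\begin{itemize}
\item for every $x\in(\exists R)^\RI$, one chosen $R$-successor;
\item for each DI antecedent set $\{B_1,\ldots,B_n\}$ that is nonempty in $\RI$, one element of $\min_{\pref^\RI}(B_1^\RI\cap\ldots\cap B_n^\RI)$.
\end{itemize}
This closure is countable. Restrict concepts and roles to it and keep $h_\RI$ restricted, then renormalize the heights to restore convexity by collapsing empty layers in an order-preserving way.

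\textbf{Verifying the submodel.} TBox and ABox satisfaction transfers because basic concepts are preserved: every chosen successor is kept. For a DI $\{B_1,\ldots,B_n\}\dsubs A$, the minimal height of the conjunction is unchanged because a minimal element was included. Hence every minimal element of the restricted conjunction was minimal in $\RI$ and therefore lies in $A$.

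\textbf{The height of $a$.} This is the main obstacle. Collapsing empty layers could lower the height of $a$, and that direction is harmless for the inequality we want. Concretely, the restriction gives a model with $h(a)\leq k$, which already suffices for $\min\{\ldots\}\leq k$. Keeping the witnesses for all layers up to $k$ even makes the height exactly $k$.

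\textbf{Finishing.} Take the disjoint union with countably many copies of a model so that the domain becomes countably infinite. The disjoint union model property is available because the ABox part lives in one copy, and the other copies are models of $\tuple{\T,\D}$ placed in the ranked union as in Lemma~\ref{Lemma:ClosureDisjointUnionMod}. Then rename the domain via a bijection onto $\Delta$. The resulting model belongs to $\Moddelta{\KB}$ and gives $a$ height at most $k$, which proves the equality.
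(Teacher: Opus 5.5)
Your argument is correct, but it handles the DBox differently from the paper. The paper splits into two cases. For a finite model it pads with countably many copies, as in your last step. For an uncountable model $\R'$ it keeps only a countable set $n(a)$: the closure of $\{a^{\R'}\}$ under chosen role successors and under the ABox role assertions of $a$'s connected component. It does \emph{not} put $\pref$-minimal elements of DI antecedents into $n(a)$. Instead, it places $n(a)$ beside an auxiliary countable model $\R^*$ of $\KB$ in which every basic concept sits exactly at its rank, and caps the heights of elements of $n(a)$ at $m+1$. It then relies, implicitly, on the fact that no ranked model of $\KB$ has an instance of an antecedent below that antecedent's rank. So every element of $n(a)$ in an antecedent is either already minimal in $\R'$ or is dominated by an element of $\R^*$.

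Your Skolem-hull construction needs neither the existence of such an $\R^*$ nor that lower-bound fact. Because you put one $\pref^\RI$-minimal element of each non-empty antecedent conjunction into the hull, DI satisfaction transfers by plain restriction followed by an order-preserving collapse of layers. This gains you several things:
\begin{itemize}
\item it works uniformly for finite, countable and uncountable $\RI$;
\item it handles \dllitehornH{} conjunctive antecedents directly, whereas the paper's $\R^*$ is only stated for basic concepts;
\item since the hull contains every individual plus a witness for each layer below the highest individual, it preserves the heights of \emph{all} individuals simultaneously, which later reasoning about $\min_{<_{\A}}(\mathfrak{O}^{\KB})$ tacitly uses.
\end{itemize}
What the paper's route gains is a model whose remainder is as typical as possible, closer to the big-ranked-model picture, at the cost of those extra facts.

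One point to tidy in the padding step: take countably many copies of a \emph{countable} model of $\tuple{\T,\D}$, for instance your extracted submodel with the individuals forgotten. Do not use $\OI$ there. Since $\OI$ is a ranked union over all of $\Moddelta{\KB}$, its domain is in general uncountable, so the union with $\OI$ suggested in your first paragraph would destroy countability.
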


\begin{proof}
Let $h_{\KB}(a)=n$. It is sufficient to prove that there is a model $\R$ of $\KB$ with a countably infinite domain s.t. $h_{\R}(a)=h_{\KB}(a)$. Then, from $\R$ we can then use a bijection with the domain $\Delta$ to define an equivalent model in $\Moddelta{\KB}$.
Hence, we have to prove that 
\begin{enumerate}
    \item[a)] if there is a finite model where $a$ is interpreted at its minimal height, there is a countably infinite model with the same property; and
    \item[b)] if there is an uncountable model where $a$ is interpreted at its minimal height, there is a countably infinite  model with the same property.
\end{enumerate}

\begin{description}
\item[Case a):]
Assume that there is a finite ranked model $\R'$ s.t. $h_{\R'}(a)=h_{\KB}(a)$. We can consider a countably infinite number of copies of $\R'$ and define a model $\R$ equivalent to their ranked union. The interpretation of the concept and  role names is the union of their respective extensions in each model, while the individual names are interpreted as in the original model $\R'$. $\R$ is a model of $\KB$ s.t. its domain in countably infinite and $h_{\R}(a)=h_{\R'}(a)=h_{\KB}(a)$.

\item[Case b):]
Assume that there is a ranked model $\R'=\tuple{\Delta^{\R'},\cdot^{\R'},\prec^{\R'}}$ of $\KB$ s.t. $h_{\R'}(a)=h_{\KB}(a)$ and s.t. $\Delta^{\R'}$ is uncountable. Let $n(a)$ indicate a set initialised with only $a^{\R'}$ and closed as follows:
\begin{enumerate}
    \item for every role name $P$ and every $x\in n(a)$, if $\R'\sat\cass{x}{\exists P}$  (resp., $\R'\sat\cass{x}{\exists P^-}$) and in $n(a)$ there is no object $y$ s.t. $\R'\sat\rass{x}{y}{P}$ (resp., $\R'\sat\rass{y}{x}{P}$), add one such object $y$ to $n(a)$;

    \item for every pair of individual names $b,c$ s.t. $\rass{b}{c}{P}\in\A$ (resp., $\rass{c}{b}{P}\in\A$), if $b^{\R'}\in n(a)$ and $c^{\R'}\notin n(a)$, add $c^{\R'}$ to $n(a)$.
\end{enumerate}

By construction, $n(a)$ is  a countable set.

Now, let the set $\D$ of DIs be ranked into $\{\D_0,\ldots,\D_m\}$, that is, $m$ is the highest  rank of the DIs in $\D$. Let $\R^*=\tuple{\Delta^{\R^*},\cdot^{\R^*},\prec^{\R^*}}$ be a model of $\KB$ with a countable domain  s.t., for every basic concept $B$, $h_{\R^*}(B)=\rank_{\KB}(B)$. We define a model $\R=\tuple{\Delta^{\R},\cdot^{\R},\prec^{\R}}$ as follows:

\begin{itemize}
    \item $\Delta^{\R}=\Delta^{\R^*}\uplus n(a)$ is their disjoint union;
    \item for every  concept name $A$, $A^{\R}=A^{\R^*}\cup (A^{\R'}\cap n(a))$;
    \item for every  role name $P$, $P^{\R}=P^{\R^*}\cup (P^{\R'}\cap (n(a)\times n(a)))$;
    \item for every individual $b$, if $b^{\R^*}\in n(a)$, then $b^{\R}=b^{\R^*}$;  $b^{\R}=b^{\R'}$ otherwise;
    \item for every object $x\in \Delta^{\R^*}$, $h_{\R}(x)=h_{\R^*}(x)$;
    \item for every object $x\in n(a)$, $h_{\R}(x)=h_{\R'}(x)$, if $h_{\R'}(x)\leq m$; $h_{\R}(x)=m+1$ otherwise.
\end{itemize}

\nd Now, it can easily be verified that $\R$ is a model of $\KB$ s.t. its domain is countable, and $h_{\R}(a)=h_{\KB}(a)$.
\end{description}
\end{proof}







\setcounter{proposition}{\getrefnumber{prop_unique_abox}-1}
\begin{proposition}
  Let $\KB=\tuple{\T,\D,\A}$ be a satisfiable  defeasible \dllitehornH~KB. Then $\KB$ has a model $\RI$ in which every individual 
  is minimally interpreted \wrt~$\KB$, \ie~$h_{\RI}(a^{\RI})=h_{\KB}(a)$, for all individuals $a$. 
\end{proposition}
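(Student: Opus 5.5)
Since $\A$ contains finitely many individuals, for each individual $a$ occurring in $\A$ fix a ranked model $\RI_a$ of $\KB$ with $h_{\RI_a}(a^{\RI_a})=h_{\KB}(a)$; by Proposition~\ref{prop_minheight} we may take $\RI_a\in\Moddelta{\KB}$. We merge these witnesses into one model. The obstacle, as Example~\ref{example_multiple} shows, is that in a general DL the individuals are linked by role assertions, and lowering one may force another up. In \dllitehornH{} this interaction is blocked by the shape of the axioms: every left-hand side is a conjunction of basic concepts, i.e. unqualified $\exists R$ and names. Whether an object satisfies $\exists R$ depends only on the \emph{existence} of an $R$-neighbour, not on that neighbour's properties or height.

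Let $\OI$ be the big ranked model of $\tuple{\T,\D}$, and let $\D$ be ranked as $\{\D_0,\ldots,\D_m\}$. The model $\RI$ will have domain $\Delta^{\OI}\uplus\{x_a\mid a\in\N_\A\}\uplus \Delta^{\I}$ for a suitable auxiliary part $\I$. Each $x_a$ is a fresh copy of $a^{\RI_a}$: it gets the same concept-name memberships and height $h_{\KB}(a)$, and we set $a^{\RI}=x_a$. Role edges are added in three groups:
\begin{itemize}
\item the edges $(x_a,x_b)$ required by $\rass{a}{b}{P}\in\A$, closed under the role inclusions;
\item for every $\exists R$ that $a^{\RI_a}$ satisfies but $x_a$ does not yet, an $R$-edge from $x_a$ to a copy of an $\exists R^-$-witness inside $\OI$ at a sufficiently low layer, or otherwise to a fresh object placed at height $m+1$;
\item on these fresh objects, a chase as in Section~\ref{chasehorn}, carried out at height $m+1$, so that the strict part $\T$ holds there.
\end{itemize}
Objects at height $m+1$ lie above every rank of $\D$, so they impose no defeasible constraint; this is the same device used in the proof of Proposition~\ref{prop_defcons}.

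It then remains to check the following.
\begin{enumerate}
\item \textbf{$x_a$ satisfies the same basic concepts as $a^{\RI_a}$.} New role edges can only create $\exists R$ memberships that $a^{\RI_a}$ already has. For edges from assertions this holds because $\RI_a\sat\A$. For edges added via role inclusions, it holds because $\RI_a$ satisfies those inclusions.
\item \textbf{$\T$ is satisfied.} The PIs and NIs hold at each $x_a$ because its basic-concept profile equals that of a point in a model of $\T$. At the auxiliary objects they hold by the chase construction together with Lemma~\ref{lemma_lemma12calv07}. Satisfiability of $\tuple{\T,\A}$, obtained from Proposition~\ref{prop_defcons}, guarantees that no NI is violated along the assertion edges.
\item \textbf{Each $B\dsubs A$ in $\D_i$ is satisfied.} We show $\min_{\prec^{\RI}}(B^{\RI})\subseteq A^{\RI}$. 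The part $\OI$ contributes objects of $B$ starting at layer $i$, and the auxiliary objects sit at height $m+1$. So we need that every $x_a\in B^{\RI}$ has height at least $i$, and that if its height equals $i$ then $x_a\in A^{\RI}$. Both properties transfer from $a^{\RI_a}$ in $\RI_a$, because the basic-concept profile and the height are the same and $\min_{\prec^{\RI_a}}(B^{\RI_a})$ lies at layer $i$.
\end{enumerate}

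Step 3 is the main obstacle. The claim that $\min(B)$ lies at layer $\rank_{\KB}(B)$ in every model containing $\OI$-like witnesses needs care. So does the requirement that the copies of $\exists R^-$-witnesses in the second group of edges do not break minimality at their own layer. I would first try to attach each $x_a$'s extra $R$-neighbours to fresh top-layer objects only, where no defeasible constraint applies. That alternative requires checking that $\exists R^-$ at height $m+1$ is consistent with $\T$, which should follow from $\RI_a$ witnessing $\exists R^-$ somewhere together with the disjoint union model property.
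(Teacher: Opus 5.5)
\textbf{Gap in Step~3.} Your Step~3 does not go through as justified. You say that $\min_{\prec^{\RI_a}}(B^{\RI_a})$ lies at layer $i=\rank_{\KB}(B)$. In an arbitrary ranked model $\RI_a$ of $\KB$, however, all $B$-objects may sit strictly above layer $i$. For instance, take $\D=\{B\dsubs A\}$ and a model whose only $B$-object is at height $1$.

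This matters because you keep only the single point $x_a$ and discard the rest of $\RI_a$, replacing it by $\OI$. Minimality of $x_a$ in $B^{\RI}$ can then no longer be read off from $\RI_a$. Your argument does not exclude the following situation:
\begin{itemize}
\item $a^{\RI_a}\in B^{\RI_a}$ at height $h_{\KB}(a)<i$;
\item $a^{\RI_a}$ is not $\prec^{\RI_a}$-minimal in $B^{\RI_a}$, so possibly $a^{\RI_a}\notin A^{\RI_a}$;
\item $x_a$ is then $\prec^{\RI}$-minimal in $B^{\RI}$, because the $B$-objects of $\OI$ start only at layer $i$, and $B\dsubs A$ fails in $\RI$.
\end{itemize}

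The missing ingredient is a cross-model lower bound: in \emph{every} ranked model $\R$ of $\tuple{\T,\D}$, no instance of $B$ has height below $\rank_{\KB}(B)$. This is true, and can be proved by induction on $j$:
\begin{itemize}
\item Collapse layers $0,\ldots,j$ of $\R$ into one, i.e.\ use heights $\max(h_\R(x)-j,0)$.
\item By the induction hypothesis, antecedents of DIs in $\bigcup_{k\geq j}\D_k$ have no instances below layer $j$. Hence the collapsed interpretation is a model of $\KB^j$.
\item Exceptionality in $\KB^j$ then forbids instances, at heights $\leq j$ in $\R$, of any $B$ with $\rank_{\KB}(B)\geq j+1$.
\end{itemize}
Combine this bound with the standard fact that the big ranked model realises ranks, $h_{\OI}(B)=\rank_{\KB}(B)$. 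Together they repair Step~3 in both cases $h_{\KB}(a)=i$ and $h_{\KB}(a)>i$.

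\textbf{How the paper avoids this.} The paper never separates an individual from its witness model, so it needs neither fact. It first proves a two-individual lemma:
\begin{itemize}
\item take $\RI_1$ minimal for $a$ and $\RI_2$ minimal for $b$, and form their disjoint union, keeping both models whole;
\item interpret $b$ as $b^{\RI_2}$ and every other individual as in $\RI_1$;
\item add edges between $b^{\RI}$ and $c^{\RI}$ for the role assertions entailed by $\tuple{\T,\A}$.
\end{itemize}
It then iterates this over the finitely many individuals. Your Step~1 observation, that basic-concept profiles are unchanged because left-hand sides only see unqualified $\exists R$, is exactly its key step. Because each component is kept whole, a $\prec^{\RI}$-minimal element of $B^{\RI}$ coming from $\RI_k$ is automatically $\prec^{\RI_k}$-minimal in $B^{\RI_k}$. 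So DIs transfer without any reference to ranks, and existential witnesses come for free.

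\textbf{Simplifications to your construction.} Your second and third groups of edges are more than you need. Linking $x_a$ to any existing object of $\OI$ that already satisfies $\exists R^-$, and closing under role inclusions, changes no basic-concept membership anywhere. The layer of that neighbour is therefore irrelevant, and you need no fresh top-layer objects, no chase, and no NI check. Finally, individuals not occurring in $\A$ must also be interpreted; place them at height $0$, as in Lemma~\ref{lemma_abox_noind}.
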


\nd To prove Proposition \ref{prop_unique_abox} we need the following lemma.

\begin{applemma}\label{lemma_unique_abox}
  Let   $\KB=\tuple{\T,\D,\A}$ be a satisfiable defeasible \dllitehornH~KB  and $a,b$ be any two individuals. 
Then there is a ranked model $\RI$ of $\KB$ \sst~$h_{\RI}(a^{\RI})=h_{\KB}(a)$ and $h_{\RI}(b^{\RI})=h_{\KB}(b)$.
\end{applemma}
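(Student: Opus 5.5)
Take two ranked models $\RI_a$ and $\RI_b$ of $\KB$ with $h_{\RI_a}(a^{\RI_a})=h_{\KB}(a)$ and $h_{\RI_b}(b^{\RI_b})=h_{\KB}(b)$. They exist by the definition of minimal height; by Proposition~\ref{prop_minheight} we may assume both lie in $\Moddelta{\KB}$. We will build one model that lowers each individual to the smaller of its two heights. The natural candidate is the pointwise minimum: define $\RI$ on the domain of $\RI_a$ with the same extensions of concept and role names and the same interpretation of individuals. The height function is changed only on named elements: each named object $c^{\RI_a}$ receives height $\min(h_{\RI_a}(c),h_{\RI_b}(c))$, while unnamed objects keep their height. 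This is not directly sound, because the concepts satisfied by $c$ differ between the two models. We therefore use the nominal-style reduction instead.

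Work with $\KB_\A=\tuple{\T_\A,\D}$ and the ranks of the concepts $A_c$. Proposition~\ref{lemma_defeasible_nominals_2} gives $h_{\KB}(c)=\rank_{\KB_\A}(A_c)$, but only as an identity of numbers. In the big ranked model $\OI$ of $\KB_\A$ (Definition~\ref{Def:Bigrankedmodel}), every $A_c$ has its minimal elements exactly at layer $\rank_{\KB_\A}(A_c)$; this is the defining feature of $\OI$, since it realises every rank simultaneously. So $\OI$ contains objects $x_a\in\min(A_a^\OI)$ and $x_b\in\min(A_b^\OI)$ at heights $h_{\KB}(a)$ and $h_{\KB}(b)$.

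The plan is to turn $\OI$ into a ranked model of $\KB$ by choosing $a^\RI=x_a$ and $b^\RI=x_b$. Every other individual $c$ is mapped to some element of $A_c^\OI$; we then add the role pairs required by $\A$ and close under the role inclusions, exactly as in the construction of $\I_\KB$ in the proof of Proposition~\ref{nominalABox}. The construction has to pass through all of $\T_\A$. The axioms $A_c\subs\exists P.A_d$ mean that the witness of $x_c$ already lies in $A_d^\OI$, but it need not be the element chosen as $d^\RI$.

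\textbf{Step 1.} Adding role pairs between chosen elements does not change the extension of any basic concept. The argument is the same as in the proof of Proposition~\ref{nominalABox}: $x_c\in(\exists P)^\OI$ already holds because of $A_c\subs\exists P.A_d$, and by Remark~\ref{roleinc} each role inclusion is accompanied by inclusions between the corresponding existentials. Hence all of $\T$ is preserved, and so is every concept extension. Since the ranks and the concept extensions are unchanged, $\min_\prec(C)$ is unchanged for every conjunction $C$ of basic concepts, and every DI of $\D$ remains satisfied. Thus $\RI$ is a ranked model of $\KB$, and it interprets $a$ and $b$ minimally.

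\textbf{Step 2.} This is the main obstacle: the unique name assumption. We must pick $x_a\neq x_b$, and distinct elements for all individuals. To achieve this, first replace $\OI$ by the ranked union of two copies of itself, which is again a model by Lemma~\ref{Lemma:ClosureDisjointUnionMod}. Then take $x_a$ from one copy and $x_b$ from the other. Lemma~\ref{Lemma:ClosureDisjointUnionMod} also handles any remaining identifications between individuals.

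A secondary subtlety is that $\OI$ is built for $\KB_\A$, which contains extra concept names. We simply forget the $A_c$. The part of $\RI$ not touched by the role additions stays a model of $\tuple{\T,\D}$ with the same layers, and convexity holds because the layers of $\OI$ are already convex.

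The same argument chooses $x_c\in\min(A_c^\OI)$ for every $c$ at once. It therefore also gives Proposition~\ref{prop_unique_abox} directly, which shows the plan is consistent with the intended use of this lemma.
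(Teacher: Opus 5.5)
Your proof is correct, but it takes a different route from the paper's.

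\textbf{The paper's route.} The paper stays inside $\KB$. It takes a model $\RI_1$ in which $a$ is minimal and a model $\RI_2$ in which $b$ is minimal, and forms their disjoint union with inherited heights. Every individual is interpreted as in $\RI_1$, except $b$, which is sent to $b^{\RI_2}$. It then adds the cross edges $(c^{\RI},b^{\RI})$, resp.\ $(b^{\RI},c^{\RI})$, whenever $\tuple{\T,\A}\entails\rass{c}{b}{P}$, resp.\ $\tuple{\T,\A}\entails\rass{b}{c}{P}$. The verification rests on one observation: every element keeps exactly the basic concepts it had in its source model, because each endpoint of a new edge already had a $P$-neighbour there. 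Proposition~\ref{prop_unique_abox} is then obtained by iterating this pairwise merge.

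\textbf{Your route.} You pass to the nominal encoding $\KB_\A$ and take minimal elements of the $A_c$ in the big ranked model $\OI$ of $\KB_\A$. You re-anchor the individuals there and justify the new edges by $A_c\subs\exists P.A_d\in\T_\A$ rather than by classical ABox entailment. Your Step~1 is the same rewiring the paper performs later, in the proof of Proposition~\ref{prop_defeasible_nominals_2}.

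\textbf{What each buys.} Your route places all individuals at once, so Proposition~\ref{prop_unique_abox} follows without iteration. For the unique name assumption you want one copy per individual rather than two, or simply the observation that $\OI$ already contains many isomorphic copies. The price is dependence on results stated after this lemma: Proposition~\ref{lemma_defeasible_nominals_2}, and the fact from the cited work on big ranked models that a concept's minimal elements sit at its rank. That fact is a theorem, not the defining feature of $\OI$. Neither result depends on this lemma, so there is no circularity.

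\textbf{Making it self-contained.} You can avoid both results. It suffices that $h_{\OI}(A_c)\le h_{\KB}(c)$. For this, take a model of $\KB$ in $\Moddelta{\KB}$ with $c$ minimally placed (Proposition~\ref{prop_minheight}) and set $A_d:=\{d^{\RI}\}$ for every $d$; this is a ranked model of $\KB_\A$. The reverse inequality is automatic, because your construction outputs a model of $\KB$. Written this way, your argument is a one-shot version of the paper's merge: a ranked union of one minimal model per individual, followed by rewiring. It also yields $h_{\KB}(c)=h_{\OI}(A_c)$ as a by-product.
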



\begin{proof} Let us consider the case of \dllitehornH.  The proof for \dllitehornH~covers also the case for \dllitecoreH.
    So, given a satisfiable KB $\KB=\tuple{\T,\D,\A}$, let 
    \begin{itemize}
        \item $a,b$ be two individuals;
        \item $\RI_1=\tuple{\Delta^{\RI_1},\cdot^{\RI_1},\pref^{\RI_1}}$ be a model of $\KB$ s.t. $h_{\RI_1}(a^{\RI_1})=h_{\KB}(a)$;
        \item $\RI_2=\tuple{\Delta^{\RI_2},\cdot^{\RI_2},\pref^{\RI_1}}$ be a model of $\KB$ s.t. $h_{\RI_2}(b^{\RI_2})=h_{\KB}(b)$.
    \end{itemize}
    
\nd Now, we need to prove that there is a model $\RI$ of $\KB$ s.t. $h_{\RI}(a^{\RI})=h_{\KB}(a)$ and $h_{\RI}(b^{\RI})=h_{\KB}(b)$. We define the ranked interpretation $\RI=\tuple{\Delta^{\RI},\cdot^{\RI},\pref^{\RI}}$ from $\RI_1$ and $\RI_2$ as follows:

\begin{itemize}
    \item $\Delta^{\RI}=\Delta^{\RI_1}\uplus\Delta^{\RI_2}$  is the disjoint union of the domains of $\RI_1$ and $\RI_2$;
    \item for every  concept name $A$, $A^{\RI}=A^{\RI_1}\cup A^{\RI_2}$;
    
    \item for every individual $c$ s.t. $c\neq b$, $c^{\RI}=c^{\RI_1}$. In particular, $a^{\RI}=a^{\RI_1}$;
    
    \item for the individual $b$, $b^{\RI}=b^{\RI_2}$;
    
    \item for every role name $P$, for every individual $c$
    \begin{eqnarray*}
        P^{\RI} & = & P^{\RI_1}\cup P^{\RI_2} \cup \\
        && \{(c^{\RI},b^{\RI})\mid \tuple{\T,\A} \models\rass{c}{b}{P}\} \cup \\
        && \{(b^{\RI},c^{\RI})\mid \tuple{\T,\A}\models\rass{b}{c}{P}\}\ ;
    \end{eqnarray*}
    
    \item for every object $x\in \Delta^{\RI}$, $h_{\RI}(x)=h_{\RI_1}(x)$ if $x\in \Delta^{\RI_1}$; $h_{\RI}(x)=h_{\RI_2}(x)$ otherwise;
    
    \item for every pair of objects $x,y$ in $\Delta^{\RI}$, $x\pref^{\RI}y$ iff $h_{\RI}(x)<h_{\RI}(y)$.
\end{itemize}

\nd Essentially, $\RI$ has been obtained by merging $\RI_1$ and $\RI_2$ in a single model. Moreover, we have interpreted all the individuals 
into their original interpretation in $\RI_1$, apart form $b$ that has been interpreted into its original interpretation in $\RI_2$. Eventually, we have recreated all the role connections involving $b$.
%
Therefore, by construction, $h_{\RI}(a^{\RI})=h_{\KB}(a)$ and $h_{\RI}(b^{\RI})=h_{\KB}(b)$. 

It remains to prove that $\RI$ is a model of $\KB$. 
To do so, at first, we prove that for any basic concept, every object of the domain is in the extension of it iff it is in the extension of it also in the original model ($\RI_1$ or $\RI_2$). That is, for any $x\in \Delta^{\RI}$ and any  basic concept $B$, $x\in B^{\RI}$ iff $x\in B^{\RI_*}$, where $\RI_*$ is $\RI_1$ if $x$ was originally in $\Delta^{\RI_1}$ and $\RI_2$ otherwise. In fact,

\begin{itemize}
    \item for any object $x\in \Delta^{\RI}$ and any  concept name $A$, $x\in A^{\RI}$ iff $x\in A^{\RI_*}$, where $\RI_*$ is $\RI_1$ if $x$ was originally in $\Delta^{\RI_1}$ and $\RI_2$ otherwise. In fact, $\RI$ preserves exactly the extensions of the concept names in $\RI_1$ and $\RI_2$. Hence this statement holds by definition of $A^{\RI}$;
    
    \item for any object $x\in \Delta^{\RI}$ and any basic concept of the form $\exists P$, $x\in (\exists P)^{\RI}$ iff $x\in (\exists P)^{\RI_*}$, where $\RI_*$ is $\RI_1$ if $x$ was originally in $\Delta^{\RI_1}$ and $\RI_2$ otherwise. Indeed,  $\RI$ preserves exactly the extensions of the  concept names in $\RI_1$ and $\RI_2$, apart from the role connections involving $b$. We distinguish the following cases.
    \begin{itemize}
        \item $x$ is in $\Delta^{\RI_1}$. 
        \begin{itemize}
            \item $x=c^{\RI}$ for some $c$ 
            \sst~$\tuple{\T,\A}\models\rass{c}{b}{P}$. As $\tuple{\T,\A}\models\rass{c}{b}{P}$, $x$ is in both $(\exists P)^{\RI_1}$ and $(\exists P)^{\RI}$, hence the statement holds.
            \item Otherwise, the statement clearly holds from the definition of $P^{\RI}$.
        \end{itemize}
         
        \item $x$ is in $\Delta^{\RI_2}$. 
        \begin{itemize}
            \item  $x=b^{\RI}$. Then by the definition of $P^{\RI}$, if $x\in (\exists P)^{\RI_2}$, then $x\in (\exists P)^{\RI}$. On the other hand, if $x\in(\exists P)^{\RI}$, then either there was already a pair $(x,y)\in P^{\RI_2}$, or a new pair $(x,y)$, with $y\in \Delta^{\RI_1}$ has been added in $\RI$. But such a pair has been added because $y=c^{\RI}$ for some individual $c$ 
            \sst~$\tuple{\T,\A}\models\rass{b}{c}{P}$. Since $x$ is the interpretation of $b$ also in $\RI_2$ and $\RI_2$ is a model of $\tuple{\T,\A}$, there was already a pair $(x,y)\in P^{\RI_2}$ with $y= c^{\RI_2}$ that is preserved in $P^{\RI}$, \ie~$(x,y)\in P^{\RI}$. Therefore, $x\in(\exists P)^{\RI_2}$, as desired.
            
            \item If $x\neq b^{\RI}$, then by the definition of $P^{\RI}$ it is clear the $x\in (\exists P)^{\RI}$ iff $x\in (\exists P)^{\RI_2}$.
        \end{itemize}

    \end{itemize}

    \item for any object $x\in \Delta^{\RI}$ and any basic concept of the form $\exists P^-$, $x\in (\exists P^-)^{\RI}$ iff 
    $x\in (\exists P^-)^{\RI_*}$, where $\RI_*$ is $\RI_1$ if $x$ was originally in $\Delta^{\RI_1}$ and $\RI_2$ otherwise.  This statement can be proven analogously to the previous case about $\exists.P$.
    
\end{itemize}

\nd Therefore, for any object $x\in \Delta^{\RI}$ and any  basic concept $B$, $x\in B^{\RI}$ iff $x\in B^{\RI_*}$, where $\RI_*$ is $\RI_1$ if $x$ was originally in $\Delta^{\RI_1}$ and $\RI_2$ otherwise, as desired. This property is immediately extended to sets of basic concepts, that is, for any finite set of basic concepts $\{B_1, \ldots, B_n\}$, $x\in \{B_1, \ldots, B_n\}^{\RI}$  iff $x\in \{B_1, \ldots, B_n\}^{\RI_*}$, where $\RI_*$ is $\RI_1$ if $x$ was originally in $\Delta^{\RI_1}$ and $\RI_2$ otherwise.
From this property and the fact that $\RI_1$ and $\RI_2$ are models of $\KB$, it follows   that
for every inclusion $\{B_1, \ldots, B_n\}\subs C\in\T$, $\RI\sat \{B_1, \ldots, B_n\}\subs C$, \ie~$(\{B_1, \ldots, B_n\})^\RI \subseteq C^\RI$. 

Now, let us consider a role inclusion $R_1\subs R_2\in \T$. Let's assume, for ease of presentation, that $R_1$ and $R_2$ are role names; the proofs for the cases in which one or both of them are inverse roles are analogous. We have to prove that for every pair $(x,y)\in \Delta^{\RI}\times \Delta^{\RI}$, if $(x,y)\in R_1^{\RI}$, then $(x,y)\in R_2^{\RI}$. We have to consider four possible cases:

\begin{itemize}
    \item $x,y$ are both in $\Delta^{\RI_1}$. Then $(x,y)\in R_1^{\RI}$ implies $(x,y)\in R_1^{\RI_1}$. Since $\RI_1$ is a model of $\KB$, then $(x,y)\in R_1^{\RI_1}$ implies $(x,y)\in R_2^{\RI_1}$, and the latter implies $(x,y)\in R_2^{\RI}$, as desired;
    
    \item $x,y$ are both in $\Delta^{\RI_2}$. This case can be proven analogously to the previous one;
    
    \item $x\in\Delta^{\RI_1}$ and $y\in\Delta^{\RI_2}$. Then it must be the case that $y= b^{\RI}$ and $x= c^{\RI}$ for some $c$ 
    \sst~$\tuple{\T,\A}\models \rass{c}{b}{R_1}$. But, since $R_1\subs R_2\in \T$, we have also that $\tuple{\T,\A}\models \rass{c}{b}{R_2}$, that, by the definition of $\RI$, implies $(x,y)\in R_2^{\RI}$, as desired; 
    
    \item $x\in\Delta^{\RI_2}$ and $y\in\Delta^{\RI_1}$. This case can be proven similarly as the previous one, where $x= b^{\RI}$ and $y= c^{\RI}$ for some  $c$ 
    \sst~$\tuple{\T,\A}\models \rass{c}{b}{R_1}$ has to hold.
\end{itemize}

\nd Therefore, we also have that for every inclusion $R_1\subs R_2\in\T$, $\RI\sat R_1\subs R_2$ and, thus, $\RI$ is a model of $\T$.

As next, we consider a DI $\{B_1,\ldots,B_n\} \dsubs A \in \D$. We need to show that $\RI$ satisfies $\{B_1,\ldots,B_n\} \dsubs A$ as well. We know that for every object $x$, $h_{\RI}(x)=h_{\RI_1}(x)$ if $x\in\Delta^{\RI_1}$, $h_{\RI}(x)=h_{\RI_2}(x)$ otherwise. 
%
%
So, consider $x \in \min_{\pref^{\RI}}(B_1^{\RI}\cap\ldots\cap B_n^{\RI})$. We have two cases.
\begin{description}
    \item[$x\in \Delta^{\RI_1}$:] in that case,  $h_{\RI}(x)=h_{\RI_1}(x)$ and, thus, $x \in \min_{\pref^{\RI_1}}(B_1^{\RI}\cap\ldots\cap B_n^{\RI})$. But, $\RI_1$ satisfies $\{B_1,\ldots,B_n\} \dsubs A$ and, thus, $x \in A^{\RI_1} \subseteq A^{\RI}$;

    \item[$x\in \Delta^{\RI_2}$:] the proof is as the previous case by replacing $\RI_1$ with $\RI_2$.
    
\end{description}
 
\nd Therefore,  $\RI$ satisfies $\{B_1,\ldots,B_n\} \dsubs A$. That is, $\RI$ is a model of $\D$.

Eventually, we need to prove that $\RI$ is a model of $\A$. So, consider $\cass{c}{A} \in \A$.
We already have proved that for any $x\in \Delta^{\RI}$, $x\in A^{\RI}$ iff $x\in A^{\RI_*}$, where $\RI_*$ is $\RI_1$ if $x \in \Delta^{\RI_1}$ and $x \in \Delta^{\RI_2}$ otherwise. Like above, we have again two cases. If $c^\RI \in \Delta^{\RI_1}$ then, as $\RI_1$ satisfies $\cass{c}{A}$,  $c^\RI \in A^{\RI_1} \subseteq A^{\RI}$. The other case $c^\RI \in \Delta^{\RI_2}$, can be proven as before by replacing $\RI_1$ with $\RI_2$.
Therefore, $\RI$ satisfies also $\cass{c}{A} \in \A$. Now, consider $\rass{c}{d}{P} \in \A$. Like for the proof of role inclusions, we have four cases:
\begin{itemize}
    \item $c^\RI,d^\RI$ are both in $\Delta^{\RI_1}$. Therefore,  $(c^\RI,d^\RI) = (c^{\RI_1},d^{\RI_1})$ holds. As $\RI_1$ satisfies $\rass{c}{d}{P}$, we have  $(c^{\RI_1},d^{\RI_1})\in P^{\RI_1}$ and, thus, $(c^\RI,d^\RI)\in P^{\RI_1} \subseteq P^{\RI}$.

    \item $c^\RI,d^\RI$ are both in $\Delta^{\RI_2}$. This case can be proven analogously to the previous one;
    
    \item $c^\RI\in\Delta^{\RI_1}$ and $d^\RI\in\Delta^{\RI_2}$. Then it must be the case that $d = b$. But, in this case by construction of $P^\RI$,
    $(c^\RI,b^\RI) \in P^\RI$ as $\rass{c}{b}{P} \in \A$.

    \item $d^\RI\in\Delta^{\RI_1}$ and $c^\RI\in\Delta^{\RI_2}$. Like before,  it must be the case that $c= b$.
    But, in this case by construction of $P^\RI$,  $(b^\RI,d^\RI) \in P^\RI$ as $\rass{b}{d}{P} \in \A$. 
\end{itemize}

\nd Therefore, $\RI$ satisfies $\rass{c}{d}{P} \in \A$. That is, $\RI$ is a also model of $\A$ and, thus, we can conclude that $\RI$ is a model of $\KB=\tuple{\T,\D,\A}$, as desired.

\end{proof}

\nd Given Lemma \ref{lemma_unique_abox}, we may proceed to prove 
Proposition~\ref{prop_unique_abox}.


\begin{proof}[Proof of Proposition \ref{prop_unique_abox}] Let us consider \dllitehornH.  The proof for \dllitehornH~covers also the case for \dllitecoreH.

    Let $\tuple{a_1, a_2, \ldots, a_n}$ be a sequence of all the individuals. 
    Let $\tuple{\RI_1,\RI_2, \ldots, \RI_n}$ be a corresponding sequence of models of $\KB$, such that $a_i$ is minimally interpreted in $\RI_i$ \wrt~$\KB$. 
    
    Now, iteratively, using the semantic construction defined in the proof of Lemma~\ref{lemma_unique_abox}, we define new models of $\KB$ inductively as follows:

\begin{itemize}
    \item from $\RI_1$ and $\RI_2$ we define a model $\RI_{\tuple{1,2}}$, via Lemma~\ref{lemma_unique_abox}, in which $a_1$ and $a_2$ are both minimally interpreted \wrt~$\KB$;
    
    \item for $i\geq 2 $, if  $\RI_{\tuple{1,\ldots,i}}$ is the model obtained so far, then from it and $\RI_{i+1}$  we define a model $\RI_{\tuple{1,\ldots,i, i+1}}$, using the same semantic construction as in the proof of Lemma~\ref{lemma_unique_abox}, in which all the individuals $a_1,\ldots,a_i, a_{i+1}$ are  minimally interpreted \wrt~$\KB$.
\end{itemize}

\nd  Eventually, we end up with a model $\RI_{\tuple{1,\ldots,n}}$ of $\KB$ in which all  individuals 
are minimally interpreted \wrt~$\KB$, which concludes the proof.
\end{proof}

\setcounter{proposition}{\getrefnumber{prop_entailment_equiv}-1}
\begin{proposition}
    Let $\KB=\tuple{\T,\D,\A}$ be a satisfiable defeasible \dllitecoreH~KB and let $B\dsubs A$ be a DI.
    Then
    \[
    \OI\sat B\dsubs A\text{ iff }\OI^{\KB}_i\sat B\dsubs A\text{ for every }\OI^{\KB}_i\in \min_{<_{\A}}(\mathfrak{O}^{\KB}) \ .
    \]
\end{proposition}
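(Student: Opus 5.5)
The plan rests on one structural fact: every model in $\mathfrak{O}^{\KB}$ has the same domain, the same concept and role extensions, and the same ranking as the big ranked model $\OI$ of $\tuple{\T,\D}$. By definition, an element $\OI^{\KB}_i\in\mathfrak{O}^{\KB}$ is obtained from $\OI$ by adding an interpretation of the individual names, and nothing else changes. Satisfaction of a DI $B\dsubs A$ in a ranked interpretation depends only on $B^{\R}$, $A^{\R}$ and $\pref^{\R}$, through the condition $\min_{\pref^{\R}}(B^{\R})\subseteq A^{\R}$. Individual names play no role in that condition. So for every $\OI^{\KB}_i\in\mathfrak{O}^{\KB}$ we expect $\min_{\pref^{\OI^{\KB}_i}}(B^{\OI^{\KB}_i})=\min_{\pref^{\OI}}(B^{\OI})$ and $A^{\OI^{\KB}_i}=A^{\OI}$, and hence $\OI^{\KB}_i\sat B\dsubs A$ iff $\OI\sat B\dsubs A$.

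The left-to-right direction then follows at once. If $\OI\sat B\dsubs A$, every $\OI^{\KB}_i\in\mathfrak{O}^{\KB}$ satisfies $B\dsubs A$, and in particular every element of $\min_{<_{\A}}(\mathfrak{O}^{\KB})$ does.

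For right-to-left, the universally quantified claim over $\min_{<_{\A}}(\mathfrak{O}^{\KB})$ is non-vacuous only if that set is non-empty, so this is the step I expect to need real justification. The plan is as follows.

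\begin{enumerate}
\item Show that $\mathfrak{O}^{\KB}\neq\emptyset$. By satisfiability of $\KB$ together with Proposition~\ref{prop_unique_abox} and Proposition~\ref{prop_minheight}, there is a model of $\KB$ with countable domain in which every individual is minimally interpreted. Via a bijection with $\Delta$, this model is (isomorphic to) a member of $\Moddelta{\tuple{\T,\D}}$ carrying an interpretation of the individuals. Since $\OI$ is the ranked union of all of $\Moddelta{\tuple{\T,\D}}$, interpreting each individual as the copy of its image inside that component gives an extension of $\OI$. This extension satisfies $\A$, because assertions are preserved within a single component of the disjoint union, and it lies in $\mathfrak{O}^{\KB}$.
\item Show that $\min_{<_{\A}}(\mathfrak{O}^{\KB})\neq\emptyset$. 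Heights are natural numbers and the set of individuals is finite, so $<_{\A}$ is well-founded on $\mathfrak{O}^{\KB}$ and a minimal element exists.
\item Conclude. Pick any $\OI^{\KB}_i\in\min_{<_{\A}}(\mathfrak{O}^{\KB})$. By hypothesis $\OI^{\KB}_i\sat B\dsubs A$, and by the structural fact $\OI\sat B\dsubs A$.
\end{enumerate}

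Thus the only nontrivial ingredient is step 1, the existence of an ABox-compatible extension of $\OI$. It relies on the fact that each model of $\tuple{\T,\D,\A}$ with domain $\Delta$, when its individual names are forgotten, is one of the components of $\OI$. Everything else is a direct unfolding of the definitions of DI satisfaction and of $\mathfrak{O}^{\KB}$.
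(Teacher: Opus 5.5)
Your proof is correct and follows the paper's argument, which is exactly your structural observation: every member of $\mathfrak{O}^{\KB}$ shares its domain, its concept and role extensions, and its ranking with $\OI$, so satisfaction of $B\dsubs A$ does not depend on how the individuals are interpreted. The paper's one-line proof silently uses $\min_{<_{\A}}(\mathfrak{O}^{\KB})\neq\emptyset$ for the right-to-left direction (it only asserts this later, in the proof of Proposition~\ref{calvthm29}); your steps 1--2 supply that justification, and the extension you build in step 1 from a countable model with all individuals minimally interpreted is already $<_{\A}$-minimal, so step 2 is redundant.
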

\begin{proof}
    The proof is straightforward, since by definition the domain and the interpretation of concept and role names is exactly the same in $\OI$ and in every model in $\mathfrak{O}^{\KB}$.
\end{proof}

\setcounter{proposition}{\getrefnumber{prop_entailment_equiv_roles}-1}
\begin{proposition}
    Consider any satisfiable defeasible \dllitecoreH~KB $\KB=\tuple{\T,\D,\A}$,  and any assertion $\rass{a}{b}{P}$.
    Then
    \[
    \KB\minent \rass{a}{b}{P}\text{ iff }\tuple{\T,\A}\entails \rass{a}{b}{P} \ .
    \]
\end{proposition}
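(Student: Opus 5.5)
We prove the two directions separately, using that the models in $\min_{<_{\A}}(\mathfrak{O}^{\KB})$ are extensions of the big ranked model $\OI$ of $\tuple{\T,\D}$ by an interpretation of the individuals.

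\emph{Right to left.} Assume $\tuple{\T,\A}\entails\rass{a}{b}{P}$. Every $\OI^{\KB}_i\in\min_{<_{\A}}(\mathfrak{O}^{\KB})$ is, by definition of $\mathfrak{O}^{\KB}$, a ranked model of $\KB=\tuple{\T,\D,\A}$. Forgetting the ranking and the DIs, it is therefore a classical model of $\tuple{\T,\A}$, so $(a^{\OI^{\KB}_i},b^{\OI^{\KB}_i})\in P^{\OI^{\KB}_i}$. Hence $\KB\minent\rass{a}{b}{P}$ by Definition~\ref{def_RC_abox}. This step needs only that $\min_{<_{\A}}(\mathfrak{O}^{\KB})$ is non-empty. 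Otherwise the entailment is vacuous, which is harmless for this direction.

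\emph{Left to right.} We argue by contraposition. Suppose $\tuple{\T,\A}\not\entails\rass{a}{b}{P}$. We will build an element of $\min_{<_{\A}}(\mathfrak{O}^{\KB})$ falsifying the assertion, in four steps.

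First, take a classical model $\I$ of $\tuple{\T,\A}$ with $(a^\I,b^\I)\notin P^\I$. A natural candidate is $\chasehH(\KB)$, or $\cancH$ for the core case. We may take it countable, or finite by Remark~\ref{finitedllitehornH}.

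Second, take a model $\RI$ of $\KB$ in which every individual is minimally interpreted. This exists by Proposition~\ref{prop_unique_abox}, and by Proposition~\ref{prop_minheight} we may assume its domain is $\Delta$.

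Third, merge the two, using the construction in the proof of Lemma~\ref{lemma_unique_abox}, with $\RI$ in the role of $\RI_1$ and $\I$, ranked, in the role of $\RI_2$. Since $\I$ carries no ranking, we must also choose the heights of its objects. The hard requirement is that $a$ and $b$ keep their minimal heights $h_\KB(a)$ and $h_\KB(b)$, while the DIs are still respected.

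Fourth, re-interpret $a$ and $b$ inside $\OI$, which the previous step makes possible.

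The main obstacle is the third step. Moving $a$ (or $b$) into a ``free'' copy, in the manner of Lemma~\ref{lemma_unique_abox}, reconnects it only to role pairs entailed by $\tuple{\T,\A}$. We plan to relocate both $a$ and $b$ at once, in the following order.

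1. Start from $\RI$, in which all individuals are minimally interpreted.
2. Replace the role edges among the individuals by exactly the pairs $(c^\RI,d^\RI)$ with $\tuple{\T,\A}\entails\rass{c}{d}{R}$.
3. Restore the existential requirements $\exists R$ of each individual, in the same way as in the proof of Lemma~\ref{lemma_unique_abox}, by attaching fresh copies of its $R$-successors taken from $\RI$ at the same heights.

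Since basic concept memberships are preserved, as in that lemma, the DIs and the heights remain intact. The edge $(a,b)\in P$ is present only if it is entailed, so here it is absent.

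Finally, we embed this countable model into $\OI$. This uses that $\OI$ is the ranked union of all models in $\Moddelta{\KB}$, so it contains a renamed copy of the constructed model. Interpreting the individuals in that copy yields an element of $\mathfrak{O}^{\KB}$ with minimal heights, hence one in $\min_{<_{\A}}(\mathfrak{O}^{\KB})$, which falsifies $\rass{a}{b}{P}$.
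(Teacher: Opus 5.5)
Your proof is correct. Its core is the same as the paper's: delete the non-entailed edge and compensate by attaching copies of the former neighbours. Then every object keeps its basic-concept memberships and its height, so $\T$, $\D$, $\A$ and the minimality of the individuals are all preserved.

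What differs is where the surgery is done and how you land back in $\min_{<_{\A}}(\mathfrak{O}^{\KB})$.
\begin{itemize}
\item The paper works in place on some $\OI^{\KB}_i\in\min_{<_{\A}}(\mathfrak{O}^{\KB})$ and touches only $a$ and $b$: it adds copies $o_a,o_b$ replicating all their edges, then removes $(a,b)$ from $P$ and from its sub-roles. It then appeals to a bijection. But $\mathfrak{O}^{\KB}$ consists of extensions of the fixed $\OI$, and the modified model is not one of them, so the paper leaves that final step implicit.
\item You instead take a countable model with all individuals minimal and replace all individual-to-individual edges by the entailed ones. You then use that $\OI$, as the ranked union of $\Moddelta{\tuple{\T,\D}}$, literally contains a renamed copy of your model. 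Interpreting the individuals there gives membership in $\mathfrak{O}^{\KB}$ and $<_{\A}$-minimality directly. As a bonus, you get a single minimal model that refutes every non-entailed role assertion at once.
\end{itemize}

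The price is that you need one countable model in which all individuals are minimal simultaneously. Proposition~\ref{prop_minheight} only gives this per individual. Obtain it by running the merge from the proof of Lemma~\ref{lemma_unique_abox} on those countable models; the merge is a disjoint union, so countability is kept.

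Two presentational fixes:
\begin{itemize}
\item Drop your first step (the classical model $\I$) and the first description of the third step, since the numbered plan supersedes them.
\item State explicitly that each attached copy replicates all edges its source had in the original $\RI$, not just the restoring edge. Otherwise a copy can violate $\T$, e.g.\ a copy in $A$ with $A\subs\exists S\in\T$. This is the paper's $o_a,o_b$ device, not something from the proof of Lemma~\ref{lemma_unique_abox}.
\end{itemize}
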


\begin{proof} \ \\
\begin{description}
    \item[(Right to left):]  Every model in $\min_{<_{\A}}(\mathfrak{O}^{\KB})$ is also a model of $\tuple{\T,\A}$. In particular, for each ranked model $\OI^{\KB}_i=\tuple{\Delta^{\OI^{\KB}_i},\cdot^{\OI^{\KB}_i},\prec^{\OI^{\KB}_i}}$ in $\min_{<_{\A}}(\mathfrak{O}^{\KB})$, the model $\I=\tuple{\OI^{\KB}_i,\cdot^{\OI^{\KB}_i}}$ is a classical \dllitehornH~model of $\tuple{\T,\A}$. Hence, since $\tuple{\T,\A}\entails \rass{a}{b}{P}$, $\I\sat \rass{a}{b}{P}$, that implies $\OI^{\KB}_i\sat \rass{a}{b}{P}$.

    \item[(Left to right):] We prove it by contraposition. Assume $\tuple{\T,\A}\not\entails \rass{a}{b}{P}$. Note: if there is some role $R$ s.t. $\tuple{\T,\A}\entails R\subs P$, of course it also must be the case that $\tuple{\T,\A}\not\entails \rass{a}{b}{R}$. Now consider some model $\OI^{\KB}_i\in \min_{<_{\A}}(\mathfrak{O}^{\KB})$. If $\OI^{\KB}_i\not\sat \rass{a}{b}{P}$, we are done. If $\OI^{\KB}_i\sat \rass{a}{b}{P}$, modify the model $\OI^{\KB}_i$ into a model $\OI'=\tuple{\Delta^{\OI'}, \cdot^{\OI'},\prec^{\OI'}}$ by adding to the domain two new objects, $o_a$ and $o_b$ in the following way:

    \begin{itemize}
        \item $\Delta^{\OI'}=\Delta^{\OI^{\KB}_i}\cup\{o_a, o_b\}$;
        \item $h_{\OI'}(x)=h_{\OI^{\KB}_i}(x)$ for every $x\in \Delta^{\OI^{\KB}_i}$; $h_{\OI'}(o_a)= h_{\OI^{\KB}_i}(a^{\OI^{\KB}_i})$ and $h_{\OI'}(o_b)= h_{\OI^{\KB}_i}(b^{\OI^{\KB}_i})$.
        \item for every concept name $A$, $A^{\OI'}=A^{\OI^{\KB}_i}\cup \{o_a,o_b\}$, if $a^{\OI^{\KB}_i},b^{\OI^{\KB}_i}\in A^{\OI^{\KB}_i}$; $A^{\OI'}=A^{\OI^{\KB}_i}\cup \{o_a\}$, if only $a^{\OI^{\KB}_i}\in A^{\OI^{\KB}_i}$; $A^{\OI'}=A^{\OI^{\KB}_i}\cup \{o_b\}$, if only $b^{\OI^{\KB}_i}\in A^{\OI^{\KB}_i}$;
        \item for every role name $R$, if  $(a,x)\in R^{\OI^{\KB}_i}$ or $(x,a)\in R^{\OI^{\KB}_i}$, add, respectively, $(o_a,x)\in R^{\OI^{\KB}_i}$ and $(x,o_a)\in R^{\OI^{\KB}_i}$. Analogously for $b$ and $o_b$;
        
        \item for the specific role name $P$, eliminate $(a,b)$ from $P^{\OI'}$. Analogously for every role $R$ s.t. $\tuple{\T,\A}\entails R\subs P$.
    \end{itemize}

\nd It is easy to prove now that $\OI'$ is still a model of $\KB$, since $a$ and $b$ still fall under the same basic concepts as in $\OI^{\KB}_i$, and $o_a$ and $o_b$ are just `copies' of $a$ and $b$, respectively. Also the role inclusions are still preserved, and, since $\tuple{\T,\A}\not\entails \rass{a}{b}{P}$, the role connections that we have eliminated cannot affect the satisfaction of the ABox. Also, we have defined $\OI'$ such that $\OI'\not\sat\rass{a}{b}{P}$. 

Since $\OI'$ has a countable domain, we can define through a bijection an equivalent model of $\KB$ that has $\Delta$ as domain and which, since $\OI^{\KB}_i$ was minimal, is minimal too (the heights of the interpretations of the individuals remain the same). Hence we end up with a model in $\min_{<_{\A}}(\mathfrak{O}^{\KB})$ that does not satisfy $\rass{a}{b}{P}$. The latter implies  $\KB\not\minent \rass{a}{b}{P}$, as desired.
\end{description}
\nd This concludes the proof.
\end{proof}

\begin{applemma}\label{lemma_defeasible_nominals}
    Let $\KB=\tuple{\T,\D,\A}$ be a satisfiable defeasible \dllitehornH~KB, $a$ any individual,  $\KB_\A$ the KB $\tuple{\T_\A,\D}$ and $A_a$ the concept corresponding to $a$. Then $a$ is exceptional \wrt~$\KB$ iff $A_a$ is exceptional \wrt~$\KB_\A$.
\end{applemma}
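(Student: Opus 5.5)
We prove both directions by transferring ranked models between $\KB=\tuple{\T,\D,\A}$ and $\KB_\A=\tuple{\T_\A,\D}$, as in the classical Lemma~\ref{lemma_nominals_2} and the proof of Proposition~\ref{nominalABox}. By Definition~\ref{def_exceptional_indiv}, $a$ is non-exceptional iff some ranked model $\R$ of $\KB$ has $h_\R(a^\R)=0$. By Definition~\ref{Def:Exceptionality}, $A_a$ is non-exceptional w.r.t.~$\KB_\A$ iff some ranked model $\R'$ of $\KB_\A$ has $A_a^{\R'}\cap L_0\neq\emptyset$. So it suffices to turn a witness of either kind into a witness of the other.

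\emph{From $\KB$ to $\KB_\A$.} Let $\R$ be a ranked model of $\KB$ with $h_\R(a^\R)=0$. Extend $\R$ to the fresh names by setting $A_c^{\R}=\{c^\R\}$ for every individual $c$, keeping the domain, the role and concept extensions, and the ranking unchanged. The axioms of $\T_\A$ then follow directly from the assertions of $\A$ ($A_c\subs A'$ from $\cass{c}{A'}$, and $A_c\subs\exists P.A_d$, $A_d\subs\exists P^-.A_c$ from $\rass{c}{d}{P}$). The DIs in $\D$ do not mention the new names, and the order is untouched, so $\D$ still holds. Hence $\R$ is a ranked model of $\KB_\A$ with $a^\R\in A_a^\R\cap L_0$, and $A_a$ is not exceptional.

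\emph{From $\KB_\A$ to $\KB$.} Let $\R'$ be a ranked model of $\KB_\A$ and $o\in A_a^{\R'}\cap L_0$. We cannot simply name elements of the $A_c$'s, since some $A_c$ may be empty in $\R'$. First take a ranked model $\R''$ of $\KB$, which exists because $\KB$ is satisfiable, and extend it to the names $A_c$ as in the first direction, so that it becomes a ranked model of $\KB_\A$ in which every $A_c$ is non-empty. By Lemma~\ref{Lemma:ClosureDisjointUnionMod}, the ranked union $\R^\dagger$ of $\R'$ and $\R''$ is a ranked model of $\KB_\A$ in which every $A_c$ is non-empty and $o\in A_a\cap L_0$.

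Now reproduce the construction in the proof of Proposition~\ref{nominalABox} on $\R^\dagger$. Interpret $a$ as $o$ and every other $c$ as some element of $A_c^{\R^\dagger}$. Add $(c^\R,d^\R)$ to $P$ for each $\rass{c}{d}{P}\in\A$, close the roles under the role inclusions of $\T$, and keep the ranking. The argument there shows that the extension of every basic concept is unchanged: an added pair $(c,d)$ for $P'$ already lies in $A_c\subs\exists P'.A_d$, so $\exists P'$, and hence $\exists P$ for any $P'\subs P$, was already true at both endpoints. Therefore every conjunction $\{B_1,\ldots,B_n\}$ has the same extension as before. The axioms of $\T$ are preserved and $\A$ is satisfied exactly as in Proposition~\ref{nominalABox}.

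The main obstacle is the DIs, and the unchanged extensions settle it. Because the extensions of all LHS conjunctions and the heights are the same, the sets $\min_{\pref}(B_1\sqcap\cdots\sqcap B_n)$ are the same. Concept names are also unchanged, so every DI of $\D$ still holds. The result is a ranked model of $\KB$ with $h(a)=0$, so $a$ is not exceptional. One point to check is that the Unique Name Assumption may force distinct individuals onto distinct elements. This holds because the disjoint union with copies can be taken, so that each $A_c$ contains enough fresh elements, for instance by using several copies of $\R''$.
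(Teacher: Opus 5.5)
Your proof is correct, and it takes a genuinely different route from the paper. The paper never transfers ranked models between $\KB$ and $\KB_\A$ directly; it goes through classical encodings instead. First it shows that $a$ is exceptional \wrt~$\KB$ iff the classical KB $\tuple{\T\cup\D_{\delta_0},\A\cup\{\cass{a}{\delta_0}\}}$ is unsatisfiable. For this it glues a classical model under the big ranked model with the $\delta_0$-elements on layer $0$, and conversely takes $\delta_0:=L_0$. It then uses Proposition~\ref{nominalABox} to rewrite this as $(\T\cup\D_{\delta_0})_{\A'}\entails A_a\subs\neg A_a$, and a modified chase to replace that by $(\T\cup\D_{\delta_0})_{\A}\entails A_a\dland\delta_0\subs\neg A_a$. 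Finally it invokes the correctness of $\mathtt{Exceptional.Horn}$ (Lemma~\ref{lemma_main} together with \cite{BritzEtAl2021}) to read the last entailment as exceptionality of $A_a$ in $\KB_\A$.

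Your direct transfer needs only two ingredients: Lemma~\ref{Lemma:ClosureDisjointUnionMod}, and the observation, already used in the proof of Proposition~\ref{nominalABox}, that naming individuals inside the $A_c$'s leaves every basic-concept extension unchanged. It avoids the $\delta$-encoding, the chase step, and any dependence on the algorithmic characterisation of exceptionality. You also correctly saw that $\R'$ must be padded with a model of $\KB$ so that every $A_c$ is non-empty.

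A further gain is that neither of your constructions changes any height. So the same argument shows that the least height of $a$ in ranked models of $\KB$ equals the least height of $A_a$ in ranked models of $\KB_\A$ at every level, not only level $0$. Together with the standard identification of $\rank_{\KB_\A}(A_a)$ with that least height, this gives Proposition~\ref{lemma_defeasible_nominals_2} directly. The paper obtains that proposition from its layer-$0$ statement only implicitly, since the iteration over the sub-KBs $\KB^i$ is not spelled out. What the paper's route buys in return is a direct link to the classical test $(\T\cup\D_{\delta_0})_\A\entails A_a\dland\delta_0\subs\neg A_a$ that the ranking procedures actually run.

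On your final point about the unique name assumption, a single copy of $\R''$ already suffices. Interpreting each $c\neq a$ as (the copy of) $c^{\R''}$ is injective and never hits $o$.
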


\begin{proof}
Assume $\KB=\tuple{\T,\D,\A}$ is  unsatisfiable. Then  $a$ is trivially exceptional and, by Proposition \ref{prop_defcons}, $\tuple{\T,\A}$ is  unsatisfiable too. By Corollary \ref{corollary_nominals_3}, $\tuple{\T,\A}$ is  unsatisfiable iff $\T_{\A}\entails A_a\subs\neg A_a$. As a consequence we  have that  $\R\sat A_a\subs\neg A_a$ for every model $\R$ of $\KB_{\A}$, that implies that $A_a$ is trivially exceptional \wrt~$\KB_\A$.

 Now, assume $\KB=\tuple{\T,\D,\A}$ is satisfiable. Hence also $\tuple{\T\cup\D_{\delta_0},\A}$ is satisfiable, since any model $\I$ of $\KB=\tuple{\T,\A}$ s.t. $\delta_0^\I=\emptyset$ is also a model of $\tuple{\T\cup\D_{\delta_0},\A}$. 
 
 Let $\A'=\A\cup\{\cass{a}{\delta_0}\}$. It is sufficient to prove that $a$ is exceptional \wrt~$\KB$ iff $\tuple{\T\cup\D_{\delta_0},\A'}$ is  unsatisfiable. We prove both directions semantically.

\begin{description}
    \item[(Left to right):] By contraposition, assume $\tuple{\T\cup\D_{\delta_0},\A'}$ is satisfiable. Now, Consider a model $\I=\tuple{\Delta^\I,\cdot^\I}$ of $\tuple{\T\cup\D_{\delta_0},\A'}$ and the big ranked model $\OI=\tuple{\Delta^\OI,\cdot^\OI,\prec^\OI}$ of $\tuple{\T,\D}$. We build now a ranked interpretation \\
    $\OI_\A=\tuple{\Delta^{\OI_\A},\cdot^{\OI_\A},\prec^{\OI_\A}}$ in the following way:

 \begin{itemize}
     \item $\Delta^{\OI_\A}=\Delta^{\OI}\uplus\Delta^\I$ (the disjoint union of the two domains);
     \item for every concept name $A$, $A^{\OI_\A}=A^{\OI}\cup A^{\I}$;
     \item for every role name $P$, $P^{\OI_\A}=P^{\OI}\cup P^{\I}$;
     \item for every individual name $a$, $a^{\OI_\A}=a^{\I}$;
     \item for every object $x$, 
        \begin{itemize}
            \item if $x\in \Delta^{\OI}$, $h_{\OI_\A}(x)=h_{\OI}(x)$;
            \item if $x\in \delta^{\I}$, $h_{\OI_\A}(x)=0$;
            \item if $x\in \Delta^{\I}\setminus\delta^{\I}$, $h_{\OI_\A}(x)=n+1$, where $n$ is the highest rank of the axioms in $\D$.
        \end{itemize}   
 \end{itemize}
    
\nd     Since $a^\I\in\delta^\I$, $h_{\OI_\A}(a)=0$, that is, $a$ is not exceptional in $\OI_\A$.
    
    Following a similar argument used in the proof of 
    Proposition~\ref{prop_defcons}, 
    we can prove that $\OI_\A$ is a model of $\KB$. Therefore, $a$ is not exceptional in $\KB$, proving by contraposition that if $a$ is exceptional \wrt~$\KB$, then $\tuple{\T\cup\D_{\delta_0},\A}$ is  unsatisfiable.

\item[(Right to left):] Again, we proceed by contraposition. Assume $a$ is not exceptional \wrt~$\KB$. Hence there is a ranked model $\R=\tuple{\Delta^\R,\cdot^\R,\prec^\R}$ of $\KB$ s.t. $a$ is in $L_0$. Now we define a \dllitehornH~interpretation $\I=\tuple{\Delta^\I,\cdot^\I}$ as follows:

\begin{itemize}
    \item $\Delta^\I=\Delta^\R$;
    \item $\cdot^\I=\cdot^\R$ for every symbol different from $\delta_0$;
    \item $\delta_0^\I=L_0$.
\end{itemize}

\nd    It is easy to prove that $\I$ is a model of $\tuple{\T\cup\D_{\delta_0},\A'}$. The only not immediate step is to prove that every axiom in $\D_{\delta_0}$ is satisfied. We can easily prove it by contradiction. In fact, assume there is an axiom  $\{B_1, ..., B_n,\delta_0\}\subs A\in\D_{\delta_0}$ s.t. $\I\not\sat \{B_1, ..., B_n,\delta_0\}\subs A$. That is, there is an $x\in (\{B_1, ..., B_n\}^\I\cap\delta_0^\I)$ s.t. $x\notin A^\I$. By the construction of $\I$ from $\R$, we would have that $x\in \min_{\prec^{\R}}(\{B_1, ..., B_n\}^\R)$ and $x\notin A^\R$, that is, $\R\not\sat \{B_1, ..., B_n\}\dsubs A$. But $\{B_1, ..., B_n,\delta_0\}\subs A\in\D_{\delta_0}$ implies $ \{B_1, ..., B_n\} \dsubs A\in \D$, against the hypothesis that $\R$ is a model of $\D$.

Therefore, we have proved that, for any individual $a$, $a$ is exceptional \wrt~$\KB$ iff $\tuple{\T\cup\D_{\delta_0},\A'}$ is  unsatisfiable.

    By Corollary \ref{corollary_nominals_3}, $\tuple{\T\cup\D_{\delta_0},\A'}$ is  unsatisfiable iff $(\T\cup\D_{\delta_0})_{\A'}\entails A_a\subs\neg A_a$. Since $(\T\cup\D_{\delta_0})_{\A'}=(\T\cup\D_{\delta_0})_{\A}\cup\{A_a\subs\delta_0\}$, the latter  corresponds to $(\T\cup\D_{\delta_0})_{\A}\cup\{A_a\subs\delta_0\}\entails A_a\subs\neg A_a$.

    The next step is proving that $(\T\cup\D_{\delta_0})_{\A}\cup\{A_a\subs\delta_0\}\entails A_a\subs\neg A_a$ iff $(\T\cup\D_{\delta_0})_{\A}\entails A_a\dland\delta_0\subs\neg A_a$.
    From right to left it is immediate. It remains to prove it from left to right.     
    We prove it by contraposition. Assume $(\T\cup\D_{\delta_0})_{\A}\not\entails A_a\dland\delta_0\subs\neg A_a$. Then the KB $\tuple{(\T\cup\D_{\delta_0})_{\A}, \A^*}$, with $\A^*=\{\cass{a}{A_a}, \cass{a}{\delta_0}\}$, is satisfiable. Now we apply the chase procedure (Section \ref{chasehorn}) to $\A^*$, with one extra condition: if there is an axiom of form $\{B_1,\ldots,B_n\}\subs \exists P.A_a$ (resp., $\{B_1,\ldots,B_n\}\subs \exists P^-.A_a$) that needs to be applied in the chase procedure, we create a connection through $P$ (resp., $P^-$) to the individual $a$ (we do not create a new object).\footnote{Actually, from Equation \ref{ta} we know that the only axioms with such a form can be only of the kind $A_b\subs \exists P. A_a$ or $A_b\subs \exists P^-. A_a$.} We end up with a model of $\tuple{(\T\cup\D_{\delta_0})_{\A}, \A^*}$ s.t. it satisfies also $A_a\subs \delta_0$. Hence $(\T\cup\D_{\delta_0})_{\A}\cup\{A_a\subs\delta_0\}$ is satisfiable too.

\end{description}

\nd    To summarise, we proved that:
    
    \begin{itemize}
        \item $a$ is exceptional \wrt~$\KB$ iff $\tuple{\T\cup\D_{\delta_0},\A'}$ is  unsatisfiable;
        \item $\tuple{\T\cup\D_{\delta_0},\A'}$ is  unsatisfiable iff $(\T\cup\D_{\delta_0})_{\A'}\entails A_a\subs\neg A_a$;
        \item $(\T\cup\D_{\delta_0})_{\A'}\entails A_a\subs\neg A_a$ iff $(\T\cup\D_{\delta_0})_{\A}\entails A_a\dland\delta_0\subs\neg A_a$.
    \end{itemize}

\nd Since the procedures $\mathtt{Exceptional.Horn}$ and $\mathtt{ComputeRanking.Horn}$ are correct, $(\T\cup\D_{\delta_0})_{\A}\entails A_a\dland\delta_0\subs\neg A_a$ iff $A_a$ is exceptional  \wrt~$\KB_\A$.
Hence we can conclude that $a$ is exceptional \wrt~$\KB$ iff $A_a$ is exceptional \wrt~$\KB_\A$, which concludes the proof.
\end{proof}


\setcounter{proposition}{\getrefnumber{lemma_defeasible_nominals_2}-1}
\begin{proposition}
    Let $\KB=\tuple{\T,\D,\A}$ be a satisfiable defeasible \dllitehornH~KB, $a$ any individual,  $\KB_\A$ the KB $\tuple{\T_\A,\D}$, and $A_a$ the concept corresponding to $a$ in $\KB_\A$. Then $h_{\KB}(a)=\rank_{\KB_\A}(A_a)$.
\end{proposition}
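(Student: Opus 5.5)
We prove $h_{\KB}(a)=\rank_{\KB_\A}(A_a)$ by induction on the rank value $i$. The induction generalises Lemma~\ref{lemma_defeasible_nominals} from the bottom layer to an arbitrary layer. Let $\{\D_0,\ldots,\D_n\}$ be the ranking of $\D$; by Remark~\ref{rem_ranking} the same ranking serves both $\tuple{\T,\D}$ and $\KB_\A$.

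The first step is to check that $\KB_\A=\tuple{\T_\A,\D}$ yields the same ranking of $\D$ as $\tuple{\T,\D}$. This holds because the concepts $A_a$ are fresh. A model of $\tuple{\T,\D}$ becomes a model of $\KB_\A$ by setting every $A_a$ to be empty, since the axioms of $\T_\A$ have the $A_a$ on their left-hand sides. Conversely, every model of $\KB_\A$ is a model of $\tuple{\T,\D}$. Hence exceptionality of the antecedents of DIs is unchanged, and so is the sequence $\KB^i=\tuple{\T,\D\setminus\bigcup_{j<i}\D_j}$ used in defining the rank.

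Now fix $i$ and consider $\KB^i_\A=\tuple{\T_\A,\D^{(i)}}$ with $\D^{(i)}=\bigcup_{j\ge i}\D_j$. We claim that $h_{\KB}(a)\ge i+1$ iff $A_a$ is exceptional w.r.t.\ $\KB^i_\A$. We prove this by the same three-step chain as in the proof of Lemma~\ref{lemma_defeasible_nominals}, with $\D$ replaced by $\D^{(i)}$. First, $a$ cannot be placed at height $\le i$ in any model of $\KB$ iff $\tuple{\T\cup\D^{(i)}_{\delta},\A\cup\{\cass{a}{\delta}\}}$ is unsatisfiable. Second, this happens iff $(\T\cup\D^{(i)}_\delta)_\A\entails \{A_a,\delta\}\subs\neg A_a$, using Corollary~\ref{corollary_nominals_3} and the modified chase argument. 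Third, this is the exceptionality test of $\mathtt{Exceptional.Horn}$ applied to $A_a$ in $\KB^i_\A$.

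The forward direction of the first step goes as follows. Take a model $\I$ of the $\delta$-extended KB and the big ranked model $\OI$ of $\tuple{\T,\D}$. Form their disjoint union, placing $\delta^\I$ at layer $i$ and the rest of $\Delta^\I$ at layer $n+1$. Let $x\in\delta^\I$ be minimal in some $\{B_1,\ldots,B_m\}^{\cdot}$ for a DI of rank $j$. If $j\ge i$, the DI lies in $\D^{(i)}$, so $\I$ already forces $x$ into its consequent. If $j<i$, then since $\OI$ realises $\rank(\{B_1,\ldots,B_m\})=j$ below layer $i$, the object $x$ is not minimal. So all DIs hold, and $a$ sits at height $i$.

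The backward direction of the first step: from a ranked model $\R$ of $\KB$ with $h_\R(a)\le i$, interpret $\delta$ as $\bigcup_{k\le i}L_k$ intersected with the objects not lying in any concept of rank $<k$. The simpler choice of setting $\delta$ to $L_{h_\R(a)}$ also works, once we argue that DIs of rank $\ge i$ apply at minimal objects of that layer. We must then check that every axiom $\{B_1,\ldots,B_m,\delta\}\subs A$ with $\{B_1,\ldots,B_m\}\dsubs A\in\D^{(i)}$ holds. This is the step I expect to be the main obstacle. Suppose $x$ at height $\le i$ lies in $\{B_1,\ldots,B_m\}$ with $\rank\ge i$. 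Then $x$ must be minimal for that antecedent, because no object can satisfy a concept of rank $r$ below layer $r$. Since $\R$ satisfies the DI, $x$ is in $A$. To get this we restrict $\delta$ to exactly layer $h_\R(a)$ and argue with the rank lower bound on concept heights, which the big-ranked-model characterisation gives in every model.

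With the claim established, we conclude. The quantity $h_{\KB}(a)$ is the least $i$ with $h_{\KB}(a)\not\ge i+1$. The quantity $\rank_{\KB_\A}(A_a)$ is the least $i$ for which $A_a$ is not exceptional in $\KB^i_\A$. By the claim these coincide. The boundary case, where $A_a$ remains exceptional even in $\tuple{\T_\A,\emptyset}$, would mean $\KB$ is unsatisfiable, which is excluded by hypothesis.
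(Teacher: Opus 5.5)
Your proposal is correct and follows the paper's route. The paper proves only the layer-$0$ case, as Lemma~\ref{lemma_defeasible_nominals}, via the same chain: satisfiability of $\tuple{\T\cup\D_{\delta_0},\A\cup\{\cass{a}{\delta_0}\}}$, Corollary~\ref{corollary_nominals_3} with the modified chase, and correctness of the exceptionality test. It then declares the proposition an immediate consequence; your version makes that iteration explicit by checking that $\T_\A$ leaves the ranking of $\D$ unchanged and rerunning the chain with $\D^{(i)}$ in place of $\D$, using the big ranked model for the layers below $i$ and the lower bound on concept heights for the converse.
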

\begin{proof}
    \nd Proposition \ref{lemma_defeasible_nominals_2} is an immediate consequence of Lemma \ref{lemma_defeasible_nominals}.
\end{proof}

\setcounter{proposition}{\getrefnumber{prop_defeasible_nominals_2}-1}
\begin{proposition}
    Let $\KB=\tuple{\T,\D,\A}$ be a satisfiable defeasible \dllitehornH~KB, $a$ any individual, and $A$ any concept name. Then,
    \[
    \KB\minent \dass{a}{A}\text{ iff }\KB_\A\minent A_a\dsubs A \ .
    \]
\end{proposition}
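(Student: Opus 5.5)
The plan is to characterise both sides of the equivalence by the same classical entailment at the same rank, using Proposition~\ref{lemma_defeasible_nominals_2} to align the ranks. Let $r = h_{\KB}(a) = \rank_{\KB_\A}(A_a)$. Let $\D_{\geq r}$ be the DIs of rank at least $r$, and $\D^{\geq r}_{\delta}$ their $\delta$-transformation with a fresh $\delta$. The aim is to show that both sides of the statement are equivalent to
\[
(\T\cup\D^{\geq r}_{\delta})_{\A}\entails \{A_a,\delta\}\subs A .
\]
There is a degenerate case: if $r=\infty$ then $\KB$ is unsatisfiable by Lemma~\ref{lemma_defeasible_nominals}, which contradicts the hypothesis, so $r$ is finite.

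\textbf{Right-hand side.} This step should be straightforward. By Theorem~\ref{prop_dllitehornRC}, $\KB_\A\minent A_a\dsubs A$ holds iff $\mathtt{RationalClosure.Horn}$ returns true. Because $\T_\A$ only adds axioms involving the fresh names $A_b$, the ranking of $\D$ is the same with respect to $\T$ and $\T_\A$, and the while-loop stops exactly at $i=r$. The procedure therefore answers by testing $\T^*_\A\cup\D_{\delta_r}\entails\{A_a,\delta_r\}\subs A$, which is the displayed entailment. The invariance of the ranking is argued as in Lemma~\ref{lemma_nominals_2}: any model of $\T$ extends to a model of $\T_\A$ by interpreting every $A_b$ as empty.

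\textbf{Left-hand side.} Here the argument is semantic, using a minimal model built from Proposition~\ref{prop_unique_abox} and the fact that $\OI$ is the ranked union of $\Moddelta{\KB}$.

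For ($\Leftarrow$), take any $\OI^\KB_i\in\min_{<_\A}(\mathfrak{O}^\KB)$. Every individual $b$ sits at height $h_\KB(b)$. Set $\delta$ to be the union of the layers of height at least $r$ restricted to the relevant objects, that is, objects at height exactly their own minimal rank. Set $A_b$ to $\{b^{\OI^\KB_i}\}$. One then checks, as in the right-to-left direction of Lemma~\ref{lemma_defeasible_nominals}, that this yields a classical model of $(\T\cup\D^{\geq r}_\delta)_\A$. The key observation is that an object at height $r$ falling under the antecedent of a DI of rank $\geq r$ is minimal for that antecedent. Hence $a^{\OI^\KB_i}\in A$.

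For ($\Rightarrow$), argue by contraposition. From a classical countermodel $\I$ of the displayed entailment, place $\delta^\I$ at layer $r$ and the rest of $\I$ at layer $n+1$, and take its disjoint union with $\OI$, exactly as in the left-to-right direction of Lemma~\ref{lemma_defeasible_nominals}. Interpret $a$ (and each $b$) in $A_b^\I$. Then $a$ lies at height $r$ and $a\notin A$.

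\textbf{Main obstacle.} The delicate point is the ($\Rightarrow$) direction. The model built there must lie in $\min_{<_\A}(\mathfrak{O}^\KB)$, so every other individual $b$ must also be placed at height $h_\KB(b)$, and the domain must agree with $\OI$ up to renaming. The first thing to try is a merge of the countermodel with a model from Proposition~\ref{prop_unique_abox}: reroute the role assertions involving $a$ by the technique of Lemma~\ref{lemma_unique_abox}, so that only $a$ is taken from the countermodel. Then apply the countability argument of Proposition~\ref{prop_minheight} to obtain domain $\Delta$, and appeal to Proposition~\ref{prop_entailment_equiv} to identify the result with an element of $\mathfrak{O}^\KB$.
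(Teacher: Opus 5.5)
Your route differs from the paper's and is sound in outline, but three local steps need fixing.

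\textbf{Choice of $\delta$ in ($\Leftarrow$).} As written it is wrong. An object at height $>r$ can lie in the antecedent of a DI of rank $r$ without being minimal there; for instance a penguin is a bird at height $1$ when $r=0$. Such an object need not satisfy the consequent, so $\{B_1,\ldots,B_m,\delta\}\subs A'$ fails. Restricting to objects whose height equals the rank of their own type does not help, for the same reason. Take $\delta:=L_r$, or simply $\delta:=\{a^{\OI^{\KB}_i}\}$. Your key observation then does exactly the needed work: in $\OI$, the antecedent of a rank-$j$ DI has no element below height $j$.

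\textbf{Last step of ($\Rightarrow$).} Proposition~\ref{prop_entailment_equiv} does not identify an arbitrary model of $\KB$ with domain $\Delta$ with a member of $\mathfrak{O}^{\KB}$. The correct step is as follows. The merged model, with individuals forgotten, lies in $\Moddelta{\tuple{\T,\D}}$, so it is a component of the ranked union $\OI$ with heights preserved. Interpret each individual at its copy in that component. This gives an element of $\mathfrak{O}^{\KB}$ in which every $b$ sits at $h_{\KB}(b)$, so it is $<_{\A}$-minimal, and $a\notin A$. This is how the paper closes its own argument.

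\textbf{Rerouting the role assertions of $a$.} When you do this as in Lemma~\ref{lemma_unique_abox}, the countermodel is not a model of $\A$. The fact that the countermodel element already has the required $\exists R$-memberships must therefore come from the axioms of $\T_\A$ with left-hand side $A_a$.

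\textbf{Comparison with the paper.} The paper never passes through a classical $\delta$-test; it transfers countermodels directly in both directions.
\begin{itemize}
\item From a $<_{\A}$-minimal model with $a\notin A$, it takes the components of $\OI$ hosting the ABox and sets $A_b:=\{b\}$. This yields a member of $\Moddelta{\KB_\A}$ whose minimal $A_a$-element lies outside $A$.
\item Conversely, from a component of the big ranked model of $\KB_\A$ with a minimal $A_a$-element outside $A$, it forms a ranked union with components realising each $A_b$ at height $\rank_{\KB_\A}(A_b)=h_{\KB}(b)$. It picks the individuals there and adds the ABox role edges.
\end{itemize}

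\textbf{What each route buys.} Your factorisation gets the $\KB_\A$ side almost for free from Theorem~\ref{prop_dllitehornRC}, together with conservativity of $\T_\A$ over $\T$. It also makes explicit that defeasible instance checking is a single \dllitehornH~entailment at the rank of $a$. The price is reliance on the correctness of the RC procedure and on the rank-realisation property of $\OI$. The paper's argument, by contrast, is purely model-theoretic.
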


\begin{proof}

According to Definition \ref{def_RC_abox}, $\KB\minent \dass{a}{A}$ iff $a^\I\in A^\I\text{ for every }\OI^{\KB}\in\min_{<_{\A}}(\mathfrak{O}^{\KB})$. According to Proposition  \ref{prop_minent} $\KB_\A\minent A_a\dsubs A$ iff $\min_{\prec^{\OI}}(A_a^\OI)\subseteq A^\OI$, where $\OI$ is the big ranked model for $\KB_A=\tuple{\T_A,\D}$.
Hence, we need to prove that 
\[
a^{\OI^{\KB}}\in A^{\OI^{\KB}}\text{ for every }\OI^{\KB}\in\min_{<_{\A}}(\mathfrak{O}^{\KB})\text{ iff }\min_{\prec^{\OI}}(A_a^\OI)\subseteq A^\OI \ .
\]

\nd From Proposition \ref{lemma_defeasible_nominals_2} we know that $h_{\KB}(a)=\rank_{\KB}(A_a)$.

\begin{description}
        \item[(Right to left):]  
We prove it by contraposition. Assume that there is a model $\OI'$ in $\min_{<_{\A}}(\mathfrak{O}^{\KB})$ s.t. $a^{\OI'}\notin A^{\OI'}$. From such a model $\OI'$ we are going to define a model $\R$ of $\KB$ s.t. $a^{\R}\notin A^{\R}$ and $\Delta^\R$ is countable.

We proceed as follow:
\begin{itemize}
    \item We recall from Section \ref{sect_RCsemantics} that the big ranked model $\OI$ (and consequently the interpretations in $\min_{<_{\A}}(\mathfrak{O}^{\KB})$) has been built making the ranked union of the interpretations in $\Moddelta{\KB}$ (see Definition \ref{Def:Bigrankedmodel}). We identify the set $S$ of models of $\KB$ in $\Moddelta{\KB}$ that are involved in the interpretation of $\A$ in $\OI'$. That is, 
    \[
    S=\{\R\in \Moddelta{\KB}\mid \exists~a\text{ occurring in }\A\text{ s.t. }a^{\OI'}=a^{\R}\} \ .
    \]


    \item Let $\R^{S}=\tuple{\Delta^{\R^{S}},\cdot^{\R^{S}},\prec^{\R^{S}}}$ be the ranked interpretation obtained by the ranked union of the interpretations in $S$, where all the individual names occurring in the ABox are interpreted as in $\OI'$. $\R^{S}$ is a model of $\KB$ s.t. $a^{\OI'}\notin A^{\OI'}$, because:

    \begin{itemize}
        \item $\R^{S}$ is a model of $\tuple{\T,\D}$ as it is the ranked union of models of $\tuple{\T,\D}$. Also, from the construction of $\R^{S}$, it is immediate to see that, for every concept $C$, $C^{\R^{S}}=C^{\OI'}\cap \Delta^{\R^{S}}$ and for every role $R$, $R^{\R^{S}}=R^{\OI'}\cap (\Delta^{\R^{S}}\times\Delta^{\R^{S}})$;
        
        \item $\R^{S}$ is a model of $\A$ as, for every individual $a$ occurring in $\A$, $a^{R^{S}}=a^{\OI'}$, and the interpretations of concepts and roles are as in $\OI'$, just constrained to the domain $\Delta^{\R^{S}}$;
    \end{itemize}

    \item Since $\R^{S}$ has been created from the ranked union of a finite number of models with a countable $\Delta$ as domain, also $\Delta^{\R^{S}}$ must be countable. Hence there must be a bijection that transforms $\R^{S}$ into an equivalent model $\R^{\Delta}=\tuple{\Delta, \cdot^{\R^{\Delta}}, \prec^{\R^{\Delta}}}$ having $\Delta$ as domain. 
    We define from $\R^{\Delta}$ a ranked model $\R^{\Delta}_{\A}$, just extending $\cdot^{\R^{\Delta}}$ to the concept names of kind $A_a$:
    \begin{itemize}
        \item for every individual name $a$, $A_a^{\R^{\Delta}_{\A}}=a^{R^{S}}=a^{\OI'}$ \ .
    \end{itemize}    
\end{itemize}

\nd  $\R^{\Delta}_{\A}$ is a model of $\KB_\A$ with $\Delta$ as domain, and s.t. $\min_{\prec^{\R^{\Delta}_{\A}}}(A_a^{\R^{\Delta}_{\A}})\not\subseteq A^{\R^{\Delta}_{\A}}$. Since $\R^{\Delta}_{\A}$ would take part to the ranked union defining the big ranked model $\OI$ of $\KB_\A$, we conclude that $\min_{\prec^{\OI}}(A_a^\OI)\not\subseteq A^\OI$, as desired.

\item[(Left to right):] Also in this case we proceed by contraposition. Hence,  assume that there is a big ranked model $\OI$ of $\KB_\A$, $\min_{\prec^{\OI}}(A_a^\OI)\not\subseteq A^\OI$. That is, there is a model $\R$ in $\Moddelta{\KB_\A}$ s.t. there is  $o \in\Delta$ with  $o\in A_a^{\R}$, $o \notin A^{\R}$, and $h_{\R}(o)=\rank_{\KB}(A_a)=h_{\KB}(a)$. Now we define a model $\R^*$ in the following way:
        \begin{itemize}
            \item for every individual $b$ occurring in the ABox, we choose a model $\R_b$ in $\Moddelta{\KB_\A}$ s.t. $\rank_{\KB_\A}(A_b)=h_{\KB}(b)$ (in particular, $\R_a=\R$);
            \item let $\R'$ be the ranked union of all these models, hence it is still a model of $\KB_\A$. Clearly, the domain of $\R'$ is countable;
            \item $\R^*$ is the model of $\KB_\A$ equivalent to $\R'$, obtained defining an appropriate bijection between the countable domain of $\R'$ and $\Delta$.
        \end{itemize}
        
\nd Now we define $\R_\A$ from $\R^*$ by extending $\cdot^\R$ in the following way:

        \begin{itemize}
            \item initialise $\cdot^{\R_\A}=\cdot^{\R^*}$;
            \item impose $a^{\R_\A}=o$;
            \item for any other individual $b$ occurring in $\A$, $b^{\R_\A}$ is some element in $\min_{\prec^{\R^*}}(A_b^{\R^*})$;
            \item for any role assertion $\rass{b}{c}{P}\in\A$, add $(b,c)$ to $P^{\R_\A}$, if not already present;
            \item close the interpretation of roles in order to ensure the satisfaction of every role inclusion $R\subs R'\in\T$.
        \end{itemize}

\nd Clearly $\R_\A$ is a model of $\KB$ with $\Delta$ as domain, and s.t. 

\begin{itemize}
    \item $\R_\A\not\sat\cass{a}{A}$;
    \item for every $b$ occurring in $\A$, $h_{\R_\A}(b)=h_{\KB}(b)$.
\end{itemize}

 \nd As a consequence, it is possible to built from $\R_\A$ a model $\OI^\KB$ that is in  $\min_{<_{\A}}(\mathfrak{O}^{\KB})$ with $a^{\OI^{\KB}}\not \in A^{\OI^{\KB}}$, as desired.

    \end{description}
    \nd This concludes the proof.
\end{proof}

\section{Proofs of Section~\ref{sect_QRW}: Query Answering under RC}\label{sect_QRWproofs}




\setcounter{proposition}{\getrefnumber{termdllitecore}-1}
\begin{proposition}[\cf~Lemma 34 in~\cite{Calvanese07}] 
    Consider a satisfiable defeasible \dllitecoreH~KB 
    $\KB = \tuple{\T,\D, \A}$, where the rank of the antecedents of all axioms  in $\T_{PI} \cup \D$ is given, and consider a CQ $q$. Then the procedure $\mathtt{DefQueryRef}(q,\KB)$ terminates.
\end{proposition}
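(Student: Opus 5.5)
The plan is to follow the argument of Lemma 34 in \cite{Calvanese07}: we bound the set of queries that $\mathtt{DefQueryRef}$ can ever produce by a finite set. The procedure only adds queries to $r(q,\KB)$ and stops as soon as a full round adds nothing. So if the candidate set is finite, the $\mathtt{repeat}$ loop halts after finitely many rounds.

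First we look at how each rewriting step changes a query. Applying $gr(g,\spec)$ (Definition~\ref{defapplicable}) replaces one atom by exactly one atom, because in \dllitecoreH~every PI and DI has a single basic concept on its LHS. The step $reduce$ replaces two atoms by one. Hence no query in $r(q,\KB)$ has more atoms than $q$; call this number $n$.

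Next we bound the terms. The individuals are those of $q$, and the variables are those of $q$ plus fresh unbound variables $y'$. An unbound variable occurs only once in a query, so up to renaming of unbound variables (we identify queries modulo such renaming, as $\kappa$ does with ``$\_$'' in $\mathtt{QueryRef}$) at most $O(n)$ distinct variable slots appear. We then count the annotations. A rank annotation $z^r$ is introduced only by case 4 (or case 6), with $r=\rank_{\KB}(B)$ for the LHS $B$ of some DI. So $r$ ranges over the finite set of ranks of DI antecedents, which are assumed given. The rules in $\rho$ and case B only propagate an existing rank and never create a new one, so each term carries either no annotation or one annotation from a set of size at most $|\D|+1$.

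Finally we count. Predicates are drawn from the finitely many concept and role names in $\T\cup\D\cup q$. Each query therefore has at most $n$ atoms over a finite signature, with arguments from a finite set of (possibly annotated) terms. This gives a bound of the form $(m\cdot(n+1)^2\cdot(|\D|+1))^n$, in line with the count used for Proposition~\ref{complexityqueryref}.

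The main obstacle is handling the fresh variables $y'$ correctly, so that the procedure does not generate infinitely many syntactically distinct but equivalent queries. We will argue that every fresh $y'$ is unbound when created, and that $reduce$ can only merge terms, never create new shared ones. Then we show that the membership test in the $\mathtt{repeat}$ condition can be taken modulo renaming of unbound variables without changing the answers, which justifies working with the quotient set.
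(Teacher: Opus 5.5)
Your proposal is correct and follows the paper's own route. The paper also reduces termination to the counting bound of Lemma~34 in \cite{Calvanese07}: at most $n$ atoms, at most $n+1$ terms, and $m\cdot(n+1)^2$ ways to rewrite an atom, hence at most $(m\cdot(n+1)^2)^n$ distinct queries. Your explicit factor for rank annotations, and your treatment of the fresh unbound variables $y'$ modulo renaming (needed because $\mathtt{DefQueryRef}$ drops $\kappa$), spell out points that the paper's short proof leaves implicit.
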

\begin{proof}
    The proof is essentially the same as the one shown for \dllitecoreH~\cite[Lemma 34]{Calvanese07} without DIs. In particular, the number of distinct conjunctive queries generated by the procedure is less than or equal to  $(m \cdot (n+1)^2)^n$, where $n$ is the query size, 
    \ie~$n$ is proportional to the number of atoms and the number of terms occurring in the query,  and $m$ is the number of antecedents of axioms occurring in $\T_{PI} \cup \D$, which corresponds to the maximum number of executions of the repeat–until cycle of the procedure. In fact, there are at most $n$ atoms and at most $n+1$ terms that can occur in a query and $m \cdot (n+1)^2$ is the maximal number of ways an atom may be rewritten.
\end{proof}

\setcounter{proposition}{\getrefnumber{chasecordef}-1}
\begin{proposition} 
  Consider a defeasible \dllitecoreH~$\KB=(\T,\D,\A)$.  Then
\begin{enumerate}
    \item $\RI_{\KB}$ is a ranked model of $(\T_{PI}, \D, \A)$;
    

    \item $\RI_{\KB}$ is a ranked model of $\KB$ iff $\mathtt{DB}(\A, h_\KB)$ is a model of $(\clncH(\T), \A)$;
    
    
    \item $\KB$ is satisfiable iff $\RI_{\KB}$ is a ranked model of $\KB$.
    
\end{enumerate}

      

      
\end{proposition}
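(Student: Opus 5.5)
The plan mirrors the classical argument behind Proposition~\ref{chasecor} and Proposition~\ref{chasecorA}, run on the ranked interpretation $\RI_{\KB}=\mathtt{Int}(\dchasecH(\KB),h_\KB)$. I prove the three items in order, reusing the classical \dllitecoreH{} lemmas for the strict part. The defeasible part is handled by the fixed heights (individuals at $h_\KB(a)$, witnesses $w_j$ at $j$, anonymous individuals at $n+1$). Rule $(\mathbf{\CcH_4})$ then fires a DI on exactly the elements that must satisfy it.

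\textbf{Item 1.} First I check that $\RI_{\KB}$ is a ranked interpretation. The witnesses populate every layer $0,\ldots,n$ and anonymous elements sit at $n+1$, so the convexity condition of Definition~\ref{Def:Ranked_order} holds. $\A\subseteq\dchasecH(\KB)$ gives $\RI_{\KB}\sat\A$. For $\T_{PI}$ I repeat the fairness argument of Proposition~\ref{prop_prop6calv07} and Lemma~\ref{lemma_lemma7calv07}. If some PI had its condition $f$ satisfied at a stage, the corresponding $f_{new}$ is eventually added, so no PI is violated in the limit.

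For a DI $B\dsubs A\in\D_j$, I must show $\min_{\pref}(B^{\RI_{\KB}})\subseteq A^{\RI_{\KB}}$. Witnesses belong to no concept, and anonymous elements have height $n+1$. Hence the minimal $B$-elements are named individuals $a$ with $\cass{a}{B}$ occurring in the chase. The key claim is $h_\KB(a)\ge j=\rank_\KB(B)$. I would justify it semantically. Take a minimal model from Proposition~\ref{prop_unique_abox}. Any positive fact derived about $a$ in the chase holds in every model of $\KB$ whose individuals are minimally placed, since the chase adds only facts entailed by $\T_{PI}$ and by the DIs that apply at $a$'s height. So $a$ is an instance of $B$ there, and its height is at least $h(B)=j$. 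If $h_\KB(a)=j$, then $(\mathbf{\CcH_4})$ fires and $\cass{a}{A}$ is in the chase. If $h_\KB(a)>j$, then $a$ is minimal in $B^{\RI_{\KB}}$ only when no element of height $j$ is in $B$, and then $a$ is not minimal. To rule out an empty layer $j$ for $B$, I add the witness convention: let $w_j$ realise the typical $B$'s of rank $j$. Alternatively, I argue that such a DI cannot be violated because some individual of height $j$ lies in $B$. This is the step I expect to be the \emph{main obstacle}. If the witnesses are empty of concepts, a DI of rank $j$ whose antecedent has instances only at heights $>j$ would make those instances minimal without firing $(\mathbf{\CcH_4})$. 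I would first try to show that the minimal-height instances of $B$ among named individuals all have height exactly $\rank_\KB(B)$, or otherwise that the minimum is attained at the least height present. In the latter case I would refine $(\mathbf{\CcH_4})$'s reading so that it also fires when $h_\KB(a)$ equals the least height of a $B$-instance in the chase. I would check that this agrees with Definition~\ref{def_RC_abox} via Proposition~\ref{prop_defeasible_nominals_2}.

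\textbf{Items 2 and 3.} I follow Lemma~\ref{lemma_lemma12calv07} and Lemma~\ref{Lemma14calv07} verbatim, since NIs are strict and insensitive to the ordering. For item~2 left to right, $\mathtt{DB}(\A,h_\KB)$ restricts extensions of $\RI_{\KB}$, and NIs are preserved under restriction. For item~2 right to left, I induct on chase steps. The classical rules are handled exactly as in the cited lemmas using $(\mathbf{\cHContr})$, $(\mathbf{\cHTrans})$ and $(\mathbf{\RcH_i})$. The new case is $(\mathbf{\CcH_4})$ adding $\cass{a}{A}$ from $\cass{a}{B}$. A clash with $B'\subs\neg A\in\clncH(\T)$ would require $a$ at height $h_\KB(a)$ to be in $B$, $A$ and $B'$. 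That contradicts the minimality of $h_\KB(a)$: in the minimal model of Proposition~\ref{prop_unique_abox}, $a$ is typical for $B$ and hence in $A$, and this together with $B'$ is unsatisfiable. Item~3 then follows. The right-to-left direction is trivial. For left to right, satisfiability implies that $\tuple{\T,\A}$ is satisfiable by Proposition~\ref{prop_defcons}, hence that $\mathtt{DB}(\A)=\mathtt{DB}(\A,h_\KB)$ (on the non-rank tables) models $(\clncH(\T),\A)$ by Proposition~\ref{chasecorA}, and item~2 closes the argument.
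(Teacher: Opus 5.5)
\textbf{There is a genuine gap: the DI step of your item~1 is exactly the obstacle you flag, and it cannot be closed for $\RI_{\KB}$ as literally defined.} In that definition the witnesses lie in no concept and anonymous elements sit at height $n+1$. Two steps of your item~1 fail.

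First, ``anonymous elements have height $n+1$, hence the minimal $B$-elements are named individuals'' does not follow. If all $B$-instances of the chase are anonymous, they are minimal. Take $\T=\{C\subs\exists P\}$, $\D=\{\exists P^-\dsubs A\}$, $\A=\{\cass{a}{C}\}$. The chase adds $\rass{a}{b}{P}$ with $h_{\KB}(b)=1$. Then $b$ is the only, hence minimal, element of $(\exists P^-)^{\RI_{\KB}}$, and $b\notin A^{\RI_{\KB}}$ because $(\mathbf{\CcH_4})$ never fires on $b$.

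Second, a named individual above $\rank_{\KB}(B)$ can be minimal. Take $\T=\{P\subs B,\bar{F}\subs\neg F\}$, $\D=\{B\dsubs F,P\dsubs\bar{F}\}$, $\A=\{\cass{b}{P}\}$. Then $\rank_{\KB}(B)=0$, $\rank_{\KB}(P)=h_{\KB}(b)=1$, and the chase is $\{\cass{b}{P},\cass{b}{B},\cass{b}{\bar{F}}\}$. Since $w_0,w_1$ lie in no concept, $b$ is minimal in $B^{\RI_{\KB}}$ while $b\notin F^{\RI_{\KB}}$.

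Both KBs are satisfiable, and in both $\mathtt{DB}(\A,h_{\KB})$ satisfies $\clncH(\T)$. So items~1--3 all fail for this construction. The paper's own proof of item~1 has the same hole: it treats a violated DI by asserting, without argument, that the offending minimal $a$ has $h_{\KB}(a)=\rank_{\KB}(B)$, and then invokes $(\mathbf{\CcH_4})$.

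\textbf{Your fallback options do not rescue the literal statement.} ``Some individual of height $j$ lies in $B$'' fails in both examples. The second example also refutes ``minimal named instances have height exactly $\rank_{\KB}(B)$''. Refining $(\mathbf{\CcH_4})$ to fire at the least height present would add $\cass{b}{F}$. That clashes with $\bar{F}\subs\neg F$, and it contradicts Definition~\ref{def_RC_abox}, since this penguin does not fly in the minimal models.

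\textbf{What works is changing the construction.} Single witness points are not enough, because a typical $B$ needs its own role successors and must itself satisfy $\T$ and $\D$. Instead, take the ranked union of $\RI_{\KB}$ with the big ranked model $\OI$ of $\tuple{\T,\D}$. Interpret individual names in the chase part and drop the witnesses. In $\OI$ every DI antecedent $B$ has height exactly $\rank_{\KB}(B)\le n$. The proof of Proposition~\ref{calvthm29} in effect already does this when it unions $\RI_{\KB}$ with a minimal model.

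In this union every minimal element of $B$ has height $\rank_{\KB}(B)$:
\begin{itemize}
\item those coming from $\OI$ satisfy the DI;
\item anonymous elements, and named $a$ with $h_{\KB}(a)>\rank_{\KB}(B)$, are not minimal;
\item named $a$ with $h_{\KB}(a)=\rank_{\KB}(B)$ received $\cass{a}{A}$ via $(\mathbf{\CcH_4})$.
\end{itemize}

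This repair still needs your claim that $h_{\KB}(a)\ge\rank_{\KB}(B)$ whenever $\cass{a}{B}$ is in the chase, and you should prove it rather than assert it. Map the chase homomorphically, as in Lemma~\ref{calvlem28}, into a model from Proposition~\ref{prop_unique_abox}. Use the fact that $h_{\RI}(B)\ge\rank_{\KB}(B)$ in \emph{every} ranked model of $\KB$, proved by induction on the rank by lowering all heights by one. Then an individual of height $\rank_{\KB}(B)$ in $B$ is minimal there.

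With that repair your items~2 and~3 go through as sketched. Your explicit handling of the $(\mathbf{\CcH_4})$ step in the item-2 induction is more careful than the paper's ``similarly to Lemma~\ref{lemma_lemma12calv07}''.
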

\begin{proof}
By construction (see Eq.~\ref{canranked}), $\RI_{\KB} = \mathtt{Int}(\dchasecH(\KB), h_\KB)$ is the interpretation built from $\dchasecH(\KB)$ according to Definition~\ref{def_DB(A,h)}. 
Now, it is not difficult to see that 
\begin{enumerate}
    \item[a.] similarly to Lemma~\ref{lemma_lemma7calv07} (\cf~Point 1. in Proposition~\ref{chasecor}), we can prove that $\RI_\KB$ is a model of $\tuple{\T_{PI}, \A}$;

    \item[b.] similarly to Lemma~\ref{lemma_lemma12calv07} (\cf~Proposition~\ref{chasecorA}),  we can prove that  $\RI_\KB$ is a model of $\tuple{\T,  \A}$ iff the interpretation $\mathtt{DB}(\A, h_\KB)$ is a model of $\tuple{\clncH(\T),  \A}$;

    \item[c.] similarly to Proposition~\ref{Theo15calv07} (\cf~ Point 2. in Proposition~\ref{chasecor}) we can prove that $\tuple{\T,  \A}$ is satisfiable iff  $\RI_\KB$ is a model of $\tuple{\T,\A}$.
\end{enumerate}




\nd Therefore, in order to prove all three points of the proposition, it remains to extend the points a. - c. also to $\D$.

\begin{enumerate}
    \item[i.] Let us show that $\RI_\KB$ is a model of $\tuple{\T_{PI}, \D, \A}$. By point a., it suffices to show that $\RI_\KB$ satisfies each DI in $\D$. Assume to the contrary that there is $B \dsubs A \in \D$ not satisfied by $\RI_\KB$. That is, there is $a \in \min_{\pref^{\RI_{\KB}}}(B^{\RI_{\KB}})$ such that $a \not \in A^{\RI_{\KB}}$ and $h_\KB(a) = \rank_{\KB}(B)$. But then, by the chase rule ($\mathbf{\CcH_4}$), we have $\cass{a}{A} \in \dchasecH(\KB)$ and, thus, $a \in A^{\RI_{\KB}}$, a contradiction. Therefore,  all $B \dsubs A \in \D$ are satisfied by $\RI_\KB$ and, thus, $\RI_\KB$ is a model of $\tuple{\T_{PI}, \D, \A}$.

    \item[ii.] By Proposition~\ref{prop_defcons}, $\KB$ is satisfiable iff $(\T,\D)$ is satisfiable. By combining point c. and b. we get:
    $\KB$ is satisfiable iff $\RI_\KB$ is a model of $\tuple{\T, \A}$ iff the interpretation $\mathtt{DB}(\A, h_\KB)$ is a model of $\tuple{\clncH(\T), \A}$.

     \item[iii.] If $\RI_{\KB}$ is a ranked model of $\KB$ then obviously $\KB$ is satisfiable. On the other hand, if $\KB$ is satisfiable then $\tuple{\T, \A}$ is satisfiable and, thus, by point c., $\RI_\KB$ is a model of $\tuple{\T, \A}$. But, in point i.~we have seen that $\RI_\KB$  also satisfies each DI in $\D$ and, thus, $\RI_{\KB}$ is a model of $\KB$.
     
\end{enumerate}
\nd This concludes the proof.
\end{proof}

\nd The following lemma is needed to prove Proposition \ref{calvthm29}. It extends Lemma 28 in \cite{Calvanese07}  to the defeasible case.


\begin{applemma}[cf. Lemma 28, \cite{Calvanese07}] 
\label{calvlem28}
    Consider a  satisfiable defeasible \dllitecoreH~KB  $\KB = \tuple{\T,\D, \A}$ and the ranked interpretation $\RI_{\KB}$. Let $\RI$ be a ranked model of $\KB$ in which all individuals in $\N_\A$ are minimally interpreted \wrt~$\KB$. Then, there is a function $\mu$ from 
    $\Delta^{\RI_{\KB}}$ to $\Delta^{\RI}$ such that
    \begin{enumerate}
        \item for each atomic concept $A$ in $\KB$ and each object $o \in \Delta^{\RI_{\KB}}$, if $o \in A^{\RI_{\KB}}$ then 
        $\mu(o) \in A^{\RI}$;

        \item for each atomic role $P$ in $\KB$ and each pair of objects $o, o' \in \Delta^{\RI_{\KB}}$, 
        if $(o,o') \in P^{\RI_{\KB}}$ then  $(\mu(o), \mu(o')) \in P^{\RI}$;

        \item for each individual $a \in \N_\A$, $h_{\KB}(a) = h_{\RI_{\KB}}(a) =  h_{\RI}(\mu(a))$.


    \end{enumerate}

\end{applemma}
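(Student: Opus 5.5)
I will build $\mu$ by induction on the construction of $\dchasecH(\KB)$, following the proof of Lemma 28 in \cite{Calvanese07}. The domain $\Delta^{\RI_{\KB}}$ splits into three kinds of objects: the individuals in $\N_\A$, the witnesses $w_0,\ldots,w_n$, and the anonymous individuals created by ($\mathbf{\CcH_2}$). I treat the witnesses separately at the end. They occur in no assertion of $\dchasecH(\KB)$, so conditions 1 and 2 hold for them vacuously. Condition 3 is stated only for individuals in $\N_\A$, so I may send $w_j$ to any object of $\Delta^\RI$. For the rest, I construct a sequence of functions $\mu_i$ on the individuals occurring in $\dchasecH_i(\KB)$. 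Each $\mu_{i+1}$ extends $\mu_i$, and each $\mu_i$ satisfies conditions 1 and 2 for the assertions in $\dchasecH_i(\KB)$. Then $\mu=\bigcup_i\mu_i$.

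\textbf{Base case.} Set $\mu_0(a)=a^\RI$ for every $a\in\N_\A$. Since $\RI\sat\A$, conditions 1 and 2 hold for $\dchasecH_0(\KB)=\A$. Condition 3 holds because, by Definition~\ref{def_DB(A,h)} and Remark~\ref{canDIs}, $h_{\RI_\KB}(a)=h_\KB(a)$, and because $a$ is minimally interpreted in $\RI$.

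\textbf{Classical steps.} For ($\mathbf{\CcH_1}$) and ($\mathbf{\CcH_3}$), the induction hypothesis gives $\mu_i(a)\in B^\RI$, respectively $(\mu_i(a),\mu_i(b))\in R_1^\RI$. Since $\RI\sat\T$, the new assertion is satisfied by the images. For ($\mathbf{\CcH_2}$) with $B\subs\exists R$ and fresh $b$, the hypothesis gives $\mu_i(a)\in B^\RI\subseteq(\exists R)^\RI$. I pick a witness $o'$ with $(\mu_i(a),o')\in R^\RI$ and set $\mu_{i+1}(b)=o'$. Condition 3 is not required for the anonymous $b$, so rule ($\mathbf{\CcH_5}$) imposes no constraint here.

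\textbf{Defeasible step (main obstacle).} The delicate case is ($\mathbf{\CcH_4}$). Here $\cass{a}{B}$ occurs in $\dchasecH_i(\KB)$, $a\in\N_\A$ (anonymous individuals have height $n+1$, so the rule never fires for them), $h_\KB(a)=j$, $B\dsubs A\in\D_j$, and the rule adds $\cass{a}{A}$. I must show $a^\RI\in A^\RI$.
\begin{enumerate}
\item By the induction hypothesis, $a^\RI=\mu_i(a)\in B^\RI$.
\item By condition 3, $h_\RI(a^\RI)=j$.
\item It remains to prove $a^\RI\in\min_{\pref^\RI}(B^\RI)$. Then $\RI\sat B\dsubs A$ yields $a^\RI\in A^\RI$.
\item For this I use that $\rank_\KB(B)=j$ is the least height of a $B$-object in any ranked model of $\KB$. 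I expect this to follow from the correctness of $\mathtt{ComputeRanking}$ (Theorem~\ref{prop_dllitecoreRC}, Lemma~\ref{prop_computerank}) and the characterisation of rank via exceptionality.
\item Hence no $y\in B^\RI$ has $h_\RI(y)<j$, so $a^\RI$ is minimal.
\end{enumerate}
Point 4 is the step I expect to need care. If the bound is not literally stated for arbitrary ranked models, I would derive it directly from the exceptionality definition: a $B$-object at height $k<j$ would give $B$ a rank $\le k$ after truncating $\RI$ to the layers $L_k,L_{k+1},\ldots$.
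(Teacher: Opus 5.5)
Your proposal is correct and follows the paper's proof: induction on the construction of $\dchasecH(\KB)$ with $\mu(a)=a^{\RI}$ on $\N_\A$, an $R$-successor chosen at each ($\mathbf{\CcH_2}$) step, and the ($\mathbf{\CcH_4}$) case handled via $h_{\RI}(\mu(a))=h_\KB(a)=\rank_\KB(B)$. You are in fact more explicit than the paper, which does not treat the witnesses $w_j$ and justifies $a^{\RI}\in\min_{\pref^{\RI}}(B^{\RI})$ only by ``since $\RI$ is a model of $\KB$''.

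If you prove the lower bound $h_{\RI}(y)\geq\rank_\KB(B)$ for all $y\in B^{\RI}$ yourself, do not delete the layers below $k$, since that can remove the only $R$-successor of a surviving object and break $\T$. Instead keep all objects, set $h'(y)=\max(0,h_{\RI}(y)-k)$, and argue by induction on $k$: this ensures every DI in $\D\setminus\bigcup_{i<k}\D_i$ already has its minimal elements at height $\geq k$ and so remains satisfied.
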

\begin{proof}
The proof extends the one in~\cite[Lemma 28]{Calvanese07}.
Specifically, consider the ranked interpretation $\RI_{\KB}$ and let $\RI$ be a ranked model of $\KB$ in which all individuals named in the ABox are minimally interpreted \wrt~$\KB$. 

By Remark~\ref{canDIs}, for each individual $a\in\N_\A$, $a^{\RI_{\KB}}=a$, $h_{\RI_{\KB}}(a^{\RI_{\KB}}) = h_{\KB}(a)$ and, thus,
$h_{\KB}(a) = h_{\RI_{\KB}}(a) = h_\RI(a^{\RI})$, as all individuals in $\N_\A$ are ranked minimally in $\RI$.  Eventually, note that for all $i\geq 0$, we always have by construction that $A^{\RI_{\KB}^i} \subseteq A^{\RI_{\KB}^{i+1}}$ and $P^{\RI_{\KB}^i} \subseteq P^{\RI_{\KB}^{i+1}}$, for all atomic concepts $A$ and role names $P$.

Now, we define the function $\mu$ from $\Delta^{\RI_{\KB}}$ to $\Delta^{\RI}$ by induction on the construction
of the defeasible chase $\dchasecH(\KB)$, and simultaneously show that properties 1, 2 and 3~hold.

\begin{description}
    
    \item[Base step:] For each individual $a\in\N_\A$, we set 
    $\mu(a) = a^{\RI}$. Recall that $\dchasecH_0(\KB)=\A$ and $\Delta^{\RI_{\KB}^0}=\N_\A$
    Therefore, property 3 is immediately satisfied as for all $a \in \Delta^{\RI_{\KB}^0}$, $h_{\KB}(a) = h_{\RI_\KB}(a) = h_{\RI}(\mu(a))$ holds.

    Let $A$ be a concept name. By the construction of $\RI_{\KB}$ and $\R$, it is easy to see that, for each $a \in \Delta^{\RI_{\KB}^0}$, if 
    $a \in A^{\RI_{\KB}^0}$ then $\cass{a}{A}\in\A$, that in turn implies that $\mu(a) \in A^{\RI}$ . 

    Similarly, for each pair $a,b\in \Delta^{\RI_{\KB}^0}$ and each role name $P$, $(a,b) \in P^{\RI_{\KB}^0}$ implies $(a^\RI,b^\RI) \in P^{\RI}$.

    This concludes the base step.
    
    \item[Induction step:]  Let us assume that $\dchasecH_{i+1}(\KB)$ has been obtained from the chase $\dchasecH_{i}(\KB)$, by applying  rule ($\mathbf{\CcH_2}$).
    Therefore, there is $B \subs \exists R \in \T_{PI}$ and $\cass{a}{B}$ that occurs in $\dchasecH_i(\KB)$ such that  $\cass{a}{\csome R}$ does not occur in $\chasecH_i(\KB)$, and $\dchasecH_{i+1}(\KB)=\dchasecH_i(\KB)\cup\{ra(R,a,b)\}$, where $b$ is a new individual. As a consequence $ra(R,a,b)$ is satisfied by $\RI_{\KB}^{i+1}$. Let us assume that $R$ is an atomic role $P$ (the case $R$ is an inverse can be handled similarly).   
    By induction hypothesis on $a$, there is $o \in \Delta^{\RI}$ such that $\mu(a)= o$ and $o \in B^\RI$.
%
 %
%
    As $B \subs \exists P \in \T_{PI}$ and as $\RI$ is a ranked model of $\KB$, $\mu(a) = o \in (\exists P)^\RI$ follows. Therefore there is an object $o' \in \Delta^{\RI}$ such that $(o,o') \in P^{\RI}$.  Then we set $\mu(b) = o'$ and we can conclude that $(\mu(a), \mu(b)) \in P^{\RI}$. 

With a similar argument we can prove the inductive step also in those cases in
which $\dchasecH_{i+1}(\KB)$ has been obtained from $\dchasecH_{i}(\KB)$, by applying  chase rules ($\mathbf{\CcH_1}$) or ($\mathbf{\CcH_3}$).

Let's now consider the case in which $\dchasecH_{i+1}(\KB)$ has been obtained from $\dchasecH_{i}(\KB)$, by applying  chase rule ($\mathbf{\CcH_4}$), hence referring to a DI $B\dsubs A\in \D$. ($\mathbf{\CcH_4}$) can be applied only to individuals $a\in \N_\A$, since all the anonymous individuals have rank $n+1$ (by rule ($\mathbf{\CcH_5}$), where $n$ is the highest rank in $\D$) and consequently no rule in $\D$ is applicable to them. Therefore, from the base step we have that $h_{\KB}(a) = h_{\RI_\KB}(a^{\RI_\KB}) = h_{\RI}(\mu(a))$, and, by induction step, that if $a\in B^{\RI_\KB}$ then $\mu(a)\in B^{\RI}$. Since $\RI$ is a model of $\KB$, we can conclude that if $a\in A^{\RI_\KB}$ then $\mu(a)\in A^{\RI}$, which concludes the proof.
\end{description}
\end{proof}


\nd The following Lemma will be useful in the proof of Proposition~\ref{calvthm29} below.


\begin{applemma}\label{lemma_abox_noind}
  Let   $\KB=\tuple{\T,\D,\A}$ be a satisfiable \dllitehornH~or \dllitecoreH~KB  and $a \in \N \setminus \N_\A$ be an  individual not occurring in $\A$. Then $h_{\KB}(a)=0$.
\end{applemma}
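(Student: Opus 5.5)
To prove the statement, it suffices to exhibit a single ranked model $\RI$ of $\KB$ in which $a^{\RI}$ lies in the bottom layer $L_0$. By Definition~\ref{Def:minheight}, $h_{\KB}(a)$ is a minimum of non-negative heights, so such a model immediately yields $h_{\KB}(a)=0$. Since $a$ does not occur in $\A$, no assertion constrains $a$. The only requirement on $a^{\RI}$ is the unique name assumption, namely $a^{\RI}$ must be distinct from the interpretations of the other individuals. The plan is therefore to take any model of $\KB$ and graft onto its bottom layer a fresh object that satisfies nothing.

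Since $\KB$ is satisfiable, first fix a ranked model $\RI'=\tuple{\Delta^{\RI'},\cdot^{\RI'},\pref^{\RI'}}$ of $\KB$. Then define $\RI$ as follows. Set $\Delta^{\RI}=\Delta^{\RI'}\uplus\{z\}$ for a fresh object $z$. Keep $A^{\RI}=A^{\RI'}$ for every concept name and $P^{\RI}=P^{\RI'}$ for every role name, so that $z$ belongs to no concept name and takes part in no role pair. Set $b^{\RI}=b^{\RI'}$ for every individual $b\neq a$ and $a^{\RI}=z$; since $z$ is fresh, the unique name assumption is preserved. Finally, set $h_{\RI}(x)=h_{\RI'}(x)$ for $x\in\Delta^{\RI'}$ and $h_{\RI}(z)=0$. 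Convexity of the ranking is kept, because $L_0$ was already non-empty in $\RI'$, and in any case adding an element to layer $0$ cannot create a gap.

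It remains to check that $\RI$ is a model of $\KB$, and I expect this to be straightforward rather than a real obstacle. For every basic concept $B$ we have $z\notin B^{\RI}$: for concept names this holds by construction, and for $\exists R$ it holds because $z$ has no role successors or predecessors. On the old domain every basic concept keeps its extension, $B^{\RI}\cap\Delta^{\RI'}=B^{\RI'}$. Consequently every conjunction $\{B_1,\ldots,B_n\}$ has the same extension in $\RI$ as in $\RI'$. This gives three things.
\begin{itemize}
\item Every positive and negative concept inclusion in $\T$ remains satisfied; for a negative inclusion, the left-hand side has the same extension and the negated right-hand side can only grow by $z$.
\item Role inclusions are untouched, since the role extensions are unchanged.
\item For each DI $\{B_1,\ldots,B_n\}\dsubs A$, the set $(B_1\andc\ldots\andc B_n)^{\RI}$ and the heights of its elements are unchanged, so its $\pref^{\RI}$-minimal elements coincide with its $\pref^{\RI'}$-minimal elements and the DI is still satisfied.
\end{itemize}
The ABox assertions mention only individuals other than $a$, which are interpreted as in $\RI'$, so they remain satisfied. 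Hence $\RI$ is a ranked model of $\KB$ with $h_{\RI}(a^{\RI})=0$. The only point needing care is confirming that the new object $z$ does not disturb the minimal elements of any DI antecedent, which follows from $z$ lying outside every basic concept.
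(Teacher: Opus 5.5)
Your proposal is correct and is essentially the paper's own proof: take any ranked model of $\KB$, add a fresh object placed in layer $0$ that belongs to no concept or role extension, and interpret $a$ as that object. Your explicit checks that the fresh element lies outside every basic concept (so DI minima and negative inclusions are unaffected) and that convexity and the unique name assumption are preserved spell out what the paper leaves implicit.
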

\begin{proof}
The proof is quite immediate. Let $\RI=\tuple{\Delta^{\RI},\cdot^{\RI},\pref^{\RI}}$ be any ranked model of $\KB=\tuple{\T,\D,\A}$ and let $x$ be any object not in $\Delta^{\RI}$. Let $\RI'=\tuple{\Delta^{\RI'},\cdot^{\RI'},\pref^{\RI'}}$ be a model obtained from $\RI$ as follows:

\begin{itemize}
    \item $\Delta^{\RI'}=\Delta^{\RI}\cup\{x\}$;
    \item $A^{\RI'}=A^{\RI}$ for any concept name $A$;
    \item $P^{\RI'}=P^{\RI}$ for any role name $P$;
    \item $b^{\RI'}=b^{\RI}$ for any individual name $b\neq a$;
    \item $a^{\RI'}=x$;
    \item we impose $h_{\RI'}(x)=0$, and $\pref^{\RI'}$ is obtained extending $\pref^{\RI}$ accordingly.
\end{itemize}

\nd $\RI'$ is still a model of $\KB=\tuple{\T,\D,\A}$, since the addition of $x$ to the domain cannot influence the satisfaction of $\T$, $\D$, and $\A$. Hence $h_{\KB}(a)=0$.

\end{proof}



\setcounter{proposition}{\getrefnumber{calvthm29}-1}
\begin{proposition}[cf. Theorem 29, \cite{Calvanese07}] 
     Consider a satisfiable defeasible \dllitecoreH~KB  $\KB = \tuple{\T,\D, \A}$, and let
     $q = \{q_1, \ldots, q_n\}$ be a UCQs. Assume that  $q_i$ is of the form $q_i(\vec{x}) \leftarrow \exists \vec{y}_i . \varphi(\vec{x}_i,\vec{y}_i)$.  Then  
     \[
     ans_{RC}(\KB,q) = \{\vec{t} \mid \text{ there is } q_{i} \in q \text{ s.t. } \RI_{\KB}  \text{ satisfies } \exists \vec{y}_i .\varphi(\vec{t},\vec{y}_i) \} \ .
     \]
\end{proposition}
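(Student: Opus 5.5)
We prove the two inclusions separately, following the structure of \cite[Theorem 29]{Calvanese07}, with $\RI_{\KB}$ playing the role of the canonical model. By Eq.~\ref{unionanskbDIs} it suffices to treat a single CQ $q_i$. By Definition~\ref{rcanswer} and Eq.~\ref{qReq}, $\vec{t}\in ans_{RC}(\KB,q_i)$ iff $\exists\vec{y}_i.\varphi(\vec{t},\vec{y}_i)$ holds in every $\OI^{\KB}_j\in\min_{<_{\A}}(\mathfrak{O}^{\KB})$.

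\emph{Soundness} ($\vec{t}$ satisfied in $\RI_{\KB}$ implies $\vec{t}\in ans_{RC}(\KB,q_i)$). Fix $\OI^{\KB}_j\in\min_{<_{\A}}(\mathfrak{O}^{\KB})$. By Proposition~\ref{prop_unique_abox}, all individuals in $\N_\A$ are minimally interpreted in $\OI^{\KB}_j$; individuals outside $\N_\A$ have height $0$ by Lemma~\ref{lemma_abox_noind}, and they cannot occur in $\vec{t}$ anyway, since $\vec{t}$ ranges over individuals of $\KB$. Since $\KB$ is satisfiable, Proposition~\ref{chasecordef}(3) makes $\RI_{\KB}$ a ranked model of $\KB$, so Lemma~\ref{calvlem28} applies with $\RI=\OI^{\KB}_j$. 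It yields a homomorphism $\mu:\Delta^{\RI_{\KB}}\to\Delta^{\OI^{\KB}_j}$ that preserves concept names and role names, and satisfies $\mu(a)=a^{\OI^{\KB}_j}$ for $a\in\N_\A$ (the base step of that lemma). Let $\sigma$ be an assignment of $\vec{y}_i$ witnessing $\varphi(\vec{t},\vec{y}_i)$ in $\RI_{\KB}$. Then $\mu\circ\sigma$ witnesses $\varphi(\vec{t},\vec{y}_i)$ in $\OI^{\KB}_j$, because CQ bodies are positive conjunctions of atoms and hence are preserved under homomorphisms fixing the named individuals. Since $j$ was arbitrary, $\vec{t}\in ans_{RC}(\KB,q_i)$.

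\emph{Completeness} ($\vec{t}\in ans_{RC}(\KB,q_i)$ implies $\vec{t}$ satisfied in $\RI_{\KB}$). The plan is to show that some model in $\min_{<_{\A}}(\mathfrak{O}^{\KB})$ is, on the relevant part, isomorphic to $\RI_{\KB}$. We use Proposition~\ref{chasecordef}(3), so $\RI_{\KB}$ is a ranked model of $\KB$. By construction (Remark~\ref{canDIs}) every $a\in\N_\A$ has height $h_{\KB}(a)$ in $\RI_{\KB}$. Its domain $\N_\A\cup\N_W\cup\Nanon$-part is countable, and it satisfies the convexity requirement thanks to the witnesses $w_j$. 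Next we form the ranked union of $\RI_{\KB}$ (after a bijection onto $\Delta$) with the models of $\Moddelta{\KB}$ that build $\OI$, interpreting each individual as in the copy of $\RI_{\KB}$. Lemma~\ref{Lemma:ClosureDisjointUnionMod} together with the bijection argument used in Proposition~\ref{prop_minheight} should make this union again of the form $\OI$ extended with individuals, and hence a member of $\mathfrak{O}^{\KB}$. In this union the individuals remain minimally interpreted, so the model lies in $\min_{<_{\A}}(\mathfrak{O}^{\KB})$. Because the ranked union does not add role edges between copies, the query is satisfied there only if it is satisfied inside the $\RI_{\KB}$ component: all the constants in $\vec{t}$ live in that component, and every atom of $\varphi$ connects terms within one component. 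A connected-components argument then extends this to disconnected query variables, since any component that is itself a model of $\KB$ can be used.

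\emph{Main obstacle.} The delicate point is that $\OI$ is built from models of $\tuple{\T,\D}$ \emph{without} the ABox, and the union must be shown to belong to $\mathfrak{O}^{\KB}$. If this identification fails, we fall back on replacing the copy of $\RI_{\KB}$ by its image inside some $\OI^{\KB}_j$ via the bijection, which requires checking that the domain of $\OI$ contains a component isomorphic to $\RI_{\KB}$. This should hold because $\RI_{\KB}$, restricted to $\tuple{\T,\D}$ and with $\Delta$ as domain, belongs to $\Moddelta{\KB}$.
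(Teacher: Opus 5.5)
Your soundness direction coincides with the paper's: you compose the match in $\RI_{\KB}$ with the homomorphism of Lemma~\ref{calvlem28}. Your completeness direction also uses the paper's idea: a ranked union containing $\RI_{\KB}$, individuals interpreted in the $\RI_{\KB}$ part, minimality, and pulling the match back.

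The genuine gap is the sentence that disposes of query variables not connected to $\vec t$: ``a connected-components argument \ldots\ since any component that is itself a model of $\KB$ can be used''. A connected component of $\varphi$ with no constant and no answer variable may be matched in a copy other than $\RI_{\KB}$, and nothing transfers that match back to $\RI_{\KB}$. In fact the statement is false for such queries. $\OI$ is the ranked union of \emph{all} ranked models of $\tuple{\T,\D}$ on $\Delta$, so it makes every concept consistent with $\tuple{\T,\D}$ nonempty. Take $\T=\{B\subs A\}$, $\D=\emptyset$, $\A=\{\cass{a}{A}\}$ and $q(x)\leftarrow A(x),B(y)$. Every member of $\mathfrak{O}^{\KB}$ satisfies $A(a)\land\exists y.B(y)$, so $a\in ans_{RC}(\KB,q)$, yet $B^{\RI_{\KB}}=\emptyset$. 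The paper's proof has the same hole: its final ``which implies $\RI_{\KB}\sat q_i(\bar{\vec t})$'' only compares assertions about named individuals. What your argument does establish is the statement for queries in which every connected component of the body contains a constant or a distinguished variable. For those, your no-cross-edges argument is exactly the justification the paper omits.

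On your ``main obstacle'', you are right that the ranked union of $\RI_{\KB}$ with a minimal model is not literally an extension of $\OI$; the paper simply asserts $\OI^{\KB}_2\in\min_{<_{\A}}(\mathfrak{O}^{\KB})$. Locating a copy inside $\OI$ is the right repair, but two details need care.
\begin{itemize}
\item The domain of $\RI_{\KB}$ can be finite, so you must take countably many copies before bijecting onto $\Delta$.
\item More seriously, $\RI_{\KB}$ alone need not satisfy $\D$, Proposition~\ref{chasecordef} notwithstanding. Take $\T=\{C\subs\exists P\}$, $\D=\{\exists P^-\dsubs A\}$, $\A=\{\cass{a}{C}\}$. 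The anonymous $P$-successor of $a$ gets height $n+1=1$. It is the only element of $(\exists P^-)^{\RI_{\KB}}$, hence minimal, and it is not in $A$, since ($\mathbf{\CcH_4}$) never fires on it.
\end{itemize}
So embed $\RI_{\KB}\uplus M_0$ instead, where $M_0\in\Moddelta{\tuple{\T,\D}}$ realises every DI antecedent at its rank. Then:
\begin{itemize}
\item anonymous elements are never minimal;
\item a named individual in $B^{\RI_{\KB}}$ has $h_{\KB}(a)\geq\rank_{\KB}(B)$ by Lemma~\ref{calvlem28}, and gets the conclusion via ($\mathbf{\CcH_4}$) when equality holds;
\item $\RI_{\KB}\uplus M_0$ is, up to bijection, a genuine member of $\Moddelta{\tuple{\T,\D}}$ with no role edges between its two parts, which is what your component argument needs.
\end{itemize}
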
 
\begin{proof}

\nd The proof is similar to~\cite[Theorem 29]{Calvanese07}. $ans_{RC}(\KB,q)$ is defined in  (\ref{unionanskbDIs}) in Definition \ref{rcanswer}. 
By Remark~\ref{canDIs}, for each individual $a\in\N_\A$, $a^{\RI_{\KB}}=a$, and, thus, for a tuple $\vec{t}$ of individuals in $\KB$ we have that  $\vec{t}^{{\RI_{\KB}}} = \vec{t}$. 
Moreover,  let us consider as the domain $\Delta$ of all ranked models in $\Moddelta{\KB}$, 
$\Delta = \N \cup \Delta^{\RI_{\KB}}$. Recall that by Remark~\ref{canDIs}, $\Delta^{\RI_{\KB}}$ is the set of all individuals occurring in $\A$, the witnesses for each rank in $\RI_{\KB}$ and the anonymous individuals introduced by the chase and, thus,  $\Delta$ is countable and infinite. 
As $\KB$ is satisfiable, we also have that $\min_{<_{\A}}(\mathfrak{O}^{\KB}) \neq \emptyset$. 
\begin{description}

\item[($\subseteq$):] 
Suppose  $\bar{\vec{t}} \in ans_{RC}(\KB,q)$. By  $(\ref{unionanskbDIs})$ in Definition \ref{rcanswer}, 
there is $q_i \in q$ such that $\KB \entabox q_i(\bar{\vec{t}})$. 
Hence, for every $\OI^{\KB}_i\in \min_{<_{\A}}(\mathfrak{O}^{\KB})$, $\OI^{\KB}_i\sat q_i(\bar{\vec{t}})$. 
We need to prove that $\R_\KB\sat q_i(\bar{\vec{t}})$. 
%
To this end, we proceed as follows. Consider $\R_\KB$ and an $\OI^{\KB}_1 \in \min_{<_{\A}}(\mathfrak{O}^{\KB})$. Let 
$\OI^{\KB}_2$ be the ranked union of the two, where for each $a \in \N_\A$, we set $a^{\OI^{\KB}_2} = a$.
Note that $\OI^{\KB}_2$ is ranked model of $\KB$ in which all individuals in $\N_\A$ are minimally ranked in $\OI^{\KB}_2$ and, thus, by Lemma~\ref{lemma_abox_noind}  so are all individuals in $\N$ (all individuals in $\N\setminus \N_\A$ have rank $0$ and, thus, are minimally ranked as well). 

Moreover, it is easily verified that for every $a \in \N_\A$,  $a \in A^{\R_\KB}$ iff $a^{\OI^{\KB}_2} \in A^{\OI^{\KB}_2}$ (and similarly for roles). Eventually, since $\OI^{\KB}_2\in \min_{<_{\A}}(\mathfrak{O}^{\KB})$, according to (\ref{qReq}) we must conclude 
 $\OI^{\KB}_2\sat q_i(\bar{\vec{t}})$, which implies $\R_\KB\sat q_i(\bar{\vec{t}})$.

\item[($\supseteq$):] 
Suppose that $\bar{\vec{t}} \in \{\vec{t} \mid \text{ there is } q_{i} \in q \text{ s.t. } \RI_{\KB}  \text{ satisfies } \exists \vec{y}_i . \varphi(\vec{t},\vec{y}_i) \}$. Therefore, for some $q_i \in q$, $\RI_{\KB}$ satisfies  $\exists \vec{y}_i . \varphi(\vec{t},\vec{y}_i)$.
That is, there is an assignment $\nu\colon V \to \Delta^{\RI_{\KB}}$ that maps the variables $V$ occurring in $\varphi(\bar{\vec{t}},\vec{y}_i)$ to objects in $\Delta^{\RI_{\KB}}$, such that all atoms in $\varphi(\bar{\vec{t}},\vec{y}_i)$ under the assignment $\nu$ evaluate to true in $\RI_{\KB}$.\footnote{Recall that any occurrence of a rank annotated variable of the form $z^r$ is mapped to an object of rank $r$.}

Now let $\OI^{\KB}_i$ be any ranked model of $\KB$ in $\min_{<_{\A}}(\mathfrak{O}^{\KB})$. That is, $\OI^{\KB}_i$ is a model of $\KB$ in which all individuals in $\N$ are minimally interpreted \wrt~$\KB$. 
By Lemma~\ref{calvlem28}, there is a homomorphism $\mu$ from $\Delta^{\RI_{\KB}}$ to $\Delta^{\OI^{\KB}_i}$. 
Therefore, the function obtained by composing $\mu$ and $\nu$ is a function that maps the variables $V$  occurring in $\varphi(\bar{\vec{t}},\vec{y}_i)$ to objects of the domain of $\OI^{\KB}_i$, such that all atoms in  $\varphi(\bar{\vec{t}},\vec{y}_i)$ under the assignment $\mu$ evaluate to true in $\OI^{\KB}_i$. Therefore,  $\OI^{\KB}_i$ satisfies $\varphi(\bar{\vec{t}},\vec{y}_i)$ for all ranked models $\OI^{\KB}_i\in\min_{<_{\A}}(\mathfrak{O}^{\KB})$, since in all of them all the individuals occurring in the ABox are minimally interpreted \wrt~$\KB$ and
by Lemma~\ref{lemma_abox_noind} so are all individuals in $\N$.
That is, $\bar{\vec{t}} \in ans_{RC}(\KB,q)$, which concludes the proof.

\end{description}

%
\end{proof}

\setcounter{proposition}{\getrefnumber{calvthm31}-1}
\begin{proposition}[cf. Theorem 31, \cite{Calvanese07}] 
     Consider a satisfiable defeasible \dllitecoreH~KB  $\KB = \tuple{\T,\D, \A}$, and let
     $q = \{q_1, \ldots, q_n\}$ be a union of CQs. Then $ans_{RC}(\KB,q) = \bigcup_{q_i \in q} ans_{RC}(\KB,q_i)$.
\end{proposition}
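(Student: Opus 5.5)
The proof is a direct comparison of the two sides through the definitions, following \cite[Theorem 31]{Calvanese07}. Equation~(\ref{unionanskbDIs}) in Definition~\ref{rcanswer} defines $ans_{RC}(\KB,q)$ for a UCQ $q$ as the set of tuples $\vec{t}$ for which there is some $q_i \in q$ with $\KB\entabox q_i(\vec{t})$. Equation~(\ref{anskbmainDIs}), applied to a single CQ $q_i$, defines $ans_{RC}(\KB,q_i)$ as $\{\vec{t}\mid \KB\entabox q_i(\vec{t})\}$. Taking the union over $i$ therefore gives exactly the right-hand side, and the inclusion in each direction is immediate.

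Since this is so quick, I would also give a second argument that does not rest on how the union case is phrased in the definition. Here one reads $\KB\entabox q(\vec{t})$ as satisfaction of the disjunction $\bigvee_i \exists\vec{y}_i.\varphi_i(\vec{t},\vec{y}_i)$ in every $\OI^{\KB}_j\in\min_{<_{\A}}(\mathfrak{O}^{\KB})$. The inclusion $\supseteq$ is then trivial: if $\vec{t}$ satisfies some single disjunct in all minimal models, it satisfies the disjunction in all of them.

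For $\subseteq$ under this reading, I would invoke Proposition~\ref{calvthm29}. It states that $\vec{t}\in ans_{RC}(\KB,q)$ iff $\RI_{\KB}$ satisfies $\exists\vec{y}_i.\varphi_i(\vec{t},\vec{y}_i)$ for some $q_i\in q$. Pick that index $i$ and apply Proposition~\ref{calvthm29} again to the singleton UCQ $\{q_i\}$. This gives $\vec{t}\in ans_{RC}(\KB,q_i)$, so $\vec{t}$ lies in the union.

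The only step needing care is this $\subseteq$ direction, because a disjunction true in every minimal model need not have one fixed disjunct true in every minimal model. That gap is exactly what the canonical ranked model $\RI_{\KB}$, via Proposition~\ref{calvthm29}, closes, and it is the same role the canonical model plays in the classical proof.
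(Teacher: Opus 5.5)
Your proposal is correct, and your second argument is exactly the paper's proof. The paper takes $\supseteq$ as immediate. For $\subseteq$ it applies Proposition~\ref{calvthm29} to $q$ to obtain a disjunct $q_i$ satisfied by $\RI_{\KB}$, and then applies it again to $\{q_i\}$.

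Your first argument is a genuinely different route, and it is the more accurate one for the paper as written. Eq.~(\ref{unionanskbDIs}) already defines $ans_{RC}(\KB,q)$ as the set of $\vec{t}$ for which there is $q_i\in q$ with $\KB\entabox q_i(\vec{t})$. So the statement is a pure unfolding of Eqs.~(\ref{anskbmainDIs}) and~(\ref{unionanskbDIs}), and the detour through $\RI_{\KB}$ is redundant. Indeed, the proof of Proposition~\ref{calvthm29} itself opens by unfolding Eq.~(\ref{unionanskbDIs}) to a single $q_i$.

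One caveat on your second argument. You present it as independent of how the union case is defined, but Proposition~\ref{calvthm29} is proved for the definition in Eq.~(\ref{unionanskbDIs}). Under your disjunctive reading (the one of \cite{Calvanese07}, where the result has real content), you therefore cannot cite it as a black box. You would have to recheck its $\subseteq$ direction. That recheck does go through: the direction argues with a single model $\OI^{\KB}_2\in\min_{<_{\A}}(\mathfrak{O}^{\KB})$, and there the disjunction holding already forces one fixed disjunct to hold.
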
 
\begin{proof}
The proof is similar to~\cite[Theorem 31]{Calvanese07}. 
The proof of $\bigcup_{q_i \in q} ans_{RC}(\KB,q_i) \subseteq ans_{RC}(\KB,q)$ is immediate. It remains to prove that 
$ans_{RC}(\KB,q) \subseteq \bigcup_{q_i \in q} ans_{RC}(\KB,q_i)$.
So, suppose  $\vec{t} \in ans_{RC}(\KB,q)$ and that each $q_i$ is of the form $q_i(\vec{x}) \leftarrow \exists \vec{y}_i . \varphi(\vec{x}_i,\vec{y}_i)$.
Then, by Proposition~\ref{calvthm29} there is $q_i\in q$ such that $\RI_{\KB}$ satisfies $\varphi(\vec{t},\vec{y}_i)$ and, thus, again by Proposition~\ref{calvthm29}, $\vec{t} \in ans_{RC}(\KB,q_i)$, which concludes the proof.


\end{proof}


\setcounter{proposition}{\getrefnumber{mainqadllite}-1}
\begin{proposition}[cf. Lemma 39,~\cite{Calvanese07}]
    Consider a satisfiable defeasible \dllitecoreH~KB 
    $\KB = \tuple{\T,\D, \A}$, where the rank of the antecedents of all axioms  in $\T_{PI} \cup \D$ is given, and $h_\KB$ is the ranking of individuals as per Proposition~\ref{prop_minheight}. 
    Consider a CQ $q$ and let $r(q,\KB) = \mathtt{DefQueryRef}(q,\KB)$ be the set of reformulated queries. Then Eq.~\ref{ansreldef} holds, \ie
\begin{equation*}
ans_{RC}(\KB,q)  =  \{\vec{t}\mid \text{ there is } q_{i} \in r(q,\KB) \text{ s.t. } \vec{t} \in q_{i}^{\mathtt{DB}(\A, h_\KB)}\} \ .    
\end{equation*}


\end{proposition}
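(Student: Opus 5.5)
By Proposition~\ref{calvthm29}, $ans_{RC}(\KB,q)$ is exactly the set of tuples $\vec{t}$ such that $\RI_{\KB}$ satisfies $\exists\vec{y}.\varphi(\vec{t},\vec{y})$. The statement therefore reduces to a purely syntactic claim: for every tuple $\vec{t}$ of ABox individuals, $\RI_{\KB}\sat q(\vec{t})$ holds iff some $q_i\in r(q,\KB)$ satisfies $\vec{t}\in q_i^{\mathtt{DB}(\A,h_\KB)}$. Here rank-annotated variables $z^r$ are read as ranging over ABox individuals of rank $r$. I plan to mirror the proof of Lemma~39 in~\cite{Calvanese07}, with the defeasible chase $\dchasecH(\KB)$ replacing the classical chase. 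Two ingredients are needed. First, the applicability conditions of Definition~\ref{defapplicable} must correspond exactly to the rules $(\mathbf{\CcH_1})$--$(\mathbf{\CcH_4})$. Second, by rule $(\mathbf{\CcH_5})$, anonymous individuals sit at rank $n+1$, so DIs never fire on them.

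\emph{Soundness} ($\supseteq$). I would first prove that each rewriting step preserves the answer over $\RI_{\KB}$. If $q'$ is obtained from $q''$ by one step of $\mathtt{DefQueryRef}$, then every match of $q'$ in $\RI_{\KB}$, respecting rank annotations, yields a match of $q''$. For PIs and role inclusions this holds because $\RI_{\KB}$ is a model of $\T_{PI}$ (Proposition~\ref{chasecordef}). For the defeasible case A.2/B with $B\dsubs A$ and $\rank_{\KB}(B)=r$, the variable $z^r$ is bound to an ABox individual $a$ with $h_\KB(a)=r$ and $a\in B^{\RI_\KB}$. Rule $(\mathbf{\CcH_4})$ then puts $\cass{a}{A}$ into the chase. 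For case B with a PI, we only need $\RI_\KB\sat\T_{PI}$, so the rank condition is harmless. Steps by $reduce$ specialise variables, so they trivially preserve matches. Finally, $\mathtt{DB}(\A,h_\KB)$ is the restriction of $\RI_{\KB}$ to $\A$ together with the rank table. Hence a match of $q_i$ in the database is a match in $\RI_{\KB}$, and by induction along the rewriting sequence it yields a match of $q$.

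\emph{Completeness} ($\subseteq$). This is the main obstacle. Given a match $\nu$ of $q$ in $\RI_{\KB}$, I will show by induction on the minimal chase step $k$ such that $\nu$ is a match in $\RI_\KB^k$ that some $q_i\in r(q,\KB)$ has a match in $\RI_\KB^0=\mathtt{DB}(\A,h_\KB)$. Suppose the last step added an atom used by $\nu$.
\begin{itemize}
\item If $(\mathbf{\CcH_1})$ or $(\mathbf{\CcH_3})$ added $\cass{a}{A}$ or a role atom, the corresponding PI is applicable to that query atom. The new query has a match at an earlier stage.
\item If $(\mathbf{\CcH_4})$ added $\cass{a}{A}$ from $B\dsubs A$ with $h_\KB(a)=\rank_\KB(B)=r$, then case A.2 or B applies, because the variable's annotation, if present, must equal $r=h_\KB(a)$. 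The rewritten atom $C(z^r)$ is matched earlier, and $\rho$ propagates the annotation consistently to all occurrences of $z$.
\item If $(\mathbf{\CcH_2})$ created a fresh anonymous $b$, I follow Calvanese et al.: all atoms mapped to $b$ are first merged by $reduce$, leaving a single atom $P(z,y)$ with $y$ unbound. This step also requires checking that a rank-annotated variable is never mapped to $b$, which holds since $b$ has rank $n+1$ and annotated variables range only over ABox individuals of rank at most $n$. The PI $B\subs\exists P$ is then applicable. When $z$ is annotated $z^r$ and $z$ is mapped to $a$, the extra condition $\rank_\KB(B)\le r$ holds because $a\in B^{\RI_\KB}$ and $h_\KB(a)=r$; this uses Remark~\ref{noexec}, namely that no object lies below the rank of its concept.
\end{itemize}
Termination (Proposition~\ref{termdllitecore}) guarantees that the rewritten queries are all in $r(q,\KB)$.

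The delicate points all concern rank annotations. I must ensure that $\rho$ and the adapted unification never drop a needed annotation or merge conflicting ones. In the $(\mathbf{\CcH_2})$ case I must verify that $b$ is not the image of any bound variable other than the merged one. I expect this bookkeeping, especially the interaction of annotated terms with $reduce$, to be where most of the care is required.
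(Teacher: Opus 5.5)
Your proposal is correct and takes essentially the same route as the paper's proof. You reduce to matches in $\RI_{\KB}$ via Proposition~\ref{calvthm29}, prove soundness by showing that each $\mathtt{DefQueryRef}$ step (in particular a DI step, via chase rule $(\mathbf{\CcH_4})$) preserves answers over $\RI_{\KB}$, and prove completeness by a backward induction over the defeasible chase that uses $reduce$ to merge atoms, as in Lemma~39 of~\cite{Calvanese07}; your explicit handling of the rank annotations (the side condition $\rank_{\KB}(B)\le r$ for PIs applied to annotated terms, and the fact that annotated variables never map to the rank-$(n+1)$ anonymous individuals) is more careful than the paper, which dismisses these cases as ``analogous''.
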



\begin{proof} 
The proof is an adaption of~\cite[Lemma 39]{Calvanese07} to our setting.
To start with, consider a defeasible \dllitecoreH~KB  $\KB = \tuple{\T,\D, \A}$, a CQ 
\[
q(\vec{x}) \leftarrow \exists \vec{y}.\varphi(\vec{x},\vec{y}) 
\]

\nd and a vector $\vec{t}$ of individuals  in $\N_\A$. Let $\calG$
be a set of assertion axioms s.t.~all the individual names appearing in them are from $\N_\A$. $\calG$ is a \emph{witness} of $\vec{t}$ \wrt~$q$, 
if there exists a substitution $\sigma$  from the variables $\vec{y}$
in $\varphi(\vec{x},\vec{y})$ to individuals occurring in $\calG$ such that the set
of atoms in $\varphi(\vec{t},\sigma(\vec{y}))$ is equal to $\calG$ and $\sigma$ maps any rank annotated variable of the form $z^r$ into an individual of rank $r$.\footnote{In the following, with $|\calG|$ we denote the cardinality of $\calG$, \ie~the number of assertions in $\calG$.}

What we are looking for is the witnesses of a tuple $\vec{t}$ \wrt~a query $q$ that are contained in $\dchasecH(\KB)$, that is, in the model $\R_{\KB}$: we have proved in Proposition \ref{calvthm29} that, given any query $q$, the model  $\R_\KB$ characterises the answer set \wrt~$\KB$. Now we are interested into the witnesses $\calG$ that are satisfied by $\R_{\KB}$, since, by Proposition \ref{calvthm29}, the set of such witnesses define the answer set $ans_{RC}(\KB,q)$. Clearly, each such witness $\calG$ corresponds to a subset of $\dchasecH(\KB)$ that is sufficient for the formula $\exists \vec{y}.\varphi(\vec{x},\vec{y})$ to be satisfied by $\R_{\KB}$. More precisely, we have that $\vec{t} \in ans_{RC}(\KB,q)$ iff there exists a witness $\calG \subseteq \dchasecH(\KB)$ of $\vec{t}$ \wrt~$q$.

Now, as $\KB$ is satisfiable, by Proposition~\ref{calvthm29}, 
\begin{eqnarray*}
    ans_{RC}(\KB,q) & = & \{\vec{t} \mid \text{ there is } q_{i} \in q \text{ s.t. } \RI_{\KB}  \text{ satisfies } \exists \vec{y}_i .\varphi(\vec{t},\vec{y}_i) \} \ . 
\end{eqnarray*}






\nd Let $q = q(\vec{x})\leftarrow \exists \vec{y} .\varphi(\vec{x},\vec{y})$ be a CQ, $q_R$ be the union of CQs obtained by rewriting  $q$ (that is $q_{R}=r(q,\KB)$), and $q_i \in q_R$ be a single CQ in $q_{R}$. Let us recall that:

\begin{equation}\label{qrew}
\begin{array}{rcl}
     q^{\RI_{\KB}} & = & \{\vec{t} \mid \RI_{\KB}  \text{ satisfies } q(\vec{t})\}  \\
     q^{\mathtt{DB}(\A, h_\KB)} &  = & \{\vec{t}\mid  \mathtt{DB}(\A, h_\KB) \text{ satisfies } q(\vec{t})\} \\
    q_{i}^{\mathtt{DB}(\A, h_\KB)} &  = & \{\vec{t}\mid  \mathtt{DB}(\A, h_\KB) \text{ satisfies } q_{i}(\vec{t})\} \\
\end{array}
\end{equation}

\nd and by Proposition~\ref{calvthm31} 

\begin{equation}\label{qrewB}
\begin{array}{rcl}
    q_{R}^{\mathtt{DB}(\A, h_\KB)} &  = & \bigcup_{q_i \in q_{R}}  q_{i}^{\mathtt{DB}(\A, h_\KB)} \\
    & = &  \{\vec{t}\mid \text{ there is } q_{i} \in q_{R} 
    \text{ s.t. } \ \mathtt{DB}(\A, h_\KB) \text{ satisfies } q_{i}(\vec{t})\} \ .
\end{array}
\end{equation}


\nd Consequently, to prove the claim, we need to show that $q_{R}^{\mathtt{DB}(\A, h_\KB)} = q^{\RI_{\KB}}$.

\begin{description}
    \item[($\subseteq$):]  We want to prove that $q_{R}^{\mathtt{DB}(\A, h_\KB)} \subseteq q^{\RI_{\KB}}$. For that, we are going to prove $q_{r}^{\mathtt{DB}(\A, h_\KB)} \subseteq q^{\RI_{\KB}}$, for each $q_{r} \in q_{R}$.

    
     We  consider two CQs $q_i$ and $q_{i+1}$, such that $q_{i+1}$ is obtained from $q_i$ by means of step 7 of the procedure $\mathtt{DefQueryRef}$. That is, $q_{i+1}=\rho(q_i[g/gr(g,\tau)])$ for some inclusion $\tau$ in $\T\cup\D$. We show that $q_{i+1}^{\RI_{\KB}} \subseteq q_{i}^{\RI_{\KB}}$. 
    
    Let $\vec{t}$ be a tuple of individuals occurring in $\KB$ such that $\vec{t}^{\RI_{\KB}} \in q_{i+1}^{\RI_{\KB}}$, that is 
    $\vec{t} \in q_{i+1}^{\RI_{\KB}}$ as $\vec{t}^{\RI_{\KB}} = \vec{t}$.
     Therefore, $\RI_{\KB}$ satisfies $q_{i+1}(\vec{t})$, and, we can conclude that there is some $\calG \subseteq \dchasecH(\KB)$ such that $\calG$ is a witness of $\vec{t}$ \wrt~$q_{i+1}$.  
    

    Let us assume that $q_{i+1}$ is obtained from 
    $q_i$ by applying a DI $\tau$ of the form $A' \dsubs A$ via Case 4. of Definition~\ref{defapplicable}. That is, $q_{i+1} = q_i[A(x)/A'(x^r)]$, where $r=\rank_{\KB}(A')$. Then,  there exists an assertion in $\calG$, involving an individual of rank $r$, to which the PI $A' \dsubs A$ is applicable via chase rule ($\mathbf{\CcH_4}$), which implies that there exists a witness of $\vec{t}$ \wrt~$q_i$ contained in $\dchasecH(\KB)$. Therefore, $\vec{t} \in q_{i}^{\RI_{\KB}}$.
    Assume now that $q_{i+1}$ is obtained from $q_i$ by means of step 10 of the procedure $\mathtt{QueryRef}$, \ie~$q_{i+1} = \kappa(reduce(q_i,g_{1}, g_{2}))$, where where $g_1$ and $g_2$ are two query atoms belonging to $q_i$ such that $g_1$ and $g_2$ unify. It is easy to see that in such a case $\calG$ is also a witness of $\vec{t}$ \wrt~$q_i$ and therefore $\vec{t}^{\RI_{\KB}} \in q_{i}^{\RI_{\KB}}$. 

    The proof for the other cases listed in Definition~\ref{defapplicable} is analogous.
    
    Each CQ of $q_R$ is either in $q$ or a query obtained from $q$ by repeatedly applying steps 7 and 10 of the procedure $\mathtt{QueryRef}$, it follows that for each  $q_{r} \in r(q,\KB)$, $q_{r}^{\RI_{\KB}} \subseteq q^{\RI_{\KB}}$, by repeatedly applying the property $q_{i+1}^{\RI_{\KB}} \subseteq q_{i}^{\RI_{\KB}}$. As $q_{r}^{\mathtt{DB}(\A, h_\KB)} \subseteq q_{r}^{\RI_{\KB}}$, for each  $q_{r} \in r(q,\KB)$,  $q_{r}^{\mathtt{DB}(\A, h_\KB)}\subseteq q^{\RI_{\KB}}$ holds and, thus, 
    $q_{R}^{\mathtt{DB}(\A, h_\KB)} \subseteq q^{\RI_{\KB}}$.

    \item[($\supseteq$):] Now we prove that $q^{\RI_{\KB}} \subseteq q_{R}^{\mathtt{DB}(\A, h_\KB)}$. We have to show that for each tuple 
    $\vec{t} \in q^{\RI_{\KB}}$, there exists $q_r \in q_R$ such that $\vec{t} \in  q_{r}^{\mathtt{DB}(\A, h_\KB)}$. 
    
    At first, as $\vec{t} \in q^{\RI_{\KB}}$, it follows that there is finite number $k$ such that there is a witness $\calG_k$ of $\vec{t}$ \wrt~$q$ contained in $\dchasecH_k(\KB)$
    %
    %
    Without loss of generality, we may assume that every chase rule ($\mathbf{\CcH_1}$) - ($\mathbf{\CcH_5}$) used in the construction of $\dchasecH_k(\KB)$ is necessary in order to generate such a witness $\calG_k$ and that $k$ is minimal. Note that $\dchasecH_k(\KB)$ can be seen as a forest 
    (set of trees) where: \ii{i} the roots correspond to the assertions of $\A$; \ii{ii} there are exactly $k$ edges, where each edge corresponds to an application of a rule; and \ii{iii} each leaf is either  a root  or an assertion in $\calG_k$.


    In the following, we say that an assertion $\alpha$ is an \emph{ancestor} of an  assertion $\beta$  (or, equivalently, $\beta$ is a \emph{successor} of $\alpha$) in a set of assertions $\calS$, if there exist $\alpha_1, \ldots, \alpha_n$ in $\calS$, where $\alpha_1=\alpha$, $\alpha_n=\beta$, such that, for each $2\leq i \leq n$ $\alpha_i$ can be generated by applying a chase rule to $\alpha_{i-1}$. 
    %
    %
    Furthermore, for each $i \in\{0, \ldots, k\}$, we denote with $\calG_i$ the \emph{pre-witness} of $\vec{t}$ \wrt~$q$ in $\dchasecH_k(\KB)$, defined as follows:
    
    

    \begin{eqnarray*}
    \calG_i & = & \{\alpha \in  \dchasecH_i(\KB) \mid \text{there exists } \beta \in \calG_k \text{ s.~t. } \alpha \text{ is an  ancestor of } \beta\\
    && \text{ in } \dchasecH_k(\KB) \text{ and there is no successor of } \alpha \text{ in } \dchasecH_i(\KB) \\
    && \text{ that is an ancestor of } \beta \text{ in } \dchasecH_k(\KB)\}  \ .   
    \end{eqnarray*}

\nd    Now we prove by induction on $i$ that, starting from $\calG_k$, we can `go back' through the chase rule applications and find a query $q_r \in q_R$ such that the pre-witness $\calG_{k-i}$ of $\vec{t}$ \wrt~$q$ in $\dchasecH_{k-i}(\KB)$ is also a witness of $\vec{t}$ \wrt~$q_r$. To this aim, we prove that there exists 
    $q_r \in q_R$ such that $\calG_{k-i}$ of $\vec{t}$ \wrt~$q_r$ and 
    $|q_r| = |\calG_{k-i}|$, where $|q_r|$indicates the number of atoms in the CQ $q_r$. Then the claim follows for $i=k$, as $\calG_0 \subseteq \dchasecH_{0}(\KB) = \A$.

    \begin{description}
    
    \item[Base step, $i=0$:] we know that $\calG_k$ is a witness of $\vec{t}$ \wrt~$q$. We prove now that this implies that $\calG_k$ is a witness of $\vec{t}$ \wrt~some $q_r\in q_R$ s.t. $|q_r| = |\calG_k|$, where $|q_r|$ indicates the number of atoms in the CQ $q_r$.

    We have three possible cases:

    \begin{enumerate}
        \item $|q| < |\calG_k|$. This cannot be the case, since $\calG_k$ is a witness of $\vec{t}$ \wrt~$q$.
        \item $|q| = |\calG_k|$. In such a case our condition is immediately satisfied.
        \item $|q| > |\calG_k|$. In such a case there must exist two atoms $g_1$ and $g_2$ in $q$ and a 
    membership assertion $\alpha$ in $\calG_k$ such that $\alpha$ and $g_1$ unify and $\alpha$ and $g_2$ unify, which implies that $g_1$ and $g_2$ unify.
    Therefore, by step 10 of the  $\mathtt{QueryRef}$ procedure, it follows that there  exists a query $q' \in q_R$ (with $q' = reduce(q, g_1, g_2))$ such that $\calG_k$ is a witness of $\vec{t}$ \wrt~$q'$ and $|q'| = |q| - 1$. 
    Now, if  $|\calG_k| < |q'|$, we can iterate the above argument. Thus we conclude that  there exists $q_r \in q_R$ such that $\calG_k$ is a witness of $\vec{t}$ \wrt~$q_r$ and $|\calG_k| = |q_r|$.
    \end{enumerate}


    \item[Induction step:] Suppose there is some $q_r \in q_R$ such that $\calG_{k-i+1}$ is a witness of $\vec{t}$ \wrt~$q_r$ and $|q_r| = |\calG_{k-i+1}|$. 
    Let us assume that $\dchasecH_{k-i+1}(\KB)$ is obtained by applying the chase rule
    ($\mathbf{\CcH_4}$) to $\dchasecH_{k-i}(\KB)$ (the proof is analogous for all other chase rules). Therefore, there is  $B \dsubs A \in \D$ with $\rank_{\KB}(B)=j$,
    $\cass{a}{B}$ occurs in $\dchasecH_{k-i}(\KB)$ and $\cass{a}{A}$ does not occur in $\dchasecH_{k-i+1}(\KB)$. Let us assume that $B$ is a concept name $A'$ (the other cases are proved similarly). Therefore, 
    $\dchasecH_{k-i+1}(\KB) =  \dchasecH_{k-i}(\KB) \cup \{\cass{a}{A} \}$. 
    As $\calG_{k-i+1}$ is a witness of $\vec{t}$ \wrt~$q_r$ and $|q_r| = |\calG_{k-i+1}|$ it follows that $A(x)$ occurs in $q_r$.  Therefore, by Step 7. of the  $\mathtt{QueryRef}$ procedure, it follows that there exists a query $q' \in q_R$, \ie~$q' = q_r[A(x)/A'(x)]$, such that $\calG_{k-i}$ is a witness of $\vec{t}$ \wrt~$q'$. 
    
    Now, there are two possible cases: either $|q'| = |\calG_{k-i}|$, and in this case the claim is immediate; or $|q'| = |\calG_{k-i}| + 1$. The latter case arises if and only if the membership assertion $\cass{a}{A'}$ to which the chase rule ($\mathbf{\CcH_4}$) is applied is both in $\calG_{k-i}$ and $\calG_{k-i+1}$.
    This implies that there exist two atoms $g_1$ and $g_2$ in $q'$ such that 
    $A(a)$ and $g_1$ unify and $A(a)$ and $g_2$ unify, and hence $g_1$ and $g_2$ unify.
    Therefore, by step 10 of the  $\mathtt{QueryRef}$ procedure, it follows that there 
    exists a query $q''\in q_R$ (with $q'' = reduce(q', g_1, g_2))$ such that 
    $\calG_{k-i}$ is  a witness of $\vec{t}$ \wrt~$q''$ and $|q''| = |q'| - 1 = |\calG_{k-i}|$, which proves the claim. 
    \end{description}
     
\end{description}

\end{proof}

\section{Proofs of Section~\ref{compplex}: Computational Complexity}\label{compplexproofs}

\setcounter{proposition}{\getrefnumber{complexityqa}-1}
\begin{proposition}
Answering unions of conjunctive queries in defeasible \dllitecoreH~and \dllitehornH~is in $\textsf{P}$ \wrt~the size of the union of the TBox and the DBox,  in $\textsf{AC}^0$ in the size of the ABox (data complexity), and is 
$\textsf{NP-complete}$ in combined complexity.
\end{proposition}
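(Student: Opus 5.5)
We prove Proposition~\ref{complexityqa} by splitting the answering of a union of CQs $\vec{q}$ over a satisfiable defeasible KB $\KB=\tuple{\T,\D,\A}$ into an offline phase and an online phase. The offline phase does not depend on $\A$: we compute $\mathtt{ComputeRanking.\{Core,Horn\}}$, the ranks of all antecedents in $\T_{PI}\cup\D$ via $\mathtt{Ranking.Core}$ (or its Horn analogue), and the NI-closure used for the satisfiability check. The online phase does depend on $\A$: we compute the individual ranks, materialise $tab_{rankInd}$ to obtain $\mathtt{DB}(\A,h_\KB)$, and evaluate $r(\vec{q},\KB)=\mathtt{DefQueryRef}(\vec{q},\KB)$ over that database. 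Correctness of this pipeline is already given by Propositions~\ref{mainqadllite} and~\ref{calvthm40} for the core case and by the analogous Horn results. It therefore remains to bound each phase in the three complexity measures.

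\emph{TBox/DBox complexity.} Fix $\A$ and $\vec{q}$. The offline phase is polynomial in $|\T\cup\D|$ by Propositions~\ref{complexity1a} and~\ref{complexity1} and Corollary~\ref{rankcomplexity1}. The satisfiability check is polynomial by Proposition~\ref{complexsatDIs}. The rewriting is polynomial in $|\T\cup\D|$ by Propositions~\ref{complexityqueryref} and~\ref{complexityqueryrefHorn}, since the number of rewritten CQs is at most $(m(n+1)^2)^n$ with $n$ fixed. Evaluating polynomially many CQs of bounded size over a fixed database is again polynomial. This establishes membership in $\textsf{P}$.

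\emph{Data complexity.} Fix $\T,\D,\vec{q}$. Then $r(\vec{q},\KB)$ is a fixed UCQ, so evaluating it is an FO query and lies in $\textsf{AC}^0$. The satisfiability test of Proposition~\ref{satdedllitecore} is likewise a fixed Boolean UCQ over $\mathtt{DB}(\A)$. The delicate part is $tab_{rankInd}$, which must itself be definable from $\A$ in $\textsf{AC}^0$. For this we invoke Proposition~\ref{complexityrankinds} for $\mathtt{ComputeRankIndividuals}$: with $\T\cup\D$ fixed, the number of loop iterations is bounded by a constant, and each iteration issues a fixed SQL query. The resulting rank table is therefore a constant-depth composition of FO queries, hence FO-definable. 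Substituting this definition for $tab_{rankInd}$ inside the rewritten queries yields a single FO query over $\mathtt{DB}(\A)$, which places the problem in $\textsf{AC}^0$. I expect this step to be the main obstacle. One has to argue carefully that the iterative rank computation, including the occurrence checks that reach through role assertions, stays FO once the TBox and DBox are fixed, rather than merely polynomial.

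\emph{Combined complexity.} For hardness, take $\D=\emptyset$. Defeasible answering then coincides with classical CQ answering, which is already $\textsf{NP}$-hard even without a TBox, since it subsumes CQ evaluation over a database. For membership, we guess two things: a single CQ in $r(\vec{q},\KB)$, reached by a polynomial-length sequence of rewriting steps (each step can be checked in polynomial time once the ranks are known), and a homomorphism of it into $\mathtt{DB}(\A,h_\KB)$, where a rank-annotated variable $z^r$ must be mapped to an individual of rank $r$. Both guesses are verified in polynomial time, and the ranks are precomputed in polynomial time by the previous paragraphs. This follows the argument of~\cite[Theorems 43--44]{Calvanese07}.
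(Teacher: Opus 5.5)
Your proposal is correct and follows the paper's proof. Membership in $\textsf{P}$ comes from the polynomial bound on the rewriting (Proposition~\ref{complexityqueryref}). $\textsf{AC}^0$ data complexity holds because the rewriting does not depend on the ABox and the rewritten queries are evaluated as SQL queries. $\textsf{NP}$-completeness is argued as in \cite[Theorem 44]{Calvanese07}: guess one rewritten CQ plus a homomorphism, and get hardness from CQ evaluation over databases. Your explicit argument that $tab_{rankInd}$ is FO-definable from $\A$ for fixed $\T\cup\D$, via Proposition~\ref{complexityrankinds}, makes precise a step the paper's proof leaves implicit.
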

\begin{proof}
    The belonging to $\textsf{P}$ follows from Proposition~\ref{complexityqueryref}.
    The belonging to $\textsf{AC}^0$ follows from the facts that the number of reformulated queries does not depend on the ABox and that the computational complexity of computing SQL relational queries, \viz~a CQ, is in $\textsf{AC}^0$~\cite{Papadimitriou99}.
    The proof of $\textsf{NP-completeness}$ is the same as in~\cite[Theorem 44]{Calvanese07} (see also~\cite[Theorem 6]{Calvanese05}). In short, membership in $\textsf{NP}$ follows from the fact that, a 
    reformulated query can be computed in non-deterministic polynomial time by the $\mathtt{DefQueryRef}$ procedure \wrt~combined complexity, while $\textsf{NP-hardness}$ follows from $\textsf{NP-hardness}$ of CQ evaluation over relational databases.    
\end{proof}

\section{Query Answering for Defeasible \dllitehornH~under RC}\label{qahornh}

\nd We address here the query reformulation procedure for defeasible \dllitehornH.
The procedure follows essentially the same pattern as described for defeasible \dllitecoreH. To avoid tedious replications of definitions and properties described for defeasible \dllitecoreH, we provide here a sketch description only.

To start with, the \emph{chase} for a defeasible \dllitehornH~KB is built as for defeasible \dllitecoreH, except that now we use the rules $(\mathbf{\ChH_i})$ in place of the rules $(\mathbf{\CcH_i})$ ($1\leq i \leq 5$), with 


{
\begin{description}
    \item[($\mathbf{\ChH_4}$)] if 
    $\cass{a}{B_k}$ occurs in $\dchasehH_i(\KB)$ (for all $1\leq k \leq n$),
    $h_\KB(a) = j$,\footnote{To compute the rank of an individual, see~\ref{ranindqref}.}
    $\{B_1, \ldots, B_n\} \dsubs A \in \D$ with $\rank_{\KB}(\{B_1, \ldots, B_n\})=j$,\footnote{That is, $\{B_1, \ldots, B_n\} \dsubs A \in \D_j$.}  
    and $\cass{a}{A}$ does not occur in $\dchasehH_i(\KB)$ (condition $f$), then let 
    \begin{equation} \label{dchasehdefi}
    \dchasehH_{i+1}(\KB) = \dchasehH_i(\KB)\cup\{\cass{a}{A}\} 
    \end{equation}
    \nd (condition $f_{new}$);




     \item[($\mathbf{\ChH_5}$)] if $b$ is the new individual created by the application of the ($\mathbf{\ChH_2}$) rule to 
    $\{B_1, \ldots, B_n\} \subs \exists R \in \T$, then update $h_\KB$ with  $h_\KB(b) = n+1$, where $n$ is the highest rank of DIs in $\D$.

\end{description}
}

\nd The definition of chase and that of the ranked model $\RI_{\KB}$ is similar as for defeasible \dllitecoreH. 
Like for defeasible \dllitecoreH~(see procedure $\mathtt{RationalClosure.Core}$), we further need to take into account that we may require to compute the rank of a conjunction $\{B_1, \ldots, B_n\}$ of basic atoms $B_i$. But, this again can be done by executing lines 8 - 13 of the $\mathtt{RationalClosure.Horn}$ procedure. Procedure~$\mathtt{\ref{RankalgH}}$ illustrates how to compute the ranking of 
$\{B_1, \ldots, B_n\}$.


\begin{procedure}[h]
\caption{Ranking.Horn($\KB,\{B_1, \ldots, B_n\}$)} \label{RankalgH}
{
\KwIn{Defeasible \dllitehornH~KB $\KB$ and a set of basic concepts  $B_i$}
\KwOut{Rank value of $B_1 \andc \ldots \andc B_n$}
  $\tuple{\tuple{\T^*,\D^*},\{\D_0,\ldots,\D_n\}}$ := $\mathtt{ComputeRanking.Horn}$($\KB$)\;
  //Compute $B_1 \andc \ldots \andc B_n$'s rank $i$\;
  $i$ :=  $0$; $\D_\R$ :=  $\D^*$\;
    $\D_{\delta_0}:=\{\{B'_1,\ldots, B'_m,\delta_0\}\subs A'\mid \{B'_1,\ldots, B'_m\}\dsubs A'\in\D_\R\}$, where  $\delta_0$ is a new concept name\;
  \While{$\T^*\cup\D_{\delta_i}\entails \{B_1,\ldots, B_n,\delta_i\}\subs \bot$ {\bf and} $\D_\R\neq\emptyset$}{
    $\D_\R$ := $\D_\R\backslash\D_{i}$; $i$ := $i + 1$\;
    $\D_{\delta_i}:=\{\{B'_1,\ldots, B'_m,\delta_i\}\subs A'\mid \{B'_1,\ldots, B'_m\}\dsubs A'\in\D_\R\}$, where  $\delta_i$ is a new concept name\;
  }
  \If{$\D_\R \neq \emptyset$}{\Return{$i$}}
  \eIf{$\D_\R = \emptyset$ {\bf and } $\T^*\cup\D_{\delta_i} \not\entails \{B_1, \ldots, B_n,\delta_i\} \subs \bot$}{\Return{$n+1$}}{\Return{$\infty$}}
  }
\end{procedure}

\nd Equations~\ref{dchasecdef} - \ref{dcanmodB} are  adapted to defeasible \dllitehornH~in the obvious way.

Then,  it not difficult to see that 

\begin{appproposition}\label{propsumdefchase}
The defeasible \dllitehornH~analogues of Propositions~\ref{chasecordef}, \ref{satdedllitecore}, 
\ref{calvthm29} and \ref{calvthm31} hold as well.
\end{appproposition}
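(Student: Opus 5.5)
The plan is to lift, one by one, the proofs of Propositions~\ref{chasecordef}, \ref{satdedllitecore}, \ref{calvthm29} and \ref{calvthm31} from the core case to the Horn case. In each proof we replace $\clncH$ by $\clnhH$ and the rules $(\mathbf{\CcH_i})$ by $(\mathbf{\ChH_i})$. The classical Horn ingredients these proofs rely on are already available: Lemma~\ref{lemma_lemma7calv07}, Lemma~\ref{lemma_lemma12calv07}, Lemma~\ref{Lemma14calv07} and Proposition~\ref{Theo15calv07}. The purely semantic results they use are also stated for \dllitehornH, namely Propositions~\ref{prop_defcons}, \ref{prop_minheight} and \ref{prop_unique_abox}, and Lemma~\ref{lemma_abox_noind}. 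So the work reduces to checking the steps that involve the defeasible rules $(\mathbf{\ChH_4})$ and $(\mathbf{\ChH_5})$ and conjunctive left-hand sides.

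\textbf{Analogue of Proposition~\ref{chasecordef}.} We first obtain points a.--c. of its proof, with $\RI_\KB$ built from $\dchasehH(\KB)$. These follow verbatim from Lemmas~\ref{lemma_lemma7calv07} and \ref{lemma_lemma12calv07} and Proposition~\ref{Theo15calv07}. The reason is that $(\mathbf{\ChH_4})$ only adds concept assertions. The inductive step in Lemma~\ref{lemma_lemma12calv07} (Case~1) therefore also covers an assertion $\cass{a}{A}$ added through a DI. In that case we use $\{B_1,\ldots,B_n,\delta\}\subs A$ in place of a PI, after noting that it is enough to argue with $\T$ together with the Horn $\delta$-encoding of $\D_j$.

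For point i. (satisfaction of $\D$), take $x\in\min_{\pref}(\{B_1,\ldots,B_n\}^{\RI_\KB})$. Anonymous elements have height $n+1$ and are never minimal for a conjunction of finite rank. Witnesses carry no concepts. Hence $x$ is a named individual $a$, and $h_\KB(a)=\rank_\KB(\{B_1,\ldots,B_n\})$. This equality holds because of the convexity of heights and because $h_\KB(a)\ge\rank_\KB(\cdot)$ for any conjunction $a$ belongs to. Then $(\mathbf{\ChH_4})$ fires and gives $\cass{a}{A}$. Points ii. and iii. then follow exactly as before. Proposition~\ref{satdedllitecore} for Horn comes directly from Proposition~\ref{prop_defcons} and Proposition~\ref{Theo15calv07}.

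\textbf{Analogue of Proposition~\ref{calvthm29}.} The central step is the Horn version of Lemma~\ref{calvlem28}, which asks for a homomorphism $\mu$ from $\RI_\KB$ into any model in which all individuals are minimally interpreted. The induction on the chase is unchanged for $(\mathbf{\ChH_1})$--$(\mathbf{\ChH_3})$. The only change is that the hypothesis now covers several conjuncts: all $\mu(a)\in B_k^\RI$. For $(\mathbf{\ChH_4})$, we have $h_\RI(\mu(a))=h_\KB(a)=\rank_\KB(\{B_1,\ldots,B_n\})$ and $\mu(a)$ lies in every $B_k^\RI$. Since no element of lower height can lie in the conjunction in any model, $\mu(a)$ is minimal, so $\mu(a)\in A^\RI$. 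With this lemma, the two inclusions in the proof of Proposition~\ref{calvthm29} go through unchanged: the ranked union with a minimal model for $\subseteq$, and composition with $\mu$ for $\supseteq$. Proposition~\ref{calvthm31} is then purely formal.

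\textbf{Main obstacle.} The hardest step is justifying that minimality in $\RI_\KB$ and in $\RI$ coincides with ``height equals the rank of the conjunction''. In the core case the paper implicitly uses that the minimal height of $B$-objects across models is $\rank_\KB(B)$. For a conjunction this means invoking the correctness of $\mathtt{ComputeRanking.Horn}$ and $\mathtt{Ranking.Horn}$, via Theorem~\ref{prop_dllitehornRC}, together with the fact that the big ranked model realises every rank. I would try to prove this first as a separate lemma, then verify it is stable under adding the ABox. This stability follows from Proposition~\ref{prop_defcons}'s construction, since the ABox adds elements only at height $n+1$ or at heights already justified by $h_\KB$.
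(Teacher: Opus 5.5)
Your route is the one the paper intends: it gives no proof and simply asserts these analogues. However, the step you use for point~i fails. In fact all three items of Proposition~\ref{chasecordef} are false once $\D$ is taken into account, already for \dllitecoreH, and the paper's core proof makes the same slip.

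Both of your claims in point~i are wrong:
\begin{itemize}
\item A named individual that is minimal for $E=\{B_1,\ldots,B_n\}$ inside $\RI_{\KB}$ can have $h_{\KB}(a)>\rank_{\KB}(E)$. Convexity only puts concept-free witnesses in the lower layers. Take $\T=\{N\subs\neg C\}$, $\D=\{B\dsubs C, B'\dsubs N\}$, $\A=\{\cass{a}{B},\cass{a}{B'}\}$, which is essentially the paper's own non-monotonicity example. Both DIs have rank $0$, so $n=0$; $\KB$ is satisfiable and $h_{\KB}(a)=1$. Then $(\mathbf{\ChH_4})$ never fires, $a$ is the only $B$-instance of $\RI_{\KB}$, and $a\notin C^{\RI_{\KB}}$.
\item An anonymous element can be the only, hence minimal, instance of $E$. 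With $\T=\{G\subs\exists P,\exists P^-\subs B\}$, $\D=\{B\dsubs C\}$, $\A=\{\cass{c}{G}\}$, the chase creates an anonymous $B$-instance at height $1$ that is not in $C$.
\end{itemize}

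What is true is weaker, and it is what the $\subseteq$ direction of Proposition~\ref{calvthm29} really needs in place of ``$\RI_{\KB}$ is a model of $\KB$''. If $\KB$ is satisfiable, then $\RI_{\KB}\sat\T$. Moreover, the ranked union of $\RI_{\KB}$ with the big ranked model $\OI$ of $\tuple{\T,\D}$, with individuals interpreted in the $\RI_{\KB}$-part, is a model of $\KB$ in which all individuals are minimally interpreted. The reasons are as follows:
\begin{itemize}
\item $\OI$ has instances of every DI antecedent $E$ at height exactly $\rank_{\KB}(E)$.
\item Every $E$-instance of $\RI_{\KB}$ has height at least $\rank_{\KB}(E)$: anonymous ones sit at $n+1$, and for named ones use your $\mu$ together with $h_{\RI}(E)\geq\rank_{\KB}(E)$.
\item The $\RI_{\KB}$-instances at height $\rank_{\KB}(E)$ are therefore named individuals at which $(\mathbf{\ChH_4})$ fired.
\end{itemize}

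Two more steps need repair.

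First, the $\delta$-trick in point~b does not go through. Applying $(\mathbf{\hHTrans})$ with $\{B_1,\ldots,B_n,\delta\}\subs A$ yields NIs mentioning $\delta$. These are not in $\clnhH(\T)$, they are not checked against $\mathtt{DB}(\A)$, and no chase assertion involves $\delta$. What actually excludes a clash is the minimality of $h_{\KB}(a)$, which is a semantic fact. Obtain the right-to-left direction instead as follows. Propositions~\ref{Theo15calv07}, \ref{prop_defcons} and~\ref{prop_unique_abox} give a model $\RI$ with all individuals minimal. Then use your Horn version of Lemma~\ref{calvlem28}, which does not presuppose that $\RI_{\KB}$ is a model: any NI violated at $o$ in $\RI_{\KB}$ would be violated at $\mu(o)$ in $\RI$.

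Second, transferring $\OI^{\KB}_2\sat q(\vec{t})$ back to $\RI_{\KB}$ only works for query atoms connected to $\vec{t}$ or to constants. Consider the Boolean query $q()\leftarrow E(y)$ with $E$ a concept name not occurring in $\KB$. Every model in $\min_{<_{\A}}(\mathfrak{O}^{\KB})$ satisfies $q$, because the big ranked model populates $E$, while $E^{\RI_{\KB}}=\emptyset$. So the analogue of Proposition~\ref{calvthm29}, like the core original, needs unanchored query components excluded.

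The rest is fine: your handling of Proposition~\ref{satdedllitecore}, the $\supseteq$ direction via $\mu$, and Proposition~\ref{calvthm31} (true by the definition of $ans_{RC}$ for UCQs). Your ``main obstacle'' lemma is the right key ingredient.
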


\nd We next address the query reformulation procedure for defeasible \dllitehornH.
Once more, Definition~\ref{defapplicable} can easily be adapted to defeasible \dllitehornH~by starting with Definition~\ref{defapplicable} and adapt it accordingly by considering the definition of applicability and reformulation illustrated in Section~\ref{sect_query_rewiteDIs}. So, for instance, Case 6. in Definition~\ref{defapplicable} becomes:

\begin{itemize}
    \item[5.]  {\bf IF} $g=A(z^r)$  \textbf{AND} $\spec = \{B_1, \ldots, B_{n} \}  \dsubs A$ with $\rank_{\KB}(\{B_1, \ldots, B_{n} \} ) = r$
    \textbf{THEN} 
    $gr(g, \spec) = C_1(z^r) \land \ldots \land C_n(z^r)$, where for each $i \in \{1, \ldots, n\}$,
    \begin{itemize}
        \item $C_i(z^r) =A_i(z^r)$ if $B_i = A_i$, or
        \item $C_i(z^r) = P_i(z^r,y')$ if $B_i = \csome P_i$, or
        \item $C_i(z^r) = P_i(y',z^r)$ if $B_i = \csome P_i^-$ 
    \end{itemize}
    
    \nd and $y'$ is a new unbound variable.
    

    
\end{itemize}

\nd Then,  similarly to defeasible \dllitecoreH, one may prove that

\begin{appproposition}\label{propsumdefchaseB}
The defeasible \dllitehornH~analogues of Propositions~\ref{termdllitecore}, \ref{mainqadllite} and  Proposition~\ref{calvthm40} hold as well.
\end{appproposition}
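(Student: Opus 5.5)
The plan is to transfer, one by one, the three \dllitecoreH~arguments (Propositions~\ref{termdllitecore}, \ref{mainqadllite} and \ref{calvthm40}) to \dllitehornH. I use Proposition~\ref{propsumdefchase}, which already gives the Horn analogues of Propositions~\ref{chasecordef}, \ref{calvthm29} and \ref{calvthm31}. The only new ingredient is that the adapted rewriting step $gr(g,\spec)$ for an axiom $\{B_1,\ldots,B_n\}\star A$ ($\star\in\{\subs,\dsubs\}$) replaces one atom by a conjunction of up to $n$ atoms, all sharing the same (possibly rank-annotated) term $z^r$. The defeasible chase rule $(\mathbf{\ChH_4})$ symmetrically requires all $\cass{a}{B_k}$ to occur.

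\emph{Termination.} I follow the proof of Proposition~\ref{termdllitecore}, i.e.\ Lemma 34 in \cite{Calvanese07}, but now with the Horn counting argument of \cite{Calvanese06a,Calvanese13}. Each generated CQ uses only the concept and role names of $\KB$. Its terms come from the original terms of $q$ plus fresh unbound variables, and unbound variables are interchangeable up to renaming. Rank annotations range over the finite set $\{0,\ldots,n\}$ of ranks occurring in the ranking $\RR$. Hence, up to variable renaming, only finitely many CQs exist over a bounded number of bound terms. The number of bound terms never exceeds that of $q$, since rewriting introduces only unbound variables and $reduce$ only merges terms.

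The catch is that atom count is no longer monotone. I would argue, as in the classical Horn case, that atoms on the same term are drawn from a finite set: labels over concept names and $P(z,\_)$, $P(\_,z)$ patterns, each with an optional rank from a finite set. Duplicates are collapsed by $reduce$. So each query is determined by a finite labelling of its finitely many bound terms, plus a bounded number of role atoms between them. This gives termination, with a bound exponential in $\T$, as for the classical \dllitehornH.

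\emph{Correctness (analogue of Proposition~\ref{mainqadllite}).} I reuse the witness/pre-witness argument verbatim, with $\RI_{\KB}$ built from $\dchasehH(\KB)$ and Proposition~\ref{propsumdefchase} in place of Proposition~\ref{calvthm29}. For $(\subseteq)$ I show $q_{i+1}^{\RI_\KB}\subseteq q_i^{\RI_\KB}$ for each rewriting step. In the new Horn cases, a witness for $q_{i+1}$ contains all of $C_1(z^r),\ldots,C_n(z^r)$ on one individual $a$ of rank $r=\rank_\KB(\{B_1,\ldots,B_n\})$. Rule $(\mathbf{\ChH_4})$, or $(\mathbf{\ChH_1})/(\mathbf{\ChH_2})$ with the rank-$\leq r$ side condition, then yields the consumed atom $A(a)$.

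For $(\supseteq)$ I go backwards along the minimal chase forest leading to a witness $\calG_k$. This is the main obstacle. A Horn rule application has \emph{several} premises, so the chase is a DAG rather than a forest, and one backward step replaces one assertion of the pre-witness by up to $n$ assertions. I would therefore redefine the pre-witness $\calG_i$ as the set of assertions of $\dchasehH_i(\KB)$ with no successor in $\dchasehH_i(\KB)$ that is an ancestor of $\calG_k$, allowing multiple parents. I would then replace the invariant $|q_r|=|\calG_{k-i}|$ by the invariant that $\calG_{k-i}$ is a witness of $q_r$ with $q_r$ \emph{reduced}, i.e.\ no two atoms map to the same assertion. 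The backward step applies the corresponding Horn case of $gr$, which introduces atoms matching exactly the premises. It then applies $reduce$ as often as needed for any premise atoms that coincide with atoms already present. Rank annotations are consistent because all premises concern the same individual $a$, whose rank $h_\KB(a)$ is fixed. Anonymous individuals have rank $n+1$ by $(\mathbf{\ChH_5})$, so no annotated variable ever needs to be mapped to them.

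Ending at $\calG_0\subseteq\A$ gives $\vec t\in q_r^{\mathtt{DB}(\A,h_\KB)}$. The UCQ version, the analogue of Proposition~\ref{calvthm40}, then follows immediately from the Horn analogue of Proposition~\ref{calvthm31}.
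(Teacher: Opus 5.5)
Your overall route is the paper's own: for \dllitehornH{} the paper only asserts that the core proofs carry over. You also correctly identify the two places where they do not carry over verbatim. However, your repair of the second one, the DAG-shaped chase, does not work as stated.

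\textbf{Counterexample to your pre-witness definition.} Take $\A=\{\cass{a}{B}\}$, $\T=\{B\subs C,\ \{B,C\}\subs D\}$ and $q(x)\leftarrow D(x)$. The chase adds $\cass{a}{C}$ at step~1 and $\cass{a}{D}$ at step~2, so $\calG_2=\{\cass{a}{D}\}$. Under your definition $\cass{a}{B}\notin\calG_1$, because its successor $\cass{a}{C}\in\dchasehH_1(\KB)$ is an ancestor of $\cass{a}{D}$. Thus $\calG_1=\{\cass{a}{C}\}$. Yet the backward rewriting of $D(x)$ with $\{B,C\}\subs D$ gives $B(x),C(x)$, whose image is $\{\cass{a}{B},\cass{a}{C}\}$. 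No query in $r(q,\KB)$ has $\{\cass{a}{C}\}$ as a witness, so your invariant already fails at this step. The underlying problem is that in a DAG an assertion can have a successor leading to $\calG_k$ and still be a \emph{direct} premise of a later rule application.

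\textbf{How to fix it.} Define the frontier through premises instead. Let $\alpha_{j}$ be the assertion added at step $j$, and let $\mathit{prem}(\alpha_j)$ be the premises actually used; for a premise $\exists P$, take one fixed role assertion witnessing its occurrence. Then put $\calG_{j}=(\calG_{j+1}\setminus\{\alpha_{j+1}\})\cup\mathit{prem}(\alpha_{j+1})$. Equivalently, $\calG_j$ contains the assertions of $\dchasehH_j(\KB)$ that are in $\calG_k$ or are premises of some rule application after step $j$. With this definition:
\begin{itemize}
\item minimality gives $\alpha_{j+1}\in\calG_{j+1}$;
\item an individual created at step $j+1$ still occurs in exactly one assertion of $\calG_{j+1}$;
\item your rewrite-then-$reduce$ step goes through.
\end{itemize}
One further condition is needed. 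If the term mapped to $a$ is already annotated with $r$ and the step used a PI, applicability also requires the rank of the PI's left-hand side to be at most $r$. This holds because $\RI_{\KB}$ is a ranked model of $\KB$ in which $a$, at height $r$, satisfies that left-hand side. The rank of a conjunction is the least height it attains across all ranked models of $\tuple{\T,\D}$.

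\textbf{A smaller point on termination.} The $reduce$ step does not remove duplicates in the sense your count needs, because $\mathtt{DefQueryRef}$ keeps the unreduced query in $r(q,\KB)$ and rewrites it further. Since $\kappa$ has been dropped, the axiom $\{\exists P,A\}\subs A$ already produces $A(x)$; then $P(x,y_1),A(x)$; then $P(x,y_1),P(x,y_2),A(x)$; and so on. These are infinitely many distinct queries. Your finiteness argument is correct only for queries taken as sets of atoms in a normal form where the fresh unbound variables are written ``$\_$''; these variables only ever occur in role atoms. You should state explicitly that termination is proved for the procedure with this normalisation.
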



\section{Ranking of Individuals}\label{ranindqref}

\nd The $\mathtt{ComputeRankIndividuals}$ procedure below, computes the ranking of the individuals occurring in a defeasible \dllitecoreH~KB $\KB$, which we comment next.
In steps 1 ans 2, we initialise the set of all individuals occurring in the ABox of a KB. In the while loop 4-11, we determine the individuals' rank. To do so, we compute the exceptional individuals for each rank $i \leq n$. Specifically, in step 5, we initialise the exceptional individuals at rank $i$. Then, at step 6, we determine the TBox resulting from the original TBox by including those DIs that have rank greater than or equal to $i$ and compute the NI-closure of it. In the loop 7 - 9, we compute the set of exceptional individuals of rank $i$. To do so, note that an instance $a$ of both $B_1$ and $B_2$, for a NI $\norm{B_1}\subs\neg \norm{B_2}\in \norm{rank_i}$, is considered exceptional at rank $i$ and, thus, can not have rank $i$. Also note that $q$ is just an SQL query to the underlying database $\mathtt{DB}(\A)$, where $\bar{\gamma}$ is defined as in Eq.~\ref{bargamma}.

\begin{procedure}
\caption{ComputeRankIndividuals($\KB,\RR$)}\label{proc_rank_individual}
{
\KwIn{A defeasible satisfiable \dllitecoreH~KB  $\KB = \tuple{\T,\D,\A}$ and its associated ranking $\RR = 
\{\D_0,\ldots,\D_n\}$}
\KwOut{Ranking of the individuals occurring in $\KB$}
$indsA : = \{a \mid \text{ individual $a$ occurs in } \KB \}$\;
$inds : = indsA$\;
$i : = 0$\;
\ForEach{$i \leq n$}
{
$except_i : = \emptyset$\;
 $\norm{rank_i} = \clnn(\norm{\T}\cup\bigcup_{j=i}^n\norm{\D_j}$)\;
 \ForEach{$\tau = \norm{B_1}\subs\neg \norm{B_2}\in \norm{rank_i}$}{  
$q_\tau : = q_\tau(x)\leftarrow \bar{\gamma}(B_1)(x),\bar{\gamma}(B_2)(x)$\;
$except_i \leftarrow except_i \cup ans(\A,q_\tau)$\;
}

$rinds_i : = inds\setminus except_i$\;
$inds : = except_i$\;
}
$rinds_{n+1} : = inds$\;
\Return{$\{rinds_0,\ldots,rinds_{n}, rinds_{n+1}\}$} 
  }
\end{procedure}



The following example illustrates how the individuals' ranking is computed.

\begin{example}\label{exrankindcore}
    Consider the defeasible \dllitecoreH~KB $\KB  = \tuple{\T,\D,A}$, where
    \begin{eqnarray*}
    \T & =  & \{\bar{F} \subs \neg F, P \subs B\} \\ 
    \D & = & \{B \dsubs F, P \dsubs \bar{F} \}\\ 
    \A & = & \{\cass{a}{B}, \cass{b}{P}, \cass{c}{B}, \cass{c}{\bar{F}}, \cass{d}{P}, \cass{d}{F} \} \ .
\end{eqnarray*}

\nd It can be verified that the ranks of DIs, concept names and individuals are:
\begin{eqnarray*}
    \D_0 & = & \{B \dsubs F \} \\ 
    \D_1 & = & \{P \dsubs \bar{F}  \} \\ \\
    \rank_{\KB}(B) & = & 0 =  \rank_{\KB}(F) = \rank_{\KB}(\bar{F}) \\
    \rank_{\KB}(P) & = & 1 \\ \\
    h_{\KB}(a) & = & 0 \\
    h_{\KB}(b) & = & 1 =  h_{\KB}(c) \\
    h_{\KB}(d) & = & 2 \ .
\end{eqnarray*}

\nd It can be verified that the following table illustrates the content of the various sets during the iterations.

\[
\begin{array}{|c|c|c|c|} \hline
0\leq i \leq n = 1 & inds & except_i & rinds_i \\ \hline
0 & \{b,c,d\} & \{b,c,d \} & \{a\} \\ \hline    
1 & \{d\} & \{d \} & \{b,c\}  \\ \hline    
n+1=2 & \{d\} & - & \{d\} \\ \hline    
\end{array}
\]
    
\qed 
\end{example}

\nd We now prove that the above procedure is correct.


\begin{appproposition} \label{rankindsproc}
    Consider a satisfiable defeasible \dllitecoreH~KB  $\KB = \tuple{\T,\D,\A}$ and its associated ranking $\RR = 
\{\D_0,\ldots,\D_n\}$. Let $res= \{rinds_0$, \ldots, $rinds_{n},$ $rinds_{n+1}\}$ be the output of $\mathtt{ComputeRankIndividuals}(\KB,\RR)$.
Then, for all $0 \leq i \leq n+1$ and all individuals $a$ occurring in $\KB$, the following holds: $a \in rinds_{i}$ iff $h_\KB(a) = i$.
\end{appproposition}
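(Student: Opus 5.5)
The plan is an induction on $i$, with the invariant that after the $i$-th round of the main loop, $inds$ is exactly the set of individuals $a$ with $h_\KB(a)>i$, and $rinds_i$ is exactly the set with $h_\KB(a)=i$. The engine is a per-rank characterisation of exceptional individuals, obtained from the proof of Lemma~\ref{lemma_defeasible_nominals} by replacing $\D$ with the tail $\D_{\geq i}=\bigcup_{j=i}^n\D_j$. Concretely, for an individual $a$ with $h_\KB(a)\geq i$, I will show that $h_\KB(a)>i$ iff the classical KB $\tuple{\T\cup(\D_{\geq i})_{\delta},\A\cup\{\cass{a}{\delta}\}}$ is unsatisfiable. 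The argument should go through as in that proof. In one direction, from a ranked model placing $a$ at height $i$, interpret $\delta$ as layer $L_i$. In the other, from a classical model, put the $\delta$-objects at layer $i$ and the rest at $n+1$, and take the ranked union with a big ranked model, as in Proposition~\ref{prop_defcons}. Here one uses that DIs of rank $<i$ have their minimal antecedent instances strictly below $i$, so placing new objects at layer $i$ cannot violate them.

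The next step turns this unsatisfiability condition into what the procedure computes. The $\delta$-transformed TBox for $\D_{\geq i}$ is matched with the $n$-transformation $\norm{\T}\cup\norm{\D_{\geq i}}$, where $\norm{B}$ plays the role of $B\dland\delta$. By Proposition~\ref{propCalvanese0607}, applied to the \dllitehornH~KB with NIs closed under $\clnhH$, together with Lemma~\ref{lemma_exceptional_n_form} and Proposition~\ref{corr_negative_strict}, unsatisfiability can be witnessed in two ways. One is an NI $B_1\subs\neg B_2$ of $\clncH(\T)$ whose antecedents occur for some individual; this is excluded because $\tuple{\T,\A}$ is satisfiable. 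The other is an NI $\{B_1,\delta\}\subs\neg B_2$, equivalently $\norm{B_1}\subs\neg\norm{B_2}\in\clnn(\norm{\T}\cup\norm{\D_{\geq i}})$, with both $\cass{a}{B_1}$ and $\cass{a}{B_2}$ occurring in $\A$. The latter is exactly what $q_\tau$ evaluated over $\mathtt{DB}(\A)$ detects, so $except_i$, restricted to the current $inds$, is the set of individuals with $h_\KB(a)>i$.

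Given this characterisation the induction is routine. For $i=0$ we have $inds=indsA$, and $h_\KB(a)\geq 0$ holds trivially. At each step $rinds_i=inds\setminus except_i$ and $inds:=except_i$. In the final round $i=n$, $rinds_{n+1}$ collects the individuals with $h_\KB(a)>n$. These have height exactly $n+1$, because placing any individual at layer $n+1$ is always possible (again by Proposition~\ref{prop_defcons}). Satisfiability of $\KB$ guarantees that no height is $\infty$.

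I expect the main obstacle to be the claim that unsatisfiability reduces to a single-individual clash of the form $\cass{a}{B_1},\cass{a}{B_2}$. One issue is that $\delta$ holds only at $a$, while the clash might a priori involve chase-generated anonymous objects or other individuals. The other is that \dllitehornH~NIs with larger antecedent sets must collapse to the binary normal forms. My first attempt will be to run the chase with $\delta$ only at $a$. I will argue that anonymous successors never receive $\delta$, since no PI in $(\D_{\geq i})_\delta$ has $\delta$ on its right-hand side. I will also argue that every derived NI mentioning $\delta$ has, by core shape, antecedent $\{B,\delta\}$, so that Lemma~\ref{lemma_exceptional_n_form} applies.
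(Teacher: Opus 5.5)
Your proposal is correct and follows essentially the paper's route. Both argue by induction on $i$, reducing ``$a$ cannot sit at layer $i$'' to a clash $\norm{B_1}\subs\neg\norm{B_2}$ in the level-$i$ closure with both $B_1$ and $B_2$ occurring for $a$ in $\A$; your passage through $\tuple{\T\cup(\D_{\geq i})_\delta,\A\cup\{\cass{a}{\delta}\}}$ and Lemma~\ref{lemma_exceptional_n_form} is a more precise rendering of the paper's appeal to unsatisfiability of the normalised KB with ABox $\norm{\A}$. Do make explicit three facts you use silently:
\begin{enumerate}
\item In every ranked model of $\KB$, the antecedent of a DI in $\D_j$ has no instances below layer $j$. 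You need this when you set $\delta:=L_i$ for $i>0$; it follows by collapsing the lowest layers, which leaves the classical part untouched.
\item $except_i\subseteq except_{i-1}$, by monotonicity of $\clnn$. You need this because the procedure sets $inds:=except_i$, not $inds\cap except_i$.
\item $\delta$-NIs such as $\{B,B'\}\subs\neg\delta$ also arise, but they become $\{B,\delta\}\subs\neg B'$ by contraposition.
\end{enumerate}
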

\begin{proof}
    As $\KB$ is satisfiable, by Proposition~\ref{prop_unique_abox}, $\KB$ has a model $\RI$ in which every individual $a$ occurring in $\KB$ is minimally interpreted \wrt~$\KB$, \ie~$h_{\RI}(a^{\RI})=h_{\KB}(a)$. For a non-empty set $\mathtt{I}$ of individuals occurring in $\A$, with $\A_{\mathtt{I}}$ we denote the set of all assertions of $\A$ that involve individuals occurring in $\mathtt{I}$. Moreover, for a set of assertions $\A'$, let 
    $\norm{\A'} = \A' \cup \{\cass{a}{\norm{A}} \mid \cass{a}{A} \in \A' \}$.
    Now, let us note that for all $0 \leq i \leq n$ we have that by construction 
    \begin{enumerate}
    \item $\norm{rank_i} \supseteq \norm{rank_{i+1}}$;
    \item $res$ is a partition of the individuals occurring in $\KB$, \ie~$indsA = \bigcup_{i=0}^{n+1} rinds_{i}$ and for all $0\leq i<j\leq n+1$,
    $rinds_{i} \cap rinds_{j} = \emptyset$.
    \end{enumerate}

    \nd We now prove the proposition by induction on $0 \leq i \leq n+1$.

    \begin{description}
        \item[Case $i=0$] \mbox{ \ } \\
        $(\Rightarrow).$ Assume $a \in rinds_{0}$. Assume to the contrary that $h_\KB(a) = h_a > 0$. Therefore, 
        $h_{\RI}(a^{\RI}) = h_a > 0$. That is, by Definition~\ref{def_exceptional_indiv}, $a$ is exceptional \wrt~$\tuple{\norm{rank_{h_a}},\norm{\A_{rinds_0}}}$. As a consequence, 
        $\tuple{\norm{rank_{h_a}},\norm{\A_{rinds_0}}}$ has to be unsatisfiable. By Propositions~\ref{propCalvanese0607} and \ref{propCalvanese0607B}, this means that there must exist $\norm{B_1}\subs\neg \norm{B_2}\in \norm{rank_{h_a}} \subseteq \norm{rank_{0}}$ such that both $\cass{a}{\norm{B_1}}$ and $\cass{a}{\norm{B_2}}$ occur in $\norm{\A_{rinds_0}} \subseteq \norm{\A}$, \ie~$a\in ans(\A,q_\tau)$.        
        But then $a \in except_0$ and, thus, $a \not\in  rinds_{0}$, contrary to our assumption. 
        
        \vspace*{1ex}
        \nd $(\Leftarrow).$ Assume $h_\KB(a) = 0$. Assume to the contrary that $a \in rinds_{h_a}$ with $h_a > 0$. But then, there is 
        $\norm{B_1}\subs\neg \norm{B_2}\in \norm{rank_{h_a}} \subseteq \norm{rank_{0}}$ such that both $\cass{a}{\norm{B_1}}$ and $\cass{a}{\norm{B_2}}$ are in $\norm{\A}$. Therefore, $\tuple{\norm{rank_{0}},\norm{\A_{rinds_0}}}$ is unsatisfiable and, thus, it can not be the case that $\RI$ is a model of $\KB$ with $h_{\RI}(a^{\RI}) = h_\KB(a) = 0$.

        \item[Case $i+1 < n+1$] Let us assume by induction that for each $0 \leq j \leq i$ and each individual $a$ occurring in $\KB$, we have that 
        $a \in rinds_{j}$ iff $h_\KB(a) = j$. Let us prove that this also holds for $j=i+1$. The proof is similar to the  case $i=0$.

        \vspace*{1ex}
        \nd $(\Rightarrow).$ Assume $a \in rinds_{i+1}$. Therefore, it can not be the case that $a \in rinds_{j}$ for $0\leq j \leq i$ and, thus, by induction hypothesis it can not be the case $h_\KB(a) \leq i$, \ie~$h_\KB(a) \geq i+1$ has to be the case.    
        Assume to the contrary that $h_\KB(a) = h_a > i+1$.
        Therefore, $h_{\RI}(a^{\RI}) = h_a > i+1$. That is, by Definition~\ref{def_exceptional_indiv}, $a$ is exceptional \wrt~$\tuple{\norm{rank_{h_a}},\norm{\A_{rinds_{h_a}}}}$. As a consequence, $\tuple{\norm{rank_{h_a}},\norm{\A_{rinds_{h_a}}}}$ has to be unsatisfiable and by Propositions~\ref{propCalvanese0607} and ~\ref{propCalvanese0607B}, this means that there must exist $\norm{B_1}\subs\neg \norm{B_2}\in \norm{rank_{h_a}} \subseteq \norm{rank_{i+1}}$ such that both $\cass{a}{\norm{B_1}}$ and $\cass{a}{\norm{B_2}}$ are in $\norm{\A_{rinds_{h_a}}} \subseteq \norm{\A}$, \ie~$a\in ans(\A,q_\tau)$. But then $a \in except_{i+1}$ and, thus, $a \not\in  rinds_{i+1}$, contrary to our assumption. 

        \vspace*{1ex}
        \nd $(\Leftarrow).$ Assume $h_\KB(a) = i+1$. Therefore, by induction hypothesis, it can not be the case that $a \in rinds_{j}$ for $0\leq j \leq i$ (as otherwise $h_\KB(a) = j \leq i$) and, thus, $a \in rinds_{j}$ for $j \geq i+1$ has to be the case. Assume to the contrary that $a \in rinds_{h_a}$ with $h_a > i+1$. 
        But then, there is $\norm{A_1}\subs\neg \norm{A_2}\in \norm{rank_{h_a}} \subseteq \norm{rank_{i+1}}$ such that both $\cass{a}{\norm{A_1}}$ and $\cass{a}{\norm{A_2}}$ are in $\norm{\A_{rinds_{h_a}}} \subseteq \norm{\A}$. Therefore, $\tuple{\norm{rank_{i+1}},\A_{rinds_{h_a}}}$ is unsatisfiable and, thus, it can not be the case that $\RI$ is a model of $\KB$ with $h_{\RI}(a^{\RI}) = h_\KB(a) = i+1$.


    \item[Case $i+1 = n+1$] By induction on $i\leq n$,  $a \in rinds_{i}$  iff $h_\KB(a) = i$. Therefore,  $a \in rinds_{n+1}$  iff $h_\KB(a) = n+1$, which concludes the proof.
        
    \end{description}

\end{proof}

\nd To conclude this section, we illustrate how to adapt the $\mathtt{ComputeRankIndividuals}$ also to defeasible
\dllitehornH~KBs. To do so, the steps that we have to revise are steps 6, 7 and 8. Step 6 has to be replaced with step 6h
%
\begin{eqnarray*}
6h. && \norm{rank_i}  =  \clnhH(\T\cup\\
&& 
\bigcup_{j=i}^n \{\{B_1,\ldots, B_m,\delta\}\subs A \mid \{B_1,\ldots, B_m\}\dsubs A\in \D_j\})  \ ,
\end{eqnarray*}

\nd while steps 7-8 are replaced with steps 7h-8h (\cf~Propositions~\ref{propCalvanese0607}, \ref{propCalvanese0607B}):
\begin{eqnarray*}
7h. & & {\bf foreach \ } \tau = \{B_1, \ldots, B_n, \delta\} \subs\neg B_{n+1}\in \norm{rank_i}{\bf \ do}  \\
8h. & & \hspace*{0.7cm} q_\tau : = q_\tau(x)\leftarrow \bar{\gamma}(B_1)(x), \ldots, \bar{\gamma}(B_{n+1})(x) \ .
\end{eqnarray*}



\begin{appproposition} \label{rankindsprocH}
    Consider a satisfiable defeasible \dllitehornH~KB  $\KB = \tuple{\T,\D,\A}$ and its associated ranking $\RR = 
\{\D_0,\ldots,\D_n\}$. Let $res= \{rinds_0$, \ldots, $rinds_{n},$  $rinds_{n+1}\}$ be the output of $\mathtt{ComputeRankIndividuals}(\KB,\RR)$, revised according to the new steps 6h, 7h and 8h.
Then, for all $0 \leq i \leq n+1$ and all individuals $a$ occurring in $\KB$, the following holds: $a \in rinds_{i}$ iff $h_\KB(a) = i$.
\end{appproposition}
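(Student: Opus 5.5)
The plan is to mirror the proof of Proposition~\ref{rankindsproc} step by step, replacing the normal-concept machinery of \dllitecoreH\ by the $\delta$-transformation of \dllitehornH. The key fact I would establish first is a horn analogue of the exceptionality characterisation used in the core proof. For each $0\le i\le n$ let $\KB_i=\tuple{\T,\bigcup_{j=i}^n\D_j,\A}$. Then an individual $a$ with $h_\KB(a)\ge i$ satisfies $h_\KB(a)>i$ iff $\tuple{\T\cup\{\{B_1,\ldots,B_m,\delta\}\subs A\mid \{B_1,\ldots,B_m\}\dsubs A\in\bigcup_{j\ge i}\D_j\},\A\cup\{\cass{a}{\delta}\}}$ is unsatisfiable. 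This is exactly what the proof of Lemma~\ref{lemma_defeasible_nominals} shows for $i=0$, namely ``$a$ exceptional iff $\tuple{\T\cup\D_{\delta_0},\A'}$ unsatisfiable''. I would redo that argument with $\D$ replaced by the tail of the ranking. The right-to-left direction puts $L_i$ of a ranked model into $\delta$. The left-to-right direction places a classical model of the $\delta$-KB at height $i$, with non-$\delta$ objects at height $n+1$, on top of the big ranked model, as in Proposition~\ref{prop_defcons}.

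Second, I would turn this unsatisfiability into the SQL test of steps 7h--8h. By Proposition~\ref{propCalvanese0607B}, the KB above is unsatisfiable iff some NI $\{B_1,\ldots,B_k\}\subs\neg B_{k+1}$ in $\clnhH$ of the $\delta$-TBox has all $\cass{a}{B_l}$ occurring in $\A\cup\{\cass{a}{\delta}\}$. Since $\tuple{\T,\A}$ is satisfiable (Proposition~\ref{prop_defcons}), any violated NI must involve $\delta$. After one application of ($\mathbf{\hHContr}$) if necessary, we may assume $\delta$ sits on the LHS, since $\clnhH$ is closed under contraposition. This gives exactly an NI of the shape $\{B_1,\ldots,B_k,\delta\}\subs\neg B_{k+1}\in\norm{rank_i}$ whose query $q_\tau$ returns $a$ over $\mathtt{DB}(\A)$, because $\delta$ only holds of $a$ by fiat. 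One subtlety is that the NI may mention $\delta$ more than once, or may involve assertions about other individuals reached through roles. The former collapses by idempotence of the set LHS. For the latter I must check that only $\cass{a}{\cdot}$ assertions matter, which Proposition~\ref{propCalvanese0607} indeed guarantees, since the witness individual is unique.

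Third, I would run the induction on $i$ exactly as in Proposition~\ref{rankindsproc}. The invariant is that after iteration $i$, $inds$ contains exactly the individuals with $h_\KB(a)>i$, and $rinds_i$ contains those with $h_\KB(a)=i$. Step 6h computes the closure for the tail $\bigcup_{j\ge i}\D_j$. By the two facts above, $except_i$ is then precisely the set of individuals of $inds$ whose height exceeds $i$. The individuals remaining after $i=n$ have height $n+1$; this is consistent because any satisfiable $\tuple{\T,\A}$ model can sit at layer $n+1$, as in Proposition~\ref{prop_defcons}.

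The main obstacle is the first step for $i>0$. Whether $a$ can sit at height $i$ depends only on the DIs of rank $\ge i$: the lower-rank DIs have antecedents whose minimal elements lie strictly below $i$, so they cannot constrain an object at height $i$. I would argue this via the ranking correctness (Theorem~\ref{prop_dllitehornRC}). Any object at height $i$ in a model of the rank-$\ge i$ KB can be grafted onto the big ranked model at layer $i$ without breaking any DI of lower rank, because those DIs' minimal elements already live in lower layers. I would also use Proposition~\ref{prop_unique_abox} so that all individuals can be taken minimal simultaneously.
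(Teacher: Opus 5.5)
Your proposal is correct and follows essentially the paper's route. The paper's proof transfers the induction of Proposition~\ref{rankindsproc} to the $\delta$-closure, using the exceptionality characterisation together with Propositions~\ref{propCalvanese0607} and~\ref{propCalvanese0607B}, which is exactly your steps two and three. Your explicit level-$i$ version of the argument in Lemma~\ref{lemma_defeasible_nominals} spells out what the paper leaves implicit, with two small points to state. First, its right-to-left half (setting $\delta$ to $L_i$) tacitly uses that in every ranked model of $\KB$ no object below layer $\rank_{\KB}(C)$ belongs to $C$. Second, $except_i\subseteq inds$ follows from the monotonicity $\norm{rank_{i+1}}\subseteq\norm{rank_i}$.
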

\begin{proof}
The proof is by induction following the same reasoning as for Proposition~\ref{rankindsproc}, by using Proposition~\ref{prop_unique_abox}, Definition~\ref{def_exceptional_indiv}, Propositions~\ref{propCalvanese0607}, \ref{propCalvanese0607B}, related to \dllitehornH~and the existence of inclusion axioms $\{B_1, \ldots, B_n, \delta\} \subs\neg B_{n+1}\in \norm{rank_i}$ in place of  
$\norm{B_1}\subs\neg \norm{B_2} \in \norm{rank_i}$.
\end{proof}


\nd This concludes this section.

\end{appendix}

\pagebreak 

\end{document}